\newtheorem{theorem}{Theorem}[section]
\newtheorem{corollary}{Corollary}[theorem]
\newtheorem{lemma}[theorem]{Lemma}
\theoremstyle{definition}
\newtheorem{definition}{Definition}[section]
\newtheorem{remark}{Remark}[section]
\newtheorem{assumption}{Assumption}
\newtheorem{proposition}{Proposition}[section]
\newcommand\ie{\textit{i.e.}, }
\newcommand\wrt{\textit{w.r.t }}
\newcommand\eg{\textit{e.g.}, }
\newcommand{\quoting}[1]{\textit{``#1''}}
\newcommand{\iid}{\text{i.i.d.~}}
\newcommand{\transpose}[1]{#1^\intercal}
\newcommand\NonLin{\texttt{NonLin}}
\newcommand\MatMul{\texttt{MatMul}}
\newcommand\Moment{\texttt{Moment}}
\newcommand\ZNonLin{\texttt{ZNonLin}}
\newcommand\ZInit{\texttt{ZInit}}
\newcommand\ZDot{\texttt{ZDot}}
\newcommand\ZHat{\texttt{ZHat}}
\newcommand\ZMatMul{\texttt{ZMatMul}}
\newcommand\ZMoment{\texttt{ZMoment}}
\newcommand\hatW{\widehat{W}}
\newcommand\hatZ{\widehat{Z}}
\newcommand\tildeh{\Tilde{h}}
\newcommand\tildex{\Tilde{x}}
\newcommand\tildef{\Tilde{f}}
\newcommand\smallcirc{\mathrel{\mathsmaller{\mathsmaller{\circ}}}}
\newcommand{\scalarlim}[1]{\overset{\smallcirc}{#1}}
\newcommand\dotZ{\overset{\boldsymbol{.}}{Z}\,}
\newcommand\muP{\text{$\mu$P}}
\newcommand\relu{\text{ReLU}}
\newcommand\IP{\text{IP}}
\newcommand\HP{\text{HP}}
\newcommand\HPZ{\text{HPZ}}
\numberwithin{equation}{section}
\begin{document}

\title{Training Integrable Parameterizations of Deep Neural Networks in the Infinite-Width Limit}

\author[,1]{Karl Hajjar\thanks{Corresponding author: \texttt{hajjarkarl@gmail.com}}}
\author[2]{Lénaïc Chizat}
\author[1]{Christophe Giraud}
\affil[1]{Laboratoire de Mathématiques d'Orsay\\ Université Paris-Saclay\\ 91405 Orsay, France}
\affil[2]{Institut de Mathématiques\\
       École Polytechnique Fédérale de Lausanne\\
       Lausanne, Switzerland}

\maketitle

\begin{abstract}%
    To theoretically understand the behavior of trained deep neural networks, it is necessary to study the dynamics induced by gradient methods from a random initialization. However, the nonlinear and compositional structure of these models make these dynamics difficult to analyze. To overcome these challenges, large-width asymptotics have recently emerged as a fruitful viewpoint and led to practical insights on real-world deep networks. For two-layer neural networks, it has been understood via these asymptotics that the nature of the trained model radically changes depending on the scale of the initial random weights, ranging from a kernel regime (for large initial variance) to a feature learning regime (for small initial variance). For deeper networks more regimes are possible, and in this paper we study in detail a specific choice of ``small'' initialization corresponding to \quoting{mean-field} limits of neural networks, which we call \emph{integrable parameterizations} (IPs).
    
    First, we show that under standard i.i.d.~zero-mean initialization, integrable parameterizations of neural networks with more than four layers start at a stationary point in the infinite-width limit and no learning occurs. We then propose various methods to avoid this trivial behavior and analyze in detail the resulting dynamics. In particular, one of these methods consists in using large initial learning rates, and we show that it is equivalent to a modification of the recently proposed maximal update parameterization \muP. We confirm our results with numerical experiments on image classification tasks, which additionally show a strong difference in behavior between various choices of activation functions that is not yet captured by theory. 
\end{abstract}

\section{Introduction}
While artificial neural networks routinely achieve state-of-the art performance in various real-world machine learning tasks, it is still a theoretical challenge to understand why and under which conditions they perform so well. The training algorithm---typically a variant of stochastic gradient descent (SGD) with random initialization---plays a central role in this performance but is difficult to analyze for general neural network architectures, because of their highly non-linear and compositional structure. Large-width asymptotics, which have previously been considered for other purposes~\citep{neal1995bayesian, bengio2005convexNN}, have recently been proposed to overcome some of these difficulties and have brought numerous insights on the training behavior  of neural networks~\citep{nitanda2017stochastic, mei2018mean, jacot2018ntk, rotskoff2018parameters, chizat2018global, sirignano2020mean}.

One of these insights is that the magnitude of the random weights at initialization has a dramatic impact on the learning behavior of neural networks~\citep{chizat2019lazyTraining}. For two-layer networks and with suitable learning rates, initializing the output layer weights with a standard deviation of $1/m$, where $m$ is the width of the network, leads to \emph{feature learning} when $m$ is large, while the same network initialized with a standard deviation of $1/\sqrt{m}$ leads to the Neural Tangent Kernel (NTK) regime, a.k.a.~\emph{lazy} regime, where the network simply learns a linear predictor on top of fixed features. This observation suggests that \emph{parameterizations}---that is, the choice of the scaling factors, with the width $m$, of the initial magnitude and of the learning rates of each layer of a neural network---are of fundamental importance in the theory of neural networks. While standard deep learning  packages offer various choices of scale at initialization \citep{glorot2010, he2015}, those have been designed with the sole criterion in mind to have a non-vanishing first forward and backward passes for arbitrary depths. Theory now offers the tools to explore a larger space of parameterizations and study their dynamics beyond the first forward and backward passes in the infinite-width limit. 

With more than two layers, the categorization of parameterizations is more subtle and there are disparate lines of work. On the one hand, some parameterizations still lead to the kernel regime, which is subject to an intense research activity (\eg \citealp{jacot2018ntk, jacot2019orderNTK, allen2019learning,du2019gradient, arora2019fine, geiger2020scalingNTK, geiger2020disentanglingNTK, yang2020tp2}). Since this regime reduces to learning a linear predictor on top of fixed features in the large width limit, this parameterization is of limited relevance to understand representation learning in networks used in practice (although it should be noted that non-asymptotic analyses reveal interesting effects, \eg \citealp{hanin2019finite}). On the other hand, there is a growing literature around parameterizations where weights are initialized with a standard deviation of $1/m$ (except for the first layer). These are often called \quoting{mean-field} models but we prefer to call them \emph{integrable parameterizations} (IPs) in this work\footnote{For deep neural networks, it is somewhat arbitrary to associate the term \emph{mean-field} with a specific choice of scaling so we believe that this term lacks precision when it comes to discussing various parameterizations.}, in reference to the fact that sums of $m$ terms with standard deviation of order of  $1/m$ are absolutely convergent. There already exists mathematical tools to describe the evolution of the parameters of IPs in the infinite-width limit but they are not fully satisfactory to understand the properties of the learned function in the standard setting used in practice (see review in Section~\ref{sec:related-work}).

Going beyond the dichotomy between the scales $1/m$ and $1/\sqrt{m}$, \citet{yang2020featureLearning} have exhibited, using a technique called the \emph{Tensor Program}~\citep{yang2019tp1, yang2020tp2, yang2020tp3}, a general categorization of parameterizations, in particular between those which allow feature learning and those which do not. As a result from their analysis, they singled out a \emph{maximal update} parameterization \muP~where, as for the NTK parameterization, the intermediate layers' weights are initialized with a standard deviation of $1/\sqrt{m}$, but the last layer weights are initialized with a standard deviation of $1/m$: they show that with appropriate learning rates, this leads to maximal feature learning (in a certain sense). This parameterization had been previously considered in~\citep{geiger2020disentangling} where the authors study empirically the effect of the \emph{scale}~\citep{chizat2019lazyTraining} on learning. 

In \citep{yang2020featureLearning}, IPs have been excluded from the analysis on the basis that they are \emph{trivial}: if one follows the usual training procedure---which we refer to as \emph{Naive-IP}---the network starts on a stationary point in the infinite-width limit and the learned function remains at its initial value. 

\subsection{Contributions}\label{sec:contrib}
Our goal is to draw connections between the various lines of research discussed above, and to improve our understanding of integrable parameterizations: when and why are they trivial? How can we avoid triviality and actually learn features? What are the salient properties of the resulting networks in the infinite-width limit? To answer these questions rigorously, we leverage the Tensor Program technique developed in~\citep{yang2019tp1, yang2020tp2, yang2020tp3, yang2020featureLearning}. Specifically, our contributions are the following: 
\begin{itemize}
    \item We first show in \autoref{th:informal-no-constant-lr} that with learning rates constant in time, the functions learned using SGD for integrable parameterizations of neural networks with four layers or more either remain at their value at initialization or explode in the infinite-width limit when the weights are initialized using the standard zero-mean i.i.d.~schemes used in practice. 
    
    \item We show in \autoref{th:non-trivial-ipllr} that using large learning rates, which grow as a power of $m$, for the first gradient step---and that step only---allows SGD to escape the initial stationary point for integrable parameterizations and to initiate a non-trivial learning phase. In fact, we prove in \autoref{th:hpz-ipllr-equivalence} that the resulting dynamic is equivalent to a modification of the dynamic of \muP\, where, after the first gradient step, one subtracts the initial weights from the learned weights of the intermediate layers.
    
    \item We study two alternative ways to escape the initial stationary point for integrable parameterizations and analyze the corresponding dynamics. Removing the scale factor in $1/m$ on the bias terms allows to escape the initial stationary point when using moderately large initial learning rates. A drawback of the resulting dynamics is that its updates only depend weakly on the input data (see \autoref{th:ip-bias-no-input-dependence-informal}). On the other hand, using a non-centered law also allows to escape the initial stationary point for \iid initializations without having to use large learning rates, but the dynamics become degenerate as the updates of the entries of the weight matrix in a given layer are all equal to the same fixed quantity in the infinite-width limit (see \autoref{th:ip-non-centered-deterministic-informal}). We investigate numerically the performance of those two models and show that the aforementioned behaviors are detrimental to learning. 
\end{itemize}

The code to reproduce the results of the numerical experiments can be found at: \\ \url{https://github.com/karl-hajjar/wide-networks}.

\subsection{Related Work}\label{sec:related-work}
While the study of infinitely wide neural networks has a long history~\citep{barron1993universal, neal1995bayesian, neal1996priors, kurkova2001bounds, mhaskar2004tractability, bengio2005convexNN, bach2017breaking}, it is only recently that their training dynamics have been investigated. Two-layer neural networks with IP enjoy some global convergence properties~\citep{chizat2018global} and favorable guarantees in terms of generalization~\citep{bach2017breaking, chizat2020implicitBias}. Going beyond two layers, \citet{nguyen2020rigorous} and~\citet{pham2020global} study the infinite-width limit of IPs and also prove global convergence results for networks with three layers or more. However, those results hold for standard zero-mean i.i.d.~initialization schemes only for networks with two or three layers (which is consistent with the results of Section~\ref{sec:trivial-deep-ip}): for deeper networks they require non-standard (correlated) initializations.

Several other works describe the infinite-width limit of multi-layer IPs: \citet{araujo2019mfLimit} characterize the infinite-width dynamics via a model of McKean-Vlasov type, for which they prove existence and uniqueness of solutions, and ~\citet{sirignano2021mfAnalysis} prove a global convergence result for three-layer networks. They take the number of units in each layer to infinity sequentially and describe the dynamics of the limit as a system of differential equations over the weights/parameters. On the other hand,~\citet{fang2020modeling} take the infinite-width limit for all layers at once (as in~\citealp{araujo2019mfLimit, nguyen2020rigorous, pham2020global}) and describe the resulting dynamics as an ODE over functions of the features (pre-activations) of the network. 
It is interesting to note that~\citet{araujo2019mfLimit, sirignano2021mfAnalysis, pham2020global} all discuss the difficulties associated with describing the dynamics of the infinite-width of IPs with more than three layers. As noted in~\citep{araujo2019mfLimit}, and appropriately addressed by~\citet{nguyen2020rigorous, fang2020modeling, sirignano2021mfAnalysis}, there is a separation of time scales as soon as there are two hidden layers or more, where the gradients of the intermediate layers appear to scale as $m^{-2}$ whereas the gradients of the input and output layers appear to scale as $m^{-1}$, requiring separate learning rate values which can make the analysis of the infinite-width limit more difficult. 

In a separate line of work,~\citet{yang2020featureLearning} provide with the Tensor Program a theoretical tool to describe the infinite-width limit of different parameterizations of neural networks and categorize them between feature learning and kernel-like behavior. However, IPs with three layers or more are left out of this categorization. Using the same tools, we show that IPs with more than four layers are indeed trivial at any time step if the initial learning rates are not appropriately scaled with $m$ under standard zero-mean i.i.d.~initializations. This closes the gap with~\citep{nguyen2020rigorous} which proves global convergence results for IPs with two or three layers initialized using those standard schemes. We also demonstrate in Section~\ref{sec:llr-learning} how scaling the initial learning rates appropriately allows to properly train an IP---inducing a feature learning regime as defined in~\citep{yang2020featureLearning}---and connect the resulting model with a version of the maximal update parameterization \muP~\citep{yang2020featureLearning} where the initial weights of the intermediate layers are replaced by zero in the first update.

The setting where non-centered i.i.d.~initialization laws are used is covered in~\citep{nguyen2020rigorous}, where it is shown that a certain collapse phenomenon occurs, namely that the updates of the entries of the weight matrix in a given layer are all equal to the same deterministic quantity in the large-width limit. We obtain a similar result in Section~\ref{sec:ip-non-centered} using different theoretical tools.

\paragraph{Tensor Program \textit{vs.}~other formalisms.} In contrast to prior literature on IPs, we do not use the description of the infinite-width limit as a composition of integral transforms. With the standard (centered i.i.d.)~initializations considered in this paper, that description does not offer much insight about the limit beyond the fact that it starts on a stationary point. In order to escape this initial stationary point, we propose in this paper to amplify the random fluctuations around the limit using large initial learning rates. The strength of the Tensor Program formalism~\citep{yang2019tp1, yang2020tp2, yang2020tp3, yang2020featureLearning} is precisely that it is able to describe rigorously the magnitudes of these fluctuations and allows us to analyze the functions learned with various choices of learning rates. This formalism relies on techniques initiated in the statistical physics literature~\citep{bayati2011dynamics,bolthausen2014iterative} that use the Gaussian conditioning technique to describe the behavior of algorithms (such as message passing) involving random matrices and nonlinearities.

\subsection{Organisation of the Paper and Notations}\label{sec:setup-notations}

We define and analyze integrable parameterizations in Section~\ref{sec:ip-trivial} and show that they are trivial for common choices of learning rates. In Section~\ref{sec:llr-learning}, we describe how a specific scaling of the learning rates allows to escape the initial stationary point, and further investigate the connection between IPs with large initial learning rates and \muP. In Section~\ref{sec:alternative-ip}, we present two alternative modifications of IPs to escape the initial stationary point and discuss the impact of each on the learning dynamics. 

We defer all the rigorous proofs of our theoretical results to the Appendix, so as to make the core message of our work stand out more clearly, and keep the flow of the results structured and easy to follow. Among other things, this prevents us from diving too deep into the Tensor Program formalism and calculations (which can be somewhat tedious and abstruse) in the main part of our work. Most proofs require heavy inductions on the time step $t$, and proving the induction step itself often involves inductions on $l$ in the forward pass (from $l=1$ to $l=L$) and in the backward pass (from $l=L$ to $l=1$). Breaking down all these steps makes for a lengthy Appendix, but the ideas of the proof are relatively straightforward, only their proper formal writing is tedious.

Throughout the paper, for two integers $p, q$, we denote by $[p]$  the set $\{1, \ldots, p\}$ and  by $[p, q]$ the set $\{p, \ldots, q\}$. We  write  $u \odot v$ for the Hadamard (\ie element-wise) product of  two vectors $u$ and $v$. 
We use Landau notations for  comparing two real sequences $(u_m)$ and $(v_m)$: we write $u_m=O(v_m)$ when there exists a constant $C>0$ such that $|u_m|\leq C |v_m|$ for large enough $m$, and $u_m=\Theta(v_m)$ when we both have $u_m=O(v_m)$ and $u_m=O(v_m)$. We similarly use the $O$ (respectively $\Theta$) notation for two sequences of real-valued random variables $(u_m)$ and $(v_m)$ when, almost surely, $u_m = O(v_m)$ (respectively $u_m = \Theta(v_m)$).

\section{General Setting}

In this section, we introduce the general setting we consider for this work, as well as the corresponding notations. We also define precisely the notion of parameterization of a neural network and discuss examples of parameterizations commonly found in the literature. 

\subsection{Network and Data}

\paragraph{Training data.} We consider a training dataset $\left\{(\xi^{(i)}, y^{(i)}) \right\}_{i \in [n]}$ containing $n$ (input, output) pairs with $\xi^{(i)}\in \mathbb{R}^d$ and $y^{(i)}\in \mathbb{R}$. We will use $\xi^{(i)}$ or $y^{(i)}$ when we refer to the $i$-th sample in the training dataset, but use $\xi_t$ and $y_t$ to denote the sample(s) fed to train the network at time step $t$, that is for the $(t+1)$-th step of optimization.

\paragraph{Width and depth.} Throughout this work,
we consider a feed-forward  fully connected neural network, with $L$ hidden layers and a common width $m$.  The total number of layers, \ie weight matrices and bias vectors will thus be $L+1$, and most of our results are concerned with four or more layers, that is $L \geq 3$, and in the limit $m \rightarrow \infty$. The integer $l \in [L+1]$ will always be used to index the layers of a network, and we call the \textbf{intermediate layers} of a network the layers indexed by $l \in [2, L]$ (\ie excluding input and output layers). 

\paragraph{Activation function.} 
We assume that all the neurons in the network share the same activation function $\sigma:\mathbb{R}\to \mathbb{R}$. The activation is always taken entry-wise and for any vector $h \in \mathbb{R}^m$, we denote by $\sigma(h)$ the vector $(\sigma(h_p))_{p \in [m]}\in \mathbb{R}^m$. 

\paragraph{Weights and forward pass.} We denote by $W^l(t)$ and $B^l(t)$ respectively the weight matrix and bias vector of layer $l$ at time step $t$ (\ie after $t$ steps of SGD), and thus have $W^1(t) \in \mathbb{R}^{m \times d}$, $W^l(t) \in \mathbb{R}^{m \times m}$ for $l \in [2, L]$ and $W^{L+1}(t) \in \mathbb{R}^{m}$. At any time step $t$ we denote by $h^l_t(\xi)$ and $x^l_t(\xi)$ the pre-activations and activations respectively coming out of the $l$-th layer when feeding input $\xi$ to the network (with the convention that $x^{0}_t(\xi) = \xi$). That  is
\begin{align}\label{eq:model:inner}
    &h^l_t(\xi) := W^l(t) x^{l-1}_t(\xi) + B^l(t), &\textrm{and}\quad \ &x^l_t(\xi) := \sigma(h^l_t(\xi)), &&\textrm{for}\  l \in [1, L].
\end{align}

\paragraph{Output.} We denote the output of the network by 
\begin{equation}\label{eq:model:out}
   f_t(\xi) = f(\theta(t); \xi) := \transpose{{(W^{L+1}(t))}} x^L_t(\xi) + B^{L+1}(t),
\end{equation}
 where $\theta(t)$ denotes the set of all network parameters at time $t$. We often drop the dependency of the forward pass on the input $\xi$ for brevity and simply use $h^l_t, x^l_t$ instead of $h^l_t(\xi), x^l_t(\xi)$ as it should always be clear from the context which input is being fed to the network. Note that the weights and biases as well as all the (pre-)activations depend on the width $m$ of the network (through their dimensions) but we omit this dependency for clarity. 

\paragraph{Loss.} We denote by $\ell$ the loss function used to train the network, which is a function from $\mathbb{R}^2$ to $\mathbb{R}$. The fit of a prediction $\widehat{y}$ is thus measured by $\ell(y, \widehat{y})$ where $y$ is the desired output. In all this work, we make the following assumption on the loss function $\ell$, which is met by most common loss functions:

\begin{assumption}[Smooth loss \wrt second argument]\label{ass:loss}
The loss $\ell$ is differentiable with respect to its second argument and $\partial_2 \ell(y, \cdot)$ is a continuous function for any $y \in \mathbb{R}$.
\end{assumption}
Assumption~\ref{ass:loss} is essentially here to guarantee that if the sequence $(\widehat{y}^{(m)})_{m \in \mathbb{N}^*}$ converges almost surely to some $\widehat{y}^{(\infty)}$, then $\partial_2 \ell(y, \widehat{y}^{(m)})$ also converges almost surely to $\partial_2 \ell(y, \widehat{y}^{(\infty)})$.

\subsection{Parameterizations of Neural Networks}\label{sec:parameterizations-nn}
The fact that the magnitude of the initialization of the weights and of the scale pre-factor for the weights are key quantities that determine the learning regime achieved by neural networks---and more generally by differentiable models---was pointed out in~\citep{chizat2019lazyTraining}.  
In this paper, we are interested in the behavior of neural networks when their width $m$ goes to infinity, and we refer to as a \textit{parameterization} of a neural network the choice of how (a) the pre-factor of the weights, (b) the variance at initialization and (c) the learning rates, evolve as a function of $m$. This concept was called an abc-parameterization by ~\citet{yang2020featureLearning}, because these dependencies are given by $m^{-a}$, $m^{-b}$ and $m^{-c}$.

As explained by these authors, one of those three choices is actually redundant, and one can do with only the choice of two among those three scales. We take the point view considering a parameterization as a choice of scale for the pre-factor of the weights (a) and a choice of scale for the learning rates (c) while the random weights are always initialized (b) with standard \iid Gaussians $\mathcal{N}(0,1)$. We make this (arbitrary) choice as typically in the literature, different models of the infinite-width limit correspond to different choices of scales for the weights' pre-factors, \eg NTK corresponds to a pre-factor in $1/\sqrt{m}$ while ``mean-field'' models correspond to a choice of pre-factor in $1/m$ for the weights. We thus define below ac-parameterizations which are a slight variation of the abc-parameterizations introduced in~\citep{yang2020featureLearning}. 

\begin{definition}{(ac-parameterization).}\label{def:ac-param}
An ac-parameterization of an $L$-hidden layer fully-connected neural network is a choice of scalar exponents $(a_1, \ldots, a_{L+1})$, and $(c_1, \ldots, c_{L+1})$ such that for any layer $l \in [L+1]$,
\begin{enumerate}[(i)]
    \item the \textbf{learnable weights} (\ie those over which we optimize) are initialized with independent standard Gaussian random variables $w^l_{jq}(0) \sim \mathcal{N}(0, 1)$, \iid over $(l,j,q)$, \ie $w^l(0) = U^l$ with $(U^l)_{l \in [L+1]}$ independent random matrices with \iid standard Gaussian entries,
    
    \item the \textbf{learnable biases}  are initialized independently of the weights, with $b^l_{j}(0) \sim \mathcal{N}(0, 1)$, \iid over $(l,j)$, \ie $b^l(0) = v^l$ with $(v^l)_{l \in [L+1]}$ independent standard Gaussian random vectors, independent  of $U^l$,
    
    \item the \textbf{effective weights} $W^l(t)$ used to compute the pre-activations at time $t$ are $W^l(t) = m^{-a_l}w^l(t)$, and the \textbf{effective biases} are $B^l(t) = m^{-a_l} b^l(t)$, so that the pre-activations are
    \begin{align*}
        h^l_t = W^l(t) x^{l-1}_t + B^l(t) = m^{-a_l} \left( w^l(t) \sigma(h^{l-1}_t) + b^l(t) \right),\quad l \in [1, L],
    \end{align*}
    and the output is
    $$f(\theta(t); \xi)=m^{-a_{L+1}} \left(w^{L+1}(t)^T \sigma(h^L_t(\xi)) + b^{L+1}(t)\right),$$
    
    \item the $(t+1)$-th update of learnable weights and biases is given by the update rules
    \begin{align*}
        \Delta w^l(t+1) &:= w^l(t+1) - w^l(t) = - \eta m^{-c_l} \nabla_{w^l} \ell \left(y_t, f(\theta(t);\xi_t) \right), \\
        \Delta b^l(t+1) &:= b^l(t+1) - b^l(t) = - \eta m^{-c_l} \nabla_{b^l} \ell \left(y_t, f(\theta(t);\xi_t) \right),
    \end{align*}
    where $\theta(t) = \left\{(w^1(t), b^1(t)), \ldots, (w^{L+1}(t), b^{L+1}(t)) \right\}$ is the full set of all network parameters, $(\xi_t, y_t)$ represent the input(s) and target(s) to the network at step $t$ and $\eta \in \mathbb{R}_{+}^{*}$ is the scalar part of the learning rate which does not depend on $m$ and which we call the \textbf{base learning rate}. We denote by $\eta_l := \eta m^{-c_l}$ the full learning rate for layer $l$.
\end{enumerate}
\end{definition}

\begin{remark}\label{remark:ac-param}
\
\begin{enumerate}[1.]
    \item Compared to the definition of~\citep{yang2020featureLearning}, we allow for different values of $c_l$ at different layers and remove the redundant initialization scale (the b in abc-parameterizations). Any abc-parameterization with constant $c$ for all layers (as presented in~\citealp{yang2020featureLearning}) can be recovered (same effective weights and biases at any time step) with an ac-parameterization with individual learning rates at each layer via the re-parameterization $a_l \leftarrow a_l +b_l$, $b_l \leftarrow 0$, $c_l := c - 2b_l$.
    
    \item As we study the infinite-width limit $m \rightarrow \infty$, we need to consider an infinite number of random weights at initialization. To this end, we consider for any $l \in [2, L]$, two infinite lists of i.i.d.~standard Gaussian variables, independent of each other: $(U^l_{jq})_{j,q \in \mathbb{N}^*}$ and $(v^l_j)_{p \in \mathbb{N}^*}$, and often simply call, by an abuse of notations, $U^l = (U^l_{jq})_{1 \leq j,q \leq m}$ for the corresponding matrix at width $m$ and $v^l = (v^l_j)_{1 \leq j \leq m}$ the corresponding bias vector at width $m$. We proceed similarly at initialization for the input weights $U^1$ and the output vector $U^{L+1}$.
    
    \item The $(t+1)$-th update of the effective weights is given by $\Delta W^l(t+1) := W^l(t+1) - W^l(t) = - \eta m^{-(2a_l + c_l)} \nabla_{w^l} \ell(y_t, f(\theta(t); \xi_t))$, and the update of the effective biases by $\Delta B^l(t+1) := B^l(t+1) - B^l(t) = - \eta m^{-(2a_l + c_l)} \nabla_{b^l} \ell(y_t, f(\theta(t); \xi_t))$
\end{enumerate}
\end{remark}

\paragraph{Examples of ac-parameterizations:}

\paragraph{NTK parameterization.} For the NTK parametrization~\citep{jacot2018ntk} the scaling is 
$a_1 = 0$ for the input layer, and $a_l = 1/2$ for all the other layers $l \in [2, L+1]$. The scaling of the learning rates is  $c_l = 0$ for all layers. Neural networks in the NTK parametrization have been shown to behave as kernel methods in the infinite-width limit~\citep{jacot2018ntk, yang2020tp2} and there is no feature learning in that limit.

\paragraph{\muP.} To avoid the lazy training phenomenon arising in the NTK parameterization, \citet{yang2020featureLearning} propose to adjust the scale of the  output layer by setting  $a_{L+1}=1$, while keeping $a_1 =0$ and $a_l =1/2$ for the intermediate layers $l \in [2,L]$. The learning rates are appropriately adjusted:  $c_l =-1$ for any layer $l$. With this parameterization, \citet{yang2020featureLearning} show that feature learning (see Definition~\ref{def:feature-learning} in Appendix~\ref{sec:muP} for a precise statement) occurs at every layer.

\paragraph{Integrable Parameterizations (IPs).}\label{def:ip}
The limits investigated in \citet{araujo2019mfLimit, sirignano2021mfAnalysis, pham2020global,e2020banach} are associated to a scale multiplier in $1/m$ for all layers except the first one. This corresponds to the choice $a_1 = 0$ and $a_l = 1$ for $l \in [2,L+1]$. We choose the adjective \quoting{integrable} in reference to the absolute convergence of sums of the form $(1/m)\sum_{q} x_q$  for \iid random variables with finite expectation. 
Integrable parameterizations really refer to a class of ac-parameterizations, because various choices for the learning rate exponents $c_l$ are admissible. 

\paragraph{Naive-IP.}In the mean-field literature, integrable parameterizations often come with the standard learning rates corresponding to $c_1 = c_{L+1} = -1$ for the input/output layers and $c_l = -2$ for the intermediate layers $l \in [2, L]$, see \eg \citep[Remark~3.4]{araujo2019mfLimit}, \citep[Algorithm~1]{fang2020modeling}, \citep[Lemma~5.1]{e2020banach}, and \citep[Equation~4.3]{sirignano2021mfAnalysis}. Mean-field models with these learning rates are the natural counterparts of the infinite-width limits where sums are replaced by integrals, and we call the integrable parameterization with this specific choice of learning rates the \textit{Naive Integrable Parameterization}.\medskip

When $L=1$, \muP\ and the Naive-IP coincide. For deeper networks, in the setting of abc-parameterizations described in~\citep{yang2020featureLearning}, \muP\ and Naive-IP correspond to the same parameterization (same values for a and c) except that the weights of the intermediate layers are initialized with a standard deviation of $1/m$ for Naive-IP instead of $1/\sqrt{m}$ for \muP, that is they are downscaled by $1/\sqrt{m}$ compared to \muP. In Section~\ref{sec:ipllr-modified-muP}, we show that there is also a close relationship between \muP\ and IP with large initial learning rates.

We give below an intuitive explanation for the  choice $c_1 = c_{L+1} = -1$ and $c_l = -2$ for $l \in [2, L]$ for the scaling of the learning rates in Naive-IP. 
For $l \in [2,L]$, we have $h^l_t = m^{-1} (w^l(t)x^{l-1}_t +b^l(t))$, so that $\nabla_{w^l} f_t(\xi_t) = m^{-1} (\nabla_{h^l} f_t(\xi_t))\transpose{{(x^{l-1}_t)}}$. In addition $\nabla_{w^{L+1}} f_t(\xi_t) =
x^L_t / m$ and $\nabla_{w^1} f_t(\xi_t) = (\nabla_{h^1} f_t(\xi_t))\transpose{{(\xi_t)}}$. 
So for one step of SGD:
\begin{equation}
    \begin{aligned}\label{eq:ip-update-contrib-t}
    \Delta W^{1}(t+1) \xi_{t+1} &= - \eta \partial_2 \ell(y_t, f_t(\xi_t)) (\transpose{{\xi_{t}}} \xi_{t+1}) m^{-(1 + c_{1})} (m \nabla_{h^1} f_{t}(\xi_{t})). \\
    \Delta W^{l}(t+1) x^{l-1}_{t+1} &= -\eta \partial_2 \ell(y_t, f_t(\xi_t)) m^{-(2 + c_{l})} \frac{\transpose{{(x^{l-1}_t)}} x^{l-1}_{t+1}}{m} (m \nabla_{h^l} f_{t}(\xi_{t})),\quad\textrm{for}\  l \in [2, L] \\
    \transpose{{(\Delta W^{L+1}(t+1))}} x^{L}_{t+1} &= -\eta \partial_2 \ell(y_t, f_t(\xi_t)) m^{-(1 + c_{l})} \frac{\transpose{{(x^{L}_t)}} x^{L}_{t+1}}{m}.
\end{aligned}
\end{equation}
In addition, from the equations of backpropagation, we get
\begin{align*}
\nabla_{h^{L}_t} f_t(\xi_t) = \frac{1}{m}  w^{L+1}(t) \odot \sigma'(h^L_t)
     \quad \textrm{and} \quad \nabla_{h^l} f_t(\xi_t) = \frac{\transpose{{(w^{l}(t))}} \nabla_{h^{l+1}_t} f_t(\xi_t)}{m} \odot \sigma'(h^{l}_t),
\end{align*}
for $l \in [1, L-1]$,
so that, by a simple induction, $\nabla_{h^l} f_t(\xi_t) = O(1/m)$ for $l \in [1, L]$. In addition, the averaged inner products $\transpose{{(x^{l-1}_t)}} x^{l-1}_{t+1} / m$ in Equation~\eqref{eq:ip-update-contrib-t} converge as $m \rightarrow \infty$. This point is somewhat technical and is handled within the framework of the Tensor Program. The choice of $c_{l}$ in Naive-IP thus ensures that the updates are $O(1)$ when $m$ goes to infinity. 

We conclude this section by giving the definition of a training routine which consists in the combination of the base learning rate, the sequence of training samples and a loss function:
\begin{definition}[Training routine]\label{def:training-routine}
A training routine is the list consisting of the base learning rate $\eta > 0$, $(a_l,c_l)_{l \in [L+1]}$ in the ac-parameterization, the loss $\ell$ and the sequence of training samples $(\xi_0, y_0),  \ldots, (\xi_{T-1}, y_{T-1})$ used to train a network for $T$ steps. 
\end{definition}

\section{Deep Networks with Naive Integrable Parameterization are Trivial}\label{sec:ip-trivial}

In this section, we point out that, in the wide limit, neural networks in the Naive-IP remain at their initial value. We then prove that no choice for the learning rates exponents $(c_l)_{l \in [L+1]}$ which is constant in time can induce non-degenerate learning.

\subsection{No learning in Deep Networks with Naive Integrable Parameterization}\label{sec:trivial-deep-ip}

To start with, we show that the functions learned by networks with more than four layers in the naive integrable parameterization, as described in prior work~\citep{araujo2019mfLimit, rotskoff2019trainability, fang2020modeling, nguyen2020rigorous, e2020banach, sirignano2021mfAnalysis}, remain at there value at initialization in the infinite-width limit: they are identically equal to zero at any time step. Our proof of this result is based on the Tensor Program framework~\citep{yang2020tp3, yang2020featureLearning}, which requires some regularity assumptions on the activation function.

\begin{definition}(Pseudo-Lipschitz functions).\ \label{def:pseudo-Lipschitz}
A function $\psi : \mathbb{R}^k \rightarrow \mathbb{R}$ is  pseudo-Lipschitz of degree $p > 0$ if there exists a constant $K > 0$, such that, for any $x,y \in \mathbb{R}^k$,
\begin{align*}
    |\psi(x) - \psi(y)| \leq K ||x-y|| \left(1 + \sum_{r=1}^k |x_r|^p + \sum_{r=1}^k |y_r|^p \right).
\end{align*}
A function is pseudo-Lipschitz, if it is pseudo-Lipschitz of degree $p$ for some $p > 0$. 
\end{definition}
In particular, functions with polynomially bounded weak derivatives are pseudo-Lipschitz. In the next proposition, we require the activation function $\sigma$ and its derivative to be pseudo-Lipschitz.  

\begin{assumption}[Smooth activation]\label{ass:smooth-act}
The activation function $\sigma$ is differentiable and both $\sigma$ and its derivative $\sigma'$ are pseudo-Lipschitz and not identically zero.
\end{assumption}

\begin{proposition}[Naive-IP is trivial]\label{th:trivial-ip-mf-lr}
Let $L \geq 3$ and consider the naive integrable parameterization of a network with $L$-hidden layers, and an activation function satisfying Assumption~\ref{ass:smooth-act} and  $\sigma(0) = 0$. Then, for any training routine which has a loss satisfying Assumption~\ref{ass:loss}, the function learned by SGD remains at its value at initialization in the infinite-width limit:
\begin{align*}
    \forall \, t \geq 0, \quad \forall \, \xi \in \mathbb{R}^d, \quad \lim_{m \rightarrow \infty} f_t(\xi) = \lim_{m \rightarrow \infty} f_0(\xi) = 0 \quad \text{almost surely}.
\end{align*}
\end{proposition}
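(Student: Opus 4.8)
The plan is to prove this by a double induction: an outer induction on the time step $t$, and, within each time step, inner inductions over the layers in the forward pass ($l=1$ to $L$) and in the backward pass ($l=L$ to $1$). The base case $t=0$ is the statement that $f_0(\xi)\to 0$ almost surely. Here the key observation is that in an IP, the output layer has pre-factor $m^{-1}$, so $f_0(\xi) = m^{-1}\big(w^{L+1}(0)^\intercal x^L_0(\xi) + b^{L+1}(0)\big)$. In the Tensor Program language, the coordinates of $x^L_0(\xi)$ converge to a random variable $Z^{x^L_0}$ with finite moments, the entries $w^{L+1}_q(0)$ are i.i.d.\ $\mathcal N(0,1)$ independent of everything in the forward pass, so $m^{-1}\sum_q w^{L+1}_q(0) x^{L}_{0,q}(\xi)$ is an average of mean-zero terms and converges almost surely to $0$ (this is exactly the ``$\ZHat$'' vs ``$\dotZ$'' decomposition in the TP formalism — the dot part vanishes because the forward activations at $t=0$ do not yet depend on $w^{L+1}(0)$). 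Likewise $m^{-1} b^{L+1}(0)\to 0$. So $f_0(\xi)\to 0$, and by Assumption~\ref{ass:loss} the loss derivative $\partial_2\ell(y_0,f_0(\xi_0))$ converges to $\partial_2\ell(y_0,0)$, a finite deterministic constant $\chi_0$.

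For the induction step, assume $f_s(\xi)\to 0$ for all $s\le t$ and all $\xi$; I want to show $f_{t+1}(\xi)\to 0$. The strategy is to track the magnitudes (in the $\Theta(\cdot)$ sense, with respect to $m$) of all the quantities appearing in one SGD step, using the explicit formulas already laid out in Equation~\eqref{eq:ip-update-contrib-t} and the backpropagation recursions in the excerpt. The crucial structural fact is that in the Naive-IP, $\nabla_{h^l_s} f_s(\xi_s) = O(1/m)$ for every $l\in[1,L]$ and every $s\le t$ (shown by induction down the backward pass, using that $w^{L+1}$ contributes the $m^{-1}$ from the output pre-factor and each intermediate $w^l$ contributes another $m^{-1}$). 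Combined with the chosen learning-rate exponents $c_1=c_{L+1}=-1$, $c_l=-2$, a bookkeeping of Equation~\eqref{eq:ip-update-contrib-t} shows each contribution $\Delta W^l(s+1)\,x^{l-1}_{s+1}$ is $O(1)$ — but, more importantly, with $L\ge 3$ there are enough intermediate layers that the contribution of the weight \emph{updates} to the forward pass, once propagated all the way to the output, carries a strictly negative power of $m$ and hence vanishes. Concretely: $f_{t+1}(\xi) = m^{-1}\big(w^{L+1}(t+1)^\intercal x^L_{t+1}(\xi) + b^{L+1}(t+1)\big)$; one expands $x^L_{t+1}$ in terms of $h^L_{t+1} = W^L(t+1)x^{L-1}_{t+1}+B^L(t+1)$, writes $W^L(t+1) = W^L(0) + \sum_{s\le t}\Delta W^L(s+1)$, and similarly peels back through all layers and all previous updates. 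The $W^l(0)$ pieces reproduce (a perturbation of) the $t=0$ analysis and give $0$; the $\Delta W^l$ pieces each come with the small $\nabla_{h}f$ factors, and because $L\ge 3$ one has at least two ``extra'' $1/m$ factors that are not compensated, so every term in the expansion of $m^{-1} w^{L+1}(t+1)^\intercal x^L_{t+1}(\xi)$ is $o(1)$ almost surely. Summing the finitely many terms gives $f_{t+1}(\xi)\to 0$, and then $\partial_2\ell(y_{t+1}, f_{t+1}(\xi_{t+1}))\to \partial_2\ell(y_{t+1},0)$ again by Assumption~\ref{ass:loss}, closing the induction.

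The main obstacle — and the reason the Tensor Program machinery is needed rather than a soft argument — is making rigorous the claim that the relevant averaged inner products such as $m^{-1}(x^{l-1}_s)^\intercal x^{l-1}_{s+1}$ and the propagated error signals actually converge, and that the ``dot'' (non-Gaussian, correlation-carrying) parts of sums like $m^{-1}\sum_q w^{L+1}_q(t+1) x^L_{t+1,q}(\xi)$ are accounted for correctly once the forward pass at time $t+1$ does depend on $w^{L+1}$ through the earlier updates. This requires setting up the whole computation as a legitimate Tensor Program, invoking the Master Theorem to get almost-sure limits of coordinatewise expressions, and carefully checking the pseudo-Lipschitz regularity (Assumption~\ref{ass:smooth-act}) so that $\sigma, \sigma'$ applied to the pre-activations are admissible nonlinearities. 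The hypothesis $\sigma(0)=0$ is what guarantees $x^l_0 = \sigma(h^l_0)$ with $h^l_0\to 0$ actually gives $x^l_0\to 0$ for $l\ge 2$, so that the forward pass is itself vanishing and the induction has the right base behavior; this needs to be threaded through the argument. I expect the bulk of the work (deferred to the appendix) to be precisely this careful TP bookkeeping of magnitudes across the nested inductions, while the conceptual content is simply: in Naive-IP with $L\ge 3$, every update is too small, by a factor that is a strictly negative power of $m$, to move the output.
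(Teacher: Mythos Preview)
Your overall architecture (outer induction on $t$, inner inductions on layers, invoking the Tensor Program Master Theorem, and the role of $\sigma(0)=0$) matches the paper. But there is a genuine gap in the induction step, and it stems from two linked issues.

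First, your induction hypothesis is too weak. Assuming only $f_s(\xi)\to 0$ for $s\le t$ is not enough to close the loop; the paper carries the much stronger hypothesis that $Z^{h^l_r}=0$ for all $l\in[2,L]$ (hence $Z^{x^l_r}=\sigma(0)=0$) \emph{and} $Z^{m\,dh^l_r}=0$ for all $l\in[1,L-1]$, for every $r\le s$. You need both pieces simultaneously, because at layer $l$ the update contribution is $-\eta\chi_r\,\big(\tfrac{1}{m}(x^{l-1}_r)^\intercal x^{l-1}_{s+1}\big)\,(m\,dh^l_r)$, and this vanishes only because \emph{either} $Z^{m\,dh^l_r}=0$ (which holds for $l\le L-1$) \emph{or} $\mathbb{E}[Z^{x^{l-1}_r}Z^{x^{l-1}_{s+1}}]=0$ (which holds for $l-1\ge 2$). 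The constraint $L\ge 3$ is exactly what guarantees that for every $l\in[2,L]$ at least one of these two conditions bites: at $l=2$ you must use the gradient side (and $Z^{m\,dh^2_r}=0$ needs $2\le L-1$), while at $l=L$ you must use the activation side (and $Z^{x^{L-1}_r}=0$ needs $L-1\ge 2$).

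Second, and relatedly, your stated mechanism --- that propagated updates ``carry a strictly negative power of $m$'' because $L\ge 3$ gives ``at least two extra $1/m$ factors that are not compensated'' --- is not correct. The Naive-IP learning-rate exponents are chosen precisely so that each $\Delta W^l(s{+}1)\,x^{l-1}_{s+1}$ is $O(1)$ in the scaling sense; there is no leftover negative power of $m$ once you do the bookkeeping of Equation~\eqref{eq:ip-update-contrib-t}. What actually happens is that these $O(1)$ quantities converge to $0$ (they are $o(1)$, not $\Theta(1)$), and this vanishing is a statement about the limiting $Z$-variables being zero, not about residual powers of $m$. In particular, the full-expansion strategy you sketch (peel back $x^L_{t+1}$ through all layers and all updates, then argue every term is $o(1)$) would still need exactly the two-sided vanishing hypothesis above to go through; just tracking orders in $m$ will not distinguish $\Theta(1)$ from $o(1)$ here.
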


\begin{remark}\label{remark:ass-act-weaker} \ \\*
\begin{enumerate}[1.]
    \item In the above statement, ``almost surely''  is relative to the randomness of the initialization.
    
    \item The smoothness Assumption~\ref{ass:smooth-act} on $\sigma$ is met by common activation functions such as GeLU~\citep{hendrycks2020gelu}, ELU~\citep{clevert2016elu}, tanh and the sigmoid activations, but it 
    excludes \relu\ and all the other variants of Leaky \relu. This assumption is required to apply~\citep[Theorem 7.4]{yang2020featureLearning} (which we recall in Appendix~\ref{sec:tp-formalism}) which is the main theoretical result of the Tensor Program series (\citealp{yang2019tp1, yang2020tp2, yang2020tp3, yang2020featureLearning}), but the result is likely to hold with weaker assumptions, as observed numerically in Section~\ref{sec:numerical}, and we leave this for future work. 
    
    \item The assumption $\sigma(0) = 0$ is met by the activation functions mentioned above (except the sigmoid) and is necessary to prove that the network does not move at any layer. Without this assumption, learning is degenerate but not trivial at all layers. It is trivial at step $t=1$ at all layers except the last two: the coordinates of $h^L_1$ and $f_1(\xi)$ converge, with $m$, to quantities which are not 0 but which are independent of the input $\xi$ to the network, similarly to the effect described in Section~\ref{sec:ip-bias}.
\end{enumerate}
\end{remark}

\noindent
The proof of Proposition~\ref{th:trivial-ip-mf-lr}, presented in Appendix~\ref{app:trivial-ip}, proceeds by induction over $t$ to show that the forward and backward passes vanish at any time step. For any time $t$, we proceed again by induction over $l$ (from $l=1$ to $l=L+1$ for the forward pass and from $l=L+1$ to $l=1$ for the backward pass) to prove this vanishing occurs given the magnitudes of the previous forward and backward passes. The informal idea of the proof is the following: essentially, the multiplications of the activation vectors by $m^{-1/2} U^l$ yield vectors whose coordinates are distributed as a Gaussian with finite variance as $m \rightarrow \infty$ for $l \geq 2$ (see Appendix~\ref{app:tp-gaussian-mat-mul} for more details). At initialization, since $w^l(0) = m^{-1} U^l$ for $l \geq 2$ for IPs, the coordinates of $h^l_0$ converge towards $0$ as fast as $m^{-1/2}$ and that of $x^l_0$ towards $\sigma(0)$ for $\sigma$ continuous at $0$. For the same reasons, $f_0(\xi_0)$ converges to $0$. In the first backward pass, multiplications by $\transpose{{(W^l(0))}}$ also yield vectors whose coordinates are in $O(m^{-1/2})$. In contrast to the forward pass, these scales propagate from $l=L$ to $l=1$ and thus compound with depth, and since the last layer's gradient $x^L_0 /m$ is in $O(m^{-1})$, all the gradients' coordinates vanish as $m \rightarrow \infty$ and there is no learning. This reasoning can be repeated at later time steps as there are no correlations between the initial weight matrices and the vectors they multiply because of the degeneracy of the (pre)-activations (their coordinates become equal to the constant $\sigma(0)$ as $m \rightarrow \infty$). Those informal calculations are made rigorous by the Tensor Program.

Proposition~\ref{th:trivial-ip-mf-lr} shows that the parameters of  neural networks in the integrable parameterization are stuck in a stationary point of the objective function in the infinite-width limit, and no learning occurs. It might appear obvious that using larger learning rates to correct the scale with $m$ of the weight updates can avoid this pitfall, but as discussed in the following Section~\ref{sec:no-constant-lr}---where we study which choices of learning rates can lead to stable learning with homogeneous activation functions---the issue is more subtle.

\subsection{No stable learning with learning rates constant over time}\label{sec:no-constant-lr}

As $m$ grows, to compensate the vanishing gradients in the first SGD step, one can use larger learning rates than in the Naive-IP. Yet, as explained below, exponents $(c_l)_{l \in [L+1]}$ for the learning rates which allow to escape the stationary point at initialization will induce an explosion of the pre-activations, if the same values of the exponents are used in the subsequent gradient steps. Indeed, the next informal statement of Theorem~\ref{th:formal-no-constant-lr} shows that, with IPs, one cannot have non-trivial and stable learning with learning rate scales $c_l$ constant in time.

\begin{theorem}[Informal]\label{th:informal-no-constant-lr}
Consider an $L$-hidden layer fully-connected neural network with $L \geq 3$ in the integrable parameterization. Assume that the contributions of the first and second updates $\Delta W^l(1) x^{l-1}_1$ and $\Delta W^l(2) x^{l-1}_2$ are non-vanishing and non-exploding with $m$ at every layer $l$. Then, the learning rates scales $c_l$ cannot have the same value at $t=0$ and $t=1$.  
\end{theorem}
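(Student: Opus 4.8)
The plan is to show that the requirement that the \emph{first} update $\Delta W^l(1)x^{l-1}_1$ be non-vanishing and non-exploding at every layer forces one set of learning-rate exponents at $t=0$, that the same requirement on the \emph{second} update $\Delta W^l(2)x^{l-1}_2$ forces a \emph{different} set at $t=1$, and to conclude by comparing the two (the input layer already suffices). Both implications are obtained by tracking the coordinate magnitudes of the forward and backward passes inside the Tensor Program, reusing the bookkeeping of the proof of Proposition~\ref{th:trivial-ip-mf-lr}.

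\emph{Magnitudes at $t=0$.} At initialization the forward pass is collapsed: since $W^l(0)=m^{-1}U^l$ for $l\in[2,L+1]$ and hitting a vector with $\Theta(1)$ coordinates by $m^{-1}U^l$ produces a vector with $\Theta(m^{-1/2})$ coordinates, one has $h^l_0 = O(m^{-1/2})$ coordinate-wise for $l\ge 2$, hence $\sigma'(h^l_0)\to\sigma'(0)$ and $f_0(\xi_0)\to 0$. Feeding this into the backpropagation recursion $\nabla_{h^{l-1}_0}f_0 = m^{-1}\transpose{(U^l)}\nabla_{h^l_0}f_0\odot\sigma'(h^{l-1}_0)$, and using that multiplication by $m^{-1}\transpose{(U^l)}$ turns a vector with $\Theta(m^{-k})$ coordinates into one with $\Theta(m^{-k-1/2})$ coordinates --- the correlation between $U^l$ and the vector it multiplies being of strictly lower order precisely because the activations are asymptotically constant --- I get $\nabla_{h^l_0}f_0 = \Theta(m^{-(1+(L-l)/2)})$ for $l\in[1,L]$, in particular $\nabla_{h^1_0}f_0 = \Theta(m^{-(L+1)/2})$. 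Plugging this into the first line of Equation~\eqref{eq:ip-update-contrib-t} at $t=0$, the coordinates of $\Delta W^1(1)\xi_1$ are of order $m^{-c_1-(L+1)/2}$, so being neither vanishing nor exploding forces $c_1 = -(L+1)/2$ at $t=0$; the analogous computation at the other layers, which additionally uses the averaged inner products $m^{-1}\transpose{(x^{l-1}_0)}x^{l-1}_1$ of Equation~\eqref{eq:ip-update-contrib-t}, forces values of $c_l$ at $t=0$ that are strictly more negative than the Naive-IP ones.

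\emph{Magnitudes at $t=1$.} Using that the first update is $\Theta(1)$ everywhere, I would show that the forward pass \emph{decollapses}: writing $h^l_1 = W^l(0)x^{l-1}_1 + \Delta W^l(1)x^{l-1}_1 + B^l(1)$ and checking the first and third terms are negligible gives $h^l_1 = \Theta(1)$ coordinate-wise, so $\sigma'(h^l_1) = \Theta(1)$. Crucially, each intermediate weight $w^l(1)$ now carries a rank-one correction $\Delta w^l(1)\propto\nabla_{h^l_0}f_0\,\transpose{(x^{l-1}_0)}$ with $\Theta(1)$ entries; running the $t=1$ backpropagation recursion from $l=L$ to $l=1$, the dominant term at each step comes from this rank-one correction rather than from $m^{-1}\transpose{(U^l)}$, which removes the depth-by-depth shrinkage seen at $t=0$ and yields $\nabla_{h^l_1}f_1 = \Theta(m^{-1})$ \emph{at every layer}. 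Then, by the first line of Equation~\eqref{eq:ip-update-contrib-t} at $t=1$, the coordinates of $\Delta W^1(2)\xi_2$ are of order $m^{-c_1-1}$, so this contribution is neither vanishing nor exploding iff $c_1 = -1$ at $t=1$; similarly the other layers are forced to the Naive-IP values $c_l = -2$ for $l\in[2,L]$ and $c_{L+1}=-1$.

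\emph{Conclusion and main obstacle.} The input-layer exponent must be $-(L+1)/2$ at $t=0$ and $-1$ at $t=1$, and $-(L+1)/2\le -2 < -1$ for $L\ge 3$; hence no time-constant choice of exponents can make both updates $\Theta(1)$ (and in fact the required values differ at every layer). The main difficulty is establishing the ``decollapse'' step rigorously: one must show that once the first update has pushed the network off the collapsed initialization, the rank-one corrections to the intermediate weights exactly offset the $m^{-1}$ prefactors of backpropagation so that $\nabla_{h^l_1}f_1$ stays $\Theta(m^{-1})$ uniformly in $l$ (rather than decaying geometrically in depth, as it does at $t=0$), and one must track the $\Theta(1)$ scalar limits involved (inner products and $\partial_2\ell(y_t, f_t(\xi_t))$) to rule out accidental cancellation of leading-order terms. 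This is exactly where the Tensor Program machinery does the work; the remaining calculations are routine bookkeeping.
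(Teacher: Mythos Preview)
Your high-level strategy---force the $c_l$ at $t=0$ by demanding $\Delta W^l(1)x^{l-1}_1=\Theta(1)$, force a different set at $t=1$ by demanding $\Delta W^l(2)x^{l-1}_2=\Theta(1)$, and compare---is exactly the paper's approach (Lemmas~\ref{th:ip-stable-learning-0} and~\ref{th:ip-stable-learning-1}). The $t=1$ part of your argument (rank-one corrections dominate the backward recursion so $\nabla_{h^l_1}f_1=\Theta(m^{-1})$ uniformly in $l$) is also morally the paper's computation.

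There is, however, a genuine gap in your $t=0$ analysis. You write ``$h^l_0=O(m^{-1/2})$ for $l\ge 2$, hence $\sigma'(h^l_0)\to\sigma'(0)$'' and then propagate $\sigma'(0)$ through the backward recursion as a nonzero constant. Two problems. First, the forward collapse \emph{compounds with depth}: once $\sigma(0)=0$ one has $x^2_0=\Theta(m^{-1/2})$, hence $h^3_0=\Theta(m^{-1})$, and in general $h^l_0=\Theta(m^{-(l-1)/2})$ (for degree-$1$ homogeneity; faster for larger $p$), not $\Theta(m^{-1/2})$ uniformly. This matters because the intermediate-layer update in Equation~\eqref{eq:ip-update-contrib-t} involves $\transpose{(x^{l-1}_0)}x^{l-1}_1/m$, whose scale depends on that of $x^{l-1}_0$; your ``analogous computation at the other layers'' cannot go through without it. Second, and more seriously, your argument silently needs $\sigma'(0)\neq 0$. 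Under the paper's formal Assumption~\ref{ass:smooth-homogeneous-act} (positively $p$-homogeneous with $p\ge 2$) one has $\sigma'(0)=0$, so ``$\sigma'(h^l_0)\to\sigma'(0)$'' kills the backward recursion rather than preserving the $m^{-1/2}$-per-layer shrinkage you claim. The paper does \emph{not} linearize; it uses homogeneity directly (Lemmas~\ref{th:homog-first-forward} and~\ref{th:homog-first-backward}) to pull the scale factors through $\sigma$ and $\sigma'$ exactly, obtaining $h^l_0=\gamma_{f,l}\,\tildeh^l_0$ and $dh^l_0=m^{-1}\gamma_{b,l}(\prod_k\gamma_{f,k})^{p-1}d\tildeh^l_0$ with explicit powers of $m$, and then reads off $c_l=\gamma_l(p)$ from Corollary~\ref{th:homog-first-weight-updates-ip}. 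Your input-layer conclusion $c_1=-(L+1)/2$ is precisely $\gamma_1(1)$, so you have implicitly specialized to $p=1$; but the $p=1$ case (\relu) is not covered by the paper's rigorous argument either, since the Tensor Program needs pseudo-Lipschitz $\sigma'$.

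In short: replace the linearization step by the homogeneity bookkeeping of Appendix~\ref{app:forward-backward-homog}, and track the forward collapse layer-by-layer rather than stopping at $O(m^{-1/2})$. The ``decollapse'' part of your plan is fine; the paper makes it rigorous via Lemmas~\ref{th:ipllr-forward-1-0}--\ref{th:ipllr-forward-1-l}, \ref{th:ipllr-backward-1}, and \ref{th:ipllr-forward-2}, which are where the ``no accidental cancellation'' work you anticipate actually lives.
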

\noindent
In a nutshell, one needs large learning rates to escape the initial stationary point, but keeping those initial values at later time steps would make the pre-activations blow-up as $m \rightarrow \infty$. The formal version of the previous~\autoref{th:informal-no-constant-lr} is given in~\autoref{th:formal-no-constant-lr} below. For this formal statement, we introduce some definitions and assumptions.

\begin{assumption}[Smooth non-negative homogeneous activation]\label{ass:smooth-homogeneous-act}
The activation function $\sigma$ is non-negative, not identically zero and it is positively $p$-homogeneous with $p \geq 2$, \ie $\sigma(\lambda z)=\lambda^p \sigma(z)$ for any $\lambda>0$ and $z\in\mathbb{R}$. Additionally, $\sigma$ has faster growth on the positive part of the real line: $\exists z > 0 \ s.t.\ \sigma(z) > \sigma(-z)$.
\end{assumption}

\begin{remark}
\ 
\begin{enumerate}
    \item While the homogeneity assumption is core to the calculation of scales with integrable parameterization, the fact that $p \geq 2$, and that $\sigma$ is non-negative and has faster growth on the positive part of the real line are simply here to avoid cumbersome technical difficulties in the proofs. It is clear that $\relu^p$ satisfies Assumption~\ref{ass:smooth-homogeneous-act} for any $p \geq 2$. 
    
    \item With the assumption that $p \geq 2$, $\sigma$ also satisfies Assumption~\ref{ass:smooth-act}, so that the rules of the Tensor Program can be applied. 
\end{enumerate}

\end{remark}

\begin{definition}[Scales of first updates with homogeneity]\label{def:gamma-p}
Let $p > 0$. We define the following exponents:
\begin{align*}
   \gamma_1(p) &= \gamma_{L+1}(p) = - \frac{1}{2}
   \left(1+\sum_{k=0}^{L-1} p^k \right),\\
   \textrm{and}\quad \gamma_l(p) &= -1 - \frac{1}{2} \sum_{k=0}^{L-1} p^k   ,  \quad \textrm{for}\  l \in [2,L] .
\end{align*}
\end{definition}

\begin{theorem}[Formal version]\label{th:formal-no-constant-lr}
Consider an $L$-hidden layer fully-connected neural network with $L \geq 3$ in the integrable parameterization, and with no bias terms, except for the first layer. Assume that the activation function $\sigma$ satisfies Assumption~\ref{ass:smooth-homogeneous-act}, the loss $\ell$ satisfies Assumption~\ref{ass:loss} and  that $\lim_{m \rightarrow \infty} \partial_2 \ell(y_0, f_0(\xi_0)) \neq 0$, and $\lim_{m \rightarrow \infty} \partial_2 \ell(y_1, f_1(\xi_1)) \neq 0$ almost surely. Assume further that $\xi_0, \xi_1, \xi_2 \in \mathbb{R}^d$ are all distinct vectors such that $\transpose{\xi_0} \xi_1 \neq 0$ and $\transpose{\xi_1} \xi_2 \neq 0$. Finally assume that:
\begin{align}\label{eq:ass-delta1-stable}
    \begin{cases}
        \frac{1}{m} ||\Delta W^l(1) x^{l-1}_1 ||^2 = \Theta(1), \quad l \in [1, L] \\
        \transpose{{(\Delta W^{L+1}(1))}} x^L_1 = \Theta(1)
    \end{cases}
\end{align}
and 
\begin{align}\label{eq:ass-delta2-stable}
    \begin{cases}
        \frac{1}{m} ||\Delta W^l(2) x^{l-1}_2 ||^2 = \Theta(1), \quad l \in [1, L] \\
        \transpose{{(\Delta W^{L+1}(2))}} x^L_2 = \Theta(1)
    \end{cases}
\end{align}
Then, one necessarily has that:
\begin{enumerate}[(i)]
    \item at $t=0$, $c_l = \gamma_l(p)$ for any $l \in [1, L+1]$ (see Definition~\ref{def:gamma-p}),
    \item at $t=1$, $c_1 = c_{L+1} = -1$, and $c_l = -2$ for $l \in [2, L]$.
\end{enumerate}
\end{theorem}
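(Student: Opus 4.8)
The plan is to compute, inside the Tensor Program, the coordinate-wise orders of magnitude (as powers of $m$) of the forward and backward passes at times $t=0,1,2$, and then read off from the stability requirements~\eqref{eq:ass-delta1-stable} and~\eqref{eq:ass-delta2-stable} the only learning-rate exponents compatible with them; throughout I write $\partial_2\ell_t$ for $\partial_2\ell(y_t,f_t(\xi_t))$, which by hypothesis has a non-zero limit for $t\in\{0,1\}$. First I would record the magnitudes at initialization. Since $a_1=0$, $a_l=1$ for $l\ge 2$ and $w^l(0)=U^l$, multiplication by $W^l(0)=m^{-1}U^l$ ($l\ge2$) contracts an $\Theta(1)$-coordinate vector by $m^{-1/2}$ in the $\ell_2$-averaged sense, while $p$-homogeneity (which forces $\sigma(0)=0$) amplifies that contraction through $\sigma$. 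A forward induction on $l$ gives $x^l_0=\Theta(m^{-\delta_l})$ and $h^l_0=\Theta(m^{-(1/2+\delta_{l-1})})$ coordinate-wise for $l\ge 2$ (with $h^1_0=\Theta(1)$), where $\delta_1=0$, $\delta_l=p(\tfrac12+\delta_{l-1})$, i.e.\ $\delta_l=\tfrac12\sum_{k=1}^{l-1}p^k$. Using $\nabla_{h^L}f_0=m^{-1}U^{L+1}\odot\sigma'(h^L_0)$, $\nabla_{h^l}f_0=m^{-1}\transpose{(U^{l+1})}(\nabla_{h^{l+1}}f_0)\odot\sigma'(h^l_0)$ and the $(p-1)$-homogeneity of $\sigma'$, a backward induction on $l$ gives $\nabla_{h^l}f_0=\Theta(m^{-\epsilon_l})$ coordinate-wise, with $\epsilon_L=1+(p-1)(\tfrac12+\delta_{L-1})$, $\epsilon_l=\tfrac12+\epsilon_{l+1}+(p-1)(\tfrac12+\delta_{l-1})$ for $2\le l\le L-1$, $\epsilon_1=\tfrac12+\epsilon_2$ (in particular $f_0\to 0$, consistent with Proposition~\ref{th:trivial-ip-mf-lr}).

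\emph{Part (i).} The first update, with the exponents $c_l$ used at $t=0$, is $\Delta W^1(1)=-\eta m^{-c_1}\partial_2\ell_0(\nabla_{h^1}f_0)\transpose{\xi_0}$, $\Delta W^l(1)=-\eta m^{-c_l-2}\partial_2\ell_0(\nabla_{h^l}f_0)\transpose{(x^{l-1}_0)}$ for $2\le l\le L$, and $\Delta W^{L+1}(1)=-\eta m^{-c_{L+1}-2}\partial_2\ell_0\,x^L_0$. Plugging these into the left-hand sides of~\eqref{eq:ass-delta1-stable}, using $\transpose{\xi_0}\xi_1\ne 0$, $\lim\partial_2\ell_0\ne 0$, and that the stability hypothesis at layer $l-1$ forces $x^{l-1}_1=\Theta(1)$ coordinate-wise — so, since $\sigma\ge 0$ and therefore the relevant inner products do not cancel, $\tfrac1m\transpose{(x^{l-1}_0)}x^{l-1}_1=\Theta(m^{-\delta_{l-1}})$ and $\tfrac1m\transpose{(x^{L}_0)}x^{L}_1=\Theta(m^{-\delta_L})$ — one reads off $c_1=-\epsilon_1$, $c_l=-1-\delta_{l-1}-\epsilon_l$ for $2\le l\le L$, and $c_{L+1}=-1-\delta_L$. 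It then remains to check the arithmetic identities $\delta_{l-1}+\epsilon_l=\tfrac12\sum_{k=0}^{L-1}p^k$ for $2\le l\le L$ (this telescopes: $\delta_l+\epsilon_{l+1}-\delta_{l-1}-\epsilon_l=\tfrac12 p^{l-1}-\tfrac12-(p-1)\delta_l/p=0$, since $(p-1)\delta_l/p=(p^{l-1}-1)/2$), together with $\epsilon_1=\tfrac12\bigl(1+\sum_{k=0}^{L-1}p^k\bigr)=1+\delta_L$; these turn the three formulas above into exactly $c_l=\gamma_l(p)$ of Definition~\ref{def:gamma-p}.

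\emph{Part (ii).} With $c_l=\gamma_l(p)$ now fixed at $t=0$, the first update has size exactly $\Theta(1)$ at every layer, so the network de-degenerates: $h^l_1=\Theta(1)$ and $x^l_1=\Theta(1)$ coordinate-wise for all $l$, and the updated weights satisfy $w^l(1)=\Theta(1)$ coordinate-wise (e.g.\ $\Delta w^{L+1}(1)=-\eta m^{\delta_L}\partial_2\ell_0\,x^L_0$ has $\Theta(1)$ coordinates). A backward induction at $t=1$ — now of exactly the Naive-IP type sketched around~\eqref{eq:ip-update-contrib-t}, but carrying through the correlations that the large first step created between the updated weights and the matrices $U^l$, via the Gaussian-conditioning rules of the Tensor Program — yields $\nabla_{h^l}f_1=\Theta(1/m)$ coordinate-wise for every $l\in[1,L]$ (the dominant terms are rank-one correlation contributions whose exponents land exactly on $1/m$, again by the identities of Part (i)). Hence the second update, with exponents $c_l$ at $t=1$, satisfies coordinate-wise $\Delta W^1(2)\xi_2=\Theta(m^{-c_1-1})$, $\Delta W^l(2)x^{l-1}_2=\Theta(m^{-c_l-2})$ for $2\le l\le L$, and $\transpose{(\Delta W^{L+1}(2))}x^L_2=\Theta(m^{-c_{L+1}-1})$, using $\transpose{\xi_1}\xi_2\ne 0$, $\lim\partial_2\ell_1\ne 0$, $x^l_2=\Theta(1)$ and $\tfrac1m\transpose{(x^{l-1}_1)}x^{l-1}_2=\Theta(1)$. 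Imposing~\eqref{eq:ass-delta2-stable} forces $c_1=c_{L+1}=-1$ and $c_l=-2$ for $2\le l\le L$, the Naive-IP exponents. Finally, since $\sum_{k=0}^{L-1}p^k\ge L\ge 3$, one has $\gamma_1(p)=\gamma_{L+1}(p)<-1$ and $\gamma_l(p)<-2$ for $2\le l\le L$, so the exponents at $t=0$ and $t=1$ cannot agree, which is the informal Theorem~\ref{th:informal-no-constant-lr}.

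\emph{Main obstacle.} The hard part is not the exponent bookkeeping above but making the three steps rigorous inside the Tensor Program — turning each ``$\Theta(m^{-\alpha})$ coordinate-wise'' into a precise statement about the limiting random variable produced by the program's rules. In particular I expect the two delicate points to be: (a) the de-degeneration at $t=1$, i.e.\ showing that adding the $\Theta(1)$ increments $\Delta W^l(1)x^{l-1}_1$ genuinely restores $\Theta(1)$ pre-activations with no accidental cancellation — this is where the non-negativity of $\sigma$ and its faster growth on the positive half are used — and (b) tracking, layer by layer, the correlations that the large first step creates between the updated weights and the initial matrices $U^l$ in the backward pass at $t=1$, which is combinatorially the heaviest part and the place where the fine cancellations hidden in $\epsilon_l$ and $\gamma_l(p)$ must be verified.
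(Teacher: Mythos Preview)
Your proposal is correct and follows essentially the paper's approach: the paper packages your exponents $\delta_l,\epsilon_l$ into rescaled ``tilde'' variables (Definition~\ref{def:tilde-variables}, Lemmas~\ref{th:homog-first-forward}--\ref{th:homog-first-backward}, Corollary~\ref{th:homog-first-weight-updates-ip}) and then, via the same layer-by-layer induction you sketch, reduces both parts to the positivity statements you flag as obstacle (a), namely $\mathbb{E}[Z^{\tildex^{l}_0}Z^{x^{l}_1}]>0$, $\mathbb{E}[Z^{d\tildeh^l_0}Z^{d\tildeh^l_1}]>0$ and $\mathbb{E}[Z^{x^l_1}Z^{x^l_2}]>0$ (Lemmas~\ref{th:ipllr-forward-1-l}, \ref{th:ipllr-backward-1}, \ref{th:ipllr-forward-2}). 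One small correction to your diagnosis in (b): in the integrable parameterization the contributions of $W^l(0)$ --- including their $\dot Z$/Gaussian-conditioning parts --- all vanish since $\scalarlim{\omega}_l=0$, so the work in part~(ii) is \emph{not} correlation-tracking but again a positivity argument for the rank-one update terms, precisely Lemma~\ref{th:ipllr-backward-1}, where the ``faster growth on the positive half'' hypothesis ($\alpha>\beta$) is genuinely needed.
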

Let us comment briefly on the hypotheses of Theorem~\ref{th:formal-no-constant-lr}.
The proof of Theorem~\ref{th:formal-no-constant-lr} relies on an analysis of the SGD steps involving both~\cite[Theorem 7.4]{yang2020featureLearning} and the homogeneity property of the activation function.
The requirement that $p \geq 2$ allows to satisfy the smoothness assumption of~\cite[Theorem 7.4]{yang2020featureLearning} and the removal of the bias terms allows to fully exploit homogeneity. In Section~\ref{sec:numerical}, we  numerically check that the result still holds with $\sigma = \relu$, which is $p=1$ homogeneous. The corresponding scales for the learning rates  in the ReLU case are $\gamma_1(1) = -(L+1)/2$, $\gamma_l(1) = -(L+2)/2$ and $\gamma_{L+1}(1) = -(L+1)/2$. 

We give below an informal explanation for the values of the learning rates appearing in \autoref{th:formal-no-constant-lr} in the case of a positively 1-homogeneous activation function. As previously mentioned in Section~\ref{sec:trivial-deep-ip}, each multiplication by $W^l(0) = m^{-1} U^l$ or its transpose yields a factor in $m^{-1/2}$ for $l \geq 2$. Because of the homogeneity property, this scale  propagates from layer to layer starting from layer $2$, and the coordinates of $h^l_0$ and $x^l_0$ are thus in $\Theta(m^{-(l-1)/2})$ for $l \in [1, L]$. For the backward pass, the first gradient $\nabla_{x^L} f_0(\xi_0) = U^{L+1} / m$ has coordinates in $\Theta(m^{-1})$, and, as already discussed in Section~\ref{sec:trivial-deep-ip}, from $l=L$ to $l=2$, each multiplication by $\transpose{{(W^l(0))}}$ yields an additional factor in $m^{-1/2}$ and those compound with depth so that the coordinates of $\nabla_{h^l} f_0(\xi_0)$ are in $\Theta(m^{-1}m^{-(L-l)/2})$. Therefore, calling $\tildex^l_0 := m^{(l-1)/2} x^{l}_0$, and $d\tildeh^l_0 := m^{1+(L-l)/2} \nabla_{h^l} f_{0}(\xi_{0})$, we have after the first weight update
\begin{align*}
    \Delta W^{1}(1) \xi_{1} &= - \eta \partial_2 \ell(y_0, f_t(\xi_0)) (\transpose{{\xi_{0}}} \xi_{1}) m^{-c_{1}} m^{-(L+1)/2} d\tildeh^1_0,\\
    \Delta W^{l}(1) x^{l-1}_{1} &= -\eta \partial_2 \ell(y_0, f_0(\xi_0)) m^{-c_{l}} m^{-2} m^{-(L-l)/2 - (l-2)/2} \frac{\transpose{{(\tildex^{l-1}_0)}} x^{l-1}_{1}}{m} d\tildeh^l_0, \quad l \in [2, L], \\
    \transpose{{(\Delta W^{L+1}(1))}} x^{L}_{1} &= -\eta \partial_2 \ell(y_0, f_0(\xi_0)) m^{-c_{l}} m^{-1} m^{-(L-1)/2} \frac{\transpose{{(\tildex^{L}_0)}} x^{L}_{1}}{m}.
\end{align*}
Since $d\tildeh^l_0$ and $\tildex^l_0$ have coordinates in $\Theta(1)$ by design, and since averaged inner products of the type $\transpose{{(\tildex^{l-1}_0)}} x^{l-1}_1 / m$ converge to finite expectations (by the rules of the Tensor Program, see~\citealp[Theorem 7.4]{yang2020featureLearning}), we see that the choice $c_1 = -(L+1)/2$, $c_l = -(L+2)/2$ for $l \in [2, L]$, and $c_{L+1} = -(L+1)/2$ is the only way to ensure that the updates induce contributions which have coordinates in $\Theta(1)$ at $t=1$. Given this choice for the learning rate scales $c_1, \ldots, c_{L+1}$ at $t=0$, we readily get that the coordinates of $h^l_1$ and $x^l_1$ are in $\Theta(1)$ because the contributions $W^l(0)x^{l-1}_1$ have coordinates in $O(m^{-1/2})$ for intermediate layers, and in $O(1)$ for the input and output layers. From the Equations~\eqref{eq:ip-update-contrib-t} with $t=1$, we see that for the second gradient step, $m\nabla_{h^l} f_1(\xi_1)$ has coordinates in $\Theta(1)$ because the multiplications by $\transpose{{(W^l(1))}}$ do not yield a factor in $m^{-1/2}$ due to the scale correction introduced in the first update. At $t=1$, this leads to the choice $c_1 = c_{L+1} = -1$, and $c_l = -2$ for $l \in [2, L]$,  in order to have update contributions with coordinates in $\Theta(1)$ at $t=2$. 
These informal calculations are made rigorous in the proof of \autoref{th:formal-no-constant-lr} using the Tensor Program~\citep{yang2020tp3}.

\section{Large Initial Learning Rates Induce Learning}\label{sec:llr-learning}

In this section, we show that with positively homogeneous activation functions, using large initial learning rates (polynomial in $m$) allows the network to escape from the initial stationary point and to initiate a non-trivial training phase in the infinite-width limit.
Because we use the homogeneity property extensively for our results, in all this section, as in Section~\ref{sec:no-constant-lr}, we consider a version of integrable parameterizations where the bias terms are removed except for the first layer.

As observed in Section~\ref{sec:no-constant-lr}, beyond the fact that IPs require large learning rates (for the first gradient step) to be trained, one crucial characteristic of IPs is that no choice of learning rate scales ($c_l$) which are constant in time can induce a favorable learning behavior: one has to first use large learning rates to escape the stationary point at initialization ($t=0$) and then revert to the Naive-IP learning rates for $t \geq 1$ to induce stable learning. 

\begin{definition}[IP with large initial learning rates]\label{def:ipllr}
Let $\sigma$ be a positively $p$-homogeneous activation function with $p > 0$. We define \emph{the integrable parameterization with large initial learning rates} (IP-LLR) as the integrable parameterization of an $L$-hidden layer fully connected-network with activation $\sigma$ such that:
\begin{enumerate}[(i)]
    \item At $t=0$: $c_l = \gamma_l(p)$, for $l \in [1,L+1]$;
    \item At $t \geq 1$: $c_1 = c_{L+1} = -1$ and  $c_l = -2$, for $l \in [2,L]$,
\end{enumerate}
where the values of the $\gamma_l(p)$ are given in Definition~\ref{def:gamma-p}.
\end{definition}

\begin{remark}
\ 
\begin{enumerate}
    \item The definition means that $\Delta w^l(1) = -\eta m^{-\gamma_l(p)} \nabla_{w^l} \ell(y_0, f_0(\xi_0))$ for the first weight update after the forward-backward pass at time $t=0$, and for $t \geq 1$, the $(t+1)$-th weight update is  $\Delta w^l(t+1) = -\eta m^{-2} \nabla_{w^l} \ell(y_t, f_t(\xi_t))$ for $l \in [2, L]$, and $\Delta w^1(t+1) = -\eta m^{-1} \nabla_{w^1} \ell(y_t, f_t(\xi_t))$, $\Delta w^{L+1}(t+1) = -\eta m^{-1} \nabla_{w^{L+1}} \ell(y_t, f_t(\xi_t))$ after the forward-backward pass at time $t$.

    \item We give the definition with an arbitrary degree of homogeneity $p$ (the values of the $\gamma_l(p)$ are given in Definition~\ref{def:gamma-p}) as for some theorems where we use the Tensor Program for the proof, we need sufficient smoothness of the activation function, which is achieved only when $p \geq 2$, but we always use $\sigma = \relu$ (which corresponds to $p=1$) in our informal derivations and numerical experiments. Note that since the values of $c_1, \ldots, c_{L+1}$ at $t=0$ depend on $p$, the definition of an IP-LLR parameterization also implicitly depends on the degree of homogeneity $p$.  
    
    \item Since $a_1 = 0$ for IPs, we leverage the homogeneity property only for layers $l \in [2, L]$ (see Appendix~\ref{app:forward-backward-homog} for more details), so that we might as well assume $L \geq 2$ whenever we study IP-LLR. 
\end{enumerate}
\end{remark}

\subsection{Non-trivial and Stable Learning for Integrable Parameterizations}\label{sec:ipllr-main-result}

\begin{theorem}[Non-trivial and non-exploding learning of IP-LLR]\label{th:non-trivial-ipllr}
Consider the IP-LLR parameterization of an $L$-hidden layer neural network with no bias terms, except for the first layer, and with an activation function $\sigma$ satisfying Assumption~\ref{ass:smooth-homogeneous-act} and a loss function $\ell$ satisfying Assumption~\ref{ass:loss}. Let $\xi \in \mathbb{R}^d$ be an input to the network, and assume $\partial_2 \ell(y_0, 0) \neq 0$. Then, one has:
\begin{align*}
    (i) \ \ &f_0(\xi) \xrightarrow[m \rightarrow \infty]{a.s.} 0. \\
    (ii) \ \ &f_1(\xi) \xrightarrow[m \rightarrow \infty]{a.s.} \scalarlim{f_1}(\xi), \quad  0 < |\scalarlim{f_1}(\xi)| < \infty \ a.s. \\
    (iii) \ \ &f_2(\xi) \xrightarrow[m \rightarrow \infty]{a.s.} \scalarlim{f_2}(\xi), \quad  |\scalarlim{f_2}(\xi)| < \infty \ a.s.
\end{align*}
\end{theorem}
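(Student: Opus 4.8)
## Proof Plan for Theorem~\ref{th:non-trivial-ipllr}

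The plan is to establish the three claims by a careful tracking of the magnitudes of the forward and backward passes at times $t=0,1,2$, all within the Tensor Program formalism of~\cite[Theorem~7.4]{yang2020featureLearning}, exploiting the $p$-homogeneity of $\sigma$ to propagate scales through the intermediate layers.

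\textbf{Claim (i).} This is essentially a restatement, at $t=0$, of the mechanism already described in the proof of Proposition~\ref{th:trivial-ip-mf-lr}. First I would observe that at initialization the IP-LLR parameterization has the same effective weights as the Naive-IP (the learning-rate exponents only matter from the first \emph{update} onwards), so $W^l(0) = m^{-1}U^l$ for $l \in [2,L+1]$ and $W^1(0) = U^1$. Running the forward pass, each multiplication by $m^{-1/2}U^l$ contributes a factor $m^{-1/2}$; combined with homogeneity of $\sigma$ one gets coordinates of $h^l_0$ in $\Theta(m^{-(l-1)/2})$ and hence $x^l_0$ coordinates in $\Theta(m^{-p(l-1)/2})$ (using $\sigma(0)=0$, which holds for non-negative $p$-homogeneous $\sigma$ with $p\geq 2$). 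In particular $x^L_0$ vanishes and $f_0(\xi) = m^{-1}(w^{L+1}(0))^\top x^L_0 \to 0$ almost surely. I would phrase this rigorously by introducing the rescaled vectors $\tilde x^l_0 := m^{p(l-1)/2}x^l_0$ and invoking the master theorem to get their coordinate limits.

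\textbf{Claim (ii).} Here is where the large initial learning rates enter. The point is that the exponents $c_l = \gamma_l(p)$ at $t=0$ are tuned (this is exactly the content of Theorem~\ref{th:formal-no-constant-lr}(i)) so that the contributions $\Delta W^l(1)x^{l-1}_1$ are $\Theta(1)$ coordinate-wise for every $l$, and likewise $(\Delta W^{L+1}(1))^\top x^L_1 = \Theta(1)$. I would carry this out in three movements: (a) compute the first backward pass, showing $\nabla_{h^l}f_0(\xi_0)$ has coordinates in $\Theta(m^{-1-(L-l)/2})$ via the backprop recursion and homogeneity of $\sigma'$; (b) plug into the update rules of Definition~\ref{def:ac-param}(iv) with $c_l = \gamma_l(p)$ to see that the $m$-powers cancel exactly, leaving update contributions governed by averaged inner products $(\tilde x^{l-1}_0)^\top x^{l-1}_1 / m$ and by the scalar $\partial_2\ell(y_0,f_0(\xi_0))$; (c) note $f_0(\xi_0)\to 0$ so $\partial_2\ell(y_0,f_0(\xi_0)) \to \partial_2\ell(y_0,0) \neq 0$ by Assumption~\ref{ass:loss} and the hypothesis. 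Then the forward pass at $t=1$ reads $h^l_1 = W^l(0)x^{l-1}_1 + \Delta W^l(1)x^{l-1}_1$, where the first term is negligible ($O(m^{-1/2})$) for intermediate layers and $O(1)$ for the first layer, so $h^l_1$ has a well-defined nonzero coordinate limit; pushing through to $f_1(\xi) = (W^{L+1}(1))^\top x^L_1 = (W^{L+1}(0))^\top x^L_1 + (\Delta W^{L+1}(1))^\top x^L_1$ gives a finite limit $\scalarlim{f_1}(\xi)$. The nonvanishing, $0 < |\scalarlim{f_1}(\xi)|$, would follow from tracking the sign/nondegeneracy: because $\sigma$ is non-negative with strictly faster growth on the positive axis (Assumption~\ref{ass:smooth-homogeneous-act}), the relevant $\mathsf{Z}$-variables in the Tensor Program have nonzero expectation, so the limiting inner products do not accidentally cancel — this is the delicate point and I would isolate it as a lemma.

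\textbf{Claim (iii).} For the second step I would first record that, thanks to the scale correction introduced by the first update, the effective weights $W^l(1)$ for $l \in [2,L]$ now have coordinates in $\Theta(1/m)$ contributions but the operator $(W^l(1))^\top$ applied to a $\Theta(1)$-coordinate vector gives a $\Theta(1)$-coordinate vector (no spurious $m^{-1/2}$), so $m\nabla_{h^l}f_1(\xi_1)$ has coordinates in $\Theta(1)$. With the $t\geq 1$ exponents $c_1=c_{L+1}=-1$, $c_l=-2$, the update contributions $\Delta W^l(2)x^{l-1}_2$ are again $\Theta(1)$ (this is the content of Theorem~\ref{th:formal-no-constant-lr}(ii), and also matches the hypotheses~\eqref{eq:ass-delta2-stable}); hence $h^l_2$ and therefore $f_2(\xi)$ have finite coordinate limits, giving $\scalarlim{f_2}(\xi)$ with $|\scalarlim{f_2}(\xi)| < \infty$. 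No lower bound is claimed here, so this part is lighter.

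\textbf{Main obstacle.} The routine part is the bookkeeping of $m$-powers; the genuinely hard part is (a) verifying that all the limiting objects (pre-activations, gradients, inner products) are \emph{legitimate Tensor Program quantities} — i.e. that every nonlinearity appearing after the first update is pseudo-Lipschitz so that~\cite[Theorem~7.4]{yang2020featureLearning} applies, which forces the $p\geq 2$ restriction and careful handling of $\sigma'$ — and (b) proving the \emph{strict} nonvanishing in (ii), since a priori the $\Theta(1)$ contributions could sum to zero in the limit. Point (b) is where Assumption~\ref{ass:smooth-homogeneous-act}'s asymmetry condition ($\exists z>0: \sigma(z) > \sigma(-z)$) is essential, and I expect the bulk of the proof in the appendix to be the inductive verification, layer by layer, that the relevant $\mathsf{Z}$-expectations are nonzero.
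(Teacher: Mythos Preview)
Your plan is essentially the paper's proof. The paper organizes it via a family of ``tilde'' (scaleless) variables $\tildeh^l_0,\tildex^l_0,d\tildeh^l_0,d\tildex^l_0$ that absorb the $m$-powers, then establishes layer-by-layer that $0<\mathbb{E}[Z^{\tildex^l_0}Z^{x^l_1}]<\infty$ (Lemmas~\ref{th:ipllr-forward-1-0}--\ref{th:ipllr-forward-1-l}) before the final output lemma (Lemma~\ref{th:ipllr-forward-1-L}); your movements (a)--(c) and the ``main obstacle'' paragraph describe exactly this. Two small sharpenings: first, your backward-pass scale $\Theta(m^{-1-(L-l)/2})$ is the $p=1$ formula; for general $p$ the exponent picks up the $(\gamma_{f,k})^{p-1}$ factors of Lemma~\ref{th:homog-first-backward}, which is precisely why $\gamma_l(p)$ involves $\sum_{k=0}^{L-1}p^k$. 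Second, the nonvanishing in (ii) is not just ``nonzero expectations do not cancel'': the paper shows that \emph{both} summands $\mathbb{E}[Z^{U^{L+1}}Z^{x^L_1}]$ and $-\eta\scalarlim{\chi}_0\mathbb{E}[Z^{\tildex^L_0}Z^{x^L_1}]$ carry the sign $-\mathrm{sign}(\scalarlim{\chi}_0)$, via a conditioning-and-symmetry argument that reduces the first term to $(\alpha^p-\beta^p)\times(\text{positive})$ --- this is where the faster-growth-on-the-positive-axis clause of Assumption~\ref{ass:smooth-homogeneous-act} is used, exactly as you anticipated.
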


\begin{remark}\label{remark:non-trivial-ipllr}
\ 
\begin{enumerate}[1.]
    \item We show in our numerical experiments (see Section~\ref{sec:numerical}) that with $\sigma = \relu$ (\ie $p=1$), the choice of learning rates for IP-LLR is indeed able to induce learning for networks deeper than four layers without creating instabilities. 
    
    \item A similar result could be obtained with more general assumptions on the activation function $\sigma$, namely that $\sigma$ is twice differentiable almost everywhere and that $\sigma(0) = 0$ and $\sigma'(0) \neq 0$ (which is the case for many activation functions such as GeLU, ELU, tanh), but at the cost of a more technical proof. The idea in this case is that because of the scaling in $1/m$ which makes the forward pass vanish at initialization, one can recover the homogeneity property by linearizing $\sigma$ around 0: $\sigma(h) \simeq \sigma'(0) h$. This linearization also provides the right value $|\sigma'(0)|^{-1}$ for the standard deviation of the initial Gaussians in order to avoid vanishing or explosion at initialization with the depth $L$. See more details in Remark~\ref{remark:homog-first-forward}.
    
    \item  For positively $p$-homogeneous activations with $p \geq 2$, we have $\sigma'(0) = 0$ and the behavior of the network is inherently different from that of a network where the first forward pass can effectively be linearized (the setting described in the previous point). This difference appears clearly in the numerical experiments presented in Section~\ref{sec:numerical} where we also discuss the reasons for such a qualitatively different behavior.
    
    \item In IP-LLR, the initial gradient direction will be determined by the first sample $(\xi_0, y_0)$ fed to the network. To avoid giving too much importance to a single sample, one can in practice average the gradients over a batch of many training samples instead, which is what we do in our numerical experiments in Section~\ref{sec:numerical}. 
\end{enumerate}
\end{remark}

\noindent
The idea of the proof essentially lies in the informal calculations of Section~\ref{sec:no-constant-lr} which are made rigorous using the framework of the Tensor Program. Point $(ii)$ stems from the fact that at $t=1$, the output is the difference between two expectations in the limit $m \rightarrow \infty$, which can both be shown to be different from $0$ and of opposite signs.

\subsection{IP-LLR is a Modified $\mu$P}\label{sec:ipllr-modified-muP}

In this section, we analyze the behavior of IP-LLR more in detail and show that this model is actually equivalent to a modification of \muP{} where the initial weights are removed from the first weight update for all of the intermediate layers. We first show an equivalence at finite-width in Section~\ref{sec:ipllr-muP-finite-width} with mild assumptions, and then extend those results to the infinite-width limit in Section~\ref{sec:ipllr-muP-infinite-width} with slightly more restrictive assumptions on the activation function $\sigma$. Since we study the IP-LLR parameterization, we consider positively $p$-homogeneous activation functions, and only the degree of homogeneity allowed will vary between Sections~\ref{sec:ipllr-muP-finite-width} and~\ref{sec:ipllr-muP-infinite-width}. In short, the main idea behind this equivalence is that since IP-LLR and \muP\ are both designed to have maximal update contributions at $t=0$, they will induce the same update at initialization, and the only difference at later time steps is that the initial weights of IP-LLR contribute vanishingly to the pre-activations whereas those of \muP\ contribute in $\Theta(1)$.

\subsubsection{Finite-Width Equivalence}\label{sec:ipllr-muP-finite-width}

As explained in Section~\ref{sec:parameterizations-nn} in the examples of ac-parameterizations, from the point of view of abc-parameterizations (see \citealp{yang2020featureLearning}), both \muP\ and Naive-IP follow the same training procedure for the effective weights $W^l$, the only difference being the standard deviation at initialization which is downscaled by $1/\sqrt{m}$ for Naive-IP compared to \muP. In this regard, since IP-LLR is a modification of Naive-IP where large learning rates are used at initialization, it comes as no surprise that the learning dynamics of IP-LLR and \muP\ are closely related. We detail this relationship in this section.

Recall that for \muP\ one has $W^1_{\muP}(0) = U^1$, $W^l_{\muP}(0) = m^{-1/2} U^l$ for $l \in [2, L]$, and $W^{L+1}_{\muP}(0) = m^{-1}U^{L+1}$ whereas for any integrable parameterization, one has $W^1_{\IP}(0) = U^1$, $W^l_{\IP}(0) = m^{-1} U^l$ for $l \in [2, L+1]$. Consider the following \emph{hybrid parameterization} (HP) which consists in training with the maximal update parameterization \muP\ all along, but simply replacing, for all intermediate layers $l \in [2, L]$, the first update $W^l(1) = W^l(0) + \Delta W^l(1)$ by $W^l(1) = m^{-1}U^l + \Delta W^l(1)$. In other words, this simply consists in using the weight pre-factors of \muP\ for the intermediate layers in the initial forward and backward passes, and then using the pre-factors from IP for the initial weights of the intermediate layers in any subsequent update. 

\begin{proposition}[Finite width equivalence between IP-LLR and HP]\label{th:finite-equivalence-ipllr-hp}
Consider the IP-LLR and HP parameterizations with a $p$-homogeneous activation function $\sigma$ with $p \geq 1$ and without any bias term except at the first layer. Let us sub/super-script the variables of each model with IP and HP respectively. Assume the full sequence of training samples $(\xi_0, y_0), \ldots, (\xi_s, y_s), \ldots$ and the loss $\ell$ are the same for both parameterizations. Assume further that $\partial_2 \ell(y_0, f_0^{HP}(\xi_0)) \neq 0$, and denote by $\eta$ the base learning rate of the IP-LLR parameterization. Finally consider the following schedule for the base learning rate of HP: 
\begin{align*}
    \eta_{\HP}(0) &= {\partial_2 \ell(y_0, f_0^{IP}(\xi_0)) \over \partial_2 \ell(y_0, f_0^{HP}(\xi_0))} \eta,  \\
    \eta_\HP(s) &= \eta, \qquad \qquad \qquad \qquad s \geq 1.
\end{align*}
Then one has:
\begin{align*}
    \forall t \geq 1, \quad \forall \xi \in \mathbb{R}^d, \quad f^\HP_t(\xi) = f^\IP_t(\xi).
\end{align*}
\end{proposition}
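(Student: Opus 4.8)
I would work throughout with the \emph{effective} weights $W^l(t)$ and the effective first-layer bias $B^1(t)$ (rather than the learnable $w^l,b^1$), and couple the two models by drawing the same Gaussian matrices $(U^l)_l$ and vector $v^1$ at initialization. From Definition~\ref{def:ac-param}(iv) and the chain rule $\nabla_{w^l}=m^{-a_l}\nabla_{W^l}$, the $(t+1)$-th effective update is $\Delta W^l(t+1)=-\eta(t)\,m^{-(2a_l+c_l)}\,\partial_2\ell\big(y_t,f_t(\xi_t)\big)\,\nabla_{W^l}f_t(\xi_t)$, with $\nabla_{W^l}f_t(\xi)=\big(\nabla_{h^l_t}f_t(\xi)\big)\transpose{(x^{l-1}_t(\xi))}$ for $l\le L$, $\nabla_{W^{L+1}}f_t(\xi)=x^L_t(\xi)$, and $\eta(t)$ the base learning rate at step $t$; the bias $B^1$ obeys the same rule with the $\transpose{\xi}$ factor dropped. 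The key observation is that \emph{for $t\ge 1$ the exponents $2a_l+c_l$ of IP-LLR (Definition~\ref{def:ipllr}) and of \muP\ agree}, both equal to $-1,0,\dots,0,1$ for $l=1,2,\dots,L,L+1$, and $\eta_\HP(s)=\eta$ for $s\ge1$; so from $t=1$ on, the update rules of the two models are \emph{literally the same function} of $(W^1(t),\dots,W^{L+1}(t),B^1(t))$. Since the forward and backward passes are also the same functions of these parameters, a one-line induction on $t$ yields: if $W^l_\IP(1)=W^l_\HP(1)$ for all $l$ and $B^1_\IP(1)=B^1_\HP(1)$, then the same equalities persist for all $t\ge1$, hence $f^\IP_t(\xi)=f^\HP_t(\xi)$ for all $t\ge1$, $\xi\in\mathbb{R}^d$. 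So everything reduces to the base case $t=1$; and because HP is built so that the initial parts already match ($U^1$ at layer $1$, $m^{-1}U^l$ at layers $l\in[2,L]$, $m^{-1}U^{L+1}$ at the output, $v^1$ for the bias), it is enough to prove $\Delta W^l_\IP(1)=\Delta W^l_\HP(1)$ for every $l$ and $\Delta B^1_\IP(1)=\Delta B^1_\HP(1)$.

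For the base case I would use homogeneity to relate the two $t=0$ passes, which differ only in that the intermediate pre-factor is $m^{-1}$ for IP-LLR versus $m^{-1/2}$ for HP. By induction on $l$, using $\sigma(\lambda z)=\lambda^{p}\sigma(z)$ and, where it is defined, $\sigma'(\lambda z)=\lambda^{p-1}\sigma'(z)$ for $\lambda>0$ (note $p$-homogeneity forces $\sigma(0)=0$, and at $p=1$ one fixes the usual convention at the kink), one gets coordinatewise identities $h^l_{0,\IP}=m^{\alpha_l}h^l_{0,\HP}$ and $x^l_{0,\IP}=m^{p\alpha_l}x^l_{0,\HP}$, where $\alpha_1=0$ and $\alpha_l=p\alpha_{l-1}-\tfrac12$, so $\alpha_l=-\tfrac12\sum_{k=0}^{l-2}p^{k}$; in particular $f^\IP_0(\xi)=m^{\beta_L}f^\HP_0(\xi)$ with $\beta_L:=p\alpha_L=-\tfrac12\sum_{k=1}^{L-1}p^{k}$. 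Descending from $l=L$ to $l=1$ --- crucially the output pre-factor $m^{-1}$ is the \emph{same} for both models, so both backward passes start from the common vector $\nabla_{x^L_0}f_0=m^{-1}U^{L+1}$ --- the same bookkeeping gives $\nabla_{h^l_0}f_{0,\IP}=m^{\delta_l}\nabla_{h^l_0}f_{0,\HP}$, with $\delta_L=(p-1)\alpha_L$, $\delta_l=\delta_{l+1}-\tfrac12+(p-1)\alpha_l$ for $l\in[2,L-1]$, and $\delta_1=\delta_2-\tfrac12$, whose solution is $\delta_l=\tfrac12-\tfrac12\sum_{k=l-1}^{L-1}p^{k}$ for $l\in[2,L]$ and $\delta_1=-\tfrac12\sum_{k=1}^{L-1}p^{k}$. (All the multiplicative factors are positive powers of $m$, so homogeneity applies; at coordinates where $h^l_{0,\HP}=0$ both sides of the $\sigma'$ identity equal $\sigma'(0)$ --- which vanishes when $p>1$ and is multiplied by $m^{0}=1$ when $p=1$ --- so the identities hold verbatim.)

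To finish I would plug these scalings into the first updates. Writing $\nabla_{W^l}f^\IP_0(\xi_0)=m^{s_l}\nabla_{W^l}f^\HP_0(\xi_0)$ with $s_l=\delta_l+\beta_{l-1}$ for $l\le L$ (and $\beta_0:=0$, $s_{L+1}:=\beta_L$), and using that the ratio defining $\eta_\HP(0)$ exactly turns the factor $\partial_2\ell(y_0,f^\HP_0(\xi_0))$ appearing in HP's update into $\partial_2\ell(y_0,f^\IP_0(\xi_0))$ (this is why $\partial_2\ell(y_0,f^\HP_0(\xi_0))\neq0$ is assumed), the identity $\Delta W^l_\IP(1)=\Delta W^l_\HP(1)$ becomes the purely arithmetic statement $-\gamma_l(p)-2a^{\IP}_l+s_l=1-2a^{\HP}_l$ for each $l$, with $(a^{\IP}_l,a^{\HP}_l)$ equal to $(0,0)$, $(1,\tfrac12)$, $(1,1)$ for $l=1$, $l\in[2,L]$, $l=L+1$ respectively. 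These three cases read $\gamma_1(p)=\delta_1-1$, $\gamma_l(p)=\delta_l+\beta_{l-1}-2$, $\gamma_{L+1}(p)=\beta_L-1$, and each is verified by substituting the closed forms above into Definition~\ref{def:gamma-p} (for instance $\delta_1-1=-1-\tfrac12\sum_{k=1}^{L-1}p^{k}=-\tfrac12-\tfrac12\sum_{k=0}^{L-1}p^{k}=\gamma_1(p)$). The first-layer bias update differs only by deleting the $\transpose{\xi_0}$ factor, so the $l=1$ identity settles it as well. Combined with the induction of the first paragraph, this proves $f^\IP_t=f^\HP_t$ for all $t\ge1$.

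\textbf{Main obstacle.} The real work sits in the two nested inductions at $t=0$ --- ascending in $l$ for the forward pass, descending in $l$ for the backward pass --- and in checking that the powers of $m$ accumulated there match the rather opaque exponents $\gamma_l(p)$ of Definition~\ref{def:gamma-p}; one also has to be careful that every multiplicative factor is a positive power of $m$ before invoking homogeneity, and to treat the boundary layers separately ($l=1$, where $h^1$ is unchanged and there is no homogeneity, and $l=L,L+1$, where the output pre-factor already coincides). By contrast, the reduction to the base case needs nothing beyond the remark that IP-LLR's post-initialization learning rates and \muP's learning rates induce identical effective-weight updates; the whole argument is a deterministic, finite-$m$ computation using only the chain rule and $p$-homogeneity, with no asymptotics or Tensor-Program input.
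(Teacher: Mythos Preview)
Your proof is correct and follows essentially the same strategy as the paper: reduce to the base case $t=1$ by noting that the effective-update exponents $2a_l+c_l$ of IP-LLR (for $t\ge1$) and of \muP\ coincide, then establish $\Delta W^l_\IP(1)=\Delta W^l_\HP(1)$ via the homogeneity bookkeeping at $t=0$.

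The only organizational difference is that the paper routes the base case through the parameterization-independent ``tilde'' variables of Definition~\ref{def:tilde-variables}: it proves once (Lemmas~\ref{th:homog-first-forward}--\ref{th:homog-first-weight-updates} and Corollary~\ref{th:ipllr-first-weight-updates}, Lemma~\ref{th:muP-first-weight-updates}) that \emph{both} IP-LLR and \muP\ have first effective updates equal to the common expression $-\eta\chi_0\, d\tildeh^l_0\transpose{(\tildex^{l-1}_0)}/m$, so that equality at $t=1$ is immediate from $\eta_\HP(0)\chi_0^\HP=\eta\chi_0^\IP$. You instead compute the power-of-$m$ ratio $m^{\alpha_l},m^{\delta_l}$ between the two models directly and then check that these exponents cancel against $\gamma_l(p)$. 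This is exactly the same computation---your $\alpha_l$ is the exponent of the paper's $\gamma_{f,l}$ for IP, and your $\delta_l$ encodes the paper's $\gamma_{b,l}(\prod\gamma_{f,k})^{p-1}$---just done inline rather than via a reusable intermediate object. The paper's packaging is more modular (the tilde lemmas are reused for the infinite-width analysis), while yours is more self-contained; substantively they are the same argument.
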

The proof, presented in Appendix~\ref{app:proof-equivalence-ip-muP-finite}, simply shows inductively that the effective weight matrices for both models are equal for all $t \geq 1$. Since the Tensor Program is not needed here as we consider only finite-width networks, we can work with any positively homogeneous activation function (not necessarily smooth, so that $p=1$ is not precluded).

\subsubsection{Infinite-Width Equivalence}\label{sec:ipllr-muP-infinite-width}

Similarly to HP, we now consider another hybrid parameterization where the initial weights $W^l(0)$ are simply replaced by 0 in the first update of the intermediate layers. We thus consider the following \emph{hybrid parameterization with zero re-initialization} (HPZ): we train with \muP\ all along, but simply replace, for all intermediate layers $l \in [2, L]$, the first update $W^l(1) = W^l(0) + \Delta W^l(1)$ by $W^l(1) = \Delta W^l(1)$. In other words, this simply consists in using the weight pre-factors of \muP\ for the intermediate layers in the initial forward and backward passes, and then forgetting the contribution of the initial weights of the intermediate layers in any subsequent update. As already discussed in Section~\ref{sec:trivial-deep-ip}, the contribution of the initial weights of the intermediate layers $m^{-1}U^l$ vanishes as $m \rightarrow \infty$ for IP, so that HPZ is simply the infinite-width equivalent of HP.

\begin{theorem}[HPZ and IP-LLR are equivalent]\label{th:hpz-ipllr-equivalence}
Consider the IP-LLR and HPZ parameterizations with a $p$-homogeneous activation function $\sigma$ with $p \geq 2$, and with no bias terms except at the first layer. Let us sub/super-script the variables of each models with IP and HPZ respectively. Assume that the training routine is the same for both parameterizations, and assume further that the loss $\ell$ satisfies Assumption~\ref{ass:loss}. Then, one has:
\begin{align*}
    \forall t \geq 0, \quad \forall \xi \in \mathbb{R}^d, \quad \lim_{m \rightarrow \infty}f^\HPZ_t(\xi) = \lim_{m \rightarrow \infty} f^\IP_t(\xi) \quad \text{almost surely}.
\end{align*}
\end{theorem}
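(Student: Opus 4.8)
The plan is to deduce the statement from the finite-width equivalence already proved in Proposition~\ref{th:finite-equivalence-ipllr-hp}, combined with the elementary observation underlying Proposition~\ref{th:trivial-ip-mf-lr}: multiplying a vector with $\Theta(1)$ coordinates by $m^{-1}U^l$ produces a vector with $\Theta(m^{-1/2})$ coordinates, i.e.\ a contribution that vanishes in the infinite-width limit. Recall that HPZ differs from the hybrid parameterization HP of Section~\ref{sec:ipllr-muP-finite-width} only in that the residual summand $m^{-1}U^l$ kept by HP in the effective intermediate weights $W^l(t)$ ($l\in[2,L]$, $t\geq 1$) is dropped by HPZ. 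By Proposition~\ref{th:finite-equivalence-ipllr-hp}, for every finite $m$ one has $f^{IP}_t(\xi)=f^{HP}_t(\xi)$ for all $t\geq 1$ with the prescribed base-learning-rate schedule, so it suffices to show $\lim_m f^{HP}_t(\xi)=\lim_m f^{HPZ}_t(\xi)$ for all $t\geq 0$. The case $t=0$ is immediate: the first forward pass of both HP and HPZ is exactly that of \muP, and $f^{IP}_0(\xi)\to 0$ by Theorem~\ref{th:non-trivial-ipllr}$(i)$ while $f^{HPZ}_0(\xi)=f^{\muP}_0(\xi)\to 0$ because the output layer of \muP\ is initialized with scale $1/m$.

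Next I would dispose of the $t=0$ learning-rate correction in Proposition~\ref{th:finite-equivalence-ipllr-hp}. Since $f^{HP}_0(\xi_0)=f^{HPZ}_0(\xi_0)$ at finite width and both tend to $0$, the correction factor $\partial_2\ell(y_0,f^{IP}_0(\xi_0))/\partial_2\ell(y_0,f^{HP}_0(\xi_0))$ tends to $1$ by continuity of $\partial_2\ell(y_0,\cdot)$ (Assumption~\ref{ass:loss}) whenever $\partial_2\ell(y_0,0)\neq 0$; equivalently, HP and HPZ have the same limiting scalar gradient signal $\eta\,\partial_2\ell(y_0,0)$ at the first step, so their first updates $\Delta W^l(1)$ have the same limiting $Z$-variables, and hence so do $h^l_1,x^l_1$ and $f_1$. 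The degenerate case $\partial_2\ell(y_0,0)=0$ is treated separately: the first-step signal then vanishes in the limit, the intermediate forward pass collapses to $\sigma(0)=0$, and since $p\geq 2$ forces $\sigma'(0)=0$ the backward pass vanishes too, so every subsequent update vanishes and $\lim_m f^{IP}_t(\xi)=\lim_m f^{HPZ}_t(\xi)=0$ for all $t$.

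For $t\geq 1$ I would then run an induction on $t$ inside a single Tensor Program computing the \muP-type forward and backward passes of HP and of HPZ on the shared Gaussians $(U^l)_l$ and $v^1$. The inductive hypothesis is that the $Z$-variables of all vectors $h^l_s,x^l_s$ and of the backward vectors $\nabla_{h^l}f_s$ coincide for HP and HPZ for $s\leq t$; this holds at $s=0,1$ by the preceding paragraph. For the step $t\to t+1$, the only discrepancy between the two is the extra summand $m^{-1}U^l$ in $W^l(t+1)=W^l(t)+\Delta W^l(t+1)$ for $l\in[2,L]$; applied to $x^{l-1}_{t+1}$, whose coordinates are $\Theta(1)$ by the inductive control of the forward pass, it yields $m^{-1}U^l x^{l-1}_{t+1}$, a vector whose $Z$-variable is $0$ — this is exactly the computation performed in the proof of Proposition~\ref{th:trivial-ip-mf-lr} in Appendix~\ref{app:trivial-ip}. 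Hence this term leaves the limiting forward pass (and, propagating backward, the limiting backward pass) unchanged, the inductive hypothesis is restored at $t+1$, and in particular $\lim_m f^{HP}_{t+1}(\xi)=\lim_m f^{HPZ}_{t+1}(\xi)$. Combining with Proposition~\ref{th:finite-equivalence-ipllr-hp} yields $\lim_m f^{IP}_t(\xi)=\lim_m f^{HPZ}_t(\xi)$ for all $t$.

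The main obstacle is the rigorous bookkeeping of this last step: one must write both parameterizations' SGD trajectories as valid Tensor Programs, check that the pseudo-Lipschitz regularity required by Theorem~7.4 of~\citet{yang2020featureLearning} holds (which is why $p\geq 2$ is imposed here, unlike in the finite-width Proposition~\ref{th:finite-equivalence-ipllr-hp}), and — most delicately — verify that the $m^{-1}U^l$ corrections do not accumulate across the $t$ successive gradient steps, i.e.\ that a $Z$-variable set to $0$ at one layer and time step keeps contributing nothing after all further matrix multiplications and applications of $\sigma,\sigma'$ at later steps. This is precisely the mechanism, and the same nested induction on $(t,l)$ through the forward and backward passes, that underlies Proposition~\ref{th:trivial-ip-mf-lr}; no individual step is deep, but assembling them correctly within the formalism is where the work lies.
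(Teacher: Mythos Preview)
Your approach is correct in outline and you have correctly identified the heart of the argument: one must show, inside a Tensor Program induction on $(t,l)$, that the residual contribution $m^{-1}U^l\,x^{l-1}_t$ (and its transpose in the backward pass) has $Z$-variable equal to $0$ at every layer and every time step. This is precisely what the paper isolates and proves separately as \autoref{th:ipllr-initial-weight-vanish-t-geq-1}, and it is indeed where all the bookkeeping lies.

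Where your route differs from the paper's is the detour through HP via Proposition~\ref{th:finite-equivalence-ipllr-hp}. The paper does \emph{not} go through HP: it compares IP-LLR and HPZ directly, computing the $Z$-variables of each (Corollary~\ref{th:z-forward-ipllr-t-0}/\ref{th:z-backward-ipllr-t-0} for IP-LLR, Theorems~\ref{th:z-forward-hpz-t}/\ref{th:z-backward-hpz-t} for HPZ) and showing by induction that the resulting recursions are identical once the vanishing result above is in hand. The learning-rate issue you work around simply does not arise in that setup: both parameterizations have $f_0(\xi_0)\to 0$, hence both have $\scalarlim{\chi}_0=\partial_2\ell(y_0,0)$, and the first updates already match in the limit by Corollary~\ref{th:ipllr-first-weight-updates} and Lemma~\ref{th:muP-first-weight-updates}. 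In particular the paper needs no case split on whether $\partial_2\ell(y_0,0)$ vanishes.

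Your detour is harmless but buys nothing: since Proposition~\ref{th:finite-equivalence-ipllr-hp} makes HP literally equal to IP-LLR at finite width, the Tensor Program you would write for ``HP versus HPZ'' is the \emph{same} program as ``IP-LLR versus HPZ'', just relabeled. Meanwhile the detour costs you (a) the learning-rate correction factor $\chi_0^{\IP}/\chi_0^{\HP}$, which you must argue is a valid program scalar converging to $1$ and which forces you to include the IP first forward pass alongside the \muP\ one, and (b) the separate treatment of the degenerate case. One small imprecision: the vanishing of $m^{-1}U^l x^{l-1}_{t+1}$ here is not quite ``the computation in the proof of Proposition~\ref{th:trivial-ip-mf-lr}''---there the multiplied vectors themselves have $Z=0$ for $l\geq 3$, whereas here $Z^{x^{l-1}_{t+1}}$ is genuinely nonzero and the vanishing comes from the prefactor $\scalarlim\omega_l=0$ together with finiteness of $\hatZ$ and $\dotZ$; controlling the latter (the partial derivatives that feed $\dotZ$) across all time steps is exactly the nested induction of Appendix~\ref{sec:ipllr-induct-dynamics}.
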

\noindent
The proof, presented in Appendix~\ref{app:proof-equivalence-ip-muP-infinite}, proceeds by induction to show that the quantities appearing in the forward and backward passes at every layer are the same for both models at every time step in the infinite-width limit. We use the Tensor Program framework for this proof so we need smoothness of $\sigma$ ($p \geq 2$) for this result.
\\ \\
In essence, \autoref{th:hpz-ipllr-equivalence} shows that the IP-LLR parameterization is equivalent to \muP\ where we simply forget the initialization after the first forward and backward passes. Said differently, IP-LLR is the same as \muP, except that IP-LLR re-initializes the weights of the intermediate layers $l \in [2, L]$ at $t=1$  with $W^l(1) = \Delta W^l(1)$, \ie with the first update computed after the first forward-backward pass. It is not entirely clear whether forgetting the initial weights in one step is beneficial or detrimental to learning. On the one hand, it would seem like forgetting the random initialization could make the network learn faster and be more robust to perturbations (but this is only speculative at this point, and we leave this open for future work), on the other hand the large rank of the initial weight matrices with \iid Gaussian entries might increase the stability of the training dynamics.
In other words, while the randomness from initialization propagates to every layer at every times step for \muP, it is forgotten in one step of SGD for IP-LLR in the infinite-width limit.  
We explore the comparative performance of \muP\ and IP-LLR in Section~\ref{sec:numerical} but there appears to be no clear-cut indication towards one model or the other.

Another interesting difference between IP-LLR and \muP\ is that for any intermediate layer $l \in [2, L]$, while $(W^l_{jq}(t) - W^l_{jq}(0)) / W^l_{jq}(0) = \Theta(m^{-1/2})$ for \muP, so that the effective weights only move infinitesimally (in the infinite-width limit) relatively to their initial values, we have $(W^l_{jq}(t) - W^l_{jq}(0)) / W^l_{jq}(0) = \Theta(1)$ for IP-LLR so that the effective weights actually move in the infinite-width limit (see more details in Remark~\ref{remark:muP-first-weight-updates}).

\section{Alternative Methods for Escaping the Initial Stationary Point}\label{sec:alternative-ip}

As discussed in Section~\ref{sec:llr-learning}, using large initial learning rates in combination with a positively homogeneous activation function allows escaping the initial stationary point and induces stable learning. In this section, we introduce two alternatives to escape this initial stationary point and discuss the properties of the resulting models. In contrast to the setting of Section~\ref{sec:llr-learning}, in all this section, we consider IPs with bias terms at every layer. 

A first alternative to escape the initial stationary point, which we discuss in Section~\ref{sec:ip-non-centered}, is to simply initialize the weight matrices with \iid Gaussian distributions which are not-centered around $0$, as suggested by~\citet{nguyen2020rigorous}. This method is able to escape the stationary point without large initial learning rates and without any homogeneity assumption on the activation function. It turns out that the computations in that setting are well described within the Tensor Program framework and we show that, as highlighted in~\citep[Corollary 37]{nguyen2020rigorous}, a collapse phenomenon occurs, where all the individual entries in the weight matrix of an intermediate layer evolve by the same deterministic quantity in the infinite-width limit.

Another alternative is to remove the pre-factor $m^{-1}$ in front of the bias terms of layers $l \geq 2$. Indeed, as observed in Section~\ref{sec:trivial-deep-ip}, the vanishing of the forward pass and the weight updates in integrable parameterizations is mostly due to the multiplications by the weight matrices $m^{-1} U^l$ which results in pre-activations whose coordinates are $\Theta(m^{-1/2})$ for $l \in [2, L]$. Since the bias terms are decoupled from the input to the layer, re-scaling them appropriately avoids vanishing of the forward pass for IPs. Escaping the initial stationary point can then be achieved without any homogeneity assumption on the activation function $\sigma$. However, one issue which arises then is that the bias terms have the dominant contribution to the pre-activations, and since the input signal propagates through the network via the weight multiplications, the output of the trained network is only ``weakly'' dependent on its input and the training data. 
Let us now study in more details these two alternatives.

\subsection{Using Non-Centered i.i.d. Initialization}\label{sec:ip-non-centered}

In this section, we consider the following modified version of IPs which we call \emph{IP-non-centered} : the forward pass is computed exactly as in IPs but the weight matrices of layers $l \geq 2$ are initialized with $w^l_{jq}(0) = U^l_{jq} + u_l \sim \mathcal{N}(u_l, 1)$ \iid over $(j,q)$ with $u_l \neq 0$. This simply consists in setting $w^l(0) = U^l + u_l J$ for $l \in [2, L]$ and $w^{L+1}(0) = U^{L+1} + u_{L+1} \mathbf{1}$ where $J$ is the square matrix full of ones (whose variable size is the same as $U^2$ and thus equal to $m$) and $\mathbf{1}$ is the vector (of variable size equal to $m$) full of ones. As we will see shortly, the effect of this type of initialization is similar to removing the pre-factor in $m^{-1}$ on the bias terms in that the vanishing of the matrix multiplications $m^{-1} U^l x^{l-1}_t$ is offset by the appearance of an additional term in the expression of $h^l_t$ whose coordinates are all equal and \emph{depend} on the input data.

\subsubsection{First forward Pass}\label{sec:ip-non-centered-forward-0}

As for any IP, $h^1_0$ is a Gaussian vector with \iid coordinates following $\mathcal{N}(0,||\xi||^2+1)$ at any width, and for the second layer we have
\begin{align*}
    h^2_0 = m^{-1} (U^2 x^1_0 + v^2) + u_2 m^{-1}J x^1_0.
\end{align*}
The coordinates of $m^{-1} J x^1_0$ are all equal to $(1/m) \sum_{q=1}^m \sigma(h^{1}_{0,q})$, which converges almost surely, by the law of large numbers, towards $\mathbb{E}[\sigma(Z_1)]$ where $Z_1 \sim \mathcal{N}(0, ||\xi||^2 +1)$. When $\sigma = \relu$ this expectation is tractable as shown in Appendix~\ref{app:exp-relu} and equal to $\sqrt{(||\xi||^2 +1) / (2\pi)}$. On the other hand, the coordinates of $m^{-1} (U^2 x^1_0 + v^2)$ simply converge to $0$. The term $u_2 m^{-1}J x^1_0$ thus offsets the vanishing of the term $m^{-1}( U^2 x^1_0 + v^2)$. In the infinite-width limit, we thus have that $x^2_{0,j} \simeq \sigma(u_2 \mathbb{E}[\sigma(Z_1)])$ for any $j \in [1, m]$. We thus already see that the coordinates of $h^2_0$ all converge almost surely to the same deterministic constant $C_2 = u_2 \mathbb{E}[\sigma(Z_1)]$ and the coordinates of $x^2_0$ towards $\sigma(C_2)$. 

\paragraph{Degeneracy in intermediate layers.}
An easy induction gives that for any $l \in [2, L]$, for any coordinate $j$, and for large $m$
\begin{align}
    h^l_{0, j} &\simeq u_l \sigma \left(u_{l-1} \sigma \left(\ldots \sigma \left(u_2 \mathbb{E} \left[\sigma \left(Z_1 \right) \right] \right) \right) \right) =: C_l \label{eq:C_l},\\
    x^l_{0, j} &\simeq \sigma \left(u_l \sigma \left(u_{l-1} \sigma \left(\ldots \sigma \left(u_2 \mathbb{E} \left[\sigma \left(Z_1 \right) \right] \right) \right) \right) \right) = \sigma(C_l), \nonumber
\end{align}
so that the coordinates of the (pre-)activations of any intermediate layer are all equal to the same deterministic constant for large $m$. Finally, the output of the first forward pass is $f_0(\xi) = m^{-1}( \transpose{{(U^{L+1})}} x^L_0 +v^{L+1}) + u_{L+1} (1/m) \sum_{q=1}^m x^L_{0,q}$ and converges almost surely towards the constant $u_{L+1} \sigma \left(u_{L} \sigma \left(\ldots \sigma \left(u_2 \mathbb{E} \left[\sigma \left(Z_1 \right) \right] \right) \right) \right)$ (this is made rigorous within the framework of the Tensor Program). 

If $\sigma = \relu$ we see that to avoid vanishing of the first forward pass, one must set $u_l > 0$ for $l \in [2, L]$,  and we then get that the coordinates of $h^l_0$ are roughly all equal to $u_l u_{l-1} \ldots u_2 \sqrt{(||\xi||^2 +1) / (2\pi)}$. This suggests that to avoid vanishing or explosion with the depth $L$, one should set $u_l = 1$ for $l \in [2, L]$.

\subsubsection{First Backward Pass}\label{sec:ip-non-centered-backward-0}

We show here that the same degeneracy as in the first forward pass is also at play in the first backward pass. We have $\nabla_{x^L} f_0(\xi_0) = W^{L+1}(0) = m^{-1}(U^{L+1} + u_{L+1} \mathbf{1})$, so that the coordinates of $m \nabla_{x^L} f_0(\xi_0)$ are not deterministic in the infinite-width limit and simply follow $\mathcal{N}(u_{L+1}, 1)$ i.i.d. We have $m \nabla_{h^L} f_0(\xi_0) = m \nabla_{x^L} f_0(\xi_0) \odot \sigma'(h^L_0)$ and as shown in Section~\ref{sec:ip-non-centered-forward-0} the coordinates of $h^L_0$ are roughly all equal to the same constant for large $m$, so that the coordinates of $m \nabla_{h^L} f_0(\xi_0)$ are in $\Theta(1)$. 

\paragraph{Degeneracy for layers $l \in [1, L-1]$.}
Using the equations of backpropagation, we have
\begin{align*}
    m \nabla_{x^{L-1}} f_0(\xi_0) &=  m^{-1} \transpose{{(U^L)}} (m \nabla_{h^L} f_0(\xi_0)) + u_{L}m^{-1} J (m \nabla_{h^L} f_0(\xi_0)).
\end{align*}
The multiplication by $m^{-1} \transpose{{(U^L)}}$ yield a vector whose coordinates converge to $0$, and the coordinates of $m^{-1} J (m \nabla_{h^L} f_0(\xi_0))$ are all equal to $(1/m) \sum_{q=1}^m d\tildeh^L_{0,q}$ where $d\tildeh^L_{0} := m\nabla_{h^L} f_0(\xi_0) = m \nabla_{x^L} f_0(\xi_0) \odot \sigma'(h^L_0)$. We thus have that $(1/m) \sum_{q=1}^m d\tildeh^L_{0,q}$ converges almost surely to the constant $u_{L+1} \sigma'(C_L)$, where $C_L$ is defined in Equation~\eqref{eq:C_l}.
Because $m \nabla_{h^{L-1}} f_0(\xi_0) = m \nabla_{x^{L-1}} f_0(\xi_0) \odot \sigma'(h^{L-1}_0)$, we get that the coordinates of $m \nabla_{h^{L-1}} f_0(\xi_0)$ are roughly all equal to the constant $u_{L+1} u_L \sigma'(C_L) \sigma'(C_{L-1})$ for large $m$.
An easy induction then yields that for any $l \in [1, L-1]$, and for any coordinate $j$
\begin{align*}
    d\tildex^{l}_{0,j} &\simeq u_{L+1} \ldots u_{l+1} \sigma'(C_L) \ldots \sigma'(C_{l+1}), \\
    d\tildeh^{l}_{0,j} &\simeq u_{L+1} \ldots u_{l+1} \sigma'(C_L) \ldots \sigma'(C_{l}),
\end{align*}
as $m \rightarrow \infty$, where $d\tildex^{l}_{0} := m \nabla_{x^l} f_0(\xi_0)$, $d\tildeh^{l}_{0} := m \nabla_{h^l} f_0(\xi_0)$, and $C_l$  is defined in Equation~\eqref{eq:C_l}. Note that all the $C_l$ depend on $\xi_0$ through $Z_1 \sim \mathcal{N}(0, ||\xi_0||^2+1)$, and the coordinates of $m \nabla_{h^l} f_0(\xi_0)$ are in $\Theta(1)$ for all $l \in [1, L]$. Again, the products of $u_l$ which appear in the backward pass strongly suggest setting $u_l = 1$ for any $l \in [2, L]$ to avoid issues with increasing depth $L$.

\subsubsection{First parameter updates}\label{sec:ip-non-centered-delta-W-1}

Now that we have described the first forward and backward passes, we can give the formulas for the first weight updates of IP-non-centered. We have:

\begin{align*}
    \Delta W^1(1) &= - \eta m^{-(1 + c_1)} \partial_2 \ell(y_0, f_0(\xi_0)) (m \nabla_{h^1} f_0(\xi_0)) \transpose{\xi_0} \\
    \Delta B^1(1) &= - \eta m^{-(1 + c_1)} \partial_2 \ell(y_0, f_0(\xi_0)) (m \nabla_{h^1} f_0(\xi_0)) \\
    \Delta W^l(1) &= - \eta m^{-(2 + c_l)} \partial_2 \ell(y_0, f_0(\xi_0)) { (m \nabla_{h^l} f_0(\xi_0)) \transpose{{(x^{l-1}_0)}} \over m}, && l \in [2, L] \\
    \Delta B^l(1) &= - \eta m^{-(3 + c_l)} \partial_2 \ell(y_0, f_0(\xi_0))  (m \nabla_{h^l} f_0(\xi_0)) && l \in [2, L]\\
    \Delta W^{L+1}(1) &= - \eta m^{-(1 + c_{L+1})} \partial_2 \ell(y_0, f_0(\xi_0)) {x^L_0 \over m}, \\
    \Delta B^{L+1}(1) &= - \eta m^{-(2 + c_{L+1})} \partial_2 \ell(y_0, f_0(\xi_0)).
\end{align*}

\paragraph{Choice of learning rates and update contributions.}
To ensure non-vanishing and non-exploding updates for both the weights and the bias terms, one must choose different learning rate exponents $c_l$ for the weights and for the bias terms for layers $l \geq 2$.
To make things simpler, we simply choose $c_l = -2$ for $l \in [2, L]$ and $c_1 = c_{L+1} = -1$ (which are the learning rates of Naive-IP) for both the weights and the bias terms, which implies that the updates of the bias terms contribute vanishingly to the second forward pass as $m \rightarrow \infty$ for layers $l \in [2, L+1]$, but this is offset by the non-centered initialization. 

\paragraph{Degeneracy of the weight updates.}
With the choice of learning rate exponents of the Naive-IP, 
all the entries of $\Delta w^l(1)$ are equal to the same deterministic constant for large $m$ for $l \in [3, L-1]$. In other words, for those layers $l \in [3, L-1]$, there is a \emph{collapse to a single parameter per layer} (since the contribution of the centered initialization vanishes for large $m$) which evolves by a deterministic quantity. We recover a result proved by \citet{nguyen2020rigorous} (see \citealp[Corollary 37]{nguyen2020rigorous}). In Section~\ref{sec:trivial-deep-ip} and Proposition~\ref{th:trivial-ip-mf-lr}, we have additionally shown that this translation is $0$ when the \iid initialization is centered around $0$. In fact, a slightly more precise statement can be made: although the coordinates of $\Delta w^2(1)$ dot not become equal to deterministic constants for large $m$, the coordinates of $\Delta W^2(1) x^1_1$ all become equal to the same deterministic constant in the large-width limit because the term $\transpose{{(x^{1}_0)}} x^1_1 /m$ converges to a finite expectation.

\subsubsection{Collapse to Deterministic Dynamics}
Repeating the same calculations as in Sections~\ref{sec:ip-non-centered-forward-0} and~\ref{sec:ip-non-centered-backward-0} shows that the choice of learning rate exponents as in Naive-IP (see Section~\ref{def:ip}), \ie $c_1 = c_{L+1} = -1$, and $c_l = -2$ for $l \in [2, L]$ leads to non-vanishing and non-exploding updates for the weights at any time step for IP-non-centered, and deterministic dynamics as summarized in the following informal theorem:

\begin{theorem}[Informal]\label{th:ip-non-centered-deterministic-informal}
Consider IP-non-centered with the Naive-IP learning rates at every time step, and let $t \geq 0$ and $\xi \in \mathbb{R}^d$ be an input to the network. Then, one has that:
\begin{enumerate}[(i)]
    \item for any $l \in [2, L-1]$, the coordinates of $h^l_t$ (\textit{resp}. $x^l_t$) all converge to the same deterministic constant,
    \item for any $l \in [2, L-1]$, the coordinates of $m \nabla_{x^l_t} f_t(\xi_t)$ (\textit{resp}. $m \nabla_{h^l_t} f_t(\xi_t)$) all converge to the same deterministic constant,
    \item for any $l \in [3, L-1]$, the entries of $(W^l(t) - W^l(0))$ all converge to the same deterministic constant.
\end{enumerate}
\end{theorem}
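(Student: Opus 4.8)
The plan is to prove claims (i)--(iii) simultaneously by induction on the time step $t$, each step being a forward induction over the layers $l=1,\dots,L$ followed by a backward induction over $l=L,\dots,1$, all run inside the Tensor Program of~\citet{yang2020featureLearning} (so we work under the smoothness hypotheses on $\sigma$ making \citep[Theorem~7.4]{yang2020featureLearning} applicable). The structural fact forcing the collapse is that, for an intermediate layer $l\in[2,L]$, every term entering $h^l_t=W^l(0)x^{l-1}_t+\sum_{s=1}^{t}\Delta W^l(s)x^{l-1}_t+B^l(t)$ is of one of three harmless types: (a) a \emph{centered} multiplication $m^{-1}U^l(\cdot)$, whose coordinates are $\Theta(m^{-1/2})$ and hence vanish (Appendix~\ref{app:tp-gaussian-mat-mul}); (b) the \emph{rank-one all-ones} term $u_l\,m^{-1}Jx^{l-1}_t=u_l\big(\mathbf 1^\intercal x^{l-1}_t/m\big)\mathbf 1$, whose coordinates are all equal and converge, by the law of large numbers (the $\Moment$ rule), to the same deterministic number $u_l\,\mathbb E[Z^{x^{l-1}_t}]$; and (c) an update contribution which, with the Naive-IP exponent $c_l=-2$, equals $-\eta\,\partial_2\ell(y_{s-1},f_{s-1}(\xi_{s-1}))\,(m\nabla_{h^l}f_{s-1})\,\big((x^{l-1}_{s-1})^\intercal x^{l-1}_t/m\big)$, a deterministic scalar times the backward vector $m\nabla_{h^l}f_{s-1}$, whose coordinates already converge to a common deterministic constant by the induction hypothesis at time $s-1\le t-1$ (for $l\le L-1$), the averaged inner product converging and being determined, within the forward pass, at layer $l-1$. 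The biases $B^l(0)=m^{-1}v^l$ and their updates are $O(m^{-1})$ for $l\ge 2$, hence negligible. Summing, the coordinates of $h^l_t$, and so of $x^l_t=\sigma(h^l_t)$, converge almost surely to one deterministic constant for $l\in[2,L-1]$: this is (i). At $l=1$ the standard centered initialization $W^1(0)=U^1$, $B^1(0)=v^1$ keeps Gaussian randomness that is never averaged away, and at $l=L$ the update terms carry $m\nabla_{h^L}f_{s-1}$, which has random coordinates; this is exactly why (i) is confined to $[2,L-1]$.

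The base case $t=0$ is the content of Sections~\ref{sec:ip-non-centered-forward-0}--\ref{sec:ip-non-centered-delta-W-1}; the only work is to turn the informal ``$\simeq$'' there into almost-sure statements. One registers $U^1,\dots,U^{L+1}$ as program matrices, writes every $m^{-1}J(\cdot)$ and $m^{-1}\mathbf 1\mathbf 1^\intercal(\cdot)$ contribution through $\Moment$ operations (the expectations $\mathbb E[\sigma(Z)]$ and $\mathbb E[\sigma'(Z)]$ being finite by pseudo-Lipschitzness of $\sigma$), and invokes the Master Theorem to obtain coordinate-wise and scalar convergence. One records that $h^l_0\to C_l$ (deterministic) for $l\in[2,L]$; that $m\nabla_{x^L}f_0=U^{L+1}+u_{L+1}\mathbf 1$ has i.i.d.\ $\mathcal N(u_{L+1},1)$ coordinates, hence is not deterministic; but that for $l\in[2,L-1]$ the backward recursion $m\nabla_{x^l}f_0=m^{-1}(U^{l+1})^\intercal(m\nabla_{h^{l+1}}f_0)+u_{l+1}\big(\mathbf 1^\intercal(m\nabla_{h^{l+1}}f_0)/m\big)\mathbf 1$ kills the random part (first summand vanishes, second has all-equal coordinates and converges), so that $m\nabla_{h^l}f_0=m\nabla_{x^l}f_0\odot\sigma'(h^l_0)$, with $\sigma'(h^l_0)$ deterministic, has coordinates converging to a common deterministic constant for $l\in[2,L-1]$. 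This gives (i)--(ii) at $t=0$, and (iii) at $t=0$ is vacuous.

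For the inductive step, assume (i)--(iii), the almost-sure convergence of $f_s(\xi)$ for every $\xi$, and the convergence of the averaged inner products $(x^{l-1}_s)^\intercal x^{l-1}_{s'}/m$ and $(m\nabla_{h^l}f_s)^\intercal(m\nabla_{h^l}f_{s'})/m$, for all $s,s'\le t-1$; Assumption~\ref{ass:loss} then gives convergence of $\partial_2\ell(y_s,f_s(\xi_s))$. Running the forward pass at time $t$ with the three-term decomposition above proves (i) at time $t$ on $[2,L-1]$ and gives convergence of $f_t(\xi)=m^{-1}(U^{L+1})^\intercal x^L_t+u_{L+1}\mathbf 1^\intercal x^L_t/m+\sum_s(\Delta W^{L+1}(s))^\intercal x^L_t+B^{L+1}(t)$ (every term converges by the Master Theorem, the bias vanishing). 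Then the backward pass: $m\nabla_{x^L}f_t=U^{L+1}+u_{L+1}\mathbf 1+m\sum_s\Delta W^{L+1}(s)$ still has random coordinates, but for $l$ decreasing from $L-1$ to $2$ the transpose-version of the three-term decomposition applied to $m\nabla_{h^{l+1}}f_t$ — whose \emph{averages} converge even though its coordinates may be random — forces the coordinates of $m\nabla_{x^l}f_t$, and therefore of $m\nabla_{h^l}f_t=m\nabla_{x^l}f_t\odot\sigma'(h^l_t)$ (the factor $\sigma'(h^l_t)$ being deterministic by (i)), to a common deterministic constant: this is (ii) at time $t$. Finally, the learnable-weight update reads $\Delta w^l(t+1)_{jq}=-\eta\,\partial_2\ell(y_t,f_t(\xi_t))\,(m\nabla_{h^l}f_t)_j\,(x^{l-1}_t)_q$ (effective weights differing by the fixed factor $m^{-a_l}$); for $l\in[3,L-1]$ both $(m\nabla_{h^l}f_t)_j$ (by (ii)) and $(x^{l-1}_t)_q$ (by (i), since $l-1\ge2$) converge to deterministic constants, so every entry of $\Delta w^l(t+1)$, hence of $W^l(t+1)-W^l(0)=\sum_{s\le t+1}\Delta W^l(s)$, converges to a common deterministic constant: this is (iii). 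For $l=2$ the factor $(x^1_t)_q$ stays random, so only the smeared update $\Delta W^2(t+1)x^1_{t+1}$ collapses (because $(x^1_t)^\intercal x^1_{t+1}/m$ converges); and for $l=L$ the factor $(m\nabla_{h^L}f_t)_j$ stays random; this is why (iii) is restricted to $[3,L-1]$.

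I expect the main obstacle to be the faithful implementation inside the Tensor Program: one must register all the vectors appearing in the forward and backward passes at every time step, encode the $J$- and $\mathbf 1$-contributions as $\Moment$ operations rather than ordinary matrix multiplications, and — the delicate point — control the $\widehat Z/\dot Z$ decomposition of the centered multiplications $m^{-1}U^l(\cdot)$, checking that their correction ($\dot Z$) terms stay at the same $\Theta(1)$ scale as the fresh-Gaussian part, so that the whole type-(a) contribution is genuinely $\Theta(m^{-1/2})$, while type-(b) and type-(c) are kept at $\Theta(1)$. Since the (pre-)activations and backward vectors collapse to multiples of $\mathbf 1$ in the limit, the inner products feeding these corrections are benign; but making this rigorous uniformly over all layers and all time steps, while tracking exactly which randomness survives — at layer $1$, at layer $L$, and in the output direction — so that the exclusion sets $[2,L-1]$ and $[3,L-1]$ come out right, is where essentially all the bookkeeping lies. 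The probabilistic content is the one already displayed informally in Sections~\ref{sec:ip-non-centered-forward-0}--\ref{sec:ip-non-centered-delta-W-1}.
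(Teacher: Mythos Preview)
Your proposal is correct and follows essentially the same route as the paper's proof (Appendix~\ref{app:ip-non-centered-formal}, Lemmas~\ref{th:ip-non-centered-forward-backward-0} and~\ref{th:ip-non-centered-induct}): induction on $t$, with a forward pass over $l$ followed by a backward pass, decomposing each pre-activation into the vanishing centered term $m^{-1/2}\hatW^l(\cdot)$, the all-ones term $u_l m^{-1}J(\cdot)$ handled via \texttt{Moment}, and the update contributions determined by the induction hypothesis. The only cosmetic difference is that for claim (iii) the paper's formal version works through the operator action $Z^{(W^l(t)-W^l(0))x}$ on an arbitrary program vector $x$ rather than through individual entries $\Delta w^l_{jq}$ (which is more natural in the Tensor Program, since the Master Theorem controls averages, not coordinates); your concern about the $\dotZ$ corrections is also slightly overstated, since the prefactor $\scalarlim{\omega}_l=0$ kills the entire $Z^{\hatW^l x}$ without needing a separate bound on $\dotZ$.
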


The rigorous version of this theorem, and its proof, formalized within the framework of the Tensor Program, are presented in Appendix~\ref{app:ip-non-centered-formal}.

\subsection{Not Scaling the Bias Terms}\label{sec:ip-bias}

In this section, we consider a version of IPs where we remove the pre-factor $1/m$ for the bias terms of layers $l \geq 2$. We thus consider the following computations in the forward pass:
\begin{equation}\label{eq:ip-bias}
    \begin{aligned}
        h^1_t &= w^1(t) \xi + b^1(t), \\
        h^l_t &= \left(m^{-1}w^l(t) x^{l-1}_t \right) + b^l(t), \qquad l \in [2, L] \\
        f_t(\xi) &= \left(m^{-1}\transpose{{(w^{L+1}(t))}} x^{L}_t \right) + b^{L+1}(t),
    \end{aligned}
\end{equation}
which in other terms simply means that $B^l(t) = b^l(t)$ for $l \in [1, L+1]$. We use the same initialization for the bias terms as in IPs: $b^l(0) = v^l$ for $l \in [1, L+1]$, where the entries of $v^l$ are \iid following $\mathcal{N}(0, 1)$. We call \emph{IP-bias} the modified version of the integrable parameterization described by Equations~\eqref{eq:ip-bias}. 

\paragraph{Gaussian first forward pass.}
For the first forward pass we have that the pre-activation of the first layer $h^1_0$ is the same as in IPs at initialization and thus has \iid Gaussian coordinates. On the other hand, $h^l_0 \simeq b^l(0) = v^l \sim \mathcal{N}(0, 1)$ as $m \rightarrow \infty$, so that the coordinates of the pre-activations of all the intermediate layers now behave as standard Gaussians in the large-width limit. Note that in contrast to IP-non-centered, the coordinates of $h^l_0$ \textbf{do not} depend on the input data for $l \geq 2$ in the large-width limit. Similarly, we have $f_0(\xi) \simeq v^{L+1} \sim \mathcal{N}(0, 1)$ (which does not depend on the input $\xi$) as $m \rightarrow \infty$.

\paragraph{First parameter updates.}
The first backward pass still vanishes as in integrable parameterizations because of the multiplications by $\transpose{{(W^l(0))}} = m^{-1/2} (m^{-1/2} U^l)$. Indeed, we have $\nabla_{x^L} f_0(\xi) = W^{L+1}(0) = m^{-1} U^{L+1}$, and $\nabla_{h^L} f_0(\xi) = m^{-1} U^{L+1} \odot \sigma'(h^L_0)$, so that the coordinates of $\nabla_{x^L} f_0(\xi)$ and $\nabla_{h^L} f_0(\xi)$ are in $\Theta(m^{-1})$. For $l \in [1, L-1]$, we have that $m \nabla_{x^l} f_0(\xi) = m^{-1/2} \left(m^{-1/2} \transpose{{(U^{l+1})}} \right) (m \nabla_{h^{l+1}} f_0(\xi))$, and an easy induction shows that the coordinates of $\nabla_{x^l} f_0(\xi)$ and $\nabla_{h^l} f_0(\xi)$ are in $\Theta(m^{-1} m^{-(L-l)/2})$ for any $l \in [1, L]$. Note that as in the forward pass, the backward pass at $t=0$ also does not depend on the first training input input $\xi_0$ except for $\nabla_{h^1} f_0(\xi_0)$.
We get the following formulas for the first weight and bias updates at $t=0$:
\begin{equation}
    \begin{aligned}\label{eq:ip-bias-updates-1}
    \Delta W^1(1) &= - \eta m^{-c_1} \partial_2 \ell(y_0 ,f_0(\xi_0)) (\nabla_{h^1} f_0(\xi_0)) \transpose{\xi_0}, \\
    \Delta B^1(1) &= - \eta m^{-c_1} \partial_2 \ell(y_0 ,f_0(\xi_0)) \nabla_{h^1} f_0(\xi_0), \\
    \Delta W^l(1) &= - \eta m^{-(2 + c_l)}  \partial_2 \ell(y_0 ,f_0(\xi_0)) {(m \nabla_{h^l} f_0(\xi_0)) x^{l-1}_0 \over m}, &&l \in [2, L], \\
    \Delta B^l(1) &= - \eta m^{-c_l}  \partial_2 \ell(y_0 ,f_0(\xi_0)) \nabla_{h^l} f_0(\xi_0)), &&l \in [2, L], \\
    \Delta W^{L+1}(1) &= -\eta m^{-(1 + c_{L+1})} \partial_2 \ell(y_0 ,f_0(\xi_0)) x^L_0 / m, \\
    \Delta B^{L+1}(1) &= -\eta m^{-c_{L+1}} \partial_2 \ell(y_0 ,f_0(\xi_0)).
\end{aligned}
\end{equation}

\paragraph{Initial learning rates.}
Because the backward pass vanishes in the infinite-width limit, the learning rate exponents $c_l$ still need to be chosen carefully in order to escape the initial stationary point. However, the two following points stand out: (1) because the first forward pass does not vanish as in the Naive-IP, the choice of $c_l$ does not require any homogeneity property, and needs not be as large (in absolute value) as for IP-LLR (see the values in the case $p=1$ in the comment after \autoref{th:formal-no-constant-lr}); (2) Because we removed the pre-factor $m^{-1}$ from the bias terms, $\Delta W^l(1)$ and $\Delta B^l(1)$ do not have compatible magnitudes, which suggests setting a separate learning rate exponent $\epsilon_l$ for the bias terms, different from $c_l$ for layers $l \in [2, L+1]$, in order to have non-trivial updates for both the weights and the bias terms. In light of the previous comment and of the update formulas of Equations~\eqref{eq:ip-bias-updates-1}, we set, at $t=0$, the learning rate exponents for the weights to 
\begin{equation}\label{eq:ip-bias-initial-lr-weights}
    \begin{aligned}
    c_1 &= -1 - (L-1) / 2 = -(L+1)/2, \\
    c_l &= -2 - (L-l)/2 = -(L- l + 4)/2, \\
    c_{L+1} &= -1,
\end{aligned}    
\end{equation}
and for the bias terms to
\begin{equation}\label{eq:ip-bias-initial-lr-bias}
    \begin{aligned}
        \epsilon_1 &= c_1 = -(L+1)/2, \\
        \epsilon_l &= -(L-l + 2)/2, \\
        \epsilon_{L+1} &= 0.
    \end{aligned}
\end{equation}
One may compare the learning rates exponents for the weights with those of IP-LLR with a degree of homogeneity $p=1$, which are $c_1 = c_{L+1} = -(L+1)/2$, and $c_l = -(L+2)/2$, where the absolute value of the exponent does not decrease with the layer $l$ for intermediate layers.
Even when the learning rates are appropriately scaled as in Equations~\eqref{eq:ip-bias-initial-lr-weights} and~\eqref{eq:ip-bias-initial-lr-bias}, $\Delta W^l(1)$ does not depend on the first training input for $l \in [2, L+1]$.
We thus get the following informal theorem, whose formal version within the framework of the Tensor Program is given in Appendix~\ref{app:ip-bias-formal}.

\begin{theorem}[Informal]\label{th:ip-bias-no-input-dependence-informal}
Consider the IP-bias as in Equations~\eqref{eq:ip-bias}, with the initial learning rates as in Equations~\eqref{eq:ip-bias-initial-lr-weights} and~\eqref{eq:ip-bias-initial-lr-bias}. Then, for any input $\xi \in \mathbb{R}^d$ to the network, $h^l_0(\xi), x^l_0(\xi)$ for $l \geq 2$, and $f_0(\xi)$ do not depend on $\xi$ in the limit $m \rightarrow \infty$. In addition, $\Delta W^l(1)$ also does not depend on the first training input $\xi_0$ in the infinite-width limit $m \rightarrow \infty$ for $l \in [3, L+1]$.
\end{theorem}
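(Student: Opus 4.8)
The plan is to track the magnitudes and limiting behavior of the forward pass, backward pass, and first weight update for \textbf{IP-bias} separately, exploiting the fact that the $m^{-1}$ pre-factor has been removed from the bias terms. The first claim — that $h^l_0(\xi), x^l_0(\xi)$ for $l\geq 2$ and $f_0(\xi)$ do not depend on $\xi$ in the limit — follows from the analysis already sketched before the statement: at $t=0$ one has $h^1_0 = w^1(0)\xi + b^1(0)$, which does depend on $\xi$, but for $l\in[2,L]$ the weight-multiplication term $m^{-1}w^l(0)x^{l-1}_0 = m^{-1/2}(m^{-1/2}U^l)x^{l-1}_0$ has coordinates of order $m^{-1/2}$ and vanishes in the limit, so $h^l_0 \simeq b^l(0) = v^l$, and similarly $f_0(\xi)\simeq v^{L+1}$. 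One formalizes this inside the Tensor Program: introduce $\ZHat$-vectors for $h^l_0$ at each layer, show via the $\ZMatMul$ and $\ZNonLin$ rules that for $l\geq 2$ the limiting $Z$-variable of $h^l_0$ equals that of $v^l$, which is $\mathcal{N}(0,1)$ and carries no $\xi$-dependence; this uses only Assumption~\ref{ass:smooth-act} (so \citep[Theorem 7.4]{yang2020featureLearning} applies) and $\sigma(0)$ finite. The key point is that since $\xi$ enters only through the first layer and the first layer's contribution is "diluted" by $m^{-1}$ when passed to layer $2$, no trace of $\xi$ survives downstream.

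Next I would analyze the first backward pass to pin down the magnitudes in the update formulas \eqref{eq:ip-bias-updates-1}. As noted in the text, $\nabla_{x^L}f_0(\xi) = W^{L+1}(0) = m^{-1}U^{L+1}$ has coordinates $\Theta(m^{-1})$, and by induction from $l=L$ down to $l=1$, each multiplication by $\transpose{(W^l(0))} = m^{-1/2}(m^{-1/2}U^l)$ contributes an extra $m^{-1/2}$, so that $\nabla_{h^l}f_0(\xi)$ has coordinates of order $m^{-1}m^{-(L-l)/2}$; moreover, since the pre-activations $h^l_0 \simeq v^l$ for $l\geq2$ do not depend on $\xi_0$, neither does $\sigma'(h^l_0)$, so the only $\xi_0$-dependence in the backward pass is through $\nabla_{h^1}f_0(\xi_0)$ (which still multiplies by $\transpose{(W^2(0))}$ only in the reverse direction — wait, more carefully: $\nabla_{x^1}f_0$ depends on $\transpose{(U^2)}$ and $\nabla_{h^2}f_0$, neither of which sees $\xi_0$, so $\nabla_{h^1}f_0(\xi_0)$ depends on $\xi_0$ only through $\sigma'(h^1_0)$). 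Plugging the chosen learning-rate exponents \eqref{eq:ip-bias-initial-lr-weights}–\eqref{eq:ip-bias-initial-lr-bias} into \eqref{eq:ip-bias-updates-1}, one checks that all update contributions are $\Theta(1)$: the exponent $c_l = -(L-l+4)/2$ exactly cancels the $m^{-2}$ from the formula together with the $m^{-(L-l)/2}$ from the backward pass, and similarly for the $\epsilon_l$. Then for $l\in[3,L+1]$ the update $\Delta W^l(1)$ is built from $m\nabla_{h^l}f_0(\xi_0)$ and either $x^{l-1}_0/m$ (for $l\in[3,L]$) or a constant (for $l=L+1$), none of which depend on $\xi_0$ in the limit, so $\Delta W^l(1)$ itself does not depend on $\xi_0$; only $\Delta W^2(1)$, through $x^1_0$, and $\Delta W^1(1)$, through $\nabla_{h^1}f_0(\xi_0)$ and $\transpose{\xi_0}$, retain a dependence.

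The main obstacle, as usual in this line of work, is making the informal "$\simeq$" rigorous: one must express the entire forward–backward–update computation at $t=0$ as a valid Tensor Program, verify that every nonlinearity invoked (in particular $\sigma$, $\sigma'$, and $\partial_2\ell(y_0,\cdot)$ composed with the scalar limit $f_0(\xi_0)\to v^{L+1}$) satisfies the pseudo-Lipschitz hypotheses of \citep[Theorem 7.4]{yang2020featureLearning}, and then read off the almost-sure limits of the relevant coordinates from the $Z$-variables the theorem produces. The delicate bookkeeping is in the backward pass, where the $m^{-1/2}$ factors compound with depth and one must carefully rescale (defining $d\tildeh^l_0 := m^{1+(L-l)/2}\nabla_{h^l}f_0(\xi_0)$ as in Section~\ref{sec:no-constant-lr}) so that the quantities entering the Tensor Program are $\Theta(1)$ and the \citep{yang2020featureLearning} machinery can be applied; handling the bias terms, which now sit at scale $\Theta(1)$ rather than $\Theta(m^{-1})$, requires a small modification of the standard IP program setup but introduces no genuinely new difficulty. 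The full details are deferred to Appendix~\ref{app:ip-bias-formal}.
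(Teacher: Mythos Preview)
Your proposal is correct and follows essentially the same approach as the paper's proof in Appendix~\ref{app:ip-bias-formal}: both arguments express the forward pass, backward pass, and first update as a Tensor Program, show via \ZNonLin/\ZMatMul\ that $Z^{h^l_0}=Z^{v^l}$ for $l\geq 2$ (so the $\xi$-dependence is killed by the vanishing $m^{-1/2}\hatW^l x^{l-1}_0$ term), track the compounding $m^{-1/2}$ factors in the backward pass via the rescaled $d\tildeh^l_0$, and conclude that for $l\in[3,L+1]$ the update $\Delta W^l(1)$ is built only from quantities whose limiting $Z$-variables do not see $\xi_0$. The only cosmetic difference is that the paper phrases the update claim in terms of $Z^{\Delta W^l(1)x}$ for an arbitrary program vector $x$, whereas you reason directly about the ingredients $m\nabla_{h^l}f_0(\xi_0)$ and $x^{l-1}_0$; both routes yield the same conclusion.
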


\paragraph{Learning rates at step $t \geq 1$.} 
Repeating the calculations of the forward pass with the updates of Equation~\eqref{eq:ip-bias-updates-1} and with the learning rates for the weights and bias terms as described in Equations~\eqref{eq:ip-bias-initial-lr-weights} and~\eqref{eq:ip-bias-initial-lr-bias}, we readily get that the coordinates of the second forward pass are in $\Theta(1)$. Then, it is direct to see that the choice of the Naive-IP learning rate exponents $c_1 = c_{L+1} = -1$, and $c_l = -2$ for $l \in [2, L]$ for the weights and $\epsilon_l = -1$ for $l \in [1, L]$ and $\epsilon_{L+1} = 0$ for the bias terms yields non-vanishing and non-exploding updates for the weights and the bias terms at $t=1$. 

\paragraph{Degeneracy at time $t \geq 1$.} 
It follows that the same choice of learning rate exponents as at $t=1$ also induce non-vanishing and non-exploding updates in the limit $m \rightarrow \infty$ at later time steps $t \geq 1$. With this choice of learning rates we thus get, for any $t \geq 1$, and for $l \in [2, L]$, 
\begin{align*}
    h^l_t \simeq v^l + \sum_{s=1}^{t-1} \Delta W^l(s) x^{l-1}_t.
\end{align*}
With the choice of learning rates prescribed above for $t \geq 1$, the products $\Delta W^l(s) x^{l-1}_t$ are finite and their numerical value strongly depends on the values of $\eta$ and $\partial_2 \ell(y_s, f_s(\xi_s))$. Typically, their product is rather small (\eg $\leq 10^{-2}$), and this means that the initial bias term $v^l$ has the dominant contribution to $h^l_t$. Therefore, in addition to \autoref{th:ip-bias-no-input-dependence-informal}, it can be also be argued that the forward pass in the intermediate layers only weakly depends on the training data and on the input to the network at time steps $t$.

\section{Numerical Experiments}\label{sec:numerical}

In this section we investigate numerically the behavior of the models previously introduced in this work, namely Naive-IP, IP-LLR, IP-bias, IP-non-centered and \muP. In contrast to the theoretical analysis carried out in Sections~\ref{sec:ip-trivial},~\ref{sec:llr-learning}, and~\ref{sec:alternative-ip}, we examine the performance of the models on a multi-class classification task (instead of a single output prediction) and we train them using mini-batch SGD (instead of single-sample SGD). In addition to these two points, we adopt the following slight modifications compared to our theoretical setting.

\paragraph{Standard deviation of initial weights.}
In our numerical experiments, we allow the initial Gaussian weight matrices $U^l$ and vectors $v^l$ to have entries drawn from $\mathcal{N}(0, \delta_l^2)$ where $\delta_l$ can be different from $1$ for $l \in [1, L]$, but is independent of $m$. As hinted in Remark~\ref{remark:non-trivial-ipllr} and explained more in detail in Remarks~\ref{remark:tilde-pos-finite-variance} and~\ref{remark:homog-first-forward}, this is to avoid issues (vanishing or explosion of the forward/backward pass) with the depth $L$. The choices of the standard deviation of the Gaussian depend on the activation function and are summarized in Table~\ref{tab:std-vs-act}.

\begin{table}[!hbtp]
    \centering
    \begin{tabular}{|c|c|c|c|c|}
        \hline 
        \textbf{activation} & \relu  & GeLU & ELU & tanh \\ \hline
         \textbf{init. std} & $\sqrt{2}$ & $2$ & $1$  & $1$ \\
        \hline 
    \end{tabular}
    \caption{Standard deviation $\delta_l$ of the initial Gaussian entries of layers $l \in [1, L]$ for different choices of activation functions.}\label{tab:std-vs-act}
\end{table}

\paragraph{Re-scaling the standard deviation of the first layer.} 
All the models we consider have $a_1 = 0$ so that, as mentioned in Section~\ref{sec:ip-bias}, the coordinates of $h^1_0$ follow $\mathcal{N}(0,||\xi||^2+1)$ and the variance is equal to $\sum_{k=1}^d \xi_k^2 + 1$. To avoid having too large a variance when the (fixed) dimension $d$ is large, we re-scale the standard deviation of the first layer's weights and bias term at initialization by dividing it by $\sqrt{d+1}$, that is we use the Gaussian law $\mathcal{N}(0, \delta_1^2 / (d+1))$ to initialize the entries of $w^1(0)$ and $b^1(0)$. 

\paragraph{Calibrating the initial base learning rates for IP-LLR.} 
As discussed in Section~\ref{sec:ipllr-modified-muP}, IP-LLR basically amounts to training with \muP\ but forgetting the initialization in the intermediate layers for the first update. We thus roughly have $W^l(1) \simeq \Delta W^l(1)$ for any $l \in [2, L]$, and the base learning rate $\eta$ directly influences the magnitude of $\Delta W^l(1)$ and thus that of $h^l_1$. Typical values for the learning rates, the initial loss derivative $\partial_2 \ell(y_0, 0)$, and the averaged inner products involved in the second forward pass are rather small (\eg $\leq 10^{-1}$), and this will cause the pre-activations of the second forward pass to be of small magnitude, and this effect compounds quickly with depth as the pre-activations of layer $(l-1)$ are then multiplied by $\Delta W^l(1)$. This will in turn lead to very small values for the second weight updates $\Delta W^l(2)$ and can considerably slow down learning in practice. To overcome this issue, we simply calibrate the initial values of the base learning rates $\eta_l$ (but cap them at a value of $500$ to avoid too large initial updates) of layers $l \in [2, L]$ at $t=0$, so that the magnitude of the pre-activation of the intermediate layers in the second forward pass is equal to $1$ on average over the second training batch. 

Note that this calibration results in base learning rates $\eta_l$ which \textbf{do not} depend on $m$ (they do depend on $L$ however) in the large-width limit as the coordinates of $h^l_1$ have non-zero and finite values for large $m$. In contrast, this is not possible with the Naive-IP as the coordinates of $h^l_1$ converge to zero as fast as some power of $m$, which would result in the base learning rate $\eta_l$ depending on $m$ which is prohibited (by definition of the base learning rate).
\\ \\
All the points above can be handled within the framework of the Tensor Program, but they would unnecessarily over-complicate the analysis and the formulas, which is why we used a simpler setting in our theoretical analysis.

\subsection{Experimental Setup}\label{sec:exp-setup}

We evaluate the performance of the different models on two datasets: MNIST\footnote{\url{http://yann.lecun.com/exdb/mnist/}}, containing 60,000 training samples and 10,000 test samples, and CIFAR-10\footnote{\url{https://www.cs.toronto.edu/~kriz/cifar.html}}, containing 50,000 training samples and 10,000 test samples. Both datasets consist in a $10$-class image classification task. Since we consider only fully-connected networks, we use gray-scale images which we also flatten for both datasets, which means the input dimension is $d=28 \times 28=784$ for MNIST and $d=32 \times 32 = 1024$ for CIFAR-10.

We train for $600$ SGD steps on MNIST and $1200$ steps on CIFAR-10 using a base learning rate $\eta = 0.01$, a batch-size $B=512$, and the cross-entropy loss, which satisfies Assumption~\ref{ass:loss}.
For each experiment, we run $N_{\text{trials}} = 5$ trials with different random initializations. The hyperparameters are summarized in Table~\ref{tab:hyper-param}.
\begin{table}[!hbtp]
    \centering
    \begin{tabular}{|l|c|c|c|c|c|c|c|}
        \hline 
        $L$ & $m$ & $d_{\text{MNIST}}$ & $d_{\text{CIFAR}}$ & $\ell$ & $\eta$ & $B$ & $N_{\text{trials}}$ \\ \hline
         $6$ & $1024$ & $784$  & $1024$ & cross-ent. & $0.01$ & $512$ & $5$ \\
        \hline 
    \end{tabular}
    \caption{Hyperparameters for training models.}\label{tab:hyper-param}
\end{table}

\subsection{Naive-IP is Trivial but Large Initial Learning Rates Induce Learning}\label{sec:exp-ip-vs-ipllr}

In this section we compare the numerical performance of Naive-IP and IP-LLR on MNIST for different activation functions. Essentially, the results we present corroborate Proposition~\ref{th:trivial-ip-mf-lr} and \autoref{th:non-trivial-ipllr}, except that numerical evidence tends to show that those results hold with less restrictive assumptions on the activation function than what we consider in the theoretical part, as already hinted in Point 2 of Remark~\ref{remark:non-trivial-ipllr}. 

As observed in Figure~\ref{fig:loss-ip-std-vs-llr}, while the loss (averaged over a batch) stays at its initial value for Naive-IP, we observe a decrease for IP-LLR whose strength depends on the choice of activation function. Similarly, Figure~\ref{fig:output-ip-std-vs-llr} depicts the evolution of the mean absolute output during training, that is, we plot for any step $t$ the quantity $(1/B) \sum_{i=1}^B (1/10) \sum_{k=1}^{10} \left|f_{k,t} \left(\xi_{t}^{(i)} \right) \right|$, where $\xi_t^{(i)}$ is the $i$-th sample in the batch at time $t$ and for any class label $k \in [1, 10]$, $f_{k,t}(\xi)$ is the $k$-th entry of the output of the model (logits for class $k$) on input $\xi$. We also observe here that there is no change in the output for the Naive-IP which stays equal to 0 during the course of training, whereas for IP-LLR, the mean absolute output value increases from its initial value, equal to 0, to some positive quantity whose value depends on the activation function. The solid line in both plots denotes the mean of the metric of interest over multiple (5) random trials while the shaded area represents a 95\% confidence interval around the mean. There is no shaded area for Naive-IP since the output of the network is equal to the deterministic constant $0$ at any time step for large $m$, as stated in Proposition~\ref{th:trivial-ip-mf-lr}.
\begin{figure}[t]
\centering
    \subfloat[Naive-IP]{{\includegraphics[width=0.45\linewidth]{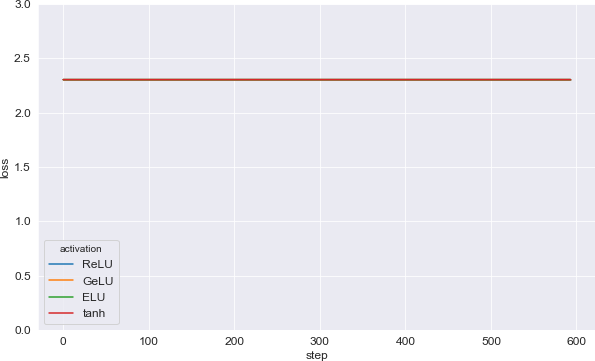}}}
    \qquad
    \subfloat[IP-LLR]{{\includegraphics[width=0.45\linewidth]{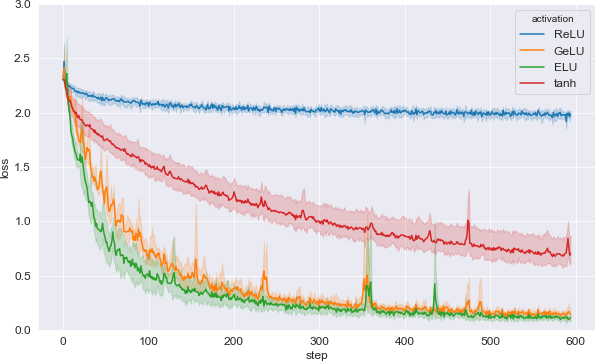} }}
    %\qquad
    \caption{Loss \textit{vs.}~number of optimization (SGD) steps on MNIST for different activation functions.
    }
\label{fig:loss-ip-std-vs-llr}
\end{figure}
\begin{figure}[t]
\centering
    \subfloat[Naive-IP]{{\includegraphics[width=0.45\linewidth]{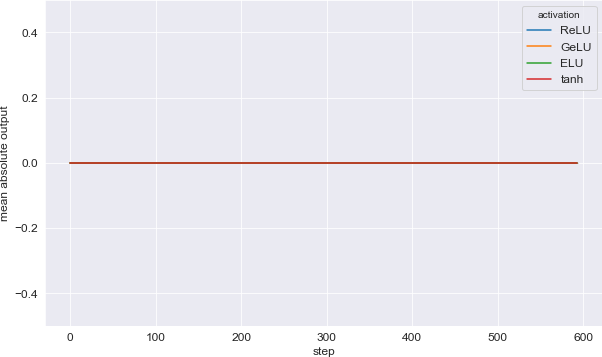}}}
    \qquad
    \subfloat[IP-LLR]{{\includegraphics[width=0.45\linewidth]{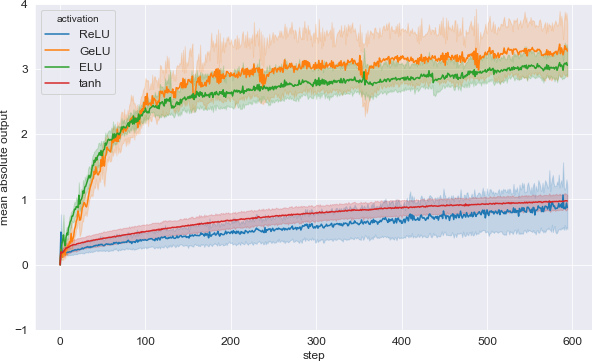}}}
    %\qquad
    \caption{Mean absolute output \textit{vs.}~number of optimization (SGD) steps on MNIST for different activation functions.
    }
\label{fig:output-ip-std-vs-llr}
\end{figure}

Finally, we show in Table~\ref{tab:acc-ip-std-vs-llr} the test accuracy (averaged over 5 random runs) \emph{at the end of training} for the Naive-IP and IP-LLR for different activation functions. The Naive-IP has the same test accuracy of $0.098$ independently of the activation function, which is roughly equal to that of random guessing which would yield an accuracy of $0.10$ as there are 10 classes. In contrast, IP-LLR has higher-than-chance test accuracy for every choice of activation function, and while \relu{}  appears to perform poorly, all other activations perform relatively well with ELU and GeLU achieving an error lower than 5\%. 
\begin{table}[!hbtp]
    \centering
    \begin{tabular}{|c|c|c|c|c|}
        \hline 
        \backslashbox{model}{activation} &
        \relu & GeLU & ELU & tanh \\ \hline
         Naive-IP & $0.098$ & $0.098$ & $0.098$ & $0.098$ \\
        \hline 
        IP-LLR & $0.113$ & $0.956$ & $0.964$ & $0.932$ \\
        \hline 
    \end{tabular}
    \caption{Test accuracies on MNIST for various activation functions.}\label{tab:acc-ip-std-vs-llr}
\end{table}

\subsection{IP-LLR \textit{vs.}~\muP}

We compare the numerical performance of IP-LLR and \muP\ on both MNIST and CIFAR-10, and investigate the reasons behind the differences observed between different models and different non-linearities.

As observed in Tables~\ref{tab:acc-all-mnist} and~\ref{tab:acc-all-cifar-10}, the performance, as measured by the accuracy on the test set, is consistent across activation functions for \muP\ whereas the gaps are larger for IP-LLR. However, the best test accuracy for \muP\ and IP-LLR are comparable: the former achieves $0.975$ test accuracy on MNIST and $0.419$ test accuracy on CIFAR-10 with $\sigma = \text{GeLU}$ while the latter achieves $0.964$ test accuracy on MNIST and $0.383$ test accuracy on CIFAR-10 with $\sigma = \text{ELU}$. 
\begin{table}[t]
    \centering
    \begin{tabular}{|c|c|c|c|c|}
        \hline 
        \backslashbox{model}{activation} &
        \relu & GeLU & ELU & tanh \\ \hline
        IP-LLR & $0.113$ & $0.956$ & $0.964$ & $0.932$ \\ \hline
        \muP & $0.954$ & $0.975$ & $0.928$ & $0.905$ \\
        \hline 
    \end{tabular}
    \caption{Test accuracies on MNIST for various activation functions.}\label{tab:acc-all-mnist}
\end{table}

\begin{table}[t]
    \centering
    \begin{tabular}{|c|c|c|c|c|}
        \hline 
        \backslashbox{model}{activation} &
        \relu & GeLU & ELU & tanh \\ \hline
        IP-LLR & $0.100$ & $0.329$ & $0.383$ & $0.284$ \\ \hline
        \muP & $0.407$ & $0.419$ & $0.356$ & $0.304$ \\
        \hline 
    \end{tabular}
    \caption{Test accuracies on CIFAR-10 for various activation functions.}\label{tab:acc-all-cifar-10}
\end{table}

\paragraph{Performance and rank collapse.}
The consistency of \muP\ across activation functions and the lack of consistency for IP-LLR can be explained by (or at least correlated with) the diversity, measured in terms of rank, of the {(pre-)activations} at different layers on large batches of samples. Indeed, as shown in~\citep{Daneshmand2020BNRank}, the rank of the family of pre-activations (considered over large batches) has a dramatic impact on the observed performance of models. In fact, the authors argue that this might be the reason behind the empirical success of batch normalization: it allows the rank of these families of pre-activations to remain large even when the number of hidden layers $L$ is large, whereas they show there is a collapse in the rank without the batch-normalization operation, which coincides with poor accuracy. This problem is exacerbated in IP-LLR because the contribution of the initial weight matrices (which are full-rank) vanishes after the first gradient step, thereby lowering considerably the rank of the family of pre-activations. Two effects are then at play: (1) the choice of the activation function $\sigma$ can induce large differences in the rank of the family of vectors $\left(W^l(1)\sigma(h) \right)_{h \in \mathcal{S}}$, where $\mathcal{S}$ is a large set of vectors; (2) the impact of the activation function on (1) is compounding with depth and can lead to dramatically small rank (equal to $1$ in the worst case) towards the last layers of the network. 

In Figures~\ref{fig:ranks-x-h} we plot  the rank (the $y$-axis is in log-scale) of the families $\left(h^l_1(\xi) \right)_{\xi \in \mathcal{S}}$ and $\left(x^l_1(\xi) \right)_{\xi \in \mathcal{S}}$ for $l \in [1, L]$, where $\mathcal{S}$ is the set comprised of the first 5,000 training inputs of MNIST. The numerical \quoting{rank} is computed as in~\citep{Daneshmand2020BNRank} with \texttt{torch.matrix_rank()} which regards singular values below $\sigma_{\text{max}} \times m \times 10^{-7}$ as zero. We observe that for IP-LLR, the rank of those families with $\sigma =  \relu$ is one order of magnitude smaller than for other activation functions after layer $l=4$ and even collapses to 1 in the last layers, which might explain its poor performance, whereas for \muP\ all activation functions induce comparable ranks which remain at least on the order of $10^2$ at any layer. We believe the latter fact is due to the non-vanishing contributions of the initial Gaussian matrices which are full-rank (with probability 1). In contrast, it would seem like IP-LLR is much more sensitive to the choice of activation function and we identify the vanishing of the contribution of the initial weights for intermediate layers as a probable cause for this effect.

Whether the difference between \relu{} and other activation functions for IP-LLR is actually due to the difference between the homogeneity property with $\sigma=\relu$ and the effective linearization property for other activation functions (as highlighted in Remark~\ref{remark:non-trivial-ipllr}) or to other inherent characteristics of the activation functions is still an open question and we leave it for future work.

\begin{figure}[t]
\centering
    \subfloat[IP-LLR]{{\includegraphics[width=0.45\linewidth]{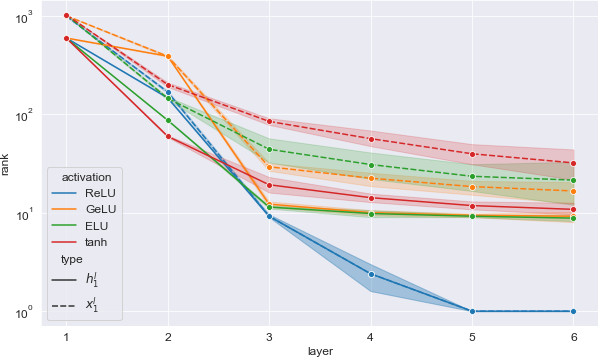}}}
    \qquad
    \subfloat[\muP]{{\includegraphics[width=0.45\linewidth]{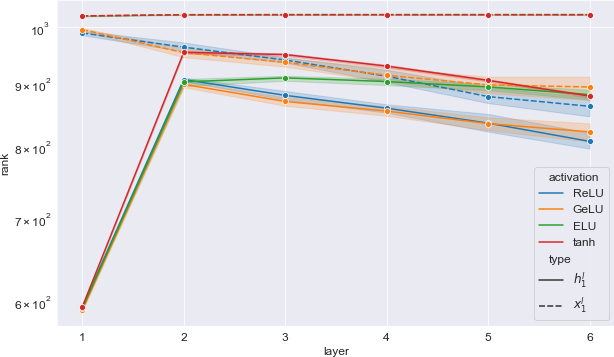}}}
    %\qquad
    \caption{Ranks (log-scale) of the families of pre-activations ($h^l_1(\xi)$) and activations ($x^l_1(\xi)$) at time $t=1$ on MNIST \textit{vs.} layer $l$ for different activation functions.
    }
\label{fig:ranks-x-h}
\end{figure}

\subsection{Learning is Degenerate for IP-bias and IP-non-centered}\label{sec:exp-ip-alternatives}

In this section we show numerically that IP-non-centered and IP-bias (see Sections~\ref{sec:ip-non-centered} and~\ref{sec:ip-bias} respectively) are able to escape the initial stationary point but that the resulting dynamics do not seem effective as observed through the evolution of the training loss.

Figure~\ref{fig:output-alternatives} shows that both models are indeed able to escape the initial stationary point as the magnitude of the output evolves non-trivially during training but in contrast Figure~\ref{fig:loss-alternatives}, depicting the training losses on MNIST and CIFAR-10 for both models, shows that learning is very slow for those models and that the dynamics are not effective in reducing the training loss.

Additionally, as summarized in Table~\ref{tab:acc-best-all}, the slow decrease of the training loss translates into poor test accuracy at the end of training comparatively with IP-LLR and \muP, even with the best choice of activation function.

\begin{table}[h]
    \centering
    \begin{tabular}{|c|c|c|c|c|}
        \hline 
        \backslashbox{dataset}{model} &
        IP-LLR & \muP & IP-bias & IP-non-centered \\ \hline
         MNIST & $0.964$ & $0.975$ & $0.113$ & $0.209$ \\
        \hline 
        CIFAR-10 & $0.383$ & $0.419$ & $0.100$ & $0.154$ \\
        \hline 
    \end{tabular}
    \caption{Test accuracies (averaged over 5 random runs) at the end of training on MNIST and CIFAR-10. For each model, we show the maximum (averaged) accuracy over all activation functions. For each model, the activation function which performs best is the same for both datasets and the pairing model $\rightarrow$ activation is the following: IP-LLR $\rightarrow$ ELU, \muP $\rightarrow$ GeLU, IP-bias $\rightarrow$ GeLU, IP-non-centered $\rightarrow$ ELU.}\label{tab:acc-best-all}
\end{table}

\begin{figure}[p]
\centering
    \subfloat[IP-bias / MNIST]{{\includegraphics[width=0.4\linewidth]{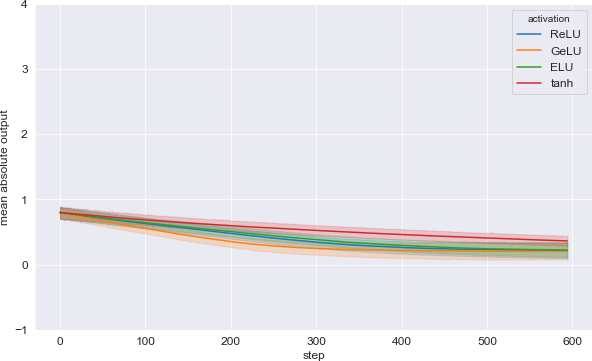}}}
    \qquad
    \subfloat[IP-bias / CIFAR-10 ]{{\includegraphics[width=0.4\linewidth]{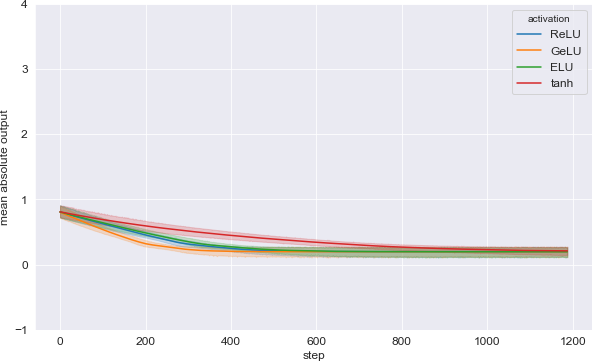}}}
    \vskip\baselineskip
    \subfloat[IP-non-centered / MNIST]{{\includegraphics[width=0.4\linewidth]{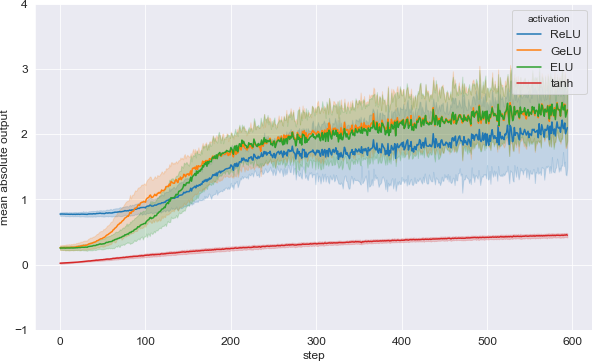}}}
    \qquad
    \subfloat[IP-non-centered / CIFAR-10 ]{{\includegraphics[width=0.4\linewidth]{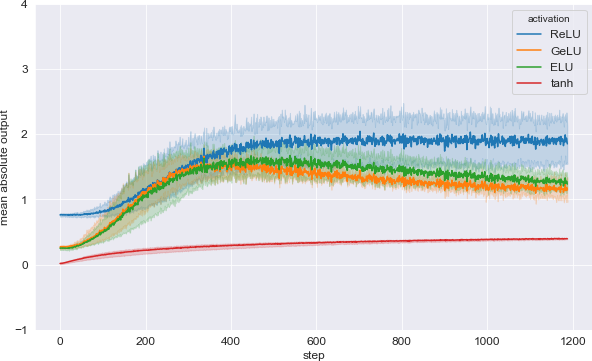}}}
    %\qquad
    \caption{Mean absolute value of the output during training.}
\label{fig:output-alternatives}
\end{figure}

\begin{figure}[p]
\centering
    \subfloat[IP-bias / MNIST]{{\includegraphics[width=0.4\linewidth]{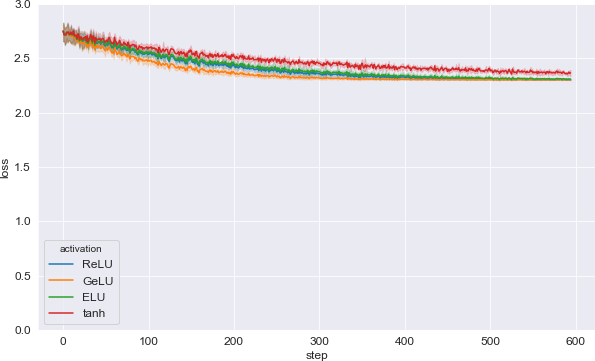}}}
    \qquad
    \subfloat[IP-bias / CIFAR-10 ]{{\includegraphics[width=0.4\linewidth]{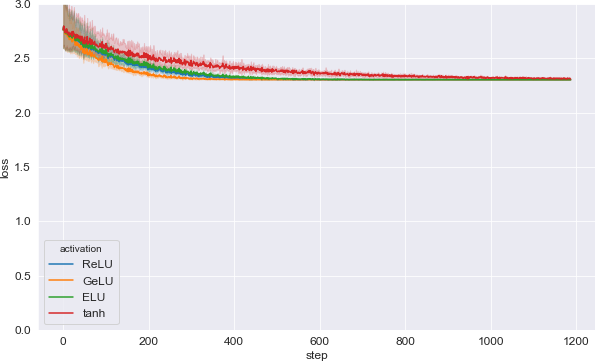}}}
    \vskip\baselineskip
    \subfloat[IP-non-centered / MNIST]{{\includegraphics[width=0.4\linewidth]{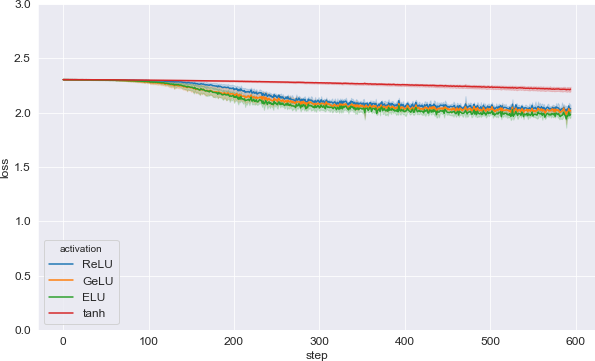}}}
    \qquad
    \subfloat[IP-non-centered / CIFAR-10 ]{{\includegraphics[width=0.4\linewidth]{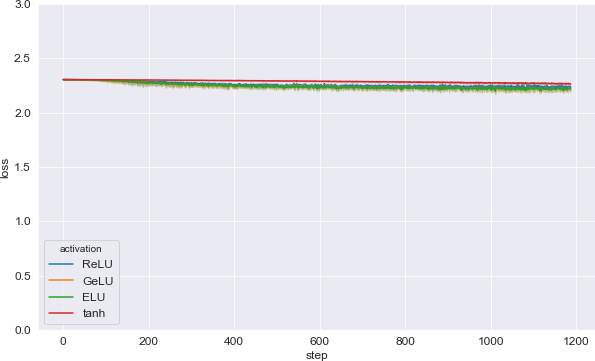}}}
    %\qquad
    \caption{Loss \emph{vs.}~number of SGD steps.}
\label{fig:loss-alternatives}
\end{figure}

\section{Conclusion}\label{sec:conclusion}

Recent research has shown that the parameterization of a neural network has a dramatic impact on its training dynamics, and therefore, on the type of functions that it is able to learn. Until now, the parameterizations used by practitioners have been restricted to standard schemes which rely on the analysis of the the first forward and backward passes. In the present work, pushing the analysis beyond the first gradient step (which is made possible by the Tensor Program framework), we have studied how to train neural networks with parameterizations that enjoy radically different behaviors, such as forgetting the contribution of the initial weights after the first weight update. 
 
The parameterizations we have analyzed, which we refer to as \emph{integrable parameterizations}, have been previously described with tools from the \emph{mean-field} literature, and we have deepened our understanding of these models with a different perspective. Indeed, we have shown that these parameterizations are trivial for deep networks with centered \iid initialization and a constant learning rate: they are stuck at initialization. This observation led us to explore various ways to escape this initial stationary point and initiate learning. Among those methods, we found that the only one that does not lead to a degenerate behaviour is to use large learning rates for the first gradient step. We proved that in the infinite-width limit the resulting dynamic is equivalent to a modification of \muP{} where the initial weights are removed after the first gradient step. Importantly, the random fluctuations around the limit---which are ignored in the mean-field description---turn out to actually be essential for our analysis, since it is by amplifying them that we are able to escape the stationary point. 

Extending our theoretical results to a more general class of activation functions requires more thorough technical work and is left as an open problem. Also, analyzing rigorously the impact of the presence or absence of the initial weight matrices on the learning behavior appears to be an interesting avenue for future research. Finally, understanding the generalization properties of IP-LLR and \muP\ remains an important open question but is beyond the scope of this paper.

\section*{Acknowledgements}
Karl Hajjar and Christophe Giraud receive respectively full and partial support from
the Agence Nationale de la Recherche (ANR), reference ANR-19-CHIA-0021-01 “BiSCottE". 

\appendix

\section*{Appendix}\label{app:appendix}

\section{Notations}\label{app:notations}

We introduce here some additional notations that will come in handy in the text and equations presented in the Appendix.

\paragraph{Hat matrices.} We define the following matrices and output weight vector (see Definition~\ref{def:ac-param} for the definitions of the matrices $U^l$): 
\begin{align}\label{eq:hatW}
    \begin{cases}
        \hatW^1 = U^1 \\
        \hatW^l = m^{-1/2} U^l, \quad l \in [2, L+1].
    \end{cases}
\end{align}
The pre-factor in $m^{-1/2}$ is the natural re-scaling of the \iid Gaussian matrices when their input dimension grows to infinity due to the central limit theorem (CLT).

\paragraph{Omegas.} For any ac-parameterization, we define $\omega_1 := m^{-a_1}$, and for any $l \in [2, L+1]$, $\omega_l := m^{1/2 - a_l}$. To avoid blow-up or vanishing in the first layer, all the parameterizations we study have $\omega_1 = 1$. This is the case for integrable parameterizations, the NTK parameterization and for \muP. For integrable parameterizations we also have $\omega_l = m^{-1/2}$ for $l \in [2, L+1]$, but for \muP, $\omega_l = 1$ if $l \in [2, L]$ and $\omega_{L+1} = m^{-1/2}$ (see Section~\ref{sec:muP} for a detailed description of $\muP$). 

Those $\omega_l$ naturally appear in the calculations as the magnitudes of the first forward pass of an ac-parameterization of a neural network. The term $m^{-a_l}$ comes from the scaling pre-factor of the effective weights, and the added $m^{1/2}$ appears when expressing the computation in function of the naturally scaled $\hatW^l$: $W^l(0) = \omega_l \hatW^l$.

\paragraph{Scalar limits.} For any scalar $\omega$ which depends on $m$, we denote by $\scalarlim{\omega}$ the almost sure limit (when it exists) of this scalar as $m \rightarrow \infty$. 

\paragraph{Gradients.} We define for any $t$ and $l$, 
\begin{align*}
    \begin{cases}
        dh^l_t := \nabla_{h^l_t} f_t(\xi_t) \\
        dx^l_t := \nabla_{x^l_t} f_t(\xi_t) \\
        dw^l(t) := \nabla_{w^l(t)} f_t(\xi_t) \\
        db^l(t) := \nabla_{b^l(t)} f_t(\xi_t) \\
        \chi_t := \partial_2 \ell(y_t, f_t(\xi_t)).
    \end{cases}
\end{align*}
The equations of backpropagation give:
\begin{align*}
    &dx^L_t = W^{L+1}(t) \\
    &dw^L(t) = m^{-a_{L+1}} x^L_t \\
    &dh^l_t = dx^l_t \odot \sigma'(h^l_t) \\
    &dx^{l-1}_t = \transpose{{(W^l(t))}} dh^l_t  \\
    &dw^l(t) = m^{-a_l} dh^l_t \transpose{{(x^{l-1}_t)}}, \\
    &db^l(t) = m^{-a_l} dh^l_t.
\end{align*}
As noted in Definition~\ref{def:ac-param} Remark~\ref{remark:ac-param}, one has for $l \in [1, L]$,
\begin{align}
    &\Delta w^l(t) = - \eta m^{-c_l} \chi_t dw^l(t) = - \eta m^{-(a_l + c_l)} \chi_t dh^l_t \transpose{{(x^{l-1}_t)}} \label{eq:delta-w}, \\
    &\Delta W^l(t) = m^{-a_l} \Delta w^l(t) = - \eta m^{-(2a_l + c_l)} \chi_t dh^l_t \transpose{{(x^{l-1}_t)}}, \label{eq:delta-W} \\
    &\Delta B^l(t) = m^{-a_l} \Delta b^l(t) = - \eta m^{-(2a_l + c_l)} \chi_t dh^l_t, \label{eq:delta-B}
\end{align}
and for $l=L+1$
\begin{align}
    &\Delta w^{L+1}(t) = - \eta m^{-c_l} \chi_t dw^{L+1}(t) = - \eta m^{-(a_{L+1} + c_{L+1})} \chi_t x^L_t \label{eq:delta-w-L+1}, \\
    &\Delta W^{L+1}(t) = m^{-a_{L+1}} \Delta w^{L+1}(t) = - \eta m^{-(2a_{L+1} + c_{L+1})} \chi_t x^L_t, \label{eq:delta-W-L+1} \\
    &\Delta B^{L+1}(t) = m^{-a_{L+1}} \Delta b^{L+1}(t) = - \eta m^{-(2a_{L+1} + c_{L+1})} \chi_t. \label{eq:delta-B-L+1}
 \end{align}
 
\paragraph{Z variables.} 
As described in Section~\ref{sec:tp-formalism}, the variables $Z$ with a superscript will be used to denote the random variable whose law describes the evolution of all coordinates of a given vector of the forward or backward pass at a given layer in the limit $m \rightarrow \infty$.

\paragraph{Tilde variables.} For $z \in \{ h^l_t, x^l_t, dh^l_t, dx^l_t  \}$, we will use $\Tilde{z}$ to denote a variable \textit{\quoting{without scale}}, \ie such that $Z^{\Tilde{z}}$ has positive and finite variance (see Definition~\ref{def:tilde-variables}). When we do so, we always have $z = \lambda \Tilde{z}$ for some scalar $\lambda$ (which might depend on $m$). The tilde variables of the backward pass for $t \geq 1$ might have different expressions in different contexts or in different proofs, but we still use the same notation every time as the exact definition should always be clear from the context. 

\section{An overview of the Tensor Program technique}

The Tensor Program technique, first introduced by in~\citet{yang2019tp1}, was initially developed to better understand the behavior at initialization of networks whose weights are initialized \iid with standard Gaussians as the number of units in each layer grows to infinity. Since the output of a hidden unit in layer $l \geq 2$ is given by $\sum_{q=1}^m W^l_{pq}(0) x^{l-1}_{0,q}$, the magnitude of the weights need to be downscaled by some negative power of $m$ to avoid blow-up as $m \rightarrow \infty$. Scalings which have naturally appeared in the literature are $m^{-1/2}$ and $m^{-1}$, and lead to different types of limits. 

Using a first version of the Tensor Program (referred to as NETSOR), it is shown in~\citep{yang2019tp1} that the output at initialization of a neural network of \textbf{any architecture} (fully-connected, recurrent, convolutional, with normalization, attention, ...) whose weights are initialized with $W^l(0) = m^{-1/2} U^l$ for $l \geq 2$ (\ie $a_l = 0$ and $b_l=1/2$ for $l \geq 2$ in the ac-parameterization) is a Gaussian process in the infinite-width limit. 

Going further, and in the light of the recent literature on the neural tangent kernel,~\citet{yang2020tp2} studies the first backward pass of networks initialized as above in the limit where $m \rightarrow \infty$ and has shown that the neural tangent kernel at initialization, defined as $K(\xi, \Bar{\xi}) := \left<\nabla_{\theta} f_0(\theta(0); \xi), \nabla_{\theta} f_0(\theta(0); \Bar{\xi}) \right>$ converges to a deterministic limit for any architecture. 

Finally, and most importantly for our work, the Tensor Program is extended in~\citep{yang2020tp3} to cover the forward and backward passes of networks of any architecture \textbf{at any time step} and not just at initialization. The crucial step taken in~\citep{yang2020tp3} is to be able to describe the evolution of quantities where both a weight matrix $W^l$ and its transpose $\transpose{{(W^l)}}$ are involved.~\citep{yang2020featureLearning} then applies the results and theorems of~\citep{yang2020tp3} in the particular context of ac-parameterizations (or rather abc-parameterizations as defined by~\citealp{yang2020featureLearning}) to describe the infinite-width limits of neural networks with different parameterizations. 

\subsection{Intuition behind the technique}\label{sec:tp-intuition}
To explain the intuition behind the Tensor Program technique and how it comes into play for neural networks, let us first look at the forward pass of a fully-connected network with $L$ hidden layers after $t$ steps of SGD. Assume single samples $(\xi_0, y_0), \ldots (\xi_{t-1}, y_{t-1})$ are used at each step for simplicity. Consider a neural network in any ac-parameterization and an input $\xi$ to the network. Using Equation~\eqref{eq:delta-W} for the updates, the forward pass of the network at time $t$ is given by:
\begin{align*}
    h^1_t &= W^1(0)\xi -\eta m^{-(2a_1 + c_1)} \sum_{s=0}^{t-1} \chi_s \left(\transpose{\xi_s} \xi \right) dh^1_s \\
    h^l_t &= W^l(0) x^{l-1}_t -\eta m^{-(2a_l + c_l)} \sum_{s=0}^{t-1} \chi_s \left(\transpose{(x^{l-1}_s)} x^{l-1}_t \right) dh^l_s && l \in [2, L] \\
    f_t(\xi) &= \transpose{{(W^{L+1}(0))}} x^L_t - \eta m^{-(2a_{L+1} + c_{L+1})} \sum_{s=0}^{t-1} \chi_s \transpose{(x^{L}_s)} x^{L}_t.
\end{align*}
To understand what happens in the forward pass, one thus needs to understand the behavior of the multiplication by \iid Gaussian matrices, that of vectors $dh^l_s$ of the backward pass as well as that of the inner products $\transpose{(x^{l-1}_s)} x^{l-1}_t$. As $m \rightarrow \infty$, the sums defining the matrix multiplications and inner products involve an infinity of terms and one must therefore understand how those quantities scale in the limit. 
\\ \\
Before we dive into the matrix multiplications, let us look more precisely at what the vectors $dh^l_s$ look like. We have:
\begin{align*}
    dh^l_s &= dx^l_s \odot \sigma'(h^l_s) \\
    dx^l_s& = \transpose{{(W^{l+1}(0))}}  dh^{l+1}_s -\eta m^{-(2a_l + c_l)} \sum_{u=0}^{s} \chi_u \left(\transpose{(dh^{l+1}_u)} dh^{l+1}_s \right) x^l_u  && l \in [2, L].
\end{align*}
We observe that inner products appear again, and that in contrast with the forward pass, it is now the multiplication by the transpose of \iid Gaussian matrices which appears. 
\\ \\
We already see that two main quantities appear in the calculations: The initial \iid Gaussian matrices, and vectors which are generated either $(i)$ through the multiplication of another vector with a Gaussian matrix or its transpose, or $(ii)$ through some form of non-linearity involving other vectors as well as the activation function $\sigma$ and/or its derivative $\sigma'$. Before trying to understand how the inner products behave, let us first dive into the multiplication by \iid Gaussian matrices. 

\subsubsection{Multiplication by \iid Gaussian matrices}\label{app:tp-gaussian-mat-mul}

The multiplication of a random vector by an \iid Gaussian matrix can happen in two different scenarios: $(i)$ the input vector is independent of the Gaussian weights, and $(ii)$ the input vector is correlated with the Gaussian weights, which, in the case of neural networks, will translate into saying that the transpose of the weight matrix is used somewhere to compute the input vector. 
\\ \\
\textbf{Independent input vector.} Consider a list $(x_q)_{q \in \mathbb{N}^*}$ of \iid random variables with finite first and second moments, independent of $U^l$, and consider multiplying this vector by the \iid Gaussian matrix $U^l$. At any finite-width $m$ the $p$-th entry of $U^l x$ is given by 
\begin{align*}
    \sum_{q=1}^m U^l_{pq} x_q \underset{m \rightarrow \infty}{\simeq} m^{1/2} \mathcal{N}(0, \mathbb{E}[x_1^2])
\end{align*}
The terms $(U^l_{pq} x_q)_{q \geq 1}$ are \iid with mean 0 and finite variance  $\mathbb{E}[x_1^2]$ because $x_q$ is independent of $U^l_{pq}$. Therefore, by a central limit argument, the sum will behave like $m^{1/2} \mathcal{N}(0, \mathbb{E}[x_1^2])$ for large $m$. It is thus natural to scale the sum by $m^{-1/2}$, or equivalently to consider $\hatW^l = m^{-1/2} U^l$ (as defined in  Equation~\ref{eq:hatW}) for matrix multiplications. 
\\ \\
With the above result in mind, we take a look at the first forward pass at initialization of a network where all the weight matrices are initialized as $W^l(0) = \hatW^l$ (\ie $a_1 = 0$, $a_l=1/2$, $l \in [2, L+1]$). We consider an input $\xi \in \mathbb{R}^d$ to the network and compute the pre-activations of each layer recursively. For the first layer, we get that for any $p \in [m]$, 
\begin{align*}
    h^1_{0,p} &= (\hatW^1 \xi)_p = (U^1 \xi)_p \\
    &= \sum_{q=1}^d \xi_q U^1_{pq} \sim \mathcal{N}(0, ||\xi||^2)
\end{align*}
Since the $(U^1_{pq})_q$ are \iid standard Gaussians, the linear combination above is also a Gaussian with mean $0$ and variance $\sum_q \xi_q^2 = ||\xi||^2$. Note that since the lists $(U^1_{pq})_q$ are independent for different $p$, the vector $h^1_0$ has \iid coordinates all distributed as $\mathcal{N}(0, ||\xi||^2)$. We also note that adding a bias term initialized as $\mathcal{N}(0,1)$ would simply change the variance to $||\xi||^2 + 1$. 
\\ \\
Then for the second layer we get that for any $p \in [m]$:
\begin{align*}
    h^2_{0,p} = \frac{1}{\sqrt{m}} \sum_{q=1}^m U^2_{pq} \sigma(h^1_{0,q}) \xrightarrow[m \rightarrow \infty]{law} \mathcal{N}(0, \mathbb{E}[\sigma(h^1_{0,1})^2])
\end{align*}
The terms $(U^2_{pq} \sigma(h^1_{0,q}))_q$ are \iid with mean zero, and by a central limit argument, we have that the coordinates of $h^2_0$ converge in law towards $\mathcal{N}(0, \mathbb{E}[\sigma(h^1_{0,1})^2])$ where $\mathbb{E}[\sigma(h^1_{0,1})^2])$ is simply $\mathbb{E}[\sigma(Z)^2])$ with $Z \sim \mathcal{N}(0, ||\xi||^2)$. Those coordinates are also independent (and Gaussian at any finite width $m$) \textbf{conditionally} on $h^1_0$ because the lists $(U^2_{pq})_q$ are independent (Gaussians) for different $p$. The different coordinates of $h^2_0$ are identically distributed at any finite width $m$ and remain so in the limit. They are not strictly speaking independent at finite width but the intuition is that they become so in the limit $m \rightarrow \infty$ as they also become Gaussian, and that is how they should be thought of in the context of the Tensor Program. 
\\ \\
Repeating the calculations above at every layer, we can intuitively describe the forward pass in the infinite-width limit by describing the law of a single random variable $Z_l$ for each layer (whose law is the common law of all the coordinates of the pre-activations $h^l_0$), and by the hand-wavy calculations above, we get the following recursion for the variables $Z$:
\begin{align*}
    Z_1 &\sim \mathcal{N}(0, ||\xi||^2) \\
    Z_{l+1} &\sim \mathcal{N}(0, \mathbb{E}[\sigma(Z_l)^2]), && l \in [1, L]
\end{align*}
Having discussed the case where the input vectors are not correlated with the weight matrix, we now move on to the case where there is some correlation between the two. 
\\ \\
\textbf{Correlated input vector.} As the simplest form of correlation, we consider a vector $x = \transpose{{(\hatW^l)}} z$ where $(z_q)_{q \in \mathbb{N}^*}$ is a list of \iid random variables independent of $\hatW^l$ with finite first and second moments, and we consider the result of the multiplication $h = \hatW^l x$. For any $p \in [m]$, we have
\begin{align*}
    h_p &= \sum_{q=1}^m \sum_{r=1}^m \hatW^l_{pq} \hatW^l_{rq} z_r \\
    &= \left[\frac{1}{m} \sum_{q=1}^m \left(U^l_{pq} \right)^2 \right] z_p + \frac{1}{\sqrt{m}} \sum_{r \neq p} z_r \left(\frac{1}{\sqrt{m}} \sum_{q=1}^m U^l_{pq} U^l_{rq} \right)
\end{align*}
By the law of large numbers, the first term will converge almost surely to $z_p$ as $m \rightarrow \infty$. For the second term, the intuition is that for any $r \neq p$ the terms $(1/ \sqrt{m}) \sum_{q} U^l_{pq} U^l_{rq}$ become distributed as independent Gaussians as $m$ becomes large by a central limit argument. Then, by another central limit argument, intuitively, the sum over $r \neq p$ should also becomes distributed as $\mathcal{N}(0, \mathbb{E}[z_1^2])$. In the limit $m \rightarrow \infty$, we thus expect the coordinates of $h = \hatW^l \transpose{{(\hatW^l)}} z$ to be the sum of two terms: a first term distributed as $z_1$ where the correlation between the entries of $\hatW^l$ and $\transpose{{(\hatW^l)}}$ comes into play, and a second term distributed as $\mathcal{N}(0, \mathbb{E}[z_1^2])$ which is purely Gaussian and where the correlation between the entries of $\hatW^l$ and $\transpose{{(\hatW^l)}}$ has no effect. 
\\ \\
The aim of the Tensor Program series~\citep{yang2019tp1, yang2020tp2, yang2020tp3} is to formalize those intuitions into theorems and rigorous calculations. Of course, the calculations become more complex when we introduce non-linearities and consider later steps in training than the initialization, but what the Tensor Program shows is that the intuitions above still hold. 
\\ \\
To summarize, the intuition is that in the large-width limit, the coordinates of pre-activation vectors become \iid and we thus only need to track the law of a single real-valued random variable. Therefore, any average of some function of the coordinates should converge to an expectation in the limit $m \rightarrow \infty$ by a law of large number argument. Finally, any multiplication by $\hatW^l$ yields two terms where one is purely Gaussian and the other depends on the expression of the vector that is multiplied by $\hatW^l$ in function of $\transpose{{(\hatW^l)}}$. 

\subsection{Mathematical formalism}\label{sec:tp-formalism}
The mathematical formalism of the Tensor Program goes beyond neural network computations and describes the evolution of any computational systems (with some restrictions) in the limit $m \rightarrow \infty$. The computational system is comprised of different vectors whose dimensions are equal to $m$ which can be generated from a set of initial vectors in various ways. The Tensor Program is defined by the sequence of mathematical operations which produce the vectors from previously generated vectors. The operations are the same at any given width $m$, only the size of the vectors and matrices involved change with $m$, and the aim of the Tensor Program is to provide the tools (formalism and theorems) to be able to described the behavior of the system in the limit $m \rightarrow \infty$. As described in the intuitions of the previous section~\ref{sec:tp-intuition}, the coordinates of vectors in the program are roughly \iid as $m \rightarrow \infty$ and variables $Z$ are introduced to described the common law of the coordinates in the limit $m \rightarrow \infty$. 

\paragraph{Initial vectors.} 
Consider a set $\mathcal{V} := \left\{v^1, \ldots, v^N \right\} \in (\mathbb{R}^m)^N$ of \textit{initial vectors} such that:
\begin{enumerate}[(i)]
    \item the coordinates $(v_p)_{p \in [m]}$ are \iid for any $v \in \mathcal{V}$ and any $m$. We call $Z^v$ a real-valued random variable whose law is the same as that of all the coordinates.
    
    \item The joint law of $Z^{\mathcal{V}} := (Z^{v^1}, \ldots, Z^{v^N})$ is a Gaussian $\mathcal{N} \left(\mu_{\text{init}}, \Sigma_{\text{init}} \right)$ for any $m$ (the variables $Z^v$ do not actually depend on $m$, but this is simply to say that at any width $m$ and for any $p \in [m]$, the law of $(v^1_p, \ldots, v^N_p)$ is the same $N$-dimensional Gaussian). 
\end{enumerate}

\paragraph{Initial scalars.} 
Similarly, we define a list of initial scalars $\theta_1, \ldots, \theta_M$ which can depend on $m$ and for which the only requirement is that each $\theta_r$ converges almost surely to some finite limit $\scalarlim{\theta}_r$ as $m \rightarrow \infty$. 

\paragraph{Initial Gaussian matrices.} 
Consider a set $\mathcal{W} := \left \{\hatW^1, \ldots, \hatW^P \right \} \in (\mathbb{R}^{m \times m})^{P}$,  such that $\hatW^r_{pq} \sim \mathcal{N}(0, 1/m)$ \iid over $p,q$ for any $r$, and the $(\hatW^r)_{r \in [P]}$ are independent of each other and independent of the vectors in $\mathcal{V}$. Since we consider a more general setting than neural networks, we do not index those matrices by $l$ and can have $P \neq L$ but for neural networks, those initial matrices will always be the initialization of the weight matrices of the intermediate layers $l \in [2, L]$, appropriately scaled. 

\paragraph{Generation of new vectors/scalars.} Given previously generated vectors $v^1, \ldots, v^k$, previously generated scalars $\theta_1, \ldots, \theta_r$, and a non-linearity $\psi(\cdot \, ; \, \cdot) : \mathbb{R}^k \times \mathbb{R}^r \rightarrow \mathbb{R}$, we can, in the following ways, generate:
\begin{description}
\item[\texttt{MatMul}] a vector $z = \hatW v$ for any $v \in \{v^1, \ldots, v^k \}$ and $\hatW \in \mathcal{W}$.
\item[\texttt{NonLin}] a vector $z = \psi(v^1, \ldots, v^k \, ; \, \theta_1, \ldots, \theta_r)$ where $\psi$ is taken element-wise, \ie $z_p = \psi(v^1_p, \ldots, v^k_p \, ; \, \theta_1, \ldots, \theta_r)$ for any $p \in [m]$ and for any $m$.
\item[\texttt{Moment}] a scalar $\omega = \frac{1}{m} \sum_{p=1}^m \psi(v^1_p, \ldots, v^k_p \, ; \, \theta_1, \ldots, \theta_r) \in \mathbb{R}$.
\end{description}
The non-linearity used does not have to actually depend on all the previous vectors and/or scalars, but we present the operations this way for simplicity. 
\\ \\
Given those operations, the Tensor Program framework allows to seamlessly describe the infinite-width limit of the computational system defining a given Tensor Program by tracking recursively the laws of the variables $Z$ whose law represents the common law of the coordinates of a given vector. Indeed, every vector $z$ in the program (initial or generated using previous vectors in the program) will roughly have \iid coordinates in the limit $m \rightarrow \infty$, and the Tensor Program associates a real-valued random variable $Z^{z}$ to the vector $z$. Then, associated with the operations on vectors and scalars above are the following operations on the corresponding variables $Z$ which come as their natural counterparts in the infinite-width limit to track the evolution of the laws of the variables $Z$:
\begin{description}
\item[\texttt{ZInit}] For initial vectors $v \in \mathcal{V}$, define $\dotZ^{v} = 0$ and $\hatZ^{v} = Z^{v}$. The purpose of those notations will become clear in the $\texttt{ZMatMul}$ section. 

\item[\texttt{ZMoment}] Given a scalar $\omega = (1/m) \sum_{p=1}^m \psi(z^1_p, \ldots, z^k_p \,; \, \theta_1, \ldots, \theta_r)$, define
\begin{align}\label{label:zmoment}
    \scalarlim{\omega} = \mathbb{E} \left[\psi(Z^{z^1}, \ldots, Z^{z^k} \, ; \, \scalarlim{\theta}_1, \ldots, \scalarlim{\theta}_1) \right] 
\end{align}

\item[\texttt{ZNonLin}] Given $z =\psi(z^1, \ldots, z^k \,; \, \theta_1, \ldots, \theta_r)$, define:
\begin{align}\label{eq:znonlin}
    Z^{z} = \psi(Z^{z^1}, \ldots, Z^{z^k} \,; \, \scalarlim{\theta}_1, \ldots \scalarlim{\theta}_r)
\end{align}

\item[\texttt{ZMatMul}] Given $z = \hatW v$ for a previous vector $v$ and $\hatW \in \mathcal{W}$, $Z^{z} = \hatZ^{z} + \dotZ^{z}$ is the sum of two terms:
\begin{description}

\item[\texttt{ZHat}]
$\hatZ^{z} \sim \mathcal{N} \left(0, \mathbb{E} \left[\left(Z^v \right)^2 \right] \right)$ is a purely Gaussian term. Additionally, if we let $\mathcal{W}_\hatW$ be the set of all vectors in the program of the form $\hatW u$ for some $u$ in the program, the vector $Z^{\mathcal{W}_\hatW} = (Z^h)_{h \in \mathcal{W}_\hatW}$ is defined to be jointly Gaussian with covariance matrix given by:
\begin{align*}
    \text{cov}(Z^{Wx}, Z^{Wy}) = \mathbb{E}[Z^x Z^y]
\end{align*}
Moreover, the vector $Z^{\mathcal{W}_\hatW}$ is defined to be mutually independent of the list of $Z^{u}$ for $u$ in $\{ \hatZ^{v} : v \in \mathcal{V} \cup_{W \in \mathcal{W} \cup \transpose{\mathcal{W}}, W \neq \hatW}  \mathcal{W}_W \}$ where $\transpose{\mathcal{W}} := \{\transpose{{\hatW}} : \hatW \in \mathcal{W} \}$, and $\mathcal{W}_W$ is the set of vectors in the program of the form $Wu$ for some vector $u$ in the program.

\item[\texttt{ZDot}] $\dotZ^{z}$ comes from the potential interactions (correlations) between $\hatW$ and $\transpose{{\hatW}}$ in the computation of $z$. One can always unwind the expression of $Z^{v}$ and express it in function of the $\hatZ^{\transpose{{\hatW}} y}$ for some $x$ in the program, that is we can always write $Z^{v}$ as $Z^{v} = \phi(\hatZ^{\transpose{\hatW}y^1}, \ldots, \hatZ^{\transpose{\hatW}y^k} \, ,\hatZ^{x^1}, \ldots \hatZ^{x^r} \, ; \, \scalarlim{\theta}_1, \ldots, \scalarlim{\theta}_s)$ with $x^1, \ldots, x^r$ such that $\transpose{{\hatW}}$ is never used in the computation of those vectors. Then, define:
\begin{align}\label{eq:zdot}
    \dotZ^{z} = \sum_{j=1}^k \mathbb{E} \left[ \frac{\partial Z^v}{\partial Z^{\transpose{{\hatW}} y_j}} \right] Z^{y_j}
\end{align}
where $\partial Z^v / \partial Z^{\transpose{{\hatW}} y_j}$ is simply defined as the $j$-th partial derivative of $\phi$ above when expressing $Z^v$ as required for $\dotZ$. As noted in~\citep{yang2020featureLearning}, if $\phi$ is not everywhere differentiable, one can leverage Stein's lemma to replace the formula in Equation~\eqref{eq:zdot} by a linear algebra formula. 
\end{description}
\end{description}
Now that we have introduced the necessary concepts and described the content of a Tensor Program, we can move on to present the main theorem derived in~\citep{yang2020featureLearning} which connects the mathematical operations used at finite-width with the infinite-width limit of the computational system defining a Tensor Program. The \quoting{master theorem} formulated in~\citep{yang2020featureLearning} is surprisingly simple (although the proof is much more intricate) yet very powerful, and goes as follows (see ~\citealp[Theorem 7.4]{yang2020featureLearning}):
\begin{theorem}[Master Theorem]\label{th:master-theorem}
Given a Tensor Program, for any vectors $x^1, \ldots, x^k$ and scalars $\theta_1, \ldots, \theta_r$ in the program, and for any pseudo-Lipschitz non-linearity  $\psi$ (see Definition~\ref{def:pseudo-Lipschitz}, page \pageref{def:pseudo-Lipschitz}), one has that:
\begin{align*}
    \frac{1}{m} \sum_{p=1}^m \psi(x^1_p, \ldots, x^k_p \, ;\, \theta_1, \ldots, \theta_r) \xrightarrow[m \rightarrow \infty]{a.s.} \mathbb{E} \left[\psi \left(Z^{x^1}, \ldots, Z^{x^k} \, ;\, \scalarlim{\theta}_1, \ldots, \scalarlim{\theta}_r \right) \right] 
\end{align*}
\end{theorem}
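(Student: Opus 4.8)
The statement is the ``Master Theorem'' of the Tensor Program, i.e.\ \citet[Theorem~7.4]{yang2020featureLearning}; I only sketch how its proof goes, in the lineage of \citet{yang2019tp1, yang2020tp2, yang2020tp3} and the Gaussian-conditioning arguments of \citet{bolthausen2014iterative, bayati2011dynamics}. Since a program is a \emph{finite}, $m$-independent list of \MatMul, \NonLin\ and \Moment\ operations, the plan is to induct on the position in this list, proving two statements in tandem: a \textbf{moment bound} --- every generated vector $z$ satisfies $\tfrac1m\sum_{p=1}^m |z_p|^{2q} = O(1)$ almost surely for each integer $q\geq 1$ --- and the \textbf{core claim} of the theorem restricted to the vectors available after the current operation. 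The base case is the initial vectors (both statements follow from the strong law of large numbers for i.i.d.\ tuples with all moments finite, the joint law being the prescribed Gaussian $\mathcal N(\mu_{\mathrm{init}},\Sigma_{\mathrm{init}})$) and the initial scalars (which converge by assumption).

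The only delicate inductive step is a \MatMul\ step $z = \hatW v$. First I would isolate, among the vectors already produced, the ``forward'' uses $\hatW u_1,\dots,\hatW u_a$ and the ``backward'' uses $\transpose{\hatW} w_1,\dots,\transpose{\hatW} w_b$ --- these are the only places where the randomness of $\hatW$ has been exposed. Conditioning on the $\sigma$-algebra generated by all prior vectors, $\hatW$ is Gaussian subject to these finitely many linear constraints, so its conditional law is a deterministic affine part (the least-squares solution of the constraints) plus an independent fresh Gaussian supported on the orthogonal complement. Carrying $v$ through, $z$ becomes, conditionally, the sum of (i) a linear combination $\sum_{j=1}^b \beta_j^{(m)} w_j$ whose coefficients $\beta_j^{(m)}$ are built from the empirical inner products $\tfrac1m\langle u_i,u_{i'}\rangle$, $\tfrac1m\langle u_i,v\rangle$; by the induction hypothesis applied to the quadratic (hence pseudo-Lipschitz) test functions $(a,b)\mapsto ab$ these converge, so $\beta_j^{(m)}\to \mathbb{E}\!\left[\partial Z^v/\partial Z^{\transpose{\hatW} w_j}\right]$, reconstructing $\dotZ^{z}$ as in \eqref{eq:zdot}; and (ii) a fresh Gaussian vector whose conditional covariance against each $\hatW$-image $\hatW x$ already present is $\tfrac1m\langle v,x\rangle$ up to the same projection correction, converging to $\mathbb{E}[Z^v Z^x]$ and matching $\hatZ^{z}$. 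Applying $\psi$ to this representation and using Gaussian concentration (Poincaré/log-Sobolev) for the fresh component, with the moment bounds taming the polynomial remainders produced by pseudo-Lipschitzness, yields that $\tfrac1m\sum_p\psi(\dots)$ concentrates around $\mathbb{E}[\psi(\dots Z\dots)]$.

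To pass from convergence in probability to \emph{almost sure} convergence, I would make each concentration estimate quantitative with deviation probability summable in $m$ (Gaussian concentration gives stretched-exponential tails for the fresh component, and the pseudo-Lipschitz polynomial factors are absorbed by the companion moment bounds), so Borel--Cantelli applies at every step; the program being finite, this survives the induction. The moment-bound half is advanced in the same pass: at a \MatMul\ step the fresh Gaussian component has all moments bounded by Gaussian hypercontractivity, and at a \NonLin\ step a pseudo-Lipschitz image of bounded-moment vectors again has bounded moments.

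The main obstacle is precisely the \MatMul\ step when \emph{both} $\hatW$ and $\transpose{\hatW}$ occur in the program: one must carefully track which prior vectors are $\hatW$-images and which are $\transpose{\hatW}$-images, prove that the conditional-mean (regression) coefficients converge to the derivative expectations in \eqref{eq:zdot}, and establish that the fresh Gaussian component is asymptotically independent of the entire history (the independence assertion built into the \ZHat\ rule) --- this is exactly where \citet{yang2020tp3} goes beyond \citet{yang2019tp1, yang2020tp2}. A secondary difficulty is that $\psi$ is only pseudo-Lipschitz, so every error estimate carries polynomial moment factors that must be controlled uniformly in $m$ by the moment induction; and when the unwound expression $\phi$ for $Z^v$ fails to be everywhere differentiable, the derivative formula \eqref{eq:zdot} is replaced by the Stein-type linear-algebra identity noted just after it.
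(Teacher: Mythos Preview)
The paper does not prove this theorem: it is quoted verbatim as \citet[Theorem~7.4]{yang2020featureLearning} and used as a black box throughout (the paper itself remarks just before the statement that ``the proof is much more intricate'' and refers the reader to Yang's work). So there is no ``paper's own proof'' to compare against.

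That said, your sketch is a faithful high-level outline of how the proof actually goes in the Tensor Program papers: induction along the finite list of operations, with the Gaussian-conditioning decomposition at each \MatMul\ step (the Bolthausen/Bayati--Montanari technique) producing the $\hatZ$ and $\dotZ$ pieces, a companion moment induction to control the polynomial factors coming from pseudo-Lipschitzness, and summable concentration estimates plus Borel--Cantelli to upgrade to almost-sure convergence. You correctly single out the hard case --- when both $\hatW$ and $\transpose{\hatW}$ have already appeared --- as the place where \citet{yang2020tp3} goes beyond the earlier papers, and you flag the Stein-type replacement for \eqref{eq:zdot} when the unwound $\phi$ is not differentiable. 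One point you gloss over is that the regression coefficients $\beta_j^{(m)}$ are not simply empirical inner products but involve the \emph{inverse} of an empirical Gram matrix, so one also needs the induction hypothesis to guarantee that this Gram matrix converges to an invertible limit (or to handle the degenerate case); this is part of the bookkeeping in Yang's proof that makes it ``much more intricate'' than the sketch suggests. As an honest proof \emph{sketch} with explicit pointers to the source, your write-up is appropriate.
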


\begin{remark}
\
\begin{enumerate}[1.]
    \item The theorem essentially states that even though the coordinates of vectors in the program are not rigorously \iid\!, they appear so from the perspective of the average by a suitable non-linearity so that a law of large number type of result holds. Note that for neural networks, even though the coordinates of the (pre-)activations follow the same law when using \iid initialization for the weights, it is not \textit{a priori} clear that we can consider them as independent copies, and thus that we can summarize the computations using a single real-valued variable, but the master theorem shows that from the perspective of averaging, this is in fact the case in the infinite-width limit.
    
    \item In~\citep{yang2020featureLearning}, different versions of the Tensor Program are presented in the sense that different classes of non-linearities are allowed. These differences induce minor subtleties in the master theorem and in the proofs. However, most of the results in the main text of the paper require that the non-linearities be pseudo-Lipschitz (which is the stronger assumption), both in \texttt{NonLin} and in the master theorem. The Assumption~\ref{ass:smooth-act} on the activation function $\sigma$ and its derivative $\sigma'$ ensures that any quantity appearing in the forward or backward computation of a neural network can be expressed as pseudo-Lipschitz non-linearity.
    
    \item What the Tensor Program and its master theorem show is that to understand the behavior of the computational system in the infinite-width limit, one simply needs to track the operations on the variables $Z$ which mimic the recursive operations in the computational system. Then, quantities which involve sum over coordinates such as inner products between the vectors in the program (which occur in the forward and backward passes of a neural network, as well as in the computation of the neural tangent kernel), or norm computations are easily described, when properly re-normalized, through expectations involving the corresponding variables $Z$. The main difficulty is that it is actually hard (computationally and in the mathematical formulation) to track the correlations between different $Z$ because, as explained in~\citep{yang2020featureLearning}, of the necessary unwinding in the definition of $\dotZ$, so that the computational graph associated with the operations on the variables $Z$ is hard to implement in practice. 
\end{enumerate}
\end{remark}

\subsection{The maximal update parameterization \muP}\label{sec:muP}

We close this section by presenting briefly the maximal update parameterization considered in~\citep{yang2020featureLearning}. To quantify the learning abilities of a given parameterization,~\citet{yang2020featureLearning} introduce the notions of \textit{feature learning} and \textit{feature kernel evolution} at a given layer $l \in [1, L]$, which we recall below. Both these definitions concern the large-width limit of the networks: 

\begin{definition}[Feature Learning]\label{def:feature-learning}
An ac-parameterization is said to admit \textbf{feature learning} at the $l$-th layer if the quantity $\Delta x^l_t(\xi) := x^l_t(\xi) - x^l_0(\xi)$ is such that there exists a training routine for which, almost surely, there exists a constant $C > 0$ such that $||\Delta x^l_t(\xi)||^2 / m \geq C$ for large enough $m$.
\end{definition}

\begin{definition}[Kernel Evolution]\label{def:kernel-evolution}
An ac-parameterization is said to \textbf{evolve the feature kernel} at the $l$-th layer if the quantity $\Delta F^l_t(\xi, \Bar{\xi}) := \left[ \transpose{{x^l_t(\xi)}} x^l_t(\Bar{\xi}) - \transpose{{x^l_0(\xi)}} x^l_0(\Bar{\xi}) \right ]  / m$ is such that there exists a training routine for which, almost surely, there exists a constant $C > 0$ such that for large enough $m$, $\Delta F^l_t(\xi, \Bar{\xi}) \geq C$.
\end{definition}
\noindent
\citep{yang2020featureLearning} goes about categorizing whether different ac-parameterizations admit feature learning or not. One of the striking result presented is that there is essentially a dichotomy (depending on the values of $(a_l,c_l)_{l \in [L+1]}$) among ac-parameterizations: an ac-parameterization either admits feature learning (and evolves the feature kernel) or is in the kernel regime, meaning that the quantities in definitions~\ref{def:feature-learning} and ~\ref{def:kernel-evolution} converge to $0$ almost surely so that in the infinite width limit, the evolution of the prediction function $f_t$ is deterministic and depends only on the previous prediction function $f_{t-1}$ and the loss at time $(t-1)$ through a (deterministic) kernel $K(\xi, \Bar{\xi}) = \lim_{m \rightarrow \infty} \transpose{{(x^L_0(\xi))}} x^L_0(\Bar{\xi}) / m$ (or a rescaled version thereof).
\\ \\
The categorization result proved in~\citep{yang2020featureLearning} holds for a certain class of ac-parameterizations which are deemed \textit{stable} and \textit{non-trivial}. Stable refers to the fact that the pre-activations and output ($h^l_0$ and $f_0(\xi)$ respectively) at initialization do not blow-up as $m \rightarrow \infty$ at any layer. As already hinted in Section~\ref{sec:tp-intuition}, this corresponds to having $a_1 = 0$ and $a_l \geq 1/2$ for $l \in [2, L+1]$. Non-trivial refers to the fact that the pre-activations of all layers do not converge to 0 almost surely as $m \rightarrow \infty$ at initialization. This corresponds to having $a_1 \leq 0$ and $a_l \leq 1/2$ for $l \in [2, L]$. It is mentioned in~\citep{yang2020featureLearning} that those parameterizations for which the pre-activations of the intermediate layers converge to $0$ almost surely should stay at their initialization throughout the course of training, and we actually prove in Section~\ref{sec:ip-trivial}, using the Tensor Program technique, that this is the case when $L \geq 3$ in the setting where $a_1 = 0$ and $a_l = 1$ for $l \in [2, L+1]$ (\ie integrable parameterizations) unless one uses large (polynomial in $m$) initial learning rates, a scenario which is not covered in~\citep{yang2020featureLearning}. We show that in this case, integrable parameterizations are only trivial at initialization (the pre-activations of all layers except the first one converge to $0$ in the infinite-width limit) and are actually in a feature learning regime at all layers after the first gradient step ($t \geq 1$).
\\ \\
The maximal update parameterization \muP~introduced in~\citep{yang2020featureLearning} is the result of the analysis of the values of $a_l$, and $c_l$ for which the parameterization admits feature learning at every layer, and maximally so in the sense that if we were to reduce the value of $a_l$ then the $\Delta x^l_t$ introduced in Definition~\ref{def:feature-learning} or the pre-activations $h^l_t$ would blow-up as $m \rightarrow \infty$. In essence, \muP~corresponds to the values of $a_l$, and $c_l$ for which $\Delta x^l_t$ is as large as possible (with regards to its dependency on $m$) at every layer without creating any instabilities (pre-activations or updates blowing-up) in the limit $m \rightarrow \infty$. A quick analysis of the updates at $t=0$ shows that the choice $a_1 = 0$, $a_l = 1/2$ for $l \in [2, L]$, and $a_{L+1} = 1$ associated with $c_l = -1$ for all $l \in [L+1]$ achieves this, and it is rigorously shown in~\citep{yang2020featureLearning} that this choice of ac-parameterization induces an update such that, $||\Delta W^l(t) x^{l-1}_t||^2 / m =\Theta(1)$. We thus adopt the following definition for \muP~which is the same as in~\citep[Definition 5.1]{yang2020featureLearning} but re-parameterized to remove the redundant b in the abc-parameterization:
\begin{definition}[\muP]\label{def:muP}
The maximal update parameterization \muP\ is defined by the following choice of parameterization:
\begin{align*}
    &a_1 = 0, &&c_1 = -1, \\
    &a_l = 1/2, &&c_l = -1, \qquad l \in [2, L],\\
    &a_{L+1} = 1, &&c_{L+1} = -1.
\end{align*}
\end{definition}

\section{Useful preliminary results}\label{sec:prelim-res}
We show in this section a couple of useful results which will prove helpful in the proofs. 
\subsection{Positive finite moments of pseudo-Lipschitz functions of Gaussians}\label{app:pos-finite-moment}

\begin{lemma}[Positive finite moments with polynomially bounded non-linearities]\label{th:pos-finite-moment}
Let $\phi$ be a polynomially bounded non-linearity which is not almost everywhere $0$, and let $Z \sim \mathcal{N}(0, v^2)$ with $v^2 < \infty$. Then, for any $p \in \mathbb{R}_+$:
\begin{enumerate}[(i)]
    \item $0 \leq \mathbb{E}[|\phi (Z)|^p] < \infty$,
    \item if in addition $v^2 > 0$, $0 < \mathbb{E}[|\phi (Z)|^p] < \infty$.
\end{enumerate}
\end{lemma}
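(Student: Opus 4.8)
The plan is to handle the two assertions separately: claim (i) is finiteness together with the trivial non-negativity, and claim (ii) is strict positivity when $v^2>0$.

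For (i), I would start from the defining property of a polynomially bounded non-linearity: there exist a constant $C>0$ and an integer $q\ge 0$ such that $|\phi(z)|\le C(1+|z|^q)$ for all $z\in\mathbb{R}$. Raising to the power $p$ and using an elementary inequality (subadditivity of $x\mapsto x^p$ when $p\le 1$, or $(a+b)^p\le 2^{p-1}(a^p+b^p)$ when $p\ge 1$), one bounds $|\phi(z)|^p$ by $C'(1+|z|^{r})$ with $C'>0$ and $r:=\lceil pq\rceil$. Since every moment of a centered Gaussian is finite when $v^2<\infty$ (the degenerate case $v^2=0$, where $Z=0$ almost surely, being covered trivially), we get $\mathbb{E}[|\phi(Z)|^p]\le C'(1+\mathbb{E}[|Z|^{r}])<\infty$. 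Non-negativity is immediate because the integrand $|\phi(Z)|^p$ is non-negative; this establishes (i).

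For (ii), assume $v^2>0$ and set $A:=\{z\in\mathbb{R}:\phi(z)\neq 0\}$. By hypothesis $\phi$ is not almost everywhere zero, so $A$ has strictly positive Lebesgue measure. Writing $A=\bigcup_{n\ge 1}A_n$ with $A_n:=\{z:|\phi(z)|\ge 1/n\}$, continuity of Lebesgue measure from below yields $\mathrm{Leb}(A_n)\uparrow\mathrm{Leb}(A)>0$, hence $\mathrm{Leb}(A_{n_0})>0$ for some $n_0$. Because $v^2>0$, the law of $Z$ has the strictly positive Gaussian density on all of $\mathbb{R}$ and is therefore mutually absolutely continuous with Lebesgue measure, so $\mathbb{P}(Z\in A_{n_0})>0$. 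Consequently
\begin{align*}
    \mathbb{E}\!\left[|\phi(Z)|^p\right] \;\ge\; \mathbb{E}\!\left[|\phi(Z)|^p\,\mathds{1}\{Z\in A_{n_0}\}\right] \;\ge\; n_0^{-p}\,\mathbb{P}(Z\in A_{n_0}) \;>\;0,
\end{align*}
which combined with the finiteness from (i) gives (ii).

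The argument is essentially elementary; the only point requiring a little care is the passage from "$\phi$ is not a.e.\ zero" to "$\phi(Z)$ is nonzero with positive probability", which rests on the equivalence of a non-degenerate Gaussian law with Lebesgue measure — and this is precisely the ingredient that fails when $v^2=0$, consistent with the statement asserting only $\ge 0$ in that case. I do not anticipate a genuine obstacle here beyond writing these standard measure-theoretic steps cleanly.
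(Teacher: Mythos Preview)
Your proposal is correct and follows essentially the same approach as the paper: finiteness via the polynomial bound on $\phi$ and finiteness of Gaussian moments, and strict positivity via the fact that a non-degenerate Gaussian law has strictly positive density, so an integrand that is not Lebesgue-a.e.\ zero yields a strictly positive integral. Your treatment of (ii) is somewhat more explicit than the paper's (which simply notes that $|\phi|^p$ is not a.e.\ zero and hence the integral against the Gaussian density is nonzero), but the underlying idea is identical.
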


\begin{proof}
If $v^2=0$, and then $\phi(Z)= \phi(0)$ almost surely, so that $\mathbb{E}[|\phi(Z)|^p] = |\phi(0)|^p < \infty$.
\\ \\
Now, assume $v^2 > 0$. Since $\phi$ is bounded by a polynomial of some degree $r > 0$, $|\phi(z)|  \leq C(1+ |z|^r)$ for some $C >0$. Then, $|\phi(z)|^p = \exp(p \ln(|\phi(z)|)) \leq C^p (1+ |z|^{r})^p$. Since $v^2 > 0$, we have
\begin{align*}
    \mathbb{E}[|\phi(Z)|^p] &= \frac{1}{\sqrt{2 \pi v^2}}\int_{\mathbb{R}} |\phi(z)|^p e^{-z^2/2v^2} \mathrm{d}z \\
    &\leq \frac{1}{\sqrt{2 \pi v^2}}\int_{\mathbb{R}} C^p (1+ |z|^{r})^p e^{-z^2/2v^2} \mathrm{d}z < \infty.
\end{align*}
Finally, since $\phi$ is not almost everywhere $0$, neither is $|\phi|^p$ which shows the integral in the first equality above is not 0, and gives $\mathbb{E}[|\phi(Z)|^p] > 0$. 
\end{proof}

\subsection{The $Z$ dots are 0 in the first forward-backward pass}\label{app:dotZ-first-forward}

\begin{lemma}[$\dotZ=0$ in the first forward-backward pass]\label{th:dotZ-first-forward}
Consider an ac-parameterization of an $L$-hidden layer fully-connected neural network with $a_1 \geq 0$ and $a_l \geq 1/2$ for $l \in [2, L+1]$, and with a non-linearity satisfying Assumption~\ref{ass:smooth-act}. Then for any $l \geq 2$, $\dotZ^{\hatW^l x^{l-1}_0} = 0$, and for any $l \in [1, L]$, $\dotZ^{\transpose{{(\hatW^l)}} dh^{l}_0} = 0$.
\end{lemma}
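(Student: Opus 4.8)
The plan is to treat the two assertions in turn, proving the forward-pass identity first since it is used in the backward-pass one. Throughout, the hypotheses $a_1\ge 0$ and $a_l\ge 1/2$ for $l\in[2,L+1]$ ensure that each scalar $\scalarlim{\omega}_k$ (with $\omega_1=m^{-a_1}$ and $\omega_k=m^{1/2-a_k}$, $k\ge 2$) has a finite almost sure limit, so that all the $Z$-variables written below are well defined.

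\emph{Forward pass.} Fix $l\in[2,L]$. In the first forward pass, $x^{l-1}_0=\sigma(h^{l-1}_0)$ is generated recursively from the fixed input $\xi_0$, the initial bias vectors, and the products $h^k_0=\omega_k\hatW^k x^{k-1}_0+B^k(0)$ for $k\in[2,l-1]$ (with $h^1_0=\hatW^1\xi_0+B^1(0)$); in particular no transposed matrix $\transpose{(\hatW^l)}$ ever enters the computation of $x^{l-1}_0$. Hence, when $Z^{x^{l-1}_0}$ is unwound as prescribed by \ZDot{} — as a function of terms $\hatZ^{\transpose{(\hatW^l)}y}$ and of terms not involving $\transpose{(\hatW^l)}$ — there are no terms of the first kind, so the sum defining $\dotZ^{\hatW^l x^{l-1}_0}$ in Equation~\eqref{eq:zdot} is empty and $\dotZ^{\hatW^l x^{l-1}_0}=0$.

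\emph{Backward pass.} I would argue by downward induction on $l\in\{L,\dots,1\}$, carrying the auxiliary property $(\star_l)$: \emph{$Z^{dx^l_0}$ is centered and stochastically independent of $Z^{h^l_0}$} (equivalently, of $\hatZ^{\hatW^l x^{l-1}_0}$ and of $Z^{B^l(0)}$). The base case $l=L$ holds because $dx^L_0=W^{L+1}(0)=\omega_{L+1}\hatW^{L+1}$, so $Z^{dx^L_0}=\scalarlim{\omega}_{L+1}\,Z^{\hatW^{L+1}}$ is a centered Gaussian built from the output weights, which form an initial vector independent of $\hatW^L$ and of the biases. Assuming $(\star_l)$, I would deduce $\dotZ^{\transpose{(\hatW^l)}dh^l_0}=0$ thus: by backpropagation $dh^l_0=dx^l_0\odot\sigma'(h^l_0)$, and by the forward-pass identity $Z^{h^l_0}=\scalarlim{\omega}_l\hatZ^{\hatW^l x^{l-1}_0}+Z^{B^l(0)}$, so $Z^{dh^l_0}=Z^{dx^l_0}\,\sigma'\!\big(\scalarlim{\omega}_l\hatZ^{\hatW^l x^{l-1}_0}+Z^{B^l(0)}\big)$; since by $(\star_l)$ the factor $Z^{dx^l_0}$ is independent of $\hatZ^{\hatW^l x^{l-1}_0}$, the only term of the form $\hatZ^{\hatW^l y}$ occurring in $Z^{dh^l_0}$ is $\hatZ^{\hatW^l x^{l-1}_0}$, appearing inside $\sigma'$. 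Applying \ZDot{} (or, when $\sigma'$ is not differentiable, the Stein-lemma substitution noted after Equation~\eqref{eq:zdot}) then expresses $\dotZ^{\transpose{(\hatW^l)}dh^l_0}$ as $Z^{x^{l-1}_0}$ times an expectation of the form $\mathbb{E}\big[Z^{dx^l_0}\,g(Z^{h^l_0})\big]$ for a suitable polynomially bounded $g$; by $(\star_l)$ this factorises as $\mathbb{E}[Z^{dx^l_0}]\,\mathbb{E}[g(Z^{h^l_0})]=0$. Finally $(\star_l)$ propagates: $dx^{l-1}_0=\omega_l\transpose{(\hatW^l)}dh^l_0$, so \ZMatMul{} gives $Z^{dx^{l-1}_0}=\scalarlim{\omega}_l\big(\hatZ^{\transpose{(\hatW^l)}dh^l_0}+\dotZ^{\transpose{(\hatW^l)}dh^l_0}\big)$, the $\dotZ$ term being $0$ by what precedes, while by \ZHat{} the Gaussian $\hatZ^{\transpose{(\hatW^l)}dh^l_0}$ is centered and independent of every $\hatZ^u$ built from a matrix distinct from $\hatW^l$ and $\transpose{(\hatW^l)}$ — in particular of $\hatZ^{\hatW^{l-1}x^{l-2}_0}$ and of $Z^{B^{l-1}(0)}$ — hence of $Z^{h^{l-1}_0}$; this is $(\star_{l-1})$. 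The case $l=1$ is trivial, since $\hatW^1$ occurs only as the generator of the initial vectors $\hatW^1\xi$, so $\transpose{(\hatW^1)}$ never arises through \MatMul{}.

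I expect the main obstacle to be the rigorous bookkeeping of the independence structure among the $Z$-variables — namely, turning the heuristic ``a hidden matrix $\hatW^l$ is used untransposed only at layers $\le l$ and transposed only strictly above layer $l$'' into the precise invariant $(\star_l)$ used above, which must be justified directly from the \ZHat{} mutual-independence clause — together with the careful handling of the Stein-lemma replacement when $\sigma'$ is not everywhere differentiable (the reason Assumption~\ref{ass:smooth-act} is required). The separate treatment of the input and output layers, where $\hatW^1$ and $\hatW^{L+1}$ are initial vectors rather than program matrices, is routine.
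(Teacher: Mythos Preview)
Your proof is correct and follows essentially the same approach as the paper. Both dispose of the forward-pass $\dotZ$ by noting that $\transpose{(\hatW^l)}$ is absent from the computation of $x^{l-1}_0$, and both handle the backward pass by downward induction using that $Z^{dx^l_0}$ is centered and independent of $Z^{h^l_0}$---your invariant $(\star_l)$---so that the \ZDot\ expectation factorises with a zero-mean term; the paper writes this derivative out explicitly via $\sigma''$ whereas you phrase it abstractly as $\mathbb{E}[Z^{dx^l_0}\,g(Z^{h^l_0})]$, but the content is identical.
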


\begin{remark}
This lemma applies to the NTK, \muP, and integrable parameterizations (in particular IP-LLR) as well as HP and HPZ.
\end{remark}

\begin{proof}
Consider any ac-parameterization of a fully-connected neural network which has $a_1 \geq 0$ and $a_l \geq 1/2$ for $l \in [2, L+1]$, and with a non-linearity satisfying Assumption~\ref{ass:smooth-act}. Define $\omega_1 = m^{-a_1}$ and $\omega_l = m^{-(a_l - 1/2)}$ for $l \geq 2$, and the initial scalar $\alpha_{L+1} := m^{-a_{L+1}}$. The conditions on the $a_l$ guarantee that the $\omega_l$ converge almost surely to either 0 or 1 and and $\alpha_{L+1}$ converges almost surely to 0, which allows applying the rules of the Tensor Program.
\\\\
For any $l \in [2, L]$, since the computation of $x^{l-1}_0$, and thus of $Z^{x^{l-1}_0}$ do not involve $\transpose{{(\hatW^l)}}$, $\dotZ^{\hatW^lx^{l-1}_0} = 0$ as per the \ZDot\ rule of the Tensor Program. In addition, $Z^{h^l_1} = \omega_1(\hatZ^{\hatW^1 \xi} + \hatZ^{v^1})$ and by definition, $\hatZ^{\hatW^1 \xi} \sim \mathcal{N}(0, ||\xi||^2)$ and $\hatZ^{v^1} \sim \mathcal{N}(0,1)$ are independent Gaussians, which shows that $Z^{h^l_1} \sim \mathcal{N}(0, \scalarlim{\omega}_1^2 (||\xi||^2 + 1))$ whose variance is finite because $\scalarlim{\omega}_1^2 \in \{0, 1\}$. By Lemma~\ref{th:pos-finite-moment}, this also shows that $\mathbb{E}[(Z^{x^1_0})^2] < \infty$. Let $l \in [2, L]$ and assume that $\mathbb{E}[(Z^{h^{l-1}_0})^2] < \infty$ and $\mathbb{E}[(Z^{x^{l-1}_0})^2] < \infty$. We have $h^l_0 = \omega_l \hatW^l x^{l-1}_0 + m^{-a_l} v^l$. Since $m^{-2a_l}$ converges to $0$ almost surely, we can consider it as an initial scalar in the program, which gives by \ZNonLin\ $Z^{h^l_0} = \scalarlim{\omega}_l \hatZ^{\hatW^l x^{l-1}_0} + 0 \times \hatZ^{v^l_1}$. $\hatZ^{v^l_1} \sim \mathcal{N}(0,1)$ by definition since $v^l$ is an initial vector in the program, so that $Z^{h^l_0} = \scalarlim{\omega}_l \hatZ^{\hatW^l x^{l-1}_0} \sim \mathcal{N}(0, \scalarlim{\omega}_l^2 \mathbb{E}[Z^{x^{l-1}_0})^2])$ whose variance is finite by the induction hypothesis and because $\scalarlim{\omega}_l \in \{ 0, 1\}$. Then by Lemma~\ref{th:pos-finite-moment}, we also get that $\mathbb{E}[(Z^{x^l_0})^2] < \infty$, which concludes the induction.
\\\\
Let us now deal with the first backward pass for any ac-parameterization. The result will essentially boil down to having the expectation of the derivatives defining the $\dotZ$ being 0 because the weight matrices are initialized with 0 mean and because of an independence argument.  We have $dx^L_0 = W^{L+1}(0) = m^{-a_{L+1}} U^{L+1}$, and $dh^L_0 = dx^L_0 \odot \sigma'(Z^{h^L_0})$. By \ZNonLin\ we thus have
\begin{align*}
    Z^{dx^L_0} &= \scalarlim{\alpha}_{L+1} Z^{U^{L+1}},\\
    Z^{dh^L_0} &= \scalarlim{\alpha}_{L+1} Z^{U^{L+1}} \sigma'(Z^{h^L_0}).
\end{align*}
Now let $l \in [1, L]$. $dx^{l-1}_0 = \transpose{{(\hatW^l)}} dh^l_0$ gives 
\begin{align*}
    Z^{\transpose{{(\hatW^l)}} dh^l_0} = \hatZ^{\transpose{{(\hatW^l)}} dh^l_0} + \dotZ^{\transpose{{(\hatW^l)}} dh^l_0},
\end{align*}
and to understand what $\dotZ^{\transpose{{(\hatW^l)}} dh^l_0}$ is, we need to expand the expression of $Z^{dh^l_0}$ in function of variables which were generated with $\hatW^l$. So far, the only variable where $\hatW^l$ was used is $h^l_0 = \omega_l \hatW^lx^{l-1}_0$ (with the convention that $x^{0}_0 = \xi_0$). We thus need to expand the expression of $Z^{dh^l_0}$ in function of $\hatZ^{\hatW^lx^{l-1}_0}$. We have, for $l=L$
\begin{align*}
    Z^{dh^L_0} &= \scalarlim{\alpha}_{L+1} Z^{U^{L+1}} \sigma'(\scalarlim{\omega}_L Z^{\hatW^l x^{L-1}_0}) \\ 
     &= \scalarlim{\alpha}_{L+1} \hatZ^{U^{L+1}} \sigma'(\scalarlim{\omega}_L \hatZ^{\hatW^l x^{L-1}_0}),
\end{align*}
where the last equality stems from the fact that $Z^{\hatW^L x^{L-1}_0} = \hatZ^{\hatW^L x^{L-1}_0}$ in the first forward pass, and the fact that $U^{L+1}$ is an initial vector in the program which gives by definition $\hatZ^{U^{L+1}} = Z^{U^{L+1}}$. We can formally write this as 
\begin{align*}
    Z^{dh^L_0} = \Psi(\hatZ^{\hatW^L x^{L-1}_0}, \hatZ^{U^{L+1}}; \scalarlim{\alpha}_{L+1}, \scalarlim{\omega}_L),
\end{align*}
where $\Psi(z_1, z_2; \theta_1, \theta_2) := \theta_1 z_2 \sigma'(\theta_2 z_1)$ is a pseudo-Lipschitz function because $\sigma'$ is, and we have
\begin{align*}
    \frac{\partial \Psi}{\partial z_1}(z_1, z_2; \theta_1, \theta_2) = \theta_1 \theta_2 z_2 \sigma''(\theta_2 z_1).
\end{align*}
We get that by definition
\begin{align*}
    \dotZ^{\transpose{{(\hatW^L)}} dh^L_0} &= \mathbb{E} \left[\frac{\partial Z^{dh^L_0}}{\partial \hatZ^{\hatW^L x^{L-1}_0}} \right] Z^{x^{L-1}_0} \\
    &= \mathbb{E} \left[\frac{\partial \Psi}{\partial z_1}(\hatZ^{\hatW^L x^{L-1}_0}, \hatZ^{U^{L+1}}; \scalarlim{\alpha}_{L+1}, \scalarlim{\omega}_L) \right] Z^{x^{L-1}_0} \\
    &= \scalarlim{\alpha}_{L+1} \scalarlim{\omega}_L \mathbb{E}[Z^{U^{L+1}} \sigma''(\scalarlim{\omega}_L \hatZ^{\hatW^L x^{L-1}_0})] Z^{x^{L-1}_0} \\
    &= \scalarlim{\alpha}_{L+1} \scalarlim{\omega}_L \underbrace{\mathbb{E}[\hatZ^{U^{L+1}}]}_{0} \underbrace{\mathbb{E}[\sigma''(\scalarlim{\omega}_L \hatZ^{\hatW^L x^{L-1}_0})]}_{< \infty} \underbrace{Z^{x^{L-1}_0}}_{< \infty \ a.s.},
\end{align*}
where the last equality stems from the fact that by \ZHat, $\hatZ^{\hatW^L x^{L-1}_0}$ is independent of $\hatZ^{U^{L+1}}$ because $U^{L+1}$ is an initial vector in the program. The fact that the second expectation finite is because $\scalarlim{\omega}_L \in \{0,1\}$, $\sigma''$ is polynomially bounded, and $\hatZ^{\hatW^L x^{L-1}_0}$ is a Gaussian with mean 0 and finite variance since $\mathbb{E}[(Z^{x^{L-1}_0})^2] < \infty$. This gives $\dotZ^{\transpose{{(\hatW^l)}} dh^L_0} = 0$. 
\\\\
Now suppose $l \in [1, L-1]$ and assume $\dotZ^{\transpose{{(\hatW^{l+1})}} dh^{l+1}_0} = 0$ which gives $Z^{\transpose{{(\hatW^{l+1})}} dh^{l+1}_0} = \hatZ^{\transpose{{(\hatW^{l+1})}} dh^{l+1}_0}$. We have
\begin{align*}
    Z^{dh^l_0} &= Z^{dx^l_0} \sigma'(Z^{h^l_0}) \\
    &= \scalarlim{\omega}_{l+1} Z^{\transpose{{(\hatW^{l+1})}} dh^{l+1}_0} \sigma'(\scalarlim{\omega}_l Z^{\hatW^l x^{l-1}_0}) \\ 
    &= \scalarlim{\omega}_{l+1} \hatZ^{\transpose{{(\hatW^{l+1})}} dh^{l+1}_0} \sigma'(\scalarlim{\omega}_l \hatZ^{\hatW^l x^{l-1}_0})
\end{align*}
where we have used that previous $\dotZ$ are 0 to replace the $Z$ with $\hatZ$. We can once more formally write this as 
\begin{align*}
    Z^{dh^l_0} = \Psi(\hatZ^{\hatW^l x^{l-1}_0}, \hatZ^{\transpose{{(\hatW^{l+1})}} dh^{l+1}_0}; \scalarlim{\omega}_{l+1}, \scalarlim{\omega}_l)
\end{align*}
with exactly the same $\Psi$ as for $l=L$. We get that by definition
\begin{align*}
    \dotZ^{\transpose{{(\hatW^l)}} dh^l_0} &= \mathbb{E} \left[\frac{\partial Z^{dh^l_0}}{\partial \hatZ^{\hatW^l x^{l-1}_0}} \right] Z^{x^{l-1}_0} \\
    &= \mathbb{E} \left[\frac{\partial \Psi}{\partial z_1}(\hatZ^{\hatW^l x^{l-1}_0}, \hatZ^{\transpose{{(\hatW^{l+1})}} dh^{l+1}_0}; \scalarlim{\omega}_{l+1}, \scalarlim{\omega}_l) \right] Z^{x^{l-1}_0} \\
    &= \scalarlim{\omega}_{l+1} \scalarlim{\omega}_l \mathbb{E}[\hatZ^{\transpose{{(\hatW^{l+1})}} dh^{l+1}_0} \sigma''(\scalarlim{\omega}_{l} \hatZ^{\hatW^l x^{l-1}_0})] Z^{x^{l-1}_0}\\
    &= \scalarlim{\omega}_{l+1} \scalarlim{\omega}_l \underbrace{\mathbb{E}[\hatZ^{\transpose{{(\hatW^{l+1})}} dh^{l+1}_0}]}_{0} \underbrace{\mathbb{E}[\sigma''(\scalarlim{\omega}_l \hatZ^{\hatW^l x^{l-1}_0})]}_{< \infty} \underbrace{Z^{x^{l-1}_0}}_{< \infty a.s.} \\
    &= 0
\end{align*}
Where the first expectation is 0 because by definition $\hatZ^{\transpose{{(\hatW^{l+1})}} dh^{l+1}_0}$ is a Gaussian with 0 mean and an easy induction (from $l=L$ to $l=1$) shows that, as for the forward pass, $\mathbb{E}[(Z^{dx^l_0})^2] < \infty$ and $\mathbb{E}[(Z^{dh^l_0})^2] < \infty$, which implies that $\hatZ^{\transpose{{(\hatW^{l+1})}} dh^{l+1}_0}$ has finite variance. The second expectation is finite because $\scalarlim{\omega}_l \in \{0,1\}$, $\hatZ^{\hatW^l x^{l-1}_0}$ is a Gaussian with 0 mean by definition and finite variance, and because $\sigma''$ is polynomially bounded since $\sigma'$ is pseudo-Lipschitz.  
\end{proof}

\subsection{Gaussian output in the infinite-width limit}\label{app:gaussian-output}

\begin{lemma}[Gaussian output]\label{th:gaussian-output}
For every $m \in \mathbb{N}^*$, let $x^m$ and $w^m$ be independent random vectors in $\mathbb{R}^m$ such that
\begin{align*}
    \begin{cases}
    \frac{1}{m} ||x^m||^2 \xrightarrow[m \rightarrow \infty]{a.s.} \sigma^2_\infty\\
    w^m_j \sim \mathcal{N}(0, 1/m) \text{ \iid over } j=1,\ldots,m.
    \end{cases}
\end{align*}
Then 
\begin{align*}
    \transpose{{\left(w^m \right)}} x^m \xrightarrow[m \rightarrow \infty]{law} \mathcal{N}(0, \sigma^2_\infty)    
\end{align*}
\end{lemma}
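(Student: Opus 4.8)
The plan is to condition on $x^m$ and then apply L\'evy's continuity theorem to the resulting characteristic functions. First I would note that, conditionally on $x^m$, the scalar $\transpose{(w^m)} x^m = \sum_{j=1}^m w^m_j x^m_j$ is a linear combination of the independent Gaussian random variables $w^m_j \sim \mathcal{N}(0, 1/m)$ with coefficients $x^m_j$ that are deterministic given $x^m$; hence it is itself centered Gaussian with variance $\tfrac{1}{m}\sum_{j=1}^m (x^m_j)^2 = \tfrac{1}{m}\|x^m\|^2$. Consequently, for every $t \in \mathbb{R}$,
\begin{align*}
    \mathbb{E}\!\left[ e^{i t \transpose{(w^m)} x^m} \,\middle|\, x^m \right] = \exp\!\left( - \frac{t^2}{2}\,\frac{\|x^m\|^2}{m} \right).
\end{align*}

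Next I would take the expectation over $x^m$, using the independence of $w^m$ and $x^m$ (tower property / Fubini), to obtain the unconditional characteristic function
\begin{align*}
    \varphi_m(t) := \mathbb{E}\!\left[ e^{i t \transpose{(w^m)} x^m} \right] = \mathbb{E}\!\left[ \exp\!\left( - \frac{t^2}{2}\,\frac{\|x^m\|^2}{m} \right) \right].
\end{align*}
By the hypothesis $\tfrac{1}{m}\|x^m\|^2 \to \sigma^2_\infty$ almost surely, the integrand converges almost surely to $\exp(-t^2 \sigma^2_\infty/2)$, and since it is bounded by $1$ the dominated convergence theorem gives $\varphi_m(t) \to \exp(-t^2 \sigma^2_\infty/2)$ for every $t \in \mathbb{R}$. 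This is precisely the characteristic function of $\mathcal{N}(0, \sigma^2_\infty)$ (with the convention that this is $\delta_0$ when $\sigma^2_\infty = 0$), so L\'evy's continuity theorem yields $\transpose{(w^m)} x^m \xrightarrow[m \rightarrow \infty]{law} \mathcal{N}(0, \sigma^2_\infty)$.

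There is no genuine obstacle here; the only mild point requiring care is the interplay between the two sources of randomness — the Gaussian weights $w^m$ and the vector $x^m$ whose normalized squared norm converges only almost surely — which is why I condition on $x^m$ first and then pass to the limit by dominated convergence rather than arguing along almost sure subsequences directly on $\transpose{(w^m)} x^m$. The characteristic-function route is what makes the conditional Gaussianity immediately usable, and it avoids having to impose any negligibility condition on the individual coordinates of $x^m$ that a conditional Lindeberg-type central limit argument would otherwise require.
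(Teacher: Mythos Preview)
Your proof is correct and follows essentially the same approach as the paper: condition on $x^m$ to observe that $\transpose{(w^m)} x^m$ is conditionally $\mathcal{N}(0,\|x^m\|^2/m)$, then pass to the limit using the almost sure convergence of $\|x^m\|^2/m$. The paper's argument is terser and leaves the passage from conditional convergence in law to unconditional convergence in law implicit, whereas you make this step precise via characteristic functions and dominated convergence; this is the natural way to justify that last step rigorously.
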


\begin{proof}
Consider two sequences of independent vectors of growing dimension $(w^m)_m$ and $(x^m)_m$ as in Lemma~\ref{th:gaussian-output}.
Conditionally on $x^m$, the random variable $\transpose{{\left(w^m \right)}} x^m$ follows a Gaussian $\mathcal{N}(0,||x^m||^2 /m)$ distribution. Since $||x^m||^2 /m$ converges to $\sigma^2_\infty$ almost surely, the conditional distribution of $\transpose{{\left(w^m \right)}} x^m$ given $x^m$ converges to a Gaussian $\mathcal{N}(0,\sigma^2_\infty)$ distribution. The lemma  follows. 
\end{proof}

\subsection{Convergence of the coordinates to the limiting distribution $Z$}\label{app:cv-law-limit-z}

\begin{lemma}[Convergence to the limit distribution]\label{th:cv-law-limit-z}
For any vector $h$ in the Tensor Program we have for any $\alpha \in \mathbb{N}^*$,
\begin{align*}
    h_\alpha \xrightarrow[m \rightarrow \infty]{law} Z^h
\end{align*}
\end{lemma}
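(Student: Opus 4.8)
The statement is an immediate consequence of the Master Theorem (Theorem~\ref{th:master-theorem}), once one records the elementary fact that, at every finite width $m$, the coordinates of each vector produced by the program are identically distributed. So the plan has two ingredients: a coordinate-symmetry observation, and a test-function argument.

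\textbf{Step 1 (coordinate symmetry).} First I would show that for any permutation $\pi$ of $[m]$, the joint law of the whole family of vectors generated by the program is invariant under simultaneously relabelling every coordinate by $\pi$. This is proved by induction along the construction of the program: the initial vectors $v\in\mathcal{V}$ have i.i.d.\ coordinates, hence $\pi$-invariant law; each initial Gaussian matrix $\hatW\in\mathcal{W}$ satisfies $P_\pi\hatW\transpose{P_\pi}\overset{d}{=}\hatW$ since its entries are i.i.d.; the operations \NonLin\ and \MatMul\ are equivariant under this simultaneous coordinate action (for \MatMul\ one uses $(P_\pi\hatW\transpose{P_\pi})(P_\pi v)=P_\pi(\hatW v)$, which is also consistent with the appearances of $\transpose{\hatW}$ because $\transpose{(P_\pi\hatW\transpose{P_\pi})}=P_\pi\transpose{\hatW}\transpose{P_\pi}$); and the scalars produced by \Moment\ are symmetric functions of the coordinates, hence $\pi$-invariant, so they may be fed unchanged into subsequent \NonLin\ steps. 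Consequently each vector $h$ in the program has identically distributed coordinates, so $h_\alpha\overset{d}{=}h_1$ for every $\alpha\in\mathbb{N}^*$, and it suffices to treat $\alpha=1$.

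\textbf{Step 2 (test functions) and conclusion.} Fix a bounded $1$-Lipschitz function $\varphi:\mathbb{R}\to\mathbb{R}$; such a $\varphi$ is pseudo-Lipschitz in the sense of Definition~\ref{def:pseudo-Lipschitz}. Applying Theorem~\ref{th:master-theorem} with the single vector $h$ and non-linearity $\psi=\varphi$ gives
\begin{align*}
    \frac{1}{m}\sum_{p=1}^m \varphi(h_p)\xrightarrow[m\rightarrow\infty]{a.s.}\mathbb{E}\big[\varphi(Z^h)\big].
\end{align*}
Since $\varphi$ is bounded, dominated convergence upgrades this to convergence of expectations; and by Step~1 the expectation of the left-hand side equals $\mathbb{E}[\varphi(h_1)]$ for every $m$. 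Hence $\mathbb{E}[\varphi(h_1)]\to\mathbb{E}[\varphi(Z^h)]$. As convergence of $\mathbb{E}[\varphi(\cdot)]$ along all bounded Lipschitz $\varphi$ characterizes convergence in distribution (Portmanteau), we get $h_1\xrightarrow[m\rightarrow\infty]{law}Z^h$, and combining with $h_\alpha\overset{d}{=}h_1$ yields the claim for every $\alpha$.

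The only non-routine point is Step~1: checking the coordinate-permutation invariance carefully through the inductive construction of the program, in particular that the symmetrization is compatible with the simultaneous use of $\hatW$ and $\transpose{\hatW}$ in \MatMul\ steps and with the scalar arguments passed to \NonLin. Everything else is a textbook application of the Master Theorem together with the fact that weak convergence on $\mathbb{R}$ is determined by bounded Lipschitz functions.
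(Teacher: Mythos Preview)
Your proof is correct and essentially identical to the paper's: both use exchangeability of the coordinates (which the paper states in one line as ``by symmetry'' and you expand into a fuller induction over the program), apply the Master Theorem to a bounded $1$-Lipschitz test function, pass to expectations by dominated convergence, and conclude by Portmanteau. The only cosmetic difference is that the paper writes $\mathbb{E}[\varphi(h_\alpha)]=\mathbb{E}\big[\tfrac{1}{m}\sum_\beta\varphi(h_\beta)\big]$ directly rather than first reducing to $\alpha=1$.
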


\begin{remark} 
\
\begin{enumerate}[1.]
    \item Let $h^1, \ldots, h^k$ be $k$ vectors in the program, let $\theta_1, \ldots, \theta_p$ be $p$ scalars in the program, and let $\phi : \mathbb{R}^{k+p} \rightarrow \mathbb{R}$ be a pseudo-Lipschitz function. Then applying the previous Lemma~\ref{th:cv-law-limit-z} to $h := \phi(h^1, \ldots, h^k; \theta_1, \ldots, \theta_p)$ (which is in the program by \NonLin), shows that for any $\alpha$, $\phi(h^1_\alpha, \ldots, h^k_\alpha; \theta_1, \ldots, \theta_p)$ converges in law to $Z^h = \phi(Z^{h^1}, \ldots, Z^{h^k}; \scalarlim{\theta}_1, \ldots, \scalarlim{\theta}_p)$.
    
    \item A stronger form of convergence can occur depending on the parameterization we look at and the context. Indeed, if for example $Z^h$ turns out to be a constant, then we already get convergence in probability instead of in law. If in addition the convergence is \quoting{fast enough}, it can occur almost surely. 
\end{enumerate}

\end{remark}

\begin{proof}
Let $h$ be a vector in the program, and consider the corresponding random variable $Z^h$. All we need is to prove that for any $\alpha\in  \mathbb{N}^*$ and any bounded 1-Lipschitz function $\phi$, we have
$\mathbb{E}[\phi(h_\alpha)]\to \mathbb{E}[\phi(Z^h)]$, as $m$ goes to infinity. We first observe that the Master Theorem \ref{th:master-theorem} ensures the convergence
\begin{align*}
    \frac{1}{m} \sum_{\beta=1}^m \phi(h_\beta) \xrightarrow[m \rightarrow \infty]{a.s.} \mathbb{E}[\phi(Z^h)].
\end{align*}
Secondly, for any $m$, the distribution of $h_1, \ldots, h_m$ is exchangeable by symmetry, so that we get
\begin{align*}
    \mathbb{E}[\phi(h_\alpha)] = \mathbb{E} \left[\frac{1}{m} \sum_{\beta=1}^m \phi(h_\beta) \right] \xrightarrow[m \rightarrow \infty]{} \mathbb{E}[\phi(Z^h)],
\end{align*}
where the convergence is obtained by dominated convergence, which concludes the proof.
\end{proof}

\section{Proof of the triviality of IPs: Proposition~\ref{th:trivial-ip-mf-lr}}\label{app:trivial-ip}

\begin{proof}
Fix a time $t \geq 0$ and an input $\xi \in \mathbb{R}^d$ for the whole proof. We first show that the coordinates of the (pre-)activations of any layer $l \geq 2$ converge to $0$ almost surely at initialization. To that end, we prove that the corresponding $Z$'s are equal to 0. Then we show a similar result for the backward pass, and finally conclude the proof by an induction.

\subsection{Proof at $t=0$}

\subsubsection{First forward pass}

\textbf{Tensor program setup:} We consider a Tensor Program as defined in 
\begin{align*}
    \begin{cases}
        \hatW^{l+1} = U^{L+1},\\
        U^1 \xi_0, \ldots, U^1 \xi_t, U^1\xi, \\
        v^1, \ldots, v^{L},
    \end{cases}
\end{align*}
and the initial scalars
\begin{align*}
    \begin{cases}
        \chi_0, \ldots, \chi_t, \\
        \omega:= m^{-1/2}, \nu := m^{-1}, \tau := m^{-2}, \\
        m^{-1} v^{L+1};
    \end{cases}
\end{align*}
and with initial weight matrices
\begin{align*}
    \hatW^2, \ldots, \hatW^L.
\end{align*}
Recall that the $\hatW^l$ are defined in Equation~\eqref{eq:hatW} of Appendix~\ref{app:notations}. Note that for any $m \in \mathbb{N}^*$ and $j \in [m]$, we have
\begin{align*}
    \left(U^{L+1}_j, (U^1 \xi_0)_j, \ldots, (U^1 \xi_t)_j, (U^1 \xi)_j, v^1_j, \ldots, v^L_j \right)  \sim \mathcal{N} \left( 0,
    \begin{pmatrix}
        1 & 0 & 0 \\
        0 & M & 0 \\
        0 & 0 & I_L
    \end{pmatrix}
    \right) ,
\end{align*}
where $M := \text{Gram}(\xi_0, \ldots, \xi_t, \xi) = (\transpose{\xi_r} \xi_s)_{0 \leq r,s \leq t+1}$ and $I_L$ is the identity matrix of size $L \times L$.
where we have set $\xi_{t+1} := \xi$. 
\\\\
\textbf{Convergence of the initial scalars:} $\omega, \nu, \tau$ as well as $m^{-1} v^{L+1}$ all converge almost surely towards $0$. For the $\chi_s$ we will show below in the proof that they all converge to constants almost surely, thereby meeting the requirements of the Tensor Program. It is important to note that there is no circular logic to prove the $\chi_s$ converge almost surely. Indeed, each time we apply the master theorem to prove the convergence of $f_s(\xi_s)$ to a constant almost surely and thus that of $\chi_s$, we apply it to a restricted Tensor Program where only the scalars $(\chi_r)_{0 \leq r < s}$ appear (and there is no such scalar needed to prove the convergence of $\chi_0$ as shown below) which will already have been proved to converge almost surely.  
\\\\
\textbf{1st forward pass:} We drop the dependency of the forward and backward passes on $\xi$ for brevity. $h^1_0 = U^1 \xi + v^1$ is th sum of two initial vectors in the program and has iid Gaussian coordinates $\mathcal{N}(0, ||\xi||^2 + 1)$. By definition, $\hatZ^{h^1_0} = \hatZ^{U^1\xi} + \hatZ^{v^1} \sim \mathcal{N}(0, ||\xi||^2 + 1)$ since the two Gaussians appearing in the sum are independent. By \NonLin, we have that since $x^1_0 = \sigma(h^1_0)$, $Z^{x^1_0} = \sigma(Z^{h^1_0})$. Note that $\mathbb{E}[\sigma(Z^{h^1_0})^2] < \infty$ since $Z^{h^1_0}$ is Gaussian with finite variance and $\sigma$ is pseudo-Lipschitz and thus polynomially bounded. 
\\ \\
Since $L \geq 2$, we can write $h^2_0 = m^{-1/2} \hatW^2 x^1_0 + m^{-1}v^2$ (otherwise there is no $h^2_0$ and we simply have $f_0(\xi_0) = m^{-1} \transpose{{(U^2)}} x^1_0$), which implies by \NonLin\ that $Z^{h^2_0} = \scalarlim{\omega} Z^{\hatW^2 x^1_0} + \scalarlim{\nu} Z^{v^2}$ with $\scalarlim{\omega} = \scalarlim{\nu} =  0$ and
\begin{align*}
    Z^{\hatW^2 x^1_0} = \hatZ^{\hatW^2 x^1_0} + \dotZ^{\hatW^2 x^1_0}.
\end{align*}
$\dotZ^{\hatW^2 x^1_0} = 0$ by Lemma~\ref{th:dotZ-first-forward}, and $\hatZ^{\hatW^2 x^1_0} \sim \mathcal{N}(0, \mathbb{E}[(Z^{x^1_0})^2])$ and $0 \leq \mathbb{E}[(Z^{x^1_0})^2] < \infty$.  We thus have $Z^{h^2_0} = \scalarlim{\omega} \hatZ^{\hatW^2(0) x^1_0} = 0$. Similarly, we also get that $\scalarlim{\\nu} Z^{v^2} = 0$. We then have by \ZNonLin\ $Z^{x^2_0} = \sigma(Z^{h^2_0}) = \sigma(0) = 0$.\\

\noindent
Let $l \in [2, L-1]$ and assume $Z^{h^l_0} = 0$. Then, $Z^{x^l_0} = \sigma(Z^{h^l_0}) = 0$, and since $h^{l+1}_0 = \omega \hatW^{l+1} x^l_0 + \nu v^{l+1}$, by \ZNonLin, $Z^{h^{l+1}_0} = \scalarlim{\omega} Z^{\hatW^{l+1} x^l_0} + \scalarlim{\nu} Z^{v^{l+1}}$ where by \ZMatMul,
\begin{align*}
    Z^{\hatW^{l+1} x^l_0} = \hatZ^{\hatW^{l+1} x^l_0} + \dotZ^{\hatW^{l+1} x^l_0},
\end{align*}
and $\dotZ^{\hatW^{l+1} x^l_0} = 0$ by Lemma~\ref{th:dotZ-first-forward}. By \ZHat, $\hatZ^{\hatW^{l+1} x^l_0} \sim \mathcal{N}(0, \mathbb{E}[(Z^{x^l_0})^2])$, and since $\scalarlim{\omega} = 0$, $\scalarlim{\omega} \hatZ^{\hatW^{l+1} x^l_0} = 0$. Similarly, $\scalarlim{\nu} Z^{v^{l+1}} = 0$. Then, by \ZNonLin\ $Z^{x^{l+1}_0} = \sigma(Z^{h^{l+1}_0}) = \sigma(0) = 0$, which concludes the induction. \\ \\
We thus have only to deal with the last layer $L+1$ to finish the first forward pass. We have $f_0(\xi) = m^{-1} (\transpose{{(U^{L+1}(0))}} x^L_0 +v^{L+1}) = (1/m) \sum_{i=1}^m U^{L+1}_i x^L_{0,i} + m^{-1}v^{L+1}$. Since $U^{L+1}$ and $x^L_0$ are vectors in the program, $(1/m) \sum_{i=1}^m U^{L+1}_i x^L_{0,i}$ is a scalar in the program by the \Moment\ rule, and it therefore converges almost surely to $\mathbb{E}[Z^{U^{L+1}} Z^{x^L_0}]$ by the Master Theorem. Now because $U^{L+1}$ is an initial vector in the program, by definition, $Z^{U^{L+1}} = \hatZ^{U^{L+1}} \sim \mathcal{N}(0,1)$ is independent of $Z^{x^L_0}$. We thus get $\mathbb{E}[Z^{U^{L+1}} Z^{x^L_0}] = \mathbb{E}[Z^{U^{L+1}}] \mathbb{E}[Z^{x^L_0}] = 0$. On the other hand, $m^{-1} v^L$ is an initial scalar in the program which converges to $0$ almost surely, so that $f_0(\xi)$ converges almost surely to 0. 

\subsubsection{First backward pass}

\textbf{1st backward pass:} We can apply the previous reasoning of the forward pass with $\xi_0$ instead of $\xi$ and we get that $f_0(\xi_0) \rightarrow 0$ almost surely. Therefore, since $\chi_0 = \partial_2 \ell(y_0, f_0(\xi_0))$ and $\partial_2 \ell(y_0, \cdot)$ is continuous by assumption, $\chi_0 \rightarrow \partial_2 \ell(y_0, 0) =: \scalarlim{\chi}_0$ almost surely. We have $dx^L_0 = m^{-1} U^{L+1}$ which makes it a vector in the program by \NonLin, and $Z^{dx^L_0} = \scalarlim{\nu} Z^{U^{L+1}}$. Since $Z^{U^{L+1}} \sim \mathcal{N}(0, 1)$ has finite variance and $\scalarlim{\nu} = 0$, we have $Z^{dx^L_0} = 0$. $dh^L_0 = dx^L_0 \odot \sigma'(h^L_0)$ implies by \ZNonLin\ $Z^{dh^L_0} = Z^{dx^L_0}  \sigma'(Z^{h^L_0}) = 0 \times \sigma'(0) = 0$. 
\\ \\
One has:
\begin{align*}
    Z^{m dx^{L-1}_0} = \scalarlim{\omega} ( \hatZ^{\transpose{{(\hatW^L)}} (m dh^L_0)} + \dotZ^{\transpose{{(\hatW^L)}} (m dh^L_0)}),
\end{align*}
where $m dh^L_0 = U^{L+1} \odot \sigma'(h^L_0)$. By Lemma~\ref{th:dotZ-first-forward}, $\dotZ^{\transpose{{(\hatW^L)}} (mdh^L_0)} = 0$ (essentially, $\hatW^l$ never appears in the computation of $dh^L_0$), and by \ZHat, $\hatZ^{\transpose{{(\hatW^L)}} (mdh^L_0)} \sim \mathcal{N}(0, \mathbb{E}[(Z^{m dh^L_0})^2])$, and by independence of $Z^{U^{L+1}}$ and $Z^{h^L_0}$, 
\begin{align*}
    \mathbb{E}[(Z^{m dh^L_0})^2]) = \mathbb{E}[(Z^{U^{L+1}})^2]) \mathbb{E}[\sigma'(Z^{h^L_0})^2]) = \sigma'(0)^2
\end{align*}
which is finite. Since $\scalarlim{\omega} = 0$ we get $Z^{m dx^{L-1}_0} = 0$. $dh^{L-1}_0 = dx^{L-1}_0 \odot \sigma'(h^{L-1}_0)$ implies by \ZNonLin\ $Z^{m dh^{L-1}_0} = Z^{m dx^{L-1}_0}  \sigma'(Z^{h^{L-1}_0}) = 0 \times \sigma'(Z^{h^{L-1}_0}) = 0$.
\\ \\
Let $l \in [2, L]$ (which is non-empty since $L \geq 2$) and assume $Z^{m dx^l_0} = Z^{m dh^l_0} = 0$. $m dx^{l-1}_0 = \omega \transpose{{(\hatW^l)}} (m dh^l_0)$ implies by \ZMatMul
\begin{align*}
    Z^{m dx^{l-1}_0} = \scalarlim{\omega} ( \hatZ^{\transpose{{(\hatW^l)}} (m dh^l_0)} + \dotZ^{\transpose{{(\hatW^l)}} (m dh^l_0)}).
\end{align*}
By Lemma~\ref{th:dotZ-first-forward}, $\dotZ^{\transpose{{(\hatW^l)}} (m dh^l_0)} = 0$, and by \ZHat, $\hatZ^{\transpose{{(\hatW^l)}} (m dh^l_0)} \sim \mathcal{N}(0, \mathbb{E}[(Z^{m dh^l_0})^2])$. By the assumption above, $\mathbb{E}[(Z^{m dh^l_0})^2]) = 0$, and since $\scalarlim{\omega} = 0$ we get $Z^{m dx^{l-1}_0} = 0$. $dh^{l-1}_0 = dx^{l-1}_0 \odot \sigma'(h^{l-1}_0)$ implies by \ZNonLin\ $Z^{m dh^{l-1}_0} = Z^{m dx^{l-1}_0}  \sigma'(Z^{h^{l-1}_0}) = 0 \times \sigma'(Z^{h^{l-1}_0})$. $Z^{h^{l-1}_0}$ is not $0$ if $l=2$, but since it is Gaussian with finite variance, and $\sigma'$ is pseudo-Lipschitz by assumption,  $\sigma'(Z^{h^{l-1}_0})$ is finite almost surely, and $Z^{m dh^{l-1}_0} = 0$ almost surely, which concludes the induction.

\subsection{Induction step}

\textbf{Induction:} Since we proved the result of the theorem for $t=0$ in the first forward pass, we might as well assume $t \geq 1$. Let $s \in [0, t-1]$ be an integer. In all that follows, for any $r \in [0, s]$, for $z \in \{ h^l_r, x^l_r, dh^l_r, dx^l_r  \}$, we use $z$ to denote $z(\xi_r)$. We make the following induction hypothesis: for any $r \in [0, s]$
\begin{align*}
    \begin{cases}
        Z^{h^1_r} = Z^{U^1 \xi_r + v^1}  \sim \mathcal{N}(0, ||\xi_r||^2 + 1) \\
        Z^{h^l_r} = 0 \ \text{ almost surely}, \ l \in [2, L] \\
        f_r(\xi_r), f_r(\xi) \rightarrow 0 \ \text{ almost surely } \\
        \chi_r \rightarrow \scalarlim{\chi}_r := \partial_2 \ell(y_r, 0) \  \text{ almost surely }\\
        Z^{mdx^l_r} = Z^{mdh^l_r} = 0 \ \text{ almost surely}, \ l \in [1, L-1], \\
        Z^{mdx^L_r} = U^{L+1}.
    \end{cases}
\end{align*}
The aim is then to prove the same claims for $r=s+1$. Let us first start with the expressions of $\Delta W^l(s+1)$ and $\Delta B^l(s+1)$. We will use Equation~\eqref{eq:delta-W} and the fact that $c_l + 2 \geq 0$ if $l \in [2, L]$, and $c_l + 1 \geq 0$ for $l =1$, and $l = L+1$. We have by Equations~\eqref{eq:delta-W} and~\eqref{eq:delta-W-L+1}
\begin{align*}
    &\Delta W^1(s+1) = -\eta m^{-c_l} \sum_{r=0}^s \chi_r dh^1_r \transpose{\xi_r}, \\
    &\Delta W^l(s+1) = -\eta m^{-(2 + c_l)}\sum_{r=0}^s \chi_r dh^l_r \transpose{{(x^{l-1}_r)}}, \quad l \in [2, L], \\
    &\Delta W^{L+1}(s+1) = -\eta m^{-(1 +c_{L+1})} \sum_{r=0}^s \chi_r x^L_r / m,
\end{align*}
and by Equations~\eqref{eq:delta-B} and~\eqref{eq:delta-B-L+1}
\begin{align*}
    &\Delta B^1(s+1) = -\eta m^{-c_l} \sum_{r=0}^s \chi_r dh^1_r, \\
    &\Delta B^l(s+1) = -\eta m^{-(2 + c_l)}\sum_{r=0}^s \chi_r dh^l_r, \quad l \in [2, L], \\
    &\Delta B^{L+1}(s+1) = -\eta m^{-(1 +c_{L+1})} \sum_{r=0}^s \chi_r / m.
\end{align*}
In the following, we use for $z \in \{ h^l_{s+1}, x^l_{s+1}, dh^l_{s+1}, dx^l_{s+1}  \}$, we use $z$ to denote $z(\xi)$ (and not $z(\xi_{s+1})$ for now). Using that in the Naive-IP, $c_1 = c_{L+1} = -1$, and $c_l = -2$ for $l \in [2, L]$, we have 
\begin{align*}
    &\Delta W^1(s+1) \xi + \Delta B^1(s+1) = -\eta \sum_{r=0}^s (\transpose{\xi_s} \xi + 1) \chi_r (m dh^1_r), \\
    &\Delta W^l(s+1) x^{l-1}_{s+1} + \Delta B^l(s+1) = -\eta \sum_{r=0}^s \chi_r \frac{(\transpose{{(x^{l-1}_{r})}}  x^{l-1}_{s+1}) + 1}{m}  (mdh^l_r) , \ \ l \in [2, L], \\
    &\transpose{{(\Delta W^{L+1}(s+1))}} x^{L}_{s+1} + \Delta B^{L+1}(s+1) = -\eta \sum_{r=0}^s \chi_r \frac{\transpose{{(x^L_r)}} x^{L}_{s+1} + 1}{m}.
\end{align*}
To prove the claims above for $r=s+1$, we will first induct from $l=1$ to $l=L$ for the forward pass and then induct from $l=L$ to $l=1$ for the backward pass.

\subsubsection{Forward pass at step $s+1$}

\textbf{Forward pass at step $(s+1)$:} $h^1_{s+1} = U^{1}\xi + v^1 + \Delta W^1(s+1) \xi + \Delta b^1(s+1)$ and by \ZNonLin 
\begin{align*}
    Z^{h^1_{s+1}} &= \hatZ^{U^{1}\xi} - \eta \sum_{r=0}^s (\transpose{\xi_s} \xi + 1) \scalarlim{\chi}_r \underbrace{Z^{m dh^1_r}}_{0 \ a.s.} \\
    &= \hatZ^{U^{1}\xi} = Z^{h^1_0(\xi)} \ \ \text{ almost surely}.
\end{align*}
Note that the scalars $(\chi_r)_{0 \leq r \leq s}$ are now valid scalars in the program by the induction hypothesis which allows applying the Tensor Program rules with those scalars as well as the master theorem. This gives $Z^{h^1_{s+1}} \sim \mathcal{N}(0, ||\xi||^2 + 1)$, and we then have $Z^{x^1_{s+1}} = \sigma(\hatZ^{h^1_0(\xi)}) = Z^{x^1_0(\xi)}$ for which we have already proven  $\mathbb{E}[(Z^{x^1_0(\xi)})^2] < \infty$. 
\begin{align*}
    h^2_{s+1} = \omega \hatW^2 x^1_{s+1} + \tau v^2 - \eta \sum_{r=0}^s \chi_r \frac{\transpose{{(x^{1}_{r})}}  x^{1}_{s+1} + 1}{m}  (mdh^2_r).
\end{align*}
Because $x^1_{s+1}$ is a vector in the program, by \ZMatMul
\begin{align*}
    Z^{\hatW^2 x^1_{s+1}} = \hatZ^{\hatW^2 x^1_{s+1}} + \dotZ^{\hatW^2 x^1_{s+1}},
\end{align*}
and because $Z^{x^1_{s+1}} = \sigma(\hatZ^{U^1 \xi + v^1})$ is only a function of the initial vectors $U^1 \xi$ and $v^1$, and not of any vector computed used $\transpose{{(\hatW^2)}}$, $\dotZ^{\hatW^2 x^1_{s+1}} = 0$ by \ZDot, and $\hatZ^{\hatW^2 x^1_{s+1}} \sim \mathcal{N}(0, \mathbb{E}[(Z^{x^1_{s+1}})^2])$ is a Gaussian with finite variance by \ZHat. $\transpose{{(x^{1}_{r})}}  x^{1}_{s+1} / m$ is a valid scalar in the program by the moment rule, and by the Master theorem, 
\begin{align*}
    (\transpose{{(x^{1}_{r})}}  x^{1}_{s+1} + 1)/ m \xrightarrow[m \rightarrow \infty]{a.s.} \mathbb{E}[Z^{x^1_r} Z^{x^1_{s+1}}] = \mathbb{E}[\sigma(Z^{U^1 \xi_r + v^1}) \sigma(Z^{U^1 \xi + v^1})],
\end{align*}
and because $U^1\xi_r$, $v^1$ and  $U^1 \xi$ are initial vectors in the program, $(Z^{U^1 \xi_r + v^1}, Z^{U^1 \xi + v^1})$ is jointly Gaussian by definition with finite covariance matrix 
\begin{align*}
    \begin{pmatrix}
        ||\xi_r||^2 + 1 & \transpose{\xi_r} \xi +1 \\
        \transpose{\xi} \xi_r +1 & ||\xi||^2 +1
    \end{pmatrix},
\end{align*}
which ensures the expectation above is finite because $\sigma$ is polynomially bounded since it is pseudo-Lipschitz. We thus habe
\begin{align*}
    Z^{h^2_{s+1}} &= 0 \times \underbrace{\hatZ^{\hatW^2 x^1_{s+1}}}_{< \infty} + 0 \times Z^{v^2} - \eta \sum_{r=0}^s \underbrace{\scalarlim{\chi}_r}_{< \infty} \underbrace{\mathbb{E}[Z^{x^1_r} Z^{x^1_{s+1}}]}_{<\infty} \underbrace{Z^{mdh^2_r}}_{0} \\
    Z^{h^2_{s+1}} &= 0.
\end{align*}
We then get $Z^{x^2_{s+1}} = \sigma(0) = 0$ and thus $\mathbb{E}[(Z^{x^2_{s+1}})^2] = 0$. 
\\\\
Let $l \in [2, L-1]$ and assume $Z^{h^l_{s+1}} = 0$. 
\begin{align*}
    h^{l+1}_{s+1} = \omega \hatW^{l+1} + \tau v^{l+1} + x^l_{s+1} - \eta \sum_{r=0}^s \chi_r \frac{\transpose{{(x^{l}_{r})}}  x^{l}_{s+1} + 1}{m}  (mdh^{l+1}_r).
\end{align*}
Now, since $x^l_{s+1}$ is a vector in the program, $(\transpose{{(x^{l}_{r})}}  x^{l}_{s+1} +1 )/ m$ is a scalar in the program by the  \Moment\ operation, which converges almost surely, by the Master Theorem, to 
\begin{align*}
    \mathbb{E}[Z^{x^l_r} Z^{x^l_{s+1}}] = \mathbb{E}[\sigma(Z^{h^l_r}) \sigma(Z^{h^l_{s+1}})] = \sigma(0)^2 = 0.
\end{align*}
By \ZNonLin,
\begin{align*}
    Z^{h^{l+1}_{s+1}} &= \scalarlim{\omega}  Z^{\hatW^{l+1} x^l_{s+1}} + \scalarlim{\tau} Z^{v^{l+1}} - \eta \sum_{r=0}^s \underbrace{\scalarlim{\chi}_r}_{< \infty} \underbrace{\mathbb{E}[Z^{x^l_r} Z^{x^l_{s+1}}]}_{<\infty} \underbrace{Z^{m dh^{l+1}_r}}_{0}.
\end{align*}
On the other hand, 
\begin{align*}
    Z^{\hatW^{l+1} x^l_{s+1}} = \hatZ^{\hatW^{l+1} x^l_{s+1}} + \dotZ^{\hatW^{l+1} x^l_{s+1}},
\end{align*}
and since $Z^{x^l_{s+1}} = \sigma(Z^{h^l_{s+1}}) = \sigma(0) = 0$ is a constant almost surely, the derivatives defining $\dotZ$ are equal to 0 (its expression as a function of the previous $\hatZ$ is a constant because any $\hatZ$ gets multiplied by 0) so that $\dotZ^{\hatW^{l+1} x^l_{s+1}} = 0$, and $\hatZ^{\hatW^{l+1} x^l_{s+1}} \sim \mathcal{N}(0, \mathbb{E}[(Z^{x^l_{s+1}})^2]) = 0$. With $\scalarlim{\omega}_{l+1} = 0$ and $\scalarlim{\tau} = 0$, this yields $Z^{h^{l+1}_{s+1}} = 0$, and therefore $\mathbb{E}[(Z^{x^{l+1}_{s+1}})^2] = \mathbb{E}[\sigma(Z^{h^{l+1}_{s+1}})^2] = \sigma(0)^2 = 0$. 
\\\\
We now deal with the last layer $l=L+1$ in the forward pass. 
\begin{align*}
    f_{s+1}(\xi) = m^{-1} \transpose{{(U^{L+1})}} x^L_{s+1} -\eta \sum_{r=0}^s \chi_r \frac{\transpose{{(x^L_r)}} x^{L}_{s+1} + 1}{m}.
\end{align*}
Since $U^{L+1}, x^L_{s+1}, x^L_r$ are vectors in the program, by the Master Theorem, we have:
\begin{align*}
    m^{-1} \transpose{{(U^{L+1})}} x^L_{s+1} \xrightarrow[m \rightarrow \infty]{a.s.} \mathbb{E}[Z^{U^{L+1}} Z^{x^L_{s+1}}] = \sigma(0) \underbrace{\mathbb{E}[Z^{U^{L+1}}]}_{0} = 0,
\end{align*}
and 
\begin{align*}
    \frac{\transpose{{(x^L_r)}} x^{L}_{s+1} + 1}{m} \xrightarrow[m \rightarrow \infty]{a.s.} \mathbb{E}[Z^{x^{L}_r} Z^{x^{L}_{s+1}}] = \sigma(0)^2 = 0.
\end{align*}
We thus get 
\begin{align*}
    \sum_{r=0}^s \chi_r \frac{\transpose{{(x^L_r)}} x^{L}_{s+1} + 1}{m} \xrightarrow[m \rightarrow \infty]{a.s.} \sum_{r=0}^s \underbrace{ \scalarlim{\chi}_r}_{< \infty} \times 0 = 0.
\end{align*}
This shows that 
\begin{align*}
    f_{s+1}(\xi) \xrightarrow[m \rightarrow \infty]{a.s.} 0.
\end{align*}
Doing the exact same reasoning as above with $\xi_{s+1}$ instead of $\xi$ for $r=s+1$ gives us the first 3 claims of the induction hypothesis for $r=s+1$. 

\subsubsection{Backward pass at step $s+1$}

\textbf{Backward pass at step $(s+1)$:} the fourth claim $\chi_{s+1} \rightarrow \scalarlim{\chi}_{s+1} = \partial_2 \ell(y_{s+1}, 0)$ is a consequence of the fact that $f_{s+1}(\xi_{s+1}) \rightarrow 0$ almost surely, combined with the facts that $\chi_{s+1} = \partial_2 \ell(y_{s+1}, f_{s+1}(\xi_{s+1}))$ and that $\partial_2 \ell(y_{s+1}, \cdot)$ is continuous by assumption. In all the rest of this proof, for $z \in \{ h^l_{s+1}, x^l_{s+1}, dh^l_{s+1}, dx^l_{s+1} \}$ we now use $z$ to denote $z(\xi_{s+1})$ and not $z(\xi)$ anymore. 
\\\\
$mdx^L_{s+1} = w^{L+1}(s+1) = U^{L+1} - \eta \sum_{r=0}^s \chi_r x^L_r$ yields by \ZNonLin
\begin{align*}
    Z^{mdx^L_{s+1}} &=  Z^{U^{L+1}} - \eta \sum_{r=0}^s \scalarlim{\chi}_r \underbrace{Z^{x^L_r}}_{0} \\
    Z^{dx^L_{s+1}} &= Z^{U^{L+1}}.
\end{align*}
We thus have $Z^{dx^L_{s+1}} = \scalarlim{\tau} Z^{m dx^L_{s+1}} = 0$, and $Z^{dh^L_{s+1}} = Z^{dx^L_{s+1}} \sigma'(Z^{h^L_{s+1}}) = 0 \times \sigma'(0) = 0$ almost surely. 
\\\\
One has:
\begin{align*}
    m dx^{L-1}_{s+1} = \omega \transpose{{(\hatW^L)}}  (m dh^L_{s+1}) - \eta \sum_{r=0}^s \chi_r \frac{\transpose{{(m dh^l_r)}} m dh^L_{s+1}}{m} x^{L-1}_r,
\end{align*}
so that 
\begin{align*}
    Z^{m dx^{L-1}_{s+1}} = \scalarlim{\omega} Z^{\transpose{{(\hatW^L)}}  (m dh^L_{s+1})} - \eta \sum_{r=0}^s  \scalarlim{\chi}_r \mathbb{E}[Z^{m dh^l_r} Z^{m dh^l_{s+1}}]  Z^{x^{L-1}_r}.
\end{align*}
Now, we have $\mathbb{E}[Z^{m dh^l_r} Z^{m dh^l_{s+1}}] = \mathbb{E}[(Z^{U^{L+1}})^2] \sigma'(0)^2 = \sigma'(0)^2$ which is finite. On the other hand, because $Z^{m dh^L_{s+1}} = Z^{U^{L+1}}$ does not depend on $Z^{\hatW^L}$, we  get that $\dotZ^{\transpose{{(\hatW^L)}}  (m dh^L_{s+1})} = 0$ and $\hatZ^{\transpose{{(\hatW^L)}}  (m dh^L_{s+1})} \sim \mathcal{N}(0,1)$ so that $\scalarlim{\omega} Z^{\transpose{{(\hatW^L)}}  (m dh^L_{s+1})} = 0$. It  follows that $Z^{m dx^{L-1}_{s+1}} = 0$, and since $Z^{m dh^{L-1}_{s+1}} = Z^{m dx^{L-1}_{s+1}}  \sigma'(Z^{h^{L-1}_0})$ we also get $Z^{m dh^{L-1}_{s+1}} = 0$. 
\\\\
Let $l \in [2, L]$ and assume $Z^{m dx^l_{s+1}} = Z^{m dh^l_{s+1}} = 0$. Then 
\begin{align*}
    m dx^{l-1}_{s+1} = \omega \transpose{{(\hatW^l)}} (mdh^l_{s+1}) - \eta \sum_{r=0}^s \chi_r \frac{\transpose{{(mdh^l_r)}} mdh^l_{s+1}}{m} x^{l-1}_r.
\end{align*}
Since $\transpose{{(m dh^l_r)}}$ and $m dh^l_{s+1}$ are vectors in the program, $\transpose{{(m dh^l_r)}} m dh^l_{s+1} / m$ is a scalar in the program which converges almost surely, by the Master Theorem, to $\mathbb{E}[Z^{m dh^l_r} Z^{m dh^l_{s+1}}] = 0$. On the other hand $\dotZ^{\transpose{{(\hatW^l)}} (mdh^l_{s+1})} = 0$ because $Z^{m dh^l_{s+1}}$ is a constant (its expression in function of the previous $\hatZ$ is constant equal to 0), and $\hatZ^{\transpose{{(\hatW^l)}} (mdh^l_0)} \sim \mathcal{N}(0, \mathbb{E}[(Z^{m dh^l_0})^2])$ is almost surely 0 because $\mathbb{E}[(Z^{m dh^l_0})^2]) = 0$. By \ZNonLin\ we have
\begin{align*}
    Z^{dx^{l-1}_{s+1}} &= \underbrace{\scalarlim{\omega}}_{0} \underbrace{\hatZ^{\transpose{{(\hatW^l)}} dh^l_0}}_{0} - \eta \sum_{r=0}^s \underbrace{\scalarlim{\chi}_r}_{ < \infty} \underbrace{\mathbb{E}[Z^{dh^l_r} Z^{dh^l_{s+1}}]}_{0} \underbrace{Z^{x^{l-1}_r}}_{0} \\
    Z^{dx^{l-1}_{s+1}} &= 0.
\end{align*}
Finally, $Z^{dh^{l-1}_{s+1}} = Z^{dx^{l-1}_{s+1}} \sigma'(Z^{h^{l-1}_{s+1}})$  yields $Z^{dh^{l-1}_{s+1}} = 0$ because $Z^{h^{l-1}_{s+1}} = 0$. This proves the last claim of the induction hypothesis for $r=s+1$ and thus concludes the induction and therefore the proof. 
\end{proof}

\section{Preliminaries on positively homogeneous functions}\label{sec:pos-homog}

In this section we give a description of activation functions $\sigma$ satisfying Assumption~\ref{ass:smooth-homogeneous-act}. The fact that $\sigma$ is positively $p$-homogeneous translates as
\begin{align*}
    \sigma(z) =
    \begin{cases}
    \alpha z^p \quad \text{if } z \geq 0 \\
    \beta |z|^p \quad \text{if } z < 0.
    \end{cases}
\end{align*}
Additionally, one has
\begin{align*}
    \sigma'(z) =
    \begin{cases}
    \alpha p z^{p-1} \quad \text{if } z \geq 0 \\
    - \beta p |z|^{p-1} \quad \text{if } z < 0,
    \end{cases}
\end{align*}
so that $\sigma'$ is positively $(p-1)$-homogeneous with $\sigma'(0) = 0$. Since $p \geq 2$, both $\sigma$ and $\sigma'$ are continuous and $\sigma'$ is differentiable everywhere except at $0$ if $p=2$. It is immediate to check that both $\sigma$ and $\sigma'$ are pseudo-Lipschitz and that $\sigma$, $\sigma'$ and $\sigma''$ are also polynomially bounded functions. The non-negativity assumption on $\sigma$ gives $\alpha \geq 0, \beta \geq 0$, the fact that $\sigma$ is not identically $0$ leads to $\alpha >0 \text{ or } \beta > 0$, and finally the fact that $\sigma$ has faster growth on the positive part of the real line yields $\alpha > \beta \geq 0$. One  notices that the faster growth assumption is stronger than the assumption that $\sigma$ is not identically zero, and the latter could thus be gotten rid of. The conditions on $\alpha$ and $\beta$ can thus simply be summarized as 
\begin{align}\label{ass:alpha-beta}
    \alpha > \beta \geq 0    
\end{align}
With these conditions, we have that $\sigma(z) > 0$ for $z > 0$, and $\sigma'(z) z \geq 0$ for $z \neq 0$, that is $\text{sign}(\sigma'(z)) = \text{sign}(z)$. 

\section{Preliminaries for~\autoref{th:formal-no-constant-lr} and \autoref{th:non-trivial-ipllr}}

In all this section since we assume positive homogeneity of the activation function, we also consider parameterizations with no bias terms except at the first layer.

\subsection{Tilde variables}

\begin{definition}[Scaleless variables at initialization]\label{def:tilde-variables}
Let $\xi \in \mathbb{R}^d$ be an input vector. Independently of any parameterization, we consider the following variables \quoting{without scale} at initialization :
\begin{align*}
    &\begin{cases}
        \tildeh^1_0(\xi) := U^1 \xi + v^1 \\
        \tildex^1_0(\xi) := \sigma(\tildeh^1_0(\xi))
    \end{cases}
    &&\forall l \in [2,L],
    \begin{cases}
        \tildeh^l_0(\xi) := \hatW^l \tildex^{l-1}_0(\xi) \\
        \tildex^l_0(\xi) := \sigma(\tildeh^l_0(\xi))
    \end{cases}
\end{align*}
and define $\tildef_0(\xi) := \transpose{{(\hatW^{L+1})}} \tildex^L_0$, as well as
\begin{align*}
    &\begin{cases}
        d\tildex^L_0(\xi) := U^{L+1} \\
        d\tildeh^L_0(\xi) := d\tildex^L_0(\xi) \odot \sigma'(\tildeh^L_0(\xi))
    \end{cases}
    \forall l \in [L-1],
    \begin{cases}
        d\tildex^l_0(\xi) := \transpose{{(\hatW^{l+1})}} d\tildeh^{l+1}_0(\xi) \\
        d\tildeh^l_0(\xi) := d\tildex^l_0(\xi) \odot \sigma'(\tildeh^l_0(\xi))
    \end{cases}
\end{align*}
where the $\hatW^l$ are defined in Equation~\eqref{eq:hatW}.
\end{definition}

\begin{remark}
The tilde variables are independent of the choice of parameterization because, independently of the parameterization, $\hatW^l_{pq} = m^{-1/2} U^l_{pq} \sim \mathcal{N}(0, 1/m)$ for $l \in [2, L+1]$ and $\hatW^1_{pq} = U^1_{pq} \sim \mathcal{N}(0, 1)$. Those variables essentially reproduce the computations that take place in the forward (without any bias terms except at the first layer) and backward passes of any ac-parameterization but the magnitudes (the multiplying scalars $\omega_l$) have been set to $1$, essentially removing the additional scales which lead to explosion or vanishing as $m \rightarrow \infty$. The tilde variables of the forward pass at initialization correspond to the NTK parameterization. However this is not the case for the backward pass as the backward pass of NTK vanishes at initialization whereas the corresponding tilde variables have positive ($> 0$) variance as shown in Lemma~\ref{th:tilde-pos-variance} below.
\end{remark}

\begin{lemma}[Scaleless variables have positive and finite second moment]\label{th:tilde-pos-variance}
Let $\xi \in \mathbb{R}^d$ be an input vector, and consider a non-linearity $\sigma$ satisfying Assumption~\ref{ass:smooth-act}. Then, dropping the dependency of the tilde variables on $\xi$, one has that for any $l \in [1,L]$, and for any $z \in \{\tildeh^l_0, \tildex^l_0, d\tildeh^l_0, d\tildex^l_0 \}$, the second moment is positive and finite: $ 0 < \mathbb{E}[(Z^{z})^2] < \infty$. More precisely, one has:
\begin{align*}
    &Z^{\tildeh^1_0} \sim \mathcal{N}(0, ||\xi||^2 + 1),  &&0 < \mathbb{E}[(Z^{\tildex^1_0})^2] < \infty  \\
    &Z^{\tildeh^l_0} \sim \mathcal{N}(0, V^2_{h,l}),  &&0 < V^2_{h,l}:= \mathbb{E}[(Z^{\tildex^{l-1}_0})^2] < \infty, &&& l \in [2, L], \\
    &0 < \mathbb{E}[(Z^{\tildex^{l}_0})^2] < \infty,  &&\, &&& l \in [2, L], \\
    &\tildef_0(\xi) \xrightarrow[m \rightarrow \infty]{law} \mathcal{N}(0, V^2_f),  &&0 < V^2_f:= \mathbb{E}[(Z^{\tildex^{L}_0})^2] < \infty, \\
    &Z^{d\tildex^L_0} \sim \mathcal{N}(0, 1) &&\, &&&\, \\
    &Z^{d\tildex^l_0} \sim \mathcal{N}(0, V^2_{dx,l}),  &&0 < V^2_{dx,l}:= \mathbb{E}[(Z^{d\tildeh^{l+1}_0})^2] < \infty, &&& l \in [1, L-1],\\
    &0 < \mathbb{E}[(Z^{d\tildeh^{l}_0})^2] < \infty,  &&\, &&& l \in [1, L].
\end{align*}
\end{lemma}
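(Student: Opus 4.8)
The plan is to prove Lemma~\ref{th:tilde-pos-variance} by a double induction that mirrors the structure of the tilde variables themselves: an induction on $l$ from $1$ to $L$ for the forward quantities $\tildeh^l_0, \tildex^l_0$, and then an induction on $l$ from $L$ down to $1$ for the backward quantities $d\tildeh^l_0, d\tildex^l_0$. The key technical engine throughout is Lemma~\ref{th:pos-finite-moment}: applying a polynomially bounded non-linearity which is not a.e.~zero to a Gaussian with \emph{positive} and \emph{finite} variance produces a random variable with positive and finite second moment. Since $\sigma$ and $\sigma'$ are pseudo-Lipschitz (hence polynomially bounded) by Assumption~\ref{ass:smooth-act}, and neither is identically zero, this lemma applies to each $\sigma(\cdot)$ and $\sigma'(\cdot)$ step. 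I will also use Lemma~\ref{th:dotZ-first-forward} to kill the $\dotZ$ terms in each \texttt{ZMatMul} so that each $Z^{\hatW^l \tildex^{l-1}_0}$ is \emph{purely} Gaussian with variance $\mathbb{E}[(Z^{\tildex^{l-1}_0})^2]$, and similarly $Z^{\transpose{(\hatW^{l+1})} d\tildeh^{l+1}_0}$ is purely Gaussian with variance $\mathbb{E}[(Z^{d\tildeh^{l+1}_0})^2]$; note Lemma~\ref{th:dotZ-first-forward} applies here because the tilde variables have $\omega_l=1$ for all $l$, which is the $a_1=0$, $a_l=1/2$ case (the NTK parameterization in the forward pass).

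\textbf{Forward pass.} For the base case $l=1$, $\tildeh^1_0 = U^1\xi + v^1$ is a sum of two initial vectors whose $\hatZ$'s are independent Gaussians $\mathcal{N}(0,\|\xi\|^2)$ and $\mathcal{N}(0,1)$, so $Z^{\tildeh^1_0}\sim\mathcal{N}(0,\|\xi\|^2+1)$, which has positive (it is at least $1$) and finite variance; then $Z^{\tildex^1_0}=\sigma(Z^{\tildeh^1_0})$ has $0<\mathbb{E}[(Z^{\tildex^1_0})^2]<\infty$ by Lemma~\ref{th:pos-finite-moment}. For the inductive step, assume $0<\mathbb{E}[(Z^{\tildex^{l-1}_0})^2]<\infty$. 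Then $\tildeh^l_0 = \hatW^l\tildex^{l-1}_0$ gives, via \texttt{ZMatMul} and Lemma~\ref{th:dotZ-first-forward}, $Z^{\tildeh^l_0}=\hatZ^{\hatW^l\tildex^{l-1}_0}\sim\mathcal{N}(0,V_{h,l}^2)$ with $V_{h,l}^2:=\mathbb{E}[(Z^{\tildex^{l-1}_0})^2]\in(0,\infty)$; applying Lemma~\ref{th:pos-finite-moment} again gives $0<\mathbb{E}[(Z^{\tildex^l_0})^2]<\infty$, closing the induction. For the output, $\tildef_0(\xi)=\transpose{(\hatW^{L+1})}\tildex^L_0$ with $\hatW^{L+1}=m^{-1/2}U^{L+1}$ independent of $\tildex^L_0$, and $\tfrac1m\|\tildex^L_0\|^2\to\mathbb{E}[(Z^{\tildex^L_0})^2]=:V_f^2$ almost surely by the Master Theorem~\ref{th:master-theorem}; hence Lemma~\ref{th:gaussian-output} yields $\tildef_0(\xi)\xrightarrow{law}\mathcal{N}(0,V_f^2)$ with $V_f^2\in(0,\infty)$.

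\textbf{Backward pass.} The base case is $l=L$: $d\tildex^L_0=U^{L+1}$ is an initial vector, so $Z^{d\tildex^L_0}=\hatZ^{U^{L+1}}\sim\mathcal{N}(0,1)$; then $d\tildeh^L_0 = d\tildex^L_0\odot\sigma'(\tildeh^L_0)$ gives $Z^{d\tildeh^L_0}=Z^{d\tildex^L_0}\,\sigma'(Z^{\tildeh^L_0})$. Crucially, $\hatZ^{U^{L+1}}$ is independent of $\hatZ^{\hatW^L\tildex^{L-1}_0}=Z^{\tildeh^L_0}$ by the independence clause of \texttt{ZHat} (since $U^{L+1}$ is an initial vector), so $\mathbb{E}[(Z^{d\tildeh^L_0})^2]=\mathbb{E}[(Z^{d\tildex^L_0})^2]\,\mathbb{E}[\sigma'(Z^{\tildeh^L_0})^2]=\mathbb{E}[\sigma'(Z^{\tildeh^L_0})^2]$, which is in $(0,\infty)$ by Lemma~\ref{th:pos-finite-moment} applied to $\sigma'$ (not a.e.~zero, polynomially bounded) and the finite positive variance of $Z^{\tildeh^L_0}$. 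For the inductive step, assume $0<\mathbb{E}[(Z^{d\tildeh^{l+1}_0})^2]<\infty$. Then $d\tildex^l_0=\transpose{(\hatW^{l+1})}d\tildeh^{l+1}_0$ gives, via \texttt{ZMatMul} and Lemma~\ref{th:dotZ-first-forward}, $Z^{d\tildex^l_0}=\hatZ^{\transpose{(\hatW^{l+1})}d\tildeh^{l+1}_0}\sim\mathcal{N}(0,V_{dx,l}^2)$ with $V_{dx,l}^2:=\mathbb{E}[(Z^{d\tildeh^{l+1}_0})^2]\in(0,\infty)$; and again by the \texttt{ZHat} independence rule this Gaussian is independent of $Z^{\tildeh^l_0}$ (which was generated using only $\hatW^l$ and the initial vectors, not $\hatW^{l+1}$), so $\mathbb{E}[(Z^{d\tildeh^l_0})^2]=V_{dx,l}^2\,\mathbb{E}[\sigma'(Z^{\tildeh^l_0})^2]\in(0,\infty)$, closing the induction.

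\textbf{Main obstacle.} The routine parts are the applications of Lemma~\ref{th:pos-finite-moment}; the delicate point, which I would state carefully, is the \emph{independence} between the fresh Gaussian $\hatZ^{\transpose{(\hatW^{l+1})}d\tildeh^{l+1}_0}$ (resp.~$\hatZ^{U^{L+1}}$) and $Z^{\tildeh^l_0}$, needed to factor the second moment of $d\tildeh^l_0$ as a product and thereby keep it \emph{strictly positive} (if the two were anti-correlated we could not rule out cancellation). This is exactly what the mutual-independence clause of \texttt{ZHat} delivers, since $Z^{\tildeh^l_0}$ is a function of $\hatZ^{\hatW^l\tildex^{l-1}_0}$ and the initial Gaussians $\hatZ^{U^1\xi},\hatZ^{v^1}$, all of which are among the variables declared independent of the $\hatW^{l+1}$-block; I would spell out this bookkeeping once, for the $l=L$ case and once for the generic step, and note that the $\dotZ$ terms vanish by Lemma~\ref{th:dotZ-first-forward} so no correlation is introduced through the $\dotZ$ channel either. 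Everything else is a mechanical unrolling of the \texttt{Z}-rules.
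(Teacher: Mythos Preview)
Your proposal is correct and follows essentially the same route as the paper: a forward induction on $l$ using \texttt{ZMatMul} together with Lemma~\ref{th:dotZ-first-forward} to obtain $Z^{\tildeh^l_0}=\hatZ^{\hatW^l\tildex^{l-1}_0}$ Gaussian with variance $\mathbb{E}[(Z^{\tildex^{l-1}_0})^2]$, then Lemma~\ref{th:pos-finite-moment}; the output via the Master Theorem and Lemma~\ref{th:gaussian-output}; and a backward induction exploiting the \texttt{ZHat} independence to factor $\mathbb{E}[(Z^{d\tildeh^l_0})^2]$ as $\mathbb{E}[(Z^{d\tildex^l_0})^2]\,\mathbb{E}[\sigma'(Z^{\tildeh^l_0})^2]$. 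The paper argues identically, including the same independence bookkeeping and the same invocations of Lemma~\ref{th:pos-finite-moment} for $\sigma$ and $\sigma'$.
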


\begin{remark}\label{remark:tilde-pos-finite-variance}
As shown in Appendix~\ref{app:exp-relu}, those expectations, as well as the means (first and second moment) are tractable with $\sigma = \relu$ and have simple expressions (for the first forward and backward passes). As shown in Appendices~\ref{app:relu-tilde-forward} and~\ref{app:relu-tilde-backward}, the recursive formulas for the variances of the forward and backward passes can be unrolled, and to avoid explosion or vanishing with the depth $L$, one must initialize the \iid Gaussian entries with a standard deviation of $\sqrt{2}$ to preserve the norm of the input signal. 
\end{remark}

\begin{proof}
Let $\xi \in \mathbb{R}^d $ be an input vector. We omit the dependency of the forward and backward passes on $\xi$ for simplicity. We first induct from $l=1$ to $l=L$ for the forward pass and then from $l=L$ to $l=1$ for the backward pass. $\tildeh^1_0 = U^1 \xi + v^1$ is the sum of two initial vectors in the program, which follows two independent Gaussian laws by definition: $Z^{U^1 \xi} \sim \mathcal{N}(0, ||\xi||^2)$, and $Z^{v^1} \sim \mathcal{N}(0, 1)$ independently of $Z^{U^1 \xi}$. We thus have $Z^{\tildeh^1_0} \sim \mathcal{N}(0, ||\xi||^2 + 1)$, which shows its  variance is finite and $>0$, and by Lemma~\ref{th:pos-finite-moment}, $0 < \mathbb{E}[(Z^{\tildex^1_0})^2] < \infty$ since $Z^{\tildex^1_0} = \sigma(Z^{\tildeh^1_0})$. 
\\\\
Now let $l \in [1, L-1]$ and assume $Z^{\tildeh^l_0} \sim \mathcal{N}(0, V^2_{h,l})$ with $0 < V^2_{h,l} < \infty$, and $0 < \mathbb{E}[(Z^{\tildex^l_0})^2] < \infty$. By \ZMatMul, $Z^{\tildeh^{l+1}_0} = Z^{\hatW^{l+1} x^l_0}$ which is equal to $\hatZ^{\hatW^{l+1} x^l_0}$ by Lemma~\ref{th:dotZ-first-forward}. now by definition, $\hatZ^{\hatW^{l+1} \tildex^l_0} \sim \mathcal{N}(0, \mathbb{E}[(Z^{\tildex^l_0})^2])$, and the variance is $>0$ and finite by the induction hypothesis, so that $0 < \mathbb{E}[(Z^{\tildeh^{l+1}_0})^2] < \infty$. Now by Lemma~\ref{th:pos-finite-moment} again, since $Z^{\tildex^{l+1}_0} = \sigma(Z^{\tildeh^{l+1}_0})$, we also get that $0 < \mathbb{E}[(Z^{\tildex^{l+1}_0})^2] < \infty$ which concludes the induction for the first $L$ layers of the forward pass. 
\\\\
$\tildef_0(\xi) = \transpose{{(\hatW^{l+1})}} \tildex^L_0$ and $\hatW^{L+1}_{j} \sim \mathcal{N}(0, 1/m)$ for every $m$, and by the Master Theorem, since $||\tildex^L_0||^2/m$ is a scalar in the program defined by the moment operation, it converges almost surely to $\mathbb{E}[(Z^{\tildex^L_0})^2]$. Finally, since $\tildex^L_0$ is computed using only the $\hatW^l$ for $l \leq L$, $\tildex^L_0$ is independent of $\hatW^{l+1}$. By Lemma~\ref{th:gaussian-output}, $\tildef_0(\xi)$ converges in law towards $\mathcal{N}(0, \mathbb{E}[(Z^{\tildex^L_0})^2])$, and $0 <\mathbb{E}[(Z^{\tildex^L_0})^2] < \infty$ by the previous induction. 
\\\\
$Z^{d\tildex^L_0} = Z^{U^{L+1}}$ and since $U^{L+1}$ is an initial vector in the program whose coordinates are iid following $\mathcal{N}(0, 1)$, we have by definition $Z^{U^{L+1}} \sim \mathcal{N}(0, 1)$. $Z^{d\tildeh^L_0} = Z^{d\tildex^L_0} \sigma'(Z^{\tildeh^L_0}) = \hatZ^{U^{l+1}} \sigma'(\hatZ^{\hatW^L \tildex^{L-1}_0})$.  Now by definition in \ZHat, $\hatZ^{\hatW^L \tildex^{L-1}_0}$ is independent of $\hatZ^{U^{l+1}}$ since $U^{L+1}$ is an initial vector in the program. This yields
\begin{align*}
    \mathbb{E} [(Z^{d\tildeh^L_0})^2] &= \mathbb{E} \left[(\hatZ^{U^{L+1}})^2 \right] \, \mathbb{E} \left[\sigma'(Z^{\tildeh^L_0})^2 \right] \\
    &= 1 \times \mathbb{E} \left[\sigma'(Z^{\tildeh^L_0})^2 \right].
\end{align*}
By assumption, $\sigma'$ is pseudo-Lipschitz and thus polynomially bounded, and is not almost everywhere $0$. By the induction above, $Z^{\tildeh^L_0} \sim \mathcal{N}(0, \mathbb{E}[(Z^{\tildex^{L-1}_0})^2])$ with $0 < \mathbb{E}[(Z^{\tildex^{L-1}_0})^2] < \infty$. By Lemma~\ref{th:pos-finite-moment} we thus have $0 < \mathbb{E} [\sigma'(Z^{\tildeh^L_0})^2] < \infty$, which shows $0 < \mathbb{E} [(Z^{d\tildeh^L_0})^2] < \infty$. 
\\\\
Now let $l \in [2, L]$ and assume $Z^{d\tildex^l_0} \sim \mathcal{N}(0, V^2_{dx,l})$ with $0 < V^2_{dx,l}< \infty$, and assume $0 < \mathbb{E}[(Z^{d\tildeh^l_0})^2] < \infty$. $Z^{d\tildex^{l-1}_0} = Z^{\transpose{{(\hatW^l)}} d\tildeh^l_0}$ and $Z^{\transpose{{(\hatW^l)}} d\tildeh^l_0} = \hatZ^{\transpose{{(\hatW^l)}} d\tildeh^l_0}$ by Lemma~\ref{th:dotZ-first-forward}. By definition, $\hatZ^{\transpose{{(\hatW^l)}} d\tildeh^l_0} \sim \mathcal{N}(0, \mathbb{E}[(Z^{d\tildeh^l_0})^2])$, so that $\mathbb{E}[(Z^{d\tildex^{l-1}_0})^2] = \mathbb{E}[(Z^{d\tildeh^l_0})^2]$ and thus $0 < \mathbb{E}[(Z^{d\tildex^{l-1}_0})^2] < \infty$ by the induction hypothesis. We have
\begin{align*}
    Z^{d\tildeh^{l-1}_0} = Z^{d\tildex^{l-1}_0} \sigma'(Z^{\tildeh^{l-1}_0}) = \hatZ^{\transpose{{(\hatW^l)}} d\tildeh^l_0} \sigma'(\hatZ^{\hatW^{l-1} \tildex^{l-2}_0})
\end{align*}
if $l \geq 3$, and 
\begin{align*}
    Z^{d\tildeh^{1}_0} = Z^{d\tildex^{1}_0} \sigma'(Z^{\tildeh^{1}_0}) = \hatZ^{\transpose{{(\hatW^2)}} d\tildeh^2_0} \sigma'(\hatZ^{\hatW^{1} \xi_0 + v^1})
\end{align*}
if $l=2$. In any case, the random variable inside $\sigma'$ is independent of the other variable in the product. We thus get
\begin{align*}
        \mathbb{E} [(Z^{d\tildeh^{l-1}_0})^2] &= \underbrace{\mathbb{E} \left[(Z^{d\tildex^{l-1}_0})^2 \right]}_{ >0, \, < \infty}  \, \underbrace{\mathbb{E} \left[\sigma'(Z^{\tildeh^{l-1}_0})^2 \right]}_{>0, \, < \infty}
\end{align*}
where the bounds on the second expectation are obtained using Lemma~\ref{th:pos-finite-moment}. This concludes the induction for the backward pass and thus the proof.
\end{proof}

\subsection{Expression of the forward and backward passes of ac-parameterizations in function of the tilde variables with homogeneity}\label{app:forward-backward-homog}

\begin{lemma}[Forward pass with homogeneity at $t=0$]\label{th:homog-first-forward}
Consider any ac-parameterization of an $L$-hidden layer neural network with a $p$-homogeneous activation function, and $p \geq 1$. Let $\xi \in \mathbb{R}^d$ be an input to the network. Then, omitting the dependency of the forward pass and the tilde variables on $\xi$, one has:
\begin{align}
    h^l_0 &= \gamma_{f,l} \tildeh^l_0, \qquad l \in [1,L], \\
    x^l_0 &= (\gamma_{f,l})^p \tildex^l_0, \qquad l \in [1,L], \\
    f_0(\xi) &= \gamma_{f,L+1} \tildef_0(\xi),
\end{align}
where, for any $l \in [1, L+1]$
\begin{align*}
    \gamma_{f,l}:= \left(\prod_{k=1}^l \omega_k^{p^{l-k}} \right).
\end{align*}
\end{lemma}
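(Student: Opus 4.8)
The plan is to proceed by induction on the layer index $l$, tracking at each layer the exact scalar prefactor that separates the true forward-pass vector from its "scaleless" tilde counterpart. First I would set up the base case $l=1$: since $\omega_1 = m^{-a_1}$ and, for every ac-parameterization considered here, $a_1 = 0$ so $\omega_1 = 1$, we have $h^1_0 = W^1(0)\xi + B^1(0) = \omega_1(U^1\xi + v^1) = \gamma_{f,1}\,\tildeh^1_0$ with $\gamma_{f,1} = \omega_1^{p^0} = \omega_1$. Applying $\sigma$ entrywise and using positive $p$-homogeneity ($\sigma(\lambda z) = \lambda^p\sigma(z)$ for $\lambda > 0$, valid since $\gamma_{f,1} = \omega_1 \geq 0$), we get $x^1_0 = \sigma(\gamma_{f,1}\tildeh^1_0) = (\gamma_{f,1})^p \sigma(\tildeh^1_0) = (\gamma_{f,1})^p\tildex^1_0$. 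Here I would note explicitly that we only need homogeneity for $\lambda \geq 0$, and that $\omega_l = m^{1/2 - a_l} > 0$ for all $l$, so the positivity hypothesis in the definition of $p$-homogeneity is never violated.

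Next I would carry out the induction step. Assume $h^{l-1}_0 = \gamma_{f,l-1}\tildeh^{l-1}_0$ and $x^{l-1}_0 = (\gamma_{f,l-1})^p\tildex^{l-1}_0$ for some $l \in [2,L]$. Since there are no bias terms for $l \geq 2$ in this setting, $h^l_0 = W^l(0)x^{l-1}_0 = \omega_l \hatW^l x^{l-1}_0 = \omega_l (\gamma_{f,l-1})^p \hatW^l \tildex^{l-1}_0 = \omega_l(\gamma_{f,l-1})^p\,\tildeh^l_0$, using the definition $\tildeh^l_0 = \hatW^l\tildex^{l-1}_0$ from Definition~\ref{def:tilde-variables} and $W^l(0) = \omega_l\hatW^l$. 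It remains to check the algebraic identity $\omega_l(\gamma_{f,l-1})^p = \gamma_{f,l}$: indeed $(\gamma_{f,l-1})^p = \left(\prod_{k=1}^{l-1}\omega_k^{p^{l-1-k}}\right)^p = \prod_{k=1}^{l-1}\omega_k^{p^{l-k}}$, so multiplying by $\omega_l = \omega_l^{p^0}$ gives $\prod_{k=1}^{l}\omega_k^{p^{l-k}} = \gamma_{f,l}$. Then $x^l_0 = \sigma(\gamma_{f,l}\tildeh^l_0) = (\gamma_{f,l})^p\tildex^l_0$ by homogeneity again, closing the induction. Finally, for the output layer, $f_0(\xi) = \transpose{(W^{L+1}(0))}x^L_0 = \omega_{L+1}\transpose{(\hatW^{L+1})}(\gamma_{f,L})^p\tildex^L_0 = \omega_{L+1}(\gamma_{f,L})^p\,\tildef_0(\xi) = \gamma_{f,L+1}\tildef_0(\xi)$, where the same exponent bookkeeping gives $\omega_{L+1}(\gamma_{f,L})^p = \gamma_{f,L+1}$, and $\tildef_0(\xi) = \transpose{(\hatW^{L+1})}\tildex^L_0$ by definition.

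I do not anticipate a genuine obstacle here — the statement is essentially a bookkeeping identity, and the only subtlety is making sure the homogeneity is applied with nonnegative scalars (handled by $\omega_l > 0$) and that the no-bias convention for intermediate layers is in force (stated at the top of this section of the appendix). The one point worth stating carefully is the recursive exponent identity $\omega_l(\gamma_{f,l-1})^p = \gamma_{f,l}$, which I would verify once and reuse; everything else is a direct unwinding of the definitions of $W^l(0) = \omega_l\hatW^l$, the tilde variables, and the $p$-homogeneity relation $\sigma(\lambda z) = \lambda^p\sigma(z)$.
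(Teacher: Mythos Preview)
Your proposal is correct and follows essentially the same approach as the paper: induction on the layer index, using positive $p$-homogeneity to pull out the scalar prefactor at each application of $\sigma$, and verifying the recursive identity $\omega_l(\gamma_{f,l-1})^p = \gamma_{f,l}$, with the output layer handled analogously. The only minor quibble is that the lemma is stated for \emph{any} ac-parameterization, so you should not assume $a_1 = 0$ (hence $\omega_1 = 1$) in the base case; but since you already observe $\omega_l > 0$ in general, the argument goes through unchanged with $\gamma_{f,1} = \omega_1$ regardless of its value.
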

\begin{remark}\label{remark:homog-first-forward}
\ 
\begin{enumerate}
    \item $(\gamma_{f,l})^p = \left(\prod_{k=1}^l \omega_k^{p^{l-k+1}} \right)$.
    
    \item When $p=1$, $\gamma_{f,l}$ and $(\gamma_{f,l})^p$ simply reduce to $\omega_l \ldots \omega_1$. 
    
    \item For integrable parameterizations, for any $l \in [1, L+1]$, $\gamma_{f,l} = m^{-\sum_{k=0}^{l-2}p^k/2}$. The latter term is $1$ when $l = 1$, and otherwise $m^{-(l-1)/2}$ if $p=1$ and $m^{-(p^{l-1} - 1)/2(p-1)}$ if $p>1$. For \muP, $\gamma_{f,l} = 1$ for any $l \in [1, L]$ because $\omega_l = 1$ for \muP\ if $l \in [1, L]$.
    
    \item Instead of homogeneity, assume $\sigma$ is differentiable, has non-zero derivative in $0$ and $\sigma(0) = 0$. Also assume that $\omega_1 = 1$ (\ie $a_1 = 0$) and $\omega_l \rightarrow 0$ (\ie $a_l > 1/2$) for $l \in [2, L]$, which is the case in integrable parameterizations. Then, we have $h^1_0 = \tildeh^1_0$, and $h^2_0 = \omega_2 \tildeh^2_0$, so that $x^2_0 = \sigma(\omega_2 \tildeh^2_0)$ and as $m \rightarrow \infty$, $x^2_0 \simeq \omega_2 \sigma'(0) \tildeh^2_0$. Then similarly, we have for $h^3_0 \simeq \omega_3 \omega_2 \sigma'(0) \hatW^3 \tildeh^2_0$ and $x^3_0 \simeq \omega_3 \omega_2 \sigma'(0)^2 \hatW^3 \tildeh^2_0$. An easy induction then gives $h^l_0 = \sigma'(0)^{l-2} (\omega_l \ldots \omega_2) \hatW^l \ldots \hatW^2 \tildeh^2_0$. This thus resembles the case of a $p=1$ positively homogeneous function, except that the first forward pass is effectively linearized after layer $1$, but the magnitude of the forward pass at different layers is also well understood in this case so that the learning rates for the first update can be chosen appropriately (\eg for integrable parameterizations). In particular, the initial learning rates of IP-LLR for $p=1$ will also produce non-trivial weight updates at $t=0$ in this setting, which will in turn induce learning. Finally, setting the initial standard deviations of the weight matrices equal to $|\sigma'(0)|^{-1}$ instead of $1$ for the intermediate layers avoids problems with the depth $L$.
    
\end{enumerate}
\end{remark}

\begin{proof}
$h^1_0 = m^{-a_1}$ implies that $h^1_0 = \omega_1 (U^1\xi + v^1) = \omega_1\tildeh^1_0$, which entails $x^1_0 = \omega_1^p \tildex^1_0$ because $\sigma$ is positively $p$-homogeneous and $\omega_1 \geq 0$. Now let $l \in [1, L-1]$ and assume $h^l_0 = (\prod_{k=1}^l \omega_k^{p^{l-k}}) \tildeh^l_0$, and $x^l_0 = (\prod_{k=1}^l \omega_k^{p^{l-k+1}}) \tildex^l_0$. Then 
\begin{align*}
    h^{l+1}_0 &= \omega_{l+1} \hatW^{l+1}(0) x^l_0 \\
    &=\omega_{l+1} \left(\prod_{k=1}^l \omega_k^{p^{l-k+1}} \right) \hatW^{l+1}(0)  \tildex^l_0 \\
    &= \left(\prod_{k=1}^{l+1} \omega_k^{p^{l+1-k}} \right) \tildeh^{l+1}_0
\end{align*}
Since $\sigma$ is positively homogeneous, we have
\begin{align*}
    x^{l+1}_0 &= \sigma(h^{l+1}_0) \\
    &= \left(\prod_{k=1}^{l+1} \omega_k^{p^{l+1-k}})^p \sigma(\tildeh^{l+1}_0 \right) \\
    &= \left(\prod_{k=1}^{l+1} \omega_k^{p^{l+2-k}} \right) \sigma(\tildeh^{l+1}_0)
\end{align*}
This concludes the induction and gives the result for any $l \in [1, L]$. To conclude, we compute the expression of $f_0(\xi) = \omega_{L+1} \transpose{{(\hatW^{L+1}(0))}} x^L_0 = \omega_{L+1} (\prod_{k=1}^{L} \omega_k^{p^{L-k+1}}) \transpose{{(\hatW^{L+1}(0))}} \tildex^L_0 = (\prod_{k=1}^{L+1} \omega_k^{p^{L+1-k}}) \tildef_0(\xi)$.
\end{proof}

\begin{lemma}[Backward pass with homogeneity at $t=0$]\label{th:homog-first-backward}
Consider any ac-parameterization of an $L$-hidden layer neural network with a positively $p$-homogeneous activation function, and $p \geq 1$. Let $\xi_0 \in \mathbb{R}^d$ be the first training input. Then, omitting the dependency of the forward and backward passes, as well as that of the tilde variables on $\xi_0$, one has for any $l \in [1, L]$:
\begin{align}
    dx^l_0 &= m^{-a_{L+1}} \gamma_{b,l} \left( \prod_{k=l+1}^L \gamma_{f,k} \right)^{p-1} d\tildex^{l}_0, \label{eq:dx_l_0_homogeneous}\\ 
    dh^l_0 &= m^{-a_{L+1}} \gamma_{b,l} \left( \prod_{k=l}^L \gamma_{f,k} \right)^{p-1} d\tildeh^{l}_0, \label{eq:dh_l_0_homogeneous}
\end{align}
where, for any $l \in [1, L]$,
\begin{align*}
    \gamma_{b,l} = \prod_{k=l+1}^L \omega_k.
\end{align*}
\end{lemma}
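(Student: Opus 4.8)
The plan is to prove both identities \eqref{eq:dx_l_0_homogeneous} and \eqref{eq:dh_l_0_homogeneous} simultaneously by a downward induction on $l$, running from $l=L$ to $l=1$, mirroring the proof of the forward-pass Lemma~\ref{th:homog-first-forward}. The only ingredients are the backpropagation recursions $dx^{l-1}_0 = \transpose{(W^l(0))}dh^l_0$ and $dh^l_0 = dx^l_0 \odot \sigma'(h^l_0)$, the identity $W^l(0) = \omega_l\hatW^l$, the forward expression $h^l_0 = \gamma_{f,l}\tildeh^l_0$ from Lemma~\ref{th:homog-first-forward}, the positive $(p-1)$-homogeneity of $\sigma'$ (cf. Section~\ref{sec:pos-homog}), and the recursive definitions of the tilde variables (Definition~\ref{def:tilde-variables}); everything else is exact algebra at fixed width $m$, so no Tensor Program input is needed. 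I would first record that $\omega_l = m^{1/2-a_l}>0$ and hence $\gamma_{f,l},\gamma_{b,l}>0$ at any finite $m$, which guarantees that the rescaling rule $\sigma'(\gamma_{f,l}z)=\gamma_{f,l}^{p-1}\sigma'(z)$ is valid for \emph{all} $z\in\mathbb{R}$ (trivial at $z=0$ when $p=1$, and $0=\gamma_{f,l}^{p-1}\cdot 0$ when $p\geq 2$ since then $\sigma'(0)=0$); applied entry-wise this is the only place homogeneity enters.

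For the base case $l=L$, I would start from $dx^L_0 = W^{L+1}(0) = \omega_{L+1}\hatW^{L+1} = m^{-a_{L+1}}U^{L+1} = m^{-a_{L+1}}d\tildex^L_0$, which is \eqref{eq:dx_l_0_homogeneous} at $l=L$ because $\gamma_{b,L}=\prod_{k=L+1}^L\omega_k$ and $\prod_{k=L+1}^L\gamma_{f,k}$ are empty products. Then $dh^L_0 = dx^L_0\odot\sigma'(h^L_0)$; substituting $h^L_0=\gamma_{f,L}\tildeh^L_0$ and the rescaling rule for $\sigma'$ gives $dh^L_0 = m^{-a_{L+1}}\gamma_{f,L}^{p-1}d\tildeh^L_0$, which is \eqref{eq:dh_l_0_homogeneous} at $l=L$.

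For the induction step, fixing $l\in[2,L]$ and assuming both identities at index $l$, I would write $dx^{l-1}_0 = \transpose{(W^l(0))}dh^l_0 = \omega_l\transpose{(\hatW^l)}dh^l_0$, plug in the hypothesis for $dh^l_0$, use $\transpose{(\hatW^l)}d\tildeh^l_0 = d\tildex^{l-1}_0$ (which needs $l-1\in[L-1]$, consistent with the induction range), and collapse $\omega_l\gamma_{b,l}=\prod_{k=l}^L\omega_k=\gamma_{b,l-1}$ to get
\begin{align*}
    dx^{l-1}_0 = m^{-a_{L+1}}\,\gamma_{b,l-1}\Bigl(\textstyle\prod_{k=l}^L\gamma_{f,k}\Bigr)^{p-1}d\tildex^{l-1}_0,
\end{align*}
which is \eqref{eq:dx_l_0_homogeneous} at index $l-1$ since $\prod_{k=l}^L=\prod_{k=(l-1)+1}^L$. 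Next, from $dh^{l-1}_0 = dx^{l-1}_0\odot\sigma'(h^{l-1}_0)$ with $h^{l-1}_0=\gamma_{f,l-1}\tildeh^{l-1}_0$, the rescaling rule for $\sigma'$, the identity $d\tildex^{l-1}_0\odot\sigma'(\tildeh^{l-1}_0)=d\tildeh^{l-1}_0$, and $\gamma_{f,l-1}^{p-1}\bigl(\prod_{k=l}^L\gamma_{f,k}\bigr)^{p-1}=\bigl(\prod_{k=l-1}^L\gamma_{f,k}\bigr)^{p-1}$, I would obtain \eqref{eq:dh_l_0_homogeneous} at index $l-1$, closing the induction.

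I do not expect a genuine obstacle: the argument is bookkeeping in the same vein as Lemma~\ref{th:homog-first-forward}. The one point that needs care — and the likeliest source of index errors — is the telescoping of the scalar factors across the step: verifying that the extra $\omega_l$ produced by transposing $W^l(0)$ converts $\gamma_{b,l}$ into $\gamma_{b,l-1}$, and that the extra $\gamma_{f,l-1}^{p-1}$ produced by the homogeneity of $\sigma'$ extends the exponent product from $\prod_{k=l}^L$ to $\prod_{k=l-1}^L$, so that the resulting formulas match exactly those claimed at index $l-1$.
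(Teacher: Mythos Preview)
Your proposal is correct and follows essentially the same downward induction as the paper's proof: same base case at $l=L$, same use of $W^l(0)=\omega_l\hatW^l$ and the $(p-1)$-homogeneity of $\sigma'$ in the induction step, and the same telescoping identities $\omega_l\gamma_{b,l}=\gamma_{b,l-1}$ and $\gamma_{f,l-1}^{p-1}\prod_{k=l}^L=\prod_{k=l-1}^L$. Your explicit check that $\gamma_{f,l}>0$ at finite $m$ (so that the rescaling rule for $\sigma'$ holds for all $z$, including the $z=0$ edge case) is a nice addition that the paper only implicitly relies on.
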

\begin{remark}\label{remark:homog-first-backward}
\
\begin{enumerate}
    \item By swapping the products, one has that
    \begin{align*}
        \prod_{k=l+1}^L \gamma_{f,k} = \prod_{k=1}^{L} \omega_k^{\sum_{r=\max(k, l+1)}^{L} p^{r-k}}.
    \end{align*}
    
    \item When $p=1$, $\left(\prod_{k=l}^L \gamma_{f,k} \right)^{p-1} = 1$ for any $l \in [1, L+1]$.
    
    \item For integrable parameterizations, $\gamma_{b,l} = m^{-(L-l)/2}$ for any $l \in [1, L]$. For \muP, $\gamma_{b,l} = 1$ for any $l \in [1, L]$. 
    
    \item For $l=L$, $\gamma_{b,L} = 1$, $\prod_{k=l+1}^L \gamma_{f,k} = 1$, $\prod_{k=l}^L \gamma_{f,k} = \gamma_{f,L}$. 
\end{enumerate}
\end{remark}

\begin{proof}
$dx^L_0 = W^L(0) = m^{-a_{L+1}} U^{L+1} = m^{-a_{L+1}} d\tildex^L_0$,
\begin{align*}
    dh^L_0 &= dx^L_0 \odot \sigma'(h^L_0) \\
    &= m^{-a_{L+1}} d\tildex^L_0 \odot \sigma'(\gamma_{f,L} \tildeh^L_0) \\
    &= m^{-a_{L+1}} (\gamma_{f,L})^{p-1} d\tildex^L_0 \odot \sigma'(\tildeh^L_0)
\end{align*}
where the second equality stems from Lemma~\ref{th:homog-first-forward} and the last equality stems from $\omega_L \ldots \omega_1 > 0$ and the positive $(p-1)$-homogeneity of $\sigma'$. Let $l \in [2,L]$ and assume that $dx^l_0$ satisfies Equation~\eqref{eq:dx_l_0_homogeneous} and $dh^l_0$ satisfies Equation~\eqref{eq:dh_l_0_homogeneous}. Then 
\begin{align*}
    dx^{l-1}_0 &= \omega_l \transpose{{(\hatW^l)}} dh^l_0 \\
    &= m^{-a_{L+1}} \omega_l \gamma_{b,l} \left(\prod_{k=l}^L \gamma_{f,k} \right)^{p-1} \transpose{{(\hatW^l)}} d\tildeh^l_0 \\
    &= m^{-a_{L+1}} \gamma_{b,l-1} \left(\prod_{k=(l-1)+1}^L \gamma_{f,k} \right)^{p-1} d\tildex^{l-1}_0,
\end{align*}
and 
\begin{align*}
    dh^{l-1}_0 &= dx^l_0 \odot \sigma'(h^l_0) \\
    &= m^{-a_{L+1}} \gamma_{b,l-1} \left(\prod_{k=l}^L \gamma_{f,k} \right)^{p-1} d \tildex^{l-1}_0 \odot \sigma'(\gamma_{f,l} \tildeh^{l-1}_0) \\
    &= m^{-a_{L+1}} \gamma_{b,l-1} \left(\prod_{k=l}^L \gamma_{f,k} \right)^{p-1} (\gamma_{f, l-1})^{p-1} d \tildex^l_0 \odot \sigma'( \tildeh^l_0) \\
    &= m^{-a_{L+1}} \gamma_{b,l-1} \left(\prod_{k=l-1}^L \gamma_{f,k} \right)^{p-1} d \tildeh^{l-1}_0,
\end{align*}
where we have used Lemma~\ref{th:homog-first-forward} in the second equality, the positive $(p-1)$-homogeneity of $\sigma'$ combined with $\omega_l \ldots \omega_1 > 0$ in the third equality and the definition of $d\tildeh^{l-1}_0$ in the last. This thus concludes the proof by induction. 
\end{proof}

\begin{lemma}[Weight updates with homogeneity at $t=0$]\label{th:homog-first-weight-updates}
Consider any ac-parameterization of an $L$-hidden layer neural network with a positively $p$-homogeneous activation function, and $p \geq 1$. Let $\xi_0 \in \mathbb{R}^d$ be the first training input. Then, omitting the dependency of the forward and backward passes, as well as that of the tilde variables on $\xi_0$, one has:
\begin{align*}
    \Delta W^1(1) &= -\eta \chi_0 m^{-(a_{L+1} +2a_1 + c_1)} \omega_1^{p^{L} - 1} \left(\prod_{k=2}^L \omega_k^{p^{L-k+1}} \right) d\tildeh^1_{0} \transpose{\xi_{0}},   \\
    \Delta B^1(1) &= -\eta \chi_0 m^{-(a_{L+1} +2a_1 + c_1)} \omega_1^{p^{L} - 1} \left(\prod_{k=2}^L \omega_k^{p^{L-k+1}} \right) d\tildeh^1_{0},  \\
    \Delta W^l(1) &= -\eta \chi_0 m^{-(a_{L+1} + 2a_l + c_l - 1)} \left( \prod_{k=1}^L \omega_k^{p^{L-k+1}} \right) \omega_l^{-1} \frac{d\tildeh^l_{0} \transpose{{(\tildex^{l-1}_{0})}}}{m}, \qquad l \in [2,L] \nonumber, \\
    \Delta W^{L+1}(1) &= -\eta \chi_0 m^{-(2a_{L+1} + c_{L+1} - 1)} \left( \prod_{k=1}^L \omega_k^{p^{L-k+1}} \right) \tildex^{L}_{0} / m.
\end{align*}
\end{lemma}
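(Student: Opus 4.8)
The plan is to combine the generic update formulas recorded in Appendix~\ref{app:notations} with the homogeneity-based expressions for the first forward and backward passes established in Lemmas~\ref{th:homog-first-forward} and~\ref{th:homog-first-backward}, and then reduce everything to a bookkeeping exercise on products of the scalars $\omega_k$. Concretely, specializing Equations~\eqref{eq:delta-W}, \eqref{eq:delta-B} and~\eqref{eq:delta-W-L+1} to $t=0$ gives $\Delta W^l(1) = -\eta m^{-(2a_l+c_l)}\chi_0\, dh^l_0 \transpose{(x^{l-1}_0)}$ and $\Delta B^l(1) = -\eta m^{-(2a_l+c_l)}\chi_0\, dh^l_0$ for $l \in [1,L]$ (with the convention $x^0_0 = \xi_0$), and $\Delta W^{L+1}(1) = -\eta m^{-(2a_{L+1}+c_{L+1})}\chi_0\, x^L_0$. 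It then remains to substitute $x^{l-1}_0 = (\gamma_{f,l-1})^p \tildex^{l-1}_0$ and $x^L_0 = (\gamma_{f,L})^p \tildex^L_0$ from Lemma~\ref{th:homog-first-forward}, together with $dh^l_0 = m^{-a_{L+1}} \gamma_{b,l}\big(\prod_{k=l}^L \gamma_{f,k}\big)^{p-1} d\tildeh^l_0$ from Lemma~\ref{th:homog-first-backward}, and to collect the resulting scalar prefactors.

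First I would handle $l=1$: here $x^0_0 = \xi_0$ carries no prefactor, so the scalar multiplying $d\tildeh^1_0 \transpose{\xi_0}$ is $m^{-(a_{L+1}+2a_1+c_1)}\gamma_{b,1}\big(\prod_{k=1}^L \gamma_{f,k}\big)^{p-1}$. Swapping the order of the double product gives $\prod_{k=1}^L \gamma_{f,k} = \prod_{j=1}^L \omega_j^{\sum_{i=0}^{L-j} p^i}$, so that raising to the power $p-1$ telescopes the geometric sums into $\prod_{j=1}^L \omega_j^{p^{L-j+1}-1}$; multiplying by $\gamma_{b,1} = \prod_{k=2}^L \omega_k$ then yields exactly $\omega_1^{p^L-1}\prod_{k=2}^L \omega_k^{p^{L-k+1}}$, which is the claimed coefficient. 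The identity for $\Delta B^1(1)$ is the same computation with $\transpose{\xi_0}$ deleted.

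Next I would treat $l \in [2,L]$: the scalar multiplying $d\tildeh^l_0 \transpose{(\tildex^{l-1}_0)}$ is $m^{-(a_{L+1}+2a_l+c_l)}\gamma_{b,l}\big(\prod_{k=l}^L \gamma_{f,k}\big)^{p-1}(\gamma_{f,l-1})^p$, where $(\gamma_{f,l-1})^p = \prod_{k=1}^{l-1}\omega_k^{p^{l-k}}$ by Remark~\ref{remark:homog-first-forward}. The core computation is the exponent identity
\begin{align*}
    \gamma_{b,l}\Big(\prod_{k=l}^L \gamma_{f,k}\Big)^{p-1}(\gamma_{f,l-1})^p = \Big(\prod_{k=1}^L \omega_k^{p^{L-k+1}}\Big)\,\omega_l^{-1},
\end{align*}
which is again a matter of interchanging the nested products, summing the resulting geometric series in $p$, and checking that the contribution to each $\omega_k$ matches: for $k<l$ the term $p^{l-k}$ from $(\gamma_{f,l-1})^p$ cancels against part of the $p^{L-k+1}-p^{l-k}$ coming from $\big(\prod_{k=l}^L\gamma_{f,k}\big)^{p-1}$ to leave $p^{L-k+1}$; for $k>l$ the exponent $p^{L-k+1}-1$ is completed by the $1$ from $\gamma_{b,l}$; and the $\omega_l^{-1}$ appears precisely because $l$ is excluded both from $\gamma_{b,l}$ and from $(\gamma_{f,l-1})^p$, so that $\omega_l$ receives $p^{L-l+1}-1$. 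Writing $m^{-(a_{L+1}+2a_l+c_l)} = m^{-(a_{L+1}+2a_l+c_l-1)}\cdot m^{-1}$ then absorbs the $1/m$ displayed in the statement. The case $l=L+1$ is the easiest: substituting $x^L_0 = (\gamma_{f,L})^p \tildex^L_0$ and using $(\gamma_{f,L})^p = \prod_{k=1}^L \omega_k^{p^{L-k+1}}$ gives the claim directly, again after splitting off one power of $m$.

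The main obstacle is purely the exponent bookkeeping in the $l\in[2,L]$ case: correctly interchanging $\prod_k\prod_j$, evaluating $\sum_{i=0}^n p^i$ in both the $p>1$ and $p=1$ regimes, and respecting the boundary conventions ($\gamma_{b,L}=1$, empty products equal to $1$, and the unscaled input at layer $1$). A useful running sanity check is the $p=1$ specialization, where all the $\gamma$'s collapse to $\omega_l\cdots\omega_1$ and the identity above reduces to the elementary statement $\gamma_{b,l}\,\gamma_{f,l-1} = (\omega_1\cdots\omega_L)\,\omega_l^{-1}$.
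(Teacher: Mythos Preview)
Your proposal is correct and follows essentially the same approach as the paper: both substitute the expressions from Lemmas~\ref{th:homog-first-forward} and~\ref{th:homog-first-backward} into the update formulas~\eqref{eq:delta-W}--\eqref{eq:delta-W-L+1}, and then reduce the scalar prefactor $\gamma_{b,l}(\gamma_{f,l-1})^p\big(\prod_{k=l}^L\gamma_{f,k}\big)^{p-1}$ by tallying the exponent of each $\omega_k$ via the geometric-sum identity $(p-1)\sum_{r=0}^N p^r = p^{N+1}-1$. Your case split on $k<l$, $k=l$, $k>l$ and the resulting exponents match the paper's computation exactly, and the $p=1$ sanity check you added is a nice touch not present in the original.
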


\begin{remark}
For $p=1$, we have 
\begin{align*}
    \omega_1^{p^{L} - 1} \left(\prod_{k=2}^L \omega_k^{p^{L-k+1}} \right) &= \omega_1 \ldots \omega_L \\
    \left( \prod_{k=1}^L \omega_k^{p^{L-k+1}} \right) \omega_l^{-1} &= \omega_1 \ldots \omega_{l-1} \omega_{l+1} \ldots \omega_L \\
    \prod_{k=1}^L \omega_k^{p^{L-k+1}} &= \omega_1 \ldots \omega_L
\end{align*}
\end{remark}

\begin{proof}
Before we begin with the proof, we start with a first basic result which will be used repeatedly in the proof. Let $N \in \mathbb{N}^*$. By Equation~\eqref{eq:delta-W}, we have
\begin{align*}
    (p-1) \sum_{r=0}^N p^r = \sum_{r=1}^{N+1}p^r - \sum_{r=0}^N p^r = p^{N+1} - 1.
\end{align*}
Now that this is established, let us look at the update for the first layer. We have
\begin{align*}
    \Delta W^1(1) &= -\eta m^{-(2a_1 + c_1)} \chi_0 dh^1_0 \transpose{\xi_0} \\
    &= - \eta  m^{-(2a_1 + c_1 + a_{L+1})} \gamma_{b,1} \left(\prod_{k=1}^L \gamma_{f,k} \right)^{p-1} \chi_0 d\tildeh^1_0 \transpose{\xi_0},
\end{align*}
where we have used Lemmas~\ref{th:homog-first-forward} and \ref{th:homog-first-backward} in the second equality. Now, we have
\begin{align*}
   \gamma_{b,1} = \prod_{k=2}^L \omega_k,
\end{align*}
and by the first point in Remark~\ref{remark:homog-first-backward}, we have (with $l=1$)
\begin{align*}
    \left(\prod_{k=1}^L \gamma_{f,k} \right)^{p-1} &= \prod_{k=1}^L \omega_k^{(p-1)\sum_{r=k}^L p^{r-k}} \\
    &= \prod_{k=1}^L \omega_k^{(p-1)\sum_{r=0}^{L-k} p^{r}} \\
    &= \prod_{k=1}^L \omega_k^{p^{L-k+1} - 1}.
\end{align*}
It  follows that
\begin{align*}
    \gamma_{b,1} \left(\prod_{k=1}^L \gamma_{f,k} \right)^{p-1} &= \omega_1^{p^L-1} \prod_{k=2}^L \omega_k^{p^{L-k+1}}
\end{align*}
The formula for $\Delta B^1(1)$  follows from the expression of $dh^1_0$ in function of $d\tildeh^1_0$ and from Equation~\eqref{eq:delta-B}.
\\ \\
Let $l \in [2,L]$
\begin{align*}
    \Delta W^{l}(1) &= -\eta m^{-(2a_l + c_{l})} \chi_0 dh^l_0 \transpose{{(x^{l-1}_0)}} \\
    &= -\eta m^{-(2a_l + c_{l} + a_{L+1} - 1)} \chi_0  \, \gamma_{b,l} \, (\gamma_{f,l-1})^p \left(\prod_{k=l}^L \gamma_{f,k} \right)^{p-1} \frac{d\tildeh^l_{0} \transpose{{(\tildex^{l-1}_{0})}}}{m}
\end{align*}
Now, we have
\begin{align*}
    \gamma_{b,l} &= \prod_{k=l+1}^L \omega_k.
\end{align*}
In addition, by the first point of Remark~\ref{remark:homog-first-forward}, we have
\begin{align*}
    (\gamma_{f,l-1})^p = \left(\prod_{k=1}^{l-1} \omega_k^{p^{l-k}} \right),
\end{align*}
and by the first point in Remark~\ref{remark:homog-first-backward}
\begin{align*}
    \left(\prod_{k=l}^L \gamma_{f,k} \right)^{p-1} &= \left(\prod_{k=1}^{l-1} \omega_k^{(p-1) \sum_{r=l}^L p^{r-k}} \right) \times \left(\prod_{k=l}^L \omega_k^{(p-1) \sum_{r=k}^L p^{r-k}} \right) \\
    &= \left(\prod_{k=1}^{l-1} \omega_k^{(p-1) p^{l-k} \sum_{r=l}^L p^{r-l}} \right) \times \left(\prod_{k=l}^L \omega_k^{(p-1) \sum_{r=0}^{L-k} p^{r}} \right) \\
    &= \left(\prod_{k=1}^{l-1} \omega_k^{(p-1) p^{l-k} \sum_{r=0}^{L-l} p^{r}} \right) \times \left(\prod_{k=l}^L \omega_k^{(p-1) \sum_{r=0}^{L-k} p^{r}} \right).
\end{align*}
Let us now look, for each $k \in [1, L]$, at the power of $\omega_k$ which appears in the product $\gamma_{b,l} (\gamma_{f, l-1})^p \left(\prod_{k=l}^L \gamma_{f,k} \right)^{p-1}$. If $k \in [1, l-1]$, the exponent for $\omega_k$ is equal to 
\begin{align*}
    p^{l-k} + (p-1)p^{l-k} \sum_{r=0}^{L-l} p^r &= p^{l-k} \left((p-1)\sum_{r=0}^{L-l} p^r + 1  \right) \\
    &= p^{l-k} \left(p^{L-l+1} - 1 + 1  \right) \\
    &= p^{L-k+1}.
\end{align*}
If $k=l$, the exponent for $\omega_l$ is equal to 
\begin{align*}
    (p-1)\sum_{r=0}^{L-l}p^r &= p^{L-l+1} - 1.
\end{align*}
If $k \in [l+1, L]$, the 
exponent for $\omega_k$ is equal to 
\begin{align*}
    1 + (p-1)\sum_{r=0}^{L-k}p^r = 1 + p^{L-k+1} - 1 = p^{L-k+1}.
\end{align*}
Thus, for every $k \neq l$, the exponent for $\omega_k$ is equal to $p^{L-k+1}$, and for $k=l$, the exponent for $\omega_l$ is equal to $p^{L-l+1} - 1$. It  follows that
\begin{align*}
    \gamma_{b,l} (\gamma_{f, l-1})^p \left(\prod_{k=l}^L \gamma_{f,k} \right)^{p-1} &= \left(\prod_{k=1}^L \omega_{k}^{p^{L-k+1}} \right) w_l^{-1}.
\end{align*}
Finally, 
\begin{align*}
    \Delta W^{L+1}(1) &= -\eta m^{-(2a_{L+1} + c_{L+1})} \chi_0 x^L_0 \\
    &= -\eta m^{-(2a_{L+1} + c_{L+1} - 1)} \chi_0 (\gamma_{f,L})^p \tildex^L_0 / m,
\end{align*}
where we have used Lemma~\ref{th:homog-first-forward} in the second equality. From the first point of Remark~\ref{remark:homog-first-forward}, we get that 
\begin{align*}
    (\gamma_{f,L})^p = \prod_{k=1}^L \omega_k^{p^L-k+1},
\end{align*}
which concludes the proof. 

\end{proof}

\begin{corollary}[Weight updates of IP with homogeneity at $t=0$]\label{th:homog-first-weight-updates-ip}
Consider an integrable parameterization of an $L$-hidden layer neural network with no bias terms except at the first layer, and a positively $p$-homogeneous activation function, and $p \geq 1$. Let $\xi_0 \in \mathbb{R}^d$ be the first training input. Then, omitting the dependency of the forward and backward passes, as well as that of the tilde variables on $\xi_0$, one has:
\begin{align*}
    \Delta W^1(1) &= -\eta \chi_0 m^{-(c_1 - \gamma_{1}(p))} d\tildeh^1_{0} \transpose{\xi_{0}},   \\
    \Delta B^1(1) &= -\eta \chi_0 m^{-(c_1 - \gamma_{1}(p))} d\tildeh^1_{0},   \\
    \Delta W^l(1) &= -\eta \chi_0 m^{-(c_l - \gamma_{l}(p))} \frac{d\tildeh^l_{0} \transpose{{(\tildex^{l-1}_{0})}}}{m}, \qquad l \in [2,L] \nonumber, \\
    \Delta W^{L+1}(1) &= -\eta \chi_0 m^{-(c_{L+1} - \gamma_{L+1}(p))} \tildex^{L}_{0} / m,
\end{align*}
where the $\gamma_l(p)$ are given in Definition~\ref{def:gamma-p}.
\end{corollary}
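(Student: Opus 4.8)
The plan is to obtain Corollary~\ref{th:homog-first-weight-updates-ip} as a direct specialization of Lemma~\ref{th:homog-first-weight-updates} to the integrable parameterization, followed by the geometric-sum bookkeeping needed to recognize the resulting exponents of $m$ as $\gamma_l(p) - c_l$. Recall from the ``Omegas'' paragraph of Appendix~\ref{app:notations} that for IPs one has $\omega_1 = 1$ (since $a_1 = 0$) and $\omega_k = m^{-1/2}$ for $k \in [2, L+1]$ (since $a_k = 1$), together with $a_{L+1} = 1$. Substituting these values into the four displays of Lemma~\ref{th:homog-first-weight-updates} reduces the whole computation to tracking a single quantity: the exponent of $m$ carried by the $\omega$-products.

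First I would record the shared product. Since $\omega_1 = 1$, both $\omega_1^{p^L-1}$ and $\omega_1^{p^L}$ equal $1$, so the products appearing in all four lines collapse to
\begin{align*}
    \prod_{k=2}^{L} \omega_k^{p^{L-k+1}} \;=\; m^{-\frac{1}{2}\sum_{k=2}^{L} p^{L-k+1}} \;=\; m^{-\frac{1}{2}\sum_{j=1}^{L-1} p^{j}},
\end{align*}
where the reindexing $j = L-k+1$ makes $j$ run from $1$ to $L-1$ as $k$ runs from $L$ down to $2$; the same value also equals $\prod_{k=1}^{L}\omega_k^{p^{L-k+1}}$ because the extra factor $\omega_1^{p^L} = 1$.

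Then I would treat the four cases. For $l=1$ (both $\Delta W^1(1)$ and $\Delta B^1(1)$): the prefactor exponent is $a_{L+1}+2a_1+c_1 = 1+c_1$, so the total exponent of $m$ is $-(1+c_1) - \tfrac{1}{2}\sum_{j=1}^{L-1}p^j$; using $-1-\tfrac{1}{2}\sum_{j=1}^{L-1}p^j = -\tfrac{1}{2}\bigl(1+\sum_{k=0}^{L-1}p^k\bigr) = \gamma_1(p)$ from Definition~\ref{def:gamma-p}, this is $\gamma_1(p)-c_1$. For $l \in [2,L]$: the prefactor exponent is $a_{L+1}+2a_l+c_l-1 = 2+c_l$, the extra factor $\omega_l^{-1}$ contributes $m^{1/2}$, and the trailing $1/m$ is already displayed in the lemma, so the total exponent is $-(2+c_l)+\tfrac12-\tfrac{1}{2}\sum_{j=1}^{L-1}p^j = -\tfrac{3}{2}-c_l-\tfrac{1}{2}\sum_{j=1}^{L-1}p^j$; since $-\tfrac{3}{2}-\tfrac{1}{2}\sum_{j=1}^{L-1}p^j = -1-\tfrac{1}{2}\sum_{k=0}^{L-1}p^k = \gamma_l(p)$, this is $\gamma_l(p)-c_l$. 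For $l=L+1$: the prefactor exponent is $2a_{L+1}+c_{L+1}-1 = 1+c_{L+1}$, giving total exponent $-(1+c_{L+1})-\tfrac{1}{2}\sum_{j=1}^{L-1}p^j = \gamma_{L+1}(p)-c_{L+1}$, using $\gamma_{L+1}(p)=\gamma_1(p)$. In every case the remaining vector/scalar factors $d\tildeh^1_0\transpose{\xi_0}$, $d\tildeh^1_0$, $d\tildeh^l_0\transpose{(\tildex^{l-1}_0)}/m$, and $\tildex^L_0/m$ are exactly those already present in Lemma~\ref{th:homog-first-weight-updates}, so no further manipulation is needed.

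There is no genuine obstacle here: the corollary is a pure bookkeeping specialization of Lemma~\ref{th:homog-first-weight-updates}. The only points requiring care are the reindexing identity $\sum_{k=2}^{L}p^{L-k+1} = \sum_{j=1}^{L-1}p^j$, the extra $m^{1/2}$ coming from $\omega_l^{-1}$ in the intermediate-layer case, and the two constant-shift identities relating $-1-\tfrac12\sum_{j=1}^{L-1}p^j$ and $-\tfrac32-\tfrac12\sum_{j=1}^{L-1}p^j$ to $\gamma_1(p)$ and $\gamma_l(p)$ respectively as defined in Definition~\ref{def:gamma-p}.
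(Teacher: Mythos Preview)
Your proposal is correct and follows essentially the same approach as the paper's own proof: both specialize Lemma~\ref{th:homog-first-weight-updates} to the IP values $\omega_1 = 1$, $\omega_k = m^{-1/2}$ for $k \in [2,L+1]$, $a_{L+1} = 1$, and then reduce the resulting exponents of $m$ to $\gamma_l(p)-c_l$ via the same reindexed geometric sums. Your write-up is in fact slightly more economical, since you compute the shared $\omega$-product once and reuse it across all four cases, whereas the paper repeats that computation layer by layer.
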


\begin{proof}
For integrable parameterizations, $\omega_1 = 1$, $\omega_l = m^{-1/2}$ for $l \in [2, L]$, and $a_{L+1} = 1$. 
For the first layer, we have $a_{L+1} + 2a_1 + c_1 = 1 $. On the other hand, 
\begin{align*}
    \omega_1^{p^L-1} \left( \prod_{k=2}^L \omega_k^{p^{L-k+1}} \right) &= \prod_{k=2}^L m^{-p^{L-k+1}/2} \\
    &= m^{-\sum_{k=2}^{L} p^{L-k+1}/2} \\
    &= m^{-\sum_{k=1}^{L-1} p^{k}/2} \\
    &= m^{-1/2 (\sum_{k=0}^{L-1} p^{k} - 1)},
\end{align*}
so that 
\begin{align*}
    m^{-(a_{L+1} + 2a_1 + c_1)} \omega_1^{p^L-1} \left( \prod_{k=2}^L \omega_k^{p^{L-k+1}} \right) &= m^{-c_1} m^{-1/2 (\sum_{k=0}^{L-1} p^{k} - 1)} m^{-1} \\
    &= m^{-c_1} m^{-1/2 (\sum_{k=0}^{L-1} p^{k} + 1)} \\
    &= m^{-c_1} m^{\gamma_1(p)},
\end{align*}
by Definition~\ref{def:gamma-p},
which gives the result for the first layer's update ($\Delta W^1(1)$ and $\Delta B^1(1)$). Let $l \in [2, L]$. $a_{L+1} + 2a_l - 1 = 1 + 2 - 1 = 2$. On the other hand,
\begin{align*}
    \left(\prod_{k=1}^L \omega_k^{p^{L-k+1}} \right) \omega_l^{-1} &= m^{-1/2 (\sum_{k=0}^{L-1} p^{k} - 1)} m^{1/2} \\ 
    &= m^{-1/2 \sum_{k=0}^{L-1} p^{k} + 1},
\end{align*}
so that 
\begin{align*}
    m^{-(a_{L+1} + 2a_l + c_l)} \left(\prod_{k=1}^L \omega_k^{p^{L-k+1}} \right) \omega_l^{-1} &= m^{-c_l} m^{-1/2 \sum_{k=0}^{L-1} p^{k} + 1} m^{-2} \\
    &= m^{-c_l} m^{-1/2 \sum_{k=0}^{L-1} p^{k} - 1} \\
    &= m^{-c_l} m^{\gamma_l(p)},
\end{align*}
by Definition~\ref{def:gamma-p}, which proves the result for the updates of the intermediate layers. Finally, we have $2a_{L+1} - 1 = 2 - 1 = 1$, and on the other hand, because $\omega_1 = 1$, as in the first update, we find
\begin{align*}
    \prod_{k=1}^L \omega_k^{p^{L-k+1}} = m^{-1/2 (\sum_{k=0}^{L-1} p^{k} - 1)},
\end{align*}
so that 
\begin{align*}
    m^{-(2a_{L+1} + c_{L+1} - 1)} \prod_{k=1}^L \omega_k^{p^{L-k+1}} &= m^{-c_{L+1}} m^{-1/2 (\sum_{k=0}^{L-1} p^{k} - 1)} m^{-1} \\
    &= m^{-c_{L+1}} m^{-1/2 (\sum_{k=0}^{L-1} p^{k} + 1)} \\
    &= m^{-c_{L+1}} m^{\gamma_{L+1}(p)},
\end{align*}
by Definition~\ref{def:gamma-p},
which gives the result for the last layer's update and therefore concludes the proof. 
\end{proof}

\begin{corollary}[Weight updates of IP-LLR at $t=0$]\label{th:ipllr-first-weight-updates}
Consider an IP-LLR parameterization of an $L$-hidden layer neural network with a $p$-homogeneous activation function, and $p \geq 1$. Let $\xi_0 \in \mathbb{R}^d$ be the first training input. Then, omitting the dependency of the forward and backward passes of IP-LLR, as well as that of the tilde variables on $\xi_0$, one has:
\begin{align*}
    \Delta W^1(1) &= -\eta \chi_0 d\tildeh^1_{0} \transpose{\xi_{0}},   \\
    \Delta B^1(1) &= -\eta \chi_0 d\tildeh^1_{0},   \\
    \Delta W^l(1) &= -\eta \chi_0 \frac{d\tildeh^l_{0} \transpose{{(\tildex^{l-1}_{0})}}}{m}, \ \ l \in [2,L], \\
    \Delta W^{L+1}(1) &= -\eta \chi_0 \tildex^{L}_{0} / m.
\end{align*}
\end{corollary}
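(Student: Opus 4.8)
The statement is an immediate specialization of Corollary~\ref{th:homog-first-weight-updates-ip} (weight updates of a general integrable parameterization with homogeneity at $t=0$) to the specific choice of learning-rate exponents that defines IP-LLR. So the plan is simply: first invoke Corollary~\ref{th:homog-first-weight-updates-ip}, which applies because IP-LLR is an integrable parameterization (so $\omega_1 = 1$, $\omega_l = m^{-1/2}$ for $l \in [2,L]$, $a_{L+1}=1$) with no bias terms except at the first layer and with a $p$-homogeneous activation function with $p \geq 1$; then substitute the value of the learning-rate exponents of IP-LLR at $t=0$.

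Concretely, Corollary~\ref{th:homog-first-weight-updates-ip} gives
\begin{align*}
    \Delta W^1(1) &= -\eta \chi_0 m^{-(c_1 - \gamma_{1}(p))} d\tildeh^1_{0} \transpose{\xi_{0}}, \\
    \Delta B^1(1) &= -\eta \chi_0 m^{-(c_1 - \gamma_{1}(p))} d\tildeh^1_{0}, \\
    \Delta W^l(1) &= -\eta \chi_0 m^{-(c_l - \gamma_{l}(p))} \frac{d\tildeh^l_{0} \transpose{{(\tildex^{l-1}_{0})}}}{m}, \qquad l \in [2,L], \\
    \Delta W^{L+1}(1) &= -\eta \chi_0 m^{-(c_{L+1} - \gamma_{L+1}(p))} \tildex^{L}_{0} / m .
\end{align*}
By Definition~\ref{def:ipllr}, at $t=0$ IP-LLR uses $c_l = \gamma_l(p)$ for every $l \in [1,L+1]$, where the $\gamma_l(p)$ are those of Definition~\ref{def:gamma-p}. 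Hence $c_l - \gamma_l(p) = 0$ and $m^{-(c_l - \gamma_l(p))} = 1$ for all $l$, and the four displayed formulas collapse to exactly the claimed expressions. One should also note that, since the tilde variables of Definition~\ref{def:tilde-variables} are parameterization-independent, the $d\tildeh^l_0$ and $\tildex^{l-1}_0$ appearing on the right-hand side are the same objects for IP-LLR as in Corollary~\ref{th:homog-first-weight-updates-ip}, so no further identification is needed.

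\textbf{Main obstacle.} There is essentially no difficulty here: the entire content is the bookkeeping of exponents already carried out in the proof of Corollary~\ref{th:homog-first-weight-updates-ip} (which in turn relies on Lemmas~\ref{th:homog-first-forward}, \ref{th:homog-first-backward}, \ref{th:homog-first-weight-updates} and the geometric-sum identity $(p-1)\sum_{r=0}^{N}p^r = p^{N+1}-1$). The only thing worth double-checking is the sign/consistency of the exponents $\gamma_l(p)$ with the $c_l$ prescribed in Definition~\ref{def:ipllr}, i.e.\ that the ``$t=0$'' clause of Definition~\ref{def:ipllr} matches Definition~\ref{def:gamma-p} verbatim, which it does by construction. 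The corollary therefore follows.
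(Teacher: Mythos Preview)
Your proposal is correct and follows exactly the paper's own proof: the paper simply states that the result is a consequence of Corollary~\ref{th:homog-first-weight-updates-ip} together with the fact that IP-LLR sets $c_l = \gamma_l(p)$ at $t=0$ (Definition~\ref{def:ipllr}), which is precisely the substitution you perform.
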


\begin{proof}
This is a simple consequence of Corollary~\ref{th:homog-first-weight-updates-ip} and the fact that for IP-LLR $c_l = \gamma_l(p)$ at $t=0$ by definition (see Definition~\ref{def:ipllr}) for any $l \in [1, L+1]$.
\end{proof}

\begin{lemma}[Weight updates of $\mu$P at $t=0$]\label{th:muP-first-weight-updates}
Consider the $\mu$P parameterization given in Definition~\ref{def:muP} with a differentiable activation function $\sigma$. Let $\xi_0 \in \mathbb{R}^d$ be the first training input. Then, omitting the dependency of the forward and backward passes of $\mu$P, as well as that of the tilde variables on $\xi_0$, one has:
\begin{align*}
    \Delta W^1(1) &= -\eta \chi_0 d\tildeh^1_{0} \transpose{\xi_{0}}   \\
    \Delta B^1(1) &= -\eta \chi_0 d\tildeh^1_{0}   \\
    \Delta W^l(1) &= -\eta \chi_0 \frac{d\tildeh^l_{0} \transpose{{(\tildex^{l-1}_{0})}}}{m}, \ \ l \in [2,L] \\
    \Delta W^{L+1}(1) &= -\eta \chi_0 \tildex^{L}_{0} / m
\end{align*}
\end{lemma}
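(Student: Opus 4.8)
The plan is to exploit the fact that, for $\muP$, all the rescaling factors $\omega_l$ introduced in Appendix~\ref{app:notations} equal $1$ for $l \in [1, L]$ (since $a_1 = 0$ and $a_l = 1/2$), while $\omega_{L+1} = m^{-1/2}$ and $a_{L+1} = 1$. Consequently the forward pass at initialization coincides \emph{exactly} with the scaleless (tilde) variables of Definition~\ref{def:tilde-variables}, with no rescaling factor to track --- this is the point where the argument departs from Lemma~\ref{th:homog-first-forward}, and it is why differentiability of $\sigma$ alone suffices and no homogeneity is needed. Concretely, since $W^1(0) = U^1$, $B^1(0) = v^1$ and $W^l(0) = \hatW^l$ for $l \in [2, L]$, I would first check $h^1_0 = U^1 \xi_0 + v^1 = \tildeh^1_0$ and then, by an immediate induction on $l$ using $h^l_0 = \hatW^l x^{l-1}_0$, $x^l_0 = \sigma(h^l_0)$ and the recursive definitions of $\tildeh^l_0, \tildex^l_0$, conclude that $h^l_0 = \tildeh^l_0$ and $x^l_0 = \tildex^l_0$ for every $l \in [1, L]$. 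This is a finite-width identity, so no Tensor Program machinery is invoked.

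Next I would treat the backward pass. Because $a_{L+1} = 1$ we have $dx^L_0 = W^{L+1}(0) = m^{-1} U^{L+1} = m^{-1} d\tildex^L_0$, hence $dh^L_0 = dx^L_0 \odot \sigma'(h^L_0) = m^{-1} d\tildex^L_0 \odot \sigma'(\tildeh^L_0) = m^{-1} d\tildeh^L_0$ by the forward-pass identity and the definition of $d\tildeh^L_0$. Descending from $l = L$ to $l = 1$ --- using $dx^{l-1}_0 = \transpose{(\hatW^l)} dh^l_0$, $dh^{l-1}_0 = dx^{l-1}_0 \odot \sigma'(h^{l-1}_0)$, the forward-pass identity again, and the recursive definitions of $d\tildex^{l-1}_0, d\tildeh^{l-1}_0$ --- the single factor $m^{-1}$ is carried along unchanged, so that $dx^l_0 = m^{-1} d\tildex^l_0$ and $dh^l_0 = m^{-1} d\tildeh^l_0$ for all $l \in [1, L]$.

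Finally, I would substitute these identities into the update formulas \eqref{eq:delta-W}, \eqref{eq:delta-B} and \eqref{eq:delta-W-L+1}, using the $\muP$ exponents of Definition~\ref{def:muP}, for which $2a_1 + c_1 = -1$, $2a_l + c_l = 0$ for $l \in [2, L]$, and $2a_{L+1} + c_{L+1} = 1$. For $l = 1$ the prefactor $m^{-(2a_1 + c_1)} = m$ cancels the $m^{-1}$ carried by $dh^1_0$, giving $\Delta W^1(1) = -\eta \chi_0 d\tildeh^1_0 \transpose{\xi_0}$ and $\Delta B^1(1) = -\eta \chi_0 d\tildeh^1_0$; for $l \in [2, L]$ the prefactor is $1$ and the surviving $m^{-1}$ from $dh^l_0$ produces $\Delta W^l(1) = -\eta \chi_0 d\tildeh^l_0 \transpose{(\tildex^{l-1}_0)} / m$; for the last layer the prefactor $m^{-1}$ together with $x^L_0 = \tildex^L_0$ gives $\Delta W^{L+1}(1) = -\eta \chi_0 \tildex^L_0 / m$. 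These are precisely the stated formulas, and they coincide with those of IP-LLR in Corollary~\ref{th:ipllr-first-weight-updates} --- which is exactly what makes the finite-width equivalence of Proposition~\ref{th:finite-equivalence-ipllr-hp} possible. The argument is essentially bookkeeping and I do not expect any genuine obstacle; the only point requiring care is verifying that the forward-pass quantities carry no power of $m$ while the backward-pass quantities carry precisely one factor $m^{-1}$, and that these combine with the three values $-1,0,1$ of $2a_l + c_l$ to leave exactly the prefactors above.
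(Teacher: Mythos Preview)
Your proposal is correct and follows essentially the same approach as the paper's proof: establish that the $\muP$ forward pass coincides with the tilde variables because $\omega_l = 1$ for $l \in [1,L]$, then show by downward induction that $dx^l_0 = m^{-1} d\tildex^l_0$ and $dh^l_0 = m^{-1} d\tildeh^l_0$, and finally substitute into the update formulas using the $\muP$ values of $2a_l + c_l$. Your bookkeeping of the exponents $-1, 0, 1$ and how they combine with the single $m^{-1}$ from the backward pass is exactly what the paper does.
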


\begin{remark}\label{remark:muP-first-weight-updates}
\ 
\begin{enumerate}
    \item Although the formulas are identical with those for IP-LLR when the activation function is positively $p$-homogeneous, this \textbf{does not} mean that the weight updates are exactly equal. Indeed, although the tilde variables do not depend on the choice of parameterization and will thus be the same in $\mu$P as in IP-LLR, the variable $\chi_0$ which appears in the formulas is parameterization-dependent as it depends on $f_0(\xi)$ which itself depends on the choice of parameterization. 
    
    \item There is no strong assumption on the activation function here (\eg homogeneity) as \muP\ is designed to have such updates which induce feature learning at all layers.
    
    \item Note that the coordinates of $\Delta W^l(1)$ are in $\Theta(m^{-1})$ whereas that of $W^l(0)$ are in $\Theta(m^{-1/2})$ for $l \in [2, L]$, so that paradoxically, even though \muP\ is designed to produce \quoting{maximal updates} (in a certain sense), we have that $\Delta W^l_{jq}(1) / W^l_{jq}(0) = \Theta(m^{-1/2}) \rightarrow 0$ as $m \rightarrow \infty$: the relative displacement of the weights is zero in the infinite-width limit. More generally, we have that for \muP\ $(W^l_{jq}(t) - W^l_{jq}(0)) / W^l_{jq}(0) \rightarrow 0$ as $m \rightarrow \infty$ if $t \geq 1$, which means that weights of the intermediate layers do not move away from their initialization in the infinite-width limit for \muP, even if the (pre-)activations of every layer are maximally updated. This is in stark contrast with IP-LLR for which both $W^l(0)$ and $\Delta W^l(1)$ are in $\Theta(m^{-1})$ for the intermediate layers $l \in [2, L]$: the weights do move relatively to their initialization in the infinite-width limit.
\end{enumerate}

\end{remark}

\begin{proof}
\muP\ is designed so that its forward pass has $h^l_0 = \tildeh^l_0$ for any $l \in [1, L]$. Indeed, the choice of pre-factors for the weights with \muP\  lead to the same recursive equations for the forward pass as the tilde variables, except for $f_0(\xi)$ which is equal to $m^{-1/2} \tildef_0(\xi)$. For the backward pass, one has that for \muP, $dx^L_0 = W^{L+1}(0) = m^{-1} U^{L+1} = m^{-1} d\tildex^L_0$. We then have
\begin{align*}
    dh^L_0 &= dx^L_0 \odot \sigma'(h^L_0) \\
    &= m^{-1} d\tildex^L_0 \odot \sigma'(\tildeh^L_0) \\
    &= m^{-1} d\tildeh^L_0.
\end{align*}
Let $l \in [1, L-1]$, and assume that $dx^{l+1}_0 = m^{-1} d\tildex^{l+1}_0$ and $dh^{l+1}_0 = m^{-1} d\tildeh^{l+1}_0$. Then, we have 
\begin{align*}
    dx^l_0 &= \transpose{{(W^{l+1}(0))}} dh^{l+1}_0 \\
    &= m^{-1} \transpose{{(\hatW^{l+1})}} d \tildeh^{l+1}_0 \\
    &= m^{-1} d\tildex^l_0.
\end{align*}
Similarly, we have 
\begin{align*}
    dh^l_0 &= dx^l_0 \odot \sigma'(h^l_0) \\
    &= m^{-1} d\tildex^l_0 \odot \sigma'(\tildeh^l_0) \\
    &= m^{-1} d\tildeh^l_0,
\end{align*}
which proves by induction that for any $l \in [1, L]$, $dx^l_0 = m^{-1} d\tildex^l_0$ and $dh^l_0 = m^{-1} d\tildeh^l_0$ for \muP. Recall that for \muP, $a_1 = 0$, $a_l = 1/2$ for $l \in [2, L]$  and $a_{L+1} = 1$, and $c_l = -1$ for any $l \in [1, L+1]$. Now by Equations~\eqref{eq:delta-W} and~\eqref{eq:delta-B}, the first weight updates give:
\begin{align*}
    \Delta W^1(1) &= -\eta \chi_0 m^{-c_1} m^{-1} d\tildeh^1_0 \transpose{\xi_0} \\
    &= -\eta \chi_0 d\tildeh^1_0 \transpose{\xi_0}, 
\end{align*}
and
\begin{align*}
    \Delta B^1(1) &= -\eta \chi_0 m^{-c_1} m^{-1} d\tildeh^1_0 \\
    &= -\eta \chi_0 d\tildeh^1_0, 
\end{align*}
For $l \in [2, L]$, we have
\begin{align*}
    \Delta W^l(1) &= -\eta \chi_0 m^{-(1 + c_l)} m^{-1} d\tildeh^l_0 \transpose{{(x^{l-1}_0)}} \\
    &= -\eta \chi_0 \frac{  d\tildeh^l_0 \transpose{{(\tildex^{l-1}_0)}}}{m}.
\end{align*}
Finally,
\begin{align*}
    \Delta W^{L+1}(1) &= -\eta \chi_0 m^{-(2 + c_{L+1})} x^L_0 \\
    &= -\eta \chi_0 \tildex^L_0 / m,
\end{align*}
which concludes the proof.
\end{proof}

\section{Dynamics of the infinite-width limit of IP-LLR}\label{sec:ipllr-inf-width}

\begin{lemma}[IP-LLR is zero at initialization]\label{th:ipllr-forward-backward-0}
Consider the IP-LLR parameterization with a positively $p$-homogeneous activation function, and $p \geq 2$. Then, for any input vector $\xi \in \mathbb{R}^d$, one has that $\tildeh^l_0(\xi), \tildex^l_0(\xi), d\tildex^l_0, d\tildeh^l_0$ are vectors in the Tensor Program program for any $l \in [2, L]$, and additionally:
\begin{align*}
    f_0(\xi) &\xrightarrow[m \rightarrow \infty]{a.s.} 0 \\
    \chi_0 &\xrightarrow[m \rightarrow \infty]{a.s.} \scalarlim{\chi}_0 := \partial_2 \ell(y_0, 0)
\end{align*}
\end{lemma}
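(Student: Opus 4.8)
The plan is to exploit the fact that at $t=0$ the forward and backward passes of IP-LLR are identical to those of the Naive integrable parameterization: the learning-rate exponents $c_l$ enter only through the increments $\Delta W^l(1), \Delta B^l(1)$ (cf.\ Equations~\eqref{eq:delta-W}--\eqref{eq:delta-B-L+1}), not through the initial effective weights $W^l(0) = m^{-a_l}U^l$ with $a_1 = 0$, $a_l = 1$ for $l \geq 2$. So the computations already performed in the proof of Proposition~\ref{th:trivial-ip-mf-lr} (first forward pass) transfer verbatim, once the applicability of the Tensor Program framework has been checked.

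First I would record that the hypothesis $p \geq 2$, via the description of positively $p$-homogeneous activations in Appendix~\ref{sec:pos-homog}, guarantees that $\sigma$, $\sigma'$ and $\sigma''$ are pseudo-Lipschitz and that $\sigma(0) = 0$. Consequently each operation defining the tilde variables of Definition~\ref{def:tilde-variables} is admissible: $\tilde h^l_0, d\tilde x^l_0$ are obtained by \MatMul\ with an initial matrix $\hat W^l$, while $\tilde x^l_0, d\tilde h^l_0, \tilde f_0$ come from \NonLin\ built from $\sigma$, $\sigma'$, Hadamard products and the initial vectors $U^1\xi, v^1, U^{L+1}$. A routine induction --- $l = 1 \to L$ for the forward pass, $l = L \to 1$ for the backward pass --- then shows that all of $\tilde h^l_0(\xi), \tilde x^l_0(\xi), d\tilde x^l_0, d\tilde h^l_0$ are vectors in the (same) Tensor Program used in the proof of Proposition~\ref{th:trivial-ip-mf-lr}, augmented if needed with the initial scalar $m^{-a_{L+1}} = m^{-1}$; this proves the first assertion for $l \in [2, L]$ (the $l=1$ case being immediate).

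Second, for $f_0(\xi) \to 0$ a.s.\ I would run the forward-pass argument of the proof of Proposition~\ref{th:trivial-ip-mf-lr} with input $\xi$: the factors $\omega_l = m^{-1/2}$ kill the \MatMul\ contributions and the $\dotZ$ terms vanish by Lemma~\ref{th:dotZ-first-forward}, so $Z^{h^l_0(\xi)} = 0$ and hence $Z^{x^l_0(\xi)} = \sigma(0) = 0$ for every $l \in [2, L]$; the Master Theorem (Theorem~\ref{th:master-theorem}) then gives $f_0(\xi) = \frac1m\sum_{p} U^{L+1}_p x^L_{0,p} + m^{-1}v^{L+1} \xrightarrow[m\to\infty]{a.s.} \mathbb{E}[Z^{U^{L+1}}Z^{x^L_0(\xi)}] = 0$, using that $U^{L+1}$ is an initial vector so $Z^{U^{L+1}}$ has zero mean and is independent of $Z^{x^L_0(\xi)}$. (Equivalently Lemma~\ref{th:homog-first-forward} gives $f_0(\xi) = \gamma_{f,L+1}\,\tilde f_0(\xi)$ with $\gamma_{f,L+1} = m^{-\frac12\sum_{k=0}^{L-1}p^k} \to 0$ and $\tilde f_0(\xi)$ tight by Lemma~\ref{th:tilde-pos-variance}, but only the Master-Theorem route delivers the almost sure statement.) Applying this with $\xi = \xi_0$ and using continuity of $\partial_2\ell(y_0,\cdot)$ (Assumption~\ref{ass:loss}) yields $\chi_0 = \partial_2\ell(y_0, f_0(\xi_0)) \to \partial_2\ell(y_0, 0)$ a.s., the last claim.

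The main --- and essentially the only --- obstacle is bookkeeping: verifying carefully that the $c_l$'s of IP-LLR do not intervene in the $t=0$ passes so that the analysis of Proposition~\ref{th:trivial-ip-mf-lr} can be reused unchanged, and recognizing that $p \geq 2$ is precisely what makes $\sigma'$ an admissible \NonLin\ (so the backward-pass tilde variables lie in the program) and what licenses the Master Theorem. No genuinely new estimate is required.
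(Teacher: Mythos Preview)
Your proposal is correct and essentially complete. The paper's proof differs only slightly in route: instead of reusing the $Z^{h^l_0}=0$, $Z^{x^l_0}=\sigma(0)=0$ argument of Proposition~\ref{th:trivial-ip-mf-lr}, it invokes the homogeneity factorization of Lemma~\ref{th:homog-first-forward} to write $f_0(\xi)=m^{-\frac12\sum_{k=1}^{L-1}p^k}\cdot m^{-1}(U^{L+1})^{\top}\tilde x^L_0$, applies the Master Theorem to the second factor to obtain $\mathbb E[Z^{U^{L+1}}Z^{\tilde x^L_0}]=0$ by independence (via \ZHat\ and Lemma~\ref{th:dotZ-first-forward}), and observes that the first factor lies in $(0,1]$. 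Your parenthetical dismisses the homogeneity route as giving only tightness, but the paper actually combines it with the Master Theorem on the \emph{tilde} quantity to recover almost sure convergence. Your primary route via $Z^{x^L_0}=0$ is just as rigorous and arguably more direct; the paper's choice has the mild advantage of working with the nondegenerate tilde variables that are used throughout the subsequent IP-LLR analysis. (One minor slip: IP-LLR has no bias at layers $l\ge 2$, so the $m^{-1}v^{L+1}$ term in your expression for $f_0(\xi)$ should be dropped---harmless, but worth correcting.)
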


\begin{remark}
The result on the almost sure convergence of $\chi_0$ ensures that the latter is a valid initial scalar in the Tensor Program defining the computations associated with the IP-LLR parameterization. 
\end{remark}

\begin{proof}
Because $\sigma$ and $\sigma'$ are pseudo-Lipschitz (since $p \geq 2$, see Appendix~\ref{sec:pos-homog}), the tilde variables of the first forward and backward passes $(\tildeh^l_0, \tildex^l_0, d\tildex^l_0, d\tildex^l_1)$ are vectors in the program given Definition~\ref{def:tilde-variables} by the \ZNonLin\ and \ZMatMul\ rules. Additionally, by Lemma~\ref{th:homog-first-forward},
\begin{align*}
    f_0(\xi_0) &= m^{-\sum_{k=0}^{L-1} p^k /2} m^{-1/2} \transpose{{(U^{L+1})}} \tildex^l_0 \\
    &= m^{1/2} m^{-\sum_{k=0}^{L-1} p^k /2} m^{-1} \transpose{{(U^{L+1})}} \tildex^L_0 \\
    &= m^{-\sum_{k=1}^{L-1} p^k /2} m^{-1} \transpose{{(U^{L+1})}} \tildex^L_0
\end{align*}
Now, $m^{-1} \transpose{{(U^{L+1})}} \tildex^L_0 \rightarrow \mathbb{E}[Z^{U^{L+1}} Z^{\tildex^L_0}]$ almost surely by the master theorem, and $Z^{\tildex^L_0} = \sigma(Z^{\hatW^L \tildex^{L-1}_0})$. By the Lemma~\ref{th:dotZ-first-forward}, $Z^{\hatW^L \tildeh^{L-1}_0} = \hatZ^{\hatW^L \tildeh^{L-1}_0}$, and by the \ZHat\ rule, the latter variable is independent of $Z^{U^{L+1}}$ since $Z^{U^{L+1}}$ is an initial vector in the program. This gives $\mathbb{E}[Z^{U^{L+1}} Z^{\tildex^L_0}] = \mathbb{E}[Z^{U^{L+1}}] \mathbb{E}[ Z^{\tildex^L_0}] = 0 \times \mathbb{E}[ Z^{\tildex^L_0}]$. By Lemma~\ref{th:tilde-pos-variance}, and Lemma~\ref{th:pos-finite-moment}, $\mathbb{E}[ Z^{\tildex^L_0}] < \infty$ because $\sigma$ is polynomially bounded. We thus get $\mathbb{E}[Z^{U^{L+1}} Z^{\tildex^L_0}] = 0$, and since $m^{-\sum_{k=1}^{L-1} p^k /2} \in (0,1]$, $f_0(\xi_0) \rightarrow 0$ almost surely. Recall that by definition (see Appendix~\ref{app:notations}) $\chi_0 = \partial_2 \ell(y_0, f_0(\xi_0))$. Since $f_0(\xi_0) \rightarrow 0$ almost surely, and since $\partial_2 \ell(y_0, \cdot)$ is continuous by assumption, we have that $\chi_0 \rightarrow \partial_2 \ell(y_0, 0) =: \scalarlim{\chi}_0$, which concludes the proof.
\end{proof}

\begin{definition}[Tilde variables in the backward pass after initialization]\label{def:tilde-backward-t}
For any ac-parameterization with $a_{L+1} = 1$, define for any $t \geq 1$,
\begin{align*}
    &d\tildex^L_t = m dx^L_t, \ d\tildeh^L_t = d\tildex^L_t \odot \sigma'(h^L_t), \\
    &d\tildex^l_t = \transpose{{(W^{l+1}(t))}} d\tildeh^{l+1}_t, \ \ l \in[1, L-1], \\
    &d\tildeh^l_t = d\tildex^l_t \odot \sigma'(h^l_t), \ \ l \in[1, L-1].
\end{align*}
\end{definition}

\begin{remark}
\ 
\begin{enumerate}
    \item One could in general define $d\tildex^l_t$ to be equal to $m^{a_{L+1}} dx^l_t$ but since all the ac-parameterizations we study in this paper, \ie integrable parameterizations, \muP, or hybrid versions thereof have $a_{L+1} = 1$, we limit the formulas to this case. The tilde variables are the right quantity to look at because of the term $m^{-a_{L+1}}$ which appears in the gradient \wrt to $x^L_t$ and then propagate to all the other variables of the backward pass by the equations of backpropagation.
    
    \item Recall that in the definition above, it is implicitly assumed that the computations of the forward and backward passes at any time step $s$ are done with the input $\xi = \xi_s$. 
\end{enumerate}
\end{remark}

\begin{lemma}[Relationship between tilde and non-tilde variables]\label{th:tilde-backward-t-ipllr}
For any ac-parameterization with $a_{L+1} = 1$, for any $t \geq 1$, and for any $\xi$, dropping the dependency of the forward and backward passes on $\xi$ at time $t$, one has:
\begin{align*}
    \forall l \in [1, L+1], \ dx^l_t = m^{-1} d\tildex^l_t, \ dh^l_t = m^{-1} d\tildeh^l_t.
\end{align*}
\end{lemma}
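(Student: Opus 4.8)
The statement to prove, Lemma~\ref{th:tilde-backward-t-ipllr}, asserts that for any ac-parameterization with $a_{L+1}=1$, the gradients $dx^l_t$ and $dh^l_t$ of the backward pass at any time $t \geq 1$ equal $m^{-1}$ times the corresponding tilde variables $d\tildex^l_t$ and $d\tildeh^l_t$ introduced in Definition~\ref{def:tilde-backward-t}. The plan is a straightforward descending induction on $l$, from $l = L$ down to $l = 1$, mirroring the structure of the backward pass itself.

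\textbf{Approach.} First I would fix $t \geq 1$ and an input $\xi$, and suppress the dependence on $\xi$ and $t$ throughout. For the base case $l = L$ (and also $l=L+1$): by the equations of backpropagation in Appendix~\ref{app:notations}, $dx^L_t = W^{L+1}(t)$, hence $dw^L(t) = m^{-a_{L+1}} x^L_t = m^{-1} x^L_t$ since $a_{L+1}=1$; more to the point, $dx^L_t = \nabla_{x^L_t} f_t(\xi_t)$ and the $m^{-a_{L+1}} = m^{-1}$ factor appears in this topmost gradient. By Definition~\ref{def:tilde-backward-t}, $d\tildex^L_t := m\, dx^L_t$, so $dx^L_t = m^{-1} d\tildex^L_t$ holds by definition; the case $l=L+1$ is handled by the analogous identity $dw^{L+1}(t) = m^{-a_{L+1}} x^L_t = m^{-1} x^L_t$. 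Then $dh^L_t = dx^L_t \odot \sigma'(h^L_t) = m^{-1} d\tildex^L_t \odot \sigma'(h^L_t) = m^{-1} d\tildeh^L_t$, again directly from Definition~\ref{def:tilde-backward-t}.

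\textbf{Induction step.} Assume $dx^{l+1}_t = m^{-1} d\tildex^{l+1}_t$ and $dh^{l+1}_t = m^{-1} d\tildeh^{l+1}_t$ for some $l \in [1, L-1]$. Then by backpropagation, $dx^l_t = \transpose{{(W^{l+1}(t))}} dh^{l+1}_t = m^{-1} \transpose{{(W^{l+1}(t))}} d\tildeh^{l+1}_t = m^{-1} d\tildex^l_t$, where the last equality is exactly the definition of $d\tildex^l_t$ in Definition~\ref{def:tilde-backward-t}. Likewise $dh^l_t = dx^l_t \odot \sigma'(h^l_t) = m^{-1} d\tildex^l_t \odot \sigma'(h^l_t) = m^{-1} d\tildeh^l_t$. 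This closes the induction and proves the claim for all $l \in [1, L+1]$.

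\textbf{Main obstacle.} There is essentially no obstacle: the lemma is a bookkeeping identity that holds at \emph{finite} width for every $m$, so it requires neither the Tensor Program master theorem nor any homogeneity or smoothness assumption — only the hypothesis $a_{L+1}=1$, which ensures the single factor $m^{-a_{L+1}} = m^{-1}$ entering at the output layer is exactly what the tilde variables are defined to strip off, and the fact that the backprop recursion for $dx^l_t, dh^l_t$ is linear in the incoming gradient so the scalar $m^{-1}$ propagates unchanged. The only point requiring minor care is aligning the indexing conventions: Definition~\ref{def:tilde-backward-t} defines $d\tildex^l_t$ and $d\tildeh^l_t$ precisely by the same recursion as $dx^l_t, dh^l_t$ but with the leading $m^{-1}$ removed, so the induction is really just verifying that these two recursions differ by the constant factor $m^{-1}$ at every layer, which is immediate.
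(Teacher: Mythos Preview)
Your proposal is correct and follows essentially the same approach as the paper: a descending induction on $l$ starting from the definitional identity $dx^L_t = m^{-1} d\tildex^L_t$, then propagating the scalar $m^{-1}$ through the backprop recursion using the definitions in Definition~\ref{def:tilde-backward-t}. The paper's proof is even terser but structurally identical; your extra remarks about the role of $a_{L+1}=1$ and the absence of any need for the Tensor Program machinery are accurate.
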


\begin{proof}
$dx^L_t = m^{-1} d\tildex^L_t$. $dh^L_t = dx^L_t \odot \sigma'(h^L_t) = m^{-1} d\tildex^L_t \odot \sigma'(h^L_t) = m^{-1} d\tildeh^L_t$. Now let $l \in [2, L]$ and assume $dx^l_t = m^{-1} d\tildex^l_t, \ dh^l_t = m^{-1} d\tildeh^l_t$. Then $dx^{l-1}_t = \transpose{{(W^l(t))}} dh^l_t = m^{-1} \transpose{{(W^l(t))}} d\tildeh^l_t = m^{-1} d\tildex^{l-1}_t$, and $dh^{l-1}_t = dx^{l-1}_t \sigma(h^{l-1}_t) = m^{-1} d\tildex^{l-1}_t \sigma(h^{l-1}_t) = m^{-1}d\tildeh^{l-1}_t$ which concludes the proof by induction. 
\end{proof}

\begin{lemma}[Weight updates for IP-LLR at any time step]\label{th:update-t-ipllr}
Consider the IP-LLR parameterization with a positively $p$-homogeneous activation function, and $p \geq 1$, and let $t \geq 1$. Then, dropping the dependency of the forward and backward passes on $\xi_t$ at time $t$, one has:
\begin{align*}
    \Delta W^{L+1}(t+1) &= -\eta \chi_t x^L_t / m,  \\
    \Delta W^l(t+1) &= -\eta \chi_t \frac{d\tildeh^l_t \transpose{{(x^{l-1}_t)}}}{m}, \qquad l \in [2, L], \\
    \Delta W^1(t+1) &= -\eta \chi_t d\tildeh^1_t \transpose{\xi_t}, \\
    \Delta B^1(t+1) &= -\eta \chi_t d\tildeh^1_t.
\end{align*}
\end{lemma}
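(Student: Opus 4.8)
The plan is to obtain each of the four formulas by direct specialization of the general update rules to the IP-LLR parameterization, followed by a single rewriting step for the backward pass.

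First I would recall that IP-LLR is an integrable parameterization, hence $a_1 = 0$ and $a_l = 1$ for all $l \in [2, L+1]$, and that since $t \geq 1$ the $(t+1)$-th update uses the Naive-IP learning-rate exponents $c_1 = c_{L+1} = -1$ and $c_l = -2$ for $l \in [2, L]$ (Definition~\ref{def:ipllr}). Plugging these values into the combined weight exponent $2a_l + c_l$ appearing in Equations~\eqref{eq:delta-W}, \eqref{eq:delta-B}, \eqref{eq:delta-W-L+1} and~\eqref{eq:delta-B-L+1}, one gets $2a_l + c_l = 0$ for the intermediate layers, $2a_1 + c_1 = -1$, and $2a_{L+1} + c_{L+1} = 1$.

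The second step is to substitute these exponents and then invoke Lemma~\ref{th:tilde-backward-t-ipllr} --- which applies because $a_{L+1} = 1$ --- to replace $dh^l_t$ by $m^{-1} d\tildeh^l_t$. For $l \in [2, L]$, Equation~\eqref{eq:delta-W} gives $\Delta W^l(t+1) = -\eta \chi_t dh^l_t \transpose{(x^{l-1}_t)}$, and the substitution produces the claimed $-\eta \chi_t d\tildeh^l_t \transpose{(x^{l-1}_t)}/m$. For $l = 1$, the exponent $-1$ contributes an explicit factor $m$ which is exactly cancelled by the $m^{-1}$ coming from Lemma~\ref{th:tilde-backward-t-ipllr}, yielding $\Delta W^1(t+1) = -\eta \chi_t d\tildeh^1_t \transpose{\xi_t}$ (using $x^0_t = \xi_t$) and, identically from Equation~\eqref{eq:delta-B}, $\Delta B^1(t+1) = -\eta \chi_t d\tildeh^1_t$. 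For the output layer, Equation~\eqref{eq:delta-W-L+1} directly gives $\Delta W^{L+1}(t+1) = -\eta m^{-1} \chi_t x^L_t = -\eta \chi_t x^L_t/m$; no tilde substitution is needed here since the gradient with respect to the output weights is $x^L_t$ itself.

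There is essentially no obstacle here: the statement is pure bookkeeping with the scaling exponents. The only subtlety worth flagging in the writeup is that the ``large'' initial exponents $\gamma_l(p)$ are relevant only for the very first update $\Delta W^l(1)$ (treated separately in Corollary~\ref{th:ipllr-first-weight-updates}), so the hypothesis $t \geq 1$ is precisely what allows us to use the Naive-IP exponents here; moreover, no property of $\sigma$ beyond what is already assumed is actually used, since both the general update rules and Lemma~\ref{th:tilde-backward-t-ipllr} are purely algebraic identities valid at any finite width.
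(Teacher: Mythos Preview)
Your proposal is correct and follows essentially the same approach as the paper's own proof: compute $2a_l+c_l$ for each layer using the Naive-IP exponents (valid because $t\geq 1$), plug into the general update formulas~\eqref{eq:delta-W}--\eqref{eq:delta-B-L+1}, and invoke Lemma~\ref{th:tilde-backward-t-ipllr} to rewrite $dh^l_t = m^{-1} d\tildeh^l_t$. The paper's argument is identical in structure and in the lemmas invoked.
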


\begin{proof}
Using Equation~\eqref{eq:delta-W}, we have $\Delta W^{L+1}(t) = - \eta \chi_t m^{-(2a_{L+1} + c_{L+1})} x^L_t = -\eta \chi_t x^L_t / m$ because $2a_{L+1} + c_{L+1} = 2-1=1$ in IP-LLR since $t \geq 1$. For $l \in [2, L]$
\begin{align*}
    \Delta W^l(t) &= -\eta \chi_t m^{-(2a_l + c_l)} dh^l_t \transpose{{(x^{l-1})}} \\
    &= -\eta \chi_t \frac{d\tildeh^l_t \transpose{{(x^{l-1}_t)}}}{m},
\end{align*}
by Lemma~\ref{th:tilde-backward-t-ipllr} and because $2a_l + c_l = 2-2 = 0$ for $t \geq 1$ in IP-LLR. $\Delta W^1(t) = -\eta \chi_t m^{-(2a_1 + c_1)} dh^1_t \xi_t = -\eta \chi_t d\tildeh^1_t \transpose{\xi_t}$ by Lemma~\ref{th:tilde-backward-t-ipllr} and because $2a_1 + c_1 = 0 - 1 = -1$ for $t \geq 1$ in IP-LLR. Finally, by Equation~\eqref{eq:delta-B}, we have $\Delta B^1(t) = - \eta \chi_t m^{-(2a_1 + c_1)} dh^1_t = - \eta \chi_t d\tildeh^1_t$ by Lemma~\ref{th:tilde-backward-t-ipllr} and because $2a_1 + c_1 = -1$. 
\end{proof}

\begin{theorem}[Weights in IP-LLR at time $t$]\label{th:weights-ipllr-t}
Consider the IP-LLR parameterization with a positively $p$-homogeneous activation function, and $p \geq 1$. Then, for any $t \geq 1$, one has:
\begin{enumerate}[(i)]
    \item $W^1(t) = U^1 - \eta \chi_0 d\tildeh^1_0 \transpose{\xi_0} - \eta \left(\sum_{s=1}^{t-1} \chi_s d\tildeh^1_s \transpose{\xi_s} \right)$,
    
    \item $B^1(t) = v^1 - \eta \chi_0 d\tildeh^1_0 - \eta \left(\sum_{s=1}^{t-1} \chi_s d\tildeh^1_s \right)$,
    
    \item $W^l(t) = \omega_l \hatW^l - \eta \chi_0 \frac{d\tildeh^l_0 \transpose{{(\tildex^{l-1}_0)}}}{m} - \eta \left(\sum_{s=1}^{t-1} \chi_s \frac{d\tildeh^l_s \transpose{{(x^{l-1}_s)}}}{m} \right)$, \qquad $l \in [2, L]$,
    
    \item $W^{L+1}(t) = U^{L+1}/m - \eta \chi_0 \tildex^L_0 / m - \eta \left(\sum_{s=1}^{t-1} \chi_s x^L_s / m\right)$.
\end{enumerate}
\end{theorem}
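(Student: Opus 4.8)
The plan is to prove the four identities at finite width by a straightforward induction on $t\geq 1$, using the telescoping decomposition $W^l(t)=W^l(0)+\sum_{s=0}^{t-1}\Delta W^l(s+1)$ (and similarly for $B^1$) together with the explicit update formulas already established. First I would record the initialization: since IP-LLR is an integrable parameterization (Definition~\ref{def:ipllr}) with $a_1=0$ and $a_l=1$ for $l\in[2,L+1]$, one has $W^1(0)=U^1$, $B^1(0)=v^1$, $W^l(0)=m^{-1}U^l=\omega_l\hatW^l$ for $l\in[2,L]$ (recall $\omega_l=m^{-1/2}$ and $\hatW^l=m^{-1/2}U^l$ for IPs), and $W^{L+1}(0)=m^{-1}U^{L+1}=U^{L+1}/m$. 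These are exactly the leading terms in $(i)$--$(iv)$.

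For the base case $t=1$ we have $W^l(1)=W^l(0)+\Delta W^l(1)$ and $B^1(1)=B^1(0)+\Delta B^1(1)$, and the first updates are given precisely by Corollary~\ref{th:ipllr-first-weight-updates}: $\Delta W^1(1)=-\eta\chi_0 d\tildeh^1_0\transpose{\xi_0}$, $\Delta B^1(1)=-\eta\chi_0 d\tildeh^1_0$, $\Delta W^l(1)=-\eta\chi_0\, d\tildeh^l_0\transpose{(\tildex^{l-1}_0)}/m$ for $l\in[2,L]$, and $\Delta W^{L+1}(1)=-\eta\chi_0\tildex^L_0/m$. Adding these to $W^l(0)$ (resp. $B^1(0)$) yields $(i)$--$(iv)$ at $t=1$, the sum $\sum_{s=1}^{0}$ being empty. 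For the inductive step, assume the claim at some $t\geq 1$; then $W^l(t+1)=W^l(t)+\Delta W^l(t+1)$ and $B^1(t+1)=B^1(t)+\Delta B^1(t+1)$, and since $t\geq 1$ Lemma~\ref{th:update-t-ipllr} applies and gives $\Delta W^{L+1}(t+1)=-\eta\chi_t x^L_t/m$, $\Delta W^l(t+1)=-\eta\chi_t\, d\tildeh^l_t\transpose{(x^{l-1}_t)}/m$ for $l\in[2,L]$, $\Delta W^1(t+1)=-\eta\chi_t d\tildeh^1_t\transpose{\xi_t}$, and $\Delta B^1(t+1)=-\eta\chi_t d\tildeh^1_t$. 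Each of these is exactly the $s=t$ summand in $(i)$--$(iv)$, so absorbing it into the sum (now running from $s=1$ to $s=t$) closes the induction.

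The only things to be careful about are intrinsic features of IP-LLR rather than genuine difficulties: (a) the first update at $t=0$ obeys a different formula from the later ones because the learning-rate exponents $c_l$ equal $\gamma_l(p)$ (Definition~\ref{def:gamma-p}) at $t=0$ and the Naive-IP values for $t\geq 1$, which is why the $s=0$ term is handled by Corollary~\ref{th:ipllr-first-weight-updates} and the $s\geq 1$ terms by Lemma~\ref{th:update-t-ipllr}; and (b) the input and output layers carry different scalings than the intermediate layers, so all four cases must be tracked in parallel. There is no real obstacle here: the substantive work has been front-loaded into Corollary~\ref{th:ipllr-first-weight-updates} and Lemma~\ref{th:update-t-ipllr} (the latter itself using Lemma~\ref{th:tilde-backward-t-ipllr} to rewrite $dh^l_t=m^{-1}d\tildeh^l_t$), and the statement is a finite-width algebraic identity, so the $\chi_s$ are simply well-defined scalars and no convergence considerations enter at this stage.
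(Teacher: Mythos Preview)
Your proposal is correct and follows essentially the same approach as the paper: the paper's proof likewise establishes the base case $t=1$ via Corollary~\ref{th:ipllr-first-weight-updates} and then inducts using Lemma~\ref{th:update-t-ipllr}. You have simply spelled out the initialization values and the telescoping more explicitly than the paper's one-line argument.
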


\begin{proof}
We have already seen the formulas are correct for $t=1$ by Corollary~\ref{th:ipllr-first-weight-updates}. Then, by Lemma~\ref{th:update-t-ipllr}, an easy induction immediately yields the result. 
\end{proof}

\begin{lemma}[Backward pass of IP-LLR at time $t$]\label{th:backward-ipllr-t}
Consider the IP-LLR parameterization with a positively $p$-homogeneous activation function, and $p \geq 1$. Then, for any $t \geq 1$, dropping the dependency of the forward pass at time $t$ on $\xi_t$, and of the previous forward and backward passes on the corresponding $\xi_s$,  one has:
\begin{enumerate}[(i)]
    \item $d\tildex^L_t = w^{L+1}(t) = U^{L+1} - \eta \chi_0 \tildex^L_0 - \eta \sum_{s=1}^{t-1} \chi_s x^L_s$,
    \item $d\tildex^{l-1}_t = \omega_l \transpose{{(\hatW^l)}} d\tildeh^l_t - \eta \chi_0 \frac{\transpose{{(d\tildeh^l_0)}} d\tildeh^l_t}{m} \tildex^{l-1}_0 -\eta \sum_{s=1}^{t-1} \chi_s \frac{\transpose{{(d\tildeh^l_s)}} d\tildeh^l_t}{m} x^{l-1}_s$, \qquad $l \in [2, L]$.
\end{enumerate}
\end{lemma}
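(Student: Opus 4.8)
The plan is to obtain both identities by direct substitution, feeding the explicit formulas for the weights at time $t$ (Theorem~\ref{th:weights-ipllr-t}) into the backpropagation equations, and then rewriting everything in terms of the tilde variables via Definition~\ref{def:tilde-backward-t} and Lemma~\ref{th:tilde-backward-t-ipllr}. No new analytic input is needed; the homogeneity degree $p$ enters only implicitly through quantities already handled in the cited results.

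For $(i)$: since IP-LLR has $a_{L+1}=1$, Definition~\ref{def:tilde-backward-t} gives $d\tildex^L_t = m\, dx^L_t$, and the backpropagation equation $dx^L_t = W^{L+1}(t) = m^{-a_{L+1}} w^{L+1}(t) = m^{-1} w^{L+1}(t)$. Hence $d\tildex^L_t = w^{L+1}(t)$. Multiplying Theorem~\ref{th:weights-ipllr-t}$(iv)$ through by $m$ yields $w^{L+1}(t) = U^{L+1} - \eta\chi_0 \tildex^L_0 - \eta\sum_{s=1}^{t-1}\chi_s x^L_s$, which is exactly the claimed expression (with the convention that the empty sum vanishes when $t=1$).

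For $(ii)$: the backpropagation equation reads $dx^{l-1}_t = \transpose{(W^l(t))}\, dh^l_t$. By Lemma~\ref{th:tilde-backward-t-ipllr} we have $dx^{l-1}_t = m^{-1} d\tildex^{l-1}_t$ and $dh^l_t = m^{-1} d\tildeh^l_t$, so the $m^{-1}$ factors cancel and $d\tildex^{l-1}_t = \transpose{(W^l(t))}\, d\tildeh^l_t$. Substituting the expression for $W^l(t)$ from Theorem~\ref{th:weights-ipllr-t}$(iii)$, transposing each rank-one term (so that $d\tildeh^l_s \transpose{(x^{l-1}_s)}$ becomes $x^{l-1}_s \transpose{(d\tildeh^l_s)}$ and likewise for $s=0$ with $\tildex^{l-1}_0$), and applying the resulting matrix to $d\tildeh^l_t$ on the right gives $d\tildex^{l-1}_t = \omega_l \transpose{(\hatW^l)} d\tildeh^l_t - \eta\chi_0 \frac{\tildex^{l-1}_0 (\transpose{(d\tildeh^l_0)} d\tildeh^l_t)}{m} - \eta\sum_{s=1}^{t-1}\chi_s \frac{x^{l-1}_s (\transpose{(d\tildeh^l_s)} d\tildeh^l_t)}{m}$. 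Since each $\transpose{(d\tildeh^l_s)} d\tildeh^l_t$ is a scalar, pulling it to the front of the corresponding term produces precisely the stated identity.

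The argument is entirely algebraic; the only care needed is bookkeeping of the $m$-prefactors and of the transposition of the outer-product updates, together with the observation that the $t=1$ case is covered since the sums over $s\in\{1,\dots,t-1\}$ are then empty. I therefore do not anticipate a genuine obstacle: the lemma follows by assembling Definition~\ref{def:tilde-backward-t}, Lemma~\ref{th:tilde-backward-t-ipllr}, and Theorem~\ref{th:weights-ipllr-t}.
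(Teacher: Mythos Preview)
Your proposal is correct and follows essentially the same route as the paper: both arguments reduce to plugging the weight formulas of Theorem~\ref{th:weights-ipllr-t} into the backpropagation/tilde-variable relations and reading off the result. The only minor difference is that the paper invokes Definition~\ref{def:tilde-backward-t} directly to get $d\tildex^{l-1}_t = \transpose{(W^l(t))}\, d\tildeh^l_t$, whereas you pass through Lemma~\ref{th:tilde-backward-t-ipllr}; this is harmless but unnecessary.
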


\begin{proof}
By definition, we have
\begin{align*}
    d\tildex^L_t &= m dx^L_t \\
    &= m W^{L+1}(t) \\
    &= U^{L+1} - \eta \chi_0 \tildex^L_0 - \eta \sum_{s=1}^{t-1} \chi_s x^L_s
\end{align*}
where the last equality stems from \autoref{th:weights-ipllr-t}.
\\\\
Let $l \in [2, L]$, we have:
\begin{align*}
    d\tildex^{l-1}_t &= \transpose{{(W^l(t))}} d\tildeh^{l}_t \\
    &= \omega_l \transpose{{(\hatW^l)}} d\tildeh^l_t - \eta \chi_0 \frac{\transpose{{(d\tildeh^l_0)}} d\tildeh^l_t}{m} \tildex^{l-1}_0 -\eta \sum_{s=1}^{t-1} \chi_s \frac{\transpose{{(d\tildeh^l_s)}} d\tildeh^l_t}{m} x^{l-1}_s
\end{align*}
where the second equality stems from \autoref{th:weights-ipllr-t}. 
\end{proof}

\begin{lemma}[Z for the forward pass of IP-LLR at time $t=1$]\label{th:z-forward-ipllr-1}
Consider the IP-LLR parameterization with a positively $p$-homogeneous activation function, and $p \geq 2$. Let $\xi \in \mathbb{R}^d$ be an input to the network. Then, for any $l \in [1, L]$, $h^l_1(\xi), x^l_1(\xi), d\tildex^l_1, d\tildeh^l_1$ are vectors in the program, $f_1(\xi)$ is a scalar in the program, and $\chi_1$ is a valid initial scalar in the program. Additionally, dropping the dependency of the forward pass at time $t=1$ on $\xi$, and of the first forward and backward passes on $\xi_0$, one has:
\begin{enumerate}[(i)]
    \item $Z^{h^1_1} = Z^{W^1(1) \xi + B^1(1)} = Z^{U^1\xi} + Z^{v^1} - \eta \scalarlim{\chi}_0 (\transpose{\xi_0} \xi + 1) Z^{d\tildeh^1_0}$,
    
    \item $Z^{h^l_1} = Z^{W^l(1)x^{l-1}_1} = \scalarlim{\omega}_l Z^{\hatW^l x^{l-1}_1} - \eta \scalarlim{\chi}_0 \mathbb{E}[Z^{\tildex^{l-1}_0} Z^{x^{l-1}_1}]  Z^{d\tildeh^l_0}$, \qquad $l \in [2, L]$,

    \item $f_1(\xi) = \transpose{{(W^{L+1}(1))}} x^L_1 \xrightarrow[m \rightarrow \infty]{a.s.} \mathbb{E}[Z^{U^{L+1}} Z^{x^L_1}] - \eta \scalarlim{\chi}_0 \mathbb{E}[Z^{\tildex^L_0} Z^{x^L_1}]$.
\end{enumerate}
\end{lemma}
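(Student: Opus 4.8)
The plan is to run a joint induction over the layers, feeding the explicit expressions for the effective weights $W^l(1)$ and bias $B^1(1)$ from \autoref{th:weights-ipllr-t} (equivalently Corollary~\ref{th:ipllr-first-weight-updates}) into the Tensor Program rules. The base data comes from \autoref{th:ipllr-forward-backward-0}: the tilde variables $\tildeh^l_0,\tildex^l_0,d\tildex^l_0,d\tildeh^l_0$ of the first forward and backward passes are vectors in the program (the case $l=1$ being immediate from \NonLin\ and \MatMul), $f_0(\xi_0)\to 0$ almost surely, and $\chi_0\to\scalarlim\chi_0=\partial_2\ell(y_0,0)$ almost surely, so $\chi_0$ is a legitimate initial scalar. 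By \autoref{th:tilde-pos-variance} the associated $Z$'s have finite second moment, and in fact all moments are finite since each such $Z$ is a product of a centered Gaussian with a polynomially bounded function of Gaussians; this stronger fact is what will propagate the moment bounds forward when $\sigma$ is applied. Throughout, the pseudo-Lipschitz regularity of $\sigma$ and $\sigma'$ (available because $p\ge 2$, see Appendix~\ref{sec:pos-homog}) is what licenses every use of \NonLin\ and of the Master Theorem (\autoref{th:master-theorem}).

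First I would treat the input layer: substituting $W^1(1)=U^1-\eta\chi_0\,d\tildeh^1_0\transpose{\xi_0}$ and $B^1(1)=v^1-\eta\chi_0\,d\tildeh^1_0$ gives $h^1_1=(U^1\xi+v^1)-\eta\chi_0(\transpose{\xi_0}\xi+1)\,d\tildeh^1_0$, a \NonLin\ of program vectors with the scalar $\chi_0$; \ZNonLin\ yields (i), and then $x^1_1=\sigma(h^1_1)$ is a program vector with $\mathbb{E}[(Z^{x^1_1})^2]<\infty$ because $Z^{h^1_1}$ has all moments finite and $\sigma$ is polynomially bounded. For the induction step with $l\in[2,L]$, assuming $x^{l-1}_1$ is a program vector with finite second moment, I would write, using $W^l(1)=\omega_l\hatW^l-\eta\chi_0\,d\tildeh^l_0\transpose{(\tildex^{l-1}_0)}/m$, that $h^l_1=\omega_l\hatW^l x^{l-1}_1-\eta\chi_0\,\frac{\transpose{(\tildex^{l-1}_0)}x^{l-1}_1}{m}\,d\tildeh^l_0$; here $\hatW^l x^{l-1}_1$ is produced by \MatMul, the averaged inner product is a \Moment\ scalar which by the Master Theorem converges almost surely to $\mathbb{E}[Z^{\tildex^{l-1}_0}Z^{x^{l-1}_1}]$ (finite by Cauchy--Schwarz and the induction hypothesis together with \autoref{th:tilde-pos-variance}), so $h^l_1$ is a program vector and \ZMatMul\ combined with \ZNonLin\ gives (ii). Note that for IP-LLR $\scalarlim\omega_l=0$, so that term drops once one knows $\mathbb{E}[(Z^{\hatW^l x^{l-1}_1})^2]<\infty$, which follows from $\mathbb{E}[(Z^{x^{l-1}_1})^2]<\infty$ for the \ZHat\ part and from finiteness of the \ZDot\ correction (a finite sum of expectations of products of finite-moment variables). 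Then $x^l_1=\sigma(h^l_1)$ is a program vector with $\mathbb{E}[(Z^{x^l_1})^2]<\infty$, closing the induction.

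For the output, $f_1(\xi)=\transpose{(W^{L+1}(1))}x^L_1=\frac{\transpose{(U^{L+1})}x^L_1}{m}-\eta\chi_0\frac{\transpose{(\tildex^L_0)}x^L_1}{m}$ is a sum of \Moment\ scalars, so the Master Theorem gives (iii); applying the same argument with $\xi=\xi_1$ shows $f_1(\xi_1)$ converges almost surely to a finite limit, and then continuity of $\partial_2\ell(y_1,\cdot)$ (Assumption~\ref{ass:loss}) makes $\chi_1$ a valid initial scalar. The backward-pass vectors at $t=1$ are handled by a downward induction on $l$: $d\tildex^L_1=U^{L+1}-\eta\chi_0\tildex^L_0$ and $d\tildeh^L_1=d\tildex^L_1\odot\sigma'(h^L_1)$ are \NonLin\ of program objects, and for decreasing $l$, $d\tildex^{l-1}_1=\omega_l\transpose{(\hatW^l)}d\tildeh^l_1-\eta\chi_0\frac{\transpose{(d\tildeh^l_0)}d\tildeh^l_1}{m}\tildex^{l-1}_0$ (this is \autoref{th:backward-ipllr-t}(ii) with the empty sum at $t=1$) is \MatMul\ plus \Moment\ plus \NonLin, and $d\tildeh^{l-1}_1=d\tildex^{l-1}_1\odot\sigma'(h^{l-1}_1)$ is \NonLin; all ingredients have just been shown to be program objects, so these are too.

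The routine but delicate part is the moment bookkeeping: unlike at initialization, $Z^{h^l_1}$ is not Gaussian (it carries the $d\tildeh^l_0$ terms), so one cannot simply invoke \autoref{th:pos-finite-moment}; instead one must propagate ``all moments finite'' through the recursion, relying on the fact that each $Z^{d\tildeh^l_0}$ and $Z^{\tildex^l_0}$ is built from centered Gaussians and polynomially bounded maps and hence lies in every $L^q$. Apart from that, the main thing to be careful about is writing each vector of the forward and backward passes at $t=1$ explicitly as a pseudo-Lipschitz combination of program vectors and a.s.-convergent scalars, so that \NonLin\ and the Master Theorem genuinely apply; the underlying algebraic identities are immediate consequences of \autoref{th:weights-ipllr-t} and \autoref{th:backward-ipllr-t}.
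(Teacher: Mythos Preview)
Your proposal is correct and follows essentially the same route as the paper's proof: both feed the explicit formulas for $W^l(1)$ and $B^1(1)$ from \autoref{th:weights-ipllr-t} into an upward induction on $l$ via \MatMul/\NonLin/\Moment, read off (i)--(iii) from \ZNonLin\ and the Master Theorem, specialize to $\xi=\xi_1$ to validate $\chi_1$, and then run the downward induction on the backward pass using \autoref{th:backward-ipllr-t}. The only cosmetic difference is in the moment-finiteness bookkeeping: the paper packages this by citing \autoref{th:Z0-dist-moments} (every relevant $Z$ is a polynomially bounded function of a fixed finite-covariance Gaussian vector $Z_0$), whereas you propagate ``all moments finite'' directly through the recursion; these are equivalent, and your version is arguably more self-contained at this point of the exposition.
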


\begin{proof}
By \autoref{th:weights-ipllr-t}, with $t=1$, one has that $h^1_1 = U^1 \xi + v^1 - \eta \chi_0 (\transpose{\xi}_0 \xi + 1) d\tildeh^1_0$. By Lemma~\ref{th:ipllr-forward-backward-0}, $d\tildeh^1_0$ is a vector in the Tensor Program and $\chi_0$ is a valid initial scalar in the program which has an almost sure limit $\scalarlim{\chi}_0 := \partial_2 \ell(y_0, 0)$ as $m \rightarrow \infty$. In addition, $U^1 \xi$ and $v^1$ are initial vectors in the program, which thus shows that $h^1_1$ is a vector in the program by the \NonLin\ operation. This also gives that $x^1_1 = \sigma(h^1_1)$ is a vector in the program since $\sigma$ is pseudo-Lipschitz (see Appendix~\ref{sec:pos-homog}). Moreover, by \ZNonLin, we have $Z^{h^1_1} = Z^{U^1 \xi} + Z^{v^1} - \eta \scalarlim{\chi}_0 (\transpose{\xi}_0 \xi + 1) Z^{d\tildeh^1_0}$. 
Let $l \in [2, L]$ and assume that $h^{l-1}_1, x^{l-1}_1$ are vectors in the program. Then, by \autoref{th:weights-ipllr-t} with $t=1$, we get
\begin{align*}
    h^l_1 = \omega_l \hatW^l x^{l-1}_1 - \eta \chi_0 \frac{\transpose{{(\tildex^{l-1}_0)}} x^{l-1}_1}{m} d\tildeh^l_0.
\end{align*}
$\transpose{{(\tildex^{l-1}_0)}} x^{l-1}_1 / m$ is a scalar in the program by the \Moment\ operation, and thus by the \MatMul\ and \NonLin\ operations, $h^l_1$ is a vector in the program and thus so is $x^l_1 = \sigma(h^l_1)$, which proves by induction that this is the case for any $l \in [2,L]$. By \ZNonLin\ we thus have
\begin{align*}
    Z^{h^l_1} = \scalarlim{\omega}_l Z^{\hatW^l x^{l-1}_1} - \eta \scalarlim{\chi}_0 \mathbb{E}[Z^{\tildex^{l-1}_0} Z^{x^{l-1}_1}] Z^{d\tildeh^l_0}.
\end{align*}
We then have by \autoref{th:weights-ipllr-t} with $t=1$, 
\begin{align*}
    f_1(\xi) = m^{-1} \transpose{{(U^{L+1})}} x^L_1 - \eta \chi_0 \frac{\transpose{{(\tildex^L_0)}} x^L_1}{m}
\end{align*}
$U^{L+1} - \eta \chi_0 \tildex^L_0$ is a vector in the program by the \NonLin\ operation, and the quantity $m^{-1} \transpose{{(U^{L+1} - \eta \chi_0 \tildex^{L}_0)}} x^L_1$ is thus a scalar in the program by the \Moment\ operation, and by the master theorem, we get $f_1(\xi) \rightarrow \mathbb{E}[Z^{U^{L+1}} Z^{x^L_1}] - \eta \scalarlim{\chi}_0 \mathbb{E}[Z^{\tildex^L_0} Z^{x^L_1}]$ almost surely, since both expectations are finite by Lemma~\ref{th:Z0-dist-moments}. Since we did the previous reasoning with an arbitrary $\xi$, we also get that $h^l_1(\xi_1), x^l_1(\xi_1)$ are vectors in the program for any $l \in [1, L]$ and that the formulas in $(i)$, $(ii)$, and $(iii)$ hold when the input is $\xi_1$. In particular, $f_1(\xi_1)$ converges to a finite almost sure limit $\scalarlim{f_1}(\xi_1)$, and thus the continuity of $\partial_2 \ell(y_1, \cdot)$ ensures the almost sure convergence of $\chi_1$ towards $\scalarlim{\chi}_1 := \partial_2 \ell(y_1, \scalarlim{f_1}(\xi_1))$, which means $\chi_1$ is a valid initial scalar in the Tensor Program. Then, dropping the dependency of the second forward pass (at $t=1$) on $\xi_1$, we get by Lemma~\ref{th:backward-ipllr-t} with $t=1$:
\begin{align*}
    d\tildex^L_1 = U^{L+1} - \eta \chi_0 \tildex^L_0
\end{align*}
which is a vector in the program by \NonLin. Then $d\tildeh^L_1 = d\tildex^L_1 \odot \sigma'(h^L_1)$ is also a vector in the program since $\sigma'$ is pseudo-Lipschitz. Let $l \in [2, L-1]$ and assume that $d\tildex^{l+1}_1$ and $d\tildeh^{l+1}_1$ are vectors in the program. Then by Lemma~\ref{th:backward-ipllr-t} with $t=1$, we have
\begin{align*}
    d\tildex^{l}_1 = \omega_{l+1} \transpose{{(\hatW^{l+1})}} d\tildeh^{l+1}_1 - \eta \chi_0 \frac{\transpose{{(d\tildeh^{l+1}_0)}} d\tildeh^{l+1}_1}{m} \tildex^l_0
\end{align*}
$\transpose{{(d\tildeh^{l+1}_0)}} d\tildeh^{l+1}_1 / m$ is a scalar in the program by the \Moment\ operation and by \MatMul\ and \NonLin\ we thus get that $d\tildex^l_1$ is a vector in the program. Then $d\tildeh^l_1 = d\tildex^l_1 \odot \sigma'(h^l_1)$ is also a vector in the program since $\sigma'$ is pseudo-Lipschitz, which concludes the induction and with it the proof.
\end{proof}

\begin{theorem}[Z for the forward pass of IP-LLR at time $t$]\label{th:z-forward-ipllr-t}
Consider the IP-LLR parameterization with a positively $p$-homogeneous activation function, and $p \geq 2$. Let $\xi \in \mathbb{R}^d$ be an input to the network. Then, for any $l \in [1, L]$, $h^l_s(\xi), x^l_s(\xi), d\tildex^l_s, d\tildeh^l_s$ are vectors in the program, $f_s(\xi)$ is a scalar in the program, and $\chi_s$ is a valid initial scalar in the program. Additionally, dropping the dependency of the forward pass at time $t$ on $\xi$, and of the previous forward and backward passes on the corresponding $\xi_s$, one has:
\begin{enumerate}[(i)]
    \item $Z^{h^1_t} = Z^{W^1(t) \xi + B^1(t)} = Z^{U^1\xi} + Z^{v^1} - \eta \scalarlim{\chi}_0 (\transpose{\xi_0} \xi + 1) Z^{d\tildeh^1_0} - \eta \left(\sum_{s=1}^{t-1} \scalarlim{\chi}_s (\transpose{\xi_s} \xi + 1) Z^{d\tildeh^1_s}  \right)$,
    
    \item for any $l \in [2, L]$,
\end{enumerate}
\vspace{-1.5em}
\begin{align*}
    Z^{h^l_t} = Z^{W^l(t)x^{l-1}_t} = \scalarlim{\omega}_l Z^{\hatW^l x^{l-1}_t} - \eta \scalarlim{\chi}_0 \mathbb{E}[Z^{\tildex^{l-1}_0} Z^{x^{l-1}_t}]  Z^{d\tildeh^l_0}  - \eta \left(\sum_{s=1}^{t-1} \scalarlim{\chi}_s \mathbb{E}[Z^{x^{l-1}_s} Z^{x^{l-1}_t}]  Z^{d\tildeh^l_s } \right),
\end{align*}
\begin{enumerate}[(i)]
    \setcounter{enumi}{2}
    \item $f_t(\xi) = \transpose{{(W^{L+1}(t))}} x^L_t \xrightarrow[m \rightarrow \infty]{a.s.} \mathbb{E}[Z^{U^{L+1}} Z^{x^L_t}] - \eta \scalarlim{\chi}_0 \mathbb{E}[Z^{\tildex^L_0} Z^{x^L_t}] -
    \eta \left(\sum_{s=1}^{t-1} \scalarlim{\chi}_s \mathbb{E}[Z^{x^L_s} Z^{x^L_t}]  \right)$.
\end{enumerate}
\end{theorem}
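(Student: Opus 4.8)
The plan is to prove the statement by induction on $t \ge 1$, the base case $t=1$ being exactly Lemma~\ref{th:z-forward-ipllr-1}. So I assume the three claims — Tensor Program membership of $h^l_s(\xi), x^l_s(\xi), d\tildex^l_s, d\tildeh^l_s, f_s(\xi)$, validity of $\chi_s$ as an initial scalar, and the formulas $(i)$--$(iii)$ — hold for every $s \le t-1$, and establish them at time $t$. As a preliminary, note that since IP-LLR has $a_1 = 0$ and $a_l \ge 1/2$ for $l \ge 2$, Lemma~\ref{th:dotZ-first-forward} applies, so the first-forward-backward-pass tilde variables $\tildeh^l_0, \tildex^l_0, d\tildeh^l_0, d\tildex^l_0$ are legitimate vectors in the program, $\chi_0$ is a valid initial scalar by Lemma~\ref{th:ipllr-forward-backward-0}, and all their second moments are positive and finite by Lemma~\ref{th:tilde-pos-variance}.

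The heart of the induction step is a forward sweep over $l$ from $1$ to $L$, substituting the explicit weight formulas of Theorem~\ref{th:weights-ipllr-t}. For $l=1$, $h^1_t = U^1\xi + v^1 - \eta\chi_0(\transpose{\xi_0}\xi+1)d\tildeh^1_0 - \eta\sum_{s=1}^{t-1}\chi_s(\transpose{\xi_s}\xi+1)d\tildeh^1_s$ is a \NonLin\ of initial vectors, of vectors already in the program, and of the valid initial scalars $\chi_0,\dots,\chi_{t-1}$, hence a vector in the program; $x^1_t=\sigma(h^1_t)$ is one too since $\sigma$ is pseudo-Lipschitz (Appendix~\ref{sec:pos-homog}), and \ZNonLin\ gives $(i)$. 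For $l\in[2,L]$, assuming $h^{l-1}_t, x^{l-1}_t$ are in the program, Theorem~\ref{th:weights-ipllr-t} writes $h^l_t = \omega_l\hatW^l x^{l-1}_t$ minus a linear combination of the $d\tildeh^l_s$ whose coefficients are $-\eta\chi_s$ times averaged inner products; each averaged inner product is a scalar in the program by \Moment, so $h^l_t$ and $x^l_t$ are vectors in the program by \MatMul\ and \NonLin, and \ZNonLin\ together with the Master Theorem (which sends $\transpose{(x^{l-1}_s)}x^{l-1}_t/m$ and $\transpose{(\tildex^{l-1}_0)}x^{l-1}_t/m$ to the expectations in the statement) gives $(ii)$. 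For the output layer I recognize $w^{L+1}(t) = U^{L+1} - \eta\chi_0\tildex^L_0 - \eta\sum_{s=1}^{t-1}\chi_s x^L_s$ as a vector in the program, so that $f_t(\xi) = m^{-1}\transpose{(w^{L+1}(t))}x^L_t$ is a \Moment\ and the Master Theorem yields $(iii)$.

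To close the loop I run the forward sweep with $\xi = \xi_t$, which shows $f_t(\xi_t)$ converges almost surely to a finite limit $\scalarlim{f_t}(\xi_t)$; continuity of $\partial_2\ell(y_t,\cdot)$ (Assumption~\ref{ass:loss}) then gives $\chi_t = \partial_2\ell(y_t, f_t(\xi_t)) \to \partial_2\ell(y_t,\scalarlim{f_t}(\xi_t))$ almost surely, so $\chi_t$ is a valid initial scalar. The backward pass at time $t$ is then handled by a downward induction over $l$ using Lemma~\ref{th:backward-ipllr-t}: $d\tildex^L_t = w^{L+1}(t)$ is in the program, $d\tildeh^L_t = d\tildex^L_t\odot\sigma'(h^L_t)$ is in the program since $\sigma'$ is pseudo-Lipschitz, and going down the extra averaged inner products $\transpose{(d\tildeh^{l+1}_s)}d\tildeh^{l+1}_t/m$ are \Moment\ scalars, so $d\tildex^l_t$ and $d\tildeh^l_t$ are vectors in the program by \MatMul\ and \NonLin. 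This is entirely parallel to the $t=1$ argument of Lemma~\ref{th:z-forward-ipllr-1}.

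The main obstacle is not the algebra but the Tensor Program bookkeeping combined with the finiteness of all the expectations in $(i)$--$(iii)$. Alongside the induction above I would maintain a nested induction showing every $Z$ variable produced here has finite moments of all orders, so that e.g. $\mathbb{E}[Z^{x^{l-1}_s}Z^{x^{l-1}_t}]$ is finite by Cauchy--Schwarz; this holds because the program is generated from jointly Gaussian initial variables and from the polynomially bounded maps $\sigma,\sigma'$ through affine combinations and \Moment\ operations whose coefficients $\chi_s$ have finite almost sure limits. One further point to watch: unlike the first forward pass, where Lemma~\ref{th:dotZ-first-forward} forces $\dotZ = 0$, the $\dotZ$ corrections arising from \MatMul\ at $t \ge 1$ need not vanish — but this is harmless here, since the statement only involves the unexpanded term $Z^{\hatW^l x^{l-1}_t}$ and never requires computing $\dotZ^{\hatW^l x^{l-1}_t}$.
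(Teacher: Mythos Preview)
Your proposal is correct and follows essentially the same route as the paper: induction on $t$ with base case Lemma~\ref{th:z-forward-ipllr-1}, then at the inductive step a forward sweep over $l$ using the weight formulas of Theorem~\ref{th:weights-ipllr-t} together with \MatMul, \NonLin, \Moment\ and the Master Theorem, the replay at $\xi=\xi_t$ to certify $\chi_t$, and a downward sweep via Lemma~\ref{th:backward-ipllr-t} for the $d\tildex^l_t, d\tildeh^l_t$. The only cosmetic difference is that the paper handles finiteness of the expectations by invoking Lemma~\ref{th:Z0-dist-moments} (every $Z$ is a polynomially bounded function of the Gaussian vector $Z_0$) rather than your Cauchy--Schwarz bookkeeping, and your closing remark about not needing to expand $\dotZ^{\hatW^l x^{l-1}_t}$ is exactly right.
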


\begin{proof}
We prove that the vectors and scalars in the claim of the theorem are part of the program by induction. Then the formulas of $(i)$, $(ii)$, and $(iii)$ are a simple consequence of the \ZNonLin\ operation. The case $t=1$ has been treated in Lemma~\ref{th:z-forward-ipllr-1}. Let $t \geq 1$ and assume that the vectors and scalars in the claim of the theorem are part of the program for any $s \in [1, t]$. By \autoref{th:weights-ipllr-t}, one has that
\begin{align*}
    h^1_{t+1} &= W^1(t+1) \xi + B^1(t+1)\\
    &=U^1 \xi + v^1 - \eta \chi_0 (\transpose{\xi}_0 \xi + 1) d\tildeh^1_0 - \eta \left(\sum_{s=1}^t \chi_s (\transpose{\xi}_s \xi + 1) d\tildeh^1_s \right)     
\end{align*}
By the induction hypothesis and \NonLin, we thus get that $h^1_{t+1}$ is a vector in the program and thus so is $x^1_{t+1} = \sigma(h^1_{t+1})$ since $\sigma$ is polynomially bounded. Let $l \in [2, L]$ and assume that $h^{l-1}_{t+1}, x^{l-1}_{t+1}$ are vectors in the program. Then, by \autoref{th:weights-ipllr-t}, we get
\begin{align*}
    h^l_{t+1} = \omega_l \hatW^l x^{l-1}_{t+1} - \eta \chi_0 \frac{\transpose{{(\tildex^{l-1}_0)}} x^{l-1}_{t+1}}{m} d\tildeh^l_0 - \eta \left( \sum_{s=1}^t \chi_s \frac{\transpose{{(x^{l-1}_s)}} x^{l-1}_{t+1}}{m} d\tildeh^l_s \right).
\end{align*}
For any $s \in [1, t]$, $\transpose{{(x^{l-1}_s)}} x^{l-1}_1 / m$ and $\transpose{{(\tildex^{l-1}_0)}} x^{l-1}_1 / m$ are scalars in the program by the induction hypothesis and the \Moment\ operation. Thus by the \MatMul\ and \NonLin\ operations, $h^l_{t+1}$ is a vector in the program and thus so is $x^l_{t+1} = \sigma(h^l_{t+1})$, which proves by induction that this is the case for any $l \in [2,L]$. We then have by \autoref{th:weights-ipllr-t}, 
\begin{align*}
    f_{t+1}(\xi) = m^{-1} \transpose{{\left(U^{L+1} - \eta \chi_0 \tildex^L_0 - \eta \sum_{s=1}^t \chi_s x^L_s \right)}} x^L_{t+1}
\end{align*}
$U^{L+1} - \eta \chi_0 \tildex^L_0 - \eta \sum_{s=1}^t \chi_s x^L_s$ is a vector in the program by the induction hypothesis and the \NonLin\ operation. Then, by the \Moment\ operation, $f_{t+1}(\xi)$ is a scalar in the program since $x^L_{t+1}$ is also a vector in the program, and by the master theorem, we have 
\begin{align*}
    f_{t+1}(\xi) \xrightarrow[m \rightarrow \infty]{a.s.} \mathbb{E}[Z^{U^{L+1}} Z^{x^L_{t+1}}] - \eta \scalarlim{\chi}_0 \mathbb{E}[Z^{\tildex^L_0} Z^{x^L_{t+1}}] - \eta \sum_{s=1}^t \scalarlim{\chi}_s \mathbb{E}[Z^{x^L_s} Z^{x^L_{t+1}}].
\end{align*}
The limit is finite by Lemma~\ref{th:Z0-dist-moments} since by an easy induction any $Z$ which appears is a polynomially bounded function of a Gaussian vector with finite covariance matrix. Since we did the previous reasoning with an arbitrary $\xi$, we also get that $h^l_{t+1}(\xi_{t+1}), x^l_{t+1}(\xi_{t+1})$ are vectors in the program for any $l \in [1, L]$. In particular, $f_{t+1}(\xi_{t+1})$ converges to an almost sure limit $\scalarlim{f}_{t+1}(\xi_{t+1})$, and thus the continuity of $\partial_2 \ell(y_{t+1}, \cdot)$ ensures the almost sure convergence of $\chi_{t+1}$ towards $\scalarlim{\chi}_{t+1} := \partial_2 \ell(y_{t+1}, \scalarlim{f}_{t+1}(\xi_{t+1}))$, which means $\chi_{t+1}$ is a valid initial scalar in the Tensor Program. Then, dropping the dependency of the forward pass at $t+1$ on $\xi_{t+1}$, we get by Lemma~\ref{th:backward-ipllr-t}:
\begin{align*}
    d\tildex^L_{t+1} = U^{L+1} - \eta \chi_0 \tildex^L_0 - \eta \sum_{s=1}^t \chi_s x^L_s
\end{align*}
which is a vector in the program by \NonLin. Then $d\tildeh^L_{t+1} = d\tildex^L_{t+1} \odot \sigma'(h^L_{t+1})$ is also a vector in the program since $\sigma'$ is pseudo-Lipschitz. Let $l \in [2, L-1]$ and assume that $d\tildex^{l+1}_{t+1}$ and $d\tildeh^{l+1}_{t+1}$ are vectors in the program. Then by Lemma~\ref{th:backward-ipllr-t}, we have
\begin{align*}
    d\tildex^{l}_{t+1} = \omega_{l+1} \transpose{{(\hatW^{l+1})}} d\tildeh^{l+1}_{t+1} - \eta \chi_0 \frac{\transpose{{(d\tildeh^{l+1}_0)}} d\tildeh^{l+1}_{t+1}}{m} \tildex^l_0 - \eta \sum_{s=1}^t \chi_s \frac{\transpose{{(d\tildeh^{l+1}_s)}} d\tildeh^{l+1}_{t+1}}{m} x^l_s
\end{align*}
$\transpose{{(d\tildeh^{l+1}_s)}} d\tildeh^{l+1}_{t+1} / m$ is a scalar in the program for any $s \in [0, t]$ by the \Moment\ operation and by \MatMul\ and \NonLin\ we thus get that $d\tildex^l_{t+1}$ is a vector in the program. Then $d\tildeh^l_{t+1} = d\tildex^l_{t+1} \odot \sigma'(h^l_{t+1})$ is also a vector in the program since $\sigma'$ is pseudo-Lipschitz, which concludes the induction. Then we get the claims of $(i)$, $(ii)$ and $(iii)$ simply by applying the \ZNonLin\ rule to the formulas derived above for the pre-activations $h^l_{t+1}$. 
\end{proof}

\begin{corollary}[Z for the forward pass of IP-LLR at time $t$]\label{th:z-forward-ipllr-t-0}
Consider the IP-LLR parameterization with a positively $p$-homogeneous activation function, and $p \geq 2$. Then, for any $t \geq 1$, and for any input $\xi \in \mathbb{R}^d$, dropping the dependency of the forward pass at time $t$ on $\xi$, and of the previous forward and backward passes on the corresponding $\xi_s$,  one has:
\begin{enumerate}[(i)]
    \item $Z^{h^1_t} = Z^{W^1(t) \xi + B^1(t)} = Z^{U^1\xi} + Z^{v^1} - \eta \scalarlim{\chi}_0 (\transpose{\xi_0} \xi + 1) Z^{d\tildeh^1_0} - \eta \left(\sum_{s=1}^{t-1} \scalarlim{\chi}_s (\transpose{\xi_s} \xi + 1) Z^{d\tildeh^1_s}  \right)$
    
    \item for any $l \in [2, L]$,
\end{enumerate}
\vspace{-2em}
\begin{align*}
    Z^{h^l_t} = Z^{W^l(t)x^{l-1}_t} = - \eta \scalarlim{\chi}_0 \mathbb{E}[Z^{\tildex^{l-1}_0} Z^{x^{l-1}_t}]  Z^{d\tildeh^l_0}  - \eta \left(\sum_{s=1}^{t-1} \scalarlim{\chi}_s \mathbb{E}[Z^{x^{l-1}_s} Z^{x^{l-1}_t}]  Z^{d\tildeh^l_s } \right),  
\end{align*}
\begin{enumerate}[(i)]
    \setcounter{enumi}{2}
    \item $f_t(\xi) = \transpose{{(W^{L+1}(t))}} x^L_t \xrightarrow[m \rightarrow \infty]{a.s.} \mathbb{E}[Z^{U^{L+1}} Z^{x^L_t}] - \eta \scalarlim{\chi}_0 \mathbb{E}[Z^{\tildex^L_0} Z^{x^L_t}] - 
        \eta \left(\sum_{s=1}^{t-1} \scalarlim{\chi}_s \mathbb{E}[Z^{x^L_s} Z^{x^L_t}]  \right).$
\end{enumerate}
\end{corollary}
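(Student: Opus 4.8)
The plan is to obtain this statement as an immediate specialization of Theorem~\ref{th:z-forward-ipllr-t}: no new Tensor Program argument is required, only the substitution of the scale factors $\omega_l$ corresponding to the IP-LLR parameterization into the three formulas proved there.

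First I would recall that IP-LLR is an integrable parameterization, so that $a_1 = 0$ and $a_l = 1$ for $l \in [2, L+1]$; hence, with the notation of the ``Omegas'' paragraph of Appendix~\ref{app:notations}, $\omega_1 = m^{-a_1} = 1$ and $\omega_l = m^{1/2 - a_l} = m^{-1/2}$ for $l \in [2, L+1]$, so that $\scalarlim{\omega}_1 = 1$ and $\scalarlim{\omega}_l = 0$ for every $l \in [2, L]$. Theorem~\ref{th:z-forward-ipllr-t} already gives, for any $t \geq 1$ and any input $\xi$, that $h^l_t(\xi), x^l_t(\xi), d\tildex^l_t, d\tildeh^l_t$ ($l \in [1, L]$) are vectors in the program, $f_t(\xi)$ is a scalar in the program, $\chi_t$ is a valid initial scalar, and the identities $(i)$--$(iii)$ hold. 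Since $(i)$ and $(iii)$ carry no $\omega$-factor, they transfer verbatim; the value $\scalarlim{\omega}_1 = 1$ is also consistent with the fact that $(i)$ is written without an $\omega_1$ prefactor. In $(ii)$, plugging $\scalarlim{\omega}_l = 0$ kills the term $\scalarlim{\omega}_l Z^{\hatW^l x^{l-1}_t}$, and the surviving expression is exactly formula $(ii)$ of the corollary.

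The only thing to verify is that this annihilation is legitimate, i.e. that $0 \cdot Z^{\hatW^l x^{l-1}_t} = 0$ almost surely, which requires $Z^{\hatW^l x^{l-1}_t}$ to be a.s. finite. This is already contained in the proof of Theorem~\ref{th:z-forward-ipllr-t}: the inductive argument there shows that every $Z$ variable appearing at time $t$ is a polynomially bounded function of a jointly Gaussian vector with finite covariance matrix, so by Lemmas~\ref{th:pos-finite-moment} and~\ref{th:tilde-pos-variance} all the relevant second moments are finite; in particular $\mathbb{E}[(Z^{x^{l-1}_t})^2] < \infty$, whence $\hatZ^{\hatW^l x^{l-1}_t} \sim \mathcal{N}(0, \mathbb{E}[(Z^{x^{l-1}_t})^2])$ is a.s. finite by \ZHat, and $\dotZ^{\hatW^l x^{l-1}_t}$ is a finite linear combination of a.s. finite terms by \ZDot. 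Hence $Z^{\hatW^l x^{l-1}_t}$ is a.s. finite and the simplification is valid. Consequently there is no real obstacle: the ``main step'' is the purely mechanical substitution of $\scalarlim{\omega}_l = 0$ for the intermediate layers, and the finiteness bookkeeping is inherited wholesale from the theorem's proof.
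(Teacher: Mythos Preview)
Your approach and the paper's are the same at the top level: start from Theorem~\ref{th:z-forward-ipllr-t} and kill the term $\scalarlim{\omega}_l Z^{\hatW^l x^{l-1}_t}$ in $(ii)$. The difference is in how that annihilation is justified. The paper does not argue directly from $\scalarlim{\omega}_l=0$ plus finiteness; it invokes Theorem~\ref{th:ipllr-initial-weight-vanish-t-geq-1}, whose proof is the long induction of Appendix~\ref{sec:ipllr-induct-dynamics} (via Lemma~\ref{th:initial-W-vanish}).

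Your direct argument has a gap precisely at the $\dotZ$ step. Saying ``$\dotZ^{\hatW^l x^{l-1}_t}$ is a finite linear combination of a.s.\ finite terms by \ZDot'' hides two things you have not checked: first, the coefficients in that combination are expectations of the partial derivatives $\partial Z^{x^{l-1}_t}/\partial \hatZ^{\transpose{(\hatW^l)} y_j}$, and you need those expectations to be finite; second, at time $t\geq 2$ the relevant $y_j$ are all of $d\tildeh^l_0,\dots,d\tildeh^l_{t-1}$, so you need control over how $Z^{x^{l-1}_t}$ depends on each $\hatZ^{\transpose{(\hatW^l)} d\tildeh^l_s}$. Neither point is established in the proof of Theorem~\ref{th:z-forward-ipllr-t}: that proof contains only a one-line remark that ``by an easy induction any $Z$ which appears is a polynomially bounded function of a Gaussian vector,'' which concerns values, not partial derivatives. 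The actual work---showing inductively that every forward/backward $Z$ and the partial derivatives needed for $\dotZ$ are polynomially bounded functions of the fixed Gaussian vector $Z_0$---is the content of Lemmas~\ref{th:ipllr-forward-func-Z0-1}--\ref{th:ipllr-induct-vansih-backward}, packaged as Theorem~\ref{th:ipllr-initial-weight-vanish-t-geq-1}. So your plan is right, but the sentence ``this is already contained in the proof of Theorem~\ref{th:z-forward-ipllr-t}'' should be replaced by a citation of Theorem~\ref{th:ipllr-initial-weight-vanish-t-geq-1} (and the paper's Remark~\ref{remark:no-circ-logic-vanish} explains why this is not circular).
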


\begin{proof}
The formulas are readily obtained by \autoref{th:z-forward-ipllr-t} coupled with the fact that we have $\scalarlim{\omega}_l Z^{\hatW^l x^{l-1}_t} = 0$ for any $l \in [2, L]$, and $t \geq 1$, which stems from \autoref{th:ipllr-initial-weight-vanish-t-geq-1}.
\end{proof}

\begin{remark}\label{remark:no-circ-logic-vanish}
Note that there is no circular logic here since only \autoref{th:z-forward-ipllr-t} is used to prove the results of Appendix~\ref{sec:th-ipllr-initial-weight-vanish} (and in particular \autoref{th:ipllr-initial-weight-vanish-t-geq-1}), so that using \autoref{th:ipllr-initial-weight-vanish-t-geq-1} for Corollary~\ref{th:z-forward-ipllr-t-0} does not lead to any issue.
\end{remark}

\begin{theorem}[$Z$s of backward pass of IP-LLR at time $t$]\label{th:z-backward-ipllr-t}
Consider the IP-LLR parameterization with a positively $p$-homogeneous activation function, and $p \geq 2$. Then, for any $t \geq 1$, dropping the dependency of the forward pass at time $t$ on $\xi_t$, and of the previous forward and backward passes on the corresponding $\xi_s$,  one has:
\begin{enumerate}[(i)]
    \item $Z^{d\tildex^L_t} = Z^{w^{L+1}(t)} = Z^{U^{L+1}} - \eta \scalarlim{\chi}_0 Z^{\tildex^L_0} - \eta \sum_{s=1}^{t-1} \scalarlim{\chi}_s Z^{x^L_s}$,
    \item $Z^{d\tildex^{l-1}_t} = \scalarlim{\omega}_l Z^{\transpose{{(\hatW^l)}} d\tildeh^l_t} - \eta \scalarlim{\chi}_0 \mathbb{E}[Z^{d\tildeh^l_0} Z^{d\tildeh^l_t}] Z^{\tildex^{l-1}_0} -\eta \sum_{s=1}^{t-1} \scalarlim{\chi}_s \mathbb{E}[Z^{d\tildeh^l_s} Z^{d\tildeh^l_t}] Z^{x^{l-1}_s}$, \quad $l \in [2, L]$.
\end{enumerate}
\end{theorem}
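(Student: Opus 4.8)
The plan is to obtain the two identities directly from the finite-width expressions of Lemma~\ref{th:backward-ipllr-t} by pushing them through the rules of the Tensor Program---\ZNonLin, \ZMatMul, and \Moment---together with the Master Theorem (\autoref{th:master-theorem}). The first step is to invoke \autoref{th:z-forward-ipllr-t}: its proof already establishes, by a joint induction on the time step $t$ and a descending induction on the layer index $l$, that for every $s \le t$ and every $l \in [1,L]$ the vectors $h^l_s$, $x^l_s$, $d\tildeh^l_s$, $d\tildex^l_s$ are vectors in the program, that $f_s(\xi_s)$ is a scalar in the program, and that $\chi_s$ is a valid initial scalar with $\chi_s \to \scalarlim{\chi}_s$ almost surely. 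In particular every object appearing on the right-hand sides of Lemma~\ref{th:backward-ipllr-t} is a legitimate program vector or scalar, the corresponding $Z$ variables are well defined, and there is no circular logic (the backward-pass claims stated here are never used in the proof of \autoref{th:z-forward-ipllr-t}).

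For part $(i)$, Lemma~\ref{th:backward-ipllr-t} gives $d\tildex^L_t = w^{L+1}(t) = U^{L+1} - \eta\chi_0\tildex^L_0 - \eta\sum_{s=1}^{t-1}\chi_s x^L_s$, which is a \NonLin\ of the initial vector $U^{L+1}$, the program vectors $\tildex^L_0, x^L_1, \dots, x^L_{t-1}$, and the initial scalars $\chi_0,\dots,\chi_{t-1}$. Applying \ZNonLin\ and replacing each $\chi_s$ by its almost-sure limit $\scalarlim{\chi}_s$ yields exactly $Z^{d\tildex^L_t} = Z^{U^{L+1}} - \eta\scalarlim{\chi}_0 Z^{\tildex^L_0} - \eta\sum_{s=1}^{t-1}\scalarlim{\chi}_s Z^{x^L_s}$.

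For part $(ii)$, fix $l \in [2,L]$; Lemma~\ref{th:backward-ipllr-t} gives
\[
d\tildex^{l-1}_t = \omega_l \transpose{(\hatW^l)} d\tildeh^l_t - \eta\chi_0\,\frac{\transpose{(d\tildeh^l_0)} d\tildeh^l_t}{m}\,\tildex^{l-1}_0 - \eta\sum_{s=1}^{t-1}\chi_s\,\frac{\transpose{(d\tildeh^l_s)} d\tildeh^l_t}{m}\,x^{l-1}_s .
\]
The averaged inner products $\transpose{(d\tildeh^l_s)} d\tildeh^l_t / m$ (and $\transpose{(d\tildeh^l_0)} d\tildeh^l_t / m$) are scalars in the program by the \Moment\ rule, and the Master Theorem yields their almost-sure limits $\mathbb{E}[Z^{d\tildeh^l_s} Z^{d\tildeh^l_t}]$; these limits are finite because, by the same descending inductions, each $Z$ that appears is a pseudo-Lipschitz---hence polynomially bounded---function of a jointly Gaussian vector with finite covariance, so they are admissible initial scalars for the next \NonLin. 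Expressing $d\tildex^{l-1}_t$ as \ZMatMul\ applied to $\omega_l\transpose{(\hatW^l)} d\tildeh^l_t$ combined through \ZNonLin\ with the remaining terms (recall $\scalarlim{\omega}_l$ is the almost-sure limit of the initial scalar $\omega_l$, equal to $0$ for IP-LLR when $l \in [2,L]$ but kept symbolic here), we obtain the stated formula, where $Z^{\transpose{(\hatW^l)} d\tildeh^l_t}$ is left symbolic since its \ZHat\ and \ZDot\ decomposition is not needed at this stage.

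The argument is thus essentially a mechanical transcription of Lemma~\ref{th:backward-ipllr-t} into $Z$-space. The only point that genuinely requires care---and hence the main obstacle---is verifying that the scalar coefficients, i.e.\ the normalized inner products of the backward-pass tilde variables, admit finite almost-sure limits: this both licenses the use of the Master Theorem and ensures that these quantities may legitimately play the role of initial scalars in subsequent \NonLin\ operations. That finiteness is inherited from the fact, established together with \autoref{th:z-forward-ipllr-t}, that all the relevant $Z$'s are polynomially bounded transforms of finite-variance Gaussians.
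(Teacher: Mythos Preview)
Your proposal is correct and follows essentially the same route as the paper: invoke \autoref{th:z-forward-ipllr-t} (and Lemma~\ref{th:ipllr-forward-backward-0} for $t=0$) to ensure all forward and backward vectors and the scalars $\chi_s$ are legitimate program objects, then apply \ZNonLin\ (together with \Moment\ and the Master Theorem for the inner products) to the finite-width identities of Lemma~\ref{th:backward-ipllr-t}. The paper's own proof is even terser, but your additional care about finiteness of the scalar limits and absence of circularity is accurate and in the same spirit.
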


\begin{proof}
We have already proved in \autoref{th:z-forward-ipllr-t} that for any $s \in [1, t]$ the vectors of the forward ($h^l_s, x^l_s$ for $l \in [1, L]$) and the backward pass ($d\tildex^l_s, d\tildeh^l_s$ for $l \in [1, L]$) at time $s$ are part of the program and similarly at $t=0$ by Lemma~\ref{th:ipllr-forward-backward-0}. Then, claims $(i)$ and $(ii)$ readily follow from applying the \ZNonLin\ rule to the formulas of Lemma~\ref{th:backward-ipllr-t}.
\end{proof}

\begin{corollary}[$Z$s of backward pass of IP-LLR at time $t$]\label{th:z-backward-ipllr-t-0}
Consider the IP-LLR parameterization with a positively $p$-homogeneous activation function, and $p \geq 2$. Then, for any $t \geq 1$, dropping the dependency of the forward pass at time $t$ on $\xi_t$, and of the previous forward and backward passes on the corresponding $\xi_s$,  one has:
\begin{enumerate}[(i)]
    \item $Z^{d\tildex^L_t} = Z^{w^{L+1}(t)} = Z^{U^{L+1}} - \eta \scalarlim{\chi}_0 Z^{\tildex^L_0} - \eta \sum_{s=1}^{t-1} \scalarlim{\chi}_s Z^{x^L_s}$
    \item $Z^{d\tildex^{l-1}_t} = - \eta \scalarlim{\chi}_0 \mathbb{E}[Z^{d\tildeh^l_0} Z^{d\tildeh^l_t}] Z^{\tildex^{l-1}_0} -\eta \sum_{s=1}^{t-1} \scalarlim{\chi}_s \mathbb{E}[Z^{d\tildeh^l_s} Z^{d\tildeh^l_t}] Z^{x^{l-1}_s}$, \qquad $l \in [2, L]$.
\end{enumerate}
\end{corollary}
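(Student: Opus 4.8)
The plan is to read the corollary off from \autoref{th:z-backward-ipllr-t}, whose formulas coincide with the ones claimed here except that item $(ii)$ of that theorem carries the extra term $\scalarlim{\omega}_l Z^{\transpose{{(\hatW^l)}} d\tildeh^l_t}$ in front of the double sum. Item $(i)$ is already exactly \autoref{th:z-backward-ipllr-t}$(i)$ with nothing to add: by Lemma~\ref{th:backward-ipllr-t} the vector $d\tildex^L_t = w^{L+1}(t)$ is produced from the initial vector $U^{L+1}$, the vectors $\tildex^L_0, x^L_s$ and the scalars $\chi_0, \chi_s$ by the \NonLin\ rule alone, with no \MatMul\ by a weight matrix of vanishing scale, so applying \ZNonLin\ gives precisely the stated expression for $Z^{d\tildex^L_t}$. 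Hence the only thing left to prove is that $\scalarlim{\omega}_l Z^{\transpose{{(\hatW^l)}} d\tildeh^l_t} = 0$ almost surely for every $l \in [2,L]$ and every $t \geq 1$.

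For that step I would recall that for IP-LLR one has $\omega_l = m^{-1/2}$ for $l \in [2,L]$, so $\scalarlim{\omega}_l = 0$; it therefore suffices to check that $Z^{\transpose{{(\hatW^l)}} d\tildeh^l_t}$ is an almost surely finite random variable, so that the product is genuinely $0$. By \autoref{th:z-backward-ipllr-t} the vector $d\tildeh^l_t$ is in the program, so the \ZMatMul\ rule gives $Z^{\transpose{{(\hatW^l)}} d\tildeh^l_t} = \hatZ^{\transpose{{(\hatW^l)}} d\tildeh^l_t} + \dotZ^{\transpose{{(\hatW^l)}} d\tildeh^l_t}$. The \ZHat\ part is a centred Gaussian with variance $\mathbb{E}[(Z^{d\tildeh^l_t})^2]$, which is finite because, as already established while proving \autoref{th:z-backward-ipllr-t} via the moment bounds of Lemma~\ref{th:Z0-dist-moments}, every $Z$ arising here is a polynomially bounded function of a Gaussian vector with finite covariance matrix; the \ZDot\ part is a finite linear combination $\sum_j \mathbb{E}[\partial Z^{d\tildeh^l_t}/\partial Z^{\hatW^l y_j}]\, Z^{y_j}$ with finite coefficients and almost surely finite $Z^{y_j}$. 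Substituting $\scalarlim{\omega}_l Z^{\transpose{{(\hatW^l)}} d\tildeh^l_t} = 0$ into \autoref{th:z-backward-ipllr-t}$(ii)$ then yields the claimed formula.

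To package this cleanly I would invoke the vanishing results of Appendix~\ref{sec:th-ipllr-initial-weight-vanish} (such as \autoref{th:ipllr-initial-weight-vanish-t-geq-1}), which give $\scalarlim{\omega}_l Z^{\transpose{{(\hatW^l)}} d\tildeh^l_t} = 0$ for $t \geq 1$, exactly as the forward-pass version was used in Corollary~\ref{th:z-forward-ipllr-t-0}. The one point requiring care — and the main, essentially the only, obstacle — is the logical ordering already flagged in \autoref{remark:no-circ-logic-vanish}: that vanishing statement must itself be proved using only \autoref{th:z-forward-ipllr-t} and \autoref{th:z-backward-ipllr-t} (which establish membership in the program and finiteness of the relevant second moments), and never the present corollary, so that no circularity is introduced.
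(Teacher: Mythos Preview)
Your proposal is correct and takes essentially the same approach as the paper: start from \autoref{th:z-backward-ipllr-t}, then kill the extra $\scalarlim{\omega}_l Z^{\transpose{{(\hatW^l)}} d\tildeh^l_t}$ term by invoking \autoref{th:ipllr-initial-weight-vanish-t-geq-1}, noting the non-circularity caveat. Your middle paragraph sketching a direct finiteness argument is more than the paper gives, but making it fully rigorous (in particular the finiteness of the \ZDot\ coefficients) would require exactly the induction of Appendix~\ref{sec:ipllr-induct-dynamics}, which is what \autoref{th:ipllr-initial-weight-vanish-t-geq-1} encapsulates anyway.
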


\begin{proof}
The formulas are readily obtained by \autoref{th:z-backward-ipllr-t} and the fact that $Z^{\transpose{{(\hatW^l)}} d\tildeh^l_t} = 0$ for any $l \in [2, L]$ and $t \geq 1$, which stems from \autoref{th:ipllr-initial-weight-vanish-t-geq-1}.
\end{proof}

\begin{remark}
Note that a similar statement can be made as in Remark~\ref{remark:no-circ-logic-vanish} regarding circular logic since only \autoref{th:z-backward-ipllr-t} is used to prove the results of Appendix~\ref{sec:th-ipllr-initial-weight-vanish}.
\end{remark}

\subsection{Second forward pass of IP-LLR ($t=1$)}

In this section, we prove that for IP-LLR, we have $0 < \mathbb{E}[Z^{\tildex^l_0} Z^{x^l_1}] < \infty$ for any $l \in [1, L]$ under the assumption that $\scalarlim{\chi}_0 := \lim_{m \rightarrow \infty} \chi_0 \neq 0$. To obtain those results, we use the formulas from Corollary~\ref{th:z-forward-ipllr-t-0} for $t=1$, which are obtained using the main result from Appendix~\ref{sec:ipllr-induct-dynamics}, namely \autoref{th:ipllr-initial-weight-vanish-t-geq-1}. We choose to put Appendix~\ref{sec:ipllr-induct-dynamics} towards the end of the Appendix section as its main result is quite intuitive: any multiplication by matrices with pre-factors in $m^{-1}$ result in a vector whose coordinates (the corresponding $Z$) converge to $0$ almost surely at any time step. The proof however requires a long and cumbersome induction and we thus leave it for the later stages of the Appendix so as not to break the narrative of the Appendix. 
\\ \\
The finiteness of the expectations $\mathbb{E}[Z^{\tildex^l_0} Z^{x^l_1}]$ is a simple consequence of Lemma~\ref{th:Z0-dist-moments}, but the fact that they are $>0$ requires more work as we will see below. Since we work with IP-LLR, recall that we consider a bias term at the first layer only. 

\begin{lemma}[1st layer of forward pass of IP-LLR at $t=1$]\label{th:ipllr-forward-1-0}
Consider the IP-LLR parameterization with an activation function $\sigma$ satisfying Assumption~\ref{ass:smooth-homogeneous-act}. Let $\xi$ be an input to the network, and assume $\scalarlim{\chi}_0 \neq 0$. Then, dropping the dependency of the first forward-backward pass on $\xi_0$, and that of the second forward pass on $\xi$, one has:
\begin{enumerate}[(i)]
    \item $Z^{h^1_1} = Z^{\tildeh^{1}_0(\xi)} -\eta \scalarlim{\chi}_0 (\transpose{\xi_0} \xi + 1)Z^{d\tildeh^1_0} = Z^{\tildeh^{1}_0(\xi)} - \eta \scalarlim{\chi}_0 (\transpose{\xi_0} \xi + 1) Z^{d\tildex^1_0} \sigma'(Z^{\tildeh^1_0})$,
    
    \item 
    $(Z^{\tildeh^1_0}, Z^{\tildeh^1_0(\xi)},  Z^{d\tildex^1_0}) \sim \mathcal{N} \left(0, 
    \begin{pmatrix}
        ||\xi_0||^2 + 1 & \transpose{\xi_0} \xi +1 & 0 \\
        \transpose{\xi} \xi_0 + 1 & ||\xi||^2 +1 & 0 \\
        0 & 0 & \mathbb{E}[(Z^{d\tildeh^2_0})^2]
    \end{pmatrix} \right)$,
    
    \item $0 < \mathbb{E}[Z^{\tildex^1_0} Z^{x^1_1}] < \infty$.
\end{enumerate}
\end{lemma}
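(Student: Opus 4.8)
## Proof proposal for Lemma \ref{th:ipllr-forward-1-0}

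The plan is to compute the three $Z$-variables at the first layer directly from the IP-LLR update formulas, then read off the joint Gaussian law, and finally establish the sign and finiteness of the cross-moment $\mathbb{E}[Z^{\tildex^1_0} Z^{x^1_1}]$. The first two claims are essentially bookkeeping: claim $(i)$ follows from Corollary \ref{th:z-forward-ipllr-t-0} with $t=1$ applied at the first layer (where $\scalarlim{\omega}_1 Z^{\hatW^1 x^0_1}$ is replaced by the genuine first-layer contribution $Z^{U^1\xi}+Z^{v^1}$, i.e.\ $Z^{\tildeh^1_0(\xi)}$), combined with the \ZNonLin\ expansion $Z^{d\tildeh^1_0} = Z^{d\tildex^1_0}\sigma'(Z^{\tildeh^1_0})$ from Definition \ref{def:tilde-variables}. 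For claim $(ii)$, the variables $Z^{\tildeh^1_0}=Z^{U^1\xi_0+v^1}$ and $Z^{\tildeh^1_0(\xi)}=Z^{U^1\xi+v^1}$ are jointly Gaussian by the \ZInit/\ZHat\ rules with the stated $2\times 2$ covariance block (just $\mathrm{Gram}(\xi_0,\xi)$ shifted by $1$ from the bias), while $Z^{d\tildex^1_0} = \hatZ^{\transpose{(\hatW^2)}d\tildeh^2_0}$ is a purely-Gaussian \ZHat\ variable with variance $\mathbb{E}[(Z^{d\tildeh^2_0})^2]$; the off-diagonal zeros come from the \ZHat\ independence statement, since $d\tildex^1_0$ is generated using $\transpose{(\hatW^2)}$ whereas $\tildeh^1_0,\tildeh^1_0(\xi)$ are functions only of the initial vectors $U^1\xi_0, U^1\xi, v^1$, which are independent of everything in $\mathcal{W}_{\hatW^2}$.

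The finiteness half of claim $(iii)$ is immediate from Lemma \ref{th:Z0-dist-moments} (or Lemma \ref{th:pos-finite-moment}): $Z^{\tildex^1_0}=\sigma(Z^{\tildeh^1_0})$ and $Z^{x^1_1}=\sigma(Z^{h^1_1})$ are both polynomially-bounded functions of a Gaussian vector with finite covariance, so Cauchy--Schwarz bounds the product moment. The real work is the strict positivity $\mathbb{E}[Z^{\tildex^1_0}Z^{x^1_1}]>0$. Here I would use the homogeneity structure from Appendix \ref{sec:pos-homog}: $\sigma\ge 0$ and $\sigma(z)>0$ for $z>0$, so $Z^{\tildex^1_0}Z^{x^1_1}=\sigma(Z^{\tildeh^1_0})\sigma(Z^{h^1_1})\ge 0$ almost surely, and it suffices to exhibit an event of positive probability on which both factors are strictly positive, i.e.\ on which $Z^{\tildeh^1_0}>0$ and $Z^{h^1_1}>0$ simultaneously. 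Since $Z^{h^1_1} = Z^{\tildeh^1_0(\xi)} - \eta\scalarlim{\chi}_0(\transpose{\xi_0}\xi+1)Z^{d\tildex^1_0}\sigma'(Z^{\tildeh^1_0})$, I would condition on $Z^{\tildeh^1_0}$ and $Z^{\tildeh^1_0(\xi)}$ and argue that, because $Z^{d\tildex^1_0}$ is an \emph{independent} nondegenerate centered Gaussian (nondegenerate because $\mathbb{E}[(Z^{d\tildeh^2_0})^2]>0$ by Lemma \ref{th:tilde-pos-variance}), for any fixed values $z_0>0$ and $z_\xi$ with $\sigma'(z_0)\neq 0$ the conditional law of $Z^{h^1_1}$ given $(Z^{\tildeh^1_0},Z^{\tildeh^1_0(\xi)})=(z_0,z_\xi)$ is a nondegenerate Gaussian, hence assigns positive probability to $(0,\infty)$. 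Note $\sigma'(z_0)\neq 0$ for $z_0>0$ since $\sigma'$ is $(p-1)$-homogeneous with $\alpha>0$ (Appendix \ref{sec:pos-homog}), and the set $\{z_0>0\}\cap\{Z^{\tildeh^1_0(\xi)} \text{ in a bounded set}\}$ has positive probability because $(Z^{\tildeh^1_0},Z^{\tildeh^1_0(\xi)})$ is a nondegenerate $2$-dimensional Gaussian. Integrating, $\mathbb{P}(Z^{\tildeh^1_0}>0,\ Z^{h^1_1}>0)>0$, which forces the nonnegative random variable $\sigma(Z^{\tildeh^1_0})\sigma(Z^{h^1_1})$ to have strictly positive expectation.

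The main obstacle is the positivity argument, specifically making rigorous that the extra term $-\eta\scalarlim{\chi}_0(\transpose{\xi_0}\xi+1)Z^{d\tildex^1_0}\sigma'(Z^{\tildeh^1_0})$ cannot conspire to make $Z^{h^1_1}\le 0$ almost everywhere on $\{Z^{\tildeh^1_0}>0\}$. This is exactly where the independence of $Z^{d\tildex^1_0}$ from $(Z^{\tildeh^1_0},Z^{\tildeh^1_0(\xi)})$ — established in claim $(ii)$ — is essential: it lets me treat that term as genuine added Gaussian noise with a variance bounded away from zero on the relevant event, so the conditional distribution of $Z^{h^1_1}$ has full support on $\mathbb{R}$. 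The hypothesis $\scalarlim{\chi}_0\neq 0$ is used only to guarantee the coefficient in front of that noise term is nonzero (otherwise one falls back on $Z^{h^1_1}=Z^{\tildeh^1_0(\xi)}$ directly, which is also fine); and the assumption $\transpose{\xi_0}\xi\neq -1$ is implicit in the regime of interest — in the degenerate case $\transpose{\xi_0}\xi+1=0$ one again has $Z^{h^1_1}=Z^{\tildeh^1_0(\xi)}$ and positivity is even easier. I would close by noting that all the Tensor-Program manipulations are licensed because $\sigma,\sigma'$ are pseudo-Lipschitz when $p\ge 2$ (Assumption \ref{ass:smooth-homogeneous-act} implies Assumption \ref{ass:smooth-act}), so Theorem \ref{th:master-theorem} and the \ZHat/\ZNonLin\ rules apply throughout.
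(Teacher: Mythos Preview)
Your treatment of claims $(i)$ and $(ii)$ matches the paper's proof essentially line for line: Corollary~\ref{th:z-forward-ipllr-t-0} at $t=1$ for the formula of $Z^{h^1_1}$, then the \ZInit/\ZHat\ rules for the joint Gaussian law, with independence of $Z^{d\tildex^1_0}=\hatZ^{\transpose{(\hatW^2)}d\tildeh^2_0}$ from the first-layer initial vectors.

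For claim $(iii)$, your approach is correct but genuinely different from the paper's. The paper writes $\mathbb{E}[Z^{\tildex^1_0}Z^{x^1_1}]$ as an explicit integral against the Gaussian density, splits into the two cases $\xi=\xi_0$ (where the $2\times 2$ block $S$ is singular, so the integral collapses to two variables) and $\xi\neq\xi_0$ (three variables with a nondegenerate density), and in each case exhibits a concrete point---$(z^*,u^*)=(1,-1)$ or $(u^*,v^*,z^*)=(1,1,-1)$---at which the continuous nonnegative integrand is strictly positive. Your route is more probabilistic: you observe that the integrand is nonnegative and then use the independence of $Z^{d\tildex^1_0}$ to argue, via conditioning, that $\mathbb{P}(Z^{\tildeh^1_0}>0,\ Z^{h^1_1}>0)>0$. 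This is a perfectly valid alternative and is arguably cleaner, since it avoids the explicit case split and the change of variable $u\leftarrow -u$ that the paper uses to assume $\lambda\ge 0$.

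One small gap to patch: you justify $\mathbb{P}(Z^{\tildeh^1_0}>0)>0$ by asserting that $(Z^{\tildeh^1_0},Z^{\tildeh^1_0(\xi)})$ is a nondegenerate $2$-dimensional Gaussian, but this fails precisely when $\xi=\xi_0$ (the paper computes $\det S=\big(\|\xi_0\|^2\|\xi\|^2-(\transpose{\xi_0}\xi)^2\big)+\|\xi_0-\xi\|^2$, which vanishes iff $\xi=\xi_0$). Fortunately your conditioning argument does not actually need the $2$-dimensional nondegeneracy: all you use is that the marginal $Z^{\tildeh^1_0}\sim\mathcal{N}(0,\|\xi_0\|^2+1)$ has positive variance (so $\{Z^{\tildeh^1_0}>0\}$ has probability $1/2$), and that, conditionally on $(Z^{\tildeh^1_0},Z^{\tildeh^1_0(\xi)})$, the term $-\eta\scalarlim{\chi}_0(\transpose{\xi_0}\xi+1)\sigma'(Z^{\tildeh^1_0})\,Z^{d\tildex^1_0}$ is nondegenerate Gaussian noise whenever the coefficient is nonzero. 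So simply drop the nondegeneracy claim and appeal to the marginal instead; then your argument covers both cases uniformly, whereas the paper treats them separately.
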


\begin{proof}
We have by Corollary~\ref{th:z-forward-ipllr-t-0} at time $t=1$
\begin{align*}
    Z^{x^1_1} &= \sigma \left(Z^{h^1_1(\xi)} \right) \\
    &= \sigma \left(Z^{\tildeh^1_0(\xi)} - \eta \scalarlim{\chi}_0 (\transpose{\xi_0} \xi + 1) Z^{d\tildeh^1_0} \right).
\end{align*}
Moreover, since $d\tildeh^1_0 = d\tildex^1_0 \odot \sigma'(\tildeh^1_0)$, since all the vectors are part of the Tensor Program, by \ZNonLin\ we have $Z^{d\tildeh^1_0} = Z^{d\tildex^1_0} \sigma'(Z^{\tildeh^1_0})$, so that
\begin{align*}
    Z^{x^1_1} &= \sigma \left(Z^{\tildeh^1_0(\xi)} - \eta \scalarlim{\chi}_0 (\transpose{\xi_0} \xi + 1) Z^{d\tildex^1_0} \sigma'(Z^{\tildeh^1_0}) \right).
\end{align*}
Finally, we have
\begin{align*}
    Z^{\tildex^1_0} = \sigma(Z^{\tildeh^1_0}).
\end{align*}
From the rules of \ZInit\ and \ZHat, we have that
\begin{align*}
    (Z^{\tildeh^1_0}, Z^{\tildeh^1_0(\xi)}, Z^{d\tildex^1_0}) \sim \mathcal{N} \left(0, 
    \begin{pmatrix}
        S & 0 \\
        0 & \mathbb{E}[(Z^{d\tildeh^2_0})^2]
    \end{pmatrix} \right),
\end{align*}
with 
\begin{align*}
    S := 
    \begin{pmatrix}
          ||\xi_0||^2+1 & \transpose{\xi_0}\xi + 1 \\
        \transpose{\xi_0}\xi + 1 & ||\xi||^2+1
    \end{pmatrix}.
\end{align*}
By Lemma~\ref{th:tilde-pos-variance}, $\mathbb{E}[(Z^{d\tildeh^2_0})^2] < \infty$, so that the covariance matrix is finite and thus $Z^{\tildex^1_0} Z^{x^1_1}$ is a polynomially bounded function of a Gaussian vector which shows that the expectation is finite by Lemma~\ref{th:Z0-dist-moments}. It is also non-negative since $\sigma$ is non-negative. To prove that it is positive, one needs only prove that the integrand is not almost everywhere 0. 
By Lemma~\ref{th:tilde-pos-variance}, $\mathbb{E}[(Z^{d\tildeh^2_0})^2] > 0$ so that the covariance matrix is invertible if and only if $S$ is invertible. We have 
\begin{align*}
    \text{det}(S) = \left(||\xi_0||^2 ||\xi||^2 - (\transpose{\xi_0}\xi)^2 \right) + ||\xi_0 - \xi||^2,
\end{align*}
which is the sum of two non-negative terms by Cauchy-Schwarz's inequality, and is thus $0$ if and only if both terms are zero. The first term is zero only when $\xi$ and $\xi_0$ are proportional, and if in addition the second term is zero than $\xi = \xi_0$.  The distribution of the Gaussian vector appearing in $Z^{\tildex^1_0} Z^{x^1_1}$ thus depends on whether or not $\xi_0$ and $\xi$ are equal. 
\\ \\
\textbf{Case when $\xi = \xi_0$}. Then, calling $\lambda := -\eta \scalarlim{\chi}_0 (\transpose{\xi_0} \xi +1)$, we have
\begin{align*}
    \mathbb{E}[Z^{\tildex^1_0} Z^{x^1_1}] = \int \sigma(z) \sigma \left(z - \lambda u \sigma'(z) \right) p_z(z) p_u(u) \mathrm{d}z \mathrm{d}u,
\end{align*}
where $p_z$ and $p_u$ are the densities of the two Gaussians $\mathcal{N}(0, ||\xi_0||^2 +1)$ and $\mathcal{N}(0, \mathbb{E}[(Z^{d\tildeh^2_0})^2])$ respectively, which are not degenerate, so that $p_z(z) > 0$ for any $z$ and similarly for $p_u(u)$. Since $Z^{d\tildex^1_0}$ and $-Z^{d\tildex^1_0}$ have the same distribution and since it is independent of $Z^{\tildeh^1_0}$, we can assume $\lambda \geq 0$ W.L.O.G (if $\lambda \leq 0$ we can always do the change of variable $u \leftarrow -u$ in the integral above since $p_u(-u) = p_u(u)$). Consider the point $(z^*, u^*) := (1, -1)$, at which the integrand in the integral above is $>0$, because $\sigma$ and $\sigma'$ are $>0$ on the positive part of the real line (see Appendix~\ref{sec:pos-homog}) and $\lambda \geq 0$. 
The integral is then positive, because the integrand is a continuous function, since $\sigma$ and $\sigma'$ are continuous (see again Appendix~\ref{sec:pos-homog}). 
\\ \\
\textbf{Case when $\xi \neq \xi_0$}. Then, we have
\begin{align*}
        \mathbb{E}[Z^{\tildex^1_0} Z^{x^1_1}] = \int \sigma(u) \sigma \left(v - \lambda z \sigma'(u) \right) p_{u,v}(u,v) p_z(z) \mathrm{d}u \mathrm{d}v \mathrm{d}z,
\end{align*}
where $p_{u,v}$ and and $p_z$ are the densities of non-degenerate Gaussians and are thus well-defined and positive everywhere. Again, we can assume $\lambda \geq 0$ W.L.O.G. We consider the point $(u^*, v^*, z^*) = (1, 1, -1)$ at which the integrand is $>0$ since $\sigma$ and $\sigma'$ are positive on the positive part of the real line. 
Hence, the integral is $>0$ because the integrand is a continuous function, since $\sigma$ and $\sigma'$ are continuous, which concludes the proof. 
\end{proof}

\begin{lemma}[Intermediate layer of forward pass of IP-LLR at $t=1$]\label{th:ipllr-forward-1-l}
Consider the IP-LLR parameterization with an activation function $\sigma$ satisfying Assumption~\ref{ass:smooth-homogeneous-act}. Let $\xi$ be an input to the network, let $l \in [2, L]$, and assume $\scalarlim{\chi}_0 \neq 0$. Then, dropping the dependency of the first forward-backward pass on $\xi_0$, and that of the second forward pass on $\xi$, one has:
\begin{enumerate}[(i)]
    \item $Z^{h^l_1} = -\eta \scalarlim{\chi}_0 \mathbb{E}[Z^{\tildex^{l-1}_0} Z^{x^{l-1}_1}] Z^{d\tildeh^l_0} = -\eta \scalarlim{\chi}_0 \mathbb{E}[Z^{\tildex^{l-1}_0} Z^{x^{l-1}_1}] \hatZ^{d\tildex^l_0} \sigma'(\hatZ^{\tildeh^l_0})$,
    
    \item $Z^{\tildeh^l_0}$ and $Z^{d\tildex^l_0}$ are independent,
    
    \item $0 < \mathbb{E}[Z^{\tildex^l_0} Z^{x^l_1}] < \infty$.
\end{enumerate}
\end{lemma}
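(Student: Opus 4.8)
The plan is to mirror the structure of Lemma~\ref{th:ipllr-forward-1-0} exactly, now one layer deeper. First I would apply Corollary~\ref{th:z-forward-ipllr-t-0} at $t=1$ and layer $l$, which gives $Z^{h^l_1} = -\eta \scalarlim{\chi}_0 \mathbb{E}[Z^{\tildex^{l-1}_0} Z^{x^{l-1}_1}] Z^{d\tildeh^l_0}$ since the $\scalarlim{\omega}_l Z^{\hatW^l x^{l-1}_1}$ term vanishes by \autoref{th:ipllr-initial-weight-vanish-t-geq-1}. Then using $d\tildeh^l_0 = d\tildex^l_0 \odot \sigma'(\tildeh^l_0)$ and \ZNonLin, together with the facts from Lemma~\ref{th:dotZ-first-forward} that $\dotZ^{\hatW^l \tildex^{l-1}_0} = 0$ and $\dotZ^{\transpose{{(\hatW^{l+1})}} d\tildeh^{l+1}_0} = 0$ (so that $Z^{\tildeh^l_0} = \hatZ^{\hatW^l \tildex^{l-1}_0}$ and $Z^{d\tildex^l_0} = \hatZ^{\transpose{{(\hatW^{l+1})}} d\tildeh^{l+1}_0}$), I get the first claim. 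Claim $(ii)$ --- independence of $Z^{\tildeh^l_0}$ and $Z^{d\tildex^l_0}$ --- follows from the \ZHat\ rule: $\hatZ^{\hatW^l \tildex^{l-1}_0}$ is built from $\hatW^l$ while $\hatZ^{\transpose{{(\hatW^{l+1})}} d\tildeh^{l+1}_0}$ is built from $\hatW^{l+1}$, and since $l+1 \neq l$ these are independent (and each has finite variance by Lemma~\ref{th:tilde-pos-variance}). This should also be where I invoke an induction: for layer $l$ I would assume $0 < \mathbb{E}[Z^{\tildex^{l-1}_0} Z^{x^{l-1}_1}] < \infty$ (the base case $l=2$ being Lemma~\ref{th:ipllr-forward-1-0}, and the inductive step being exactly claim $(iii)$ of this lemma), so that the coefficient $\lambda := -\eta \scalarlim{\chi}_0 \mathbb{E}[Z^{\tildex^{l-1}_0} Z^{x^{l-1}_1}]$ is a well-defined nonzero real number.

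For claim $(iii)$, finiteness is immediate: $Z^{\tildex^l_0} = \sigma(Z^{\tildeh^l_0})$ and $Z^{x^l_1} = \sigma(Z^{h^l_1}) = \sigma(\lambda Z^{d\tildex^l_0} \sigma'(Z^{\tildeh^l_0}))$ are polynomially bounded functions of the jointly Gaussian vector $(Z^{\tildeh^l_0}, Z^{d\tildex^l_0})$, which has finite covariance, so Lemma~\ref{th:Z0-dist-moments} applies. Non-negativity holds because $\sigma \geq 0$ (Assumption~\ref{ass:smooth-homogeneous-act}). For strict positivity I would write the expectation as an explicit double integral over the product density $p_z(z) p_u(u)$ of the two independent Gaussians $Z^{\tildeh^l_0} \sim \mathcal{N}(0, V^2_{h,l})$ and $Z^{d\tildex^l_0} \sim \mathcal{N}(0, V^2_{dx,l})$, both non-degenerate by Lemma~\ref{th:tilde-pos-variance} so their densities are strictly positive everywhere:
\begin{align*}
    \mathbb{E}[Z^{\tildex^l_0} Z^{x^l_1}] = \int \sigma(z) \, \sigma\!\left(\lambda\, u\, \sigma'(z)\right) p_z(z)\, p_u(u)\, \mathrm{d}z\, \mathrm{d}u.
\end{align*}
As in the proof of Lemma~\ref{th:ipllr-forward-1-0}, since $Z^{d\tildex^l_0}$ and $-Z^{d\tildex^l_0}$ have the same law and it is independent of $Z^{\tildeh^l_0}$, I may assume $\lambda \geq 0$ without loss of generality (otherwise substitute $u \leftarrow -u$). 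Evaluating the integrand at the point $(z^*, u^*) = (1, 1)$ --- or $(1, -1)$, whichever makes the argument of the outer $\sigma$ positive given the sign of $\lambda$; with $\lambda \geq 0$ take $(1,1)$ --- the integrand is $\sigma(1)\sigma(\lambda \sigma'(1)) > 0$ because $\sigma$ and $\sigma'$ are strictly positive on the positive half-line (Appendix~\ref{sec:pos-homog}). Since $\sigma$ and $\sigma'$ are continuous, the integrand is continuous and hence strictly positive on a neighbourhood of $(z^*, u^*)$, which forces the integral to be strictly positive.

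The main obstacle, as in the companion lemma, is the strict-positivity argument in $(iii)$: one must be careful that the coefficient $\lambda$ is genuinely nonzero (this is exactly where the hypothesis $\scalarlim{\chi}_0 \neq 0$ and the inductive hypothesis $\mathbb{E}[Z^{\tildex^{l-1}_0} Z^{x^{l-1}_1}] > 0$ enter, via the induction from Lemma~\ref{th:ipllr-forward-1-0}), and that the two Gaussians are non-degenerate so that the product density is positive everywhere. Unlike the $l=1$ case there is no issue with $\xi$ versus $\xi_0$ here --- for $l \geq 2$ the $Z^{h^l_1}$ has no additive Gaussian part (the $Z^{U^1\xi} + Z^{v^1}$ term is absent), so the structure is cleaner and the degenerate-covariance case study disappears. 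A minor point to verify is that $\sigma'(Z^{\tildeh^l_0})$ is not almost surely zero, which holds since $Z^{\tildeh^l_0}$ has positive variance and $\sigma'$ is positively $(p-1)$-homogeneous with $\sigma'(z) > 0$ for $z > 0$; this ensures the argument of the outer $\sigma$ is not identically zero.
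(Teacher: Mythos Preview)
Your argument is correct, and for claims $(i)$ and $(ii)$ it matches the paper almost verbatim (including the case distinction you should add for $l=L$, where $Z^{d\tildex^L_0} = \hatZ^{U^{L+1}}$ rather than $\hatZ^{\transpose{(\hatW^{l+1})} d\tildeh^{l+1}_0}$; independence from $\hatZ^{\hatW^L \tildex^{L-1}_0}$ then comes from $U^{L+1}$ being an initial vector, but the rest of your argument goes through unchanged).

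For claim $(iii)$ you take a genuinely different route from the paper. You reproduce the ``evaluate-at-a-point plus continuity'' argument of Lemma~\ref{th:ipllr-forward-1-0}: write the expectation as $\int \sigma(z)\,\sigma(\lambda u \sigma'(z))\,p_z(z)p_u(u)\,\mathrm{d}z\,\mathrm{d}u$, reduce to $\lambda>0$ by symmetry of $Z^{d\tildex^l_0}$, and exhibit a point (say $(1,1)$) where the continuous non-negative integrand is strictly positive. The paper instead exploits the positive $p$-homogeneity of $\sigma$ together with conditioning on $Z^{\tildeh^l_0}$: writing $\sigma(\lambda Z^{d\tildex^l_0}\sigma'(Z^{\tildeh^l_0})) = |\lambda|^p |\sigma'(Z^{\tildeh^l_0})|^p \,\sigma(-\epsilon\,\mathrm{sign}(Z^{\tildeh^l_0}) Z^{d\tildex^l_0})$ and using that $-\epsilon\,\mathrm{sign}(Z^{\tildeh^l_0}) Z^{d\tildex^l_0}$ has the same conditional law as $Z^{d\tildex^l_0}$, it obtains the exact factorization
\[
\mathbb{E}[Z^{\tildex^l_0} Z^{x^l_1}] = |\lambda|^p \,\mathbb{E}\bigl[\sigma(Z^{\tildeh^l_0})|\sigma'(Z^{\tildeh^l_0})|^p\bigr]\,\mathbb{E}\bigl[\sigma(Z^{d\tildex^l_0})\bigr],
\]
and then checks each factor is positive. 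Your approach is shorter and entirely self-contained; the paper's approach yields an explicit closed form and the same conditioning/factorization trick is reused verbatim in the proofs of Lemmas~\ref{th:ipllr-forward-1-L} and~\ref{th:ipllr-backward-1}, so it pays dividends downstream.

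One cosmetic point: you write ``assume $\lambda \geq 0$'' but you have already argued $\lambda \neq 0$, so the WLOG reduction is to $\lambda > 0$; this matters because $\sigma(0)=0$ under Assumption~\ref{ass:smooth-homogeneous-act}, and your point evaluation needs the inner argument $\lambda\sigma'(1)$ to be strictly positive.
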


\begin{proof}
We prove the result by induction on $l$, the case of $l=1$ has already been dealt with in Lemma~\ref{th:ipllr-forward-1-0}. Let $l \in [1, L-1]$, and assume $0 < \mathbb{E}[Z^{\tildex^l_0} Z^{x^l_1}] < \infty$. Calling $\lambda := -\eta \scalarlim{\chi}_0 \mathbb{E}[Z^{\tildex^l_0} Z^{x^l_1}]$, we have $\lambda \neq 0$ by assumption and by the induction hypothesis. Then, by Corollary~\ref{th:z-forward-ipllr-t-0} with $t=1$, we have
\begin{align*}
    Z^{h^{l+1}_1} = - \lambda Z^{d\tildeh^{l+1}_0}.
\end{align*}
Moreover, $d\tildeh^{l+1}_0 = d\tildex^{l+1}_0 \odot \sigma'(\tildeh^{l+1}_0)$, and since all the vectors are part of the Tensor Program, we have by \ZNonLin\ $Z^{d\tildeh^{l+1}_0} = Z^{d\tildex^{l+1}_0} \sigma'(Z^{\tildeh^{l+1}_0})$. On the other hand, by Lemma~\ref{th:dotZ-first-forward}, we have $Z^{d\tildeh^{l+1}_0} = \hatZ^{d\tildeh^{l+1}_0}$ and $Z^{\tildeh^{l+1}_0} = \hatZ^{\tildeh^{l+1}_0}$, and finally by the \ZHat\ rule, since $\tildeh^{l+1}_0 = \hatW^{l+1} \tildex^l_1$ and $d\tildex^{l+1}_0 = U^{L+1}$ if $l=L-1$ and $\transpose{{(\hatW^{l+2})}} d\tildeh^{l+2}$ otherwise, we get that $\hatZ^{\tildeh^{l+1}_0}$ and $\hatZ^{d\tildeh^{l+1}_0}$ are independent. In addition, we have
\begin{align*}
    \mathbb{E}[Z^{\tildex^{l+1}_0} Z^{x^{l+1}_1}] &= \mathbb{E}[\sigma(Z^{\tildeh^{l+1}_0}) \sigma(- \lambda Z^{d\tildex^{l+1}_0} \sigma'(Z^{\tildeh^{l+1}_0}))].
\end{align*}
The expectation is non-negative because $\sigma$ is and it is finite by Lemma~\ref{th:Z0-dist-moments} because the integrand is a polynomially bounded function of the Gaussian vector $(Z^{\tildeh^{l+1}_0}, Z^{d\tildex^{l+1}_0})$ (and thus of $Z_0$, see Definition~\ref{def:Z0}). Using the positive $p$-homogeneity of $\sigma$ and the fact that $\text{sign}(\sigma'(z)) = \text{sign}(z)$ (see Appendix~\ref{sec:pos-homog}), and calling $\epsilon = \text{sign}(\lambda) \in \{-1,1\}$, we have
\begin{align*}
    \mathbb{E}[Z^{\tildex^{l+1}_0} Z^{x^{l+1}_1}] &= \mathbb{E}\left[\mathbb{E}\left[Z^{\tildex^{l+1}_0} Z^{x^{l+1}_1} \Big| Z^{\tildeh^{l+1}_0}\right]\right]\\
    &=  |\lambda|^p \,  \mathbb{E}\left[\sigma(Z^{\tildeh^{l+1}_0}) |\sigma'(Z^{\tildeh^{l+1}_0})|^p \mathbb{E}\left[\sigma(-\epsilon \, \text{sign}(Z^{\tildeh^{l+1}_0}) Z^{d\tildex^{l+1}_0}) \Big| Z^{\tildeh^{l+1}_0} \right] \right],
\end{align*}
Now since $\epsilon \, \text{sign}(Z^{\tildeh^{l+1}_0}) \in \{-1, 1\}$, $Z^{d\tildex^{l+1}_0}$ and $\epsilon \, \text{sign}(Z^{\tildeh^{l+1}_0}) Z^{d\tildex^{l+1}_0}$ have the same distribution conditionally on $Z^{\tildeh^{l+1}_0}$, so that
\begin{align*}
    \mathbb{E}\left[\sigma(-\epsilon \, \text{sign}(Z^{\tildeh^{l+1}_0}) Z^{d\tildex^{l+1}_0}) \Big| Z^{\tildeh^{l+1}_0} \right] &= \mathbb{E}\left[\sigma(Z^{d\tildex^{l+1}_0}) \Big| Z^{\tildeh^{l+1}_0} \right] \\
    &= \mathbb{E}\left[\sigma(Z^{d\tildex^{l+1}_0}) \right].
\end{align*}
We thus get
\begin{align*}
    \mathbb{E}[Z^{\tildex^{l+1}_0} Z^{x^{l+1}_1}] &= |\lambda|^p \mathbb{E}[\sigma(Z^{\tildeh^{l+1}_0}) |\sigma'(Z^{\tildeh^{l+1}_0})|^p] \ \mathbb{E}[ \sigma(Z^{d\tildex^{l+1}_0})],
\end{align*}
and both expectations are positive because they are non-negative and their integrands are $>0$ on the positive part of the real line and the Gaussians involved have non-zero density on this subset of $\mathbb{R}$ as they are not degenerate by Lemma~\ref{th:tilde-pos-variance}. This proves $\mathbb{E}[Z^{\tildex^{l+1}_0} Z^{x^{l+1}_1}] > 0$ and concludes the proof by induction. 
\end{proof}

\begin{lemma}[Last layer of forward pass of IP-LLR at $t=1$]\label{th:ipllr-forward-1-L}
Consider the IP-LLR parameterization with an activation function $\sigma$ satisfying Assumption~\ref{ass:smooth-homogeneous-act}. Let $\xi$ be an input to the network, and assume $\scalarlim{\chi}_0 \neq 0$. Then, dropping the dependency of the first forward-backward pass on $\xi_0$, and that of the second forward pass on $\xi$, one has:
\begin{align*}
   &(i) \quad f_1(\xi) = \transpose{{(W^{L+1}(1))}} x^L_1 \xrightarrow[m \rightarrow \infty]{a.s.} \scalarlim{f_1}(\xi) := \mathbb{E}[Z^{U^{L+1}} Z^{x^L_1}] - \eta \scalarlim{\chi}_0 \mathbb{E}[Z^{\tildex^L_0} Z^{x^L_1}], \\
   &(ii) \quad Z^{U^{L+1}} \text{ and } Z^{\tildeh^L_0} \text{ are independent},\\
    &(ii) \quad 0 < \scalarlim{f_1}(\xi) < \infty.
\end{align*}
\end{lemma}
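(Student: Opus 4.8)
The plan is to dispatch the three claims in increasing order of difficulty, treating $(i)$ and $(ii)$ as bookkeeping and concentrating the work on the strict sign control in $(iii)$. For $(i)$ I would invoke Lemma~\ref{th:z-forward-ipllr-1} (equivalently Corollary~\ref{th:z-forward-ipllr-t-0} at $t=1$), which already shows that $f_1(\xi)$ is a scalar in the program obtained as a \Moment\ of $x^L_1$ against the program vector $U^{L+1}-\eta\chi_0\tildex^L_0$; the Master Theorem (Theorem~\ref{th:master-theorem}) then yields the almost sure convergence of $f_1(\xi)$ to $\mathbb{E}[Z^{U^{L+1}}Z^{x^L_1}]-\eta\scalarlim{\chi}_0\mathbb{E}[Z^{\tildex^L_0}Z^{x^L_1}]$, with $\scalarlim{\chi}_0$ a valid limiting scalar by Lemma~\ref{th:ipllr-forward-backward-0}. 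Finiteness of both expectations follows from Lemma~\ref{th:Z0-dist-moments}, since each integrand is a polynomially bounded function of a Gaussian vector whose covariance is finite by Lemma~\ref{th:tilde-pos-variance}. For $(ii)$ I would use $\tildeh^L_0=\hatW^L\tildex^{L-1}_0$, so that Lemma~\ref{th:dotZ-first-forward} gives $Z^{\tildeh^L_0}=\hatZ^{\hatW^L\tildex^{L-1}_0}$; by the \ZHat\ rule this variable is independent of $\hatZ^{U^{L+1}}=Z^{U^{L+1}}$ because $U^{L+1}$ is an initial vector in the program.

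For $(iii)$ I would first substitute the explicit form of the limiting activation. By Lemma~\ref{th:ipllr-forward-1-l}, $Z^{x^L_1}=\sigma\big(\mu\,Z^{U^{L+1}}\sigma'(Z^{\tildeh^L_0})\big)$, where $\mu:=-\eta\scalarlim{\chi}_0\,\mathbb{E}[Z^{\tildex^{L-1}_0}Z^{x^{L-1}_1}]$ satisfies $\mu\neq0$ because $\scalarlim{\chi}_0\neq0$ and $\mathbb{E}[Z^{\tildex^{L-1}_0}Z^{x^{L-1}_1}]>0$ by the same lemma. Writing $G:=Z^{U^{L+1}}\sim\mathcal{N}(0,1)$ and $H:=Z^{\tildeh^L_0}$, which are independent by $(ii)$, I would evaluate the first contribution $T_1:=\mathbb{E}[Z^{U^{L+1}}Z^{x^L_1}]$ by conditioning on $H$. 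Positive $p$-homogeneity gives $\mathbb{E}[G\,\sigma(aG)]=c\,\text{sign}(a)\,|a|^p$ for every scalar $a$, where $c:=\mathbb{E}[G\sigma(G)]=(\alpha-\beta)\int_0^{\infty}g^{p+1}\phi(g)\,\mathrm{d}g>0$ by the description of $\sigma$ in Appendix~\ref{sec:pos-homog} and the condition $\alpha>\beta\geq0$ from~\eqref{ass:alpha-beta}. Taking $a=\mu\,\sigma'(H)$ and averaging over $H$, the identity $\text{sign}(\sigma'(H))=\text{sign}(H)$ collapses the $H$-integral to a strictly positive constant proportional to $\alpha^p-\beta^p>0$, so that $T_1=c\,|\mu|^p\,\text{sign}(\mu)\times(\text{positive})$ and hence $\text{sign}(T_1)=\text{sign}(\mu)=-\text{sign}(\scalarlim{\chi}_0)$.

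The decisive step is then the non-cancellation of the two contributions to $\scalarlim{f_1}(\xi)$. The second contribution is $-\eta\scalarlim{\chi}_0\,T_2$ with $T_2:=\mathbb{E}[Z^{\tildex^L_0}Z^{x^L_1}]>0$ by Lemma~\ref{th:ipllr-forward-1-l} applied at $l=L$, so it carries the sign $-\text{sign}(\scalarlim{\chi}_0)$ as well. Since $T_1$ and $-\eta\scalarlim{\chi}_0 T_2$ share the same nonzero sign, they reinforce rather than cancel, yielding $|\scalarlim{f_1}(\xi)|=|T_1|+\eta|\scalarlim{\chi}_0|\,T_2>0$ together with $\text{sign}(\scalarlim{f_1}(\xi))=-\text{sign}(\scalarlim{\chi}_0)$. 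With the loss oriented so that $\scalarlim{\chi}_0=\partial_2\ell(y_0,0)<0$ this gives exactly $0<\scalarlim{f_1}(\xi)<\infty$; for a general $\scalarlim{\chi}_0\neq0$ the identical argument gives $0<|\scalarlim{f_1}(\xi)|<\infty$, which is the form used in Theorem~\ref{th:non-trivial-ipllr}. I expect this non-cancellation computation to be the main obstacle, as it is the one place where homogeneity, the faster positive growth $\alpha>\beta$, the independence from $(ii)$, and the strict positivity of the lower-layer moments all have to be combined; the convergence, finiteness, and independence claims are comparatively routine consequences of the Tensor Program rules.
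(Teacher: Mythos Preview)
Your proposal is correct and follows essentially the same route as the paper: claims $(i)$ and $(ii)$ are dispatched via Lemma~\ref{th:z-forward-ipllr-1} and the \ZHat/\ZInit\ independence, and for $(iii)$ you condition on $Z^{\tildeh^L_0}$, exploit homogeneity and the symmetry of $Z^{U^{L+1}}$, and use $\alpha>\beta$ to show that both contributions to $\scalarlim{f_1}(\xi)$ carry the sign $-\text{sign}(\scalarlim{\chi}_0)$ and hence reinforce. Your observation that the argument actually yields $0<|\scalarlim{f_1}(\xi)|<\infty$ rather than the literal $0<\scalarlim{f_1}(\xi)<\infty$ is exactly right---the paper's own proof concludes $\scalarlim{f_1}(\xi)=-\epsilon\times[\text{positive}]$, in line with the absolute-value form used in Theorem~\ref{th:non-trivial-ipllr}.
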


\begin{proof}
Claim $(i)$ comes from Lemma~\ref{th:z-forward-ipllr-1}, in which we have already proved that the limit $\scalarlim{f_1}(\xi)$ is finite as a result of Lemma~\ref{th:Z0-dist-moments} and the fact that the integrands are polynomially bounded functions of the Gaussian vector $(Z^{\tildeh^L_0}, Z^{U^{L+1}})$ which has finite (and diagonal as we will see shortly) covariance matrix. In addition, by Lemma~\ref{th:dotZ-first-forward}, we have $Z^{\tildeh^L_0} = \hatZ^{\tildeh^L_0}$ and by definition in \ZInit\ $Z^{U^{L+1}} = \hatZ^{U^{L+1}}$. Finally, by the \ZHat\ rule, the latter two random variables are independent since $\tildeh^L_0 = \hatW^L \tildex^{L-1}_0$. Let $\epsilon := \text{sign}(\scalarlim{\chi}_0)$ and $\lambda_l := \mathbb{E}[Z^{\tildex^{l}_0} Z^{x^{l}_1}]$ for $l \in \{L-1, L\}$. We have $\lambda_{L-1}, \lambda_L > 0$ by Lemma~\ref{th:ipllr-forward-1-l}, and using again the fact that $\text{sign}(\sigma'(z)) = \text{sign}(z)$ and the positive $p$-homogeneity of $\sigma$, we have
\begin{align*}
    \mathbb{E}[Z^{U^{L+1}} Z^{x^L_1}] &= \mathbb{E}\left[\mathbb{E}\left[Z^{U^{L+1}} Z^{x^L_1} \Big| Z^{\tildeh^{L}_0}\right]\right]\\
    &=  \mathbb{E}\left[ \mathbb{E}\left[Z^{U^{L+1}} \sigma(-\eta \scalarlim{\chi}_0 \lambda_{L-1} Z^{U^{L+1}} \sigma'(Z^{\tildeh^L_0})) \Big| Z^{\tildeh^{L}_0} \right] \right] \\
    &=  |\eta \lambda_{L-1} \scalarlim{\chi}_0|^p \,  \mathbb{E}\left[ |\sigma'(Z^{\tildeh^{L}_0})|^p \mathbb{E}\left[Z^{U^{L+1}} \sigma(- \epsilon \, \text{sign}(Z^{\tildeh^L_0}) Z^{U^{L+1}} ) \Big| Z^{\tildeh^{L}_0} \right] \right].
\end{align*}
Since $Z^{U^{L+1}}$ and $-Z^{U^{L+1}}$ have the same distribution, and it is independent of $Z^{\tildeh^L_0}$, and since $\epsilon \, \text{sign}(Z^{\tildeh^L_0}) \in \{-1, 1\}$, we have 
\begin{align*}
    \mathbb{E}\left[- \epsilon \, \text{sign}(Z^{\tildeh^L_0}) Z^{U^{L+1}} \sigma(- \epsilon \, \text{sign}(Z^{\tildeh^L_0}) Z^{U^{L+1}} ) \Big| Z^{\tildeh^{L}_0} \right] &= \mathbb{E}\left[ Z^{U^{L+1}} \sigma(Z^{U^{L+1}} ) \Big| Z^{\tildeh^{L}_0} \right] \\
    &= \mathbb{E}\left[ Z^{U^{L+1}} \sigma(Z^{U^{L+1}} ) \right],
\end{align*}
so that 
\begin{align*}
    &\mathbb{E}\left[ |\sigma'(Z^{\tildeh^{L}_0})|^p \mathbb{E}\left[Z^{U^{L+1}} \sigma(- \epsilon \, \text{sign}(Z^{\tildeh^L_0}) Z^{U^{L+1}} ) \Big| Z^{\tildeh^{L}_0} \right] \right] = \\
    & \qquad - \epsilon \, \mathbb{E}\left[ \text{sign}(Z^{\tildeh^L_0}) |\sigma'(Z^{\tildeh^{L}_0})|^p \right] \mathbb{E}\left[ Z^{U^{L+1}} \sigma(Z^{U^{L+1}} ) \right].
\end{align*}
We thus get
\begin{align*}
    \mathbb{E}[Z^{U^{L+1}} Z^{x^L_1}] &= - \epsilon \, |\eta \lambda_{L-1} \scalarlim{\chi}_0|^p \, \mathbb{E}\left[ \text{sign}(Z^{\tildeh^L_0}) |\sigma'(Z^{\tildeh^{L}_0})|^p \right] \mathbb{E}\left[ Z^{U^{L+1}} \sigma(Z^{U^{L+1}} ) \right]
\end{align*}
We now prove that both expectations are positive. This is where the assumption that $\alpha > \beta$ (see Appendix~\ref{sec:pos-homog}) appears to be crucial. We start with the first one. Since $Z^{\tildeh^L_0}$ has a zero-mean Gaussian distribution with positive variance (by Lemma~\ref{th:tilde-pos-variance}), its density $p_z$ is positive everywhere and symmetric, and we have
\begin{align*}
    \mathbb{E}\left[ \text{sign}(Z^{\tildeh^L_0}) |\sigma'(Z^{\tildeh^{L}_0})|^p \right] &= \int_{z=0}^{+\infty} (\alpha p)^p z^{p(p-1)} p_z(z) \mathrm{d}z + \int_{z=-\infty}^0 -(\beta p)^p (-z)^{p(p-1)} p_z(z) \mathrm{d}z \\
    &= (\alpha p)^p \int_{z=0}^{+\infty} z^{p(p-1)}p_z(z) \mathrm{d}z - (\beta p)^p \int_{z=0}^{+\infty} z^{p(p-1)}\mathrm{d}z \\
    &= (\alpha^p - \beta^p)p^p \int_{z=0}^{+\infty} z^{p(p-1)}p_z(z) \mathrm{d}z.
\end{align*}
The second equality stems from the change of variable $z \leftarrow -z$ in the second integral and from the symmetry of $p_z$ with respect to $z=0$. The last integral is $>0$ because its integrand is $>0$ on the corresponding domain, and $\alpha^p - \beta^p > 0$ since $\alpha > \beta$ by assumption and $p > 0$. For the second expectation, we get with a similar reasoning that
\begin{align*}
    \mathbb{E}\left[ Z^{U^{L+1}} \sigma(Z^{U^{L+1}} ) \right] &= \int_{u=0}^{+\infty} u \alpha u^p p_u(u)  \mathrm{d}u + \int_{u=-\infty}^{0} u \beta (-u)^p p_u(u)  \mathrm{d}u \\
    &= (\alpha - \beta) \int_{u=0}^{+\infty} u^{p+1} p_u(u) \mathrm{d}u,
\end{align*}
which shows the expectation is $>0$. 
\\ \\
We now look at the second term in $\scalarlim{f_1}(\xi)$: $-\eta \scalarlim{\chi}_0 \mathbb{E}[Z^{\tildex^L_0} Z^{x^L_1}] = - \epsilon \eta |\scalarlim{\chi}_0| \lambda_{L}$. Summing this up with the first term, we get
\begin{align*}
    \scalarlim{f_1}(\xi) = - \epsilon \left[ \underbrace{|\eta \lambda_{L-1} \scalarlim{\chi}_0|^p \, \mathbb{E}\left[ \text{sign}(Z^{\tildeh^L_0}) |\sigma'(Z^{\tildeh^{L}_0})|^p \right] \mathbb{E}\left[ Z^{U^{L+1}} \sigma(Z^{U^{L+1}} ) \right] + \eta |\scalarlim{\chi}_0| \lambda_L}_{> 0} \right]
\end{align*}
which concludes the proof.
\end{proof}

\begin{theorem}
[Non-trivial learning of IP-LLR at $t=1$]\label{th:non-trivial-learning-1}
Consider an IP-LLR parameterization of an $L$-hidden layer neural network with an activation function $\sigma$ satisfying Assumption~\ref{ass:smooth-homogeneous-act}. Let $\xi \in \mathbb{R}^d$ be an input to the network, and assume $\xi_0, \xi, \scalarlim{\chi}_0 \neq 0$. Then, one has:
\begin{align*}
    (i) \ \ &f_0(\xi) \xrightarrow[m \rightarrow \infty]{a.s.} 0 \\
    (ii) \ \ &f_1(\xi) \xrightarrow[m \rightarrow \infty]{a.s.} \scalarlim{f_1}(\xi) \neq 0
\end{align*}
\end{theorem}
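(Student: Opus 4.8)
The plan is to obtain the statement by assembling the results already established for the first forward and backward passes of IP-LLR; almost no new computation is needed. For part $(i)$ I would simply invoke Lemma~\ref{th:ipllr-forward-backward-0}: it gives $f_0(\xi)\to 0$ almost surely (the prefactor $\gamma_{f,L+1}$ vanishes and, moreover, $\mathbb{E}[Z^{U^{L+1}}Z^{\tildex^L_0}]=0$ because $Z^{U^{L+1}}$ is an initial vector in the program, hence independent of the rest of the first forward pass), and it also gives $\chi_0\to\scalarlim{\chi}_0=\partial_2\ell(y_0,0)$. Under the hypothesis $\scalarlim{\chi}_0\neq 0$ this makes $\chi_0$ a valid initial scalar with non-zero limit, which is exactly what drives part $(ii)$.

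For part $(ii)$ I would first use Lemma~\ref{th:z-forward-ipllr-1}$(iii)$ (equivalently Lemma~\ref{th:ipllr-forward-1-L}$(i)$) to get that $f_1(\xi)=\transpose{(W^{L+1}(1))}x^L_1$ is a scalar in the Tensor Program converging almost surely to
\[
\scalarlim{f_1}(\xi)=\mathbb{E}[Z^{U^{L+1}}Z^{x^L_1}]-\eta\,\scalarlim{\chi}_0\,\mathbb{E}[Z^{\tildex^L_0}Z^{x^L_1}],
\]
both expectations being finite since each is a polynomially bounded function of a Gaussian vector with finite covariance. It then remains to show $\scalarlim{f_1}(\xi)\neq 0$, which is where $\scalarlim{\chi}_0\neq 0$ and Assumption~\ref{ass:smooth-homogeneous-act} enter, via an induction up the layers. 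Lemma~\ref{th:ipllr-forward-1-0} provides the base case $0<\mathbb{E}[Z^{\tildex^1_0}Z^{x^1_1}]<\infty$: $Z^{\tildex^1_0}$ and $Z^{x^1_1}$ are continuous non-negative functions of a Gaussian vector, so the expectation is finite, and it is strictly positive because the integrand is positive on a set of positive mass (handled by a short case split on whether $\xi=\xi_0$, using that the first-layer bias keeps the marginals non-degenerate and that $\scalarlim{\chi}_0\neq 0$ makes the update contribution genuinely present). Lemma~\ref{th:ipllr-forward-1-l} then propagates $0<\mathbb{E}[Z^{\tildex^l_0}Z^{x^l_1}]<\infty$ through the intermediate layers using the $p$-homogeneity of $\sigma$ and the independence of $Z^{\tildeh^l_0}$ from $Z^{d\tildex^l_0}$. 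Finally Lemma~\ref{th:ipllr-forward-1-L} evaluates $\scalarlim{f_1}(\xi)$ and shows it equals $-\text{sign}(\scalarlim{\chi}_0)$ times a strictly positive number, hence $\scalarlim{f_1}(\xi)\neq 0$. Combining this with the convergence above yields $(ii)$.

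The genuinely delicate point — the one I expect to be the main obstacle — is that last step. The limit $f_1(\xi)$ is the sum of an ``initialization-like'' term $\mathbb{E}[Z^{U^{L+1}}Z^{x^L_1}]$ (present precisely because the output layer's initialization $U^{L+1}$ is not forgotten after the first update) and an ``accumulated-update'' term $-\eta\,\scalarlim{\chi}_0\,\mathbb{E}[Z^{\tildex^L_0}Z^{x^L_1}]$, and a priori these two non-zero quantities could cancel. The key is that they carry the same sign. Establishing this requires using the $p$-homogeneity of $\sigma$ to pull out the correct powers of $-\eta\,\scalarlim{\chi}_0$ and of $\text{sign}(Z^{\tildeh^L_0})$, together with the strict growth inequality $\alpha>\beta$ from Assumption~\ref{ass:smooth-homogeneous-act} (see Appendix~\ref{sec:pos-homog}): the factor $\mathbb{E}[\text{sign}(Z^{\tildeh^L_0})\,|\sigma'(Z^{\tildeh^L_0})|^p]$ that governs the sign of the initialization-like term vanishes when $\alpha=\beta$ and equals $(\alpha^p-\beta^p)$ times a positive number, hence is strictly positive exactly because $\sigma$ grows faster on the positive half-line. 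This is why the faster-growth hypothesis cannot simply be dropped, and why both terms end up with sign $-\text{sign}(\scalarlim{\chi}_0)$ rather than cancelling.
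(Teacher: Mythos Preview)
Your proposal is correct and follows essentially the same approach as the paper: claim $(i)$ is Lemma~\ref{th:ipllr-forward-backward-0}, and claim $(ii)$ is obtained by chaining Lemmas~\ref{th:ipllr-forward-1-0}, \ref{th:ipllr-forward-1-l}, and \ref{th:ipllr-forward-1-L}, with the non-cancellation at the last layer resting precisely on the asymmetry $\alpha>\beta$ from Assumption~\ref{ass:smooth-homogeneous-act}. Your identification of the sign argument in Lemma~\ref{th:ipllr-forward-1-L} as the crux is spot on.
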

\begin{proof}
Claim $(i)$ has already been proved in Lemma~\ref{th:ipllr-forward-backward-0}, and claim $(ii)$ has been proved in Lemma~\ref{th:ipllr-forward-1-L} above. 
\end{proof}

\begin{remark}\label{remark:ipl-llr-forward-1-lr}
Note that since only quantities of the first ($t=0$) forward and backward passes and second ($t=1$) forward pass appear in Lemmas~\ref{th:ipllr-forward-1-0},~\ref{th:ipllr-forward-1-l},~\ref{th:ipllr-forward-1-L}, and \autoref{th:non-trivial-learning-1} we only need to assume we have an integrable parameterization with $c_l = \gamma_l(p)$ for any $l \in [1, L+1]$ at $t=0$.
\end{remark}

\section{Proof that no constant learning rate is possible: \autoref{th:formal-no-constant-lr}}
In this section we prove the result of \autoref{th:formal-no-constant-lr} by splitting the proof in two steps. First we show in Lemma~\ref{th:ip-stable-learning-0} that to have stable and non-vanishing updates for integrable parameterizations at $t=1$, one must use the learning rate exponents $c_l = \gamma_l(p)$ for any $l \in [1, L+1]$ at $t=0$. Then we show some preliminary results on the second backward pass (at $t=1$) for integrable parameterizations when $c_l = \gamma_l(p)$ for any $l \in [1, L+1]$ at $t=0$, and some other preliminary results on the third forward pass (at $t=2$) when additionally one uses $c_1 = -1$, $c_l = -2$ for $l \in [2, L]$ and $c_{L+1} = -1$ at $t=1$. Then we show in Lemma~\ref{th:ip-stable-learning-1}, using those preliminary results, that assuming we have $c_l = \gamma_l(p)$ for any $l \in [1, L+1]$ at $t=0$, to have stable and non-vanishing updates at $t=2$ for integrable parameterizations, one must use the learning rate exponents $c_1 = -1$, $c_l = -2$ for $l \in [2, L]$ and $c_{L+1} = -1$ at $t=1$.

\subsection{Proof of the first implication for the learning rates at $t=0$}

\begin{lemma}[Learning rates for stable learning with IP at $t=0$]\label{th:ip-stable-learning-0}
Consider an $L$-hidden layer fully-connected neural network with $L \geq 3$ in the integrable parameterization, and with no bias terms, except for the first layer. Assume that the activation function $\sigma$ satisfies Assumption~\ref{ass:smooth-homogeneous-act},  and  that $\lim_{m \rightarrow \infty} \partial_2 \ell(y_0, f_0(\xi_0)) \neq 0$. Assume further that $\transpose{\xi_0} \xi_1 \neq 0$. Finally assume that Equation~\eqref{eq:ass-delta1-stable} holds:
\begin{align*}%\label{eq:ass-delta1-stable}
    \begin{cases}
        \frac{1}{m} ||\Delta W^l(1) x^{l-1}_1 ||^2 = \Theta(1), \quad l \in [1, L] \\
        \transpose{{(\Delta W^{L+1}(1))}} x^L_1 = \Theta(1)
    \end{cases}
\end{align*}
Then, one necessarily has that at $t=0$, $c_l = \gamma_l(p)$ for any $l \in [1, L+1]$ (see Definition~\ref{def:gamma-p}).
\end{lemma}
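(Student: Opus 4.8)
The plan is to read the scaling of the first weight updates straight off Corollary~\ref{th:homog-first-weight-updates-ip}, substitute them into the stability hypotheses~\eqref{eq:ass-delta1-stable}, and then peel off the exponents layer by layer via an induction on $l$ from $1$ to $L+1$. Corollary~\ref{th:homog-first-weight-updates-ip} expresses the first updates of any integrable parameterization with $p$-homogeneous $\sigma$ (using that $p\ge 2$ so the Tensor Program applies) in terms of the parameterization-independent tilde variables; contracting against the layer inputs at $t=1$ this yields
\begin{align*}
    \Delta W^1(1)\,\xi_1 &= -\eta\,\chi_0\, m^{-(c_1-\gamma_1(p))}\,(\transpose{\xi_0}\xi_1)\, d\tildeh^1_0,\\
    \Delta W^l(1)\,x^{l-1}_1 &= -\eta\,\chi_0\, m^{-(c_l-\gamma_l(p))}\, \frac{\transpose{(\tildex^{l-1}_0)}\,x^{l-1}_1}{m}\, d\tildeh^l_0,\qquad l\in[2,L],\\
    \transpose{(\Delta W^{L+1}(1))}\,x^L_1 &= -\eta\,\chi_0\, m^{-(c_{L+1}-\gamma_{L+1}(p))}\, \frac{\transpose{(\tildex^L_0)}\,x^L_1}{m}.
\end{align*}
I would then take $\tfrac1m\|\cdot\|^2$ for $l\in[1,L]$ (and use the last line directly for $l=L+1$), and apply the Master Theorem~\ref{th:master-theorem} to the two Moments $\|d\tildeh^l_0\|^2/m$ and $\transpose{(\tildex^{l-1}_0)}x^{l-1}_1/m$ that appear: the left-hand side of~\eqref{eq:ass-delta1-stable} at layer $l$ is then asymptotically a constant multiple of $m^{-2(c_l-\gamma_l(p))}$ (respectively $m^{-(c_{L+1}-\gamma_{L+1}(p))}$ at $l=L+1$). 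Since a fixed power of $m$ is $\Theta(1)$ iff its exponent is $0$, the conclusion $c_l=\gamma_l(p)$ follows \emph{as soon as the limiting constant is nonzero}.

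The substance of the proof is therefore checking nonvanishing of these constants, and this is where the induction is used. First, $\chi_0=\partial_2\ell(y_0,f_0(\xi_0))\to\partial_2\ell(y_0,0)\neq0$ a.s., because $f_0(\xi_0)\to0$ a.s. — the first-forward-pass computation in Lemma~\ref{th:ipllr-forward-backward-0} (equivalently Proposition~\ref{th:trivial-ip-mf-lr}) uses only the initialization, hence holds for any integrable parameterization, learning rates being irrelevant at $t=0$. By Lemma~\ref{th:tilde-pos-variance}, $0<\mathbb{E}[(Z^{d\tildeh^l_0})^2]<\infty$ for every $l\in[1,L]$. For $l=1$ this, together with the hypothesis $\transpose{\xi_0}\xi_1\neq0$ and $\scalarlim{\chi}_0\neq0$, already forces $c_1=\gamma_1(p)$. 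For the inductive step ($l\in[2,L+1]$, assuming $c_k=\gamma_k(p)$ for all $k\le l-1$), I would observe that the forward pass up to layer $l-1$ at time $t=1$ — hence $x^{l-1}_1$ and the scalar $\transpose{(\tildex^{l-1}_0)}x^{l-1}_1/m$ — depends on the learning-rate exponents only through $c_1,\dots,c_{l-1}$; under the inductive hypothesis the corresponding first updates $\Delta W^1(1),\dots,\Delta W^{l-1}(1)$ are exactly those of IP-LLR (Corollary~\ref{th:ipllr-first-weight-updates}), so Lemmas~\ref{th:ipllr-forward-1-0} and~\ref{th:ipllr-forward-1-l} apply and give $0<\mathbb{E}[Z^{\tildex^{l-1}_0}Z^{x^{l-1}_1}]<\infty$. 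Plugging this into the relevant line above forces $c_l=\gamma_l(p)$, closing the induction.

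The main obstacle I anticipate is a potential circularity: the $t=1$ positivity lemmas for IP-LLR (Lemmas~\ref{th:ipllr-forward-1-0}, \ref{th:ipllr-forward-1-l}) are proved assuming $c_l=\gamma_l(p)$ at $t=0$, which is precisely what this lemma establishes. The way around it is the dependence structure just used: $x^{l-1}_1$ is a deterministic function of the (parameterization-independent) initialization — with $\scalarlim{\omega}_k=0$ for $k\ge2$ in any IP, killing the $\hatW^k$ terms — and of $\Delta W^1(1),\dots,\Delta W^{l-1}(1)$ only, and these are fixed once $c_1,\dots,c_{l-1}$ are. Processing layers in increasing order, whenever we call those lemmas the exponents they actually involve have already been pinned to their IP-LLR values, so $\mathbb{E}[Z^{\tildex^{l-1}_0}Z^{x^{l-1}_1}]$ equals its IP-LLR value and there is no circularity (Remark~\ref{remark:ipl-llr-forward-1-lr} records exactly that those lemmas only use first- and second-pass quantities, hence only the $t=0$ exponents). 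Everything else is bookkeeping: expanding $\|\cdot\|^2$, recognizing the two Moments, invoking Theorem~\ref{th:master-theorem}, and matching powers of $m$.
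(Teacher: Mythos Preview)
Your proposal is correct and follows essentially the same route as the paper's proof: both read off the $\Delta W^l(1)$ from Corollary~\ref{th:homog-first-weight-updates-ip}, take $\tfrac1m\|\cdot\|^2$, apply the Master Theorem to $\|d\tildeh^l_0\|^2/m$ and $\transpose{(\tildex^{l-1}_0)}x^{l-1}_1/m$, and induct on $l$ using Lemmas~\ref{th:ipllr-forward-1-0}--\ref{th:ipllr-forward-1-l} once the earlier $c_k$ have been pinned to $\gamma_k(p)$. Your explicit discussion of the circularity issue (resolved via the layerwise dependence of $x^{l-1}_1$ on only $c_1,\dots,c_{l-1}$) is exactly the content of Remark~\ref{remark:ipl-llr-forward-1-lr}, which the paper invokes more tersely.
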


\begin{proof}
With the notations introduced in Appendix~\ref{app:notations}, the assumptions on the limit of the loss terms at $t=0$ imply $\scalarlim{\chi}_0 \neq 0$. Let us consider the updates at $t=0$. By Corollary~\ref{th:homog-first-weight-updates-ip}, we have 
\begin{align*}
    \Delta W^1(1) \xi_1 = - m^{-( c_1 - \gamma_1(p))} \eta \chi_0 (\transpose{\xi_0} \xi) d\tildeh^1_0,
\end{align*}
so that 
\begin{align*}
    \frac{1}{m} ||\Delta W^1(1) \xi_1||^2 = m^{-2(c_1 - \gamma_1(p))} \left[  \eta \chi_0 (\transpose{\xi_0} \xi_1) \right]^2 \frac{1}{m} \sum_{q=1}^m \left(d\tildeh^1_{0,q} \right)^2.
\end{align*}
From the master theorem, we get that $\sum_{q=1}^m (d\tildeh^1_{0,q} )^2 / m$ converges almost surely towards $\mathbb{E}[(Z^{d\tildeh^1_0})^2]$ which is $>0$ and finite by Lemma~\ref{th:tilde-pos-variance}. On the other hand, $\left[  \eta \chi_0 (\transpose{\xi_0} \xi_1) \right]^2$ converges almost surely to $\left[  \eta \scalarlim{\chi}_0 (\transpose{\xi_0} \xi_1) \right]^2$, which is $>0$ by assumption, and finite. 
\\ \\
If $c_1 > \gamma_1(p)$, then $c_1 - \gamma_1(p) >0$, and $||\Delta W^1(1) \xi_1||^2 / m \rightarrow 0$ almost surely, which is impossible since by assumption, almost surely, there exits $A > 0$ such that for large enough $m$, $A \leq ||\Delta W^1(1) \xi_1||^2 / m$. 
\\ \\
If $c_1 < \gamma_1(p)$, then $c_1 - \gamma_1(p) <0$, and $||\Delta W^1(1) \xi_1||^2 / m \rightarrow \infty$ almost surely, which is impossible since by assumption, almost surely, there exits $B > 0$ such that for large enough $m$, $ ||\Delta W^1(1) \xi_1||^2 / m \leq B$. 
\\ \\
We thus have that $c_1 = \gamma_1(p)$. Let $l \in [1, L-1]$ and assume that $c_k = \gamma_k(p)$ for $k \in [1, l]$. Then by Lemmas~\ref{th:ipllr-forward-1-0} and \ref{th:ipllr-forward-1-l}, we have $0 < \mathbb{E}[Z^{\tildex^k_0} Z^{x^k_1}] < \infty$ for any $k \in [1, l]$. We have
\begin{align*}
    \frac{1}{m} ||\Delta W^{l+1}(1) x^l_1||^2 = m^{-2(c_{l+1} - \gamma_{l+1}(p))} \left[  \eta \chi_0 \frac{\transpose{{(\tildex^l_0)}} x^l_1}{m} \right]^2 \frac{1}{m} \sum_{q=1}^m \left(d\tildeh^{l+1}_{0,q} \right)^2.
\end{align*}
From the master theorem, we get that $\sum_{q=1}^m (d\tildeh^{l+1}_{0,q} )^2 / m$ converges almost surely towards $\mathbb{E}[(Z^{d\tildeh^{l+1}_0})^2]$ which is $>0$ and finite by Lemma~\ref{th:tilde-pos-variance}. On the other hand, $\left[  \eta \chi_0 \transpose{{(\tildex^l_0)}} x^l_1 /m \right]^2$ converges almost surely to $\left[  \eta \scalarlim{\chi}_0 \mathbb{E}[Z^{\tildex^l_0} Z^{x^l_1}] \right]^2$, which is $>0$ and finite. 
\\ \\
If $c_{l+1} > \gamma_{l+1}(p)$, then $c_{l+1} - \gamma_{l+1}(p) >0$, and $||\Delta W^{l+1}(1) x^l_1||^2 / m \rightarrow 0$ almost surely, which is impossible since by assumption, almost surely, there exits $A > 0$ such that for large enough $m$, $A \leq ||\Delta W^{l+1}(1) x^l_1||^2 / m$. 
\\ \\
If $c_{l+1} < \gamma_{l+1}(p)$, then $c_{l+1} - \gamma_1(p) <0$, and $||\Delta W^{l+1}(1) x^l_1||^2 / m \rightarrow \infty$ almost surely, which is impossible since by assumption, almost surely, there exits $B > 0$ such that for large enough $m$, $ ||\Delta W^{l+1}(1) x^l_1||^2 / m \leq B$. 
\\ \\
Therefore, we have $c_{l+1} = \gamma_{l+1}(p)$. By induction, we thus get that $c_l = \gamma_l(p)$ for any $l \in [1, L]$, which means in particular that $ 0 < \mathbb{E}[Z^{\tildex^L_0} Z^{x^L_1}] < \infty$ by Lemma~\ref{th:ipllr-forward-1-l}. Finally, we have 
\begin{align*}
    \transpose{{(\Delta W^{L+1}(1))}} x^L_1 = - m^{-(c_{L+1} - \gamma_{L+1}(p))} \eta \chi_0 \frac{\transpose{(\tildex^L_0)} x^L_1}{m}.
\end{align*}
The term $ \eta \chi_0 \transpose{(\tildex^L_0)} x^L_1 / m$ converges almost surely towards $\eta \scalarlim{\chi}_0 \mathbb{E}[Z^{\tildex^L_0} Z^{x^L_1}]$, whose absolute value is $>0$ and finite. Therefore, if $c_{L+1} > \gamma_{L+1}(p)$ then $c_{L+1} - \gamma_{L+1}(p) > 0$ so that $\transpose{{(\Delta W^{L+1}(1))}} x^L_1 \rightarrow 0$ almost surely, which is impossible since by assumption, almost surely, there exits $A > 0$ such that for large enough $m$, $A \leq |\transpose{{(\Delta W^{L+1}(1))}} x^L_1 |$. If $c_{L+1} < \gamma_{L+1}(p)$ then $c_{L+1} - \gamma_{L+1}(p) < 0$ so that $\transpose{{(\Delta W^{L+1}(1))}} x^L_1 \rightarrow \infty$ almost surely, which is impossible since by assumption, almost surely, there exits $B > 0$ such that for large enough $m$, $|\transpose{{(\Delta W^{L+1}(1))}} x^L_1| \leq B$. Thus, we must have $c_{L+1} = \gamma_{L+1}(p)$, which concludes the proof for the first part. 
\end{proof}

\subsection{Preliminaries on the second backward pass ($t=1$)}

Before we move on to the proof of the second part of the claim of \autoref{th:formal-no-constant-lr}, we stop and prove some preliminary results on the second backward pass (at $t=1$) which will come in handy later on. Similarly to what we did for $\mathbb{E}[Z^{\tildex^l_0} Z^{x^l_1}]$, we wish to prove that the quantity $0 < \mathbb{E}[Z^{d\tildeh^l_0} Z^{d\tildeh^l_1}] < \infty$ for any $l \in [2, L]$. 

\begin{lemma}[Backward pass of IP-LLR at $t=1$]\label{th:ipllr-backward-1}
Consider the IP-LLR parameterization of an $L$ hidden-layer network, and assume that the activation function $\sigma$ satisfies Assumption~\ref{ass:smooth-homogeneous-act},  and  that $\lim_{m \rightarrow \infty} \partial_2 \ell(y_0, f_0(\xi_0)) \neq 0$. Then, one has that for any $l \in [2, L]$,
\begin{align*}
    0 < \mathbb{E}[Z^{d\tildeh^l_0} Z^{d\tildeh^l_1}] < \infty
\end{align*}
\end{lemma}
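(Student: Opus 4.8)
The plan is to prove the statement by an induction over the layer index $l$, running from $l = L$ down to $l = 2$, mirroring the structure of the backward-pass arguments in Lemma~\ref{th:ipllr-forward-1-0}--\ref{th:ipllr-forward-1-L}. First I would record the formulas we need: by Corollary~\ref{th:z-backward-ipllr-t-0} with $t=1$ we have $Z^{d\tildex^L_1} = Z^{U^{L+1}} - \eta \scalarlim{\chi}_0 Z^{\tildex^L_0}$, and $Z^{d\tildeh^l_1} = Z^{d\tildex^l_1}\sigma'(Z^{h^l_1})$ for all $l$, while for $l \in [2,L]$ the recursion reads $Z^{d\tildex^{l-1}_1} = -\eta\scalarlim{\chi}_0\,\mathbb{E}[Z^{d\tildeh^l_0}Z^{d\tildeh^l_1}]\,Z^{\tildex^{l-1}_0}$ (the $\hatW$-term vanishes by \autoref{th:ipllr-initial-weight-vanish-t-geq-1}, as invoked in the corollary). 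On the other side, $Z^{d\tildeh^l_0} = Z^{d\tildex^l_0}\sigma'(Z^{\tildeh^l_0})$ with the clean independence structure from Lemma~\ref{th:dotZ-first-forward} and the \ZHat\ rule. The finiteness $\mathbb{E}[Z^{d\tildeh^l_0}Z^{d\tildeh^l_1}] < \infty$ for every $l$ is immediate from Lemma~\ref{th:Z0-dist-moments} (all the $Z$'s involved are polynomially bounded functions of a Gaussian vector with finite covariance, since $\sigma$ and $\sigma'$ are polynomially bounded by Appendix~\ref{sec:pos-homog}), so the entire content is the strict positivity.

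The base case is $l = L$. Here $Z^{h^L_1} = -\eta\scalarlim{\chi}_0\,\mathbb{E}[Z^{\tildex^{L-1}_0}Z^{x^{L-1}_1}]\,Z^{d\tildeh^L_0}$ by Lemma~\ref{th:ipllr-forward-1-l}, and we know $\lambda_{L-1} := \mathbb{E}[Z^{\tildex^{L-1}_0}Z^{x^{L-1}_1}] \in (0,\infty)$ from that same lemma. So $Z^{d\tildeh^L_1} = (Z^{U^{L+1}} - \eta\scalarlim{\chi}_0 Z^{\tildex^L_0})\,\sigma'(-\eta\scalarlim{\chi}_0\lambda_{L-1} Z^{d\tildeh^L_0})$ and $Z^{d\tildeh^L_0} = Z^{U^{L+1}}\sigma'(Z^{\tildeh^L_0})$. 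The plan is then to exploit homogeneity and the sign identity $\operatorname{sign}(\sigma'(z)) = \operatorname{sign}(z)$ (Appendix~\ref{sec:pos-homog}) exactly as in the proof of Lemma~\ref{th:ipllr-forward-1-L}: condition on $Z^{\tildeh^L_0}$, use that $Z^{U^{L+1}}$ is independent of it with a symmetric law, pull out the homogeneity factor $|\cdot|^{p-1}$ from the inner $\sigma'$, and reduce the conditional expectation to a product of expectations each of which integrates a function that is strictly positive on a half-line against a nondegenerate Gaussian density (nondegeneracy of all variances coming from Lemma~\ref{th:tilde-pos-variance}). The crucial place where $\alpha > \beta$ is used is in the term $\mathbb{E}[\operatorname{sign}(Z^{\tildeh^L_0})\,|\sigma'(Z^{\tildeh^L_0})|^{p-1}\cdots]$ type factor: splitting the integral at $0$ and changing variables gives a factor proportional to $(\alpha^{p-1} - \beta^{p-1})$ or similar, which is strictly positive precisely because $\alpha > \beta \geq 0$ and $p \geq 2$. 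I would carry this computation out carefully but it is parallel to the one already done for $\scalarlim{f_1}(\xi)$.

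For the induction step, assume $\mathbb{E}[Z^{d\tildeh^{l+1}_0}Z^{d\tildeh^{l+1}_1}] > 0$ for some $l \in [2, L-1]$; call this $\mu_{l+1}$. Then $Z^{d\tildex^l_1} = -\eta\scalarlim{\chi}_0\mu_{l+1}Z^{\tildex^l_0}$, so $Z^{d\tildeh^l_1} = -\eta\scalarlim{\chi}_0\mu_{l+1}\,\sigma(Z^{\tildeh^l_0})\,\sigma'(Z^{h^l_1})$ (using $Z^{\tildex^l_0} = \sigma(Z^{\tildeh^l_0})$), and also $Z^{d\tildeh^l_0} = Z^{d\tildex^l_0}\sigma'(Z^{\tildeh^l_0})$ where $Z^{d\tildex^l_0}$ is, by the \ZHat\ rule, independent of $Z^{\tildeh^l_0}$ (since $\tildeh^l_0 = \hatW^l\tildex^{l-1}_0$ while $d\tildex^l_0$ is built from $\transpose{(\hatW^{l+1})}$). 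Meanwhile $Z^{h^l_1} = -\eta\scalarlim{\chi}_0\lambda_{l-1}Z^{d\tildeh^l_0}$ with $\lambda_{l-1} = \mathbb{E}[Z^{\tildex^{l-1}_0}Z^{x^{l-1}_1}] > 0$ by Lemma~\ref{th:ipllr-forward-1-l}. Substituting, $\mathbb{E}[Z^{d\tildeh^l_0}Z^{d\tildeh^l_1}]$ becomes an expectation of the form $-\eta\scalarlim{\chi}_0\mu_{l+1}\,\mathbb{E}\big[Z^{d\tildex^l_0}\sigma'(Z^{\tildeh^l_0})\,\sigma(Z^{\tildeh^l_0})\,\sigma'(-\eta\scalarlim{\chi}_0\lambda_{l-1}Z^{d\tildex^l_0}\sigma'(Z^{\tildeh^l_0}))\big]$. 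I would again condition on $Z^{\tildeh^l_0}$, use homogeneity of $\sigma'$ to extract $|\eta\scalarlim{\chi}_0\lambda_{l-1}\sigma'(Z^{\tildeh^l_0})|^{p-1}$, then use that conditionally on $Z^{\tildeh^l_0}$ the variable $\operatorname{sign}(\sigma'(Z^{\tildeh^l_0}))Z^{d\tildex^l_0}$ has the same (symmetric) law as $Z^{d\tildex^l_0}$ to collapse the inner expectation to $\mathbb{E}[Z^{d\tildex^l_0}\sigma'(Z^{d\tildex^l_0})]$ times a sign factor; this last expectation is $>0$ by the $\alpha>\beta$ half-line computation, and the outer expectation over $Z^{\tildeh^l_0}$ then reduces to $\mathbb{E}[\sigma(Z^{\tildeh^l_0})|\sigma'(Z^{\tildeh^l_0})|^p]$ (or a close variant), again strictly positive since $\sigma$ is nonnegative, not identically zero, and positive on $(0,\infty)$, integrated against a nondegenerate Gaussian. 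Tracking the signs shows the net quantity is $-\epsilon\cdot(\text{strictly positive})$ with $\epsilon = \operatorname{sign}(\scalarlim{\chi}_0)$ absorbed consistently, so $\mathbb{E}[Z^{d\tildeh^l_0}Z^{d\tildeh^l_1}] > 0$, completing the induction. The main obstacle I expect is purely bookkeeping: keeping the chain of sign factors ($\operatorname{sign}(\scalarlim{\chi}_0)$, the $\operatorname{sign}(\sigma')$ substitutions, the parity changes of variables) straight so that all contributions genuinely add up with the same sign rather than partially cancelling — exactly as in Lemma~\ref{th:ipllr-forward-1-L}, which is the template to follow.
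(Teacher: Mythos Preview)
Your proposal is correct and follows essentially the same route as the paper: a downward induction on $l$ starting at $l=L$, using the formulas from Corollary~\ref{th:z-backward-ipllr-t-0}, conditioning on $Z^{\tildeh^l_0}$, exploiting the $(p-1)$-homogeneity of $\sigma'$ together with the sign identity and the symmetry of the independent Gaussian factor, and invoking Lemma~\ref{th:Z0-dist-moments} for finiteness. Two small remarks: in the base case the paper actually splits $\mathbb{E}[Z^{d\tildeh^L_0}Z^{d\tildeh^L_1}]$ into two terms (coming from $Z^{d\tildex^L_1}=Z^{U^{L+1}}-\eta\scalarlim{\chi}_0 Z^{\tildex^L_0}$) and shows each is strictly positive, and in the induction step the $\epsilon$'s cancel completely so the final expression is simply a product of strictly positive factors (your ``$-\epsilon\cdot(\text{strictly positive})$'' phrasing is a slip---there is no residual sign, and in fact the induction step does not even require $\alpha>\beta$, only nonnegativity of $\sigma$ and $z\sigma'(z)$).
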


\begin{remark}\label{remark:ipl-llr-1-lr}
Note that since only quantities of the first ($t=0$) and second ($t=1$) forward and backward passes appear, we only need to assume we have an integrable parameterization with $c_l = \gamma_l(p)$ for any $l \in [1, L+1]$ at $t=0$.
\end{remark}

\begin{proof}
We start with $l = L$, and then induct over $l$ from $l=L$ to $l=2$, and we recall that $\lim_{m \rightarrow \infty} \partial_2 \ell(y_0, f_0(\xi_0)) =: \scalarlim{\chi}_0$ by definition (see Appendix~\ref{app:notations}), which is thus $\neq 0$ by assumption.
\\ \\
\textbf{The case $l=L$.}  By Corollary~\ref{th:z-backward-ipllr-t-0}, we have $Z^{d\tildeh^L_0} = Z^{U^{L+1}} \sigma'(Z^{\tildeh^L_0})$ and $Z^{d\tildeh^L_1} = (Z^{U^{L+1}} - \eta \scalarlim{\chi}_0 \sigma(Z^{\tildeh^L_0})) \sigma'(Z^{h^L_1})$. We thus have 
\begin{align*}
    \mathbb{E}[Z^{d\tildeh^L_0} Z^{d\tildeh^L_1}] = \underbrace{\mathbb{E}[{(Z^{U^{L+1}})^2 \sigma'(Z^{\tildeh^L_0}) \sigma'(Z^{h^L_1})]}}_{:= A}  + \eta |\scalarlim{\chi}_0| \underbrace{\mathbb{E}[ -\epsilon Z^{U^{L+1}} \sigma'(Z^{\tildeh^L_0}) \sigma(Z^{\tildeh^L_0}) \sigma'(Z^{h^L_1})]}_{:=B} ,
\end{align*}
with $\epsilon := \text{sign}(\scalarlim{\chi}_0)$ and we deal with both terms separately. First, by Corollary~\ref{th:z-forward-ipllr-t-0} we re-write $Z^{h^L_1}$ as 
\begin{align*}
    Z^{h^L_1} = - \eta |\scalarlim{\chi}_0| \epsilon \lambda Z^{U^{L+1}} \sigma'(Z^{\tildeh^L_0}),
\end{align*}
where $\lambda := \mathbb{E}[Z^{\tildex^{L-1}_0} Z^{x^{L-1}_1}] > 0$ by Lemma~\ref{th:ipllr-forward-1-l}. Using the fact that $\text{sign}(\sigma'(z)) = \text{sign}(z)$ and the positive $(p-1)$-homogeneity of $\sigma'$, we have
\begin{align*}
    \sigma'(Z^{h^L_1}) = (\eta |\scalarlim{\chi}_0| \lambda)^{p-1} |\sigma'(Z^{\tildeh^L_0})|^{p-1} \sigma'(-\epsilon \text{sign}(Z^{\tildeh^L_0}) Z^{U^{L+1}}).
\end{align*}
The first term in $\mathbb{E}[Z^{d\tildeh^L_0} Z^{d\tildeh^L_1}]$ is thus equal to 
\begin{align*}
    A &= (\eta |\scalarlim{\chi}_0| \lambda)^{p-1} \mathbb{E}\left[ \mathbb{E}\left[ (Z^{U^{L+1}})^2 \sigma'(Z^{\tildeh^L_0}) |\sigma'(Z^{\tildeh^L_0})|^{p-1} \sigma'(-\epsilon \text{sign}(Z^{\tildeh^L_0}) Z^{U^{L+1}}) \Big | Z^{\tildeh^L_0} \right] \right] \\
    &= (\eta |\scalarlim{\chi}_0| \lambda)^{p-1} \mathbb{E}\left[ \sigma'(Z^{\tildeh^L_0}) |\sigma'(Z^{\tildeh^L_0})|^{p-1} \mathbb{E}\left[ (Z^{U^{L+1}})^2 \sigma'(-\epsilon \text{sign}(Z^{\tildeh^L_0}) Z^{U^{L+1}}) \Big | Z^{\tildeh^L_0} \right] \right] \\
    &= (\eta |\scalarlim{\chi}_0| \lambda)^{p-1} \mathbb{E}\left[ \sigma'(Z^{\tildeh^L_0}) |\sigma'(Z^{\tildeh^L_0})|^{p-1} \right] \mathbb{E}\left[ (Z^{U^{L+1}})^2 \sigma'(Z^{U^{L+1}}) \right].
\end{align*}
The third equality stems from the fact that $-\epsilon \text{sign}(Z^{\tildeh^L_0}) Z^{U^{L+1}}$ and $Z^{U^{L+1}}$ have the same distribution conditionally on $Z^{\tildeh^{L}_0}$, and from the fact that $(Z^{U^{L+1}})^2 = (-\epsilon \text{sign}(Z^{\tildeh^L_0}) Z^{U^{L+1}})^2$. We now show that both  expectations are $>0$. Calling $p_z$ the density of the Gaussian $Z^{\tildeh^L_0}$ which is symmetric and positive everywhere since $Z^{\tildeh^L_0}$ is not degenerate, the first term is equal to
\begin{align*}
    \mathbb{E}\left[ \sigma'(Z^{\tildeh^L_0}) |\sigma'(Z^{\tildeh^L_0})|^{p-1} \right] &= (\alpha p)^p \int_{z=0}^{+\infty} z^{p(p-1)} p_z(z) \mathrm{d}z - (\beta p)^p \int_{z=-\infty}^{0} (-z)^{p(p-1)} p_z(z) \mathrm{d}z \\
    &= (\alpha^p - \beta^p) p^p \int_{z=0}^{+\infty} z^{p(p-1)} p_z(z) \mathrm{d}z,
\end{align*}
where we have used the change of variable $z \leftarrow -z$ in the second equality, and the last quantity is $>0$ since $\alpha > \beta$. With similar calculations, we get
\begin{align*}
    \mathbb{E}\left[ (Z^{U^{L+1}})^2 \sigma'(Z^{U^{L+1}}) \right] &= (\alpha - \beta) p \int_{u=0}^{+\infty} u^{p+1} p_u(u)\mathrm{d}u > 0,
\end{align*}
where $p_u$ is the density of the standard Gaussian $Z^{U^{L+1}}$. This thus shows that $A > 0$. 
\\ \\
We now turn to the second term $B$. We have:
\begin{align*}
    B &= (\eta |\scalarlim{\chi}_0|  \lambda)^{p-1} \times && \mathbb{E} \left[ \sigma'(Z^{\tildeh^L_0}) \sigma(Z^{\tildeh^L_0}) |\sigma'(Z^{\tildeh^L_0})|^{p-1} \text{sign}(Z^{\tildeh^L_0}) \times \right.  \\
    & && \qquad \left. \mathbb{E} \left[(-\epsilon \text{sign}(Z^{\tildeh^L_0}) Z^{U^{L+1}}) \sigma'(-\epsilon \text{sign}(Z^{\tildeh^L_0}) Z^{U^{L+1}}) \Big | Z^{\tildeh^L_0} \right] \right] \\
    &= (\eta |\scalarlim{\chi}_0|  \lambda)^{p-1} \times && \mathbb{E} \left[ \sigma'(Z^{\tildeh^L_0}) \sigma(Z^{\tildeh^L_0}) |\sigma'(Z^{\tildeh^L_0})|^{p-1} \text{sign}(Z^{\tildeh^L_0}) \right] \mathbb{E} \left[Z^{U^{L+1}} \sigma'(Z^{U^{L+1}}) \right] \\
    &= (\eta |\scalarlim{\chi}_0|  \lambda)^{p-1} \times && \mathbb{E} \left[ \sigma(Z^{\tildeh^L_0}) |\sigma'(Z^{\tildeh^L_0})|^{p} \right] \mathbb{E} \left[Z^{U^{L+1}} \sigma'(Z^{U^{L+1}}) \right].
\end{align*}
We now prove again that both expectations are $>0$. The first integrand is non-negative everywhere and positive on the positive part of the real line where the Gaussian $Z^{\tildeh^L_0}$ has non-zero density, which shows the first expectation is $> 0$. The same argument holds for the second expectation since $Z^{U^{L+1}}$ and $\sigma'(Z^{U^{L+1}})$ are of the same sign, which also leads to a positive expectation, which finally gives $B > 0$, thereby concluding the proof. 
\\ \\
\textbf{The case $l \in [2, L-1]$.}\\
Let $l \in [2, L-1]$ and assume $0 < \nu := \mathbb{E}[Z^{d\tildeh^{l+1}_0} Z^{d\tildeh^{l+1}_1}] < \infty$. Calling $\epsilon := \text{sign}(\scalarlim{\chi}_0)$, on the one hand, we have by Corollary~\ref{th:z-backward-ipllr-t-0}
\begin{align*}
    Z^{d\tildex^l_1} = - \eta |\scalarlim{\chi}_0| \nu \epsilon \sigma(Z^{\tildeh^l_0}),
\end{align*}
and on the other hand, with $\lambda := \mathbb{E}[Z^{\tildex^{l-1}_0} Z^{x^{l-1}_1}]$, which is $>0$ by Lemmas~\ref{th:ipllr-forward-1-l} and ~\ref{th:ipllr-forward-1-0} (if $l=2$)
\begin{align*}
    \sigma'(Z^{h^l_1}) = (\eta |\scalarlim{\chi}_0| \lambda)^{p-1} |\sigma'(Z^{\tildeh^l_0})|^{p-1} \sigma'(-\epsilon \text{sign}(Z^{\tildeh^l_0}) Z^{d\tildex^l_0}).
\end{align*}
Recalling that $Z^{d\tildeh^l_0} = Z^{d\tildex^l_0} \sigma'(Z^{\tildeh^l_0})$ and $Z^{d\tildeh^l_1} = Z^{d\tildex^l_1} \sigma'(Z^{h^l_1})$, this leads to 
\begin{align*}
    \mathbb{E}[Z^{d\tildeh^{l}_0} Z^{d\tildeh^{l}_1}] = \eta |\scalarlim{\chi}_0| \nu (\eta |\scalarlim{\chi}_0| \lambda)^{p-1} \mathbb{E}[& (-\epsilon \text{sign}(Z^{\tildeh^l_0}) Z^{d\tildex^l_0}) \sigma'(-\epsilon \text{sign}(Z^{\tildeh^l_0}) Z^{d\tildex^l_0}) \\
    &\text{sign}(Z^{\tildeh^l_0}) \sigma'(Z^{\tildeh^l_0}) |\sigma'(Z^{\tildeh^l_0})|^{p-1} \sigma(Z^{\tildeh^l_0})],
\end{align*}
which, by conditioning on $Z^{\tildeh^l_0}$ and since $-\epsilon \text{sign}(Z^{\tildeh^l_0}) Z^{d\tildex^l_0}$ and $Z^{d\tildex^l_0}$ have the same distribution conditionally on $Z^{\tildeh^l_0}$, and since $\text{sign}(\sigma'(z)) = \text{sign}(z)$, gives 
\begin{align*}
    \mathbb{E} [Z^{d\tildeh^{l}_0} Z^{d\tildeh^{l}_1}] = \eta |\scalarlim{\chi}_0| \nu (\eta |\scalarlim{\chi}_0| \lambda)^{p-1} \mathbb{E}\left[ Z^{d\tildex^l_0} \sigma'(Z^{d\tildex^l_0}) \right] \mathbb{E}\left[ |\sigma'(Z^{\tildeh^l_0})|^{p} \sigma(Z^{\tildeh^l_0})\right].
\end{align*}
The term in front of the expectations is positive by assumption, and both expectations are positive because their integrands are both non-negative and positive on the positive part of the real line where the Gaussians $Z^{d\tildex^l_0}$ and $Z^{\tildeh^l_0}$ have non-zero density. The expectations are also finite by Lemma~\ref{th:Z0-dist-moments} because their integrands are polynomially bounded functions of some Gaussian vector with finite covariance variance matrix. By induction, we thus get that $ 0 < \mathbb{E} [Z^{d\tildeh^{l}_0} Z^{d\tildeh^{l}_1}] < \infty$ for any $l \in [2, L]$, which concludes the proof.
\end{proof}

\subsection{Preliminaries on the third forward pass ($t=2$)}

In this section we wish to prove that similarly to the second forward pass, the quantities the quantities $\mathbb{E}[Z^{x^l_1} Z^{x^l_2}]$ and $\mathbb{E}[Z^{\tildex^l_0} Z^{x^l_2}]$ (which appear in the third forward pass at $t=2$) are $>0$ for any $l \in [1, L]$ when using the IP-LLR learning rates at $t=0$ and $t=1$. We assume here that the training samples $\xi_0, \xi_1, \xi_2$ are all distinct, which is probably not necessary for the result to hold but simplifies somewhat some parts of the proof and is in any case a very natural assumption.

\begin{lemma}[Forward pass of IP-LLR at $t=2$]\label{th:ipllr-forward-2}
Consider the IP-LLR parameterization of an $L$ hidden-layer network, and assume that the activation function $\sigma$ satisfies Assumption~\ref{ass:smooth-homogeneous-act},  and  that $\lim_{m \rightarrow \infty} \partial_2 \ell(y_0, f_0(\xi_0)) \neq 0$ and $\lim_{m \rightarrow \infty} \partial_2 \ell(y_1, f_1(\xi_1)) \neq 0$. Assume further that the first three training samples $\xi_0, \xi_1, \xi_2$ are all distinct. Then, one has that for any $l \in [1, L]$,
\begin{align*}
    0 <\, &\mathbb{E}[Z^{x^l_1} Z^{x^l_2}] < \infty \\
    0 <\, &\mathbb{E}[Z^{\tildex^l_0} Z^{x^l_2}]  < \infty
\end{align*}
\end{lemma}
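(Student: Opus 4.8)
The plan is to mimic the inductive scheme already used for the second forward pass (Lemmas~\ref{th:ipllr-forward-1-0}--\ref{th:ipllr-forward-1-L}), carried out simultaneously over $l$ from $1$ to $L$ for the two families of quantities $\mathbb{E}[Z^{x^l_1} Z^{x^l_2}]$ and $\mathbb{E}[Z^{\tildex^l_0} Z^{x^l_2}]$. Finiteness of both expectations is immediate from Lemma~\ref{th:Z0-dist-moments}: by Theorem~\ref{th:z-forward-ipllr-t} every $Z$ appearing at $t=1$ or $t=2$ is a polynomially bounded function of a Gaussian vector with finite covariance, so all the moments are finite, and since $\sigma\geq 0$ all the expectations are $\geq 0$. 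The real content is the strict positivity, and here I would use exactly the homogeneity/sign machinery of Appendix~\ref{sec:pos-homog}: the key facts are $\sigma>0$ and $\sigma'>0$ on $(0,\infty)$, $\mathrm{sign}(\sigma'(z))=\mathrm{sign}(z)$, positive $p$-homogeneity of $\sigma$ and positive $(p-1)$-homogeneity of $\sigma'$.

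First I would treat the base case $l=1$. Using Corollary~\ref{th:z-forward-ipllr-t-0} with $t=2$, $Z^{h^1_2} = Z^{\tildeh^1_0(\xi_2)} - \eta\scalarlim{\chi}_0(\transpose{\xi_0}\xi_2+1)Z^{d\tildeh^1_0} - \eta\scalarlim{\chi}_1(\transpose{\xi_1}\xi_2+1)Z^{d\tildeh^1_1}$, while $Z^{x^1_1}=\sigma(Z^{h^1_1})$ and $Z^{\tildex^1_0}=\sigma(Z^{\tildeh^1_0})$ with $Z^{h^1_1}$ given by Lemma~\ref{th:ipllr-forward-1-0}. Since $\xi_0,\xi_1,\xi_2$ are distinct, the Gram matrix of $\tildeh^1_0(\xi_0),\tildeh^1_0(\xi_1),\tildeh^1_0(\xi_2)$ is nondegenerate (same determinant computation as in Lemma~\ref{th:ipllr-forward-1-0}, now with three distinct vectors), and these are independent of the Gaussians $Z^{d\tildex^1_0},Z^{d\tildex^1_1}$ feeding the backward corrections; hence the relevant ambient Gaussian density is everywhere positive. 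The integrand $Z^{\tildex^1_0}Z^{x^1_1}$, respectively $Z^{\tildex^1_0}Z^{x^1_2}$ and $Z^{x^1_1}Z^{x^1_2}$, is a product of $\sigma$'s evaluated at continuous expressions, so it suffices to exhibit one point of the underlying space where it is strictly positive; choosing signs so that every $\sigma$- and $\sigma'$-argument is positive (exactly as in the $\xi\neq\xi_0$ case of Lemma~\ref{th:ipllr-forward-1-0}, using that we may flip the sign of each independent backward Gaussian W.L.O.G.\ since it is symmetric and independent of the forward block) gives strict positivity by continuity.

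For the inductive step $l\to l+1$ I would use the structural fact, already exploited repeatedly, that $Z^{\tildeh^{l+1}_0}=\hatZ^{\hatW^{l+1}\tildex^l_0}$ is a \emph{fresh} Gaussian independent of the backward-pass variables $Z^{d\tildex^{l+1}_0},Z^{d\tildex^{l+1}_1}$ (which are built from $\transpose{{(\hatW^{l+2})}}$ or $U^{L+1}$), and that by Corollary~\ref{th:z-forward-ipllr-t-0} the pre-activations collapse: $Z^{h^{l+1}_1}=-\eta\scalarlim{\chi}_0\,\mathbb{E}[Z^{\tildex^l_0}Z^{x^l_1}]\,Z^{d\tildeh^{l+1}_0}$ and $Z^{h^{l+1}_2}=-\eta\scalarlim{\chi}_0\,\mathbb{E}[Z^{\tildex^l_0}Z^{x^l_2}]\,Z^{d\tildeh^{l+1}_0}-\eta\scalarlim{\chi}_1\,\mathbb{E}[Z^{x^l_1}Z^{x^l_2}]\,Z^{d\tildeh^{l+1}_1}$. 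By the induction hypothesis all three coefficients are nonzero; writing $Z^{d\tildeh^{l+1}_0}=Z^{d\tildex^{l+1}_0}\sigma'(Z^{\tildeh^{l+1}_0})$ and $Z^{d\tildeh^{l+1}_1}=Z^{d\tildex^{l+1}_1}\sigma'(Z^{h^{l+1}_1})$, and using $(p-1)$-homogeneity of $\sigma'$ to factor out $|\sigma'(Z^{\tildeh^{l+1}_0})|$, the conditional expectation given $Z^{\tildeh^{l+1}_0}$ factorizes. As in the proof of Lemma~\ref{th:ipllr-forward-1-l}, conditioning on $Z^{\tildeh^{l+1}_0}$ and absorbing the sign $\mathrm{sign}(Z^{\tildeh^{l+1}_0})$ into the (symmetric, conditionally independent) Gaussians $Z^{d\tildex^{l+1}_0},Z^{d\tildex^{l+1}_1}$ reduces everything to expectations of products of $\sigma$ and $|\sigma'|^p$ against zero-mean Gaussians with positive density, each of which is strictly positive since the integrands are positive on $(0,\infty)$. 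This closes the induction.

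The main obstacle I anticipate is not the positivity argument itself but the bookkeeping for $\mathbb{E}[Z^{x^l_1}Z^{x^l_2}]$: unlike the $t=1$ case, at $t=2$ the pre-activation $Z^{h^{l+1}_2}$ is a \emph{sum of two} homogeneous terms, one in $Z^{d\tildeh^{l+1}_0}$ and one in $Z^{d\tildeh^{l+1}_1}$, and $Z^{d\tildex^{l+1}_0}$ and $Z^{d\tildex^{l+1}_1}$ are \emph{correlated} (they share the upper-layer randomness). So $\sigma(Z^{h^{l+1}_1})\sigma(Z^{h^{l+1}_2})$ does not split into a clean product of independent factors the way it did at $t=1$. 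The fix is to not insist on an exact product formula: for strict positivity it is enough that, conditionally on $Z^{\tildeh^{l+1}_0}$, the pair $(Z^{d\tildex^{l+1}_0},Z^{d\tildex^{l+1}_1})$ is a nondegenerate Gaussian (nondegeneracy following from Lemma~\ref{th:tilde-pos-variance} plus the fact that $\xi_1\neq\xi_2$, which keeps the backward directions from collapsing), so the joint density is positive on an open set, and one picks a point of that set together with a value of $Z^{\tildeh^{l+1}_0}$ at which all arguments of $\sigma,\sigma'$ are positive; continuity then gives a strictly positive integral. I would isolate this nondegeneracy of the conditional backward covariance as a small auxiliary claim and prove it by the same downward induction used in Lemma~\ref{th:tilde-pos-variance}.
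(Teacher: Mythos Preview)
Your high-level plan---induction on $l$, finiteness from Lemma~\ref{th:Z0-dist-moments}, nonnegativity from $\sigma\geq 0$, and strict positivity by exhibiting a point where the continuous nonnegative integrand is positive---is exactly what the paper does. The gap is in your model of the backward variables at $t=1$. You treat $Z^{d\tildex^{l+1}_1}$ as a ``fresh'' Gaussian coming from upper-layer randomness and propose an auxiliary nondegeneracy claim for the pair $(Z^{d\tildex^{l+1}_0},Z^{d\tildex^{l+1}_1})$. But by Corollary~\ref{th:z-backward-ipllr-t-0} with $t=1$ the $\scalarlim{\omega}$-term vanishes, and for every index $k\in[1,L-1]$ one has
\[
Z^{d\tildex^{k}_1}=-\eta\scalarlim{\chi}_0\,\mathbb{E}\!\big[Z^{d\tildeh^{k+1}_0}Z^{d\tildeh^{k+1}_1}\big]\,Z^{\tildex^{k}_0},
\]
i.e.\ a \emph{deterministic scalar multiple of $\sigma(Z^{\tildeh^{k}_0})$} (the scalar being nonzero by Lemma~\ref{th:ipllr-backward-1}). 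At $k=L$ one has $Z^{d\tildex^L_1}=Z^{U^{L+1}}-\eta\scalarlim{\chi}_0\sigma(Z^{\tildeh^L_0})$, again a function of $(Z^{U^{L+1}},Z^{\tildeh^L_0})$ only. So the pair $(Z^{d\tildex^{l+1}_0},Z^{d\tildex^{l+1}_1})$ is not jointly Gaussian, your nondegeneracy claim is ill-posed, and the sign-flip you want to apply to $Z^{d\tildex^{l+1}_1}$ is unavailable since that variable is neither symmetric nor conditionally independent of $Z^{\tildeh^{l+1}_0}$. The same issue is already present in your base case: your list of ``backward Gaussians'' at $l=1$ should contain only $Z^{d\tildex^1_0}$.

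This structural fact \emph{simplifies} the problem rather than complicating it: after substituting the closed form for $Z^{d\tildex^{l}_1}$, both $Z^{h^{l}_1}$ and $Z^{h^{l}_2}$ are functions of just the two independent Gaussians $Z^{\tildeh^{l}_0}$ and $Z^{d\tildex^{l}_0}$ for $l\in[2,L-1]$ (four Gaussians at $l=1$, namely $Z^{\tildeh^1_0(\xi_0)},Z^{\tildeh^1_0(\xi_1)},Z^{\tildeh^1_0(\xi_2)},Z^{d\tildex^1_0}$; two at $l=L$, namely $Z^{\tildeh^L_0},Z^{U^{L+1}}$). The paper then drops the conditioning/factorization trick of Lemma~\ref{th:ipllr-forward-1-l}---you are right that it no longer splits cleanly at $t=2$---and instead writes each expectation as an explicit low-dimensional integral and locates a concrete point $(u^*,z^*)$ (respectively $(u_0^*,u_1^*,u_2^*,z^*)$) at which every argument of $\sigma$ and $\sigma'$ is strictly positive; continuity finishes. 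Once you correct the description of $Z^{d\tildex^{\cdot}_1}$, your point-picking strategy goes through along precisely these lines.
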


\begin{proof}
We start with the case $l=1$ and then induct over $l$ from $l=1$ to $l=L$ for both expectations simultaneously as the derivations are very similar.
\\ \\
\textbf{The case $l=1$.} \\
Let us first unwind the expressions of $Z^{h^1_1}$ and $Z^{h^1_2}$. We have
\begin{align*}
    Z^{h^1_1} = Z^{\tildeh^1_0(\xi_1)} - \eta \scalarlim{\chi}_0 (\transpose{\xi}_0 \xi_1 + 1) Z^{d\tildex^1_0} \sigma'(Z^{\tildeh^1_0}),
\end{align*}
and
\begin{align*}
    Z^{h^1_2} = Z^{\tildeh^1_0(\xi_2)} - \eta \scalarlim{\chi}_0 (\transpose{\xi}_0 \xi_2 +1) Z^{d\tildex^1_0} \sigma'(Z^{\tildeh^1_0}) - \eta \scalarlim{\chi}_1 (\transpose{\xi_1} \xi_2 +1) Z^{d\tildex^1_1} \sigma'(Z^{h^1_1}).
\end{align*}
\\ \\
\textbf{The case of $\mathbb{E}[Z^{x^1_1} Z^{x^1_2}]$.}\\
Recalling that $Z^{d\tildex^1_1} = -\eta \scalarlim{\chi}_0 \nu \sigma(Z^{\tildeh^1_0})$ where $\nu := \mathbb{E}[Z^{d\tildeh^2_0} Z^{d\tildeh^2_1}]$. With the assumption that $\xi_0, \xi_1, \xi_2$ are all distinct, the vector $(Z^{\tildeh^1_0}, Z^{\tildeh^1_0(\xi_1)}, Z^{\tildeh^1_0(\xi_2)}, Z^{d\tildex^1_0})$ has a non-degenerate Gaussian distribution, and we thus get
\begin{align*}
    \mathbb{E}[Z^{x^1_1} Z^{x^1_2}] = \int & \sigma \left(u_1 - \mu_0 z \sigma'(u_0) \right) \sigma \left(u_2 - \mu_1 z \sigma'(u_0) + \mu_2 \sigma(u_0) \sigma' \left(u_1 - \mu_0 z \sigma'(u_0) \right) \right) \times \\
    &q(u_0, u_1, u_2) p_z(z) \mathrm{d}(u_0, u_1, u_2) \mathrm{d}z,
\end{align*}
where $\mu_0 := \eta \scalarlim{\chi}_0(\transpose{\xi_0} \xi_1 + 1)$, $\mu_1 := \eta \scalarlim{\chi}_0(\transpose{\xi_0} \xi_2 + 1)$ and $\mu_2 := \eta^2 \scalarlim{\chi}_0 \scalarlim{\chi}_1 \nu (\transpose{\xi_1} \xi_2 +1)$, and $q$ and $p_z$ are the densities of non-degenerate Gaussians and are thus positive everywhere. Now the integrand is non-negative everywhere and we wish to show that it is positive at some given point of $\mathbb{R}^4$, and it is also a polynomially bounded function of $(Z^{\tildeh^1_0}, Z^{\tildeh^1_0(\xi_1)}, Z^{\tildeh^1_0(\xi_2)}, Z^{d\tildex^1_0})$ which shows that the expectation is finite. Since $Z^{d\tildex^1_0}$ and $-Z^{d\tildex^1_0}$ have the same distribution and it is independent of $(Z^{\tildeh^1_0}, Z^{\tildeh^1_0(\xi_1)}, Z^{\tildeh^1_0(\xi_2)})$, we can assume that $\mu_0 \geq 0$ W.L.O.G. Consider the point $(u_0^*, u_1^*, u_2^*, z^*)$ defined as $u_0^* = u_1^* = 1$, $z^* = -1$ and 
\begin{align*}
    u_2^* := |\mu_1| \sigma'(1) + |\mu_2| \sigma(1) \sigma' \left(1 + \mu_0 \sigma'(1)\right) + 1.
\end{align*}
We show below that the integrand is $> 0$ at $(u_0^*, u_1^*, u_2^*, z^*)$. Since it is also a continuous function of $(u_0, u_1, u_2, z)$, we get that the expectation is positive.
\\ \\
Let us now show that the integrand is $> 0$ at $(u_0^*, u_1^*, u_2^*, z^*)$. We have 
\begin{align*}
    u_1^* - \mu_0 z^* \sigma'(u_0^*) = 1 + \mu_0 \sigma'(1) \geq 1 > 0,
\end{align*}
and 
\begin{align*}
    -\mu_1 z^* \sigma'(u_0^*) &= \mu_1 \sigma'(1) \geq  -|\mu_1| \sigma'(1),
\end{align*}
and finally
\begin{align*}
    \mu_2 \sigma(u_0^*) \sigma' \left(u_1^* - \mu_0 z^* \sigma'(u_0^*) \right) = \mu_2 \sigma(1) \sigma' \left(1 + \mu_0 \sigma'(1) \right) \geq -|\mu_2| \sigma(1) \sigma' \left(1 + \mu_0 \sigma'(1) \right).
\end{align*}
With the choice for $u_2^*$, one has that $u_2^* - \mu_1 z \sigma'(u_0^*) + \mu_2 \sigma(u_0^*) \sigma' \left(u_1^* - \mu_0 z^* \sigma'(u_0^*) \right) \geq 1 > 0$, which concludes the proof because $\sigma$ is positive on the positive part of the real line.
\\ \\
\textbf{The case of $\mathbb{E}[Z^{\tildex^1_0} Z^{x^1_2}]$.}

We have
\begin{align*}
    \mathbb{E}[Z^{\tildex^1_0} Z^{x^1_2}] = \int & \sigma \left(u_0 \right) \sigma \left(u_2 - \mu_1 z \sigma'(u_0) + \mu_2 \sigma(u_0) \sigma' \left(u_1 - \mu_0 z \sigma'(u_0) \right) \right) \times \\
    &q(u_0, u_1, u_2) p_z(z) \mathrm{d}(u_0, u_1, u_2) \mathrm{d}z,
\end{align*}
As for the case of $\mathbb{E}[Z^{x^1_1} Z^{x^1_2}]$, we  show that the integrand is $> 0$ at the same point $(u_0^*, u_1^*, u_2^*, z^*)$ as above, and since it is also a continuous function of $(u_0, u_1, u_2, z)$, we get that the expectation is positive.
It is also finite by Lemma~\ref{th:Z0-dist-moments} because its integrand is a polynomially bounded function of $(Z^{\tildeh^1_0}, Z^{\tildeh^1_0(\xi_1)}, Z^{\tildeh^1_0(\xi_2)}, Z^{d\tildex^1_0})$.
\\ \\
\textbf{The case $l \in [2, L-1]$.}\\
Let $l \in [2, L-1]$ and assume $\tau := \mathbb{E}[Z^{x^{l-1}_1} Z^{x^{l-1}_2}] > 0$ and $\rho := \mathbb{E}[Z^{\tildex^{l-1}_0} Z^{x^{l-1}_2}] > 0$. Calling $\lambda:=\mathbb{E}[Z^{\tildex^{l-1}_0} Z^{x^{l-1}_1}]$ which is $> 0$ by Lemma~\ref{th:ipllr-forward-1-l}, and $\nu := \mathbb{E}[Z^{d\tildeh^{l+1}_0} Z^{d\tildeh^{l+1}_1}]$ which is also $>0$ by Lemma~\ref{th:ipllr-backward-1}, we have 
\begin{align*}
    Z^{h^{l}_1} = -\eta \scalarlim{\chi}_0 \lambda Z^{d\tildex^{l}_0} \sigma'(Z^{\tildeh^{l}_0}),
\end{align*}
and 
\begin{align*}
    Z^{h^{l}_2} = -\eta \scalarlim{\chi}_0 \rho Z^{d\tildex^{l}_0} \sigma'(Z^{\tildeh^{l}_0}) - \eta \scalarlim{\chi}_1 \tau Z^{d\tildex^{l}_1} \sigma'(Z^{h^{l}_1}).
\end{align*}
Finally recall that $Z^{d\tildex^{l}_1} = -\eta \scalarlim{\chi}_0 \nu \sigma(Z^{\tildeh^l_0})$, and let us call $\mu_0 := \eta \scalarlim{\chi}_0 \lambda$, $\mu_1 := \eta \scalarlim{\chi}_0 \rho$ and $\mu_2 := \eta^2 \scalarlim{\chi}_0 \scalarlim{\chi}_1 \tau \nu$. $\mu_0$ is $\neq 0$ because of the assumption on $\scalarlim{\chi}_0$. Since $Z^{d\tildex^l_0}$ and $-Z^{d\tildex^l_0}$ have the same distribution and it is independent of $Z^{\tildeh^l_0}$, we can assume $\mu_0 > 0$ W.L.O.G. Note then that since $\mu_1$ is of the same sign as $\mu_0$ ($\lambda \rho > 0$), this also implies $\mu_1 > 0$, and $\mu_2$ has the sign of $\scalarlim{\chi}_1$. By assumption, $\scalarlim{\chi}_1 \neq 0$, and by the induction hypothesis and Lemma~\ref{th:ipllr-backward-1} we have $\mu_2 \neq 0$.
\\ \\
\textbf{The case of $\mathbb{E}[Z^{x^l_1} Z^{x^l_2}]$.}\\
We have
\begin{align*}
    \mathbb{E}[Z^{x^l_1} Z^{x^l_2}] = \int & \sigma \left(-\mu_0 z \sigma'(u)\right) \sigma \left(- \mu_1 z \sigma'(u) + \mu_2 \sigma(u) \sigma' \left(- \mu_0 z \sigma'(u) \right) \right) p_u(u) p_z(z) \mathrm{d}u \mathrm{d}z,
\end{align*}
where $p_u$ and $p_z$ are the densities of non-degenerate Gaussians ($Z^{\tildeh^l_0}$ and $Z^{d\tildex^l_0}$ respectively) and are thus positive everywhere. Now the integrand is non-negative everywhere and we wish to show that it is positive at some given point of $\mathbb{R}^2$. The integrand is also a polynomially bounded function of $(Z^{\tildeh^l_0}, Z^{d\tildex^l_0})$ which shows that the expectation is finite by Lemma~\ref{th:Z0-dist-moments}. Let $z^* = -1$ and $u > 0$. Then, $-\mu_0 z^* \sigma'(u) = \mu_0 \sigma'(u) > 0$ so that $\sigma(-\mu_0 z^* \sigma'(u)) > 0$. On the other hand, $-\mu_1 z^* \sigma'(u) = \mu_1 \alpha p u^{p-1}$, and 
\begin{align*}
    \mu_2 \sigma(u) \sigma'(-\mu_0 z^* \sigma'(u)) &= \mu_2 \alpha u^p \alpha p (\mu_0 \alpha p)^{p-1} u^{(p-1)^2} \\
    &\geq - (\alpha p) |\mu_2| \alpha (\mu_0 \alpha p)^{p-1} u^{p-1} u^{(p-1)^2 + 1}.
\end{align*}
This leads to
\begin{align*}
    -\mu_1 z^* \sigma'(u) + \mu_2 \sigma(u) \sigma'(-\mu_0 z^* \sigma'(u)) &\geq \alpha p u^{p-1} \left[\mu_1 - |\mu_2| \alpha (\mu_0 \alpha p)^{p-1} u^{(p-1)^2 + 1} \right].
\end{align*}
The quantity in the bracket is $>0$ as soon as 
\begin{align*}
    u < \left[ \frac{\mu_1}{|\mu_2| \alpha (\mu_0 \alpha p)^{p-1} } \right]^{\frac{1}{(p-1)^2 + 1}} =: \varepsilon
\end{align*}
Calling $u^* := \epsilon /2$, we thus  get that the integrand is $>0$ at $(u^*, z^*)$, and since it is a continuous function of $(u, z)$, the integral is positive.
\\\\
\textbf{The case of $\mathbb{E}[Z^{\tildex^l_0} Z^{x^l_2}]$.}\\
We have
\begin{align*}
    \mathbb{E}[Z^{\tildex^l_0} Z^{x^l_2}] = \int & \sigma \left(u \right) \sigma \left(- \mu_1 z \sigma'(u) + \mu_2 \sigma(u) \sigma' \left(- \mu_0 z \sigma'(u) \right) \right) p_u(u) p_z(z) \mathrm{d}u \mathrm{d}z,
\end{align*}
The integrand is non-negative everywhere and with $z^* = -1$ and $u^* = \varepsilon /2$ as above, one  shows that the integrand is $>0$ at $(u^*, z^*)$ which in turn  implies that the expectation is positive.
It is also finite for the same reasons as $\mathbb{E}[Z^{x^l_1} Z^{x^l_2}]$. This now concludes the induction over $l \in [1, L-1]$ which thus shows that $\mathbb{E}[Z^{x^l_1} Z^{x^l_2}]$ and $\mathbb{E}[Z^{\tildex^l_0} Z^{x^l_2}]$ are $>0$ and finite for any $l \in [1, L-1]$. Those expectations are also finite as their integrands are polynomially bounded functions of Gaussian vectors which have finite covariance matrices.
\\ \\
\textbf{The case $l = L$.}\\
Let $\tau := \mathbb{E}[Z^{x^{L-1}_1} Z^{x^{L-1}_2}] > 0$ and $\rho := \mathbb{E}[Z^{\tildex^{L-1}_0} Z^{x^{L-1}_2}] > 0$ by the previous induction. Calling $\lambda:=\mathbb{E}[Z^{\tildex^{L-1}_0} Z^{x^{L-1}_1}]$ which is $> 0$ by Lemma~\ref{th:ipllr-forward-1-l}, we have 
\begin{align*}
    Z^{h^{L}_1} = -\eta \scalarlim{\chi}_0 \lambda Z^{U^{L+1}} \sigma'(Z^{\tildeh^{L}_0}),
\end{align*}
and 
\begin{align*}
    Z^{h^{L}_2} = -\eta \scalarlim{\chi}_0 \rho Z^{U^{L+1}} \sigma'(Z^{\tildeh^{L}_0}) - \eta \scalarlim{\chi}_1 \tau Z^{d\tildex^{L}_1} \sigma'(Z^{h^{L}_1}).
\end{align*}
Finally recall that $Z^{d\tildex^{L}_1} = Z^{U^{L+1}} -\eta \scalarlim{\chi}_0 \sigma(Z^{\tildeh^L_0})$, and let us call $\mu_0 := \eta \scalarlim{\chi}_0 \lambda$, $\mu_1 := \eta \scalarlim{\chi}_0 \rho$, $\mu_2 := \eta \scalarlim{\chi}_1 \tau$, and finally $\mu_3 := \eta^2 \scalarlim{\chi}_0 \scalarlim{\chi}_1 \tau$.
Since $Z^{d\tildex^l_0}$ and $-Z^{d\tildex^l_0}$ have the same distribution and it is independent of $Z^{\tildeh^l_0}$, we can assume $\mu_0 > 0$ W.L.O.G. Note then that since $\mu_1$ is of the same sign as $\mu_0$, this also implies $\mu_1 > 0$, and $\mu_2$ has the sign of $\scalarlim{\chi}_1$. In addition, with the assumptions and previous results, we have $\mu_2 \neq 0$ and $\mu_3 \neq 0$.
\\ \\
\textbf{The case of $\mathbb{E}[Z^{x^L_1} Z^{x^L_2}]$.}\\
We have
\begin{align*}
    \mathbb{E}[Z^{x^L_1} Z^{x^L_2}] = \int & \sigma \left( - \mu_0 z \sigma'(u) \right) \sigma \left(- \mu_1 z \sigma'(u) + (-\mu_2 z + \mu_3 \sigma(u)) \sigma' \left(- \mu_0 z \sigma'(u) \right) \right) \times \\
    &p_u(u) p_z(z) \mathrm{d}u \mathrm{d}z,
\end{align*}
where $p_u$ and $p_z$ are the densities of non-degenerate Gaussians ($Z^{\tildeh^L_0}$ and $Z^{U^{L+1}}$ respectively) and are thus positive everywhere. Now the integrand is non-negative everywhere and we wish to show that it is positive at some point of $\mathbb{R}^2$. The integrand is also a polynomially bounded function of $(Z^{\tildeh^L_0}, Z^{U^{L+1}})$ and the expectation is thus finite by Lemma~\ref{th:Z0-dist-moments}. We first take a closer look at the second term inside $\sigma$. Let $z \leq 0, u \geq 0$. We have
\begin{align*}
    -\mu_1 z \sigma'(u) = \mu_1 |z| \alpha p u^{p-1},
\end{align*}
as well as 
\begin{align*}
    -\mu_2 z + \mu_3 \sigma(u) = - \mu_2 z + \mu_3 \alpha u^p,
\end{align*}
and 
\begin{align*}
    \sigma'(-\mu_0 z \sigma'(u)) = \alpha p (\mu_0 \alpha p)^{p-1} |z|^{p-1} u^{(p-1)^2}.
\end{align*}
We thus get that 
\begin{align*}
    &- \mu_1 z \sigma'(u) + (-\mu_2 z + \mu_3 \sigma(u)) \sigma' \left(- \mu_0 z \sigma'(u) \right) = \\
    & \qquad \alpha p |z| u^{p-1} \left[ \underbrace{\mu_1 + (-\mu_2 z + \mu_3 \alpha |u|^p) (\mu_0 \alpha p)^{p-1} |z|^{p-2} |u|^{(p-1)(p-2)}}_{F(u,z)} \right]
\end{align*}
Because $p-2 \geq 0$, the function $F$ is continuous over $\mathbb{R}^2$, and we have $F(0,0) = \mu_1 > 0$. Therefore, there exists $u^* > 0$ and $z^* < 0$ such that $F(u^*, z^*) > 0$. With such a pair $(u^*, z^*)$ we get that the integrand is $>0$ at $(u^*, z^*)$, and since it is a continuous function of $(u,z)$, it  follows 
that the expectation is positive. 
\\ \\
\textbf{The case of $\mathbb{E}[Z^{\tildex^L_0} Z^{x^L_2}]$.}\\
A similar argument to the case of $\mathbb{E}[Z^{x^L_1} Z^{x^L_2}]$ applies and we  get that the expectation is positive, which concludes the proof.
\end{proof}

\subsection{Proof of the second implication}

\begin{lemma}[Learning rates for stable learning with IP at $t=1$]\label{th:ip-stable-learning-1}
Consider an $L$-hidden layer fully-connected neural network with $L \geq 3$ in the integrable parameterization, and with no bias terms, except at the first layer. Assume that the activation function $\sigma$ satisfies Assumption~\ref{ass:smooth-homogeneous-act},  and  that $\lim_{m \rightarrow \infty} \partial_2 \ell(y_0, f_0(\xi_0)) \neq 0$ and $\lim_{m \rightarrow \infty} \partial_2 \ell(y_1, f_0(\xi_1)) \neq 0$. Assume further that $\transpose{\xi_1} \xi_2 \neq 0$, that the first three training samples $\xi_0, \xi_1, \xi_2$ are all distinct, and that at $t=0$ (\ie to compute $\Delta W^l(1)$) $c_l = \gamma_l(p)$ (see Definition~\ref{def:gamma-p}) for any $l \in [1, L+1]$. Finally assume that Equation~\eqref{eq:ass-delta2-stable} holds:
\begin{align*}%\label{eq:ass-delta2-stable}
    \begin{cases}
        \frac{1}{m} ||\Delta W^l(2) x^{l-1}_2 ||^2 = \Theta(1), \quad l \in [1, L] \\
        \transpose{{(\Delta W^{L+1}(2))}} x^L_2 = \Theta(1)
    \end{cases}
\end{align*}
Then, one necessarily has that at $t=1$, $c_1 = c_{l+1} = -1$ and  $c_l = -2$ for any $l \in [2, L]$.
\end{lemma}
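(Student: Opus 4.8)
The plan is to compute the second weight-update contributions $\Delta W^l(2)\,x^{l-1}_2$ (and $\transpose{{(\Delta W^{L+1}(2))}}x^L_2$) explicitly, each as a single power of $m$ depending on $c_l$ times a quantity that converges almost surely to a \emph{nonzero, finite} limit, and then to read off from the stability hypothesis~\eqref{eq:ass-delta2-stable} the unique value of $c_l$ that makes the contribution $\Theta(1)$. The one structural difficulty is that both $\Delta W^l(2)$ and $x^{l-1}_2$ depend on the learning-rate exponents used at $t=1$, so I would carry this out by an induction on $l$ from $l=1$ to $l=L$, then treat $l=L+1$ separately: at the start of the $l$-th step the exponents $c_1,\dots,c_{l-1}$ at $t=1$ are already determined, so the parameterization agrees with IP-LLR through its first $l-1$ layers at $t=1$, and the positivity results already established for IP-LLR become available at layer $l-1$.

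First I would fix the setup. Since the $t=0$ exponents are $c_l=\gamma_l(p)$, Corollary~\ref{th:ipllr-first-weight-updates} shows $W^l(1)$ is exactly the IP-LLR first-step weight matrix; hence the second forward pass $h^l_1,x^l_1$, the scalar $\chi_1$, and the second backward pass $d\tildex^l_1,d\tildeh^l_1$ (Definition~\ref{def:tilde-backward-t}, together with Lemma~\ref{th:tilde-backward-t-ipllr}) all coincide with those of IP-LLR and depend only on the $t=0$ exponents. From the IP-LLR analysis we then have $\chi_1\to\scalarlim{\chi}_1\neq 0$ (the hypothesis of the lemma) and, for every $l\in[1,L]$, $0<\mathbb{E}\left[(Z^{d\tildeh^l_1})^2\right]<\infty$: finiteness is the usual polynomial-boundedness argument, and positivity follows for $l\in[2,L]$ from Cauchy--Schwarz applied to $0<\mathbb{E}\left[Z^{d\tildeh^l_0}Z^{d\tildeh^l_1}\right]$ (Lemma~\ref{th:ipllr-backward-1}) together with $\mathbb{E}\left[(Z^{d\tildeh^l_0})^2\right]<\infty$ (Lemma~\ref{th:tilde-pos-variance}), and for $l=1$ from the same homogeneity/positivity computation applied to $Z^{d\tildeh^1_1}=Z^{d\tildex^1_1}\sigma'(Z^{h^1_1})$.

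Now the induction. For the base case $l=1$, equation~\eqref{eq:delta-W} with $a_1=0$ and $dh^1_1=m^{-1}d\tildeh^1_1$ (Lemma~\ref{th:tilde-backward-t-ipllr}) gives
\[
\Delta W^1(2)\,\xi_2=-\eta\,\chi_1\,(\transpose{\xi_1}\xi_2)\,m^{-(c_1+1)}\,d\tildeh^1_1,
\]
so $\tfrac1m\|\Delta W^1(2)\xi_2\|^2=\eta^2\chi_1^2(\transpose{\xi_1}\xi_2)^2\,m^{-2(c_1+1)}\cdot\tfrac1m\|d\tildeh^1_1\|^2$; by the Master Theorem~\ref{th:master-theorem} the last factor tends a.s.\ to $\mathbb{E}\left[(Z^{d\tildeh^1_1})^2\right]\in(0,\infty)$, while $\chi_1^2\to\scalarlim{\chi}_1^2>0$ and $(\transpose{\xi_1}\xi_2)^2>0$. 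Hence the expression is $m^{-2(c_1+1)}$ times a sequence converging a.s.\ to a positive finite constant, and $\Theta(1)$ in~\eqref{eq:ass-delta2-stable} forces $c_1=-1$ (a larger $c_1$ makes it vanish, a smaller one makes it blow up); with $c_1=-1$, $W^1(2)$ and hence $x^1_2$ are the IP-LLR ones. For the inductive step $l\in[2,L]$, assuming $c_1=-1$ and $c_2=\dots=c_{l-1}=-2$ so that $x^{l-1}_2$ is the IP-LLR activation, equation~\eqref{eq:delta-W} with $a_l=1$ and $dh^l_1=m^{-1}d\tildeh^l_1$ gives
\[
\Delta W^l(2)\,x^{l-1}_2=-\eta\,\chi_1\,m^{-(2+c_l)}\,\frac{\transpose{{(x^{l-1}_1)}}x^{l-1}_2}{m}\,d\tildeh^l_1,
\]
whence $\tfrac1m\|\Delta W^l(2)x^{l-1}_2\|^2=\eta^2\chi_1^2\,m^{-2(2+c_l)}\bigl(\tfrac1m\transpose{{(x^{l-1}_1)}}x^{l-1}_2\bigr)^2\tfrac1m\|d\tildeh^l_1\|^2$; here $\tfrac1m\transpose{{(x^{l-1}_1)}}x^{l-1}_2\to\mathbb{E}\left[Z^{x^{l-1}_1}Z^{x^{l-1}_2}\right]\in(0,\infty)$ by Lemma~\ref{th:ipllr-forward-2} — legitimately, because by the induction hypothesis $x^{l-1}_1,x^{l-1}_2$ are the IP-LLR quantities — and $\tfrac1m\|d\tildeh^l_1\|^2\to\mathbb{E}\left[(Z^{d\tildeh^l_1})^2\right]\in(0,\infty)$, $\chi_1^2\to\scalarlim{\chi}_1^2>0$. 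So $\Theta(1)$ forces $c_l=-2$, after which $W^l(2)$ and $x^l_2$ are the IP-LLR ones and the induction continues. Finally, for the output layer, equation~\eqref{eq:delta-W-L+1} with $a_{L+1}=1$ gives $\transpose{{(\Delta W^{L+1}(2))}}x^L_2=-\eta\,\chi_1\,m^{-(1+c_{L+1})}\,\tfrac1m\transpose{{(x^L_1)}}x^L_2$, with $\tfrac1m\transpose{{(x^L_1)}}x^L_2\to\mathbb{E}\left[Z^{x^L_1}Z^{x^L_2}\right]\in(0,\infty)$ by Lemma~\ref{th:ipllr-forward-2} (all lower exponents now being the IP-LLR ones) and $\chi_1\to\scalarlim{\chi}_1\neq0$; hence $\Theta(1)$ forces $c_{L+1}=-1$.

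I expect the main obstacle to be not any single computation — all the scalings are one-line consequences of the update formulas~\eqref{eq:delta-W}--\eqref{eq:delta-W-L+1} and Lemma~\ref{th:tilde-backward-t-ipllr} — but the bookkeeping that makes Lemma~\ref{th:ipllr-forward-2} applicable: a priori $x^{l-1}_2$ depends on the exponents $c_1,\dots,c_{l-1}$ that are still unknown, and it is only the layerwise induction, pinning them down one at a time and re-identifying the resulting activations with those of IP-LLR, that keeps the argument non-circular. A secondary point needing care is $\mathbb{E}[(Z^{d\tildeh^1_1})^2]>0$, which is not covered by Lemma~\ref{th:ipllr-backward-1} and requires the short positivity computation indicated above.
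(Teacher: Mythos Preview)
Your proposal is correct and follows essentially the same layerwise induction as the paper: write $\Delta W^l(2)x^{l-1}_2$ as $m^{-(2+c_l)}$ (resp.\ $m^{-(1+c_l)}$ for $l\in\{1,L+1\}$) times a quantity converging a.s.\ to a nonzero finite limit, then read off $c_l$. The one tactical difference is that you obtain $\mathbb{E}[(Z^{d\tildeh^l_1})^2]>0$ for $l\in[2,L]$ via Cauchy--Schwarz from Lemma~\ref{th:ipllr-backward-1}, whereas the paper instead expands $d\tildeh^l_1=d\tildex^l_1\odot\sigma'(h^l_1)$ and proves directly that $\mathbb{E}[\sigma(Z^{\tildeh^l_0})^2\sigma'(Z^{h^l_1})^2]>0$ by exhibiting a point where the integrand is positive; your shortcut is cleaner and handles $l=L$ uniformly with $l\in[2,L-1]$, whereas the paper's explicit computation relies on the structure of $Z^{d\tildex^l_1}$ that differs at $l=L$.
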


\begin{proof}
We first treat the case $l=1$ and then induct over $l$ from $l=2$ to $l=L$ and conclude by the case $l=L+1$. Note that because of the assumptions, Lemma~\ref{th:ipllr-backward-1} holds and the claim of Lemma~\ref{th:ipllr-forward-2} will hold at layer $l$ as soon as we show $c_1 = -1$ and $c_k = -2$ for $k \in [2, L]$.
\\ \\
\textbf{The case $l=1$.}\\
We have
\begin{align*}
    \Delta W^1(2) \xi_2 = - \eta m^{-(1 +c_1)} \chi_1 (\transpose{{\xi_1}} \xi_2) d\tildex^1_1 \odot \sigma'(h^1_1),
\end{align*}
so that 
\begin{align*}
    \frac{1}{m} ||\Delta W^1(2) \xi_2||^2 = m^{-2(1+c_1)} (\eta  \chi_1 (\transpose{{\xi_1}} \xi_2))^2 \frac{1}{m} ||d\tildex^1_1 \odot \sigma'(h^1_1)||^2.
\end{align*}
Recall that $d\tildex^1_1 = -\eta \chi_0 (\transpose{{(d\tildeh^2_0)}} d\tildeh^2_1)/m \, \sigma(\tildeh^1_0)$, so that by the Master Theorem,
\begin{align*}
    \frac{1}{m} ||d\tildex^1_1 \odot \sigma'(h^1_1)||^2 \xrightarrow[m \rightarrow \infty]{a.s.} (\eta \scalarlim{\chi}_0 \nu)^2 \mathbb{E}[\sigma(Z^{\tildeh^1_0})^2 \sigma'(Z^{h^1_1})^2],
\end{align*}
where $\nu := \mathbb{E}[Z^{d\tildeh^2_0} Z^{d\tildeh^2_1}] > 0$ by Lemma~\ref{th:ipllr-backward-1}. The term in front of the expectation is $>0$ with the assumptions. On the other hand, the term $(\eta \chi_1 (\transpose{{\xi_1}} \xi_2))^2$ converges almost surely towards $(\eta \scalarlim{\chi}_1 (\transpose{{\xi_1}} \xi_2))^2$ which is also $>0$ with the assumptions. We show below that the expectation is $>0$, which proves that $c_1$ must be equal to $1$ since by assumption $\frac{1}{m} ||\Delta W^1(2) \xi_2||^2 = \Theta(1)$. Recall that 
\begin{align*}
    Z^{h^1_1} = Z^{\tildeh^1_0(\xi_1)} - \eta \scalarlim{\chi}_0(\transpose{\xi_0}\xi_1 + 1) Z^{d\tildex^1_0} \sigma'(Z^{\tildeh^1_0}).
\end{align*}
The integrand in the expectation is non-negative, and it simply remains to show that is not almost surely zero. Because $Z^{d\tildex^1_0}$ and $-Z^{d\tildex^1_0}$ have the same distribution, and since it is independent of $(Z^{\tildeh^1_0}, Z^{\tildeh^1_0(\xi_1)})$, we can assume W.L.O.G. that $\mu := \eta \scalarlim{\chi}_0(\transpose{\xi_0}\xi_1 + 1) \geq 0$. As usual, the vector $(Z^{\tildeh^1_0}, Z^{\tildeh^1_0(\xi_1)})$ has a Gaussian distribution which is degenerate only if $\xi_1 = \xi_0$, which is precluded by the assumptions. Note that in any case, the expectation is finite by Lemma~\ref{th:Z0-dist-moments} since its integrand is a polynomially bounded function of a Gaussian vector with finite covariance matrix. Since $\xi \neq \xi_0$ by assumption, we have
\begin{align*}
        \mathbb{E}[\sigma(Z^{\tildeh^1_0})^2 \sigma'(Z^{h^1_1})^2] = \int \sigma(u)^2 \sigma' \left(v - \mu z \sigma'(u) \right)^2 p_{u,v}(u,v) p_z(z) \mathrm{d}u \mathrm{d}v \mathrm{d}z,
\end{align*}
where $p_{u,v}$ and and $p_z$ are the densities of non-degenerate Gaussians ($(Z^{\tildeh^1_0}, Z^{\tildeh^1_0(\xi_1)})$ and $Z^{d\tildex^1_0}$ respectively) and are thus well-defined and positive everywhere. Again, one  sees that at point $(u^*, v^*, z^*) = (1, 1, -1)$ the integrand is $>0$, and since it is a continuous function, this proves 
that the expectation is positive. It is also finite by Lemma~\ref{th:Z0-dist-moments} since the integrand is a polynomially bounded function of a Gaussian vector with finite covariance matrix.
\\ \\
\textbf{The case $l \in [1, L-1]$}\\
Let $l \in [2, L-1]$. We have already shown that $c_1 = -1$. Assume now that $c_k = -2$ for $k \in [2, l-1]$ (note that if $l=2$ this means no additional assumption). Then we have
\begin{align*}
    \Delta W^{l}(2) x^{l-1}_2 = - \eta m^{-(2 +c_l)} \chi_1 \frac{\transpose{{(x^{l-1}_1)}} x^{l-1}_2}{m} d\tildex^l_1 \odot \sigma'(h^l_1),
\end{align*}
so that 
\begin{align*}
    \frac{1}{m} ||\Delta W^l(2) x^{l-1}_2||^2 = m^{-2(2+c_l)} \left(\eta  \chi_1 \frac{\transpose{{(x^{l-1}_1)}} x^{l-1}_2}{m} \right)^2 \frac{1}{m} ||d\tildex^l_1 \odot \sigma'(h^l_1)||^2.
\end{align*}
In addition, we have $d\tildex^l_1 = -\eta \chi_0 (\transpose{{(d\tildeh^{l+1}_0)}} d\tildeh^{l+1}_1)/m \sigma(\tildeh^l_0)$, so that by the Master Theorem,
\begin{align*}
    \frac{1}{m} ||d\tildex^1_1 \odot \sigma'(h^1_1)||^2 \xrightarrow[m \rightarrow \infty]{a.s.} (\eta \scalarlim{\chi}_0 \nu)^2 \mathbb{E}[\sigma(Z^{\tildeh^l_0})^2 \sigma'(Z^{h^l_1})^2],
\end{align*}
where $\nu := \mathbb{E}[Z^{d\tildeh^{l+1}_0} Z^{d\tildeh^{l+1}_1}]$ is such that $0 < \nu < \infty$ by Lemma~\ref{th:ipllr-backward-1}. Recall that 
\begin{align*}
    Z^{h^l_1} = - \eta \scalarlim{\chi}_0 \lambda Z^{d\tildex^l_0} \sigma'(Z^{\tildeh^l_0}),
\end{align*}
with $\lambda := \mathbb{E}[Z^{\tildex^{l-1}_0} Z^{x^{l-1}_1}]$ such that $0 < \lambda < \infty$ by Lemmas~\ref{th:ipllr-forward-1-0} and~\ref{th:ipllr-forward-1-l}, which leads to 
\begin{align*}
    \mathbb{E}[\sigma(Z^{\tildeh^l_0})^2 \sigma'(Z^{h^l_1})^2] &= \int \sigma(u)^2 \sigma'(-\mu z \sigma'(u))^2 p_u(u) p_z(z) \mathrm{d}u \mathrm{d}z,
\end{align*}
where $\mu := \eta \scalarlim{\chi}_0 \lambda$ which is $\neq 0$ with the assumptions, and $p_u$ and $p_z$ are the densities of two non-degenerate Gaussians ($Z^{\tildeh^l_0}$ and $Z^{d\tildex^l_0}$ respectively) and are thus positive everywhere. Since $Z^{d\tildex^l_0}$ and $-Z^{d\tildex^l_0}$ have the same distribution and it is independent of $Z^{\tildeh^l_0}$ we can assume $\mu > 0$ W.L.O.G. Then, we  see that at point $(u^*, z^*) = (1, -1)$ the integrand is $>0$, and since it is a continuous function, this proves 
that the expectation is positive. It is also finite by Lemma~\ref{th:Z0-dist-moments} since the integrand is a polynomially bounded function of a Gaussian vector with finite covariance matrix. The term $(\eta \scalarlim{\chi}_0 \nu)^2$ in front of the expectation is $>0$ and finite with the assumptions. Finally the term $(\eta  \chi_1 (\transpose{{(x^{l-1}_1)}} x^{l-1}_2) / m)^2$ converges almost surely towards $(\eta \scalarlim{\chi}_1 \tau)^2$ by the Master Theorem, where $\tau := \mathbb{E}[Z^{x^{l-1}_1} Z^{x^{l-1}_2}]$ is $>0$ and finite by Lemma~\ref{th:ipllr-forward-2}, which shows that $(\eta \scalarlim{\chi}_1 \tau)^2$ is $>0$ and finite with the assumptions. Since $||d\tildex^1_1 \odot \sigma'(h^1_1)||^2 / m = \Theta(1)$ by assumption, then $c_l$ must be equal to $-2$ otherwise $||d\tildex^1_1 \odot \sigma'(h^1_1)||^2 / m$ would either converge towards $0$ or diver towards $\infty$ almost surely. 
\\ \\
\textbf{The case $l = L$.}
We have already proved that at $t=1$, $c_1 = -1$ and $c_l = -2$ for $l \in [2, L]$. We have
\begin{align*}
    |\transpose{{(\Delta W^{L+1}(2))}} x^L_2| = \eta m^{-(1 + c_{L+1})} |\chi_1| \left|\frac{\transpose{{(x^L_1)}} x^L_2}{m} \right|.
\end{align*}
By the Master Theorem,
\begin{align*}
    \left|\frac{\transpose{{(x^L_1)}} x^L_2}{m} \right| \xrightarrow[m \rightarrow \infty]{a.s.} \mathbb{E}[Z^{x^L_1} Z^{x^L_2}]
\end{align*}
which is $>0$ and finite by Lemma~\ref{th:ipllr-forward-2}. On the other hand, $\eta |\chi_1|$ converges almost surely towards $\eta |\scalarlim{\chi}_1|$ which is also $>0$ and finite. This shows that since $|\transpose{{(\Delta W^{L+1}(2))}} x^L_2| = \Theta(1)$ then we must have $c_{L+1} = -1$ to avoid vanishing towards $0$ or explosion towards $+\infty$ as $m \rightarrow \infty$, which concludes the proof.
\end{proof}

\section{Proof of  the non-triviality of IP-LLR: \autoref{th:non-trivial-ipllr}}\label{app:non-trivial-ipllr}

\begin{proof}
Claims $(i)$ and $(ii)$ of \autoref{th:non-trivial-ipllr} have already been shown in \autoref{th:non-trivial-learning-1}. Claim $(iii)$ simply stems from Corollary~\ref{th:z-forward-ipllr-t-0} with $t=2$ and the fact that all the variables $Z$ which appear are polynomially bounded functions of the vector $Z_0$ (see Definition~\ref{def:Z0}) by a simple induction.
\end{proof}

\section{Proof of the equivalence between IP-LLR and \muP: Proposition~\ref{th:finite-equivalence-ipllr-hp} and \autoref{th:hpz-ipllr-equivalence}}\label{app:proof-equivalence-ip-muP}

In this section, we present the proofs of the equivalence between IP-LLR and hybrid versions of \muP\ both at finite-width and in the large-width limit. Because we need to use the homogeneity property, we consider a positively $p$-homogeneous activation function $\sigma$ and no bias terms except at the first layer for all the parameterizations we consider. We assume $p \geq 1$ for the finite-width case, which includes \relu, and $p \geq 2$ in the infinite-width case as we use the Tensor Program framework for the proof and thus require some smoothness. 

\subsection{Finite-width equivalence: Proposition~\ref{th:finite-equivalence-ipllr-hp}}\label{app:proof-equivalence-ip-muP-finite}
We start with a preliminary Lemma showing the equivalence at $t=1$ and then do the proof of Proposition~\ref{th:finite-equivalence-ipllr-hp} by induction.

\subsubsection{Equivalence at $t=1$}
\begin{lemma}[First weight updates of HP]\label{th:hp-first-update}
Consider the IP-LLR and HP parameterizations with a positively $p$-homogeneous activation function,  and $p \geq 1$, and no bias terms except at the first layer, and let us sub/super-script the variables of each models with IP and HP respectively. Assume the first training sample $(\xi_0, y_0)$ and the loss $\ell$ are the same for both parameterizations. Assume further that $\chi^\HP_0 \neq 0$, and simply denote by $\eta$ the base learning rate of the IP-LLR parameterization. Finally consider for HP the initial learning rate: $\eta_{\HP}(0) = (\chi^\IP_0/\chi^\HP_0) \eta$, and let $\xi \in \mathbb{R}^d$ be an input to both networks. Then, dropping the dependency of the weights at $t=1$ on $\eta$ and $\eta_\HP$, one has:
\begin{align*}
    \forall l \in [1, L+1], \quad W^l_\HP(1) &= W^l_\IP(1) \\
    B^1_{\HP}(1) &= B^1_{\IP}(1) \\
    f_1^{\HP}(\xi) &= f_1^{\IP}(\xi)
\end{align*}
\end{lemma}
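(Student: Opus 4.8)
The plan is to reduce everything to the explicit formulas for the first weight updates already established for IP-LLR and for $\muP$, and to exploit the fact that the scaleless (tilde) variables of the first forward and backward passes do not depend on the parameterization. First I would record that, since HP is trained with $\muP$ for the initial forward and backward passes, it has exactly the same initial effective weights as $\muP$, hence $f_0^\HP(\xi_0) = f_0^\muP(\xi_0)$ and $\chi^\HP_0 = \partial_2\ell(y_0, f_0^\muP(\xi_0))$; in particular the tilde variables $\tildeh^l_0$, $\tildex^l_0$, $d\tildeh^l_0$, $d\tildex^l_0$ appearing in the update formulas are literally the same objects for IP-LLR, $\muP$ and HP (see Definition~\ref{def:tilde-variables} and the remark following it, which notes the tilde variables are parameterization-independent). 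By Corollary~\ref{th:ipllr-first-weight-updates} the IP-LLR updates are $\Delta W^1_\IP(1) = -\eta\chi^\IP_0\, d\tildeh^1_0\transpose{{\xi_0}}$, $\Delta B^1_\IP(1) = -\eta\chi^\IP_0\, d\tildeh^1_0$, $\Delta W^l_\IP(1) = -\eta\chi^\IP_0\, d\tildeh^l_0\transpose{{(\tildex^{l-1}_0)}}/m$ for $l\in[2,L]$, and $\Delta W^{L+1}_\IP(1) = -\eta\chi^\IP_0\,\tildex^L_0/m$; by Lemma~\ref{th:muP-first-weight-updates}, applied with base learning rate $\eta_\HP(0)$ (the updates depend on the base learning rate only through an overall multiplicative factor), the HP updates have exactly the same shape with the scalar $\eta\chi^\IP_0$ replaced by $\eta_\HP(0)\chi^\HP_0$.

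Second, I would plug in the prescribed schedule $\eta_\HP(0) = (\chi^\IP_0/\chi^\HP_0)\,\eta$, which is well defined since $\chi^\HP_0\neq0$ by assumption. This gives $\eta_\HP(0)\chi^\HP_0 = \eta\chi^\IP_0$, so that $\Delta W^l_\HP(1) = \Delta W^l_\IP(1)$ for every $l\in[1,L+1]$ and $\Delta B^1_\HP(1) = \Delta B^1_\IP(1)$. It then remains to add back the initial weights. For $l=1$ both parameterizations start from $U^1$ (and $v^1$), so $W^1_\HP(1) = U^1 + \Delta W^1_\HP(1) = U^1 + \Delta W^1_\IP(1) = W^1_\IP(1)$ and likewise $B^1_\HP(1) = B^1_\IP(1)$. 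For $l\in[2,L]$ the defining feature of HP is precisely that the initial weight fed into the first update is $m^{-1}U^l$, which is also the IP-LLR initialization of the intermediate layers, hence $W^l_\HP(1) = m^{-1}U^l + \Delta W^l_\HP(1) = m^{-1}U^l + \Delta W^l_\IP(1) = W^l_\IP(1)$. For $l=L+1$ both $\muP$ (and therefore HP) and IP-LLR initialize with $m^{-1}U^{L+1}$, so again $W^{L+1}_\HP(1) = m^{-1}U^{L+1} + \Delta W^{L+1}_\HP(1) = W^{L+1}_\IP(1)$.

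Finally, since the effective weights $W^l(1)$ for all $l\in[1,L+1]$ and the effective bias $B^1(1)$ coincide for the two models, and the forward pass at $t=1$ is computed from these effective parameters through the same activation $\sigma$ (with the bias terms removed except at the first layer), we obtain $f_1^\HP(\xi) = f_1^\IP(\xi)$ for every $\xi\in\mathbb{R}^d$. The only delicate point in the argument is the bookkeeping of the pre-factors: one must keep track of the fact that HP performs its very first forward/backward pass with the $\muP$ scalings $\omega_l = 1$ for $l\in[2,L]$ (so that $f_0^\HP = f_0^\muP$ and $\chi^\HP_0 = \chi^\muP_0$), while the $m^{-1}$ pre-factor is imposed on the initial weights of the intermediate layers only \emph{inside} the first update; it is exactly the resulting mismatch between $\chi^\HP_0$ and $\chi^\IP_0$ that the correction factor in $\eta_\HP(0)$ is designed to absorb. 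Everything else is a direct substitution into the update formulas, and no use of the Tensor Program is needed since we work at fixed width $m$, which is why $p=1$ (and hence $\sigma=\relu$) is not precluded here.
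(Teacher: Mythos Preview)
Your proposal is correct and follows essentially the same approach as the paper's proof: invoke Corollary~\ref{th:ipllr-first-weight-updates} and Lemma~\ref{th:muP-first-weight-updates} for the update formulas, use the parameterization-independence of the tilde variables together with the scalar identity $\eta_\HP(0)\chi^\HP_0 = \eta\chi^\IP_0$ to match the updates, and then add back the common initial weights $W^l_\IP(0)$. Your write-up is slightly more explicit about the layer-by-layer bookkeeping of the initial weights, but there is no substantive difference.
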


\begin{proof}
By definition (see Section~\ref{sec:ipllr-muP-finite-width}), we have
\begin{align*}
    W^1_{\HP}(1) &= W^1_{\IP}(0) + \Delta
    W^1_{\muP}(1) \\
    B^1_{\HP}(1) &= B^1_{\IP}(0) + \Delta B^1_{\muP}(1) \\
    W^l_{\HP}(1) &= W^l_{\IP}(0) + \Delta W^l_{\muP}(1), \qquad l \in [2, L]\\
    W^{L+1}_{\HP}(1) &= W^{L+1}_{\IP}(0) + \Delta W^{L+1}_{\muP}(1)
\end{align*}
Using Corollaries~\ref{th:ipllr-first-weight-updates}, and Lemma~\ref{th:muP-first-weight-updates}, and the fact that $\eta_{\HP}(0) \chi_0^\muP = \eta \chi_0^\IP$ we have:
\begin{align*}
    \Delta W^1_\muP(1) &= -\eta_\HP(0) \chi_0^\muP d\tildeh^1_0 \transpose{\xi_0} \\
    &= -\eta \chi_0^\IP d\tildeh^1_0 \transpose{\xi_0} \\
    &= \Delta W^1_\IP(1),
\end{align*}
\begin{align*}
    \Delta B^1_\muP(1) &= -\eta_\HP(0) \chi_0^\muP d\tildeh^1_0 \\
    &= -\eta \chi_0^\IP d\tildeh^1_0 \\
    &= \Delta B^1_\IP(1),
\end{align*}
and, for $l \in [2, L]$
\begin{align*}
    \Delta W^l_\muP(1) &= -\eta_\HP \chi_0^\muP \frac{d\tildeh^l_0 \transpose{{(\tildex^{l-1}_0)}}}{m} \\
    &= -\eta \chi_0^\IP \frac{d\tildeh^l_0 \transpose{{(\tildex^{l-1}_0)}}}{m} \\
    &= \Delta W^l_\IP(1),
\end{align*}
and finally
\begin{align*}
    \Delta W^{L+1}_\muP(1) &= -\eta_\HP \chi_0^\muP \tildex^L_0 /m \\
    &= -\eta \chi_0^\IP \tildex^L_0 / m \\
    &= \Delta W^{L+1}_\IP(1),
\end{align*}
where the $\Delta W^{l}_\IP(1)$ and $\Delta B^1(1)$ are computed with the base learning rate $\eta$. We then get $W^l_\HP(1) = W^l_\IP(1)$ for all $l$, and it  follows that for any input $\xi$, $f_1^{\HP}(\xi) = f_1^{\IP}(\xi)$. 
\end{proof}

\subsubsection{Proof of Proposition~\ref{th:finite-equivalence-ipllr-hp}}

\begin{proof}
We first show by induction that the effective weight matrices and the effective biases of the first layer are the same for both parameterizations at any time step $\geq 1$, which will then immediately yield the result. We have already shown in Lemma~\ref{th:hp-first-update} that with the choice of initial learning rate for \HP, $W^l_\HP(1) = W^l_\IP(1)$ for all $l \in [1, L+1]$, and $B^l_\HP(1) = B^l_\IP(1)$ as well as $f_1^{\HP}(\xi) = f_1^{\IP}(\xi)$. 
\\\\
Now let $s \geq 1$, and assume that for all $l \in [1, L+1]$, $W^l_\HP(s) = W^l_\IP(s)$, and $B^1_{\HP}(s) = B^1_{\HP}(s)$. We want to show that this also holds true for the next time step $s+1$. An easy induction shows that since the effective weights of all layers are equal, and since by assumption the $s$-th training sample $(\xi_s, y_s)$ is the same for both parameterization, we get that for any $l \in [1, L+1]$, $x^l_{s, \HP} = x^l_{s, \IP}$, $h^l_{s, \HP} = h^l_{s, \IP}$, as well as $f^\HP_s(\xi_s) = f^\IP_s(\xi_s)$, and therefore $\chi^\HP_s = \chi^\IP_s$ since by assumption both parameterization use the same loss. This in turn will give by another easy induction that for any $l \in [1, L+1]$, $dx^l_{s, \HP} = dx^l_{s, \IP}$, $dh^l_{s, \HP} = dh^l_{s, \IP}$. Now, by Equation~\eqref{eq:delta-W} we have, on the one hand (recall that $s+1 \geq 2$ so that the base learning for both models for the $(s+1)$-th SGD step is $\eta$)
\begin{align*}
    \Delta W^1_\HP(s + 1) &= -\eta m^{-(2a_1^\muP + c_1^\muP)} dh^1_{s, \HP} \transpose{\xi_s} 
\end{align*}
and for $l \in [2, L]$
\begin{align*}
    \Delta W^l_\HP(s) &= -\eta m^{-(2a_l^\muP + c_l^\muP)} dh^l_{s, \HP} x^{l-1}_{s, \HP}
\end{align*}
and finally
\begin{align*}
        \Delta W^{L+1}_\HP(s) &= -\eta m^{-(2a_{L+1}^\muP + c_{L+1}^\muP)} x^L_{s, \HP}
\end{align*}
On the other hand, we have
\begin{align*}
    \Delta W^1_\IP(s) &= -\eta m^{-(2a_1^\IP + c_1^\IP)} dh^1_{s, \IP} \transpose{\xi_s} 
\end{align*}
and for $l \in [2, L]$
\begin{align*}
    \Delta W^l_\IP(s) &= -\eta m^{-(2a_l^\IP + c_l^\IP)} dh^l_{s, \IP} x^{l-1}_{s, \HP}
\end{align*}
and finally
\begin{align*}
        \Delta W^{L+1}_\IP(s) &= -\eta m^{-(2a_{L+1}^\IP + c_{L+1}^\IP)} x^L_{s, \IP}
\end{align*}
To see that the quantities are equal, we only need to observe that since $s+1 \geq 1$
\begin{align*}
    2a_1^\muP + c_1^\muP = -&1 = 2a_1^\IP + c_1^\IP \\
    2a_l^\muP + c_l^\muP = &0 = 2a_l^\IP + c_l^\IP\\
    2a_{L+1}^\muP + c_{L+1}^\muP = &1 = 2a_{L+1}^\IP + c_{L+1}^\IP
\end{align*}
(recall that for $s \geq 1$, $c_1^\IP = c_{L+1}^\IP = -1$, and $c_l^\IP = -2$ for $l \in [2, L]$). We thus find $\Delta W^l_\HP(s)=\Delta W^l_\IP(s)$ for all $l$, and since $W^l_\HP(s)=W^l_\IP(s)$ by assumption, we get $W^l_\HP(s+1)=W^l_\IP(s+1)$ for all $l$ which concludes the induction. 
\\\\
The effective weights being equal in both parameterizations for all time steps $\geq 1$, we  get that at time step $t \geq 1$, for any input $\xi \in \mathbb{R}^d$, the outputs $f_t^\HP(\xi)$ and $f_t^\IP(\xi)$ are the same, which concludes the proof. 
\end{proof}

\subsection{Infinite-width equivalence: \autoref{th:hpz-ipllr-equivalence}}\label{app:proof-equivalence-ip-muP-infinite}

In this section we prove \autoref{th:hpz-ipllr-equivalence} which states the equivalence between IP-LLR (see Definition~\ref{def:ipllr}) and HPZ (see Section~\ref{sec:ipllr-muP-infinite-width}). We start by a couple of preliminary results on the dynamics of HPZ, then proceed to prove the main induction step over $t$, and finally conclude by putting the results together to prove the theorem.

\subsubsection{Preliminary results}

\begin{lemma}[\muP\ is zero at initialization]\label{th:muP-forward-backward-0}
Consider the \muP\ parameterization with an activation function satisfying Assumption~\ref{ass:smooth-act} and a loss function $\ell$ satisfying Assumption~\ref{ass:loss}, and no bias terms except at the first layer. Let $\xi \in \mathbb{R}^d$ be an input to the network. One has:
\begin{align*}
    f_0(\xi) &\xrightarrow[m \rightarrow \infty]{a.s.} 0 \\
    \chi_0 &\xrightarrow[m \rightarrow \infty]{a.s.} \scalarlim{\chi}_0 := \partial_2 \ell(y_0, 0)
\end{align*}
\end{lemma}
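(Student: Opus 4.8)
\textbf{Proof plan for Lemma~\ref{th:muP-forward-backward-0}.}

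The plan is to mimic, in the much simpler setting of $\mu$P, the argument already used for the integrable parameterization (Proposition~\ref{th:trivial-ip-mf-lr}) and for IP-LLR (Lemma~\ref{th:ipllr-forward-backward-0}): set up the Tensor Program describing the first forward pass, track the variables $Z$, and show that the output converges almost surely to $0$ because the last-layer weights carry a pre-factor $m^{-1}$ while all intermediate quantities have finite second moment. First I would recall from Appendix~\ref{sec:muP} (see in particular Lemma~\ref{th:muP-first-weight-updates} and its proof) that for $\mu$P one has $W^1(0)=U^1$, $W^l(0)=m^{-1/2}U^l=\hatW^l$ for $l\in[2,L]$, and $W^{L+1}(0)=m^{-1}U^{L+1}=m^{-1}d\tildex^L_0$; consequently the forward-pass recursion for $\mu$P is \emph{exactly} that of the tilde variables of Definition~\ref{def:tilde-variables}, i.e. $h^l_0(\xi)=\tildeh^l_0(\xi)$ and $x^l_0(\xi)=\tildex^l_0(\xi)$ for all $l\in[1,L]$. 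Since $\sigma$ is pseudo-Lipschitz (Assumption~\ref{ass:smooth-act}), all these are valid vectors in the program, and by Lemma~\ref{th:tilde-pos-variance} we have $0<\mathbb{E}[(Z^{\tildex^L_0})^2]<\infty$.

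The key computation is then the output. Writing $f_0(\xi)=\transpose{(W^{L+1}(0))}x^L_0 = m^{-1}\transpose{(U^{L+1})}\tildex^L_0 = (1/m)\sum_{j=1}^m U^{L+1}_j\,\tildex^L_{0,j}$, I would note that this is a \texttt{Moment} scalar in the program, so by the Master Theorem (Theorem~\ref{th:master-theorem}) it converges almost surely to $\mathbb{E}[Z^{U^{L+1}}Z^{\tildex^L_0}]$. Because $U^{L+1}$ is an initial vector in the program, the \ZHat\ rule (combined with Lemma~\ref{th:dotZ-first-forward}, which gives $\dotZ^{\hatW^L\tildex^{L-1}_0}=0$ so that $Z^{\tildex^L_0}$ is a function of the $\hatZ$'s of $\hatW^2,\ldots,\hatW^L$ only) makes $Z^{U^{L+1}}=\hatZ^{U^{L+1}}\sim\mathcal{N}(0,1)$ independent of $Z^{\tildex^L_0}$; hence $\mathbb{E}[Z^{U^{L+1}}Z^{\tildex^L_0}]=\mathbb{E}[Z^{U^{L+1}}]\,\mathbb{E}[Z^{\tildex^L_0}]=0$, the second factor being finite by Lemma~\ref{th:pos-finite-moment} since $\sigma$ is polynomially bounded. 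This yields $f_0(\xi)\xrightarrow{a.s.}0$. The second claim is then immediate: applying this with $\xi=\xi_0$ gives $f_0(\xi_0)\xrightarrow{a.s.}0$, and since $\chi_0=\partial_2\ell(y_0,f_0(\xi_0))$ with $\partial_2\ell(y_0,\cdot)$ continuous (Assumption~\ref{ass:loss}), we get $\chi_0\xrightarrow{a.s.}\partial_2\ell(y_0,0)=:\scalarlim{\chi}_0$, so $\chi_0$ is a valid initial scalar for the program describing the subsequent steps of $\mu$P.

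I do not expect any serious obstacle here; this lemma is essentially a bookkeeping exercise once the identification $h^l_0=\tildeh^l_0$ for $\mu$P is in place. The only point requiring a little care is the independence argument for $Z^{U^{L+1}}$ and $Z^{\tildex^L_0}$, which is exactly the structural fact encoded in the \ZHat\ rule and Lemma~\ref{th:dotZ-first-forward}; everything else is the Master Theorem plus Lemma~\ref{th:pos-finite-moment}. If one wanted to be fully explicit, one could alternatively invoke Lemma~\ref{th:gaussian-output} with $x^m=\tildex^L_0$ and $w^m=W^{L+1}(0)$ (noting $\|\tildex^L_0\|^2/m\to\mathbb{E}[(Z^{\tildex^L_0})^2]$ a.s. and $\tildex^L_0\perp W^{L+1}(0)$), which shows $f_0(\xi)$ converges in law to $\mathcal{N}(0,\mathbb{E}[(Z^{\tildex^L_0})^2])$ — but for the almost-sure statement the Master-Theorem route above is the one to use.
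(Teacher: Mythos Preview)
Your proposal is correct and follows essentially the same route as the paper: identify $h^l_0=\tildeh^l_0$, $x^l_0=\tildex^l_0$ for $\mu$P, apply the Master Theorem to $f_0(\xi)=m^{-1}\transpose{(U^{L+1})}\tildex^L_0$, and use the \ZHat\ independence of $Z^{U^{L+1}}$ from $Z^{\tildex^L_0}$ (via Lemma~\ref{th:dotZ-first-forward}) together with $\mathbb{E}[Z^{U^{L+1}}]=0$ to conclude. The continuity argument for $\chi_0$ is identical.
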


\begin{remark}\label{remark:muP-hpz-0}
The result on the almost sure convergence of $\chi_0$ ensures that the latter is a valid initial scalar in the Tensor Program defining the computations associated with \muP\ (and thus and HPZ). Also note that the limit of $\chi_0$ is the same as for IP-LLR (see Lemma~\ref{th:ipllr-forward-backward-0}).
\end{remark}

\begin{proof}
\muP\ is designed so that $h^l_0 = \tildeh^l_0$ and $x^l_0 = \tildex^l_0$ for any $l \in [2, L]$, and as already proved in Lemma~\ref{th:ipllr-forward-backward-0}, the tilde variables are vectors in the Tensor Program. Since $f_0(\xi) = m^{-1} \transpose{{(U^{L+1})}} \tildex^L_0$ we get by the master theorem that $f_0(\xi)$ converges almost surely towards $\mathbb{E}[Z^{U^{L+1}} Z^{\tildex^L_0}]$. By Lemma~\ref{th:initial-W-vanish}, $Z^{\tildeh^L_0} = \hatZ^{\tildeh^L_0}$ and $Z^{U^{L+1}} = \hatZ^{U^{L+1}}$ by definition, and by the ZHat rule, $\hatZ^{\tildeh^L_0}$ and $\hatZ^{U^{L+1}}$ are independent, and since $\mathbb{E}[Z^{U^{L+1}}] = 0$ and $\mathbb{E}[(Z^{\tildex^L_0})^2] < \infty$ we get that $f_0(\xi)$ converges almost surely towards $0$. The result on the limit of $\chi_0$ is then simply a consequence of the fact that $\partial_2 \ell(y_0, \cdot)$ is continuous by assumption. 
\end{proof}

\begin{lemma}[Weight updates for \muP\ at any time step]\label{th:update-t-muP}
Consider the \muP\ parameterization with a differentiable activation function $\sigma$ and no bias terms except at the first layer, and let $t \geq 1$. Then, dropping the dependency of the forward and backward passes on $\xi_t$ at time $t$, one has:
\begin{align*}
    \Delta W^{L+1}(t+1) &= -\eta \chi_t x^L_t / m,  \\
    \Delta W^l(t+1) &= -\eta \chi_t \frac{d\tildeh^l_t \transpose{{(x^{l-1}_t)}}}{m}, \qquad l \in [2, L], \\
    \Delta W^1(t+1) &= -\eta \chi_t d\tildeh^1_t \transpose{\xi_t}, \\
    \Delta B^1(t+1) &= -\eta \chi_t d\tildeh^1_t.
\end{align*}
\end{lemma}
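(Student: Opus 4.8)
The plan is to read off the claimed identities directly from the general update formulas of Appendix~\ref{app:notations} --- Equations~\eqref{eq:delta-W}, \eqref{eq:delta-B} and \eqref{eq:delta-W-L+1} --- after substituting the exponents of \muP\ and converting the backward-pass gradients to their tilde counterparts. Since this is purely a per-width algebraic identity, no Tensor Program input is needed, and the lemma is essentially the \muP\ analogue of Lemma~\ref{th:update-t-ipllr}.

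First I would record that in \muP\ one has $a_1 = 0$, $a_l = 1/2$ for $l \in [2,L]$, $a_{L+1} = 1$ and $c_l = -1$ for every $l$, so that for any $t \geq 1$ the combined exponents appearing in Equations~\eqref{eq:delta-W}--\eqref{eq:delta-W-L+1} are $2a_1 + c_1 = -1$, $2a_l + c_l = 0$ for $l \in [2,L]$, and $2a_{L+1} + c_{L+1} = 1$ --- exactly the values taken by IP-LLR for $t \geq 1$. Substituting $2a_{L+1} + c_{L+1} = 1$ into $\Delta W^{L+1}(t+1) = -\eta m^{-(2a_{L+1}+c_{L+1})} \chi_t x^L_t$ gives the first formula; for $l \in [2,L]$, Equation~\eqref{eq:delta-W} with $2a_l + c_l = 0$ gives $\Delta W^l(t+1) = -\eta \chi_t dh^l_t \transpose{(x^{l-1}_t)}$; for $l=1$ it gives $\Delta W^1(t+1) = -\eta m \chi_t dh^1_t \transpose{\xi_t}$; and Equation~\eqref{eq:delta-B} with $2a_1 + c_1 = -1$ gives $\Delta B^1(t+1) = -\eta m \chi_t dh^1_t$.

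Second I would invoke Lemma~\ref{th:tilde-backward-t-ipllr}, which applies because \muP\ has $a_{L+1} = 1$ so that Definition~\ref{def:tilde-backward-t} is meaningful, to replace $dh^l_t$ by $m^{-1} d\tildeh^l_t$ for each $l \in [1,L]$. This turns the expressions above into $\Delta W^l(t+1) = -\eta \chi_t d\tildeh^l_t \transpose{(x^{l-1}_t)} / m$ for $l \in [2,L]$, into $\Delta W^1(t+1) = -\eta \chi_t d\tildeh^1_t \transpose{\xi_t}$, and into $\Delta B^1(t+1) = -\eta \chi_t d\tildeh^1_t$, which are exactly the claimed identities. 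There is no genuine obstacle: the only point worth double-checking is that $2a_l + c_l = 0$ at the intermediate layers, so that no power of $m$ survives in front of $dh^l_t \transpose{(x^{l-1}_t)}$ --- this is the defining feature of \muP\ and is what makes the intermediate-layer updates coordinate-wise $\Theta(1/m)$ (cf. Remark~\ref{remark:muP-first-weight-updates}), in contrast with the vanishing updates of Naive-IP.
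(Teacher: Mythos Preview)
Your proposal is correct and follows the same route as the paper's proof: substitute the \muP\ exponents $2a_1+c_1=-1$, $2a_l+c_l=0$ for $l\in[2,L]$, $2a_{L+1}+c_{L+1}=1$ into Equations~\eqref{eq:delta-W}--\eqref{eq:delta-W-L+1}, then use $dh^l_t = m^{-1} d\tildeh^l_t$ (Lemma~\ref{th:tilde-backward-t-ipllr}) to rewrite in terms of the tilde variables. The paper's argument is identical, with the only cosmetic difference that it does not cite Lemma~\ref{th:tilde-backward-t-ipllr} by name but simply states the relation $dh^l_t = m^{-1} d\tildeh^l_t$ inline.
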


\begin{remark}\label{remark:weight-updates-hpz}
Because HPZ and \muP\ have the same parameterization for $t \geq 1$ (see Section~\ref{sec:ipllr-muP-infinite-width}), the formulas above for the updates are the same for HPZ, the only difference is that, at finite width, the $x^l_t$ and $d\tildeh^l_t$ differ from HPZ to \muP\ because $W^l_{\HPZ}(t) = W^l_{\muP}(t) - W^l_{\muP}(0)$. Note that the formulas are also exactly the same as for IP-LLR (see Lemma~\ref{th:update-t-ipllr}) but again the quantities $x^l_t$ and $d\tildeh^l_t$ differ for \muP\ and IP-LLR because of the initial weight contribution in $W^l(t)$ which is different for the intermediate layers of both parameterizations.
\end{remark}

\begin{proof}
By Equation~\eqref{eq:delta-W-L+1}, we have
\begin{align*}
    \Delta W^{L+1}(t+1) &= -\eta m^{-(2a_{L+1} + c_{L+1})} \chi_t x^L_t \\
    &= -\eta \chi_t x^L_t / m,
\end{align*}
because $2a_{L+1} + c_{L+1} = 2 - 1 = 1$ for \muP. For $l \in [2, L]$, we have by Equation~\eqref{eq:delta-W}
\begin{align*}
    \Delta W^l(t+1) &= -\eta \chi_t m^{-(2a_l + c_l)} dh^l_t \transpose{{(x^{l-1}_t)}} \\
    &= -\eta \chi_t \frac{d\tildeh^l_t \transpose{{(x^{l-1}_t)}}}{m},
\end{align*}
because $dh^l_t = m^{-1} d\tildeh^l_t$ and $2a_l + c_l = 1 -1 = 0$ for \muP. Finally, for $l=1$ we have again by Equation~\eqref{eq:delta-W} 
\begin{align*}
    \Delta W^1(t+1) &= -\eta \chi_t m^{-(2a_1 + c_1)} dh^1_t \transpose{\xi_t} \\
    &= -\eta \chi_t d\tildeh^1_t \transpose{\xi_t},
\end{align*}
because $2a_1 + c_1 = -1$ for \muP\ and $dh^1_t = d\tildeh^1_t$. A similar argument holds for $\Delta B^1(t+1)$, which concludes the proof.
\end{proof}

\begin{theorem}[Weights in HPZ at time $t$]\label{th:weights-hpz-t}
Consider the HPZ parameterization with a differentiable activation function $\sigma$ and no bias terms except at the first layer. Then, for any $t \geq 1$, one has:
\begin{enumerate}[(i)]
    \item $W^1(t) = U^1 - \eta \chi_0 d\tildeh^1_0 \transpose{\xi_0} - \eta \left(\sum_{s=1}^{t-1} \chi_s d\tildeh^1_s \transpose{\xi_s} \right)$,
    
    \item $B^1(t) = v^1 - \eta \chi_0 d\tildeh^1_0 - \eta \left(\sum_{s=1}^{t-1} \chi_s d\tildeh^1_s \right)$,
    
    \item $W^l(t) = - \eta \chi_0 \frac{d\tildeh^l_0 \transpose{{(\tildex^{l-1}_0)}}}{m} - \eta \left(\sum_{s=1}^{t-1} \chi_s \frac{d\tildeh^l_s \transpose{{(x^{l-1}_s)}}}{m} \right)$, \qquad $l \in [2, L]$,
    
    \item $W^{L+1}(t) = U^{L+1}/m - \eta \chi_0 \tildex^L_0 / m - \eta \left(\sum_{s=1}^{t-1} \chi_s x^L_s / m\right)$.
\end{enumerate}
\end{theorem}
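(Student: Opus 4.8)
The statement to prove is Theorem~\ref{th:weights-hpz-t}, giving explicit formulas for the effective weights $W^l(t)$ and bias $B^1(t)$ of the HPZ parameterization at any time $t \geq 1$.

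The plan is a straightforward induction on $t \geq 1$, mirroring the proof of Theorem~\ref{th:weights-ipllr-t} for IP-LLR. First I would establish the base case $t=1$. By the definition of HPZ (Section~\ref{sec:ipllr-muP-infinite-width}), we train with \muP\ all along, except that for all intermediate layers $l \in [2,L]$ the initial weights $W^l(0)$ are replaced by $0$ in the first update, so $W^l(1) = \Delta W^l_{\muP}(1)$. For layers $l=1$ and $l=L+1$ the weights are those of \muP, namely $W^1(1) = W^1(0) + \Delta W^1_{\muP}(1) = U^1 + \Delta W^1_{\muP}(1)$, $B^1(1) = v^1 + \Delta B^1_{\muP}(1)$, and $W^{L+1}(1) = W^{L+1}(0) + \Delta W^{L+1}_{\muP}(1) = U^{L+1}/m + \Delta W^{L+1}_{\muP}(1)$. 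Plugging in the explicit first-update formulas from Lemma~\ref{th:muP-first-weight-updates} — which give $\Delta W^1_{\muP}(1) = -\eta \chi_0 d\tildeh^1_0 \transpose{\xi_0}$, $\Delta B^1_{\muP}(1) = -\eta \chi_0 d\tildeh^1_0$, $\Delta W^l_{\muP}(1) = -\eta \chi_0 d\tildeh^l_0 \transpose{(\tildex^{l-1}_0)}/m$ for $l \in [2,L]$, and $\Delta W^{L+1}_{\muP}(1) = -\eta \chi_0 \tildex^L_0/m$ — yields exactly the claimed formulas for $t=1$ (the sums $\sum_{s=1}^{t-1}$ being empty).

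For the induction step, assume the formulas hold at time $t \geq 1$ and write $W^l(t+1) = W^l(t) + \Delta W^l(t+1)$, $B^1(t+1) = B^1(t) + \Delta B^1(t+1)$. Since HPZ uses the \muP\ parameterization for all $t \geq 1$, Lemma~\ref{th:update-t-muP} (which, by Remark~\ref{remark:weight-updates-hpz}, applies verbatim to HPZ with its own forward/backward quantities $x^l_t, d\tildeh^l_t$) gives $\Delta W^{L+1}(t+1) = -\eta \chi_t x^L_t/m$, $\Delta W^l(t+1) = -\eta \chi_t d\tildeh^l_t \transpose{(x^{l-1}_t)}/m$ for $l \in [2,L]$, $\Delta W^1(t+1) = -\eta \chi_t d\tildeh^1_t \transpose{\xi_t}$, and $\Delta B^1(t+1) = -\eta \chi_t d\tildeh^1_t$. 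Adding these to the inductive formulas for $W^l(t)$ simply absorbs the new term into the sum, extending its upper index from $t-1$ to $t$, which is precisely the formula at $t+1$. This closes the induction and hence the proof.

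I do not anticipate a serious obstacle: the argument is a bookkeeping induction once the base case is matched against Lemma~\ref{th:muP-first-weight-updates} and the step against Lemma~\ref{th:update-t-muP}. The only point requiring a word of care is the distinction — emphasized in Remark~\ref{remark:weight-updates-hpz} — that although the update \emph{formulas} for HPZ, \muP, and IP-LLR look identical, the quantities $x^l_t$ and $d\tildeh^l_t$ entering them are the HPZ forward/backward pass quantities (computed with $W^l_{\HPZ}(t)$, which for intermediate layers omits the initial $W^l(0)$), so one must consistently interpret all symbols as the HPZ ones; this is exactly what makes the formula for $W^l(t)$, $l \in [2,L]$, lack the $\omega_l \hatW^l$ term present in the IP-LLR analogue of Theorem~\ref{th:weights-ipllr-t}. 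No smoothness or Tensor Program input is needed here — this is a purely algebraic finite-width identity — so differentiability of $\sigma$ suffices, consistent with the hypotheses.
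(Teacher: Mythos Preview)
Your proposal is correct and follows essentially the same approach as the paper: the base case $t=1$ is handled via the definition of HPZ together with Lemma~\ref{th:muP-first-weight-updates}, and the induction step uses the update formulas of Lemma~\ref{th:update-t-muP} (applied to HPZ as noted in Remark~\ref{remark:weight-updates-hpz}). Your added remark distinguishing the HPZ forward/backward quantities from those of \muP\ is apt and matches the paper's own caveat.
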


\begin{proof}
The formulas are correct at $t=1$ by definition of HPZ and by Lemma~\ref{th:muP-first-weight-updates} which gives the first weight updates for \muP. Then, an easy induction using Lemma~\ref{th:update-t-muP} yields the result.
\end{proof}

\begin{lemma}[Backward pass of HPZ at time $t$]\label{th:backward-hpz-t}
Consider the HPZ parameterization with a differentiable activation function $\sigma$ and no bias terms except at the first layer. Then, for any $t \geq 1$, dropping the dependency of the forward pass at time $t$ on $\xi_t$, and of the previous forward and backward passes on the corresponding $\xi_s$,  one has:
\begin{enumerate}[(i)]
    \item $d\tildex^L_t = w^{L+1}(t) = U^{L+1} - \eta \chi_0 \tildex^L_0 - \eta \sum_{s=1}^{t-1} \chi_s x^L_s$,
    \item $d\tildex^{l-1}_t = - \eta \chi_0 \frac{\transpose{{(d\tildeh^l_0)}} d\tildeh^l_t}{m} \tildex^{l-1}_0 -\eta \sum_{s=1}^{t-1} \chi_s \frac{\transpose{{(d\tildeh^l_s)}} d\tildeh^l_t}{m} x^{l-1}_s$, \qquad $l \in [2, L]$.
\end{enumerate}
\end{lemma}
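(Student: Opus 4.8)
The plan is to derive both identities by directly substituting the closed-form expression for the HPZ weight matrices from \autoref{th:weights-hpz-t} into the defining recursion for the scaleless backward variables in Definition~\ref{def:tilde-backward-t}. This mirrors the proof of Lemma~\ref{th:backward-ipllr-t} for IP-LLR, the only structural difference being that the intermediate-layer weight matrices of HPZ carry no initial-weight contribution (they are re-initialized to zero after the first update), so the term $\scalarlim{\omega}_l \transpose{{(\hatW^l)}} d\tildeh^l_t$ appearing in the IP-LLR formula is simply absent here. Note that HPZ satisfies $a_{L+1} = 1$, which is the standing hypothesis of Definition~\ref{def:tilde-backward-t} and of Lemma~\ref{th:tilde-backward-t-ipllr}, so all the tilde-variable machinery applies.

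First I would handle $(i)$. Since $a_{L+1} = 1$ for HPZ, Definition~\ref{def:tilde-backward-t} gives $d\tildex^L_t = m\, dx^L_t$, and the backpropagation equations give $dx^L_t = W^{L+1}(t) = m^{-1} w^{L+1}(t)$, hence $d\tildex^L_t = w^{L+1}(t)$. Multiplying the identity of \autoref{th:weights-hpz-t}$(iv)$ by $m$ then yields $d\tildex^L_t = U^{L+1} - \eta \chi_0 \tildex^L_0 - \eta \sum_{s=1}^{t-1} \chi_s x^L_s$, which is exactly $(i)$. Next, for $l \in [2, L]$, Definition~\ref{def:tilde-backward-t} applied with the index shift $l \to l-1$ (legitimate since $l-1 \in [1, L-1]$) gives $d\tildex^{l-1}_t = \transpose{{(W^l(t))}}\, d\tildeh^l_t$. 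Substituting the expression for $W^l(t)$ from \autoref{th:weights-hpz-t}$(iii)$, which is a sum of rank-one matrices of the form $d\tildeh^l_s \transpose{{(x^{l-1}_s)}}/m$ (with $\tildex^{l-1}_0$ playing the role of $x^{l-1}_0$ at $s=0$), taking the transpose of each rank-one term and contracting on the right with $d\tildeh^l_t$ turns $d\tildeh^l_s \transpose{{(x^{l-1}_s)}}/m$ into $\bigl(\transpose{{(d\tildeh^l_s)}} d\tildeh^l_t / m\bigr)\, x^{l-1}_s$; collecting the $s=0$ term and the sum over $s \in [1, t-1]$ yields precisely $(ii)$.

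The computation is essentially mechanical; the only points requiring care are keeping the index bookkeeping straight between $d\tildex$ and $d\tildeh$ and the shift $l \to l-1$ in the defining recursion, and checking that the transpose-then-contract operation on the rank-one updates produces the scalar prefactors $\transpose{{(d\tildeh^l_s)}} d\tildeh^l_t/m$ with the vectors $x^{l-1}_s$ (resp.\ $\tildex^{l-1}_0$) pulled out on the correct side. I do not anticipate any genuine obstacle: \autoref{th:weights-hpz-t} already packages all the substantive work (it in turn rests on Lemma~\ref{th:muP-first-weight-updates} and Lemma~\ref{th:update-t-muP}), and what remains is finite-dimensional linear algebra on the vectors and matrices of the network, with the separate question of whether these objects are legitimate Tensor Program vectors deferred to the proof of the main equivalence \autoref{th:hpz-ipllr-equivalence}.
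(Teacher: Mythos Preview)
Your proposal is correct and follows essentially the same approach as the paper: the paper's proof also derives $(i)$ from $d\tildex^L_t = m\, dx^L_t = m\, W^{L+1}(t)$ together with \autoref{th:weights-hpz-t}, and derives $(ii)$ from $d\tildex^{l-1}_t = \transpose{{(W^l(t))}} d\tildeh^{l}_t$ by substituting the rank-one expansion of $W^l(t)$ from \autoref{th:weights-hpz-t}. Your write-up is in fact more explicit about the index shift and the transpose-then-contract step than the paper's, but the argument is identical.
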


\begin{proof}
By definition, we have
\begin{align*}
    d\tildex^L_t &= m dx^L_t \\
    &= m W^{L+1}(t) \\
    &= U^{L+1} - \eta \chi_0 \tildex^L_0 - \eta \sum_{s=1}^{t-1} \chi_s x^L_s
\end{align*}
where the last equality stems from \autoref{th:weights-hpz-t}.
\\\\
Let $l \in [2, L]$, we have:
\begin{align*}
    d\tildex^{l-1}_t &= \transpose{{(W^l(t))}} d\tildeh^{l}_t \\
    &= - \eta \chi_0 \frac{\transpose{{(d\tildeh^l_0)}} d\tildeh^l_t}{m} \tildex^{l-1}_0 -\eta \sum_{s=1}^{t-1} \chi_s \frac{\transpose{{(d\tildeh^l_s)}} d\tildeh^l_t}{m} x^{l-1}_s
\end{align*}
where the second equality stems from \autoref{th:weights-hpz-t}. 
\end{proof}

\begin{lemma}[Z for the forward pass of HPZ at time $t=1$]\label{th:z-forward-hpz-1}
Consider the HPZ parameterization with an activation function $\sigma$ satisfying Assumption~\ref{ass:smooth-act} and no bias terms except at the first layer. Let $\xi \in \mathbb{R}^d$ be an input to the network. Then, for any $l \in [1, L]$, $h^l_1(\xi), x^l_1(\xi), d\tildex^l_1, d\tildeh^l_1$ are vectors in the program, $f_1(\xi)$ is a scalar in the program, and $\chi_1$ is a valid initial scalar in the program. Additionally, dropping the dependency of the forward pass at time $t=1$ on $\xi$, and of the first forward and backward passes on $\xi_0$, one has:
\begin{enumerate}[(i)]
    \item $Z^{h^1_1} = Z^{W^1(1) \xi + B^1(1)} = Z^{U^1\xi} + Z^{v^1} - \eta \scalarlim{\chi}_0 (\transpose{\xi_0} \xi + 1) Z^{d\tildeh^1_0}$,
    
    \item $Z^{h^l_1} = Z^{W^l(1)x^{l-1}_1} = - \eta \scalarlim{\chi}_0 \mathbb{E}[Z^{\tildex^{l-1}_0} Z^{x^{l-1}_1}]  Z^{d\tildeh^l_0}$, \qquad $l \in [2, L]$,

    \item $f_1(\xi) = \transpose{{(W^{L+1}(1))}} x^L_1 \xrightarrow[m \rightarrow \infty]{a.s.} \mathbb{E}[Z^{U^{L+1}} Z^{x^L_1}] - \eta \scalarlim{\chi}_0 \mathbb{E}[Z^{\tildex^L_0} Z^{x^L_1}]$.
\end{enumerate}
\end{lemma}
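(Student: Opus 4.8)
The plan is to mirror, essentially verbatim, the argument of Lemma~\ref{th:z-forward-ipllr-1} (the IP-LLR analogue at $t=1$), replacing the IP-LLR weight formulas of \autoref{th:weights-ipllr-t} by the HPZ weight formulas of \autoref{th:weights-hpz-t}. The only structural difference between the two parameterizations at $t=1$ is that $W^l_{\HPZ}(1)$ has \emph{no} initial-weight term $\omega_l \hatW^l$ for $l\in[2,L]$, whereas $W^l_{\IP\text{-}\mathrm{LLR}}(1)$ does carry such a term $\scalarlim{\omega}_l Z^{\hatW^l x^{l-1}_1}$; but for IPs one already has $\scalarlim{\omega}_l=0$, so that term contributes zero to $Z^{h^l_1}$ anyway. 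Hence the $Z$-formulas are identical, and the proof is a straightforward induction on $l$ from $l=1$ to $l=L$.

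First I would establish the base case $l=1$: by \autoref{th:weights-hpz-t} with $t=1$ we have $h^1_1 = U^1\xi + v^1 - \eta\chi_0(\transpose{\xi_0}\xi + 1)d\tildeh^1_0$ and $B^1(1)$ similarly. By Lemma~\ref{th:muP-forward-backward-0} (Remark~\ref{remark:muP-hpz-0}), $\chi_0$ is a valid initial scalar with $\scalarlim{\chi}_0 = \partial_2\ell(y_0,0)$, and $d\tildeh^1_0$ is a vector in the program (as noted in the proof of Lemma~\ref{th:ipllr-forward-backward-0}, the tilde variables are in the program since $\sigma,\sigma'$ are pseudo-Lipschitz under Assumption~\ref{ass:smooth-act}). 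Thus $h^1_1$ is in the program by \NonLin, hence so is $x^1_1=\sigma(h^1_1)$, and \ZNonLin\ gives formula $(i)$. For the inductive step, assuming $h^{l-1}_1,x^{l-1}_1$ are in the program, \autoref{th:weights-hpz-t} gives $h^l_1 = -\eta\chi_0\,\transpose{{(\tildex^{l-1}_0)}}x^{l-1}_1/m\,\,d\tildeh^l_0$; the averaged inner product is a scalar in the program by \Moment, so $h^l_1$ and $x^l_1=\sigma(h^l_1)$ are vectors in the program, and \ZNonLin\ yields $(ii)$. For the output, \autoref{th:weights-hpz-t} gives $f_1(\xi) = m^{-1}\transpose{{(U^{L+1}-\eta\chi_0\tildex^L_0)}}x^L_1$, a \Moment\ of program vectors, and the Master Theorem (\autoref{th:master-theorem}) gives the almost-sure limit $(iii)$; finiteness of the two expectations follows since the relevant $Z$'s are polynomially-bounded functions of Gaussians with finite covariance (cf.\ Lemma~\ref{th:Z0-dist-moments}). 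Running the same computation with $\xi=\xi_1$ shows $f_1(\xi_1)$ converges almost surely to a finite limit, so continuity of $\partial_2\ell(y_1,\cdot)$ makes $\chi_1$ a valid initial scalar. Finally, the backward-pass vectors $d\tildex^l_1,d\tildeh^l_1$ are shown to be in the program by a descending induction on $l$ using Lemma~\ref{th:backward-hpz-t}, exactly as in the IP-LLR case.

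I do not expect a genuine obstacle here: every ingredient is already in place. The one point requiring a word of care is that one must invoke Lemma~\ref{th:initial-W-vanish} (or its consequence that $Z^{\hatW^l\tildex^{l-1}_0}=\hatZ^{\hatW^l\tildex^{l-1}_0}$ via Lemma~\ref{th:dotZ-first-forward}) to handle the $\dotZ$ terms when checking that the coordinatewise convergence and the independence structure behave as claimed — but since $p\geq 2$ guarantees pseudo-Lipschitzness of $\sigma,\sigma'$, all \ZDot\ and \ZHat\ rules apply just as for IP-LLR, and no new difficulty arises. The upshot is that the proof is a transcription of the proof of Lemma~\ref{th:z-forward-ipllr-1} with ``\autoref{th:weights-ipllr-t}'' replaced by ``\autoref{th:weights-hpz-t}'' throughout, and with the observation that the dropped $\omega_l\hatW^l$ term is invisible in the limit because $\scalarlim{\omega}_l=0$ for integrable parameterizations.
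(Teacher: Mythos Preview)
Your proposal is correct and follows essentially the same approach as the paper's proof: an induction on $l$ using the HPZ weight formulas of \autoref{th:weights-hpz-t}, the validity of $\scalarlim{\chi}_0$ from Lemma~\ref{th:muP-forward-backward-0}, the \NonLin/\Moment\ rules to place each $h^l_1,x^l_1$ in the program, the Master Theorem for $f_1(\xi)$, continuity of $\partial_2\ell(y_1,\cdot)$ for $\chi_1$, and a descending induction via Lemma~\ref{th:backward-hpz-t} for the backward-pass vectors. One minor clarification: your aside about needing Lemma~\ref{th:initial-W-vanish} to handle $\dotZ$ terms is unnecessary here, since for HPZ the term $\omega_l\hatW^l$ is \emph{absent by construction} from $W^l_{\HPZ}(1)$ (not merely vanishing in the limit), so no \ZMatMul\ arises in the intermediate layers at $t=1$; also, pseudo-Lipschitzness of $\sigma,\sigma'$ comes directly from Assumption~\ref{ass:smooth-act}, not from any homogeneity hypothesis.
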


\begin{proof}
By \autoref{th:weights-hpz-t}, with $t=1$, one has that $h^1_1 = U^1 \xi + v^1 - \eta \chi_0 (\transpose{\xi}_0 \xi + 1) d\tildeh^1_0$. By Lemma~\ref{th:ipllr-forward-backward-0}, $d\tildeh^1_0$ is a vector in the Tensor Program (recall that the tilde variables at initialization do not depend on the choice of parameterization) and by Lemma~\ref{th:muP-forward-backward-0} $\chi_0$ is a valid initial scalar in the program  which has an almost sure limit $\scalarlim{\chi}_0 := \partial_2 \ell(y_0, 0)$ as $m \rightarrow \infty$ (see Remark~\ref{remark:muP-hpz-0}). In addition, $U^1 \xi$ and $v^1$ are initial vectors in the program, which thus shows that $h^1_1$ is a vector in the program by the \NonLin\ operation. This also gives that $x^1_1 = \sigma(h^1_1)$ is a vector in the program since $\sigma$ is pseudo-Lipschitz by assumption. Moreover, by \ZNonLin, we have $Z^{h^1_1} = Z^{U^1 \xi} + Z^{v^1} - \eta \scalarlim{\chi}_0 (\transpose{\xi}_0 \xi + 1) Z^{d\tildeh^1_0}$. 
Let $l \in [2, L]$ and assume that $h^{l-1}_1, x^{l-1}_1$ are vectors in the program. Then, by \autoref{th:weights-hpz-t} with $t=1$, we get
\begin{align*}
    h^l_1 = - \eta \chi_0 \frac{\transpose{{(\tildex^{l-1}_0)}} x^{l-1}_1}{m} d\tildeh^l_0.
\end{align*}
$\transpose{{(\tildex^{l-1}_0)}} x^{l-1}_1 / m$ is a scalar in the program by the \Moment\ operation, and thus by the \MatMul\ and \NonLin\ operations, $h^l_1$ is a vector in the program and thus so is $x^l_1 = \sigma(h^l_1)$, which proves by induction that this is the case for any $l \in [2,L]$. By \ZNonLin\ we thus have
\begin{align*}
    Z^{h^l_1} = - \eta \scalarlim{\chi}_0 \mathbb{E}[Z^{\tildex^{l-1}_0} Z^{x^{l-1}_1}] Z^{d\tildeh^l_0}.
\end{align*}
We then have by \autoref{th:weights-hpz-t} with $t=1$, 
\begin{align*}
    f_1(\xi) = m^{-1} \transpose{{(U^{L+1})}} x^L_1 - \eta \chi_0 \frac{\transpose{{(\tildex^L_0)}} x^L_1}{m}
\end{align*}
$U^{L+1} - \eta \chi_0 \tildex^L_0$ is a vector in the program by the \NonLin\ operation, and the quantity $m^{-1} \transpose{{(U^{L+1} - \eta \chi_0 \tildex^{L}_0)}} x^L_1$ is thus a scalar in the program by the \Moment\ operation, and by the master theorem, we get $f_1(\xi) \rightarrow \mathbb{E}[Z^{U^{L+1}} Z^{x^L_1}] - \eta \scalarlim{\chi}_0 \mathbb{E}[Z^{\tildex^L_0} Z^{x^L_1}]$ almost surely, since both expectations are finite by Lemma~\ref{th:Z0-dist-moments}. Since we did the previous reasoning with an arbitrary $\xi$, we also get that $h^l_1(\xi_1), x^l_1(\xi_1)$ are vectors in the program for any $l \in [1, L]$ and that the formulas in $(i)$, $(ii)$, and $(iii)$ hold when the input is $\xi_1$. In particular, $f_1(\xi_1)$ converges to a finite almost sure limit $\scalarlim{f_1}(\xi_1)$, and thus the continuity of $\partial_2 \ell(y_1, \cdot)$ ensures the almost sure convergence of $\chi_1$ towards $\scalarlim{\chi}_1 := \partial_2 \ell(y_1, \scalarlim{f_1}(\xi_1))$, which means $\chi_1$ is a valid initial scalar in the Tensor Program. Then, dropping the dependency of the second forward pass (at $t=1$) on $\xi_1$, we get by \autoref{th:backward-hpz-t} with $t=1$:
\begin{align*}
    d\tildex^L_1 = U^{L+1} - \eta \chi_0 \tildex^L_0
\end{align*}
which is a vector in the program by \NonLin. Then $d\tildeh^L_1 = d\tildex^L_1 \odot \sigma'(h^L_1)$ is also a vector in the program by \NonLin\ since $\sigma'$ is pseudo-Lipschitz. Let $l \in [2, L-1]$ and assume that $d\tildex^{l+1}_1$ and $d\tildeh^{l+1}_1$ are vectors in the program. Then by \autoref{th:backward-hpz-t} with $t=1$, we have
\begin{align*}
    d\tildex^{l}_1 =  - \eta \chi_0 \frac{\transpose{{(d\tildeh^{l+1}_0)}} d\tildeh^{l+1}_1}{m} \tildex^l_0
\end{align*}
$\transpose{{(d\tildeh^{l+1}_0)}} d\tildeh^{l+1}_1 / m$ is a scalar in the program by the \Moment\ operation and by \MatMul\ and \NonLin\ we thus get that $d\tildex^l_1$ is a vector in the program. Then $d\tildeh^l_1 = d\tildex^l_1 \odot \sigma'(h^l_1)$ is also a vector in the program since $\sigma'$ is pseudo-Lipschitz, which concludes the induction and with it the proof.
\end{proof}

\begin{lemma}[$Z$s of HPZ and IP-LLR are equal at $t=1$]\label{th:hpz-ipllr-1}
Consider the HPZ and IP-LLR parameterization with an activation function $\sigma$ satisfying Assumption~\ref{ass:smooth-homogeneous-act}, and no bias terms except at the first layer, and let us sub/super-script the variables of each models with HPZ and IP respectively. Let $\xi \in \mathbb{R}^d$ be an input to the networks, and assume that HPZ and IP-LLR share the same training samples $(\xi_0, y_0)$ and $(\xi_1, y_1)$ at $t=0$ and $t=1$, the same loss function $\ell$ satisfying Assumption~\ref{ass:loss}, and the same base learning rate $\eta$. Then dropping the dependency of the first forward and backward passes on $\xi_0$ and that of the second forward passes on $\xi$, we have:
\begin{enumerate}[(i)]
    \item $Z^{h^l_{1,\HPZ}} = Z^{h^l_{1,\IP}}$, \quad $Z^{x^l_{1,\HPZ}} = Z^{x^l_{1,\IP}}$, \qquad $l \in [1, L]$,
    
    \item $\lim_{m \rightarrow \infty} f_1^{\HPZ}(\xi) = \lim_{m \rightarrow \infty} f_1^{\IP}(\xi)$,
    
    \item $\scalarlim{\chi}^{\HPZ}_1 = \scalarlim{\chi}^{\IP}_1$,
    
    \item $Z^{d\tildex^l_{1,\HPZ}} = Z^{d\tildex^l_{1,\IP}}$, \quad $Z^{d\tildeh^l_{1,\HPZ}} = Z^{d\tildeh^l_{1,\IP}}$, \qquad $l \in [1, L]$.
\end{enumerate}
\end{lemma}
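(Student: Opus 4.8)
The plan is to avoid redoing any Tensor Program computation and instead to read off the conclusion from the two explicit characterizations of the $t=1$ variables already available (Lemma~\ref{th:z-forward-ipllr-1} for IP-LLR and Lemma~\ref{th:z-forward-hpz-1} for HPZ, together with the backward-pass formulas of Lemma~\ref{th:backward-ipllr-t}/Corollary~\ref{th:z-backward-ipllr-t-0} and Lemma~\ref{th:backward-hpz-t}), after checking that all quantities entering those formulas coincide for the two models. Three observations make this work: the scaleless variables $\tildeh^l_0,\tildex^l_0,d\tildeh^l_0,d\tildex^l_0$ of Definition~\ref{def:tilde-variables} are parameterization-independent (they are built only from the common initial Gaussians $U^l,v^l$); $\scalarlim{\chi}_0^{\HPZ}=\partial_2\ell(y_0,0)=\scalarlim{\chi}_0^{\IP}$ by Lemma~\ref{th:muP-forward-backward-0} and Lemma~\ref{th:ipllr-forward-backward-0}; and the only structural difference between $W^l(1)$ for $l\in[2,L]$ in the two models is the term $\omega_l\hatW^l$ present for IP-LLR (Theorem~\ref{th:weights-ipllr-t}) and absent for HPZ (Theorem~\ref{th:weights-hpz-t}). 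Since $\omega_l=m^{-1/2}$ for integrable parameterizations, $\scalarlim{\omega}_l=0$, so the contribution $\scalarlim{\omega}_l Z^{\hatW^l x^{l-1}_1}$ in Lemma~\ref{th:z-forward-ipllr-1}$(ii)$ vanishes identically, after which the IP-LLR and HPZ formulas become the same functional expression.

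Claim $(i)$ is then proved by a forward induction on $l$ from $1$ to $L$. For $l=1$, Lemma~\ref{th:z-forward-ipllr-1}$(i)$ and Lemma~\ref{th:z-forward-hpz-1}$(i)$ give the identical expression $Z^{U^1\xi}+Z^{v^1}-\eta\scalarlim{\chi}_0(\transpose{\xi_0}\xi+1)Z^{d\tildeh^1_0}$, hence $Z^{h^1_{1,\HPZ}}=Z^{h^1_{1,\IP}}$ and, applying $\sigma$, the same for $x^1_1$. For the step, both lemmas give (after the $\scalarlim{\omega}_l$ cancellation) $Z^{h^l_1}=-\eta\scalarlim{\chi}_0\,\mathbb{E}[Z^{\tildex^{l-1}_0}Z^{x^{l-1}_1}]\,Z^{d\tildeh^l_0}$, in which $Z^{d\tildeh^l_0}$ is parameterization-free, $\scalarlim{\chi}_0$ is common, and the scalar $\mathbb{E}[Z^{\tildex^{l-1}_0}Z^{x^{l-1}_1}]$ depends only on the joint law of $(Z^{\tildex^{l-1}_0},Z^{x^{l-1}_1})$, which is the same for both models by the induction hypothesis; hence $Z^{h^l_1}$ and $Z^{x^l_1}=\sigma(Z^{h^l_1})$ coincide. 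Claim $(ii)$ follows immediately, since the last lines of Lemma~\ref{th:z-forward-ipllr-1} and Lemma~\ref{th:z-forward-hpz-1} both give the a.s. limit $\mathbb{E}[Z^{U^{L+1}}Z^{x^L_1}]-\eta\scalarlim{\chi}_0\mathbb{E}[Z^{\tildex^L_0}Z^{x^L_1}]$ and all entering quantities have just been shown to agree. Claim $(iii)$ is obtained by specializing $(ii)$ to $\xi=\xi_1$ so that $\scalarlim{f_1}^{\HPZ}(\xi_1)=\scalarlim{f_1}^{\IP}(\xi_1)$, and using continuity of $\partial_2\ell(y_1,\cdot)$ to get $\scalarlim{\chi}_1^{\HPZ}=\partial_2\ell(y_1,\scalarlim{f_1}(\xi_1))=\scalarlim{\chi}_1^{\IP}$.

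For claim $(iv)$ I would run a backward induction on $l$ from $L$ to $1$, using Lemma~\ref{th:backward-ipllr-t} (equivalently Corollary~\ref{th:z-backward-ipllr-t-0}) with $t=1$ for IP-LLR — in which the sums $\sum_{s=1}^{t-1}$ are empty and the $\scalarlim{\omega}_l$-term vanishes — and Lemma~\ref{th:backward-hpz-t} with $t=1$ for HPZ. Both give $Z^{d\tildex^L_1}=Z^{U^{L+1}}-\eta\scalarlim{\chi}_0 Z^{\tildex^L_0}$, and since $Z^{h^L_1}$ agrees by $(i)$, so does $Z^{d\tildeh^L_1}=Z^{d\tildex^L_1}\sigma'(Z^{h^L_1})$. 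In the step from $l$ to $l-1$, both formulas read $Z^{d\tildex^{l-1}_1}=-\eta\scalarlim{\chi}_0\,\mathbb{E}[Z^{d\tildeh^l_0}Z^{d\tildeh^l_1}]\,Z^{\tildex^{l-1}_0}$; the factor $\mathbb{E}[Z^{d\tildeh^l_0}Z^{d\tildeh^l_1}]$ depends only on the joint law of $(Z^{d\tildeh^l_0},Z^{d\tildeh^l_1})$, which coincides by the downward induction hypothesis, so $Z^{d\tildex^{l-1}_1}$ and, multiplying by $\sigma'(Z^{h^{l-1}_1})$ (agreeing by $(i)$), $Z^{d\tildeh^{l-1}_1}$ are equal for both models. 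The main obstacle is not a conceptual one but a matter of care: one must ensure the propagation uses the full joint laws of the relevant $Z$-pairs (not just their marginals) when asserting that the expectations $\mathbb{E}[Z^{\tildex^{l-1}_0}Z^{x^{l-1}_1}]$ and $\mathbb{E}[Z^{d\tildeh^l_0}Z^{d\tildeh^l_1}]$ are the same scalars, and that the $\scalarlim{\omega}_l=0$ cancellation — which is precisely what renders the extra initial-weight contribution in $W^l(1)$ invisible in the limit — is legitimately justified by the \ZHat\ rule (a Gaussian of finite variance multiplied by $0$ is $0$), even though at finite width $x^{l-1}_{1,\IP}$ and $x^{l-1}_{1,\HPZ}$ are genuinely distinct vectors.
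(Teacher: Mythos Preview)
Your proposal is correct and follows essentially the same route as the paper's proof: forward induction on $l$ for $(i)$ via Lemmas~\ref{th:z-forward-ipllr-1} and~\ref{th:z-forward-hpz-1}, immediate deduction of $(ii)$--$(iii)$ from the output formula, and backward induction for $(iv)$ via Lemmas~\ref{th:backward-ipllr-t} and~\ref{th:backward-hpz-t}. The only small refinement is that the vanishing $\scalarlim{\omega}_l Z^{\hatW^l x^{l-1}_1}=0$ requires both the $\hatZ$ \emph{and} the $\dotZ$ part to be finite almost surely (not just the \ZHat\ rule, since $Z^{x^{l-1}_1}$ does depend on $\hatZ^{\transpose{(\hatW^l)}d\tildeh^l_0}$ through $Z^{d\tildeh^{l-1}_0}$); the paper handles this by invoking Corollary~\ref{th:z-forward-ipllr-t-0} rather than arguing it directly.
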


\begin{proof}
By Lemmas~\ref{th:ipllr-forward-backward-0} and ~\ref{th:ipllr-forward-backward-0} we have $\scalarlim{\chi}_0^{\IP} = \scalarlim{\chi}_0^{\muP} = \scalarlim{\chi}_0^{\HPZ} = \partial_2 \ell(y_0, 0)$, which we simply call $\scalarlim{\chi}_0$ in the remainder of this proof for simplicity. By Corollary~\ref{th:z-forward-ipllr-1} and Lemma~\ref{th:z-forward-hpz-1} we have
\begin{align*}
    Z^{h^1_{1, \IP}} = Z^{U^1 \xi} + Z^{v^1} - \eta \scalarlim{\chi}_0 (\transpose{{\xi_0}} \xi + 1) Z^{d\tildeh^1_0},
\end{align*}
and 
\begin{align*}
    Z^{h^1_{1, \HPZ}} = Z^{U^1 \xi} + Z^{v^1} - \eta \scalarlim{\chi}_0 (\transpose{{\xi_0}} \xi + 1) Z^{d\tildeh^1_0},
\end{align*}
and since the tilde variables are computed independently of any parameterization, we have $Z^{h^1_{1, \HPZ}} = Z^{h^1_{1, \IP}}$. Because IP and HPZ share the same activation function we also get $Z^{x^1_{1, \HPZ}} = Z^{x^1_{1, \IP}}$. Now let $l \in [2, L]$ and assume $Z^{h^{l-1}_{1, \HPZ}} = Z^{h^{l-1}_{1, \IP}}$ as well as $Z^{x^{l-1}_{1, \HPZ}} = Z^{x^{l-1}_{1, \IP}}$. By Corollary~\ref{th:z-forward-ipllr-1} and Lemma~\ref{th:z-forward-hpz-1} we have
\begin{align*}
    Z^{h^l_{1, \IP}} = -\eta \scalarlim{\chi}_0 \mathbb{E}[Z^{\tildex^{l-1}_0} Z^{x^{l-1}_{1, \IP}}] Z^{d\tildeh^l_0},
\end{align*}
and 
\begin{align*}
    Z^{h^l_{1, \HPZ}} = -\eta \scalarlim{\chi}_0 \mathbb{E}[Z^{\tildex^{l-1}_0} Z^{x^{l-1}_{1, \HPZ}}] Z^{d\tildeh^l_0},
\end{align*}
which shows $Z^{h^l_{1, \IP}} = Z^{h^l_{1, \HPZ}}$ since the tilde variables are independent of any choice of parameterization. Since the activation function $\sigma$ is the same for both models we also get $Z^{x^l_{1, \IP}} = Z^{x^l_{1, \HPZ}}$ which concludes the induction. For the output of the networks, we have by Corollary~\ref{th:z-forward-ipllr-1} and Lemma~\ref{th:z-forward-hpz-1}
\begin{align*}
    f_1^{\IP}(\xi) \xrightarrow[m \rightarrow \infty]{a.s.} \mathbb{E}[Z^{U^{L+1}} Z^{x^L_{1, \IP}}] - \eta \scalarlim{\chi}_0 \mathbb{E}[Z^{\tildex^L_0} Z^{x^L_{1, \IP}}],
\end{align*}
and 
\begin{align*}
    f_1^{\HPZ}(\xi) \xrightarrow[m \rightarrow \infty]{a.s.} \mathbb{E}[Z^{U^{L+1}} Z^{x^L_{HPZ}}] - \eta \scalarlim{\chi}_0 \mathbb{E}[Z^{\tildex^L_0} Z^{x^L_{1, \HPZ}}],
\end{align*}
and since $Z^{x^L_{1, \IP}} = Z^{x^L_{1, \HPZ}}$ by the previous induction and the tilde variables are independent of the parameterization, we get $\lim_{m \rightarrow \infty} f_1^{\HPZ}(\xi) = \lim_{m \rightarrow \infty} f_1^{\IP}(\xi) =: \scalarlim{f_1}(\xi)$. Since $\chi_1^\IP = \partial_2(y_1, f_1^\IP(\xi))$ and $\chi_1^\HPZ = \partial_2(y_1, f_1^\HPZ(\xi))$, by continuity of $\partial_2 \ell(y_1, \cdot)$ we get that $\scalarlim{\chi}_1^{\HPZ} = \partial_2 \ell(y_1, \scalarlim{f_1}(\xi)) = \scalarlim{\chi}_1^{\IP}$.
\\ \\
For the backward pass, we have by Lemma~\ref{th:backward-hpz-t} that $d\tildex^L_{1, \HPZ} = U^{L+1} - \eta \chi_{0, \HPZ} \tildex^L_0$ which gives by \NonLin\ $Z^{d\tildex^L_{1, \HPZ}} = Z{U^{L+1}} - \eta \scalarlim{\chi}_0 Z^{\tildex^L_0}$ which is also equal to $Z^{d\tildex^L_{1, \IP}}$ by Lemma~\ref{th:backward-ipllr-t} since the tilde variables are independent of the choice of parameterization. Then, we also get $Z^{d\tildeh^L_{1, \HPZ}} = Z^{d\tildex^L_{1, \HPZ}} \sigma'(Z^{h^L_{1, \HPZ}})$ and $Z^{d\tildeh^L_{1, \IP}} = Z^{d\tildex^L_{1, \HPZ}} \sigma'(Z^{h^L_{1, \IP}})$ which shows $Z^{d\tildeh^L_{1, \HPZ}} = Z^{d\tildeh^L_{1, \IP}}$. Let $l \in [1, L-1]$ and assume $Z^{d\tildex^{l+1}_{1, \HPZ}} = Z^{d\tildex^{l+1}_{1, \IP}}$ as well as $Z^{d\tildeh^{l+1}_{1, \HPZ}} = Z^{d\tildeh^{l+1}_{1, \IP}}$. By Lemma~\ref{th:backward-hpz-t}, we have
\begin{align*}
    d\tildex^l_{1, \HPZ} = -\eta \chi_0 \frac{\transpose{{(d\tildeh^{l+1}_0)}} d\tildeh^{l+1}_{1, \HPZ}}{m} \tildex^l_{0}
\end{align*}
which gives by the master theorem and the \ZNonLin
\begin{align*}
    Z^{d\tildex^l_{1, \HPZ}} = -\eta \scalarlim{\chi}_0 \mathbb{E}[Z^{d\tildeh^{l+1}_0} Z^{d\tildeh^{l+1}_{1, \HPZ}}] Z^{\tildex^l_{0}}
\end{align*}
which is the same expression as $Z^{d\tildex^l_{1, \IP}}$ by Lemma~\ref{th:backward-ipllr-t}. It then  follows that $Z^{d\tildeh^l_{1, \HPZ}} =  Z^{d\tildeh^l_{1, \IP}}$, which concludes the induction and with it the proof.
\end{proof}

\begin{theorem}[Z for the forward pass of HPZ at time $t$]\label{th:z-forward-hpz-t}
Consider the HPZ parameterization with an activation function $\sigma$ satisfying Assumption~\ref{ass:smooth-act} and no bias terms except at the first layer. Let $\xi \in \mathbb{R}^d$ be an input to the network. Then, for any $l \in [1, L]$, $h^l_s(\xi), x^l_s(\xi), d\tildex^l_s, d\tildeh^l_s$ are vectors in the program, $f_s(\xi)$ is a scalar in the program, and $\chi_s$ is a valid initial scalar in the program. Additionally, dropping the dependency of the forward pass at time $t$ on $\xi$, and of the previous forward and backward passes on the corresponding $\xi_s$, one has:
\begin{enumerate}[(i)]
    \item $Z^{h^1_t} = Z^{W^1(t) \xi + B^1(t)} = Z^{U^1\xi} + Z^{v^1} - \eta \scalarlim{\chi}_0 (\transpose{\xi_0} \xi + 1) Z^{d\tildeh^1_0} - \eta \left(\sum_{s=1}^{t-1} \scalarlim{\chi}_s (\transpose{\xi_s} \xi + 1) Z^{d\tildeh^1_s}  \right)$,
    
    \item $Z^{h^l_t} = Z^{W^l(t)x^{l-1}_t} = - \eta \scalarlim{\chi}_0 \mathbb{E}[Z^{\tildex^{l-1}_0} Z^{x^{l-1}_t}]  Z^{d\tildeh^l_0}  - \eta \left(\sum_{s=1}^{t-1} \scalarlim{\chi}_s \mathbb{E}[Z^{x^{l-1}_s} Z^{x^{l-1}_t}]  Z^{d\tildeh^l_s } \right), \quad l \in [2, L],$

    \item $f_t(\xi) = \transpose{{(W^{L+1}(t))}} x^L_t \xrightarrow[m \rightarrow \infty]{a.s.} \mathbb{E}[Z^{U^{L+1}} Z^{x^L_t}] - \eta \scalarlim{\chi}_0 \mathbb{E}[Z^{\tildex^L_0} Z^{x^L_t}] -
    \eta \left(\sum_{s=1}^{t-1} \scalarlim{\chi}_s \mathbb{E}[Z^{x^L_s} Z^{x^L_t}]  \right)$.
\end{enumerate}
\end{theorem}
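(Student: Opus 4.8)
The plan is to prove \autoref{th:z-forward-hpz-t} by induction on $t\geq 1$, following exactly the scheme of the proof of \autoref{th:z-forward-ipllr-t} for IP-LLR, and exploiting that for $t\geq 1$ the HPZ parameterization has the same learning-rate exponents as \muP\ while the contribution of the initial weight matrices of the intermediate layers has been removed from all updates. The base case $t=1$ is exactly \autoref{th:z-forward-hpz-1}, which in addition certifies that $\chi_1$ is a valid initial scalar in the program and that the backward-pass vectors $d\tildex^l_1,d\tildeh^l_1$ belong to the program. The induction hypothesis to carry is that, for every $s\in[1,t]$, the forward vectors $h^l_s(\xi),x^l_s(\xi)$, the backward vectors $d\tildex^l_s,d\tildeh^l_s$, the output $f_s(\xi)$ and the scalars $\chi_s$ are all part of the program with the asserted limits, and moreover (as in the IP-LLR proof) that every $Z$ appearing is a polynomially bounded function of a fixed Gaussian vector $Z_0$ with finite covariance.

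For the induction step, I would use \autoref{th:weights-hpz-t} to write $h^1_{t+1}=W^1(t+1)\xi+B^1(t+1)$, $h^l_{t+1}=W^l(t+1)x^{l-1}_{t+1}$ for $l\in[2,L]$, and $f_{t+1}(\xi)=\transpose{(W^{L+1}(t+1))}x^L_{t+1}$ purely in terms of the initial vectors ($U^1\xi$, $v^1$, $U^{L+1}$), the tilde vectors at initialization, the already-produced vectors $x^{l-1}_s,d\tildeh^l_s$ and the already-produced scalars $\chi_s$; note that, unlike for IP-LLR, there is no term $\scalarlim{\omega}_l Z^{\hatW^l x^{l-1}_{t+1}}$ because HPZ zeroes out $W^l(0)$ in the updates, so no appeal to a separate vanishing result is needed. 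I would then invoke \NonLin, \MatMul\ and \Moment\ layer by layer (from $l=1$ to $l=L$, then the output) to certify these are again program quantities, and the Master Theorem (\autoref{th:master-theorem}) to obtain almost-sure convergence of the inner products $\transpose{(x^{l-1}_s)}x^{l-1}_{t+1}/m$ and of $f_{t+1}(\xi)$; finiteness of all expectations that occur follows from \autoref{th:Z0-dist-moments} together with the $Z_0$-polynomial-boundedness part of the induction hypothesis. Specializing $\xi=\xi_{t+1}$ yields almost-sure convergence of $f_{t+1}(\xi_{t+1})$, hence by Assumption~\ref{ass:loss} and continuity of $\partial_2\ell(y_{t+1},\cdot)$ the almost-sure convergence of $\chi_{t+1}$ to $\scalarlim{\chi}_{t+1}:=\partial_2\ell(y_{t+1},\scalarlim{f}_{t+1}(\xi_{t+1}))$, so $\chi_{t+1}$ is a valid initial scalar. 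Finally, using \autoref{th:backward-hpz-t} I would run the backward induction from $l=L$ down to $l=1$, certifying via \Moment, \MatMul\ and \NonLin\ (and pseudo-Lipschitzness of $\sigma,\sigma'$) that $d\tildex^l_{t+1}=m\,dx^l_{t+1}$ and $d\tildeh^l_{t+1}$ are program vectors; the three displayed formulas then come from applying the \ZNonLin\ rule to the expressions for $h^1_{t+1}$, $h^l_{t+1}$ and $f_{t+1}$.

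The main obstacle is not a single hard estimate but keeping the induction self-sustaining: one must simultaneously track that all new vectors are added to the program in a legitimate order (each produced only from previously generated vectors), that every $Z$ remains a polynomially bounded function of the common Gaussian $Z_0$ so that the Master Theorem and the moment lemma apply uniformly, and that the $\chi_s$ have almost-sure limits so they qualify as initial scalars. This is precisely the long but routine bookkeeping already performed for IP-LLR; because HPZ for $t\geq 1$ uses the \muP\ exponents and the intermediate initial weights are zeroed out, the present argument is in fact slightly lighter than its IP-LLR counterpart.
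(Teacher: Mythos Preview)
Your proposal is correct and matches the paper's own proof essentially verbatim: the paper simply states that the argument is identical to that of \autoref{th:z-forward-ipllr-t}, with the only difference that every multiplication by $W^l(0)$ for $l\in[2,L]$ is replaced by $0$, and then proceeds by induction on $t$ using the Master Theorem and the \ZNonLin\ rule. Your observation that this makes the HPZ case strictly lighter than IP-LLR (no need to invoke a separate vanishing result for $\scalarlim{\omega}_l Z^{\hatW^l x^{l-1}_{t+1}}$) is exactly the point.
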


\begin{proof}
The proof is exactly the same as for \autoref{th:z-forward-ipllr-t} except that whenever a multiplication by $W^l(0)$ appears  with $l \in [2, L]$, it is now replaced by $0$, but the reasoning and all the arguments are the same, which in summary uses an induction over $t$ as well as the master theorem and the \ZNonLin\ rule from the Tensor Program.
\end{proof}

\begin{theorem}[$Z$s of backward pass of HPZ at time $t$]\label{th:z-backward-hpz-t}
Consider the HPZ parameterization with an activation function $\sigma$ satisfying Assumption~\ref{ass:smooth-act} and no bias terms except at the first layer. Then, for any $t \geq 1$, dropping the dependency of the forward pass at time $t$ on $\xi_t$, and of the previous forward and backward passes on the corresponding $\xi_s$, one has:
\begin{enumerate}[(i)]
    \item $Z^{d\tildex^L_t} = Z^{w^{L+1}(t)} = Z^{U^{L+1}} - \eta \scalarlim{\chi}_0 Z^{\tildex^L_0} - \eta \sum_{s=1}^{t-1} \scalarlim{\chi}_s Z^{x^L_s}$,
    \item $Z^{d\tildex^{l-1}_t} = - \eta \scalarlim{\chi}_0 \mathbb{E}[Z^{d\tildeh^l_0} Z^{d\tildeh^l_t}] Z^{\tildex^{l-1}_0} -\eta \sum_{s=1}^{t-1} \scalarlim{\chi}_s \mathbb{E}[Z^{d\tildeh^l_s} Z^{d\tildeh^l_t}] Z^{x^{l-1}_s}$, \quad $l \in [2, L]$.
\end{enumerate}
\end{theorem}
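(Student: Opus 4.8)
The plan is to obtain both identities by reading them off, via the \ZNonLin\ rule, from the explicit expressions for the backward-pass tilde variables already recorded in Lemma~\ref{th:backward-hpz-t}. The groundwork is entirely contained in \autoref{th:z-forward-hpz-t}: by that result, for every $s \in [1,t]$ and every $l \in [1,L]$ the vectors $h^l_s$, $x^l_s$, $d\tildex^l_s$, $d\tildeh^l_s$ are vectors in the Tensor Program, $f_s$ is a scalar in the program, and each $\chi_s$ converges almost surely to $\scalarlim{\chi}_s$, hence is a valid initial scalar; likewise $\tildex^l_0$ and $d\tildeh^l_0$ are program vectors and $\chi_0 \to \scalarlim{\chi}_0$ by Lemma~\ref{th:muP-forward-backward-0}. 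A structural remark that makes this proof shorter than its IP-LLR analogue (\autoref{th:z-backward-ipllr-t}) is worth keeping in mind: in HPZ the intermediate-layer matrices carry no initial Gaussian part, so $W^l(t) = \sum_{s \le t-1} \Delta W^l(s+1)$ is a sum of rank-one updates for $l \in [2,L]$, and multiplication by $\transpose{(W^l(t))}$ therefore only ever produces \Moment-type inner products followed by \NonLin, never a \MatMul; consequently no $\hatZ$/$\dotZ$ splitting and no auxiliary vanishing lemma are needed here.

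For claim $(i)$ I would start from the formula $d\tildex^L_t = w^{L+1}(t) = U^{L+1} - \eta \chi_0 \tildex^L_0 - \eta \sum_{s=1}^{t-1} \chi_s x^L_s$ of Lemma~\ref{th:backward-hpz-t}. This exhibits $d\tildex^L_t$ as a \NonLin\ of the program vectors $U^{L+1}, \tildex^L_0, x^L_1, \dots, x^L_{t-1}$ with coefficients the valid initial scalars $\chi_0, \dots, \chi_{t-1}$, so \ZNonLin\ gives directly $Z^{d\tildex^L_t} = Z^{U^{L+1}} - \eta \scalarlim{\chi}_0 Z^{\tildex^L_0} - \eta \sum_{s=1}^{t-1} \scalarlim{\chi}_s Z^{x^L_s}$. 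For claim $(ii)$ with $l \in [2,L]$ I would likewise start from $d\tildex^{l-1}_t = - \eta \chi_0 \tfrac{\transpose{(d\tildeh^l_0)} d\tildeh^l_t}{m} \tildex^{l-1}_0 - \eta \sum_{s=1}^{t-1} \chi_s \tfrac{\transpose{(d\tildeh^l_s)} d\tildeh^l_t}{m} x^{l-1}_s$ from Lemma~\ref{th:backward-hpz-t}. Here each coefficient $\transpose{(d\tildeh^l_s)} d\tildeh^l_t / m$ (and $\transpose{(d\tildeh^l_0)} d\tildeh^l_t / m$) is a \Moment-scalar in the program, which by the Master Theorem~\ref{th:master-theorem} converges almost surely to $\mathbb{E}[Z^{d\tildeh^l_s} Z^{d\tildeh^l_t}]$ (resp.\ $\mathbb{E}[Z^{d\tildeh^l_0} Z^{d\tildeh^l_t}]$), a finite limit for the same reason as in the forward-pass argument of \autoref{th:z-forward-hpz-t}: every $Z$ appearing is a polynomially bounded function of a jointly Gaussian vector with finite covariance. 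Thus $d\tildex^{l-1}_t$ is again a \NonLin\ of program vectors with valid scalar coefficients, and \ZNonLin\ yields $Z^{d\tildex^{l-1}_t} = - \eta \scalarlim{\chi}_0 \mathbb{E}[Z^{d\tildeh^l_0} Z^{d\tildeh^l_t}] Z^{\tildex^{l-1}_0} - \eta \sum_{s=1}^{t-1} \scalarlim{\chi}_s \mathbb{E}[Z^{d\tildeh^l_s} Z^{d\tildeh^l_t}] Z^{x^{l-1}_s}$, which is $(ii)$.

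I do not expect a genuine obstacle in this particular step: the real work---the induction over $t$ showing all forward- and backward-pass vectors belong to the program and that the $\chi_s$ converge---has already been carried out in \autoref{th:z-forward-hpz-t} (itself proved by transcribing the IP-LLR argument of \autoref{th:z-forward-ipllr-t} with each intermediate-layer $W^l(0)$ multiplication replaced by zero). The only spot demanding a little care is confirming finiteness of the limiting inner-product coefficients $\mathbb{E}[Z^{d\tildeh^l_s} Z^{d\tildeh^l_t}]$, which is dispatched exactly as in the forward pass via polynomial boundedness of the relevant $Z$'s.
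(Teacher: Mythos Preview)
Your proposal is correct and follows essentially the same approach as the paper: invoke \autoref{th:z-forward-hpz-t} to ensure all the relevant vectors and scalars are already in the program with the required almost-sure limits, then read off $(i)$ and $(ii)$ by applying \ZNonLin\ to the explicit formulas of Lemma~\ref{th:backward-hpz-t}. Your added structural remark---that in HPZ the intermediate-layer $W^l(t)$ are pure sums of rank-one updates, so no \MatMul\ and hence no $\hatZ/\dotZ$ splitting is ever triggered---is a nice clarification of \emph{why} this step is easier than its IP-LLR counterpart, and goes slightly beyond the paper's own terse one-line proof.
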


\begin{proof}
As for \autoref{th:z-forward-hpz-t}, the proof follows exactly the same pattern as for \autoref{th:z-backward-hpz-t} except that whenever a multiplication by $W^l(0)$ appears  with $l \in [2, L]$, it is now replaced by $0$.
\end{proof}

\subsubsection{Induction on $t$}

\begin{lemma}[Induction step on the $Z$s of the forward pass]\label{th:hpz-ipllr-induct-z-forward}
Consider the IP-LLR and HPZ parameterizations with an activation function $\sigma$ satisfying Assumption~\ref{ass:smooth-homogeneous-act} and no bias terms except at the first layer, and let us sub/super-script the variables of each models with IP and HP respectively. Let $s \geq 1$, $\xi \in \mathbb{R}^d$ be an input to the networks, and assume that the training routine (see Definition~\ref{def:training-routine}) is the same for both models with a loss satisfying Assumption~\ref{ass:loss}. Assume further that, dropping the dependency of the forward and backward passes at time $t=r$ on $\xi_r$, for all $r \in [1, s]$, we have:
\begin{enumerate}[(i)]
    \item $Z^{h^l_{\HPZ, r}} = Z^{h^l_{\IP, r}},  \quad Z^{x^l_{\HPZ, r}} = Z^{x^l_{\IP, r}}, \qquad l \in [1, L]$,
    
    \item $\lim_{m \rightarrow \infty} f_r^{\HPZ}(\xi) = \lim_{m \rightarrow \infty} f_r^{\IP}(\xi)$,
    
    \item $\scalarlim{\chi}^{\HPZ}_r = \scalarlim{\chi}^{\IP}_r$,
    
    \item $ Z^{d\tildeh^l_{\HPZ, r}} = Z^{d\tildeh^l_{\IP, r}},  \quad Z^{d\tildex^l_{\HPZ, r}} = Z^{d\tildex^l_{\IP, r}}, \qquad l \in [1, L]$.
\end{enumerate}
Then, dropping the dependency of the forward pass at time $t=s+1$ on $\xi$, one has:
\begin{enumerate}[(i)]
    \setcounter{enumi}{4}
    \item $Z^{h^l_{\HPZ, s+1}} = Z^{h^l_{\IP, s+1}},  \quad Z^{x^l_{\HPZ, s+1}} = Z^{x^l_{\IP, s+1}}, \qquad l \in [1, L]$,
    
    \item $\lim_{m \rightarrow \infty} f_{s+1}^{\HPZ}(\xi) = \lim_{m \rightarrow \infty} f_{s+1}^{\IP}(\xi)$,
    
    \item $\scalarlim{\chi}^{\HPZ}_{s+1} = \scalarlim{\chi}^{\IP}_{s+1}$,
    
    \item $ Z^{d\tildeh^l_{\HPZ, r}} = Z^{d\tildeh^l_{\IP, s+1}},  \quad Z^{d\tildex^l_{\HPZ, s+1}} = Z^{d\tildex^l_{\IP, s+1}}, \qquad l \in [1, L]$.
\end{enumerate}
\end{lemma}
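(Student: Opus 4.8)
The plan is to prove the conclusions (v)--(viii) by running the explicit $Z$-recursions for the forward and backward passes of the two parameterizations side by side at time $s+1$ and checking, term by term, that they coincide. The key structural observation is that for $t\ge 1$ the $Z$-recursions of IP-LLR and HPZ are \emph{literally the same}: for HPZ the contribution of the initial intermediate-layer weights $W^l(0)$ is absent by construction (\autoref{th:z-forward-hpz-t}, \autoref{th:z-backward-hpz-t}), and for IP-LLR the corresponding term $\scalarlim{\omega}_l\,Z^{\hatW^l x^{l-1}_t}$ vanishes by \autoref{th:ipllr-initial-weight-vanish-t-geq-1}, which is exactly what yields the zeroed-out formulas of \autoref{th:z-forward-ipllr-t-0} and \autoref{th:z-backward-ipllr-t-0} (there is no circularity here, cf.\ Remark~\ref{remark:no-circ-logic-vanish}). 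The building blocks shared by the two recursions are also parameterization-independent: the initial vectors $U^1\xi,v^1,U^{L+1}$, the tilde variables $\tildeh^l_0,\tildex^l_0,d\tildeh^l_0,d\tildex^l_0$ at $t=0$ (remark after Definition~\ref{def:tilde-variables}), and the scalar $\scalarlim{\chi}_0=\partial_2\ell(y_0,0)$ common to both by Lemma~\ref{th:ipllr-forward-backward-0} and Lemma~\ref{th:muP-forward-backward-0}. Throughout, the fact that every $Z$ that appears is a polynomially bounded function of the Gaussian vector $Z_0$ (Lemma~\ref{th:Z0-dist-moments}) guarantees finiteness of all the expectations below and legitimizes applying the master theorem (Theorem~\ref{th:master-theorem}).

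First I would establish (v) and (vi) by an inner induction on $l$ from $1$ to $L$ in the forward pass at time $s+1$, using \autoref{th:z-forward-ipllr-t-0} for IP-LLR and \autoref{th:z-forward-hpz-t} for HPZ (both theorems also guarantee that $h^l_{s+1},x^l_{s+1}$ are vectors in the program, so the $\ZNonLin$ rule and the master theorem apply). For $l=1$, $Z^{h^1_{s+1}}$ is a fixed affine combination of $Z^{U^1\xi},Z^{v^1}$ and the terms $\scalarlim{\chi}_r\,(\transpose{\xi_r}\xi+1)\,Z^{d\tildeh^1_r}$ for $r\in[0,s]$; the scalars $\scalarlim{\chi}_r$ agree by hypothesis (iii) (and by the common value of $\scalarlim{\chi}_0$ for $r=0$), and $Z^{d\tildeh^1_r}$ agrees by hypothesis (iv) (and by parameterization-independence of the tilde variables for $r=0$). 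Hence $Z^{h^1_{s+1}}$ and $Z^{x^1_{s+1}}=\sigma(Z^{h^1_{s+1}})$ agree. For the step, $Z^{h^l_{s+1}}$ is assembled from the terms $\scalarlim{\chi}_r\,\mathbb{E}[Z^{x^{l-1}_r}Z^{x^{l-1}_{s+1}}]\,Z^{d\tildeh^l_r}$ (with the $r=0$ term using $\mathbb{E}[Z^{\tildex^{l-1}_0}Z^{x^{l-1}_{s+1}}]$): the past-time activations $Z^{x^{l-1}_r}$, $r\le s$, agree by hypothesis (i), the present-time activation $Z^{x^{l-1}_{s+1}}$ agrees by the $l$-induction hypothesis, and $Z^{d\tildeh^l_r}$ agrees by hypothesis (iv); so the identity propagates to $Z^{h^l_{s+1}}$ and to $Z^{x^l_{s+1}}$, proving (v). Applying the same bookkeeping to the output formula of \autoref{th:z-forward-ipllr-t-0} / \autoref{th:z-forward-hpz-t}, now using the just-proven agreement of $Z^{x^L_{s+1}}$, gives $\lim_m f^{\HPZ}_{s+1}(\xi)=\lim_m f^{\IP}_{s+1}(\xi)$, i.e.\ (vi). Taking $\xi=\xi_{s+1}$ and using continuity of $\partial_2\ell(y_{s+1},\cdot)$ yields (vii).

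With (v)--(vii) in hand I would run the analogous inner induction on $l$ from $L$ down to $1$ for the backward pass at time $s+1$, using \autoref{th:z-backward-ipllr-t-0} and \autoref{th:z-backward-hpz-t}. For $l=L$, $Z^{d\tildex^L_{s+1}}=Z^{U^{L+1}}-\eta\scalarlim{\chi}_0 Z^{\tildex^L_0}-\eta\sum_{r=1}^{s}\scalarlim{\chi}_r Z^{x^L_r}$ agrees (the past activations $Z^{x^L_r}$, $r\le s$, by hypothesis (i), the scalars as above), and then $Z^{d\tildeh^L_{s+1}}=Z^{d\tildex^L_{s+1}}\,\sigma'(Z^{h^L_{s+1}})$ agrees because $Z^{h^L_{s+1}}$ does by (v). The step is the mirror of the forward case: $Z^{d\tildex^{l-1}_{s+1}}$ is built from $\scalarlim{\chi}_r\,\mathbb{E}[Z^{d\tildeh^l_r}Z^{d\tildeh^l_{s+1}}]\,Z^{x^{l-1}_r}$, with $Z^{d\tildeh^l_r}$ ($r\le s$) agreeing by hypothesis (iv), $Z^{d\tildeh^l_{s+1}}$ by the $l$-induction hypothesis just obtained, and $Z^{x^{l-1}_r}$ by hypothesis (i); multiplying by $\sigma'(Z^{h^{l-1}_{s+1}})$ (equal by (v)) yields $Z^{d\tildeh^{l-1}_{s+1}}$. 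This proves (viii) and closes the lemma.

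I do not expect a deep obstacle: the content is organizational rather than analytic, since all the hard analytic work is already packaged in \autoref{th:z-forward-ipllr-t-0}, \autoref{th:z-backward-ipllr-t-0}, \autoref{th:z-forward-hpz-t}, \autoref{th:z-backward-hpz-t} and \autoref{th:ipllr-initial-weight-vanish-t-geq-1}. The step needing the most care is the correct ordering and interleaving of the three nested inductions — the outer induction on $s$ supplied by the hypotheses, the inner $l$-induction in the forward pass, and the inner $l$-induction in the backward pass — because in every step an inner product like $\mathbb{E}[Z^{x^{l-1}_r}Z^{x^{l-1}_{s+1}}]$ couples a factor at time $r\le s$ (controlled by the outer hypothesis) with a factor at time $s+1$ and layer $l-1$ (controlled by the inner hypothesis). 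Consequently one must complete the entire forward pass (hence (v)--(vii)) before treating the backward pass, and process the layers in the order dictated by each $Z$-recursion.
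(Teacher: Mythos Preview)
Your proposal is correct and follows essentially the same approach as the paper: both run the explicit $Z$-recursions for the forward pass (inner induction on $l$ from $1$ to $L$), then the output and $\scalarlim{\chi}_{s+1}$, then the backward pass (inner induction on $l$ from $L$ to $1$), matching terms via hypotheses (i)--(iv). Your write-up is in fact slightly cleaner than the paper's in that you explicitly invoke the zeroed-out corollaries (\autoref{th:z-forward-ipllr-t-0}, \autoref{th:z-backward-ipllr-t-0}) for IP-LLR and note the absence of circularity, whereas the paper cites the unzeroed theorems \autoref{th:z-forward-ipllr-t}, \autoref{th:z-backward-ipllr-t} and leaves the vanishing of $\scalarlim{\omega}_l Z^{\hatW^l x^{l-1}_t}$ implicit.
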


\begin{proof}
Since by assumption, for any $r \in [1, s]$, the $Z$s of the forward and backward passes are equal for both parameterizations, we drop the dependency of those quantities on the model, and for $z \in \{h^l_r, x^l_r, d\tildeh^l_r, d\tildex^l_r \}$, we simply call $Z^{z_\HPZ} = Z^{z_\IP} = Z^{z}$. Similarly we simply call $\scalarlim{\chi}_r^\HPZ = \scalarlim{\chi}_r^\IP = \scalarlim{\chi}_r$. We have ~\ref{th:z-forward-ipllr-t}
\begin{align*}
    Z^{h^1_{\HPZ, s+1}} &= Z^{U^1 \xi_{s+1}} - \eta \scalarlim{\chi}_0 (\transpose{\xi_0} \xi_{s+1} + 1) Z^{d\tildeh^1_0} - \eta \sum_{r=1}^s \scalarlim{\chi}_r (\transpose{\xi_r} \xi_{s+1} +1) Z^{d\tildeh^1_r} \\
    &= Z^{h^1_{\IP, s+1}}
\end{align*}
where the first equality stems from \autoref{th:z-forward-hpz-t} and the second one from \autoref{th:z-forward-ipllr-t}. Since both parameterizations use the same linearity $\sigma$, we  get $Z^{x^1_{\HPZ, s+1}} = \sigma(Z^{h^1_{\HPZ, s+1}}) = \sigma(Z^{h^1_{\IP, s+1}}) = Z^{x^1_{\IP, s+1}}$. 
\\\\
Let $l \in [2, L]$ and assume $Z^{h^{l-1}_{\HPZ, s+1}} = Z^{h^{l-1}_{\IP, s+1}}$, $Z^{x^{l-1}_{\HPZ, s+1}} = Z^{x^{l-1}_{\IP, s+1}}$. By \autoref{th:z-forward-hpz-t}, we have 
\begin{align*}
    Z^{h^{l}_{\HPZ, s+1}} &= - \eta \scalarlim{\chi}_0 \mathbb{E}[Z^{\tildex^{l-1}_0} Z^{x^{l-1}_{\HPZ, s+1}}] Z^{d\tildeh^{l}_0} - \eta \sum_{r=1}^s \scalarlim{\chi}_r \mathbb{E}[Z^{x^{l-1}_r} Z^{x^{l-1}_{\HPZ, s+1}}] Z^{d\tildeh^{l}_r} \\
    &= - \eta \scalarlim{\chi}_0 \mathbb{E}[Z^{\tildex^{l-1}_0} Z^{x^{l-1}_{\IP, s+1}}] Z^{d\tildeh^{l}_0} - \eta \sum_{r=1}^s \scalarlim{\chi}_r \mathbb{E}[Z^{x^{l-1}_r} Z^{x^{l-1}_{\IP, s+1}}] Z^{d\tildeh^{l}_r} \\
    &= Z^{h^{l}_{\HPZ, s+1}}
\end{align*}
where the last equality stems from \autoref{th:z-forward-ipllr-t}. Since both parameterizations used the same non-linearity $\sigma$, we  get $Z^{x^{l+1}_{\HPZ, s+1}} = Z^{x^{l+1}_{\IP, s+1}}$.
\\\\
By induction, we thus get that for any $l \in [1, L]$, $Z^{h^l_{\HPZ, s+1}} = Z^{h^l_{\IP, s+1}}$, and $Z^{x^l_{\HPZ, s+1}} = Z^{x^l_{\IP, s+1}}$, which proves $(v)$. We can thus drop the dependency of $h^l_{s+1}$ and $x^l_{s+1}$ on the model HPZ or IP. Now, we thus have by \autoref{th:z-forward-hpz-t}
\begin{align*}
    \lim_{m \rightarrow \infty} f_{s+1}^{\HPZ}(\xi) &= \mathbb{E}[Z^{U^{L+1}} Z^{x^L_{s+1}}] - \eta \scalarlim{\chi}_0 \mathbb{E}[Z^{\tildex^L_0} Z^{x^L_{s+1}}] -
    \eta \left(\sum_{r=1}^{s} \scalarlim{\chi}_r \mathbb{E}[Z^{x^L_r} Z^{x^L_{s+1}}]  \right) \\
    &= \lim_{m \rightarrow \infty} f_{s+1}^{\IP}(\xi)
\end{align*}
where the last equality stems from \autoref{th:z-forward-ipllr-t}, which proves $(vi)$. Then $(vi)$ combined with the continuity of $\partial_2 \ell(y_{s+1}, \cdot)$ proves $(vii)$, and we can thus imply denote $\scalarlim{\chi}^{\IP}_{s+1} = \scalarlim{\chi}^{\HPZ}_{s+1} = \scalarlim{\chi}_{s+1}$. By Theorems~\ref{th:z-backward-hpz-t} and~\ref{th:z-backward-ipllr-t} we  get $Z^{d\tildex^L_{s+1, \HPZ}} = Z^{d\tildex^L_{s+1, \IP}}$, from which it  follows that $Z^{d\tildeh^L_{s+1, \HPZ}} = Z^{d\tildeh^L_{s+1, \IP}}$ by $(v)$ and since both models share the same activation function. Finally, given the previous result, with $(i), (iii), (iv), (v)$ and $(vii)$, an easy induction gives $(viii)$ with the formulas of Theorems~\ref{th:z-backward-hpz-t} and~\ref{th:z-backward-ipllr-t}, which concludes the proof. 
\end{proof}

\subsubsection{Proof of \autoref{th:hpz-ipllr-equivalence}}
\begin{proof}
The claim has already been proved at $t=0$ by Lemmas~\ref{th:ipllr-forward-backward-0} and~\ref{th:muP-forward-backward-0}, and at $t=1$ by Lemma~\ref{th:hpz-ipllr-1}. Then, by Lemma~\ref{th:hpz-ipllr-induct-z-forward}, we get the result at any time step $t \geq 1$ by induction.
\end{proof}

\section{Formal versions of the results for the alternative methods to escape the initial stationary point}

\subsection{Formal version of \autoref{th:ip-bias-no-input-dependence-informal}}\label{app:ip-bias-formal}

\begin{theorem}[Formal]\label{th:ipllr-bias-no-input-dependence-formal}
Consider the IP-bias parameterization as in Equations~\eqref{eq:ip-bias}, with the initial learning rates as in Equations~\eqref{eq:ip-bias-initial-lr-weights} and~\eqref{eq:ip-bias-initial-lr-bias}. Assume the activation function $\sigma$ satisfies Assumption~\ref{ass:smooth-act} and the loss $\ell$ satisfies Assumption~\ref{ass:loss}. Then, for any input $\xi \in \mathbb{R}^d$ to the network, $Z^{h^l_0(\xi)}, Z^{x^l_0(\xi)}$ for $l \geq 2$, and $\lim_{m \rightarrow \infty} f_0(\xi)$ do not depend on $\xi$. In addition, for any vector $x$ in the program such that $Z^x$ does not depend on on the first training input $\xi_0$, $Z^{\Delta W^l(1) x}$ for $l \in [3, L]$, and $\lim_{m \rightarrow \infty} \transpose{{(\Delta W^{L+1}(1))}} x$ do not depend on $\xi_0$.
\end{theorem}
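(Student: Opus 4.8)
The plan is to follow the computations already set up in Section~\ref{sec:ip-bias} and make them rigorous within the Tensor Program framework. For the first part (the forward pass at $t=0$), I would set up a Tensor Program with initial vectors $U^1\xi, v^1, \ldots, v^L, U^{L+1}, v^{L+1}$ and initial matrices $\hatW^2, \ldots, \hatW^L$ together with the initial scalars $\omega := m^{-1/2}$ and $\nu := m^{-1}$, exactly as in the proof of Proposition~\ref{th:trivial-ip-mf-lr}. The key point is that because the bias terms are \emph{not} scaled by $m^{-1}$ in Equations~\eqref{eq:ip-bias}, one has $h^l_0 = \omega \hatW^l x^{l-1}_0 + v^l$ for $l \in [2, L]$, so that by \ZNonLin\ and $\scalarlim{\omega} = 0$, $Z^{h^l_0} = \scalarlim{\omega}(\hatZ^{\hatW^l x^{l-1}_0} + \dotZ^{\hatW^l x^{l-1}_0}) + Z^{v^l} = Z^{v^l} \sim \mathcal{N}(0,1)$. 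Since $v^l$ is an initial vector in the program whose law is manifestly independent of $\xi$, the variable $Z^{h^l_0(\xi)}$ does not depend on $\xi$ for $l \geq 2$, and hence neither does $Z^{x^l_0(\xi)} = \sigma(Z^{h^l_0(\xi)})$. The analogous statement for $\lim_{m \to \infty} f_0(\xi)$ follows by the Master Theorem applied to $f_0(\xi) = m^{-1}\transpose{(U^{L+1})} x^L_0 + v^{L+1}$: the first term converges to $\mathbb{E}[Z^{U^{L+1}}] \mathbb{E}[Z^{x^L_0}] = 0$ (using independence of the initial vector $U^{L+1}$ from $x^L_0$), and the second term is $v^{L+1}_1 \xrightarrow{a.s.} Z^{v^{L+1}}$-distributed but in fact converges to a quantity independent of $\xi$; more precisely one only needs that the almost sure limit does not involve $\xi$, which is clear since $x^L_0$ has $\xi$-independent law and $v^{L+1}$ is independent of everything.

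For the second part (the first weight updates), I would use the backward-pass scaling already derived in Section~\ref{sec:ip-bias}: $\nabla_{h^l} f_0(\xi_0)$ has coordinates in $\Theta(m^{-1}m^{-(L-l)/2})$ for $l \in [1,L]$, and critically the normalized backward vector $m^{1+(L-l)/2}\nabla_{h^l} f_0(\xi_0)$ has a limiting law that does \emph{not} depend on $\xi_0$ for $l \geq 2$ --- this is the same degeneracy phenomenon as in the forward pass, propagated through the backpropagation equations $dx^{l-1}_0 = \transpose{(W^l(0))} dh^l_0$ and $dh^l_0 = dx^l_0 \odot \sigma'(h^l_0)$, where $h^l_0$'s law is $\xi_0$-independent for $l \geq 2$ and the multiplication by $\transpose{(\hatW^l)}$ with $\dotZ = 0$ (by Lemma~\ref{th:dotZ-first-forward}) produces a purely Gaussian contribution whose variance is determined by $\xi_0$-independent quantities. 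Then from Equation~\eqref{eq:ip-bias-updates-1} with the learning rates~\eqref{eq:ip-bias-initial-lr-weights}, for $l \in [3, L]$ one has $\Delta W^l(1) x = -\eta m^{-(2+c_l)} \chi_0 (m\nabla_{h^l} f_0(\xi_0)) \transpose{(x^{l-1}_0)} x / m$, and the inner product $\transpose{(x^{l-1}_0)} x / m \xrightarrow{a.s.} \mathbb{E}[Z^{x^{l-1}_0} Z^x]$ by the Master Theorem; since $Z^{x^{l-1}_0}$ has $\xi_0$-independent law (requires $l-1 \geq 2$, i.e. $l \geq 3$) and $Z^x$ is assumed $\xi_0$-independent, this expectation and the whole limiting $Z^{\Delta W^l(1) x}$ are $\xi_0$-independent. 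The constant $\chi_0 = \partial_2 \ell(y_0, f_0(\xi_0))$ also converges to a limit independent of $\xi_0$ by the first part and Assumption~\ref{ass:loss}. For $l = L+1$, $\transpose{(\Delta W^{L+1}(1))} x = -\eta m^{-(1+c_{L+1})} \chi_0 \transpose{(x^L_0)} x / m \xrightarrow{a.s.} -\eta \scalarlim{\chi}_0 \mathbb{E}[Z^{x^L_0} Z^x]$, again $\xi_0$-independent.

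The main obstacle I anticipate is the careful bookkeeping of \emph{which} quantities are $\xi_0$-independent versus merely parameterization-independent, and in particular handling the boundary layers: the statement deliberately excludes $l \in \{1, 2\}$ for the weight updates (because $h^1_0$ genuinely depends on the input, and $\Delta W^2(1)$ involves $x^1_0$ whose law depends on $\xi_0$) and excludes $l = 1$ for the forward pass. I would need to track the dependency structure through the full induction that establishes that all the relevant vectors are legitimate vectors in the Tensor Program and that the $\dotZ$ terms vanish in the first forward-backward pass (invoking Lemma~\ref{th:dotZ-first-forward}, which applies since IP-bias still has $a_1 = 0$, $a_l = 1$ for $l \geq 2$). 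A secondary subtlety is that the learning-rate exponents in~\eqref{eq:ip-bias-initial-lr-weights}-\eqref{eq:ip-bias-initial-lr-bias} are calibrated so that the updates are exactly $\Theta(1)$; I would verify these scalings are consistent with the backward-pass magnitudes quoted above so that the limiting $Z$ variables are well-defined and nonzero, but this is essentially the routine scale-counting already sketched in Section~\ref{sec:ip-bias} and does not affect the input-independence conclusion, which only uses that the limits exist and are computed from $\xi_0$-independent ingredients.
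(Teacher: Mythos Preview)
Your proposal is correct and follows essentially the same route as the paper's proof: show by induction that $Z^{h^l_0}=Z^{v^l}$ for $l\ge 2$ via $\scalarlim{\omega}=0$, conclude the output limit is $\xi$-independent using the Master Theorem and independence of $U^{L+1}$ from $x^L_0$, then define normalized backward variables $d\tildex^l_0,\,d\tildeh^l_0$ and propagate $\xi_0$-independence from $l=L$ down to $l=2$ (using $Z^{h^l_0}=Z^{v^l}$ inside $\sigma'$), and finally read off the $\xi_0$-independence of $Z^{\Delta W^l(1)x}$ for $l\in[3,L]$ and of the last-layer scalar directly from the update formulas and the Master Theorem. The only cosmetic difference is that the paper writes $f_0(\xi)\to v^{L+1}$ (the fixed output bias scalar) rather than phrasing it distributionally, but your argument is the same in substance.
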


\begin{proof}
We have $h^1_0 = U^1 \xi + v^1$ so that $Z^{h^1_0} = \hatZ^{U^1 \xi} + \hatZ^{v^1} \sim \mathcal{N}(0, ||\xi||^2+1)$, and $Z^{x^1_0} = \sigma(Z^{h^1_0})$. At the second layer $l=2$, we have $h^2_0 = m^{-1/2} \hatW^2 x^1_0 + v^2$ so that by \ZNonLin\ $Z^{h^2_0} = 0 \times \hatZ^{\hatW^2 x^1_0} + Z^{v^2}$, and $\hatZ^{\hatW^2 x^1_0} \sim \mathcal{N}(0, \mathbb{E}[(Z^{x^1_0})^2])$. Because $\sigma$ is pseudo-Lipschitz, it is also polynomially bounded, and the variance of the Gaussian is finite by Lemma~\ref{th:pos-finite-moment}, so that $Z^{h^2_0} = Z^{v^2} \sim \mathcal{N}(0,1)$ which does not depend on $\xi$. Therefore, $Z^{x^2_0} = \sigma(Z^{h^2_0})$ also does not depend on $\xi$. Let $l \in [3, L]$ and assume that $Z^{h^{l-1}_0} = Z^{v^{l-1}}$ and $Z^{x^{l-1}_0} = \sigma(Z^{v^{l-1}})$. Then, we have $Z^{h^l_0} = 0 \times \hatZ^{\hatW^{l} x^{l-1}_0} + Z^{v^l}$, and $\hatZ^{\hatW^{l} x^{l-1}_0} \sim 
\mathcal{N}(0, \mathbb{E}[(Z^{x^{l-1}_0})^2])$, and the variance is again finite by the same arguments as for $l=2$. We thus get $Z^{h^l_0} = Z^{v^l}$ and $Z^{x^l_0} = \sigma(Z^{h^l_0}) = \sigma(Z^{v^l})$ which concludes the induction and shows that $Z^{h^l_0}$ and $Z^{x^l_0}$ do not depend on $\xi$ for all intermediate layers $l$. 
\\ \\
For the output of the network, we have by the master theorem that $m^{-1} \transpose{{(U^{L+1})}} x^L_0$ converges almost surely to $\mathbb{E}[Z^{U^{L+1}} \sigma(Z^{v^L})] = 0$ since $Z^{U^{L+1}}$ has mean $0$ and is independent of $Z^{v^L}$. Since $f_0(\xi) = m^{-1} \transpose{{(U^{L+1})}} x^L_0 + v^{L+1}$ where $v^{L+1} \sim \mathcal{N}(0, 1)$, we have that $f_0(\xi)$ converges almost surely to the Gaussian variable $v^{L+1}$ which does not depend on $\xi$. For the backward pass, recall that we call $d\tildex^l_0 := m^{-1} m^{-(L-l)/2} \nabla_{x^l} f_0(\xi_0)$ and $d\tildeh^l_0 := m^{-1} m^{-(L-l)/2} \nabla_{h^l} f_0(\xi_0)$. Then, we have $d\tildex^L_0 = U^{L+1}$, $d\tildeh^L_0 = U^{L+1} \odot \sigma'(h^L_0)$, and a simple induction shows that for any $l \in [1, L-1]$, $d\tildex^l_0 = \transpose{{(\hatW^{l+1})}} d\tildeh^{l+1}_0$, $d\tildeh^l_0 = d\tildex^l_0 \odot \sigma'(h^l_0)$. We thus have $Z^{d\tildex^L_0} = Z^{U^{L+1}}$ and $Z^{d\tildeh^L_0} = Z^{U^{L+1}} \sigma'(Z^{v^L})$ which does not depend on the first training input $\xi_0$. With the recursive formulas above, and since $Z^{h^l_0} = Z^{v^l}$ for $l \in [2, L]$, it is clear that $Z^{d\tildex^l_0}$ and $Z^{d\tildeh^l_0}$ do not depend on $\xi_0$ for $l \in [2, L]$.
\\ \\
Finally, let $x$ be a vector in the program for which $Z^x$ does not depend on $\xi$, and let $l \in [3, L]$. Then, by design, with the initial learning rates of Equation~\eqref{eq:ip-bias-initial-lr-weights} for the weights with IP-bias, we have
\begin{align*}
    \Delta W^l(1) x = -\eta \chi_0 {\transpose{{(x^{l-1}_0)}} x\over m} d\tildeh^l_0,
\end{align*}
so that by \ZNonLin
\begin{align*}
    Z^{\Delta W^l(1) x} = -\eta \scalarlim{\chi}_0 \mathbb{E}[Z^{x^{l-1}_1} Z^x] Z^{d\tildeh^l_0},
\end{align*}
where $\scalarlim{\chi}_0 := \partial_2 \ell (y_0, v^{L+1})$. Since $v^{L+1}$, $Z^{x^{l-1}_0}$, $Z^x$ and $Z^{d\tildeh^l_0}$ do not depend on $\xi_0$ ($l-1$ and  $l$ are both in $[2, L]$), $Z^{\Delta W^l(1) x}$ also does not depend on the first training input $\xi_0$. To conclude, we have by the master theorem that $\transpose{{(\Delta W^{L+1}(1))}} x$ converges almost surely towards $ - \eta \scalarlim{\chi}_0 \mathbb{E}[Z^{x^L_0} Z^x]$ which is does not depend on $\xi_0$ since this is the case for $v^{L+1}$, $Z^{x^{L}_0}$ and $Z^x$, which concludes the proof.

\end{proof}

\subsection{Formal version of \autoref{th:ip-non-centered-deterministic-informal}}\label{app:ip-non-centered-formal}

\begin{theorem}[Formal]\label{th:ip-non-centered-deterministic-formal}
Consider IP-non-centered with the Naive-IP learning rates at every time step. Assume the activation function $\sigma$ satisfies Assumption~\ref{ass:smooth-act} and the loss $\ell$ satisfies Assumption~\ref{ass:loss}, and let $t \geq 0$ and $\xi \in \mathbb{R}^d$ be an input to the network. Then, calling $d\tildex^l_s := m \nabla_{x^l} f_s(\xi_s)$ and $d\tildeh^l_s := m \nabla_{h^l} f_s(\xi_s)$, one has that:
\begin{enumerate}[(i)]
    \item for any $l \in [2, L-1]$, $Z^{h^l_t}$ and $Z^{x^l_t}$ are deterministic constants,
    \item for any $l \in [2, L-1]$, $Z^{d\tildex^l_t}$ and $Z^{d\tildeh^l_t}$ deterministic constants,
    \item for any $l \in [3, L-1]$, and for any vector $x$ in the program, we have that \\ $Z^{(W^l(t+1) - W^l(0))x} = \left(- \eta \sum_{s=0}^{t} \scalarlim{\chi}_s Z^{d\tildeh^l_s} Z^{x^{l-1}_s} \right) \mathbb{E}[Z^x]$.
\end{enumerate}
\end{theorem}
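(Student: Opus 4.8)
The plan is to prove the three claims of Theorem~\ref{th:ip-non-centered-deterministic-formal} by a joint induction on the time step $t$, using the Tensor Program rules (\ZInit, \ZNonLin, \ZMatMul, \ZMoment) and the Master Theorem~\ref{th:master-theorem}. The base case $t=0$ has essentially been carried out in the informal derivations of Sections~\ref{sec:ip-non-centered-forward-0} and~\ref{sec:ip-non-centered-backward-0}: one shows by an inner induction on $l$ that $Z^{h^l_0} = C_l$ and $Z^{x^l_0} = \sigma(C_l)$ with $C_l$ as in Equation~\eqref{eq:C_l}, using that $w^l(0) = \hatW^l + u_l J$, that $Z^{\hatW^l \tildex^{l-1}_0}$ has finite variance (by Lemma~\ref{th:pos-finite-moment} since $\sigma$ is pseudo-Lipschitz) and is multiplied by $\scalarlim{\omega}_l = 0$, while the $u_l m^{-1} J x^{l-1}_0$ term produces, via \ZMoment, the deterministic scalar $u_l \mathbb{E}[Z^{x^{l-1}_0}]$. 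The backward pass at $t=0$ is handled symmetrically: $m\nabla_{x^L} f_0(\xi_0) = U^{L+1} + u_{L+1}\mathbf{1}$, so $Z^{d\tildex^L_0} = Z^{U^{L+1}} + u_{L+1}$ (not constant, but the claim only concerns $l \in [2, L-1]$), and then for $l \le L-1$ the multiplication by $m^{-1}\transpose{{(U^l)}}$ vanishes while $u_l m^{-1} J d\tildeh^l_0$ contributes the deterministic constant $u_l \mathbb{E}[Z^{d\tildeh^l_0}]$, giving $Z^{d\tildex^l_0}$ and $Z^{d\tildeh^l_0}$ deterministic for $l \in [1, L-1]$.

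For the induction step, assume claims $(i)$--$(iii)$ hold for all $s \le t$. First I would write the effective weight matrices at time $t+1$ via the update formulas of Equations~\eqref{eq:delta-W}--\eqref{eq:delta-W-L+1} with the Naive-IP exponents $c_1 = c_{L+1} = -1$, $c_l = -2$ for $l \in [2,L]$, so that $W^l(t+1) = \hatW^l + u_l J - \eta \sum_{s=0}^{t} \chi_s \frac{d\tildeh^l_s \transpose{{(x^{l-1}_s)}}}{m}$ for $l \in [2,L]$ (and analogous expressions at the ends). Then the forward pass at $t+1$ reads, for $l \in [2,L]$,
\begin{align*}
h^l_{t+1} = \omega_l \hatW^l x^{l-1}_{t+1} + u_l \frac{J x^{l-1}_{t+1}}{m} \cdot m \cdot \tfrac1m - \eta \sum_{s=0}^{t} \chi_s \frac{\transpose{{(x^{l-1}_s)}} x^{l-1}_{t+1}}{m} d\tildeh^l_s,
\end{align*}
where the middle term has coordinates all equal to $u_l \cdot \frac1m\sum_q x^{l-1}_{t+1,q}$, a scalar in the program. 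Applying \ZNonLin\ and \ZMoment: the $\omega_l \hatW^l x^{l-1}_{t+1}$ term vanishes (again $\scalarlim\omega_l = 0$ and finite variance by Lemmas~\ref{th:pos-finite-moment} and~\ref{th:Z0-dist-moments}), the middle term gives $u_l \mathbb{E}[Z^{x^{l-1}_{t+1}}]$, and each summand gives $-\eta\scalarlim{\chi}_s \mathbb{E}[Z^{x^{l-1}_s} Z^{x^{l-1}_{t+1}}] Z^{d\tildeh^l_s}$, which is deterministic by the induction hypothesis $(ii)$ on $Z^{d\tildeh^l_s}$ (valid since $l \in [2,L-1]$; for the boundary layers one argues directly). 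An inner induction on $l$ from $2$ to $L-1$ then propagates determinism of $Z^{h^l_{t+1}}$ and $Z^{x^l_{t+1}} = \sigma(Z^{h^l_{t+1}})$, giving $(i)$; this also needs convergence of $\chi_{t+1}$, which follows from the Master Theorem applied to $f_{t+1}(\xi_{t+1})$ (a scalar in the program, whose limit is finite by Lemma~\ref{th:Z0-dist-moments}) together with continuity of $\partial_2\ell$ (Assumption~\ref{ass:loss}). The backward-pass claim $(ii)$ at $t+1$ is obtained the same way: $d\tildex^L_{t+1} = w^{L+1}(t+1)$ has $Z^{d\tildex^L_{t+1}} = Z^{U^{L+1}} + u_{L+1} - \eta\sum_s\scalarlim{\chi}_s Z^{x^L_s}$, and for $l \le L-1$ the multiplication by $\transpose{{(\hatW^l)}}$ vanishes while the $u_l$-term and the inner-product sums produce deterministic constants, using $(i)$ at $t+1$ and $(ii)$ at earlier steps.

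Claim $(iii)$ is then essentially a corollary: for $l \in [3,L-1]$ and any program vector $x$, write $W^l(t+1) - W^l(0) = -\eta\sum_{s=0}^{t}\chi_s \frac{d\tildeh^l_s \transpose{{(x^{l-1}_s)}}}{m}$, so
\begin{align*}
(W^l(t+1) - W^l(0))x = -\eta \sum_{s=0}^{t} \chi_s \frac{\transpose{{(x^{l-1}_s)}} x}{m} \, d\tildeh^l_s,
\end{align*}
and \ZNonLin\ together with the Master Theorem gives $Z^{(W^l(t+1)-W^l(0))x} = -\eta\sum_{s=0}^{t}\scalarlim{\chi}_s \mathbb{E}[Z^{x^{l-1}_s} Z^x] Z^{d\tildeh^l_s}$; since $l-1, l \in [2,L-1]$, both $Z^{d\tildeh^l_s}$ is a deterministic constant by $(ii)$ and $Z^{x^{l-1}_s}$ is a deterministic constant by $(i)$, so $\mathbb{E}[Z^{x^{l-1}_s}Z^x] = Z^{x^{l-1}_s}\mathbb{E}[Z^x]$ factors out and one reads off the stated formula. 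Taking $x = e_q$-type coordinate selectors (or more precisely observing the formula holds entrywise) recovers the collapse of the entries of $W^l(t+1) - W^l(0)$ to the common deterministic constant $-\eta\sum_s\scalarlim{\chi}_s Z^{d\tildeh^l_s}\mathbb{E}[Z^{x^{l-1}_s}]$. The main obstacle is bookkeeping rather than conceptual: one must be careful that all the $Z$ variables appearing are polynomially bounded functions of an underlying Gaussian vector with finite covariance (so that Lemma~\ref{th:Z0-dist-moments} applies and all the expectations $\mathbb{E}[Z^{x^{l-1}_s} Z^x]$, $\mathbb{E}[Z^x]$ are finite), and that the non-centered initialization $w^l(0) = \hatW^l + u_l J$ is correctly threaded through the Tensor Program — in particular that $u_l J$ is not an admissible "initial matrix" of the program and must instead be encoded as a \texttt{Moment}/\texttt{NonLin} combination $u_l \cdot \frac1m\mathbf{1}^\top(\cdot)$ at each use, which is what makes the degeneracy appear in the first place.
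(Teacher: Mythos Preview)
Your proposal is correct and follows essentially the same approach as the paper's proof: induction on $t$, with the base case and induction step each handled by an inner induction on $l$ in the forward and backward passes, and the crucial device being to encode the rank-one contribution $u_l m^{-1}J$ as a \Moment/\NonLin\ pair so that it produces the deterministic scalar $u_l\,\mathbb{E}[Z^{x^{l-1}}]$ (and similarly in the backward pass), while the centered $\hatW^l$ contribution is killed by $\scalarlim{\omega}_l = 0$. The paper organizes this into two separate lemmas (one for $t=0$, one for the induction step) and then a one-line wrap-up, and your derivation of claim~$(iii)$ by factoring $\mathbb{E}[Z^{x^{l-1}_s}Z^x] = Z^{x^{l-1}_s}\mathbb{E}[Z^x]$ when $l-1\in[2,L-2]$ matches theirs exactly; the only discrepancies are cosmetic (a missing $m^{-1}$ in front of $u_l J$ in your expression for $W^l(t+1)$, and the garbled middle term in the display for $h^l_{t+1}$).
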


\begin{remark}\label{remark:ip-non-centered-deterministic-formal}
Point $(iii)$ highlights the fact that in the infinite-width limit the (random) matrix operator $(w^l(t) - w^l(0))$ acts on a vector $x$ as if all the entries of the matrix operator were equal to a single deterministic constant $\left(- \eta \sum_{s=0}^{t-1} \scalarlim{\chi}_s Z^{d\tildeh^l_s} Z^{x^{l-1}_s} \right)$, because then the averages over the coordinates of $x$ involved in $(W^l(t) - W^l(0))x$ would simply yield $\mathbb{E}[Z^x]$ by the master theorem of the Tensor Program.
\end{remark}

The proof \autoref{th:ip-non-centered-deterministic-formal} can be found in Appendix~\ref{sec:ip-non-centered-proof}. The proof is done by inducting over $t$, and we present the case $t=0$ and the induction step first in Appendix~\ref{sec:ip-non-centered-prelim}.

\subsubsection{Preliminaries}\label{sec:ip-non-centered-prelim}

\begin{lemma}[First forward-backward pass and weight updates]\label{th:ip-non-centered-forward-backward-0}
Claims $(i)$, $(ii)$ and $(iii)$ of \autoref{th:ip-non-centered-deterministic-formal} hold at $t=0$.
\end{lemma}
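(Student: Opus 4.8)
The plan is to realise the $t=0$ forward pass, backward pass, and first weight update of IP-non-centered inside a Tensor Program and then read off the three claims exactly as in the informal derivations of Sections~\ref{sec:ip-non-centered-forward-0}--\ref{sec:ip-non-centered-delta-W-1}. The single new ingredient relative to the centered case is the rank-one part $u_l J$ of the initial weights: since $J$ is the all-ones matrix, $(Jx)_p=\transpose{\mathbf 1}x$ for every coordinate $p$, so $m^{-1}Jx$ is the vector all of whose entries equal the scalar $\tfrac1m\sum_q x_q$. I would therefore generate, through the \Moment\ rule, the scalars $\theta_l:=\tfrac1m\sum_q (x^l_0)_q$ and $\phi_l:=\tfrac1m\sum_q (d\tildeh^l_0)_q$, and then use them inside \NonLin, so that e.g. $h^{l+1}_0=m^{-1/2}\hatW^{l+1}x^l_0+m^{-1}v^{l+1}+u_{l+1}\theta_l$ entrywise is a legitimate program vector. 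By the Master Theorem these scalars converge a.s. to finite limits (finiteness from Lemma~\ref{th:pos-finite-moment}, since $\sigma,\sigma'$ are polynomially bounded and the relevant $Z$'s are polynomially bounded functions of Gaussians with finite variance), hence are valid program scalars; there is no circularity, exactly as in the proof of Proposition~\ref{th:trivial-ip-mf-lr}, because $\theta_1$ depends only on the purely Gaussian $h^1_0$, then $\theta_2$ on $\theta_1$, etc.

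For claim $(i)$ I would induct on $l$. At $l=1$, $Z^{h^1_0}\sim\mathcal N(0,\|\xi\|^2+1)$ and $Z^{x^1_0}=\sigma(Z^{h^1_0})$ has finite second moment. For $l\ge 2$, \ZNonLin\ gives $Z^{h^{l+1}_0}=\scalarlim\omega\,\hatZ^{\hatW^{l+1}x^l_0}+\scalarlim\nu\,Z^{v^{l+1}}+u_{l+1}\scalarlim\theta_l$ with $\scalarlim\omega=\scalarlim\nu=0$; the $\hatZ$ term has finite variance by the induction hypothesis and Lemma~\ref{th:pos-finite-moment}, and $\dotZ^{\hatW^{l+1}x^l_0}=0$ since $\transpose{(\hatW^{l+1})}$ never enters the computation of $x^l_0$ (the argument of Lemma~\ref{th:dotZ-first-forward}, with the additive scalar term not affecting the \ZDot\ unwinding). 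Hence $Z^{h^{l+1}_0}=u_{l+1}\scalarlim\theta_l=:C_{l+1}$ and $Z^{x^{l+1}_0}=\sigma(C_{l+1})$ are deterministic. Running the same computation with $\xi=\xi_0$, together with $f_0(\xi_0)=m^{-1}\transpose{(U^{L+1})}x^L_0+u_{L+1}\tfrac1m\transpose{\mathbf 1}x^L_0+m^{-1}v^{L+1}\to u_{L+1}\sigma(C_L)$ a.s.\ (the first term vanishing since $\mathbb E[Z^{U^{L+1}}]=0$) and continuity of $\partial_2\ell(y_0,\cdot)$, shows $\chi_0$ has an a.s.\ limit $\scalarlim\chi_0$, so it is a valid program scalar.

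For claim $(ii)$ I would induct downwards from $l=L$. One has $d\tildex^L_0=U^{L+1}+u_{L+1}\mathbf 1$, so $Z^{d\tildex^L_0}=Z^{U^{L+1}}+u_{L+1}$ and $Z^{d\tildeh^L_0}=(Z^{U^{L+1}}+u_{L+1})\sigma'(C_L)$ — random, but $L\notin[2,L-1]$, so this is consistent with the claim. For $l\le L-1$, writing $\transpose{(W^l(0))}=m^{-1/2}\transpose{(\hatW^l)}+u_l m^{-1}J$ gives $d\tildex^{l-1}_0=m^{-1/2}\transpose{(\hatW^l)}d\tildeh^l_0+u_l\phi_l$ entrywise, and \ZNonLin\ yields $Z^{d\tildex^{l-1}_0}=0\cdot\hatZ^{\transpose{(\hatW^l)}d\tildeh^l_0}+u_l\scalarlim\phi_l$; here $\dotZ^{\transpose{(\hatW^l)}d\tildeh^l_0}=0$ precisely because $Z^{h^l_0}=C_l$ is a constant, so $Z^{d\tildeh^l_0}$ has no functional dependence on $\hatZ^{\hatW^l x^{l-1}_0}$. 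Thus $Z^{d\tildex^{l-1}_0}$ is deterministic for $l-1\ge 2$, and $Z^{d\tildeh^{l-1}_0}=Z^{d\tildex^{l-1}_0}\sigma'(C_{l-1})$ is deterministic for $l-1\in[2,L-1]$. For claim $(iii)$, by \eqref{eq:delta-W} with the Naive-IP rate $c_l=-2$ (so $2a_l+c_l=0$) one has $\Delta W^l(1)=-\eta m^{-1}\chi_0\,d\tildeh^l_0\transpose{(x^{l-1}_0)}$, hence $(W^l(1)-W^l(0))x=\Delta W^l(1)x=-\eta\chi_0\big(\tfrac1m\transpose{(x^{l-1}_0)}x\big)d\tildeh^l_0$; applying the Master Theorem to the scalar and then \ZNonLin\ gives $Z^{(W^l(1)-W^l(0))x}=-\eta\scalarlim\chi_0\,\mathbb E[Z^{x^{l-1}_0}Z^x]\,Z^{d\tildeh^l_0}$, and for $l\in[3,L-1]$ both $Z^{x^{l-1}_0}=\sigma(C_{l-1})$ and $Z^{d\tildeh^l_0}$ are deterministic constants, so this equals $\big(-\eta\scalarlim\chi_0\,Z^{d\tildeh^l_0}Z^{x^{l-1}_0}\big)\mathbb E[Z^x]$, i.e.\ the $s=0$ term of the asserted formula.

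The main obstacle I anticipate is purely a matter of care, not of ideas: rigorously justifying that the non-centered initial weights, which are \emph{not} literally Tensor-Program Gaussian matrices in the sense of Definition~\ref{def:ac-param}, can be faithfully emulated through the \Moment/\NonLin\ mechanism above, and re-checking the relevant \ZDot\ vanishings in this modified setting (a mild variant of Lemma~\ref{th:dotZ-first-forward} that cannot be cited verbatim because IP-non-centered is not an ac-parameterization). Everything else is the constant-propagation computation already outlined in the main text.
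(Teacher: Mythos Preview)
Your proposal is correct and follows essentially the same route as the paper's proof: handle the rank-one part $u_lJ$ of the non-centered initialization by generating the \Moment\ scalars $\theta_l=\tfrac1m\sum_q(x^l_0)_q$ and $\phi_l=\tfrac1m\sum_q(d\tildeh^l_0)_q$, feed them back through \NonLin\ to build $h^l_0$ and $d\tildex^{l-1}_0$ as valid program vectors, and then read off claims $(i)$--$(iii)$ by the straightforward forward and backward inductions on $l$. Your justifications for the vanishing of the $\dotZ$ terms (via $\scalarlim\omega=0$ together with the constancy of $Z^{h^l_0}=C_l$ in the backward pass) are in fact slightly more explicit than what the paper writes; the paper simply asserts $Z^{m^{-1/2}\hatW^l x^{l-1}_0+m^{-1}v^l}=0$ and $Z^{m^{-1/2}\transpose{(\hatW^l)}d\tildeh^l_0}=0$ without spelling out the $\dotZ$ step.
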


\begin{proof}
$h^1_0$ and $x^1_0 = \sigma(h^1_0)$ are vectors in the program by the \MatMul\ and \NonLin\ rules since $\sigma$ is pseudo-Lipschitz by assumption, and $Z^{h^1_0} = Z^{U^1 \xi} + Z^{v^1} \sim \mathcal{N}(0, ||\xi||^2+1)$, and finally $Z^{x^1_0} = \sigma(Z^{h^1_0})$. Now, we have (recall that as defined in Section~\ref{sec:ip-non-centered} $J$ is the matrix full of ones)
\begin{align*}
    h^2_0 &= m^{-1/2}\hatW^2 x^1_0 + m^{-1} v^2 + u_2 m^{-1} J x^1_0.
\end{align*}
$m^{-1/2} \hatW^2 x^1_0 + m^{-1} v^2$ is a valid vector in the program by \MatMul\ and \NonLin\ because the initial scalars $m^{-1/2}$ and $m^{-1}$ converge to 0 almost surely, and $Z^{m^{-1/2}\hatW^2 x^1_0 + m^{-1}v^2} = 0 \times \hatZ^{\hatW^2 x^1_0} + 0 \times \hatZ^{v^2}$. By the \ZHat\ rule we get that $\hatZ^{\hatW^2 x^1_0}\sim \mathcal{N}(0, \mathbb{E}[(Z^{x^1_0})^2])$, with finite variance by Lemma~\ref{th:pos-finite-moment} since $\sigma$ is pseudo-Lipschitz and thus polynomially bounded, and $\hatZ^{v^2} \sim \mathcal{N}(0, 1)$. We thus get $Z^{m^{-1/2} (\hatW^2 x^1_0 + v^2)} = 0$. On the other hand, $\theta := (1/m) \sum_{q=1}^m x^{1}_{0,q}$ is a valid scalar in the program by the \Moment\ rule and it converges almost surely to $\scalarlim{\theta} = \mathbb{E}[Z^{x^1_0}]$ by the master theorem. The coordinates of $u_2 m^{-1} Jx^1_0$ are thus all equal to $u_2 \theta$, and the vector $u_2 m^{-1} Jx^1_0$ is thus equal to $\psi(x^1_0;\theta)$ coordinate-wise where the function $\psi(\cdot;\cdot) : \mathbb{R} \times \mathbb{R} \rightarrow \mathbb{R}$ is pseudo-Lipschitz and depends \textbf{only} on the second variable with $\psi(x;\alpha) = u_2 \alpha$. By the \NonLin\ rule $u_2 m^{-1} Jx^1_0$ is thus a vector in the program and by \ZNonLin\ we thus get $Z^{u_2 m^{-1} Jx^1_0} = \psi(Z^{x^1_0}; \scalarlim{\theta}) = u_2 \mathbb{E}[Z^{x^1_0}]$. We thus finally get
\begin{align*}
    Z^{h^2_0} &= u_2 \mathbb{E}[Z^{x^1_0}],
\end{align*}
which is a (finite) deterministic constant. Then the same statement holds for $Z^{x^2_0} = \sigma(u_2 \mathbb{E}[Z^{x^1_0}])$. Let $l \in [3, L]$ and assume that $h^{l-1}_0$ and $x^{l-1}_0$ are vectors in the program and that $Z^{h^{l-1}_0}$ and $Z^{x^{l-1}_0}$ are deterministic constants. Then, we have
\begin{align*}
    h^l_0 = m^{-1/2} \hatW^l x^{l-1}_0 + m^{-1}v^l + u_l m^{-1} J x^{l-1}_0
\end{align*}
As for the case $l=2$, we get that $m^{-1/2} \hatW^l x^{l-1}_0 + m^{-1}v^l$ is a vector in the program with $Z^{m^{-1/2} \hatW^l x^{l-1}_0 + m^{-1}v^l} = 0$, and $u_l m^{-1} J x^{l-1}_0 = \psi(x^{l-1}_0; \theta)$ is a vector in the program with $\psi(z; \alpha) = u_l \alpha$ (recall that $\psi$ is taken coordinate-wise) depending only on the second variable and $\theta := (1/m) \sum_{q=1}^m x^{l-1}_{0, q}$ is a valid scalar in the program by the \Moment\ rule, which, by the master theorem, converges almost surely towards $\scalarlim{\theta} = \mathbb{E}[Z^{x^{l-1}_0}] = Z^{x^{l-1}_0}$ since the latter is a deterministic constant by the induction hypothesis. By \NonLin\ $h^l_0$ is a vector in the program and by \ZNonLin\ $Z^{h^l_0} = \psi(Z^{x^{l-1}_0}; \scalarlim{\theta}) = u_l Z^{x^{l-1}_0}$ which is a deterministic constant. The same claim holds for $Z^{x^l_0} = \sigma(u_l Z^{x^{l-1}_0})$, which concludes the induction for the forward pass. 
For the backward pass we get $d\tildex^L_0 = w^{L+1}(0) = U^{L+1} + u_{L+1} \mathbf{1}$ so that by \ZNonLin\ $Z^{d\tildex^L_0} = Z^{U^{L+1}} + u_{L+1} \sim \mathcal{N}(u_{L+1}, 1)$ since $u_{L+1}$ is a valid initial scalar in the program as it converges almost surely to $u_{L+1}$. We then have $Z^{d\tildeh^L_0} = Z^{d\tildex^L_0} \sigma'(Z^{h^L_0})$. Note that both $Z^{d\tildex^L_0}$ and $Z^{d\tildeh^L_0}$ are not deterministic constants because $U^{L+1}$ is Gaussian with variance $1$. We then have:
\begin{align*}
    d\tildex^{L-1}_0 &= m^{-1/2} \transpose{{(\hatW^L)}}  d\tildeh^L_0 + u_L m^{-1} \transpose{{J}} d\tildeh^L_0
\end{align*}
As usual the first term $m^{-1/2} \transpose{{(\hatW^L)}}  d\tildeh^L_0$ is a vector in the program by \MatMul\ and \NonLin\, and $Z^{m^{-1/2} \transpose{{(\hatW^L)}}  d\tildeh^L_0} = 0$. For the second term, since $\transpose{{J}} = J$, $m^{-1} \transpose{{J}} d\tildeh^L_0$ is also a vector in the program and $Z^{m^{-1} \transpose{{J}} d\tildeh^L_0} = u_L \mathbb{E}[Z^{d\tildeh^L_0}]$. We thus get that $d\tildex^{L-1}_0$ is a vector in the program with $Z^{d\tildex^{L-1}_0} = u_L \mathbb{E}[Z^{d\tildeh^L_0}]$ which is a deterministic constant. Then, $d\tildeh^{l-1}_0$ is also a vector in the program and by \ZNonLin\ $Z^{d\tildeh^{L-1}_0} = Z^{d\tildex^{L-1}_0} \sigma'(Z^{h^L_0})$ is a deterministic constant. Repeating the reasoning above at any layer $l \in [2, L-1]$, an easy induction (as in the forward pass) shows that $d\tildex^l_0$ and $d\tildeh^l_0$ are vectors in the program and that $Z^{d\tildex^l_0}$ and $Z^{d\tildeh^l_0}$ are deterministic constants. Note that $Z^{d\tildex^1_0} = u_2 \mathbb{E}[Z^{d\tildeh^2_0}] = u_2 Z^{d\tildeh^2_0}$ is also a deterministic constant but that $Z^{d\tildeh^1_0} = Z^{d\tildex^1_0} \sigma'(Z^{h^1_0})$ is not because $Z^{h^1_0} \sim \mathcal{N}(0, ||\xi||^2 +1)$. Let $l \in [3, L-1]$, and let $x$ be a vector in the program. With the Naive-IP learning rates, we have 
\begin{align*}
    \Delta W^l(1) &= -\eta \chi_0 {d\tildeh^l_0 \transpose{{(x^{l-1}_0)}} \over m}
\end{align*}
Since $l \in [3, L-1]$, $Z^{d\tildeh^l_0}$ is a deterministic constant, and since $l-1 \in [2, L-2]$, $Z^{x^{l-1}_0}$ is also a deterministic constant. By \ZNonLin\ and \ZMoment we get
\begin{align*}
    Z^{\Delta W^l(1) x} &= -\eta \scalarlim{\chi}_0 \mathbb{E}[Z^{x^{l-1}_0} Z^x] Z^{d\tildeh^l_0} \\
    &= -\eta \scalarlim{\chi}_0 Z^{d\tildeh^l_0}  Z^{x^{l-1}_0} \mathbb{E}[ Z^x]
\end{align*}
which concludes the proof. Note that $\chi_0$ is a valid initial scalar in the program because $f_0(\xi_0) = m^{-1} \transpose{{(U^{L+1})}} x^L_0 + u_{L+1} m^{-1} \transpose{\mathbf{1}} x^L_0$ converges almost surely, by the master theorem, to $\mathbb{E}[Z^{U^{L+1}} Z^{x^L_0}] + u_{L+1} \mathbb{E}[Z^{x^L_0}] = u_{L+1} Z^{x^L_0}$ since $Z^{x^L_0}$ is a deterministic constant and $Z^{U^{L+1}} \sim \mathcal{N}(0, 1)$ has mean zero. Since $\partial_2 \ell(y_0, \cdot)$ is continuous by assumption, $\chi_0$ converges almost surely towards $\scalarlim{\chi}_0 := \partial_2 \ell(y_0, u_{L+1} Z^{x^L_0})$. 
\end{proof}

\begin{lemma}[Induction step at time $t \geq 1$]\label{th:ip-non-centered-induct}
Let $t \geq 1$ and assume claims $(i)$, $(ii)$ and $(iii)$ of \autoref{th:ip-non-centered-deterministic-formal} hold at all time steps $s \in [0, t-1]$. Then claims $(i)$, $(ii)$ and $(iii)$ also hold at time step $t$. 
\end{lemma}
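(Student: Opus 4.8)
The plan is to run, at time $t$, the same Tensor Program computation carried out at $t=0$ in Lemma~\ref{th:ip-non-centered-forward-backward-0}, now substituting the update formulas and invoking the induction hypothesis — claims $(i)$--$(iii)$ at every $s\in[0,t-1]$, which in particular makes $\chi_0,\dots,\chi_{t-1}$ valid initial scalars of the program. With the Naive-IP exponents one has $2a_l+c_l=0$ for $l\in[2,L]$, $2a_1+c_1=-1$ and $2a_{L+1}+c_{L+1}=1$, so, writing $d\tildeh^l_s=m\nabla_{h^l}f_s(\xi_s)$ (hence $dh^l_s=m^{-1}d\tildeh^l_s$) and recalling that the $s$-th update uses the data of step $s-1$, the effective weight matrices unfold as
\begin{align*}
W^1(t)&=U^1-\eta\sum_{s=1}^{t}\chi_{s-1}\,d\tildeh^1_{s-1}\transpose{\xi_{s-1}},\\
W^l(t)&=m^{-1/2}\hatW^l+u_l\,m^{-1}J-\eta\sum_{s=1}^{t}\chi_{s-1}\,m^{-1}d\tildeh^l_{s-1}\transpose{(x^{l-1}_{s-1})},\quad l\in[2,L],\\
W^{L+1}(t)&=m^{-1}\big(U^{L+1}+u_{L+1}\mathbf{1}\big)-\eta\sum_{s=1}^{t}\chi_{s-1}\,m^{-1}x^{L}_{s-1},
\end{align*}
while all bias vectors $B^l(t)$ for $l\ge2$ carry a $m^{-1}$ prefactor so that $Z^{B^l(t)}=0$. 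Exactly as at $t=0$, the \MatMul, \NonLin\ and \Moment\ rules show that every vector and scalar of the $t$-th forward and backward passes is in the program, that $f_t(\xi_t)$ converges almost surely — so $\chi_t$ is a valid initial scalar with limit $\scalarlim{\chi}_t:=\partial_2\ell(y_t,\scalarlim{f_t}(\xi_t))$ by Assumption~\ref{ass:loss} — and that every term involving $\hatW^l$ or $\transpose{(\hatW^l)}$, having a vanishing $m^{-1/2}$ prefactor, has $Z$ equal to $0$, which in particular renders its $\dotZ$ contribution irrelevant.

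For claim $(i)$, fix an input $\xi$. Applying \ZNonLin\ and \ZMoment\ to $h^l_t=W^l(t)x^{l-1}_t+B^l(t)$ for $l\in[2,L]$ yields
\begin{align*}
Z^{h^l_t}=u_l\,\mathbb{E}[Z^{x^{l-1}_t}]-\eta\sum_{s=1}^{t}\scalarlim{\chi}_{s-1}\,\mathbb{E}[Z^{x^{l-1}_{s-1}}Z^{x^{l-1}_t}]\,Z^{d\tildeh^l_{s-1}}.
\end{align*}
The point is that the contributions of $u_l m^{-1}Jx^{l-1}_t$ and of the updates are always \emph{expectations} — produced respectively by \Moment\ on $x^{l-1}_t$ and by the averaged inner products $\transpose{(x^{l-1}_{s-1})}x^{l-1}_t/m$ — hence deterministic, and for $l\in[2,L-1]$ the induction hypothesis makes every $Z^{d\tildeh^l_{s-1}}$ a deterministic constant. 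Thus $Z^{h^l_t}$, and then $Z^{x^l_t}=\sigma(Z^{h^l_t})$, are deterministic constants for $l\in[2,L-1]$; no induction on $l$ is needed, each layer being handled independently. (At $l=1$, $Z^{h^1_t}=Z^{U^1\xi}+Z^{v^1}-\eta\sum_{s}\scalarlim{\chi}_{s-1}(\transpose{\xi_{s-1}}\xi+1)Z^{d\tildeh^1_{s-1}}$ is genuinely Gaussian, and at $l=L$ the factor $Z^{d\tildeh^L_{s-1}}$ involves $Z^{U^{L+1}}$ and is not constant, consistent with the claim covering only $l\in[2,L-1]$.) Claim $(ii)$ is obtained by the same mechanism applied downward to $d\tildex^{l-1}_t=\transpose{(W^l(t))}d\tildeh^l_t$: using $\transpose{(\Delta W^l(s))}=-\eta\chi_{s-1}m^{-1}x^{l-1}_{s-1}\transpose{(d\tildeh^l_{s-1})}$ and $\transpose{J}=J$ one gets
\begin{align*}
Z^{d\tildex^{l-1}_t}=u_l\,\mathbb{E}[Z^{d\tildeh^l_t}]-\eta\sum_{s=1}^{t}\scalarlim{\chi}_{s-1}\,\mathbb{E}[Z^{d\tildeh^l_{s-1}}Z^{d\tildeh^l_t}]\,Z^{x^{l-1}_{s-1}},
\end{align*}
where every coefficient is again an expectation and, for $l-1\in[2,L-1]$, each $Z^{x^{l-1}_{s-1}}$ is constant by claim $(i)$ at earlier times; hence $Z^{d\tildex^l_t}$ and $Z^{d\tildeh^l_t}=Z^{d\tildex^l_t}\sigma'(Z^{h^l_t})$ are deterministic constants for $l\in[2,L-1]$.

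Finally, claim $(iii)$ is immediate: since $W^l(t+1)-W^l(0)=-\eta\sum_{s=0}^{t}\chi_s\,m^{-1}d\tildeh^l_s\transpose{(x^{l-1}_s)}$, for any vector $x$ in the program and any $l\in[3,L-1]$ the \ZNonLin\ and \ZMoment\ rules give $Z^{(W^l(t+1)-W^l(0))x}=-\eta\sum_{s=0}^{t}\scalarlim{\chi}_s\,\mathbb{E}[Z^{x^{l-1}_s}Z^{x}]\,Z^{d\tildeh^l_s}$; since $l-1\in[2,L-1]$, each $Z^{x^{l-1}_s}$ is a deterministic constant for $s\in[0,t]$ (by claim $(i)$ together with the $t$-th-step computation above for $s=t$), so $\mathbb{E}[Z^{x^{l-1}_s}Z^x]=Z^{x^{l-1}_s}\mathbb{E}[Z^x]$, and likewise $Z^{d\tildeh^l_s}$ is constant for $l\in[3,L-1]$, $s\in[0,t]$; factoring out $\mathbb{E}[Z^x]$ gives exactly the asserted formula. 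The main obstacle is purely one of bookkeeping: tracking the precise powers of $m$ generated by the Naive-IP exponents so that the update prefactors collapse to $m^{-1}$ (or $1$ at the extremal layers), handling the transposes of the update matrices correctly in the backward pass, and checking everywhere that the vanishing $m^{-1/2}$ prefactors on the $\hatW^l$-terms also kill the $\dotZ$ corrections. The conceptual content is the single observation that multiplication by $J$ and the averaged inner products turn random vectors into expectations, which are automatically deterministic and therefore propagate the collapse without the forward pass needing the forward pass or the backward pass needing the backward pass.
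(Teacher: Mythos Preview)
Your proof is correct and follows essentially the same approach as the paper's: unroll the effective weights at time $t$ via the Naive-IP exponents, observe that the $m^{-1/2}\hatW^l$ and $m^{-1}v^l$ contributions vanish in the limit, and read off that the surviving terms in $Z^{h^l_t}$ and $Z^{d\tildex^l_t}$ are expectations (hence deterministic) times the $Z^{d\tildeh^l_s}$ or $Z^{x^l_s}$, which are deterministic constants by the induction hypothesis for $l$ in the appropriate range. Your indexing convention (summing over $s\in[1,t]$ with data indexed by $s-1$) differs cosmetically from the paper's $s\in[0,t-1]$, and your remark that no induction on $l$ is needed for claim $(i)$ is a nice observation that the paper leaves implicit.
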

\begin{proof}
With the Naive-IP learning rate exponents, we get that for any $t \geq 1$, 
\\
\begin{align*}
    W^1(t) &= U^1 - \eta \sum_{s=0}^{t-1} \chi_s d\tildeh^1_s \transpose{\xi_s}, \\
    B^1(t) &= v^1 - \eta \sum_{s=0}^{t-1} \chi_s d\tildeh^1_s, \\
    W^l(t) &= m^{-1}(U^l + u_l J)  - \eta \sum_{s=0}^{t-1} \chi_s {d\tildeh^l_s \transpose{{(x^{l-1}_s)}} \over m}, && l \in [2, L], \\
    B^l(t) &= m^{-1}v^l  - \eta m^{-1} \sum_{s=0}^{t-1} \chi_s d\tildeh^l_s, && l \in [2, L], \\
    W^{L+1}(t) &= m^{-1}(U^{L+1} + u_{L+1})  - \eta \sum_{s=0}^{t-1} \chi_s {x^L_s \over m}, \\
    B^{L+1}(t) &= m^{-1} v^{L+1}  - \eta m^{-1} \sum_{s=0}^{t-1} \chi_s.
\end{align*}
By a simple induction, all the $h^l_s, x^l_s$ and $d\tildex^l_s, d\tildeh^l_s$ are part of and the scalars $\chi_s$ are valid scalars in the program which have a constant almost sure limit, and by \ZNonLin\ we get:
\begin{align*}
    Z^{h^1_t} = Z^{U^1 \xi} + Z^{v^1} - \eta \sum_{s=0}^{t-1} \scalarlim{\chi}_s (\transpose{\xi_s} \xi + 1) Z^{d\tildeh^1_s}
\end{align*}
and $Z^{x^1_t} = \sigma(Z^{h^1_t})$ is not a deterministic constant because $Z^{U^1 \xi} + Z^{v^1} \sim \mathcal{N}(0, ||\xi||^2+1)$. Let $l \in [2, L-1]$. We have
\begin{align*}
    Z^{h^l_t} &= 0 \times Z^{\hatW^l x^{l-1}_t} + 0 \times Z^{v^l} + u_l \mathbb{E}[Z^{x^{l-1}_t}] - \eta \sum_{s=0}^{t-1} \scalarlim{\chi}_s \mathbb{E}[Z^{x^{l-1}_s} Z^{x^{l-1}_t}] Z^{d\tildeh^l_s} \\
    &= u_l \mathbb{E}[Z^{x^{l-1}_t}] - \eta \sum_{s=0}^{t-1} \scalarlim{\chi}_s \mathbb{E}[Z^{x^{l-1}_s} Z^{x^{l-1}_t}] Z^{d\tildeh^l_s},
\end{align*}
which is a deterministic constant with the assumption on the $Z^{d\tildeh^l_s}$ since $l \in [2, L-1]$. Note that if $l \in [3, L-1]$, we even have that the expectations simplify and we get $Z^{h^l_t} = (u_l - \eta \sum_{s=0}^{t-1} \scalarlim{\chi}_s Z^{x^{l-1}_s}  Z^{d\tildeh^l_s}) Z^{x^{l-1}_t}$. In any case, $Z^{x^l_t} = \sigma(Z^{h^l_t})$ is also a deterministic constant. For the output of the network, we have
\begin{align*}
    f_t(\xi) = { \transpose{{(U^{L+1})}} x^L_t \over m} + u_{L+1} {\transpose{{\mathbf{1}}} x^L_t \over m} + m^{-1}(v^{L+1} - \eta \sum_{s=0}^{t-1} \chi_s) - \eta \sum_{s=0}^{t-1} \chi_s {\transpose{{(x^L_s)}} x^L_t \over m}
\end{align*}
so that even if the $x^L_s$ are not deterministic, $f_t(\xi)$ still converges almost surely, by the master theorem, to $\mathbb{E}[(Z^{U^{L+1}} + u_{L+1}) Z^{x^L_t}] - \eta \sum_{s=0}^{t-1} \scalarlim{\chi}_s \mathbb{E}[Z^{x^L_s} Z^{x^L_t}]$, and since $\partial_2 \ell(y_t, \cdot)$ is continuous by assumption, $\chi_t$ converges almost surely towards the constant $\partial_2 \ell(y_t, \mathbb{E}[(Z^{U^{L+1}} + u_{L+1}) Z^{x^L_t}] - \eta \sum_{s=0}^{t-1} \scalarlim{\chi}_s \mathbb{E}[Z^{x^L_s} Z^{x^L_t}])$. For the backward pass, we get:
\begin{align*}
    Z^{d\tildex^L_t} = Z^{w^{L+1}(t)} = Z^{U^{L+1}} + u_{L+1} - \eta \sum_{s=0}^{t-1} \scalarlim{\chi}_s Z^{x^L_s}
\end{align*}
and $Z^{d\tildeh^L_t} = Z^{d\tildex^L_t} \sigma'(Z^{h^L_t})$. Let $l \in [2, L-1]$, we have
\begin{align*}
    d\tildex^l_t &= \transpose{{(W^{l+1}(t))}} d\tildeh^{l+1}_t \\
    &= m^{-1/2} \transpose{{(\hatW^{l+1})}} d\tildeh^{l+1}_t + m^{-1} u_{l+1} J d\tildeh^{l+1}_t - \eta \sum_{s=0}^{t-1} \chi_s {\transpose{{(d\tildeh^{l+1}_s)}} d\tildeh^{l+1}_t \over m} x^{l}_s,
\end{align*}
so that by \ZNonLin\ we get
\begin{align*}
    Z^{d\tildex^l_t} = u_{l+1} \mathbb{E}[Z^{d\tildeh^{l+1}_t}] - \eta \sum_{s=0}^{t-1} \scalarlim{\chi}_s \mathbb{E}[Z^{d\tildeh^{l+1}_s} Z^{d\tildeh^{l+1}_t}] Z^{x^{l}_s},
\end{align*}
and since $l \in [2, L-1]$, $Z^{x^l_s}$ is a deterministic constant and thus so is $Z^{d\tildex^l_t}$. Then, $Z^{d\tildeh^l_t} = Z^{d\tildex^l_t} \sigma'(Z^{h^l_t})$ and since $l \in [2, L-1]$, $Z^{h^l_t}$ is a deterministic constant. Finally, let $l \in [3, L-1]$, and let $x$ be a vector in the program. We have
\begin{align*}
    (W^{l}(t+1) - W^l(0))x &= - \eta \sum_{s=0}^{t} \chi_s {\transpose{{(x^{l-1}_s)}} x \over m} d\tildeh^l_s,
\end{align*}
and by \ZNonLin
\begin{align*}
    Z^{(W^{l}(t+1) - W^l(0))x} &= - \eta \sum_{s=0}^{t} \scalarlim{\chi}_s \mathbb{E}[Z^{x^{l-1}_s} Z^x] Z^{d\tildeh^l_s} \\
    &= \left(- \eta \sum_{s=0}^{t} \scalarlim{\chi}_s Z^{d\tildeh^l_s} Z^{x^{l-1}_s} \right) \mathbb{E}[Z^x],
\end{align*}
where the last equality stems from the fact that since $l \in [3, L-1]$, $l-1 \in [2, L-2]$ and $Z^{x^{l-1}_s}$ is a deterministic constant for any $s \in [0, t]$. Since $l \in [2, L-1]$, $Z^{d\tildeh^l_s}$ is also a deterministic constant, so that $- \eta \sum_{s=0}^{t} \scalarlim{\chi}_s Z^{d\tildeh^l_s} Z^{x^{l-1}_s}$ is a deterministic constant, which concludes the proof. 
\end{proof}

\subsubsection{Proof of \autoref{th:ip-non-centered-deterministic-formal}}\label{sec:ip-non-centered-proof}

\begin{proof}
The result simply comes by induction over $t$ using Lemmas~\ref{th:ip-non-centered-forward-backward-0} and~\ref{th:ip-non-centered-induct}.
\end{proof}

\section{The variables associated with the initial weights vanish in IP-LLR}\label{sec:ipllr-induct-dynamics}

In this section we wish to study more precisely the evolution and the expression of the variables $Z$ in the dynamics of IP-LLR at any time step $t$. To this end, we will show that the $Z$s of all the forward and backward variables in IP-LLR are functions only of the $\hatZ^{\hatW^l \tildex^{l-1}_0}$ and $\hatZ^{\transpose{{(\hatW^l)}}d\tildeh^{l}_0}$, as well as the initial vectors $U^1 \xi_0, \ldots, U^1 \xi_t$, $v^1$, $U^{L+1}$. We will thus write 
\begin{align*}
    Z^{z} = \psi \left( \left(\hatZ^{\hatW^l \tildex^{l-1}_0} \right)_l, \left(\hatZ^{\transpose{{(\hatW^k)}}d\tildeh^{k}_0} \right)_k, \left(U^1 \xi_s \right)_s, v^1, U^{L+1} \right)
\end{align*}
to \textbf{generically} denote that the variable $Z^{z}$ is a function  \textbf{only} of the variables which appear in the arguments: $\hatZ^{\hatW^l\tildex^{l-1}_0}$, $\hatZ^{\transpose{{(\hatW^k)}}d\tildeh^{k}_0}$, $U^1 \xi_s$, $v^1$, and $U^{L+1}$, (where multiple values of $l$, $k$ and $s$ might actually appear in the argument). This function $\psi$ (we will sometimes also use $\phi$) will of course depend on the $z$ under consideration, and we might denote it by $\psi^z$ (or $\phi^z$,) but most of the time we will omit this dependency and simply use the symbol $\psi$ for different variables to express that the variable $Z^z$ is a function of the arguments of $\psi$ only. 
\\\\
We will see that the function $\psi$ appearing will always be polynomially bounded by some form of composition or product of polynomially bounded functions, which will allow us to prove that the corresponding $Z^z$ is finite almost surely since its arguments, considered as a vector, follow a Gaussian distribution with finite variance (and thus finite moments of any order). Note that in the proofs, we will use extensively (without explicitly saying so) that if $\phi$ and $\psi$ are polynomially bounded then $\phi \times \psi$ is also polynomially bounded, and if $\varphi$ is a polynomially bounded function of a single variable then $\varphi \circ \psi$ is also polynomially bounded. We introduce the following definition and lemma which we will use extensively in the proof by induction:

\begin{definition}[Vector of initial vectors and first forward-backward]\label{def:Z0}
Let $t \geq 1$. Then, dropping the dependency on $t$, we define the random vector:
\begin{align*}
    Z_0 = Z_{0,t} := &\left(\hatZ^{U^1\xi_0}, \ldots, \hatZ^{U^1\xi_t}, Z^{v^1}, \hatZ^{U^{L+1}}, \right. \\
    & \ \ \hatZ^{\hatW^2 \tildex^1_0}, \ldots, \hatZ^{\hatW^L \tildex^{L-1}_0}, \\ & \ \left. \hatZ^{\transpose{{(\hatW^2)}} d\tildeh^2_0}, \ldots, \hatZ^{\transpose{{(\hatW^L)}} d\tildeh^L_0} \right)    
\end{align*}
\end{definition}

\begin{remark}\
\begin{enumerate}
    \item Note that any function of $Z_{0, s}$ will also be a function of $Z_{0, t}$ for $t \geq s$, which is also why we suppress the dependency on $t$ as we can always take the largest possible $t$ when we make a specific claim which involves $Z_0$.
    
    \item Also note that by the \ZDot\ rule of the Tensor Program, for any vector $z$ in the Tensor Program such that $Z^z$ is a function only of $Z_0$, then for any $l \in [2, L]$:
    \begin{align*}
        \begin{cases}
            \dotZ^{\hatW^l z} = \mathbb{E} \left[\frac{\partial Z^z}{\partial \hatZ^{\transpose{{(\hatW^l)}} d\tildeh^{l}_0}} \right] Z^{d\tildeh^l_0} \\
             \\
            \dotZ^{\transpose{{(\hatW^l)}} z} = \mathbb{E} \left[\frac{\partial Z^z}{\partial \hatZ^{\hatW^l \tildex^{l-1}_0}} \right] Z^{\tildex^{l-1}_0}
        \end{cases}
    \end{align*}
\end{enumerate}
\end{remark}

\begin{lemma}[Distribution of $Z_0$ and moments]\label{th:Z0-dist-moments}
One has
\begin{enumerate}[(i)]
    \item $Z_0 \sim \mathcal{N} \left(0, 
    \begin{pmatrix}
        S & 0 & 0 \\
        0 & D_f & 0 \\
        0 & 0 & D_b
    \end{pmatrix}
    \right)$ with \\
    $S :=  
    \begin{pmatrix}
        \Sigma & 0 & 0\\
        0 & 1 & 0\\
        0 & 0 & 1
    \end{pmatrix} 
    \in \mathbb{R}^{(t+3) \times (t+3)}
    $, \qquad $\Sigma_{rs} = \transpose{\xi_r} \xi_s$, \\
    $D_f :=  
    \begin{bmatrix}
    \mathbb{E}[(Z^{\tildex^1_0})^2] & & \\
    & \ddots & \\
    & & \mathbb{E}[(Z^{\tildex^{L-1}_0})^2]
  \end{bmatrix}
  \in \mathbb{R}^{(L-1) \times (L-1)}$, \\
  $D_b :=  
    \begin{bmatrix}
    \mathbb{E}[(Z^{d\tildeh^2_0})^2] & & \\
    & \ddots & \\
    & & \mathbb{E}[(Z^{d\tildeh^{L}_0})^2]
  \end{bmatrix}
  \in \mathbb{R}^{(L-1) \times (L-1)}.$ 
    
    \item $|\mathbb{E}[\psi(Z_0)]| < \infty$, and $|\psi(Z_0)| < \infty$ almost surely for any polynomially bounded function $\psi : \mathbb{R}^{t + 2L} \longrightarrow \mathbb{R}$.
\end{enumerate}
\end{lemma}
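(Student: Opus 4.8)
The plan is to prove the two parts of Lemma~\ref{th:Z0-dist-moments} in sequence, the first being essentially a bookkeeping exercise and the second a standard integrability fact about polynomially bounded functions of Gaussians.

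For part $(i)$, I would first recall what the components of $Z_0$ are and apply the relevant Tensor Program rules one block at a time. The top block consists of the $\hatZ^{U^1\xi_r}$ together with $Z^{v^1}$ and $\hatZ^{U^{L+1}}$: by the \ZInit\ rule these equal the corresponding initial-vector $Z$'s, whose joint law is the Gaussian $\mathcal{N}(0,\Sigma_{\mathrm{init}})$ prescribed in the Tensor Program setup, and since $U^1$, $v^1$ and $U^{L+1}$ are mutually independent initial vectors with the stated covariances ($\mathrm{Gram}(\xi_0,\dots,\xi_t)$ for the $U^1\xi_r$, and identity for $v^1$ and $U^{L+1}$), this yields exactly the block $S$. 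For the second block, by Lemma~\ref{th:dotZ-first-forward} we have $\dotZ^{\hatW^l\tildex^{l-1}_0}=0$, so $Z^{\tildeh^l_0}=\hatZ^{\hatW^l\tildex^{l-1}_0}$, and the \ZHat\ rule gives $\hatZ^{\hatW^l\tildex^{l-1}_0}\sim\mathcal{N}(0,\mathbb{E}[(Z^{\tildex^{l-1}_0})^2])$, with the cross-covariances $\mathrm{cov}(\hatZ^{\hatW^l\tildex^{l-1}_0},\hatZ^{\hatW^{l'}\tildex^{l'-1}_0})=\mathbb{E}[Z^{\tildex^{l-1}_0}Z^{\tildex^{l'-1}_0}]$; but for distinct $l\neq l'$ these are built from genuinely different matrices $\hatW^l\neq\hatW^{l'}$, and the \ZHat\ rule makes the families $Z^{\mathcal{W}_{\hatW^l}}$ mutually independent across distinct $\hatW$, so the block is diagonal, equal to $D_f$. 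An entirely parallel argument using the backward-pass half of Lemma~\ref{th:dotZ-first-forward} gives $D_b$. Finally, the three blocks are mutually independent: the $\hatZ^{\hatW^l\tildex^{l-1}_0}$ and $\hatZ^{\transpose{(\hatW^l)}d\tildeh^l_0}$ are independent of everything built from the initial vectors and from each other by the independence clause of \ZHat\ (noting $\hatW^l$ and its transpose live in $\mathcal{W}\cup\transpose{\mathcal{W}}$, and the forward and backward multiplications by $\hatW^l$ are handled by the $\mathcal{W}_W$ versus $\mathcal{W}_{W'}$ separation in \ZHat), which forces the off-diagonal blocks to vanish and establishes the claimed block-diagonal Gaussian law. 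I should also note finiteness of all the diagonal entries: $\mathbb{E}[(Z^{\tildex^l_0})^2]<\infty$ and $\mathbb{E}[(Z^{d\tildeh^l_0})^2]<\infty$ by Lemma~\ref{th:tilde-pos-variance}.

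For part $(ii)$, once $Z_0$ is a Gaussian vector with a finite (hence bounded) covariance matrix, it has finite moments of every order. If $\psi:\mathbb{R}^{k}\to\mathbb{R}$ is polynomially bounded, say $|\psi(x)|\le C(1+\|x\|^r)$ for some $C,r>0$, then $|\psi(Z_0)|\le C(1+\|Z_0\|^r)<\infty$ almost surely and $\mathbb{E}[|\psi(Z_0)|]\le C(1+\mathbb{E}[\|Z_0\|^r])<\infty$, which is exactly the Gaussian moment bound; this is the same one-line argument already used in the proof of Lemma~\ref{th:pos-finite-moment}. The dimension count $t+2L$ is just the sum $(t+3)+(L-1)+(L-1)$ of the block sizes, which I would state for consistency with the lemma but which plays no role beyond labeling.

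The main obstacle — really the only nontrivial point — is carefully justifying the block-diagonal structure, i.e. the mutual independence of $S$, $D_f$, $D_b$ and the diagonality within $D_f$ and $D_b$. This requires being precise about which sets $\mathcal{W}_W$, $\transpose{\mathcal{W}}$, and $\{\hatZ^v:v\in\mathcal{V}\cup\cdots\}$ the various $\hatZ$ belong to, so that the independence clause of the \ZHat\ rule applies; in particular one must check that the forward-pass vectors $\hatW^l\tildex^{l-1}_0$ and the backward-pass vectors $\transpose{(\hatW^l)}d\tildeh^l_0$ are not correlated through any shared use of the \emph{same} matrix in the \ZHat\ sense (they are not, because $\transpose{(\hatW^l)}$ is treated as a distinct element of $\transpose{\mathcal{W}}$, and the only correlation between $\hatW^l$ and $\transpose{(\hatW^l)}$ is captured by the $\dotZ$ terms, which vanish here by Lemma~\ref{th:dotZ-first-forward}). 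Everything else is routine.
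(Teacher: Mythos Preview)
Your proposal is correct and follows essentially the same approach as the paper: part~$(i)$ via the \ZInit\ and \ZHat\ rules (together with Lemma~\ref{th:dotZ-first-forward} and Lemma~\ref{th:tilde-pos-variance}), and part~$(ii)$ via the standard polynomial-moment bound for Gaussians. The paper's own proof is in fact considerably terser than yours---it dispatches $(i)$ in a single sentence invoking the initial-vector definitions and the \ZHat\ rule---so your careful unpacking of the block-diagonal structure and the independence clauses is more detailed than what the authors wrote, but not a different method.
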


\begin{remark}
Note that the lemma stays valid even if $\psi$ does not depend on the whole list of variables inside $Z_0$ but only on a couple of them, which will be the case in the Tensor Program. Point $(ii)$ will be used repeatedly in different proofs to show that the expectations appearing in the forward and backward passes are finite.
\end{remark}

\begin{proof}
Claim $(i)$ simply comes from the definition of the initial vectors $U^1\xi_0$, $\ldots$, $U^1\xi_t$, $U^{L+1}$ and from the ZHat rule in a Tensor Program. Claim $(ii)$ then  follows because all entries in the covariance matrix are finite by Lemma~\ref{th:tilde-pos-variance}, and since $\psi$ is polynomially bounded and the moments of a Gaussian with finite variance are finite, $|\mathbb{E}[\psi(Z_0)]| \leq \mathbb{E}[|\psi(Z_0)|] < \infty$ and thus $|\psi(Z_0)| < \infty$ almost surely. 
\end{proof}

\noindent
Note that by Lemmas~\ref{th:homog-first-forward} and~\ref{th:homog-first-backward}, the first forward and backward passes of IP-LLR easily express in function of the entries of $Z_0$. Let us now take care of the forward and backward passes at $t=1$. As the dynamics evolve with time, the expression of the forward and backward passes of IP-LLR in function of $Z_0$ (or rather of some of the entries of $Z_0$) get more intricate. They are still easy to develop explicitly for $t=1$ but we choose to simply express what variables appear in the expression of the forward and backward passes instead of giving the expression explicitly.

\begin{lemma}[Multiplications by initial weight matrices vanish with polynomially bounded variables]\label{th:initial-W-vanish}
Consider the IP-LLR parameterization and let $z$ be a vector in the program such that $Z^z = \psi(Z_0)$ with $\psi$ polynomially bounded. Then, one has that for any $l \in [2, L]$:
\begin{enumerate}[(i)]
    \item if $\frac{\partial Z^z}{\partial \hatZ^{\transpose{{(\hatW^l)}} d\tildeh^{l}_0}} = \phi(Z_0)$ with $\phi$ polynomially bounded, then $Z^{W^l(0)z} = 0$.
    
    \item if $\frac{\partial Z^z}{\partial \hatZ^{\hatW^l \tildex^{l-1}_0}} = \phi(Z_0)$ with $\phi$ polynomially bounded, then $Z^{\transpose{{(W^l(0))}} z} = 0$.
\end{enumerate}
\end{lemma}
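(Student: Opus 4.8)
The plan is to use the structure of the Tensor Program's \ZMatMul\ rule: write $Z^{W^l(0)z} = \scalarlim{\omega}_l Z^{\hatW^l z} = \scalarlim{\omega}_l(\hatZ^{\hatW^l z} + \dotZ^{\hatW^l z})$, and similarly for the transpose, and then show both the $\hatZ$ and the $\dotZ$ terms contribute $0$. The key observation is that for IP-LLR we have $\omega_l = m^{-1/2}$ for $l \in [2,L]$, so $\scalarlim{\omega}_l = 0$; hence it suffices to check that $\hatZ^{\hatW^l z}$ and $\dotZ^{\hatW^l z}$ are \emph{finite} almost surely (rather than actually vanishing), since $0$ times a finite quantity is $0$. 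This is where the hypothesis that $Z^z = \psi(Z_0)$ with $\psi$ polynomially bounded, and the hypothesis on the partial derivative, come in.

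First I would treat the $\hatZ$ term: by the \ZHat\ rule, $\hatZ^{\hatW^l z} \sim \mathcal{N}\!\left(0, \mathbb{E}[(Z^z)^2]\right)$. Since $Z^z = \psi(Z_0)$ with $\psi$ polynomially bounded, $(Z^z)^2 = \psi(Z_0)^2$ is also polynomially bounded, so by Lemma~\ref{th:Z0-dist-moments}(ii) the variance $\mathbb{E}[(Z^z)^2]$ is finite; hence $\hatZ^{\hatW^l z}$ is a Gaussian with finite variance and is therefore finite almost surely. Multiplying by $\scalarlim{\omega}_l = 0$ gives $\scalarlim{\omega}_l \hatZ^{\hatW^l z} = 0$ almost surely. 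The same argument, mutatis mutandis, handles $\hatZ^{\transpose{{(\hatW^l)}} z}$ for part $(ii)$.

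Next I would handle the $\dotZ$ term. By the \ZDot\ rule (in the form recalled in the remark after Definition~\ref{def:Z0}, using that $Z^z$ is a function of $Z_0$ and in particular that $\hatW^l$ enters $Z_0$ only through $\hatZ^{\hatW^l \tildex^{l-1}_0}$ in the forward direction and $\hatZ^{\transpose{{(\hatW^l)}} d\tildeh^l_0}$ in the backward direction), we have $\dotZ^{\hatW^l z} = \mathbb{E}\!\left[\frac{\partial Z^z}{\partial \hatZ^{\transpose{{(\hatW^l)}} d\tildeh^{l}_0}}\right] Z^{d\tildeh^l_0}$. The hypothesis says the partial derivative equals $\phi(Z_0)$ with $\phi$ polynomially bounded, so by Lemma~\ref{th:Z0-dist-moments}(ii) the expectation $\mathbb{E}[\phi(Z_0)]$ is finite; and $Z^{d\tildeh^l_0}$ is finite almost surely since by Lemma~\ref{th:tilde-pos-variance} (or directly from Lemmas~\ref{th:homog-first-forward}, \ref{th:homog-first-backward} expressing it as a polynomially bounded function of $Z_0$) it has finite second moment. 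Thus $\dotZ^{\hatW^l z}$ is finite almost surely, and $\scalarlim{\omega}_l \dotZ^{\hatW^l z} = 0$. Combining the two contributions gives $Z^{W^l(0)z} = 0$, and the symmetric argument with $Z^{\tildex^{l-1}_0}$ in place of $Z^{d\tildeh^l_0}$ gives $(ii)$.

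I do not expect a serious obstacle here: the proof is essentially bookkeeping with the Tensor Program rules plus the finiteness lemma. The one point that needs a little care is making sure that the \ZDot\ formula genuinely has the simple one-term form stated in the remark — i.e.\ that when we "unwind" $Z^z$ as required by \ZDot, the only occurrences of $\hatW^l$ (resp.\ $\transpose{{(\hatW^l)}}$) that survive are through $\hatZ^{\hatW^l \tildex^{l-1}_0}$ (resp.\ $\hatZ^{\transpose{{(\hatW^l)}} d\tildeh^l_0}$). This is exactly guaranteed by the standing hypothesis $Z^z = \psi(Z_0)$, since $Z_0$ contains precisely these $\hatZ$-variables and no other products involving $\hatW^l$; so the hypothesis is doing double duty (controlling growth and pinning down which $\dotZ$-terms appear). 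The hypotheses on the partial derivatives are there precisely to ensure the $\dotZ$ coefficient is integrable, which is all we need.
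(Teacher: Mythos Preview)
Your proposal is correct and follows essentially the same approach as the paper's proof: decompose $Z^{W^l(0)z} = \scalarlim{\omega}_l(\hatZ^{\hatW^l z} + \dotZ^{\hatW^l z})$, use the polynomial-boundedness hypothesis together with Lemma~\ref{th:Z0-dist-moments} to show both $\hatZ^{\hatW^l z}$ and $\dotZ^{\hatW^l z}$ are finite almost surely, and conclude by $\scalarlim{\omega}_l = 0$. Your additional remark clarifying why the \ZDot\ term reduces to a single summand (namely because $Z^z$ depends on $\hatW^l$ only through the single coordinate of $Z_0$) is a nice explicit justification that the paper leaves implicit.
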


\begin{proof}
Let $l \in [2, L]$. We simply write
\begin{align*}
    Z^{W^l(0) z} = \scalarlim{\omega}_l \hatZ^{\hatW^l z} + \scalarlim{\omega}_l \dotZ^{\hatW^l z}
\end{align*}
where $\hatZ^{W^l(0) z} \sim \mathcal{N}(0, \mathbb{E}[(Z^z)^2])$ and the variance is finite by Lemma~\ref{th:Z0-dist-moments} because $(Z^z)^2$ is a polynomially bounded function of $Z_0$ since $Z^z$ is. This shows that $|\hatZ^{W^l(0) z}| < \infty$ almost surely and thus that $\scalarlim{\omega}_l \hatZ^{\hatW^l z} = 0$ since $\scalarlim{\omega}_l = 0$ in IPs. On the other hand,
\begin{align*}
    \dotZ^{\hatW^l z} = \mathbb{E} \left[\frac{\partial Z^z}{\partial \hatZ^{\transpose{{(\hatW^l)}} d\tildeh^{l}_0}} \right] Z^{d\tildeh^l_0}
\end{align*}
and the expectation is finite by Lemma~\ref{th:Z0-dist-moments} since $\partial Z^z /  \partial \hatZ^{\transpose{{(\hatW^l)}}} = \phi(Z_0)$ with $\phi$ polynomially bounded, and 
\begin{align*}
Z^{d\tildeh^l_0} =
    \begin{cases}
        \hatZ^{\transpose{{(\hatW^{l+1})}} d\tildeh{l+1}_0} \sigma'(\hatZ^{\hatW^l \tildex^{l-1}_0}) \text{ if } \ l \in [2, L-1] \\
         \hatZ^{U^{L+1}} \sigma'(\hatZ^{\hatW^l \tildex^{L-1}_0}) \text{ if } \ l=L
    \end{cases}
\end{align*}
In any case, $Z^{d\tildeh^l_0}$ is a polynomially bounded function of $Z_0$ and is thus finite almost surely, which entails $\scalarlim{\omega}_l \hatZ^{\hatW^l z} = 0$, and therefore $\hatZ^{W^l(0) z} = 0$ which gives $(i)$. 
\\\\
The same reasoning with $\transpose{{W^l(0)}}$ gives $(ii)$ if $\partial Z^z / \partial \hatZ^{\hatW^l \tildex^{l-1}_0} = \phi(Z_0)$ with $\phi$ polynomially bounded. 

\end{proof}

\subsubsection{The case $t=1$}

\begin{lemma}[$Z_0$ in the forward pass of IP-LLR at $t=1$]\label{th:ipllr-forward-func-Z0-1}
Consider the IP-LLR parameterization with a positively $p$-homogeneous activation function, and $p \geq 2$. Then, dropping the dependency of the forward pass on $\xi_1$, one has:
\begin{enumerate}[(i)]
    \item $Z^{h^1_1} = \psi \left(\hatZ^{U^1\xi_0}, \ \hatZ^{U^1\xi_1}, \ \hatZ^{\transpose{(\hatW^2) } d\tildeh^2_0} \right)$
    
    \item $Z^{h^l_1} = \psi \left( \hatZ^{\hatW^l \tildex^{l-1}_0}, \ \hatZ^{\transpose{(\hatW^{l+1}) } d\tildeh^{l+1}_0} \right)$, \ $l \in [2, L-1]$
    
    \item $Z^{h^L_1} = \psi \left(\hatZ^{U^{L+1}}, \ \hatZ^{\hatW^{L} \tildex^{L-1}_0} \right)$
\end{enumerate}
and 
\begin{enumerate}[(i)]
    \setcounter{enumi}{3}
    \item $\frac{\partial Z^{h^{l-1}_1}}{\hatZ^{\transpose{{(\hatW^l)}} d\tildeh^l_0}}  = \psi \left(Z_0 \right)$, $l \in [2, L]$
    
    \item $\frac{\partial Z^{h^{l}_1}}{\hatZ^{\hatW^l \tildex^{l-1}_0}}  = \psi \left(Z_0 \right)$, $l \in [2, L]$
\end{enumerate}
and \textbf{all} the different $\psi$ that appear are polynomially bounded. 
\end{lemma}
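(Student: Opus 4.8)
The statement is a purely structural claim about which members of $Z_0$ enter the expressions of the $Z$ variables of the forward pass of IP-LLR at $t=1$, together with polynomial boundedness of the associated functions. My plan is to unfold the recursions from \autoref{th:z-forward-ipllr-1} (equivalently Corollary~\ref{th:z-forward-ipllr-t-0} with $t=1$) layer by layer and, at each step, track two things simultaneously: (a) which entries of $Z_0$ appear as arguments, and (b) that the function producing $Z^{h^l_1}$ from those arguments is polynomially bounded. The polynomial boundedness is essentially automatic because $\sigma$ and $\sigma'$ are pseudo-Lipschitz (hence polynomially bounded, see Appendix~\ref{sec:pos-homog}), and the only other operations involved are sums, products, and multiplication by the fixed scalars $\scalarlim{\chi}_s$, $\transpose{\xi_s}\xi$, and the expectations $\mathbb{E}[Z^{\tildex^{l-1}_0} Z^{x^{l-1}_1}]$, all of which are finite constants (the latter by Lemma~\ref{th:Z0-dist-moments}). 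The $\dotZ$ contributions vanish: at $t=1$ the initial weight matrices still satisfy $\scalarlim{\omega}_l Z^{\hatW^l x^{l-1}_1} = 0$ for $l \in [2,L]$ by \autoref{th:ipllr-initial-weight-vanish-t-geq-1} (used via Corollary~\ref{th:z-forward-ipllr-t-0}), so $Z^{h^l_1}$ involves no $\hatZ^{\hatW^l x^{l-1}_1}$ term and reduces to a pure function of earlier quantities.

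\textbf{Key steps in order.} First, for $l=1$: by Lemma~\ref{th:z-forward-ipllr-1}(i), $Z^{h^1_1} = Z^{U^1\xi} + Z^{v^1} - \eta\scalarlim{\chi}_0(\transpose{\xi_0}\xi+1)Z^{d\tildeh^1_0}$, and since $Z^{d\tildeh^1_0} = \hatZ^{\transpose{(\hatW^2)}d\tildeh^2_0}\,\sigma'(\hatZ^{\hatW^1\xi_0}+\hatZ^{v^1})$ by Lemma~\ref{th:dotZ-first-forward} (all $\dotZ$ are $0$ at initialization) and $\hatZ^{\hatW^1\xi_0}=\hatZ^{U^1\xi_0}$, we read off claim (i) with $\xi=\xi_1$; the function is polynomially bounded because $\sigma'$ is. Second, for $l \in [2,L-1]$: by Corollary~\ref{th:z-forward-ipllr-t-0}(ii) at $t=1$, $Z^{h^l_1} = -\eta\scalarlim{\chi}_0\mathbb{E}[Z^{\tildex^{l-1}_0}Z^{x^{l-1}_1}]\,Z^{d\tildeh^l_0}$, and $Z^{d\tildeh^l_0} = \hatZ^{\transpose{(\hatW^{l+1})}d\tildeh^{l+1}_0}\,\sigma'(\hatZ^{\hatW^l\tildex^{l-1}_0})$ again by Lemma~\ref{th:dotZ-first-forward}; this gives (ii). Third, for $l=L$: same formula but now $Z^{d\tildeh^L_0} = \hatZ^{U^{L+1}}\,\sigma'(\hatZ^{\hatW^L\tildex^{L-1}_0})$, giving (iii). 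Fourth, for the derivative claims (iv) and (v), I differentiate the closed-form expressions just obtained with respect to the indicated variable: for (v), $\partial Z^{h^l_1}/\partial\hatZ^{\hatW^l\tildex^{l-1}_0}$ is a constant times $\hatZ^{\transpose{(\hatW^{l+1})}d\tildeh^{l+1}_0}\,\sigma''(\hatZ^{\hatW^l\tildex^{l-1}_0})$ (using $p\geq 2$ so $\sigma''$ exists and is polynomially bounded, see Appendix~\ref{sec:pos-homog}), which is a polynomially bounded function of $Z_0$; for (iv), $Z^{h^{l-1}_1}$ for $l-1\geq 2$ depends on $\hatZ^{\transpose{(\hatW^l)}d\tildeh^l_0}$ only through the factor $Z^{d\tildeh^{l-1}_0}$ when $l-1 = l-1$, i.e.\ I need the partial of $Z^{h^{l-1}_1}$ w.r.t.\ $\hatZ^{\transpose{(\hatW^l)}d\tildeh^l_0}$ — but $Z^{h^{l-1}_1}$ involves $\hatZ^{\transpose{(\hatW^l)}d\tildeh^l_0}$ exactly through $Z^{d\tildeh^{l-1}_0}$ which contains $\hatZ^{\transpose{(\hatW^{(l-1)+1})}d\tildeh^{(l-1)+1}_0} = \hatZ^{\transpose{(\hatW^l)}d\tildeh^l_0}$ linearly, so the partial is a constant times $\sigma'(\hatZ^{\hatW^{l-1}\tildex^{l-2}_0})$, polynomially bounded; the case $l=2$ ($l-1=1$) is handled separately using the explicit form of $Z^{h^1_1}$, where the partial w.r.t.\ $\hatZ^{\transpose{(\hatW^2)}d\tildeh^2_0}$ is $-\eta\scalarlim{\chi}_0(\transpose{\xi_0}\xi_1+1)\sigma'(\hatZ^{U^1\xi_0}+\hatZ^{v^1})$, again polynomially bounded.

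\textbf{Main obstacle.} The only genuinely delicate point is establishing that $Z^{h^l_1}$ at $t=1$ contains no residual $\hatZ^{\hatW^l x^{l-1}_1}$ term — that is, that the multiplication of $x^{l-1}_1$ by the initial weight $W^l(0)$ truly drops out in the limit for $l \in [2,L]$. This is exactly the content of Corollary~\ref{th:z-forward-ipllr-t-0}, which relies on \autoref{th:ipllr-initial-weight-vanish-t-geq-1}; Remark~\ref{remark:no-circ-logic-vanish} confirms there is no circularity in invoking it here. Everything else is bookkeeping: once the closed forms of $Z^{h^l_1}$ in terms of $Z_0$ are in hand, reading off the variable dependencies is immediate and polynomial boundedness follows from the pseudo-Lipschitz property of $\sigma$, $\sigma'$, and (for the derivative claims) the polynomial boundedness of $\sigma''$ when $p\geq 2$, combined with finiteness of all the scalar coefficients via Lemma~\ref{th:Z0-dist-moments}. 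I would also note explicitly that $x^{l-1}_1 = \sigma(h^{l-1}_1)$, so by \ZNonLin{} $Z^{x^{l-1}_1} = \sigma(Z^{h^{l-1}_1})$ inherits the same $Z_0$-dependence and polynomial boundedness, which is what feeds the induction from layer to layer.
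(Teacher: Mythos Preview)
Your overall plan---unfold the layer-by-layer recursion from the $t=1$ forward-pass formulas, identify which $\hatZ$ variables enter, and verify polynomial boundedness via the pseudo-Lipschitz property of $\sigma,\sigma',\sigma''$---is the same as the paper's, and your explicit computations of the partial derivatives for claims (iv) and (v) match the paper's almost verbatim.

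There is, however, a genuine circularity in how you dispose of the term $\scalarlim{\omega}_l Z^{\hatW^l x^{l-1}_1}$. You invoke Corollary~\ref{th:z-forward-ipllr-t-0}, which rests on \autoref{th:ipllr-initial-weight-vanish-t-geq-1}. But the present lemma is itself one of the building blocks of \autoref{th:ipllr-initial-weight-vanish-t-geq-1}: the proof of that theorem (for $t=1$) goes through Corollary~\ref{th:z-mult-w-0-vanish}, which in turn is derived from Lemmas~\ref{th:ipllr-forward-func-Z0-1} and~\ref{th:ipllr-backward-func-Z0-1}. You have misread Remark~\ref{remark:no-circ-logic-vanish}: that remark asserts that the dependency chain Corollary~\ref{th:z-forward-ipllr-t-0} $\to$ \autoref{th:ipllr-initial-weight-vanish-t-geq-1} $\to$ (lemmas of Appendix~\ref{sec:ipllr-induct-dynamics}) $\to$ \autoref{th:z-forward-ipllr-t} is acyclic \emph{because} the lemmas in Appendix~\ref{sec:ipllr-induct-dynamics} use only \autoref{th:z-forward-ipllr-t}, not the corollary. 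It does \emph{not} license invoking the corollary inside those very lemmas.

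The fix is simple and is what the paper does: start from \autoref{th:z-forward-ipllr-t} (equivalently Lemma~\ref{th:z-forward-ipllr-1}) directly, keeping the term $\scalarlim{\omega}_l Z^{\hatW^l x^{l-1}_1}$. Since $\scalarlim{\omega}_l = 0$ for integrable parameterizations and $Z^{\hatW^l x^{l-1}_1}$ is a well-defined finite random variable in the Tensor Program, the product vanishes. If you want to make this fully self-contained (as the paper does more explicitly at $t=2$ in Lemma~\ref{th:ipllr-forward-func-Z0-2}), proceed by induction on $l$: once claims (i)--(iv) are established at layer $l-1$, you know $Z^{x^{l-1}_1}$ and $\partial Z^{x^{l-1}_1}/\partial \hatZ^{\transpose{(\hatW^l)}d\tildeh^l_0}$ are polynomially bounded functions of $Z_0$, so Lemma~\ref{th:initial-W-vanish} gives $\scalarlim{\omega}_l Z^{\hatW^l x^{l-1}_1} = 0$ and you can proceed to layer $l$. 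Either route avoids the circularity; the rest of your argument then goes through unchanged.
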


\begin{remark}\
\begin{enumerate}
    \item Recall that we simply use $\psi$ or $\phi$ to mean that the variable \textbf{is a function of} the arguments of $\psi$ (or $\phi$) only, and that the different $\psi$ and $\phi$ which appear in the different claims $(i)$ to $(v)$ are \textbf{not} actually the same. 
    
    \item For the partial derivatives we chose not to make a precise statement on which variables exactly appear in the expression as this will not matter and would only over-complicate things for close to none added-value. 
    
    \item Note that with the claims above, one can  prove that $\scalarlim{\omega}_l Z^{\hatW^l x^{l-1}_1} = 0$ because both of the terms $\hatZ$ and $\dotZ$ defining $Z^{\hatW^l x^{l-1}_1}$ are polynomially bounded functions of Gaussians which has finite covariance matrices, and $\scalarlim{\omega}_l = 0$ in IPs. 
\end{enumerate}
\end{remark}

\begin{proof}
Using \autoref{th:z-forward-ipllr-t} with $\xi = \xi_1$ and $t=1$ we have claim $(i)$ because, first $\hatZ^{d\tildex^1_0} = \hatZ^{\transpose{{(\hatW^2)}} d\tildeh^2_0}$, and second $\sigma'$ is polynomially bounded (see Appendix~\ref{sec:pos-homog}). Claim $(ii)$ also stems from \autoref{th:z-forward-ipllr-t} since $\hatZ^{d\tildex^l_0} = \hatZ^{\transpose{{(\hatW^{l+1})}} d\tildeh^{l+1}_0}$, $\hatZ^{\tildeh^l_0} = \hatZ^{\hatW^l \tildex^{l-1}_0}$, and $\sigma'$ is polynomially bounded. Finally, claim $(iii)$ also stems from \autoref{th:z-forward-ipllr-t} since $\hatZ^{d\tildex^L_0} = Z^{U^{L+1}}$, $\hatZ^{\tildeh^L_0} = \hatZ^{\hatW^l \tildex^{L-1}_0}$, and $\sigma'$ is polynomially bounded.
\\\\
From \autoref{th:z-forward-ipllr-t}, we get:
\begin{align*}
    \frac{\partial Z^{h^1_1}}{\partial \hatZ^{\transpose{{(\hatW^2)}} d\tildeh^{2}_0} } = - \eta \scalarlim{\chi}_0 (\transpose{\xi_0} \xi_1) \sigma'(\hatZ^{U^1 \xi_0})
\end{align*}
For $l \in [3, L]$, from \autoref{th:z-forward-ipllr-t}, we get
\begin{align*}
        \frac{\partial Z^{h^{l-1}_1}}{Z^{\transpose{{(\hatW^l)}} d\tildeh^{l}_0}} = - \eta \scalarlim{\chi}_0 \mathbb{E}[Z^{\tildex^{l-2}_0} Z^{\tildex^{l-2}_1}] \sigma'(\hatZ^{\hatW^l \tildex^{l-1}_0} )
\end{align*}
which immediately gives claim $(iv)$ since $\sigma'$ is polynomially bounded and with claim $(ii)$ and Lemma~\ref{th:Z0-dist-moments}, we also have $|\mathbb{E}[Z^{\tildex^{l-2}_0} Z^{\tildex^{l-2}_1}]| < \infty$.
Similarly, for $l \in [2, L]$, from \autoref{th:z-forward-ipllr-t}, we get
\begin{align*}
        \frac{\partial Z^{h^{l}_1}}{\partial \hatZ^{\hatW^l \tildex^{l}_0} } = - \eta \scalarlim{\chi}_0 \mathbb{E}[Z^{\tildex^{l-1}_0} Z^{\tildex^{l-1}_1}] Z^{d\tildex^l_0} \sigma''(\hatZ^{\hatW^l \tildex^{l-1}_0} )
\end{align*}
and $Z^{d\tildex^{l-1}_0} = \hatZ^{\transpose{{(\hatW^{l+1})}} d\tildeh^{l+1}_0}$ if $l \in [2, L-1]$, and $Z^{d\tildex^{l-1}_0} = \hatZ^{U^{L+1}}$ if $l=L$. Since the expectation is finite by claim $(ii)$ and Lemma~\ref{th:Z0-dist-moments}, and since $\sigma''$ is polynomially bounded, we get claim $(v)$.
\end{proof}

\begin{lemma}[$Z_0$ in the backward pass of IP-LLR at $t=1$]\label{th:ipllr-backward-func-Z0-1}
Consider the IP-LLR parameterization with a positively $p$-homogeneous activation function, and $p \geq 2$. Then, dropping the dependency of the forward and backward passes on $\xi_1$, one has:
\begin{enumerate}[(i)]
    \item $Z^{d\tildex^L_1} = \psi \left( \hatZ^{U^{L+1}}, \ \hatZ^{\hatW^{L} \tildex^{L-1}_0} \right)$, \\
    $Z^{d\tildeh^L_1} = \psi \left( \hatZ^{U^{L+1}}, \ \hatZ^{\hatW^{L} \tildex^{L-1}_0} \right)$
    
    \item $Z^{d\tildex^{l-1}_1} = \psi \left( \hatZ^{\hatW^{l-1} \tildex^{l-2}} \right)$, \\
    $Z^{d\tildeh^{l-1}_1} = \psi \left( \hatZ^{\hatW^{l-1} \tildex^{l-2}}, \  \hatZ^{\transpose{{(\hatW^l)}} d\tildeh^l_0} \right)$, $l \in [3, L]$
    
    \item $Z^{d\tildex^1_1} = \psi \left( \hatZ^{U^1 \xi_0} \right)$, \\
    $Z^{d\tildeh^1_1} = \psi \left( \hatZ^{U^1 \xi_0}, \ \hatZ^{U^1 \xi_1}, \ \hatZ^{\transpose{(\hatW^2) } d\tildeh^2_0} \right)$
\end{enumerate}
and 
\begin{enumerate}[(i)]
    \setcounter{enumi}{3}
    \item $\frac{\partial Z^{d\tildeh^l_1}}{\partial \hatZ^{\hatW^l\tildex^{l-1}_0}} = \psi(Z_0)$, $l \in [1, L]$ 
    
    \item $\frac{\partial Z^{d\tildeh^l_1}}{\partial \hatZ^{\transpose{{(\hatW^{l+1})}} d\tildeh^{l+1}_0}} = \psi(Z_0)$, $l \in [1, L-1]$ 
\end{enumerate}
and \textbf{all} the different $\psi$ that appear are polynomially bounded. 
\end{lemma}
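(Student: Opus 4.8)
\textbf{Proof plan for Lemma~\ref{th:ipllr-backward-func-Z0-1}.}
The plan is to prove claims $(i)$, $(ii)$, $(iii)$ by a downward induction on the layer index $l$ (from $l = L$ down to $l = 1$), using the explicit formulas for the backward-pass $Z$-variables of IP-LLR at $t = 1$ provided by Lemma~\ref{th:backward-ipllr-t} together with \autoref{th:z-backward-ipllr-t} (specialized to $t = 1$, so the sums over $s$ are empty), and then to read off claims $(iv)$ and $(v)$ by differentiating the resulting closed forms. Throughout I will repeatedly use the elementary closure facts: a product of polynomially bounded functions is polynomially bounded, and the composition of a polynomially bounded one-variable function with a polynomially bounded function is polynomially bounded; also that $\sigma$, $\sigma'$, $\sigma''$ are polynomially bounded (Appendix~\ref{sec:pos-homog}, using $p \geq 2$). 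The crucial input from Lemma~\ref{th:ipllr-forward-func-Z0-1} is that each $Z^{h^l_1}$ is already known to be a polynomially bounded function of $Z_0$ (indeed of only a couple of its entries), so any expectation of the form $\mathbb{E}[Z^{d\tildeh^l_0} Z^{d\tildeh^l_1}]$ or $\mathbb{E}[Z^{\tildex^{l-1}_0} Z^{x^{l-1}_1}]$ that appears as a coefficient is finite by Lemma~\ref{th:Z0-dist-moments}, hence a harmless constant.

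First, for the base case $l = L$: by Lemma~\ref{th:backward-ipllr-t}$(i)$ with $t = 1$ one has $d\tildex^L_1 = U^{L+1} - \eta \chi_0 \tildex^L_0$, so by \ZNonLin\ and the facts $Z^{U^{L+1}} = \hatZ^{U^{L+1}}$, $Z^{\tildex^L_0} = \sigma(\hatZ^{\hatW^L \tildex^{L-1}_0})$ (the latter from Lemma~\ref{th:dotZ-first-forward}), we get $Z^{d\tildex^L_1}$ as a polynomially bounded function of $(\hatZ^{U^{L+1}}, \hatZ^{\hatW^L \tildex^{L-1}_0})$; multiplying by $\sigma'(Z^{h^L_1})$, which by Lemma~\ref{th:ipllr-forward-func-Z0-1}$(iii)$ is a polynomially bounded function of the same two variables, yields $Z^{d\tildeh^L_1}$ in the same form, giving claim $(i)$. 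For the induction step $l \in [3, L]$: Lemma~\ref{th:backward-ipllr-t}$(ii)$ at $t = 1$ gives $d\tildex^{l-1}_1 = \omega_l \transpose{(\hatW^l)} d\tildeh^l_1 - \eta \chi_0 \frac{\transpose{(d\tildeh^l_0)} d\tildeh^l_1}{m} \tildex^{l-1}_0$; the inner-product scalar converges a.s.\ to a finite constant by the Master Theorem (Lemma~\ref{th:Z0-dist-moments}), and the matrix-multiplication term vanishes in the $Z$-limit — this is where Lemma~\ref{th:initial-W-vanish}$(ii)$ is invoked, using the induction hypothesis that $\partial Z^{d\tildeh^l_1}/\partial \hatZ^{\hatW^l \tildex^{l-1}_0}$ is a polynomially bounded function of $Z_0$ (claim $(iv)$ at layer $l$, which we prove alongside). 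Hence $Z^{d\tildex^{l-1}_1} = -\eta\scalarlim{\chi}_0\, \mathbb{E}[Z^{d\tildeh^l_0} Z^{d\tildeh^l_1}]\, Z^{\tildex^{l-1}_0}$, a polynomially bounded function of $\hatZ^{\hatW^{l-1}\tildex^{l-2}_0}$ only; multiplying by $\sigma'(Z^{h^{l-1}_1})$, which by Lemma~\ref{th:ipllr-forward-func-Z0-1}$(ii)$ depends only on $(\hatZ^{\hatW^{l-1}\tildex^{l-2}_0}, \hatZ^{\transpose{(\hatW^l)}d\tildeh^l_0})$, gives claim $(ii)$. The layer $l = 1$ case (claim $(iii)$) is handled the same way using $Z^{h^1_1}$ from Lemma~\ref{th:ipllr-forward-func-Z0-1}$(i)$, noting $d\tildex^1_1$ no longer gets multiplied by a $\transpose{(\hatW^1)}$ factor since $\hatW^1 = U^1$ is an initial vector.

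Finally, claims $(iv)$ and $(v)$ follow by differentiating the closed forms just obtained: since each $Z^{d\tildeh^l_1}$ has been written as $(\text{constant})\times Z^{\tildex^{l-1}_0}\times \sigma'(Z^{h^l_1})$ (for $l \in [2,L]$, with the obvious variant at $l=1$ and the $U^{L+1}$-variant at $l=L$), its partial derivatives with respect to $\hatZ^{\hatW^l\tildex^{l-1}_0}$ and $\hatZ^{\transpose{(\hatW^{l+1})}d\tildeh^{l+1}_0}$ produce only $\sigma$, $\sigma'$, $\sigma''$ factors evaluated at polynomially bounded functions of $Z_0$ (using the chain rule on $Z^{h^l_1}$ via Lemma~\ref{th:ipllr-forward-func-Z0-1}$(iv)$–$(v)$), hence are themselves polynomially bounded functions of $Z_0$. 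The main obstacle — and the reason the downward induction must be threaded carefully — is the mutual dependence between proving claims $(ii)$ and $(iv)$: to apply Lemma~\ref{th:initial-W-vanish}$(ii)$ at step $l$ one needs the polynomial-boundedness of $\partial Z^{d\tildeh^l_1}/\partial \hatZ^{\hatW^l\tildex^{l-1}_0}$, which itself comes from the closed form at layer $l$; so the induction hypothesis at each stage must bundle together $(i)$–$(v)$ at all layers $\geq l$, and the step must establish the closed form for $Z^{d\tildeh^{l-1}_1}$ \emph{before} extracting its derivatives. Everything else is routine bookkeeping of which entries of $Z_0$ appear and repeated appeals to Lemma~\ref{th:Z0-dist-moments} for finiteness of the coefficient expectations.
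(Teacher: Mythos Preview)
Your proposal is correct and follows essentially the same downward induction as the paper: establish claim $(i)$ at layer $L$, then at each step use Lemma~\ref{th:initial-W-vanish}$(ii)$ (fed by the already-established polynomial boundedness of $\partial Z^{d\tildeh^l_1}/\partial \hatZ^{\hatW^l\tildex^{l-1}_0}$) to kill the $\scalarlim{\omega}_l Z^{\transpose{(\hatW^l)} d\tildeh^l_1}$ term, and bundle claims $(ii)$, $(iv)$, $(v)$ together in the induction hypothesis. Two cosmetic slips to fix: in your closed form for $Z^{d\tildeh^l_1}$ the factor should be $Z^{\tildex^{l}_0}$ (not $Z^{\tildex^{l-1}_0}$), and your remark about layer~1 is off---the step $l=2 \to l-1=1$ still requires killing $\scalarlim{\omega}_2 Z^{\transpose{(\hatW^2)} d\tildeh^2_1}$ exactly as in the generic step; the only difference is that $Z^{\tildex^1_0} = \sigma(\hatZ^{U^1\xi_0} + \hatZ^{v^1})$ depends on first-layer initial vectors rather than on some $\hatZ^{\hatW^{l-1}\tildex^{l-2}_0}$.
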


\begin{proof}
For the backward pass, we have by definition of the tilde variables for $t \geq 1$, $d\tildex^L_1 = w^{L+1}(1) = U^{L+1} - \eta \chi_0 \tildex^L_1$ by Lemma~\ref{th:ipllr-first-weight-updates}, and thus 
\begin{align*}
    Z^{d\tildex^L_1} = \hatZ^{U^{L+1}} - \eta \scalarlim{\chi}_0 \sigma(\hatZ^{\hatW^l \tildex^{L-1}_0})
\end{align*}
Then, 
\begin{align*}
    Z^{d\tildeh^L_1} = Z^{d\tildex^L_1} \sigma'(Z^{h^L_1})
\end{align*}
which gives claim $(i)$ since $\sigma$ and $\sigma'$ are polynomially bounded, and $Z^{h^L_1} = \psi (\hatZ^{U^{L+1}}, \ \hatZ^{\hatW^{L} \tildex^{L-1}_0} )$ and $\psi$ is polynomially bounded by Lemma~\ref{th:ipllr-forward-func-Z0-1}.  \\\\
For $l=L-1$, we have 
\begin{align*}
    d\tildex^{L-1}_1 &= \transpose{{(W^L(1))}} d\tildeh^L_1 \\
    &= \omega_L \transpose{{(\hatW^l)}} d\tildeh^L_1 - \eta \chi_0 \frac{\transpose{{(d\tildeh^L_0)}} d\tildeh^L_1}{m} \tildex^{L-1}_0 
\end{align*}
which gives 
\begin{align*}
    Z^{d\tildex^{L-1}_1} = \scalarlim{\omega}_L Z^{\transpose{{(\hatW^l)}} d\tildeh^L_1} - \eta \scalarlim{\chi}_0 \mathbb{E}[Z^{d\tildeh^L_0} Z^{d\tildeh^L_1}] Z^{\tildex^{L-1}_0 }
\end{align*}
Now, by the previous expression of $Z^{d\tildeh^L_1}$ and by Lemma~\ref{th:ipllr-forward-func-Z0-1}, we get 
\begin{align*}
    \frac{\partial Z^{d\tildeh^L_1}}{\partial \hatW^l \tildex^{L-1}_0} =& -\eta \scalarlim{\chi}_0 \sigma'(\hatZ^{\hatW^l \tildex^{L-1}_0}) \sigma'(Z^{h^L_1}) +  Z^{d\tildex^L_1} \, \frac{\partial Z^{h^L_1}}{\hatW^l \tildex^{L-1}_0} \sigma''(Z^{h^L_1})
\end{align*}
and by claim $(i)$ and Lemma~\ref{th:ipllr-forward-func-Z0-1} we get 
\begin{align*}
    \frac{\partial Z^{d\tildeh^L_1}}{\partial \hatW^l \tildex^{L-1}_0} = \psi(Z_0)
\end{align*}
with $\psi$ polynomially bounded since $\sigma'$ and $\sigma''$ are polynomially bounded. Therefore, by Lemma~\ref{th:initial-W-vanish}, we get $\scalarlim{\omega}_L Z^{\transpose{{(\hatW^l)}} d\tildeh^L_1} = 0$.
\\\\
We thus simply get 
\begin{align*}
    Z^{d\tildex^{L-1}_1} = - \eta \scalarlim{\chi}_0 \mathbb{E}[Z^{d\tildeh^L_0} Z^{d\tildeh^L_1}] Z^{\tildex^{L-1}_0 }
\end{align*}
$Z^{d\tildeh^L_0}$ and $Z^{d\tildeh^L_1}$ are polynomially bounded functions of $Z_0$ and thus so is $Z^{d\tildeh^L_0} Z^{d\tildeh^L_1}$, and by Lemma~\ref{th:Z0-dist-moments}, $|\mathbb{E}[Z^{d\tildeh^L_0} Z^{d\tildeh^L_1}]| < \infty$. Since $Z^{d\tildex^{L-1}_1} = \psi(Z^{\tildex^{L-1}_0 })$ with $\psi$ polynomially bounded, we thus get $Z^{d\tildex^{L-1}_1} = \psi(\hatZ^{\hatW^{l-1} \tildex^{L-2}_0})$ and $\psi$ is polynomially bounded (indeed: $\psi(z) = - \eta \scalarlim{\chi}_0 \mathbb{E}[Z^{d\tildeh^L_0} Z^{d\tildeh^L_1}] \sigma(z)$). 
\\\\
We have
\begin{align*}
    Z^{d\tildeh^{L-1}_1} = Z^{d\tildex^{L-1}_1} \sigma'(Z^{h^{L-1}_1})
\end{align*}
and since by Lemma~\ref{th:ipllr-forward-func-Z0-1}, $Z^{h^{L-1}_1} = \psi(\hatZ^{\hatW^{l-1} \tildex^{L-2}_0}, \hatZ^{\transpose{{(\hatW^l)}} d\tildeh^L_0})$ with $\psi$ polynomially bounded, by the previous result for $Z^{d\tildex^{L-1}_1}$ and since $\sigma'$ is polynomially bounded we  get
\begin{align*}
    Z^{d\tildeh^{L-1}_1} = \psi(\hatZ^{\hatW^{l-1} \tildex^{L-2}_0}, \hatZ^{\transpose{{(\hatW^l)}} d\tildeh^L_0})
\end{align*}
with $\psi$ polynomially bounded. 
\\\\
We have
\begin{align*}
    \frac{\partial Z^{d\tildeh^{L-1}_1}}{\partial \hatZ^{\hatW^{l-1} \tildex^{L-2}_0}} =& \frac{\partial Z^{d\tildex^{L-1}_1}}{\partial \hatZ^{\hatW^{l-1} \tildex^{L-2}_0}} \sigma'(Z^{h^{L-1}_1}) + Z^{d\tildex^{L-1}_1} \frac{\partial Z^{h^{L-1}_1}}{\partial \hatZ^{\hatW^{l-1} \tildex^{L-2}_0}} \sigma''(Z^{h^{L-1}_1}) \\
    =& - \eta \scalarlim{\chi}_0 \mathbb{E}[Z^{d\tildeh^L_0} Z^{d\tildeh^L_1}] \sigma'(\hatZ^{\hatW^{l-1} \tildex^{L-2}_0}) \sigma'(Z^{h^{L-1}_1}) \ + \\
    &Z^{d\tildex^{L-1}_1} \frac{\partial Z^{h^{L-1}_1}}{\partial \hatZ^{\hatW^{l-1} \tildex^{L-2}_0}} \sigma''(Z^{h^{L-1}_1})
\end{align*}
By Lemma~\ref{th:ipllr-forward-func-Z0-1} and since $\sigma'$ and $\sigma''$ are polynomially bounded, and we have already proven that $Z^{d\tildex^{L-1}_1} = \psi(Z_0)$ with $\psi$ polynomially bounded, as well as $|\mathbb{E}[Z^{d\tildeh^L_0} Z^{d\tildeh^L_1}]| < \infty$, we get 
\begin{align*}
    \frac{\partial Z^{d\tildeh^{L-1}_1}}{\partial \hatZ^{\hatW^{l-1} \tildex^{L-2}_0}} = \psi(Z_0)
\end{align*}
with $\psi$ polynomially bounded. 
\\\\
Similarly, we have 
\begin{align*}
   \frac{\partial Z^{d\tildeh^{L-1}_1}}{\partial \hatZ^{\transpose{{(\hatW^{L})}} d\tildeh^{L}_0}} =& \frac{\partial Z^{d\tildex^{L-1}_1}}{\partial Z^{\transpose{{(\hatW^{L})}} d\tildeh^{L}_0}} \sigma'(Z^{h^{L-1}_1}) + Z^{d\tildex^{L-1}_1} \frac{\partial Z^{h^{L-1}_1}}{\partial Z^{\transpose{{(\hatW^{L})}} d\tildeh^{L}_0}} \sigma''(Z^{h^{L-1}_1}) \\
    =& Z^{d\tildex^{L-1}_1} \frac{\partial Z^{h^{L-1}_1}}{\partial Z^{\transpose{{(\hatW^{L})}} d\tildeh^{L}_0}} \sigma''(Z^{h^{L-1}_1})
\end{align*}
Now, we have shown above that $Z^{d\tildex^{L-1}_1} = \psi(Z_0)$ with $\psi$ polynomially bounded, and by Lemma~\ref{th:ipllr-forward-func-Z0-1} we have that both $\partial Z^{h^{L-1}_1} / \partial Z^{\transpose{{(\hatW^{L})}} d\tildeh^{L}_0}$ and $Z^{h^{L-1}_1}$ are polynomially bounded functions of $Z_0$, which gives 
\begin{align*}
    \frac{\partial Z^{d\tildeh^{L-1}_1}}{\partial \hatZ^{\transpose{{(\hatW^{L})}} d\tildeh^{L}_0}} = \psi(Z_0)
\end{align*}
with $\psi$ polynomially bounded. 
\\\\
Let $l \in [2, L-1]$ and assume claims $(ii)$, $(iv)$ and $(v)$ are true for layer $l$. We have 
\begin{align*}
    Z^{d\tildex^{l-1}_1} = \scalarlim{\omega}_l Z^{\transpose{{(\hatW^l)}} d\tildeh^l_1} - \eta \scalarlim{\chi}_0 \mathbb{E}[Z^{d\tildeh^l_0} Z^{d\tildeh^l_1}] Z^{\tildex^{l-1}_0}
\end{align*}
Since by the induction hypothesis $\partial Z^{d\tildeh^l_1} / \partial \hatZ^{\hatW^l\tildex^{l-1}_0} = \psi(Z_0)$ with $\psi$ polynomially bounded, and $Z^{d\tildeh^l_1}$ is a polynomially bounded function of $Z_0$, by Lemma~\ref{th:initial-W-vanish} we get $\scalarlim{\omega}_l Z^{\transpose{{(\hatW^l)}} d\tildeh^l_1} = 0$. Then, we simply get
\begin{align*}
     Z^{d\tildex^{l-1}_1} = - \eta \scalarlim{\chi}_0 \mathbb{E}[Z^{d\tildeh^l_0} Z^{d\tildeh^l_1}] Z^{\tildex^{l-1}_0}
\end{align*}
Again here, since both $Z^{d\tildeh^l_0}$ and $Z^{d\tildeh^l_1}$ are polynomially bounded functions of $Z_0$, then so is $Z^{d\tildeh^l_0} Z^{d\tildeh^l_1}$, which shows by Lemma~\ref{th:Z0-dist-moments} that $|\mathbb{E}[Z^{d\tildeh^l_0} Z^{d\tildeh^l_1}]| < \infty$. If $l \geq 3$, since $Z^{\tildex^{l-1}_0} = \sigma(\hatZ^{\hatW^{l-1} \tildex^{l-2}_0})$ and $\sigma$ is polynomially bounded, we get that 
\begin{align*}
    Z^{d\tildex^{l-1}_1} = \psi(\hatZ^{\hatW^{l-1} \tildex^{l-2}_0})
\end{align*}
If $l=2$, since $Z^{\tildex^{1}_0} = \sigma(Z^{U^1 \xi_0})$ and $\sigma$ is polynomially bounded we get:
\begin{align*}
    Z^{d\tildex^{l-1}_1} = \psi(Z^{U^1 \xi_0})
\end{align*}
with $\psi$ polynomially bounded. 
\\\\
We then have
\begin{align*}
    Z^{d\tildeh^{l-1}_1} = Z^{d\tildex^{l-1}_1} \sigma'(Z^{h^{l-1}_1})
\end{align*}
and thus
\begin{align*}
    \frac{\partial Z^{d\tildeh^{l-1}_1}}{\partial \hatZ^{\hatW^{l-1} \tildex^{l-2}_0}} =& \frac{\partial Z^{d\tildex^{l-1}_1}}{\partial \hatZ^{\hatW{^l-1}(0) \tildex^{l-2}_0}} \sigma'(Z^{h^{l-1}_1}) + Z^{d\tildex^{l-1}_1} \frac{\partial Z^{h^{l-1}_1}}{\partial \hatZ^{\hatW^{l-1} \tildex^{l-2}_0}} \sigma''(Z^{h^{l-1}_1}) \\
    =& - \eta \scalarlim{\chi}_0 \mathbb{E}[Z^{d\tildeh^l_0} Z^{d\tildeh^l_1}] \sigma'(\hatZ^{\hatW^{l-1} \tildex^{l-2}_0}) \sigma'(Z^{h^{l-1}_1}) \ + \\
    &Z^{d\tildex^{l-1}_1} \frac{\partial Z^{h^{l-1}_1}}{\partial \hatZ^{\hatW^{l-1} \tildex^{l-2}_0}} \sigma''(Z^{h^{l-1}_1}) \\
\end{align*}
By Lemma~\ref{th:ipllr-forward-func-Z0-1} as well as the previous result on $Z^{d\tildex^{l-1}_1}$, and since $|\mathbb{E}[Z^{d\tildeh^l_0} Z^{d\tildeh^l_1}]| < \infty$, and $\sigma'$ and $\sigma''$ are polynomially bounded, we get that 
\begin{align*}
    \frac{\partial Z^{d\tildeh^{l-1}_1}}{\partial \hatZ^{\hatW^{l-1} \tildex^{l-2}_0}} = \psi(Z_0)
\end{align*}
with $\psi$ polynomially bounded.
\\\\
Similarly
\begin{align*}
    \frac{\partial Z^{d\tildeh^{l-1}_1}}{\partial \hatZ^{\transpose{{(\hatW^{L})}} d\tildeh^{l}_0}} =& \frac{\partial Z^{d\tildex^{l-1}_1}}{\partial \hatZ^{\transpose{{(\hatW^{l+1})}} d\tildeh^{l}_0}} \sigma'(Z^{h^{l-1}_1}) + Z^{d\tildex^{l-1}_1} \frac{\partial Z^{h^{l-1}_1}}{\partial \hatZ^{\transpose{{(\hatW^{L})}} d\tildeh^{l}_0}} \sigma''(Z^{h^{l-1}_1}) \\
    =& Z^{d\tildex^{l-1}_1} \frac{\partial Z^{h^{l-1}_1}}{\partial \hatZ^{\transpose{{(\hatW^{L})}} d\tildeh^{l}_0}} \sigma''(Z^{h^{l-1}_1}) \\
\end{align*}
and the three quantities in the product are polynomially bounded functions of $Z_0$ (shown above for the first term and by Lemma~\ref{th:ipllr-forward-func-Z0-1} for the two other terms). We thus get
\begin{align*}
    \frac{\partial Z^{d\tildeh^{l-1}_1}}{\partial \hatZ^{\transpose{{(\hatW^{L})}} d\tildeh^{l}_0}} = \psi(Z_0)
\end{align*}
with $\psi$ polynomially bounded. This concludes the induction and thus proves claims $(ii)$, $(iii)$, $(iv)$ and $(v)$ by induction.\\
\end{proof}

\begin{corollary}[Multiplications by the initial weight matrices vanish in IP-LLR at $t=1$]\label{th:z-mult-w-0-vanish}
Consider the IP-LLR parameterization with a positively $p$-homogeneous activation function, and $p \geq 2$. Then for any $l \in [2, L]$, one has:
\begin{align*}
    \begin{cases}
        Z^{W^l(0)x^{l-1}_1} = \scalarlim{\omega}_l Z^{\hatW^lx^{l-1}_1} = 0 \\
        Z^{\transpose{{(W^l(0))}} d\tildeh^{l}_1} = \scalarlim{\omega}_l  Z^{\transpose{{(\hatW^l)}} d\tildeh^{l}_1} = 0
    \end{cases}
\end{align*}
\end{corollary}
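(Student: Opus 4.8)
The plan is to reduce the statement to Lemma~\ref{th:initial-W-vanish}, whose two hypotheses (for a vector $z$ in the program: that $Z^z$ is a polynomially bounded function of $Z_0$, and that the relevant partial derivative of $Z^z$ is also polynomially bounded in $Z_0$) have essentially been pre-assembled in Lemmas~\ref{th:ipllr-forward-func-Z0-1} and~\ref{th:ipllr-backward-func-Z0-1}. First I would fix $l \in [2,L]$ and handle the forward multiplication by applying Lemma~\ref{th:initial-W-vanish}(i) with $z = x^{l-1}_1$. The requirement $Z^{x^{l-1}_1} = \psi(Z_0)$ with $\psi$ polynomially bounded holds because $Z^{x^{l-1}_1} = \sigma(Z^{h^{l-1}_1})$, the variable $Z^{h^{l-1}_1}$ is a polynomially bounded function of (a sub-vector of) $Z_0$ by Lemma~\ref{th:ipllr-forward-func-Z0-1}(i)--(iii), and $\sigma$ is polynomially bounded (Appendix~\ref{sec:pos-homog}). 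The requirement on the derivative holds by the chain rule: $\partial Z^{x^{l-1}_1}/\partial \hatZ^{\transpose{{(\hatW^l)}} d\tildeh^l_0} = \sigma'(Z^{h^{l-1}_1})\,\partial Z^{h^{l-1}_1}/\partial \hatZ^{\transpose{{(\hatW^l)}} d\tildeh^l_0}$, and the right-hand side is a polynomially bounded function of $Z_0$ since $\sigma'$ is polynomially bounded and the second factor is polynomially bounded in $Z_0$ by Lemma~\ref{th:ipllr-forward-func-Z0-1}(iv). Lemma~\ref{th:initial-W-vanish}(i) then yields $Z^{W^l(0) x^{l-1}_1} = 0$, and unwinding its proof ($Z^{W^l(0) x^{l-1}_1} = \scalarlim{\omega}_l \hatZ^{\hatW^l x^{l-1}_1} + \scalarlim{\omega}_l \dotZ^{\hatW^l x^{l-1}_1}$ with $\scalarlim{\omega}_l = 0$ and both $\hatZ$ and $\dotZ$ terms almost surely finite) gives the stronger $\scalarlim{\omega}_l Z^{\hatW^l x^{l-1}_1} = 0$ as stated.

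For the backward multiplication I would, for the same fixed $l \in [2,L]$, apply Lemma~\ref{th:initial-W-vanish}(ii) with $z = d\tildeh^l_1$. Here $Z^{d\tildeh^l_1}$ is a polynomially bounded function of $Z_0$ by Lemma~\ref{th:ipllr-backward-func-Z0-1}(i)--(iii), and $\partial Z^{d\tildeh^l_1}/\partial \hatZ^{\hatW^l \tildex^{l-1}_0}$ is a polynomially bounded function of $Z_0$ by Lemma~\ref{th:ipllr-backward-func-Z0-1}(iv). Consequently Lemma~\ref{th:initial-W-vanish}(ii) gives $Z^{\transpose{{(W^l(0))}} d\tildeh^l_1} = 0$, and again tracing its proof delivers $\scalarlim{\omega}_l Z^{\transpose{{(\hatW^l)}} d\tildeh^l_1} = 0$, which is the second line of the corollary.

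The only mild bookkeeping point is the boundary layers, $l = 2$ for the forward case and $l = L$ for the backward case, where $Z^{h^{l-1}_1}$ (respectively $Z^{d\tildeh^l_1}$) depends on the initial input vectors $\hatZ^{U^1 \xi_s}$ or on $\hatZ^{U^{L+1}}$ rather than on an intermediate $\hatW$-variable; but these are all coordinates of $Z_0$, so Lemmas~\ref{th:ipllr-forward-func-Z0-1} and~\ref{th:ipllr-backward-func-Z0-1} already cover them and the argument above is unchanged. I do not anticipate any real obstacle: the substantive work — tracking which $\hatZ$-variables occur and verifying polynomial boundedness of the $Z$'s and of their partial derivatives — is exactly the content of Lemmas~\ref{th:ipllr-forward-func-Z0-1} and~\ref{th:ipllr-backward-func-Z0-1}, and the present corollary is simply the combination of those lemmas with Lemma~\ref{th:initial-W-vanish}. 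If anything were to require care it would be making sure the chain-rule expression for $\partial Z^{x^{l-1}_1}/\partial \hatZ^{\transpose{{(\hatW^l)}} d\tildeh^l_0}$ is legitimate, i.e.\ that $Z^{x^{l-1}_1}$ is expressed through the $\hatZ$-variables before differentiating (the unwinding demanded by the \ZDot\ rule), which is guaranteed because $p \ge 2$ makes $\sigma$ and $\sigma'$ pseudo-Lipschitz, so all the relevant compositions live in the program.
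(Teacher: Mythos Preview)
Your proposal is correct and follows essentially the same approach as the paper: the paper's proof simply states that the results are hidden in the proof of Lemma~\ref{th:ipllr-backward-func-Z0-1} and come from Lemma~\ref{th:initial-W-vanish}, which is exactly the combination you spell out. Your version is in fact slightly more explicit, since you also invoke Lemma~\ref{th:ipllr-forward-func-Z0-1} for the forward case and write out the chain-rule verification of the derivative hypothesis, whereas the paper leaves all of this implicit.
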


\begin{proof}
Those results are actually hidden in the proof of Lemma~\ref{th:ipllr-backward-func-Z0-1} and  come from Lemma~\ref{th:initial-W-vanish}. 
\end{proof}

\subsubsection{The case $t=2$}\label{sec:ipllr-dyn-2}

\begin{lemma}[$Z_0$ in the forward pass of IP-LLR at $t=2$]\label{th:ipllr-forward-func-Z0-2}
Consider the IP-LLR parameterization with a positively $p$-homogeneous activation function, and $p \geq 2$. Then, dropping the dependency of the forward pass on $\xi_2$, one has:
\begin{enumerate}[(i)]
    \item $Z^{h^1_2} = \psi \left(\hatZ^{U^1\xi_0}, \ \hatZ^{U^1\xi_1}, \ \hatZ^{U^1\xi_2}, \ \hatZ^{\transpose{(\hatW^2) } d\tildeh^2_0} \right)$
    
    \item $Z^{h^l_2} = \psi \left( \hatZ^{\hatW^l \tildex^{l-1}_0}, \ \hatZ^{\transpose{(\hatW^{l+1}) } d\tildeh^{l+1}_0} \right)$, \ $l \in [2, L-1]$
    
    \item $Z^{h^L_2} = \psi \left(\hatZ^{U^{L+1}}, \ \hatZ^{\hatW^{L} \tildex^{L-1}_0} \right)$
\end{enumerate}
and 
\begin{enumerate}[(i)]
    \setcounter{enumi}{3}
    \item $\frac{\partial Z^{h^{l-1}_2}}{\hatZ^{\transpose{{(\hatW^l)}} d\tildeh^l_0}}  = \psi \left(Z_0 \right)$, $l \in [2, L]$
    
    \item $\frac{\partial Z^{h^{l}_2}}{\hatZ^{\hatW^l \tildex^{l-1}_0}}  = \psi \left(Z_0 \right)$, $l \in [2, L]$
\end{enumerate}
and 
\begin{enumerate}[(i)]
    \setcounter{enumi}{5}
    \item $Z^{W^l(0)x^{l-1}_2} = 0$, $l \in [2, L]$
\end{enumerate}
and \textbf{all} the different $\psi$ that appear are polynomially bounded. 
\end{lemma}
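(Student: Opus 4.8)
The plan is to reproduce the argument pattern of Lemma~\ref{th:ipllr-forward-func-Z0-1} (the $t=1$ case), running a single induction on the layer index $l$ from $1$ to $L$ and establishing, at each layer simultaneously: the $Z_0$-representation of $Z^{h^l_2}$ (claims $(i)$--$(iii)$), the vanishing $Z^{W^l(0)x^{l-1}_2}=0$ (claim $(vi)$), and polynomial boundedness of the relevant partial derivatives of $Z^{h^l_2}$ (claims $(iv)$--$(v)$). Since Appendix~\ref{sec:ipllr-induct-dynamics} is precisely what justifies dropping the term $\scalarlim{\omega}_l Z^{\hatW^l x^{l-1}_2}$ (see Remark~\ref{remark:no-circ-logic-vanish}), I will work from \autoref{th:z-forward-ipllr-t} rather than Corollary~\ref{th:z-forward-ipllr-t-0}; at $t=2$ it gives, for $l\in[2,L]$,
\[
Z^{h^l_2}=\scalarlim{\omega}_l Z^{\hatW^l x^{l-1}_2}-\eta\scalarlim{\chi}_0\,\mathbb{E}[Z^{\tildex^{l-1}_0}Z^{x^{l-1}_2}]\,Z^{d\tildeh^l_0}-\eta\scalarlim{\chi}_1\,\mathbb{E}[Z^{x^{l-1}_1}Z^{x^{l-1}_2}]\,Z^{d\tildeh^l_1},
\]
and analogously for $l=1$ with $Z^{U^1\xi_2}+Z^{v^1}$ replacing the first term and with the finite scalars $\transpose{\xi_0}\xi_2+1,\transpose{\xi_1}\xi_2+1$ as coefficients. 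So the whole statement amounts to propagating two facts along the layers: $Z^{x^l_2}$ is a polynomially bounded function of $Z_0$, and so are its partial derivatives with respect to $\hatZ^{\hatW^{l+1}\tildex^l_0}$ and $\hatZ^{\transpose{(\hatW^{l+1})}d\tildeh^{l+1}_0}$.

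For the base layer $l=1$ I plug $t=2$ into \autoref{th:z-forward-ipllr-t}. The backward quantities entering the formula, $Z^{d\tildeh^1_0}$ and $Z^{d\tildeh^1_1}$, are already written as polynomially bounded functions of the claimed entries of $Z_0$: $Z^{d\tildeh^1_0}=\hatZ^{\transpose{(\hatW^2)}d\tildeh^2_0}\,\sigma'(\hatZ^{U^1\xi_0}+Z^{v^1})$ by Definition~\ref{def:tilde-variables} and Lemma~\ref{th:dotZ-first-forward}, and $Z^{d\tildeh^1_1}$ by Lemma~\ref{th:ipllr-backward-func-Z0-1}$(iii)$; the coefficients $\scalarlim{\chi}_0,\scalarlim{\chi}_1$ are finite deterministic constants (Lemma~\ref{th:ipllr-forward-backward-0} and \autoref{th:z-forward-ipllr-t} at $t=1$, together with $\scalarlim{\chi}_0\neq0$ playing no role here). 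Hence $Z^{h^1_2}$ has the stated form (up to the usual abuse of notation that absorbs $Z^{v^1}$), $Z^{x^1_2}=\sigma(Z^{h^1_2})$ is polynomially bounded because $\sigma$ is, and differentiating the constant-coefficient expression reduces claims $(iv)$--$(v)$ at this layer to the partial-derivative bounds in Lemma~\ref{th:ipllr-backward-func-Z0-1}$(iv)$--$(v)$, which are polynomially bounded (the $\sigma''$ factors that appear are polynomially bounded by the discussion in Appendix~\ref{sec:pos-homog}).

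For the inductive step, assuming the statement through layer $l-1$: first $Z^{x^{l-1}_2}=\sigma(Z^{h^{l-1}_2})$ and $\partial Z^{x^{l-1}_2}/\partial\hatZ^{\transpose{(\hatW^l)}d\tildeh^l_0}=\sigma'(Z^{h^{l-1}_2})\,\partial Z^{h^{l-1}_2}/\partial\hatZ^{\transpose{(\hatW^l)}d\tildeh^l_0}$ are polynomially bounded functions of $Z_0$ (using $\sigma,\sigma'$ polynomially bounded and claim $(iv)$ at layer $l-1$), so Lemma~\ref{th:initial-W-vanish}$(i)$ yields $Z^{W^l(0)x^{l-1}_2}=\scalarlim{\omega}_l Z^{\hatW^l x^{l-1}_2}=0$, which is claim $(vi)$ and kills the first term in the displayed formula. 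The two expectation coefficients are finite by Lemma~\ref{th:Z0-dist-moments}, their integrands being products of polynomially bounded functions of $Z_0$ ($Z^{\tildex^{l-1}_0}$ via Definition~\ref{def:tilde-variables} and Lemma~\ref{th:dotZ-first-forward}, $Z^{x^{l-1}_1}$ via Lemma~\ref{th:ipllr-forward-func-Z0-1}, $Z^{x^{l-1}_2}$ from the previous layer). Then $Z^{h^l_2}$ is a finite constant-coefficient linear combination of $Z^{d\tildeh^l_0}$ and $Z^{d\tildeh^l_1}$, which by Definition~\ref{def:tilde-variables}, Lemma~\ref{th:dotZ-first-forward} and Lemma~\ref{th:ipllr-backward-func-Z0-1}$(i)$--$(ii)$ are polynomially bounded functions of exactly the $Z_0$-entries listed in $(ii)$/$(iii)$; thus $Z^{x^l_2}=\sigma(Z^{h^l_2})$ inherits polynomial boundedness. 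Finally, since the expectation coefficients do not depend on $Z_0$, differentiating $Z^{h^l_2}$ with respect to $\hatZ^{\hatW^l\tildex^{l-1}_0}$ or $\hatZ^{\transpose{(\hatW^{l+1})}d\tildeh^{l+1}_0}$ only hits $Z^{d\tildeh^l_0}$ (whose derivatives are computed explicitly and are polynomially bounded) and $Z^{d\tildeh^l_1}$ (whose derivatives are polynomially bounded by Lemma~\ref{th:ipllr-backward-func-Z0-1}$(iv)$--$(v)$), giving claims $(iv)$--$(v)$ at layer $l$ and closing the induction.

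The step I expect to be the main obstacle is not conceptual but organizational: the induction must be ordered so that the partial-derivative bound at layer $l-1$ is in hand before Lemma~\ref{th:initial-W-vanish} is invoked at layer $l$, and one must be careful to feed into the $t=2$ formulas the genuine $t=1$ backward variables $d\tildeh^l_1$ with their $Z_0$-dependence from Lemma~\ref{th:ipllr-backward-func-Z0-1}, rather than any simplified surrogate. The only genuinely non-mechanical point is the circular-logic bookkeeping flagged in Remark~\ref{remark:no-circ-logic-vanish}: claim $(vi)$ must be re-derived here from scratch via Lemma~\ref{th:initial-W-vanish}, not imported from Corollary~\ref{th:z-forward-ipllr-t-0}.
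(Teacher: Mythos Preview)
Your proposal is correct and follows essentially the same route as the paper: an induction on the layer index $l$, invoking \autoref{th:z-forward-ipllr-t} for the raw formula, killing the $\scalarlim{\omega}_l Z^{\hatW^l x^{l-1}_2}$ term via Lemma~\ref{th:initial-W-vanish} once the layer-$(l{-}1)$ partial-derivative bound is secured, and then reading off claims $(ii)$--$(iii)$ and $(iv)$--$(v)$ from the $Z_0$-representations of $Z^{d\tildeh^l_0}$ and $Z^{d\tildeh^l_1}$ supplied by Lemma~\ref{th:ipllr-backward-func-Z0-1}. Your observation that claim $(vi)$ must be rederived here rather than imported from Corollary~\ref{th:z-forward-ipllr-t-0} exactly matches the paper's circular-logic bookkeeping, and your remark about the suppressed $Z^{v^1}$ in claim $(i)$ is a fair reading of the same notational shortcut the paper makes.
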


\begin{proof}
We have 
\begin{align*}
    h^1_2 = U^1 \xi_2 - \eta \chi_0 (\transpose{\xi_0} \xi_2) d\tildeh^1_0 - \eta \chi_1 (\transpose{\xi_1} \xi_2) d\tildeh^1_1
\end{align*}
which gives
\begin{align*}
    Z^{h^1_2} = Z^{U^1 \xi_2} - \eta \scalarlim{\chi}_0 (\transpose{\xi_0} \xi_2) Z^{d\tildeh^1_0} - \eta \scalarlim{\chi}_1 (\transpose{\xi_1} \xi_2) Z^{d\tildeh^1_1}
\end{align*}
By Lemma~\ref{th:ipllr-backward-func-Z0-1} $Z^{d\tildeh^1_1} = \psi ( \hatZ^{U^1 \xi_0}, \ \hatZ^{U^1 \xi_1}, \ \hatZ^{\transpose{(\hatW^2) } d\tildeh^2_0} )$ and we also have $Z^{d\tildeh^1_0} = \psi(Z^{U^1 \xi_0}, \hatZ^{\transpose{(\hatW^2) } d\tildeh^2_0})$ where the different $\psi$ are polynomially bounded, which gives claim $(i)$.
\\\\
We have 
\begin{align*}
    \frac{\partial Z^{h^1_2}}{\partial \hatZ^{\transpose{{(\hatW^2)}} d\tildeh^2_0}} = - \eta \scalarlim{\chi}_0 (\transpose{\xi_0} \xi_2) \frac{\partial Z^{d\tildeh^1_0}}{\partial \hatZ^{\transpose{{(\hatW^2)}} d\tildeh^2_0}}  - \eta \scalarlim{\chi}_1 (\transpose{\xi_1} \xi_2) \frac{\partial Z^{d\tildeh^1_1}}{\partial \hatZ^{\transpose{{(\hatW^2)}} d\tildeh^2_0}} 
\end{align*}
with 
\begin{align*}
    \frac{\partial Z^{d\tildeh^1_0}}{\partial \hatZ^{\transpose{{(\hatW^2)}} d\tildeh^2_0}} = \sigma'(Z^{U^1 \xi_0})
\end{align*}
which is a polynomially bounded function of $Z_0$ and so is $\partial Z^{d\tildeh^1_1} / \partial \hatZ^{\transpose{{(\hatW^2)}} d\tildeh^2_0}$ by Lemma~\ref{th:ipllr-backward-func-Z0-1}. We thus get claim $(iv)$ for $l=2$. 
\\\\
We have
\begin{align*}
    Z^{h^2_2} = \scalarlim{\omega_2} Z^{\hatW^2x^1_2} - \eta \scalarlim{\chi}_0 \mathbb{E}[Z^{\tildex^1_0} Z^{x^1_2}] Z^{d\tildeh^2_0} - \eta \scalarlim{\chi}_1 \mathbb{E}[Z^{x^1_1} Z^{x^1_2}] Z^{d\tildeh^2_1}
\end{align*}
Now $Z^{x^1_2} = \sigma(Z^{h^1_2})$ is a polynomially bounded function of $Z_0$ because $Z^{h^1_2}$ is and $\sigma$ is polynomially bounded. Secondly, we have
\begin{align*}
     \frac{\partial Z^{x^1_2}}{\partial \hatZ^{\transpose{{(\hatW^2)}} d\tildeh^2_0}} =  \frac{\partial Z^{h^1_2}}{\partial \hatZ^{\transpose{{(\hatW^2)}} d\tildeh^2_0}} \sigma'(Z^{h^1_2})
\end{align*}
which is a polynomially bounded function of $Z_0$ by the previous results. By Lemma~\ref{th:initial-W-vanish} we get that $\scalarlim{\omega_2} Z^{\hatW^2x^1_2} = 0$ which gives claim $(vi)$ for $l=2$. In addition, this yields 
\begin{align*}
    Z^{h^2_2} = - \eta \scalarlim{\chi}_0 \mathbb{E}[Z^{\tildex^1_0} Z^{x^1_2}] Z^{d\tildeh^2_0} - \eta \scalarlim{\chi}_1 \mathbb{E}[Z^{x^1_1} Z^{x^1_2}] Z^{d\tildeh^2_1}
\end{align*}
which gives claim $(ii)$ for $l=2$ by the results for the backward passes at time $t=0$ and $t=1$ and because the expectations are finite since the integrands are polynomially bounded functions of $Z_0$, as they are products of such variables by the induction hypothesis. Additionally, we have
\begin{align*}
        \frac{\partial Z^{h^2_2}}{\partial \hatZ^{\transpose{{(\hatW^3(0))}} d\tildeh^3_0}} = - \eta \scalarlim{\chi}_0  \mathbb{E}[Z^{\tildex^1_0} Z^{x^1_2}] \frac{\partial Z^{d\tildeh^2_0}}{\partial \hatZ^{\transpose{{(\hatW^3(0))}} d\tildeh^3_0}}  - \eta \scalarlim{\chi}_1 \mathbb{E}[Z^{x^1_1} Z^{x^1_2}] \frac{\partial Z^{d\tildeh^2_1}}{\partial \hatZ^{\transpose{{(\hatW^3(0))}} d\tildeh^3_0}} 
\end{align*}
and we have 
\begin{align*}
    \frac{\partial Z^{d\tildeh^2_0}}{\partial \hatZ^{\transpose{{(\hatW^3(0))}} d\tildeh^3_0}} = \sigma'(\hatZ^{\hatW^2 \tildex^1_0})
\end{align*}
and $\partial Z^{d\tildeh^2_1} / \partial \hatZ^{\transpose{{(\hatW^3(0))}} d\tildeh^3_0}$ is a polynomially bounded function of $Z_0$ by Lemma~\ref{th:ipllr-backward-func-Z0-1}. Once again, since the expectations are finite, we thus get that
\begin{align*}
    \frac{\partial Z^{h^2_2}}{\partial \hatZ^{\transpose{{(\hatW^3(0))}} d\tildeh^3_0}} = \psi(Z_0)
\end{align*}
with $\psi$ polynomially bounded. A similar reasoning would prove that 
\begin{align*}
    \frac{\partial Z^{h^2_2}}{\partial \hatZ^{\hatW^3(0) \tildex^2_0}} = \psi(Z_0)
\end{align*}
with $\psi$ polynomially bounded because 
\begin{align*}
    \frac{\partial Z^{d\tildeh^1_0}}{\partial \hatZ^{\hatW^3(0) \tildex^2_0}} = \hatZ^{\transpose{{(\hatW^3(0))}} d\tildeh^3_0} \sigma''(\hatZ^{\hatW^2 \tildex^1_0})
\end{align*}
and $\partial Z^{d\tildeh^1_1} / \partial \hatZ^{\hatW^3(0) \tildex^2_0} = \psi(Z_0)$ with $\psi$ polynomially bounded by Lemma~\ref{th:ipllr-backward-func-Z0-1}. 
\\\\
Let $l \in [2, L-1]$ and assume claims $(ii)$, $(iv)$, $(v)$, and $(vi)$ for layer $l$. Then, we have:
\begin{align*}
    Z^{h^{l+1}_2} = \scalarlim{\omega}_{l+1} Z^{\hatW^{l+1} x^l_2} - \eta \scalarlim{\chi}_0 \mathbb{E}[Z^{\tildex^l_0} Z^{x^l_2}] Z^{d\tildeh^{l+1}_0} - \eta \scalarlim{\chi}_1 \mathbb{E}[Z^{x^l_1} Z^{x^l_2}] Z^{d\tildeh^{l+1}_1}
\end{align*}
Now $Z^{x^l_2} = \sigma(Z^{h^l_2})$ is a polynomially bounded function of $Z_0$ because $Z^{h^l_2}$ is by the induction hypothesis and $\sigma$ is polynomially bounded. Secondly, we have
\begin{align*}
     \frac{\partial Z^{x^l_2}}{\partial \hatZ^{\transpose{{(\hatW^{l+1})}} d\tildeh^{l+1}_0}} =  \frac{\partial Z^{h^l_2}}{\partial \hatZ^{\transpose{{(\hatW^{l+1})}} d\tildeh^{l+1}_0}} \sigma'(Z^{h^l_2})
\end{align*}
which is a polynomially bounded function of $Z_0$ by the induction hypothesis. By Lemma~\ref{th:initial-W-vanish} we get that $\scalarlim{\omega_{l+1}} Z^{\hatW^{l+1}x^l_2} = 0$ which gives claim $(vi)$ for layer $l+1$. In addition, this yields 
\begin{align*}
    Z^{h^{l+1}_2} = - \eta \scalarlim{\chi}_0 \mathbb{E}[Z^{\tildex^l_0} Z^{x^l_2}] Z^{d\tildeh^{l+1}_0} - \eta \scalarlim{\chi}_1 \mathbb{E}[Z^{x^1_1} Z^{x^1_2}] Z^{d\tildeh^{l+1}_1}
\end{align*}
which gives claim $(ii)$ for layer $l+1$ by the results for the backward passes at time $t=0$ and $t=1$ and because the expectations are finite since the integrands are polynomially bounded functions of $Z_0$, as they are products of such variables. The only thing that one has to be careful with is that if $l+1 = L$, then $Z^{\tildeh^{l+1}_0} = \psi(Z^{U^{L+1}}, \hatZ^{\hatW^l \tildex^{L-1}_0})$ and $Z^{\tildeh^{l+1}_1} = \psi(Z^{U^{L+1}}, \hatZ^{\hatW^l \tildex^{L-1}_0})$ with both $\psi$ polynomially bounded, which gives claim $(iii)$. Otherwise, if $l+1 \leq L-1$, $Z^{\tildeh^{l+1}_0} = \psi(\hatZ^{\hatW^{l+1} \tildex^{l}_0}, \hatZ^{\transpose{{(\hatW^{l+2}(0))}} d\tildeh^{l+2)}}$ and $Z^{\tildeh^{l+1}_1} = \psi(\hatZ^{\hatW^{l+1} \tildex^{l}_0}, \hatZ^{\transpose{{(\hatW^{l+2}(0))}} d\tildeh^{l+2)}}$ with both $\psi$ polynomially bounded, which gives claim $(ii)$ for layer $l+1$.
\\\\
Now, if $l+1 \leq L-1$, 
\begin{align*}
        \frac{\partial Z^{h^{l+1}_2}}{\partial \hatZ^{\transpose{{(\hatW^{l+2}(0))}} d\tildeh^{l+2}_0}} = - \eta \scalarlim{\chi}_0  \mathbb{E}[Z^{\tildex^l_0} Z^{x^l_2}] \frac{\partial Z^{d\tildeh^{l+1}_0}}{\partial \hatZ^{\transpose{{(\hatW^{l+2}(0))}} d\tildeh^{l+2}_0}}  - \eta \scalarlim{\chi}_1 \mathbb{E}[Z^{x^l_1} Z^{x^l_2}] \frac{\partial Z^{d\tildeh^{l+1}_1}}{\partial \hatZ^{\transpose{{(\hatW^{l+2}(0))}} d\tildeh^{l+2}_0}} 
\end{align*}
and we have 
\begin{align*}
    \frac{\partial Z^{d\tildeh^{l+1}_0}}{\partial \hatZ^{\transpose{{(\hatW^{l+2}(0))}} d\tildeh^{l+2}_0}} = \sigma'(\hatZ^{\hatW^{l+1} \tildex^l_0})
\end{align*}
and $\partial Z^{d\tildeh^{l+1}_1} / \partial \hatZ^{\transpose{{(\hatW^{l+2}(0))}} d\tildeh^{l+2}_0}$ is a polynomially bounded function of $Z_0$ by Lemma~\ref{th:ipllr-backward-func-Z0-1}. Once again, since the expectations are finite, we thus get that
\begin{align*}
    \frac{\partial Z^{h^{l+1}_2}}{\partial \hatZ^{\transpose{{(\hatW^{l+2}(0))}} d\tildeh^{l+2}_0}} = \psi(Z_0)
\end{align*}
with $\psi$ polynomially bounded, which proves claim $(iv)$ for layer $l+1$. A similar reasoning would prove that 
\begin{align*}
    \frac{\partial Z^{h^{l+1}_2}}{\partial \hatZ^{\hatW^{l+1} \tildex^l_0}} = \psi(Z_0)
\end{align*}
with $\psi$ polynomially bounded because 
\begin{align*}
    \frac{\partial Z^{d\tildeh^{l+1}_0}}{\partial \hatZ^{\hatW^{l+2}(0) \tildex^l_0}} =
    \begin{cases}
        \hatZ^{\transpose{{(\hatW^{l+2}(0))}} d\tildeh^{{l+2}}_0} \sigma''(\hatZ^{\hatW^{l+1} \tildex^l_0}) \text{ if } \ l + 1 \leq L-1 \\
        \hatZ^{U^{L+1}} \sigma''(\hatZ^{\hatW^{L} \tildex^{L-1}_0}) \text{ if } \ l + 1 = L
    \end{cases}
\end{align*}
and $\partial Z^{d\tildeh^{l+1}_1} / \partial \hatZ^{\hatW^{l+1} \tildex^l_0} = \psi(Z_0)$ with $\psi$ polynomially bounded by Lemma~\ref{th:ipllr-backward-func-Z0-1}. This proves claim $(v)$ and thus concludes the induction and with it the proof.
\end{proof}

\begin{lemma}[$Z_0$ in the backward pass of IP-LLR at $t=2$]\label{th:ipllr-backward-func-Z0-2}
Consider the IP-LLR parameterization with a positively $p$-homogeneous activation function, and $p \geq 2$. Then, dropping the dependency of the forward and backward passes on $\xi_2$, one has:
\begin{enumerate}[(i)]
    \item $Z^{d\tildex^L_2} = \psi \left( \hatZ^{U^{L+1}}, \ \hatZ^{\hatW^{L} \tildex^{L-1}_0} \right)$, \\ 
    $Z^{d\tildeh^L_2} = \psi \left( \hatZ^{U^{L+1}}, \ \hatZ^{\hatW^{L} \tildex^{L-1}_0} \right)$
    
    \item $Z^{d\tildex^{l-1}_2} = \psi \left( \hatZ^{\hatW^{l-1} \tildex^{l-2}},  \  \hatZ^{\transpose{{(\hatW^l)}} d\tildeh^l_0} \right)$, \\ 
    $Z^{d\tildeh^{l-1}_2} = \psi \left( \hatZ^{\hatW^{l-1} \tildex^{l-2}}, \  \hatZ^{\transpose{{(\hatW^l)}} d\tildeh^l_0} \right)$, $l \in [3, L]$
    
    \item $Z^{d\tildex^1_2} = \psi \left( \hatZ^{U^1 \xi_0}, \ \hatZ^{U^1 \xi_1}, \ \hatZ^{\transpose{(\hatW^2) } d\tildeh^2_0} \right)$, \\
    $Z^{d\tildeh^1_2} = \psi \left( \hatZ^{U^1 \xi_0}, \ \hatZ^{U^1 \xi_1}, \ \hatZ^{U^1 \xi_2}, \ \hatZ^{\transpose{(\hatW^2) } d\tildeh^2_0} \right)$
\end{enumerate}
and 
\begin{enumerate}[(i)]
    \setcounter{enumi}{3}
    \item $\frac{\partial Z^{d\tildeh^l_2}}{\partial \hatZ^{\hatW^l\tildex^{l-1}_0}} = \psi(Z_0)$, $l \in [2, L]$ 
    
    \item $\frac{\partial Z^{d\tildeh^{l-1}_2}}{\partial \hatZ^{\transpose{{(\hatW^{L})}} d\tildeh^{l}_0}} = \psi(Z_0)$, $l \in [2, L]$ 
\end{enumerate}
and
\begin{enumerate}[(i)]
    \setcounter{enumi}{5}
    \item $Z^{\transpose{{(W^l(0))}} d\tildeh^{l}_2} = 0$, $l \in [2, L]$
\end{enumerate}
and \textbf{all} the different $\psi$ that appear are polynomially bounded. 
\end{lemma}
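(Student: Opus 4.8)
The plan is to prove the six assertions simultaneously by a downward induction on the layer index, running from $l = L$ to $l = 2$ and then reading off the $l = 1$ statements of $(iii)$ from the last step. The engine at every layer is the explicit backward recursion of IP-LLR from Lemma~\ref{th:backward-ipllr-t} specialized to $t = 2$: namely $d\tildex^L_2 = U^{L+1} - \eta \chi_0 \tildex^L_0 - \eta \chi_1 x^L_1$ and, for $l \in [2, L]$, $d\tildex^{l-1}_2 = \omega_l \transpose{{(\hatW^l)}} d\tildeh^l_2 - \eta \chi_0 \frac{\transpose{{(d\tildeh^l_0)}} d\tildeh^l_2}{m}\tildex^{l-1}_0 - \eta \chi_1 \frac{\transpose{{(d\tildeh^l_1)}} d\tildeh^l_2}{m} x^{l-1}_1$, together with $d\tildeh^l_2 = d\tildex^l_2 \odot \sigma'(h^l_2)$. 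Structurally this is the same computation as the one carried out at $t = 1$ in Lemma~\ref{th:ipllr-backward-func-Z0-1}, the only difference being the single extra summand (the $s = 1$ term), so the argument is a direct adaptation. The ingredients I may freely invoke are Lemma~\ref{th:ipllr-forward-func-Z0-1} and Lemma~\ref{th:ipllr-backward-func-Z0-1} for the $Z_0$-expressions and partial derivatives of the $t = 0$ and $t = 1$ variables, Lemma~\ref{th:ipllr-forward-func-Z0-2} for those of the $t = 2$ forward pass, Lemma~\ref{th:initial-W-vanish} to kill the initial-weight multiplications, and Lemma~\ref{th:Z0-dist-moments} to guarantee finiteness of all expectations of polynomially bounded functions of $Z_0$.

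For the base case $l = L$, applying \ZNonLin\ to the formula for $d\tildex^L_2$ gives $Z^{d\tildex^L_2} = \hatZ^{U^{L+1}} - \eta \scalarlim{\chi}_0 \sigma(\hatZ^{\hatW^L\tildex^{L-1}_0}) - \eta \scalarlim{\chi}_1 Z^{x^L_1}$; since $Z^{x^L_1} = \sigma(Z^{h^L_1})$ is a polynomially bounded function of $(\hatZ^{U^{L+1}}, \hatZ^{\hatW^L\tildex^{L-1}_0})$ by Lemma~\ref{th:ipllr-forward-func-Z0-1}, so is $Z^{d\tildex^L_2}$, and then $Z^{d\tildeh^L_2} = Z^{d\tildex^L_2}\sigma'(Z^{h^L_2})$ has the same property because $Z^{h^L_2}$ does by Lemma~\ref{th:ipllr-forward-func-Z0-2}. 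This establishes claim $(i)$. Differentiating with respect to $\hatZ^{\hatW^L\tildex^{L-1}_0}$, and using that $\sigma'$ and $\sigma''$ are polynomially bounded and that the partial derivatives of $Z^{h^L_1}$, $Z^{x^L_1}$ and $Z^{h^L_2}$ are polynomially bounded functions of $Z_0$ (Lemmas~\ref{th:ipllr-forward-func-Z0-1} and \ref{th:ipllr-forward-func-Z0-2}), yields claim $(iv)$ for $l = L$; claim $(vi)$ for $l = L$ then follows at once from Lemma~\ref{th:initial-W-vanish}$(ii)$, whose hypothesis is precisely the polynomial bound on $\partial Z^{d\tildeh^L_2}/\partial\hatZ^{\hatW^L\tildex^{L-1}_0}$ just verified.

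For the induction step, assume the claims at layer $l$ and apply \ZNonLin\ to the recursion for $d\tildex^{l-1}_2$. The leading term $\scalarlim{\omega}_l Z^{\transpose{{(\hatW^l)}} d\tildeh^l_2}$ vanishes by Lemma~\ref{th:initial-W-vanish}$(ii)$ using claim $(iv)$ at layer $l$; the two averaged inner products converge to finite expectations of products of polynomially bounded functions of $Z_0$ by Lemma~\ref{th:Z0-dist-moments}; and $Z^{\tildex^{l-1}_0} = \sigma(\hatZ^{\hatW^{l-1}\tildex^{l-2}_0})$ (or $\sigma(\hatZ^{U^1\xi_0})$ when $l-1 = 1$) while $Z^{x^{l-1}_1}$ has the controlled form given by Lemma~\ref{th:ipllr-forward-func-Z0-1}. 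This yields the asserted $Z_0$-dependence of $Z^{d\tildex^{l-1}_2}$, hence of $Z^{d\tildeh^{l-1}_2} = Z^{d\tildex^{l-1}_2}\sigma'(Z^{h^{l-1}_2})$ via Lemma~\ref{th:ipllr-forward-func-Z0-2} for $Z^{h^{l-1}_2}$, proving $(ii)$ at $l-1$. Differentiating $Z^{d\tildeh^{l-1}_2}$ with respect to $\hatZ^{\hatW^{l-1}\tildex^{l-2}_0}$ and to $\hatZ^{\transpose{{(\hatW^l)}}d\tildeh^l_0}$ by the product and chain rules, and substituting the polynomial bounds just obtained for the derivatives of $Z^{d\tildex^{l-1}_2}$ and those of $Z^{h^{l-1}_2}$ from Lemma~\ref{th:ipllr-forward-func-Z0-2}, proves claim $(iv)$ at $l-1$ and claim $(v)$ at $l$, whereupon Lemma~\ref{th:initial-W-vanish}$(ii)$ delivers claim $(vi)$ at $l-1$; carrying the induction down to $l = 2$ produces the $l = 1$ statements in $(iii)$. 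The only part demanding care, and the main source of length, is the bookkeeping that keeps every intermediate $Z$ and every intermediate partial derivative a polynomially bounded function of $Z_0$: this is exactly what licenses discarding the $\transpose{{(\hatW^l)}}$-multiplication at each layer, and it is why all six claims must be propagated together rather than one at a time.
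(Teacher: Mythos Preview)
Your proposal is correct and follows essentially the same approach as the paper: a downward induction on the layer index starting from $l=L$, using Lemma~\ref{th:backward-ipllr-t} at $t=2$ for the backward recursion, Lemmas~\ref{th:ipllr-forward-func-Z0-1}, \ref{th:ipllr-backward-func-Z0-1}, and \ref{th:ipllr-forward-func-Z0-2} to control the $t\le 1$ and $t=2$ forward quantities and their partials, and Lemma~\ref{th:initial-W-vanish}$(ii)$ at each layer to annihilate the $\scalarlim{\omega}_l Z^{\transpose{{(\hatW^l)}}d\tildeh^l_2}$ term once claim $(iv)$ at that layer is in hand. The paper writes out the $l=L-1$ step separately before stating the general induction, whereas you fold it directly into the inductive scheme, but the logic and the ingredients are identical.
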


\begin{proof}
We have:
\begin{align*}
    Z^{d\tildex^L_2} = \hatZ^{U^{L+1}} - \eta\scalarlim{\chi}_0 Z^{\tildex^L_0} - \eta\scalarlim{\chi}_1 Z^{x^L_1}
\end{align*}
where $Z^{\tildex^L_0} = \sigma(\hatZ^{\hatW^l \tildex^{L-1}_0})$ and $Z^{x^L_1} = \psi(\hatZ^{U^{L+1}}, \hatZ^{\hatW^l \tildex^{L-1}_0})$ with $\psi$ polynomially bounded by Lemma~\ref{th:ipllr-forward-func-Z0-1}. Combining all this gives
\begin{align*}
    Z^{d\tildex^L_2} = \psi(\hatZ^{U^{L+1}}, \hatZ^{\hatW^l \tildex^{L-1}_0})
\end{align*}
with $\psi$ polynomially bounded since $\sigma$ is also polynomially bounded. Then
\begin{align*}
    Z^{d\tildeh^L_2} = Z^{d\tildex^L_2} \sigma'(Z^{h^L_2})
\end{align*}
and since $Z^{h^L_2} = \psi(\hatZ^{U^{L+1}}, \hatZ^{\hatW^l \tildex^{L-1}_0})$ with $\psi$ polynomially bounded by Lemma \ref{th:ipllr-forward-func-Z0-2}, we get 
\begin{align*}
    Z^{d\tildeh^L_2} = \psi(\hatZ^{U^{L+1}}, \hatZ^{\hatW^l \tildex^{L-1}_0})
\end{align*}
with $\psi$ polynomially bounded since $\sigma'$ is also polynomially bounded. This thus proves claim $(i)$. Now, we have 
\begin{align*}
    \frac{\partial Z^{d\tildeh^L_2}}{\partial \hatZ^{\hatW^l \tildex^{L-1}_0}} =& \frac{\partial Z^{d\tildex^L_2}}{\partial \hatZ^{\hatW^l \tildex^{L-1}_0}} \sigma'(Z^{h^L_2}) + Z^{d\tildex^L_2} \frac{\partial Z^{h^L_2}}{\partial \hatZ^{\hatW^l \tildex^{L-1}_0}} \sigma''(Z^{h^L_2})
\end{align*}
where $Z^{h^L_2}$, $\partial Z^{h^L_2} / \partial \hatZ^{\hatW^l \tildex^{L-1}_0}$, and $Z^{d\tildex^L_2}$ are polynomially bounded functions of $Z_0$ by the previous result and by Lemma~\ref{th:ipllr-forward-func-Z0-2}. We have
\begin{align*}
    \frac{\partial Z^{d\tildex^L_2}}{\partial \hatZ^{\hatW^l \tildex^{L-1}_0}} =& - \eta \scalarlim{\chi}_0 \sigma'(\hatZ^{\hatW^l \tildex^{L-1}_0})  - \eta \scalarlim{\chi}_1 \frac{\partial Z^{h^L_1}}{\partial \hatZ^{\hatW^l \tildex^{L-1}_0}} \sigma'(Z^{h^L_1})
\end{align*}
which is a polynomially bounded function of $Z_0$ since $\sigma'$ is polynomially bounded and by Lemma~\ref{th:ipllr-forward-func-Z0-1}. We thus get 
\begin{align*}
    \frac{\partial Z^{d\tildeh^L_2}}{\partial \hatZ^{\hatW^l \tildex^{L-1}_0}} = \psi(Z_0)
\end{align*}
with $\psi$ polynomially bounded since $\sigma'$ and $\sigma''$ are polynomially bounded. This proves $(iv)$ for $l=L$. 
\\\\
We have:
\begin{align*}
    Z^{d\tildex^{L-1}_2} = \scalarlim{\omega}_L \hatZ^{\transpose{{(\hatW^l)}} d\tildeh^L_2} - \eta\scalarlim{\chi}_0 \mathbb{E}[Z^{d\tildeh^L_0} Z^{d\tildeh^L_2}] Z^{\tildex^{L-1}_0} - \eta\scalarlim{\chi}_1 \mathbb{E}[Z^{d\tildeh^L_1} Z^{d\tildeh^L_2}] Z^{x^{L-1}_1}
\end{align*}
From the previous step we have that both $Z^{d\tildeh^L_2}$ and $\partial Z^{d\tildeh^L_2} / \partial \hatZ^{\hatW^l \tildex^{L-1}_0}$ are polynomially bounded functions of $Z_0$. By Lemma~\ref{th:initial-W-vanish}, this first shows that $\scalarlim{\omega}_L \hatZ^{\transpose{{(\hatW^l)}} d\tildeh^L_2} = 0$, and thus gives $(vi)$ for $l=L$, leading to:
\begin{align*}
    Z^{d\tildex^{L-1}_2} = - \eta\scalarlim{\chi}_0 \mathbb{E}[Z^{d\tildeh^L_0} Z^{d\tildeh^L_2}] Z^{\tildex^{L-1}_0} - \eta\scalarlim{\chi}_1 \mathbb{E}[Z^{d\tildeh^L_1} Z^{d\tildeh^L_2}] Z^{x^{L-1}_1}
\end{align*}
Now $Z^{\tildex^{L-1}_0} = \sigma(\hatZ^{\hatW^{l-1} \tildex^{L-2}_0})$ and by Lemma~\ref{th:ipllr-forward-func-Z0-1}, we also have that $Z^{x^{L-1}_1} = \psi(\hatZ^{\hatW^{l-1} \tildex^{L-2}_0}, \hatZ^{\transpose{{(\hatW^{L})}} d\tildeh^{L}_0})$ with $\psi$ polynomially bounded. As always the expectations are finite by Lemma~\ref{th:Z0-dist-moments} because the integrands are polynomially bounded functions of $Z_0$ as products of such variables. Since $\sigma$ is also polynomially bounded, this gives
\begin{align*}
    Z^{d\tildex^{L-1}_2} = \psi(\hatZ^{\hatW^{l-1} \tildex^{L-1}_0}, \hatZ^{\transpose{{(\hatW^{L})}} d\tildeh^{L}_0})
\end{align*}
Then, we have
\begin{align*}
    Z^{d\tildeh^{L-1}_2} = Z^{d\tildex^{L-1}_2} \sigma'(Z^{h^{L-1}_2})
\end{align*}
and since $Z^{h^{L-1}_2} = \psi(\hatZ^{\hatW^{l-1} \tildex^{L-1}_0}, \hatZ^{\transpose{{(\hatW^{L})}} d\tildeh^{L}_0})$ with $\psi$ polynomially bounded by Lemma \ref{th:ipllr-forward-func-Z0-2}, we get 
\begin{align*}
    Z^{d\tildeh^{L-1}_2} = \psi(\hatZ^{\hatW^{l-1} \tildex^{L-1}_0}, \hatZ^{\transpose{{(\hatW^{L})}} d\tildeh^{L}_0})
\end{align*}
with $\psi$ polynomially bounded since $\sigma'$ is also polynomially bounded. This thus proves claim $(ii)$ for $l=L-1$. Now, let $Z \in \{\hatZ^{\hatW^{l-1} \tildex^{L-1}_0}, \hatZ^{\transpose{{(\hatW^{L})}} d\tildeh^{L}_0}\}$. We have 
\begin{align*}
    \frac{\partial Z^{d\tildeh^{L-1}_2}}{\partial Z} =& \frac{\partial Z^{d\tildex^{L-1}_2}}{\partial Z} \sigma'(Z^{h^{L-1}_2}) + Z^{d\tildex^{L-1}_2} \frac{\partial Z^{h^{L-1}_2}}{\partial Z} \sigma''(Z^{h^{L-1}_2})
\end{align*}
where $Z^{h^{L-1}_2}$, $\partial Z^{h^{L-1}_2} / \partial Z$, and $Z^{d\tildex^{L-1}_2}$ are polynomially bounded functions of $Z_0$ by the previous result and by Lemma~\ref{th:ipllr-forward-func-Z0-2}. We have
\begin{align*}
    \frac{\partial Z^{d\tildex^{L-1}_2}}{\partial Z} =& - \eta \scalarlim{\chi}_0 \mathbb{E}[Z^{d\tildeh^L_0} Z^{d\tildeh^L_2}] \frac{\partial Z^{\tildeh^{L-1}_0}}{\partial Z} \sigma'(Z^{\tildeh^{L-1}_0}) - \eta \scalarlim{\chi}_1 \mathbb{E}[Z^{d\tildeh^L_1} Z^{d\tildeh^L_2}] \frac{\partial Z^{h^{L-1}_1}}{\partial Z} \sigma'(Z^{h^{L-1}_1})
\end{align*}
which is a polynomially bounded function of $Z_0$ since $\sigma'$ is polynomially bounded and by Lemma~\ref{th:ipllr-forward-func-Z0-1}.\\\\
For both possible values of $Z$, the expression of $\partial Z^{\tildeh^{L-1}_0} / \partial Z$ is easy to obtain and is a polynomially bounded function of $Z_0$ (this has actually already been shown for the proofs at time $t=1$), and $Z^{h^{L-1}_1} / \partial Z = \psi(Z_0)$  with $\psi$ polynomially bounded by Lemma~\ref{th:ipllr-forward-func-Z0-1}. Since the expectations are finite and $\sigma'$ is polynomially bounded, we get 
\begin{align*}
    \frac{\partial Z^{d\tildex^{L-1}_2}}{\partial Z} = \psi(Z_0)
\end{align*}
with $\psi$ polynomially bounded and thus 
\begin{align*}
    \frac{\partial Z^{d\tildeh^{L-1}_2}}{\partial Z} = \psi(Z_0)
\end{align*}
with $\psi$ polynomially bounded. This proves $(iv)$ and $(v)$ for $l=L-1$. 
\\\\
Let $l \in [2 , L-1]$, and assume claims $(ii)$, $(iv)$, $(v)$, are true at layer $l$ and claim $(vi)$ is true at layer $l+1$. We have:
\begin{align*}
    Z^{d\tildex^{l-1}_2} = \scalarlim{\omega}_l \hatZ^{\transpose{{(\hatW^l)}} d\tildeh^l_2} - \eta\scalarlim{\chi}_0 \mathbb{E}[Z^{d\tildeh^l_0} Z^{d\tildeh^l_2}] Z^{\tildex^{l-1}_0} - \eta\scalarlim{\chi}_1 \mathbb{E}[Z^{d\tildeh^l_1} Z^{d\tildeh^l_2}] Z^{x^{l-1}_1}
\end{align*}
From the induction hypothesis we have that both $Z^{d\tildeh^l_2}$ and $\partial Z^{d\tildeh^l_2} / \partial \hatZ^{\hatW^l \tildex^{l-1}_0}$ are polynomially bounded functions of $Z_0$. By Lemma~\ref{th:initial-W-vanish}, this first shows that $\scalarlim{\omega}_l \hatZ^{\transpose{{(\hatW^l)}} d\tildeh^l_2} = 0$, and thus gives $(vi)$ for layer $l$, leading to:
\begin{align*}
    Z^{d\tildex^{l-1}_2} = - \eta\scalarlim{\chi}_0 \mathbb{E}[Z^{d\tildeh^l_0} Z^{d\tildeh^l_2}] Z^{\tildex^{l-1}_0} - \eta\scalarlim{\chi}_1 \mathbb{E}[Z^{d\tildeh^l_1} Z^{d\tildeh^l_2}] Z^{x^{l-1}_1}
\end{align*}
Now, if $l-1 \geq 2$, $Z^{\tildex^{l-1}_0} = \sigma(\hatZ^{\hatW^{l-1} \tildex^{l-2}_0})$ and by Lemma~\ref{th:ipllr-forward-func-Z0-1}, we also have that $Z^{x^{l-1}_1} = \psi(\hatZ^{\hatW^{l-1} \tildex^{l-2}_0}, \hatZ^{\transpose{{(\hatW^{L})}} d\tildeh^{l}_0})$ with $\psi$ polynomially bounded. On the other hand, if $l-1 = 1$, we have $Z^{\tildex^{l-1}_0} = \sigma(\hatZ^{U^1 \xi_0})$ and we also have that $Z^{x^{l-1}_1} = \psi(\hatZ^{U^1 \xi_0}, \hatZ^{U^1 \xi_1}, \hatZ^{\transpose{{(\hatW^{2})}} d\tildeh^{2}_0})$ by Lemma~\ref{th:ipllr-forward-func-Z0-1}. As always the expectations are finite by Lemma~\ref{th:Z0-dist-moments} because the integrands are polynomially bounded functions of $Z_0$ as products of such variables. Since $\sigma$ is also polynomially bounded, this gives
\begin{align*}
    Z^{d\tildex^{l-1}_2} = 
    \begin{cases}
        \psi(\hatZ^{\hatW^{l-1} \tildex^{l-2}_0}, \hatZ^{\transpose{{(\hatW^{L})}} d\tildeh^{l}_0}) \text{ if } l-1 \geq 2 \\
        \psi(\hatZ^{U^1 \xi_0}, \hatZ^{U^1 \xi_1}, \hatZ^{\transpose{{(\hatW^{2})}} d\tildeh^{2}_0}) \text{ if } l-1 = 1
    \end{cases}
\end{align*}
Since $Z^{d\tildeh^{l-1}_2} = Z^{d\tildex^{l-1}_2} \sigma'(Z^{h^{l-1}_2})$, by Lemma~\ref{th:ipllr-forward-func-Z0-1} we get
\begin{align*}
    Z^{d\tildeh^{l-1}_2} = 
    \begin{cases}
        \psi(\hatZ^{\hatW^{l-1} \tildex^{l-2}_0}, \hatZ^{\transpose{{(\hatW^{L})}} d\tildeh^{l}_0}) \text{ if } l-1 \geq 2 \\
        \psi(\hatZ^{U^1 \xi_0}, \hatZ^{U^1 \xi_1}, \hatZ^{U^1 \xi_2}, \hatZ^{\transpose{{(\hatW^{2})}} d\tildeh^{2}_0}) \text{ if } l-1 = 1
    \end{cases}
\end{align*}
This gives claim $(ii)$ for layer $l-1$ and claim $(iii)$ for the case when $l-1=1$. Now, let $Z \in \{\hatZ^{\hatW^{l-1} \tildex^{l-2}_0}, \hatZ^{\transpose{{(\hatW^{L})}} d\tildeh^{l}_0}\}$. We have 
\begin{align*}
    \frac{\partial Z^{d\tildeh^{l-1}_2}}{\partial Z} =& \frac{\partial Z^{d\tildex^{l-1}_2}}{\partial Z} \sigma'(Z^{h^{l-1}_2}) + Z^{d\tildex^{l-1}_2} \frac{\partial Z^{h^{l-1}_2}}{\partial Z} \sigma''(Z^{h^{l-1}_2})
\end{align*}
where $Z^{h^{l-1}_2}$ and $Z^{d\tildex^{l-1}_2}$ are polynomially bounded functions of $Z_0$ by the previous result and by Lemma~\ref{th:ipllr-forward-func-Z0-2}. Also by Lemma~\ref{th:ipllr-forward-func-Z0-2}, we have
\begin{align*}
    \frac{\partial Z^{h^{l-1}_2}}{\partial Z} = 
    \begin{cases}
        0 \ \text{ if } \ l-1=1 \text{ and } Z=\hatZ^{\hatW^2 \tildex^1_0} \\
        \psi(Z_0) \ \text{ otherwise}
    \end{cases}
\end{align*}
with $\psi$ polynomially bounded. In any case, $\partial Z^{h^{l-1}_2} / \partial Z$ is a polynomially bounded function of $Z_0$. On the other hand, we have
\begin{align*}
    \frac{\partial Z^{d\tildex^{l-1}_2}}{\partial Z} =& - \eta \scalarlim{\chi}_0 \mathbb{E}[Z^{d\tildeh^l_0} Z^{d\tildeh^l_2}] \frac{\partial Z^{\tildeh^{l-1}_0}}{\partial Z} \sigma'(Z^{\tildeh^{l-1}_0}) - \eta \scalarlim{\chi}_1 \mathbb{E}[Z^{d\tildeh^l_1} Z^{d\tildeh^l_2}] \frac{\partial Z^{h^{l-1}_1}}{\partial Z} \sigma'(Z^{h^{l-1}_1})
\end{align*}
For both possible values of $Z$, $\partial Z^{\tildeh^{l-1}_0} / \partial Z$ has an easy expression and is a polynomially bounded function of $Z_0$ (essentially because $\sigma$ and its derivatives are polynomially bounded). On the other hand, $\partial Z^{\tildeh^{l-1}_1} / \partial Z$ is a polynomially bounded function of $Z_0$ by Lemma~\ref{th:ipllr-backward-func-Z0-1}. $\sigma'$ is polynomially bounded, and the expectations are finite by Lemma~\ref{th:Z0-dist-moments} since the integrands are polynomially bounded functions of $Z_0$ as they are products of such functions by Lemma~\ref{th:ipllr-backward-func-Z0-1} and by the induction hypothesis. We thus get that 
\begin{align*}
    \frac{\partial Z^{d\tildex^{l-1}_2}}{\partial Z} =  \psi(Z_0)
\end{align*}
with $\psi$ polynomially bounded. We thus have that:
\begin{align*}
    \frac{\partial Z^{d\tildeh^{l-1}_2}}{\partial Z} =  \psi(Z_0)
\end{align*}
with $\psi$ polynomially bounded, which proves claims $(iv)$ and $(v)$ at layer $l-1$. This thus concludes the induction, and with it the proof.
\end{proof}

\subsubsection{The case $t \geq 2$}
We have now treated the base case $t=2$ and are thus equipped to do the induction for $t \geq 2$. To make things easier we first introduce some equations. Let $t \geq 2$, we define the following assertions, where the different $\psi$ appearing are assumed to be polynomially bounded:\\
\textbf{Forward pass at time $t$}:
\begin{align}\label{eq:ipllr-Z0-forward-1-t}
    (i) \ \ Z^{h^1_t} = \psi \left(\hatZ^{U^1\xi_0}, \ \ldots, \ \hatZ^{U^1 \xi_t}, \ \hatZ^{\transpose{(\hatW^2) } d\tildeh^2_0} \right)
\end{align}
For $l \in [2, L]$,
\begin{align}\label{eq:ipllr-Z0-forward-l-t}
    (i) \ \ Z^{h^l_t} = \psi \left( \hatZ^{\hatW^l \tildex^{l-1}_0}, \ \hatZ^{\transpose{(\hatW^{l+1}) } d\tildeh^{l+1}_0} \right)
\end{align}
\begin{align}\label{eq:ipllr-Z0-forward-L-t}
   (iii) \ \ Z^{h^L_t} = \psi \left(\hatZ^{U^{L+1}}, \ \hatZ^{\hatW^{L} \tildex^{L-1}_0} \right)
\end{align}
For $l \in [2, L]$,
\begin{align}\label{eq:ipllr-Z0-forward-partial-t}
   (iv) \ \ \frac{\partial Z^{h^{l-1}_t}}{\hatZ^{\transpose{{(\hatW^l)}} d\tildeh^l_0}}  = \psi \left(Z_0 \right)
\end{align}
\begin{align}\label{eq:ipllr-Z0-forward-partial-transpose-t}
   (v) \ \ \frac{\partial Z^{h^{l}_t}}{\hatZ^{\hatW^l \tildex^{l-1}_0}}  = \psi \left(Z_0 \right)
\end{align}
\begin{align}\label{eq:ipllr-Z0-forward-mutl-W-equals-0}
   (vi) \ \ Z^{W^l(0)x^{l-1}_t} = 0
\end{align}
\\\\
\textbf{Backward pass at time $t$}:
\begin{align}\label{eq:ipllr-Z0-backward-L-t}
    (i1) \ \ Z^{d\tildex^L_t} = \psi \left( \hatZ^{U^{L+1}}, \ \hatZ^{\hatW^{L} \tildex^{L-1}_0} \right) \\
    (i2) \ \ Z^{d\tildeh^L_t} = \psi \left( \hatZ^{U^{L+1}}, \ \hatZ^{\hatW^{L} \tildex^{L-1}_0} \right)
\end{align}
For $l \in [3, L]$,
\begin{align}\label{eq:ipllr-Z0-backward-l-t}
    (ii1) \ \ Z^{d\tildex^{l-1}_t} = \psi \left( \hatZ^{\hatW^{l-1} \tildex^{l-2}},  \  \hatZ^{\transpose{{(\hatW^l)}} d\tildeh^l_0} \right) \\
    (ii2) \ \ Z^{d\tildeh^{l-1}_t} = \psi \left( \hatZ^{\hatW^{l-1} \tildex^{l-2}}, \  \hatZ^{\transpose{{(\hatW^l)}} d\tildeh^l_0} \right)
\end{align}
\begin{align}\label{eq:ipllr-Z0-backward-1-t}
    (iii1) \ \ Z^{d\tildex^1_t} = \psi \left( \hatZ^{U^1 \xi_0}, \ \ldots, \ \hatZ^{U^1 \xi_{t-1}}, \ \hatZ^{\transpose{(\hatW^2) } d\tildeh^2_0} \right) \\
    (iii2) \ \ Z^{d\tildeh^1_t} = \psi \left( \hatZ^{U^1 \xi_0}, \ \ldots, \, \hatZ^{U^1 \xi_t}, \ \hatZ^{\transpose{(\hatW^2) } d\tildeh^2_0} \right)
\end{align}
For $l \in [2, L]$,
\begin{align}\label{eq:ipllr-Z0-backward-partial-t}
   (iv) \ \ \frac{\partial Z^{d\tildeh^l_t}}{\partial \hatZ^{\hatW^l\tildex^{l-1}_0}} = \psi(Z_0)
\end{align}
\begin{align}\label{eq:ipllr-Z0-backward-partial-transpose-t}
    (v) \ \ \frac{\partial Z^{d\tildeh^{l-1}_t}}{\partial \hatZ^{\transpose{{(\hatW^{L})}} d\tildeh^{l}_0}} = \psi(Z_0)
\end{align}
\begin{align}\label{eq:ipllr-Z0-backward-mutl-W-equals-0}
   (vi) \ \ Z^{\transpose{{(W^l(0))}} d\tildeh^{l}_t} = 0
\end{align}
Note that we have proved in Appendix~\ref{sec:ipllr-dyn-2} that all the assertions above hold for $t=2$. Our goal is now to show by induction that they hold for any $t \geq 2$. For this we prove the following two lemmas. The proofs will essentially follow exactly the same pattern as for $t=2$, the only difference is that the formulas will involve more terms, but since any finite sum of polynomially bounded functions is polynomially bounded, we will get the same results. Before proving the lemmas, we introduce the following quantities for $0 \leq s < t$:\\
For $l \in [2, L]$
\begin{align}
    \gamma^f_{s,t,l} :=
    \begin{cases}
        \mathbb{E}[Z^{\tildex^{l-1}_0} Z^{x^{l-1}_t}] \ \text{ if } \ s=0 \\
        \mathbb{E}[Z^{x^{l-1}_s} Z^{x^{l-1}_t}] \ \text{ otherwise}
    \end{cases}
\end{align}
For $l \in [1, L-1]$
\begin{align}
        \gamma^b_{s,t,l} := \mathbb{E}[Z^{d\tildeh^{l+1}_s} Z^{d\tildeh^{l+1}_t}]
\end{align}
$\gamma^f_{s,t,l}$ (\textit{resp.} $\gamma^b_{s,t,l}$) will appear when expressing the variables of the $l$-th layer at time $t$ in the forward (\textit{resp.} backward) pass. We will show in the proofs that as for $t=1$ and $t=2$, those expectations are finite by Lemma~\ref{th:Z0-dist-moments} because the integrands are polynomially bounded functions of $Z_0$ as they are products of such variables. 

\begin{lemma}[Induction step in IP-LLR, forward pass]\label{th:ipllr-induct-vansih-forward}
Consider the IP-LLR parameterization with a positively $p$-homogeneous activation function, and $p \geq 2$. Let $t \geq 2$, and assume that all of the assertions of Equation~\eqref{eq:ipllr-Z0-forward-1-t} up until Equation~\eqref{eq:ipllr-Z0-backward-mutl-W-equals-0} hold for every time step $s \in [2, t]$. Then, the assertions of the forward pass, \ie from Equation~\eqref{eq:ipllr-Z0-forward-1-t} up until Equation~\eqref{eq:ipllr-Z0-forward-mutl-W-equals-0}, hold at time $t+1$.
\end{lemma}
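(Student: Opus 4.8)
The plan is to reproduce, at time $t+1$, exactly the argument already carried out for $t=2$ in Lemma~\ref{th:ipllr-forward-func-Z0-2}: the recursive formulas of \autoref{th:z-forward-ipllr-t} for the forward pass of IP-LLR now carry sums over $s\in[0,t]$ rather than just over $s\in\{0,1\}$, but a finite sum of polynomially bounded functions is again polynomially bounded, and Lemma~\ref{th:Z0-dist-moments} guarantees that all the finite-expectation coefficients appearing are indeed finite, so the proof is structurally identical. Concretely I would induct over the layer index $l$ from $l=1$ up to $l=L$, establishing at each layer the appropriate member of \eqref{eq:ipllr-Z0-forward-1-t}--\eqref{eq:ipllr-Z0-forward-L-t}, the vanishing \eqref{eq:ipllr-Z0-forward-mutl-W-equals-0}, and the two partial-derivative assertions \eqref{eq:ipllr-Z0-forward-partial-t}--\eqref{eq:ipllr-Z0-forward-partial-transpose-t}, all at time $t+1$. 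The backward assertions at times $s\in[0,t]$ (needed because the forward pass at $t+1$ involves $d\tildeh^l_s$ for $s\le t$) are supplied by Lemmas~\ref{th:ipllr-backward-func-Z0-1}, \ref{th:ipllr-backward-func-Z0-2} for $s\in\{0,1,2\}$ and by the induction hypothesis \eqref{eq:ipllr-Z0-backward-L-t}--\eqref{eq:ipllr-Z0-backward-partial-transpose-t} for $s\in[2,t]$.

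For $l=1$, \autoref{th:z-forward-ipllr-t} gives
\begin{align*}
    Z^{h^1_{t+1}} = \hatZ^{U^1\xi_{t+1}} + Z^{v^1} - \eta\scalarlim{\chi}_0(\transpose{\xi_0}\xi_{t+1}+1)Z^{d\tildeh^1_0} - \eta\sum_{s=1}^{t}\scalarlim{\chi}_s(\transpose{\xi_s}\xi_{t+1}+1)Z^{d\tildeh^1_s}.
\end{align*}
Each $\scalarlim{\chi}_s$ is a finite constant (a valid initial scalar, by \autoref{th:z-forward-ipllr-t}), and each $Z^{d\tildeh^1_s}$ is, by Lemma~\ref{th:ipllr-backward-func-Z0-1} for $s\in\{0,1\}$ and the backward hypothesis \eqref{eq:ipllr-Z0-backward-1-t} for $s\in[2,t]$, a polynomially bounded function of $\hatZ^{U^1\xi_0},\dots,\hatZ^{U^1\xi_s},\hatZ^{\transpose{(\hatW^2)}d\tildeh^2_0}$. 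Summing, $Z^{h^1_{t+1}}$ is a polynomially bounded function of $\hatZ^{U^1\xi_0},\dots,\hatZ^{U^1\xi_{t+1}},\hatZ^{\transpose{(\hatW^2)}d\tildeh^2_0}$, which is \eqref{eq:ipllr-Z0-forward-1-t} at $t+1$; differentiating term by term, using $\partial Z^{d\tildeh^1_0}/\partial\hatZ^{\transpose{(\hatW^2)}d\tildeh^2_0}=\sigma'(\hatZ^{U^1\xi_0})$ and the backward partial hypothesis \eqref{eq:ipllr-Z0-backward-partial-transpose-t}, yields \eqref{eq:ipllr-Z0-forward-partial-t} for the index $l=2$.

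For the step $l-1\to l$ with $l\in[2,L]$, \autoref{th:z-forward-ipllr-t} gives $Z^{h^l_{t+1}}=\scalarlim{\omega}_l Z^{\hatW^l x^{l-1}_{t+1}}-\eta\sum_{s=0}^{t}\scalarlim{\chi}_s\gamma^f_{s,t+1,l}Z^{d\tildeh^l_s}$. From the layer-$(l-1)$ step I already know $Z^{h^{l-1}_{t+1}}$ is a polynomially bounded function of $Z_0$, hence $Z^{x^{l-1}_{t+1}}=\sigma(Z^{h^{l-1}_{t+1}})$ is too, and the chain rule with \eqref{eq:ipllr-Z0-forward-partial-t} at index $l$ shows $\partial Z^{x^{l-1}_{t+1}}/\partial\hatZ^{\transpose{(\hatW^l)}d\tildeh^l_0}$ is a polynomially bounded function of $Z_0$; Lemma~\ref{th:initial-W-vanish}(i) then forces $\scalarlim{\omega}_l Z^{\hatW^l x^{l-1}_{t+1}}=0$, which is \eqref{eq:ipllr-Z0-forward-mutl-W-equals-0}. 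The coefficients $\gamma^f_{s,t+1,l}$ are finite by Lemma~\ref{th:Z0-dist-moments} (their integrands are products of polynomially bounded functions of $Z_0$, using the forward hypotheses on $Z^{x^{l-1}_s}$ for $s\le t$, the value of $Z^{\tildex^{l-1}_0}$ for $s=0$, and the bound just obtained on $Z^{x^{l-1}_{t+1}}$), and each $Z^{d\tildeh^l_s}$, $0\le s\le t$, is by Lemmas~\ref{th:ipllr-backward-func-Z0-1}, \ref{th:ipllr-backward-func-Z0-2} and \eqref{eq:ipllr-Z0-backward-l-t}/\eqref{eq:ipllr-Z0-backward-L-t} a polynomially bounded function of $(\hatZ^{\hatW^l\tildex^{l-1}_0},\hatZ^{\transpose{(\hatW^{l+1})}d\tildeh^{l+1}_0})$ — or of $(\hatZ^{U^{L+1}},\hatZ^{\hatW^L\tildex^{L-1}_0})$ when $l=L$. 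This gives \eqref{eq:ipllr-Z0-forward-l-t} (resp.\ \eqref{eq:ipllr-Z0-forward-L-t}) at $t+1$. Finally, differentiating the formula for $Z^{h^l_{t+1}}$ with respect to $\hatZ^{\transpose{(\hatW^{l+1})}d\tildeh^{l+1}_0}$ and to $\hatZ^{\hatW^{l+1}\tildex^l_0}$, via the chain rule, the explicit expressions for $\partial Z^{d\tildeh^l_0}/\partial(\cdot)$, and the backward partial hypotheses \eqref{eq:ipllr-Z0-backward-partial-t}--\eqref{eq:ipllr-Z0-backward-partial-transpose-t}, produces \eqref{eq:ipllr-Z0-forward-partial-t} at index $l+1$ and \eqref{eq:ipllr-Z0-forward-partial-transpose-t} at index $l$ (with $\hatZ^{\transpose{(\hatW^{L+1})}d\tildeh^{L+1}_0}$ replaced by $\hatZ^{U^{L+1}}$ when $l=L$). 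The induction over $l$ then closes, establishing all the forward assertions at time $t+1$.

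The main obstacle is not conceptual but bookkeeping: one must keep precise track, layer by layer and boundary case by boundary case (notably $l=1$ and $l=L$, where the neighbouring matrix variables degenerate into the initial vectors $U^1\xi_s$ or $U^{L+1}$), of exactly which coordinates of $Z_0$ enter each variable, and verify at each of the many steps that polynomial boundedness is preserved under finite sums, products, composition with $\sigma,\sigma',\sigma''$, and replacement of the finite coefficients $\scalarlim{\chi}_s$ and $\gamma^f_{s,t+1,l}$ by constants. Since none of these operations breaks polynomial boundedness and Lemma~\ref{th:Z0-dist-moments} supplies finiteness of every expectation that appears, the argument goes through verbatim as in the $t=2$ case of Lemma~\ref{th:ipllr-forward-func-Z0-2}.
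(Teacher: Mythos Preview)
Your proposal is correct and takes essentially the same approach as the paper: both follow the template of Lemma~\ref{th:ipllr-forward-func-Z0-2}, inducting over the layer index, invoking \autoref{th:z-forward-ipllr-t} for the recursive formula, Lemma~\ref{th:initial-W-vanish} for the vanishing of $\scalarlim{\omega}_l Z^{\hatW^l x^{l-1}_{t+1}}$, Lemma~\ref{th:Z0-dist-moments} for finiteness of the $\gamma^f$ coefficients, and the backward hypotheses \eqref{eq:ipllr-Z0-backward-l-t}--\eqref{eq:ipllr-Z0-backward-partial-transpose-t} (together with the $s\in\{0,1\}$ base cases) to control the $Z^{d\tildeh^l_s}$ terms and their partials. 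One small slip: when you say you differentiate $Z^{h^l_{t+1}}$ with respect to $\hatZ^{\hatW^{l+1}\tildex^l_0}$ to obtain \eqref{eq:ipllr-Z0-forward-partial-transpose-t} at index $l$, you mean $\hatZ^{\hatW^{l}\tildex^{l-1}_0}$ (the former variable does not appear in $Z^{h^l_{t+1}}$); with that correction your bookkeeping matches the paper's.
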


\begin{proof}
We follow the proof of Lemma~\ref{th:ipllr-forward-func-Z0-2}. By \autoref{th:z-forward-ipllr-t}, we have
\begin{align*}
    Z^{h^1_{t+1}} = Z^{U^1 \xi_{t+1}} - \eta \sum_{s=0}^t \scalarlim{\chi}_s (\transpose{\xi_s} \xi_{t+1}) Z^{d\tildeh^1_s}
\end{align*}
By Lemma~\ref{th:ipllr-backward-func-Z0-1} $Z^{d\tildeh^1_1} = \psi ( \hatZ^{U^1 \xi_0}, \ \hatZ^{U^1 \xi_1}, \ \hatZ^{\transpose{(\hatW^2) } d\tildeh^2_0} )$ and and by assumption we also have $Z^{d\tildeh^1_s} = \psi(Z^{U^1 \xi_0}, \ldots, Z^{U^1 \xi_s}, \hatZ^{\transpose{(\hatW^2) } d\tildeh^2_0})$ where the different $\psi$ are polynomially bounded, which gives claim $(i)$ at time $t+1$.
\\\\
We have 
\begin{align*}
    \frac{\partial Z^{h^1_{t+1}}}{\partial \hatZ^{\transpose{{(\hatW^2)}} d\tildeh^2_0}} = - \eta \sum_{s=0}^t \scalarlim{\chi}_s (\transpose{\xi_s} \xi_{t+1}) \frac{\partial Z^{d\tildeh^1_s}}{\partial \hatZ^{\transpose{{(\hatW^2)}} d\tildeh^2_0}}
\end{align*}
with 
\begin{align*}
    \frac{\partial Z^{d\tildeh^1_0}}{\partial \hatZ^{\transpose{{(\hatW^2)}} d\tildeh^2_0}} = \sigma'(Z^{U^1 \xi_0})
\end{align*}
which is a polynomially bounded function of $Z_0$ and so is $\partial Z^{d\tildeh^1_1} / \partial \hatZ^{\transpose{{(\hatW^2)}} d\tildeh^2_0}$ by Lemma~\ref{th:ipllr-backward-func-Z0-1}. In addition, by assumption, for $s \in [2, t]$, $\partial Z^{d\tildeh^1_s} / \partial \hatZ^{\transpose{{(\hatW^2)}} d\tildeh^2_0} = \psi(Z_0)$ with $\psi$ polynomially bounded. We thus get claim $(iv)$ for $l=2$ at time $t+1$. 
\\\\
We have by \autoref{th:z-forward-ipllr-t}
\begin{align*}
    Z^{h^2_{t+1}} = \scalarlim{\omega_2} Z^{\hatW^2x^1_{t+1}} - \eta \sum_{s=0}^t \scalarlim{\chi}_s \gamma^f_{s, t+1, 2} Z^{d\tildeh^2_s}
\end{align*}
Now $Z^{x^1_{t+1}} = \sigma(Z^{h^1_{t+1)}}$ is a polynomially bounded function of $Z_0$ because $Z^{h^1_{t+1}}$ is and $\sigma$ is polynomially bounded. Secondly, we have
\begin{align*}
     \frac{\partial Z^{x^1_{t+1}}}{\partial \hatZ^{\transpose{{(\hatW^2)}} d\tildeh^2_0}} =  \frac{\partial Z^{h^1_{t+1}}}{\partial \hatZ^{\transpose{{(\hatW^2)}} d\tildeh^2_0}} \sigma'(Z^{h^1_{t+1)}}
\end{align*}
which is a polynomially bounded function of $Z_0$ by the previous results and because $\sigma'$ is polynomially bounded. By Lemma~\ref{th:initial-W-vanish} we get that $\scalarlim{\omega_2} Z^{\hatW^2x^1_{t+1}} = 0$ which gives claim $(vi)$ for $l=2$ at time $t+1$. In addition, this yields 
\begin{align*}
    Z^{h^2_{t+1}} = - \eta \scalarlim{\chi}_0 \gamma^f_{0, t+1, 2} Z^{d\tildeh^2_0} - \eta \sum_{s=0}^t \scalarlim{\chi}_s \gamma^f_{s, t+1, 2} Z^{d\tildeh^2_s}
\end{align*}
The expectations defining the $\gamma^f$ are finite by Lemma~\ref{th:Z0-dist-moments} since the integrands are polynomially bounded functions of $Z_0$, as they are products of such variables by the previous result on $Z^{x^1_{t+1}}$ and by the assumption. 
This gives claim $(ii)$ for $l=2$ by the results for the backward passes at time $t=0$ and $t=1$ and by the assumptions. Let $Z \in \{\hatZ^{\hatW^2 \tildex^1_0}, \hatZ^{\transpose{{(\hatW^3)}} d\tildeh^3_0} \}$. We have
\begin{align*}
        \frac{\partial Z^{h^2_{t+1}}}{\partial Z} =- \eta \sum_{s=0}^t \scalarlim{\chi}_s  \gamma^f_{s, t+1, 2} \frac{\partial Z^{d\tildeh^2_{s}}}{\partial Z} 
\end{align*}
$\partial Z^{d\tildeh^2_0} / \partial Z$ has a simple expression and is a polynomially bounded function of $Z_0$. Additionally, by the results of the backward pass for $t=1$, and by assumption, for $s \in [1, t]$, $\partial Z^{d\tildeh^2_s} / \partial Z = \psi(Z_0)$ with $\psi$ polynomially bounded. Since the $\gamma^f$ are finite, we thus get
\begin{align*}
    \frac{\partial Z^{h^2_{t+1}}}{\partial Z} = \psi(Z_0)
\end{align*}
with $\psi$ polynomially bounded. This gives claims $(iv)$ and $(v)$ at time $t+1$. 
\\\\
Let $l \in [2, L-1]$ and assume claims $(ii)$, $(iv)$, $(v)$, and $(vi)$ for layer $l$ at time $t+1$. Then, by \autoref{th:z-forward-ipllr-t} we have:
\begin{align*}
    Z^{h^{l+1}_{t+1}} = \scalarlim{\omega}_{l+1} Z^{\hatW^{l+1} x^l_{t+1}} - \eta \sum_{s=0}^t \scalarlim{\chi}_s \gamma^f_{s, t+1, l+1} Z^{d\tildeh^{l+1}_s}
\end{align*}
Now $Z^{x^l_{t+1}} = \sigma(Z^{h^l_{t+1}})$ is a polynomially bounded function of $Z_0$ because $Z^{h^l_{t+1}}$ is by the induction hypothesis and $\sigma$ is polynomially bounded. Secondly, we have
\begin{align*}
     \frac{\partial Z^{x^l_{t+1}}}{\partial \hatZ^{\transpose{{(\hatW^{l+1})}} d\tildeh^{l+1}_0}} =  \frac{\partial Z^{h^l_{t+1}}}{\partial \hatZ^{\transpose{{(\hatW^{l+1})}} d\tildeh^{l+1}_0}} \sigma'(Z^{h^l_{t+1}})
\end{align*}
which is a polynomially bounded function of $Z_0$ by the induction hypothesis. By Lemma~\ref{th:initial-W-vanish} we get that $\scalarlim{\omega_{l+1}} Z^{\hatW^2x^l_{t+1}} = 0$ which gives claim $(vi)$ for layer $l+1$ at time $t+1$. In addition, this yields 
\begin{align*}
    Z^{h^{l+1}_{t+1}} = - \eta \sum_{s=0}^t \scalarlim{\chi}_s \gamma^f_{s, t+1, l+1} Z^{d\tildeh^{l+1}_s}
\end{align*}
The expectations defining the $\gamma^f$ are finite by Lemma~\ref{th:Z0-dist-moments} since the integrands are polynomially bounded functions of $Z_0$, as they are products of such variables by the assumption and by the induction hypothesis. If $l+1 = L$, we have, for any $s \in [0, s]$, $Z^{d\tildeh^{l+1}_s} = \psi(\hatZ^{U^{L+1}}, \hatZ^{\hatW^l \tildex^{L-1}_0})$ with $\psi$ polynomially bounded, which shows 
\begin{align*}
    Z^{h^{l+1}_{t+1}} = \psi(\hatZ^{U^{L+1}}, \hatZ^{\hatW^l \tildex^{L-1}_0})
\end{align*}
with $\psi$ polynomially bounded, which gives claim $(iii)$. If $l+1 \leq L-1$, for any $s \in [0, s]$, $Z^{d\tildeh^{l+1}_s} = \psi(\hatZ^{\hatW^{l+1} \tildex^{l}_0}, \hatZ^{\transpose{{(\hatW^{l+2}()}} d\tildeh^{l+2}_0})$ with $\psi$ polynomially bounded, which shows 
\begin{align*}
    Z^{h^{l+1}_{t+1}} = \psi(\hatZ^{\hatW^{l+1} \tildex^{l}_0}, \hatZ^{\transpose{{(\hatW^{l+2})}} d\tildeh^{l+2}_0})
\end{align*}
$\psi$ polynomially bounded, which shows claim $(ii)$ at layer $l+1$ for time $t+1$. Let $Z \in \{\hatZ^{\hatW^{l+1} \tildex^l_0}, \hatZ^{\transpose{{(\hatW^{l+2})}} d\tildeh^{l+2}_0} \}$. Note that the second value is only valid if $l+1 \leq L-1$. Whenever $Z$ is well-defined, we have
\begin{align*}
    \frac{\partial Z^{h^{l+1}_{t+1}}}{\partial Z} = -\eta \sum_{s=0}^t \scalarlim{\chi}_s \gamma^f_{s, t+1, l+1} \frac{\partial Z^{d\tildeh^{l+1}_s}}{\partial Z}
\end{align*}
For both possible values of $Z$, $\partial Z^{d\tildeh^{l+1}_0} / \partial Z$ has a simple expression and is a polynomially bounded function of $Z_0$. $Z^{d\tildeh^{l+1}_1} / \partial Z$ is a polynomially bounded function of $Z_0$ by the results of the backward pass at time $t=1$ (Lemma~\ref{th:ipllr-backward-func-Z0-1}), and finally for  $s \in [2, t]$, $Z^{d\tildeh^{l+1}_1} / \partial Z = \psi(Z_0)$ with $\psi$ polynomially bounded by assumption. Since the $\gamma^f$ are finite, this gives
\begin{align*}
    \frac{\partial Z^{h^{l+1}_{t+1}}}{\partial Z} = \psi(Z_0)
\end{align*}
with $\psi$ polynomially bounded. This proves claim $(iv)$ and $(v)$ for layer $l+1$ at time $t+1$, and thus concludes the induction on $l$ and with it the proof.
\end{proof}

\begin{lemma}[Induction step in IP-LLR, backward pass]\label{th:ipllr-induct-vansih-backward}
Consider the IP-LLR parameterization with a positively $p$-homogeneous activation function, and $p \geq 2$. Let $t \geq 2$, and assume that all of the assertions of Equation~\eqref{eq:ipllr-Z0-forward-1-t} up until Equation~\eqref{eq:ipllr-Z0-backward-mutl-W-equals-0} for every time step $s \in [2, t]$. Additionally assume that the assertions of the forward pass, \ie from Equation~\eqref{eq:ipllr-Z0-forward-1-t} up until Equation~\eqref{eq:ipllr-Z0-forward-mutl-W-equals-0}, hold at time $t+1$. Then, the assertions of the backward pass, \ie from Equation~\eqref{eq:ipllr-Z0-backward-L-t} up until Equation~\eqref{eq:ipllr-Z0-backward-mutl-W-equals-0}, hold at time $t+1$.
\end{lemma}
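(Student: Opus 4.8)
The plan is to mirror the proof of the $t=2$ case, Lemma~\ref{th:ipllr-backward-func-Z0-2}, the only change being that the sums over past time steps now range over $s\in[0,t]$ instead of $s\in\{0,1\}$; since any finite sum or product of polynomially bounded functions of $Z_0$ is again polynomially bounded and, by Lemma~\ref{th:Z0-dist-moments}, has finite expectation against the Gaussian $Z_0$, this enlargement introduces no new analytic difficulty. Crucially, at this stage one may only invoke the \emph{un-simplified} recursions of Theorem~\ref{th:z-forward-ipllr-t} and Theorem~\ref{th:z-backward-ipllr-t} (not their Corollaries~\ref{th:z-forward-ipllr-t-0},~\ref{th:z-backward-ipllr-t-0}, which presuppose the very vanishing result this appendix is proving), and establish the vanishing of the $\scalarlim{\omega}_l$-terms from scratch using Lemma~\ref{th:initial-W-vanish}. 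Concretely I would run a downward induction on $l$ from $l=L$ to $l=1$, using at each layer: (a) Lemma~\ref{th:initial-W-vanish} to kill $\scalarlim{\omega}_l Z^{\transpose{{(\hatW^l)}} d\tildeh^l_{t+1}}$; (b) Lemma~\ref{th:Z0-dist-moments} for finiteness of the expectations $\gamma^b_{s,t+1,l}=\mathbb{E}[Z^{d\tildeh^{l+1}_s}Z^{d\tildeh^{l+1}_{t+1}}]$; and (c) the forward-pass assertions \eqref{eq:ipllr-Z0-forward-1-t}--\eqref{eq:ipllr-Z0-forward-mutl-W-equals-0} at time $t+1$ (hypothesis), the backward-pass assertions at times $s\in[2,t]$ (hypothesis), and the base cases $s\in\{0,1\}$ (Lemmas~\ref{th:homog-first-backward} and~\ref{th:ipllr-backward-func-Z0-1}) to identify which coordinates of $Z_0$ enter each variable.

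For the base layer $l=L$: Theorem~\ref{th:z-backward-ipllr-t}(i) gives $Z^{d\tildex^L_{t+1}}=Z^{U^{L+1}}-\eta\scalarlim{\chi}_0 Z^{\tildex^L_0}-\eta\sum_{s=1}^{t}\scalarlim{\chi}_s Z^{x^L_s}$; each $Z^{x^L_s}$ with $0\le s\le t$ is a polynomially bounded function of $(\hatZ^{U^{L+1}},\hatZ^{\hatW^L\tildex^{L-1}_0})$ by \eqref{eq:ipllr-Z0-forward-L-t} (using Lemmas~\ref{th:homog-first-forward} and~\ref{th:ipllr-forward-func-Z0-1} for $s=0,1$), hence so is $Z^{d\tildex^L_{t+1}}$, and composing with $\sigma'(Z^{h^L_{t+1}})$---a polynomially bounded function of the same two variables by \eqref{eq:ipllr-Z0-forward-L-t} at $t+1$---yields \eqref{eq:ipllr-Z0-backward-L-t}. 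Differentiating with respect to $\hatZ^{\hatW^L\tildex^{L-1}_0}$ and applying the chain rule, using \eqref{eq:ipllr-Z0-forward-partial-transpose-t} at $t+1$ and the explicit form of $\partial Z^{\tildeh^L_0}/\partial\hatZ^{\hatW^L\tildex^{L-1}_0}$, gives \eqref{eq:ipllr-Z0-backward-partial-t} at $l=L$. For the downward step, assume the conclusions hold at layer $l\in[2,L]$; then $Z^{d\tildeh^l_{t+1}}$ and $\partial Z^{d\tildeh^l_{t+1}}/\partial\hatZ^{\hatW^l\tildex^{l-1}_0}$ are polynomially bounded functions of $Z_0$, so Lemma~\ref{th:initial-W-vanish} gives $\scalarlim{\omega}_l Z^{\transpose{{(\hatW^l)}} d\tildeh^l_{t+1}}=0$ (this is \eqref{eq:ipllr-Z0-backward-mutl-W-equals-0} at $l$), and Theorem~\ref{th:z-backward-ipllr-t}(ii) collapses to $Z^{d\tildex^{l-1}_{t+1}}=-\eta\scalarlim{\chi}_0\gamma^b_{0,t+1,l-1}Z^{\tildex^{l-1}_0}-\eta\sum_{s=1}^{t}\scalarlim{\chi}_s\gamma^b_{s,t+1,l-1}Z^{x^{l-1}_s}$ with the $\gamma^b$ finite. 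Since $Z^{\tildex^{l-1}_0}$ and the $Z^{x^{l-1}_s}$ are polynomially bounded functions of $\hatZ^{\hatW^{l-1}\tildex^{l-2}_0}$ and $\hatZ^{\transpose{{(\hatW^l)}} d\tildeh^l_0}$ (or of the list $\hatZ^{U^1\xi_0},\dots,\hatZ^{U^1\xi_t},\hatZ^{\transpose{{(\hatW^2)}} d\tildeh^2_0}$ when $l-1=1$) by \eqref{eq:ipllr-Z0-forward-l-t}/\eqref{eq:ipllr-Z0-forward-1-t} and the base cases, composing with $\sigma'(Z^{h^{l-1}_{t+1}})$ and invoking the forward assertion at $t+1$ yields \eqref{eq:ipllr-Z0-backward-l-t} (respectively \eqref{eq:ipllr-Z0-backward-1-t}); a further differentiation and chain rule give \eqref{eq:ipllr-Z0-backward-partial-t} and \eqref{eq:ipllr-Z0-backward-partial-transpose-t} at layer $l-1$, completing the induction.

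The only genuine obstacle is bookkeeping: one must track precisely which finite subset of the coordinates of $Z_0$ enters each of $Z^{d\tildex^l_{t+1}}$, $Z^{d\tildeh^l_{t+1}}$ and their partial derivatives, the delicate point being the first layer, where $Z^{d\tildeh^1_{t+1}}$ depends on $\hatZ^{U^1\xi_0},\dots,\hatZ^{U^1\xi_{t+1}}$ (because $h^1_{t+1}$ already involves $\xi_{t+1}$ through its pre-activation) while $Z^{d\tildex^1_{t+1}}$ picks up only $\hatZ^{U^1\xi_0},\dots,\hatZ^{U^1\xi_t}$ via the earlier weight updates, exactly as recorded in \eqref{eq:ipllr-Z0-backward-1-t}. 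All the actual analytic input---the vanishing of contributions from the intermediate-layer initial matrices, the finiteness of the relevant moments, and the structure of the $Z$-recursions---is already packaged in Lemma~\ref{th:initial-W-vanish}, Lemma~\ref{th:Z0-dist-moments}, Theorem~\ref{th:z-forward-ipllr-t} and Theorem~\ref{th:z-backward-ipllr-t}, so beyond this careful accounting the argument is routine.
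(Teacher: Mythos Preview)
Your proposal is correct and follows essentially the same approach as the paper: a downward induction on $l$ mirroring the $t=2$ case (Lemma~\ref{th:ipllr-backward-func-Z0-2}), using Theorem~\ref{th:z-backward-ipllr-t} for the recursions, Lemma~\ref{th:initial-W-vanish} to annihilate the $\scalarlim{\omega}_l$-terms, and Lemma~\ref{th:Z0-dist-moments} for finiteness of the $\gamma^b$ coefficients. Your explicit remark that one must use the un-simplified Theorems~\ref{th:z-forward-ipllr-t}/\ref{th:z-backward-ipllr-t} rather than their corollaries (to avoid circularity) and your careful treatment of the first-layer dependence on $\hatZ^{U^1\xi_0},\dots,\hatZ^{U^1\xi_{t+1}}$ versus $\hatZ^{U^1\xi_0},\dots,\hatZ^{U^1\xi_t}$ match exactly what the paper does.
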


\begin{proof}
We follow the proof of Lemma~\ref{th:ipllr-backward-func-Z0-2}. We have:
\begin{align*}
    Z^{d\tildex^L_{t+1}} = \hatZ^{U^{L+1}} - \eta\scalarlim{\chi}_0 Z^{\tildex^L_0} - \eta \sum_{s=1}^t \scalarlim{\chi}_s Z^{x^L_t}
\end{align*}
where $Z^{\tildex^L_0} = \sigma(\hatZ^{\hatW^l \tildex^{L-1}_0})$, $Z^{x^L_1} = \psi(\hatZ^{U^{L+1}}, \hatZ^{\hatW^l \tildex^{L-1}_0})$ with $\psi$ polynomially bounded by Lemma~\ref{th:ipllr-forward-func-Z0-1} and for $s \in [2, t]$, $Z^{x^L_s} = \psi(\hatZ^{U^{L+1}}, \hatZ^{\hatW^l \tildex^{L-1}_0})$ with $\psi$ polynomially bounded by assumption. Combining all this gives
\begin{align*}
    Z^{d\tildex^L_{t+1}} = \psi(\hatZ^{U^{L+1}}, \hatZ^{\hatW^l \tildex^{L-1}_0})
\end{align*}
with $\psi$ polynomially bounded since $\sigma$ is also polynomially bounded. Then
\begin{align*}
    Z^{d\tildeh^L_{t+1}} = Z^{d\tildex^L_{t+1}} \sigma'(Z^{h^L_{t+1}})
\end{align*}
and since $Z^{h^L_{t+1}} = \psi(\hatZ^{U^{L+1}}, \hatZ^{\hatW^l \tildex^{L-1}_0})$ with $\psi$ polynomially bounded by assumption, we get 
\begin{align*}
    Z^{d\tildeh^L_{t+1}} = \psi(\hatZ^{U^{L+1}}, \hatZ^{\hatW^l \tildex^{L-1}_0})
\end{align*}
with $\psi$ polynomially bounded since $\sigma'$ is also polynomially bounded. This thus proves claim $(i)$. Now, we have 
\begin{align*}
    \frac{\partial Z^{d\tildeh^L_{t+1}}}{\partial \hatZ^{\hatW^l \tildex^{L-1}_0}} =& \frac{\partial Z^{d\tildex^L_{t+1}}}{\partial \hatZ^{\hatW^l \tildex^{L-1}_0}} \sigma'(Z^{h^L_{t+1}}) + Z^{d\tildex^L_{t+1}} \frac{\partial Z^{h^L_{t+1}}}{\partial \hatZ^{\hatW^l \tildex^{L-1}_0}} \sigma''(Z^{h^L_{t+1}})
\end{align*}
where $Z^{h^L_{t+1}}$, $\partial Z^{h^L_{t+1}} / \partial \hatZ^{\hatW^l \tildex^{L-1}_0}$, and $Z^{d\tildex^L_{t+1}}$ are polynomially bounded functions of $Z_0$ by assumption and by the previous result on $Z^{d\tildex^L_{t+1}}$. Additionally, we have
\begin{align*}
    \frac{\partial Z^{d\tildex^L_{t+1}}}{\partial \hatZ^{\hatW^l \tildex^{L-1}_0}} =& - \eta \scalarlim{\chi}_0 \sigma'(\hatZ^{\hatW^l \tildex^{L-1}_0})  - \eta \sum_{s=1}^t \scalarlim{\chi}_s \frac{\partial Z^{h^L_s}}{\partial \hatZ^{\hatW^l \tildex^{L-1}_0}} \sigma'(Z^{h^L_s})
\end{align*}
$\sigma'$ is polynomially bounded and by the results of the forward pass at $t=1$ (Lemma~\ref{th:ipllr-forward-func-Z0-1}) $\partial Z^{h^L_1} / \partial \hatZ^{\hatW^l \tildex^{L-1}_0} = \psi(Z_0)$ with $\psi$ polynomially bounded. In addition, by assumption, for any $s \in [2, t]$, $\partial Z^{h^L_s} / \partial \hatZ^{\hatW^l \tildex^{L-1}_0} = \psi(Z_0)$ with $\psi$ polynomially bounded. This thus gives
\begin{align*}
    \frac{\partial Z^{d\tildex^L_{t+1}}}{\partial \hatZ^{\hatW^l \tildex^{L-1}_0}} = \psi(Z_0)
\end{align*}
with $\psi$ polynomially bounded, and thus
\begin{align*}
    \frac{\partial Z^{d\tildeh^L_{t+1}}}{\partial \hatZ^{\hatW^l \tildex^{L-1}_0}} = \psi(Z_0)
\end{align*}
with $\psi$ polynomially bounded since $\sigma'$ and $\sigma''$ are polynomially bounded. This proves $(iv)$ for $l=L$ at time $t+1$. 
\\\\
We have:
\begin{align*}
    Z^{d\tildex^{L-1}_{t+1}} = \scalarlim{\omega}_L \hatZ^{\transpose{{(\hatW^l)}} d\tildeh^L_{t+1}} - \eta\scalarlim{\chi}_0 \gamma^b_{0, t+1, L-1} Z^{\tildex^{L-1}_0} - \eta \sum_{s=1}^t \scalarlim{\chi}_s  \gamma^b_{s, t+1, L-1} Z^{x^{L-1}_s}
\end{align*}
From the previous step we have that both $Z^{d\tildeh^L_{t+1}}$ and $\partial Z^{d\tildeh^L_{t+1}} / \partial \hatZ^{\hatW^l \tildex^{L-1}_0}$ are polynomially bounded functions of $Z_0$. By Lemma~\ref{th:initial-W-vanish}, this first shows that $\scalarlim{\omega}_L \hatZ^{\transpose{{(\hatW^l)}} d\tildeh^L_{t+1}} = 0$, and thus gives $(vi)$ for $l=L$, leading to:
\begin{align*}
    Z^{d\tildex^{L-1}_{t+1}} = - \eta\scalarlim{\chi}_0 \gamma^b_{0, t+1, L} Z^{\tildex^{L-1}_0} - \eta \sum_{s=1}^t \scalarlim{\chi}_s  \gamma^b_{s, t+1, L} Z^{x^{L-1}_s}
\end{align*}
Now $Z^{\tildex^{L-1}_0} = \sigma(\hatZ^{\hatW^{l-1} \tildex^{L-2}_0})$ and we also have that $Z^{x^{L-1}_1} = \psi(\hatZ^{\hatW^{l-1} \tildex^{L-2}_0}, \hatZ^{\transpose{{(\hatW^{L})}} d\tildeh^{L}_0})$ with $\psi$ polynomially bounded by Lemma~\ref{th:ipllr-forward-func-Z0-1}. In addition, we have or any $s \in [2, t]$, we get $Z^{x^{L-1}_s} = \psi(\hatZ^{\hatW^{l-1} \tildex^{L-2}_0}, \hatZ^{\transpose{{(\hatW^{L})}} d\tildeh^{L}_0})$ with $\psi$ polynomially bounded by assumption since it is the case for $Z^{h^{L-1}_s}$ and $\sigma$ is polynomially bounded. As always the expectations defining the $\gamma^b$ are finite by Lemma~\ref{th:Z0-dist-moments} because the integrands are polynomially bounded functions of $Z_0$ as products of such variables by the results for the backward pass at times $t=0$ and $t=1$, by the assumptions and by the previous result on $Z^{d\tildeh^L_{t+1}}$. Since $\sigma$ is also polynomially bounded, this gives
\begin{align*}
    Z^{d\tildex^{L-1}_{t+1}} = \psi(\hatZ^{\hatW^{l-1} \tildex^{L-1}_0}, \hatZ^{\transpose{{(\hatW^{L})}} d\tildeh^{L}_0})
\end{align*}
Then, we have
\begin{align*}
    Z^{d\tildeh^{L-1}_{t+1}} = Z^{d\tildex^{L-1}_{t+1}} \sigma'(Z^{h^{L-1}_{t+1)}}
\end{align*}
and since $Z^{h^{L-1}_{t+1}} = \psi(\hatZ^{\hatW^{l-1} \tildex^{L-1}_0}, \hatZ^{\transpose{{(\hatW^{L})}} d\tildeh^{L}_0})$ with $\psi$ polynomially bounded by assumption
\begin{align*}
    Z^{d\tildeh^{L-1}_2} = \psi(\hatZ^{\hatW^{l-1} \tildex^{L-1}_0}, \hatZ^{\transpose{{(\hatW^{L})}} d\tildeh^{L}_0})
\end{align*}
with $\psi$ polynomially bounded since $\sigma'$ is also polynomially bounded. This thus proves claim $(ii)$ for $l=L-1$. Now, let $Z \in \{\hatZ^{\hatW^{l-1} \tildex^{L-1}_0}, \hatZ^{\transpose{{(\hatW^{L})}} d\tildeh^{L}_0}\}$. We have 
\begin{align*}
    \frac{\partial Z^{d\tildeh^{L-1}_{t+1}}}{\partial Z} =& \frac{\partial Z^{d\tildex^{L-1}_{t+1}}}{\partial Z} \sigma'(Z^{h^{L-1}_{t+1)}} + Z^{d\tildex^{L-1}_{t+1}} \frac{\partial Z^{h^{L-1}_{t+1}}}{\partial Z} \sigma''(Z^{h^{L-1}_{t+1)}}
\end{align*}
where $Z^{h^{L-1}_{t+1}}$, $\partial Z^{h^{L-1}_{t+1}} / \partial Z$, and $Z^{d\tildex^{L-1}_{t+1}}$ are polynomially bounded functions of $Z_0$ by assumption and by the previous result. We have
\begin{align*}
    \frac{\partial Z^{d\tildex^{L-1}_{t+1}}}{\partial Z} =& - \eta \scalarlim{\chi}_0 \gamma^b_{0, t+1, L-1} \frac{\partial Z^{\tildeh^{L-1}_0}}{\partial Z} \sigma'(Z^{\tildeh^{L-1}_0}) - \eta \sum_{s=1}^t \scalarlim{\chi}_s \gamma^b_{s,t+1,L-1} \frac{\partial Z^{h^{L-1}_s}}{\partial Z} \sigma'(Z^{h^{L-1}_s})
\end{align*}
For both possible values of $Z$, $\partial Z^{\tildeh^{L-1}_0} / \partial Z$ has a simple expression and is a polynomially bounded function of $Z_0$, as is $\tildeh^{L-1}_0$. In addition, $Z^{h^{L-1}_1}$ and $\partial Z^{h^{L-1}_1} / \partial Z$ are polynomially bounded functions of $Z_0$ by the results of the forward pass at $t=1$, and finally, for $s \in [2, t]$, $Z^{h^{L-1}_1}$ and $\partial Z^{h^{L-1}_1} / \partial Z$ are polynomially bounded functions of $Z_0$ by assumption. Since the $\gamma^b$ are finite and $\sigma'$ is polynomially bounded, we get 
\begin{align*}
    \frac{\partial Z^{d\tildex^{L-1}_{t+1}}}{\partial Z} = \psi(Z_0)
\end{align*}
with $\psi$ polynomially bounded and thus 
\begin{align*}
    \frac{\partial Z^{d\tildeh^{L-1}_{t+1}}}{\partial Z} = \psi(Z_0)
\end{align*}
with $\psi$ polynomially bounded since $\sigma'$ and $\sigma''$ are polynomially bounded. This proves $(iv)$ and $(v)$ for $l=L-1$. 
\\\\
Let $l \in [2 , L-1]$, and assume claims $(ii)$, $(iv)$, $(v)$, are true at layer $l$ and claim $(vi)$ is true at layer $l+1$. We have:
\begin{align*}
    Z^{d\tildex^{l-1}_{t+1}} = \scalarlim{\omega}_l \hatZ^{\transpose{{(\hatW^l)}} d\tildeh^l_{t+1}} - \eta\scalarlim{\chi}_0 \gamma^b_{0, t+1, l-1} Z^{\tildex^{l-1}_0} - \eta \sum_{s=1}^t \scalarlim{\chi}_s \gamma^b_{s, t+1, l-1} Z^{x^{l-1}_s}
\end{align*}
From the induction hypothesis we have that both $Z^{d\tildeh^l_{t+1}}$ and $\partial Z^{d\tildeh^l_{t+1}} / \partial \hatZ^{\hatW^l \tildex^{l-1}_0}$ are polynomially bounded functions of $Z_0$. By Lemma~\ref{th:initial-W-vanish}, this first shows that $\scalarlim{\omega}_l \hatZ^{\transpose{{(\hatW^l)}} d\tildeh^l_{t+1}} = 0$, and thus gives $(vi)$ for layer $l$, leading to:
\begin{align*}
    Z^{d\tildex^{l-1}_{t+1}} = - \eta\scalarlim{\chi}_0 \gamma^b_{0, t+1, l-1} Z^{\tildex^{l-1}_0} - \eta \sum_{s=1}^t \scalarlim{\chi}_s \gamma^b_{s, t+1, l-1} Z^{x^{l-1}_s}
\end{align*}
Now, if $l-1 \geq 2$, $Z^{\tildex^{l-1}_0} = \sigma(\hatZ^{\hatW^{l-1} \tildex^{l-2}_0})$ and by Lemma~\ref{th:ipllr-forward-func-Z0-1}, we also have that $Z^{x^{l-1}_1} = \psi(\hatZ^{\hatW^{l-1} \tildex^{l-2}_0}, \hatZ^{\transpose{{(\hatW^{L})}} d\tildeh^{l}_0})$ with $\psi$ polynomially bounded because it is the case for $Z^{h^{l-1}_1}$ and $\sigma$ is polynomially bounded. In addition, by assumption, we have for any $s \in [2, t]$, $Z^{x^{l-1}_s} = \psi(\hatZ^{\hatW^{l-1} \tildex^{l-2}_0}, \hatZ^{\transpose{{(\hatW^{L})}} d\tildeh^{l}_0})$ with $\psi$ polynomially bounded since it is the case for $Z^{h^{l-1}_s}$ and $\sigma$ is polynomially bounded. As always the expectations defining the $\gamma^b$ are finite by Lemma~\ref{th:Z0-dist-moments} because the integrands are polynomially bounded functions of $Z_0$ as products of such variables by the results of the backward passes at times $t=0$ and $t=1$ and by the induction hypothesis. We thus get 
\begin{align*}
    Z^{d\tildex^{l-1}_{t+1}} = \psi(\hatZ^{\hatW^{l-1} \tildex^{l-2}_0}, \hatZ^{\transpose{{(\hatW^{L})}} d\tildeh^{l}_0})
\end{align*}
with $\psi$ polynomially bounded. On the other hand, if $l-1 = 1$, we have $Z^{\tildex^{1}_0} = \sigma(\hatZ^{U^1 \xi_0})$ and by Lemma~\ref{th:ipllr-forward-func-Z0-1}, we have $Z^{x^{1}_1} = \psi(\hatZ^{U^1 \xi_0}, \hatZ^{U^1 \xi_1}, \hatZ^{\transpose{{(\hatW^{2})}} d\tildeh^{2}_0})$ with $\psi$ polynomially bounded. In addition, by assumption we have for $s \in [2, t]$, $Z^{x^{1}_s} = \psi(\hatZ^{U^1 \xi_0}, \ldots, \hatZ^{U^1 \xi_{s}}, \hatZ^{\transpose{{(\hatW^{2})}} d\tildeh^{2}_0})$ with $\psi$ polynomially bounded.  Since $\sigma$ is also polynomially bounded, this gives
\begin{align*}
    Z^{d\tildex^{1}_{t+1}} = 
        \psi(\hatZ^{U^1 \xi_0}, \ldots, \hatZ^{U^1 \xi_t}, \hatZ^{\transpose{{(\hatW^{2})}} d\tildeh^{2}_0})
\end{align*}
$\psi$ polynomially bounded. Since $Z^{d\tildeh^{l-1}_{t+1}} = Z^{d\tildex^{l-1}_{t+1}} \sigma'(Z^{h^{l-1}_{t+1}})$, and by assumption $Z^{h^{l-1}_{t+1}} = \psi(\hatZ^{\hatW^{l-1} \tildex^{l-2}_0}, \hatZ^{\transpose{{(\hatW^{L})}} d\tildeh^{l}_0})$ if $l-1 \geq 2$, and otherwise $Z^{h^{1}_{t+1}} = \psi(\hatZ^{U^1 \xi_0}, \ldots, \hatZ^{U^1 \xi_{t+1}}, \hatZ^{\transpose{{(\hatW^{2})}} d\tildeh^{2}_0})$, we get
\begin{align*}
    Z^{d\tildeh^{l-1}_{t+1}} = 
    \begin{cases}
        \psi(\hatZ^{\hatW^{l-1} \tildex^{l-2}_0}, \hatZ^{\transpose{{(\hatW^{L})}} d\tildeh^{l}_0}) \text{ if } l-1 \geq 2 \\
        \psi(\hatZ^{U^1 \xi_0}, \hatZ^{U^1 \xi_1}, \ldots, \hatZ^{U^1 \xi_{t+1}}, \hatZ^{\transpose{{(\hatW^{2})}} d\tildeh^{2}_0}) \text{ if } l-1 = 1
    \end{cases}
\end{align*}
This gives claim $(ii)$ for layer $l-1$ and claim $(iii)$ for the case when $l-1=1$. Now, let $Z \in \{\hatZ^{\hatW^{l-1} \tildex^{l-2}_0}, \hatZ^{\transpose{{(\hatW^{L})}} d\tildeh^{l}_0}\}$. We have 
\begin{align*}
    \frac{\partial Z^{d\tildeh^{l-1}_{t+1}}}{\partial Z} =& \frac{\partial Z^{d\tildex^{l-1}_{t+1}}}{\partial Z} \sigma'(Z^{h^{l-1}_{t+1}}) + Z^{d\tildex^{l-1}_{t+1}} \frac{\partial Z^{h^{l-1}_{t+1}}}{\partial Z} \sigma''(Z^{h^{l-1}_{t+1}})
\end{align*}
where $Z^{h^{l-1}_{t+1}}$ and $Z^{d\tildex^{l-1}_{t+1}}$ are polynomially bounded functions of $Z_0$ by assumption and by the previous result on $Z^{d\tildex^{l-1}_{t+1}}$. Also by assumption, we have
\begin{align*}
    \frac{\partial Z^{h^{l-1}_{t+1}}}{\partial Z} = 
    \begin{cases}
        0 \ \text{ if } \ l-1=1 \text{ and } Z=\hatZ^{\hatW^2 \tildex^1_0} \\
        \psi(Z_0) \ \text{ otherwise}
    \end{cases}
\end{align*}
with $\psi$ polynomially bounded. In any case, $\partial Z^{h^{l-1}_{t+1}} / \partial Z$ is a polynomially bounded function of $Z_0$. On the other hand, we have
\begin{align*}
    \frac{\partial Z^{d\tildex^{l-1}_2}}{\partial Z} =& - \eta \scalarlim{\chi}_0 \gamma^b_{0, t+1, l-1} \frac{\partial Z^{\tildeh^{l-1}_0}}{\partial Z} \sigma'(Z^{\tildeh^{l-1}_0}) - \eta \sum_{s=1}^t \scalarlim{\chi}_t \gamma^b_{s, t+1, l-1} \frac{\partial Z^{h^{l-1}_s}}{\partial Z} \sigma'(Z^{h^{l-1}_s})
\end{align*}
For both possible values of $Z$, $\partial Z^{\tildeh^{l-1}_0} / \partial Z$ has an easy expression and is a polynomially bounded function of $Z_0$ (essentially because $\sigma$ and its derivatives are polynomially bounded). On the other hand, $\partial Z^{\tildeh^{l-1}_s} / \partial Z$ is a polynomially bounded function of $Z_0$ by assumption. $\sigma'$ is polynomially bounded, and the $\gamma^b$ are finite. We thus get that 
\begin{align*}
    \frac{\partial Z^{d\tildex^{l-1}_{t+1}}}{\partial Z} =  \psi(Z_0)
\end{align*}
with $\psi$ polynomially bounded, and thus:
\begin{align*}
    \frac{\partial Z^{d\tildeh^{l-1}_{t+1}}}{\partial Z} =  \psi(Z_0)
\end{align*}
with $\psi$ polynomially bounded, which proves claims $(iv)$ and $(v)$ at layer $l-1$ for time $t+1$. This thus concludes the induction on $l$, and with it the proof.
\end{proof}

\subsection{Main result}\label{sec:th-ipllr-initial-weight-vanish}

\begin{theorem}[Multiplications by the initial weight matrices vanish in IP-LLR for $t \geq 1$]\label{th:ipllr-initial-weight-vanish-t-geq-1}
Consider the IP-LLR parameterization with a positively $p$-homogeneous activation function, and $p \geq 2$. Then, for any $t \geq 1$, and for any $l \in [2, L]$, one has:
\begin{align*}
    \begin{cases}
        Z^{W^l(0)x^{l-1}_t} = \scalarlim{\omega}_l Z^{\hatW^lx^{l-1}_t} = 0 \\
        Z^{\transpose{{(W^l(0))}} d\tildeh^{l}_t} = \scalarlim{\omega}_l Z^{\transpose{{(\hatW^l)}} d\tildeh^{l}_t} = 0
    \end{cases}
\end{align*}
\end{theorem}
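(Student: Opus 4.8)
The plan is to prove the statement by strong induction on $t\ge 1$, carrying along an inductive hypothesis considerably stronger than the vanishing itself: namely that every $Z$-variable of the $t$-th forward and backward pass of IP-LLR is a \emph{polynomially bounded function of the fixed finite-dimensional Gaussian vector $Z_0$} of Definition~\ref{def:Z0}, and that the same is true of each of its partial derivatives in the coordinates $\hatZ^{\hatW^l\tildex^{l-1}_0}$ and $\hatZ^{\transpose{(\hatW^l)}d\tildeh^l_0}$ of $Z_0$. This is precisely the bundle of assertions \eqref{eq:ipllr-Z0-forward-1-t}--\eqref{eq:ipllr-Z0-backward-mutl-W-equals-0}, and the two equalities in the theorem are its special cases \eqref{eq:ipllr-Z0-forward-mutl-W-equals-0} and \eqref{eq:ipllr-Z0-backward-mutl-W-equals-0}; so once the bundle is known for every $t$ the theorem follows by inspection.

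The reason the stronger hypothesis is the right invariant is the mechanism behind the vanishing: since IPs have $W^l(0)=\omega_l\hatW^l$ with $\scalarlim{\omega}_l=0$ for $l\in[2,L]$, one has $Z^{W^l(0)z}=\scalarlim{\omega}_l\big(\hatZ^{\hatW^l z}+\dotZ^{\hatW^l z}\big)$, which is $0$ as soon as both terms are almost surely finite; Lemma~\ref{th:initial-W-vanish} is exactly the statement that this finiteness, and hence the vanishing, is guaranteed when $Z^z$ and the appropriate partial derivative of $Z^z$ are polynomially bounded in $Z_0$ (finiteness of the $\dotZ$-expectation then being a consequence of Lemma~\ref{th:Z0-dist-moments}). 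So the induction must propagate polynomial boundedness of both the $Z$'s and their derivatives, not just of the $Z$'s.

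For the execution I would assemble the already-established pieces. The base case $t=1$ is Lemmas~\ref{th:ipllr-forward-func-Z0-1} and~\ref{th:ipllr-backward-func-Z0-1}, with the vanishing recorded in Corollary~\ref{th:z-mult-w-0-vanish}. I would treat $t=2$ separately, since the bookkeeping differs slightly when only one previous pass feeds the recursion: this is Lemmas~\ref{th:ipllr-forward-func-Z0-2} and~\ref{th:ipllr-backward-func-Z0-2}. Then, for $t\ge 2$, the step $t\to t+1$ is Lemma~\ref{th:ipllr-induct-vansih-forward} for the forward pass followed by Lemma~\ref{th:ipllr-induct-vansih-backward} for the backward pass; each of these feeds the raw formulas of \autoref{th:z-forward-ipllr-t} and \autoref{th:z-backward-ipllr-t} --- still containing the terms $\scalarlim{\omega}_l Z^{\hatW^l x^{l-1}_{t+1}}$ and $\scalarlim{\omega}_l Z^{\transpose{(\hatW^l)}d\tildeh^l_{t+1}}$ --- and propagates the invariant layer by layer (from $l=1$ to $l=L$ in the forward direction, from $l=L$ down to $l=1$ in the backward direction, and across all intermediate times $s=0,\dots,t$), using at each layer that $Z^{x^l}=\sigma(Z^{h^l})$ and $Z^{d\tildeh^l}=Z^{d\tildex^l}\sigma'(Z^{h^l})$, that the $\dotZ$-corrections vanish by Lemma~\ref{th:initial-W-vanish}, that the averaged inner products $\transpose{(x^{l-1}_s)}x^{l-1}_{t+1}/m$ and $\transpose{(d\tildeh^l_s)}d\tildeh^l_{t+1}/m$ converge to finite expectations by the master theorem and Lemma~\ref{th:Z0-dist-moments}, and that $\chi_s\to\scalarlim{\chi}_s$. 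Here $p\ge 2$ is used so that $\sigma,\sigma',\sigma''$ are polynomially bounded, and one uses throughout that products, compositions and finite sums of polynomially bounded functions remain polynomially bounded, together with the chain rule for the partial derivatives.

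The step I expect to be the main obstacle is exactly this propagation of polynomial boundedness through the \emph{partial derivatives}, layer by layer and time by time: the expressions for $\partial Z^{h^l_{t+1}}/\partial\hatZ^{\cdot}$ and $\partial Z^{d\tildeh^l_{t+1}}/\partial\hatZ^{\cdot}$ grow in complexity with $t$ (sums over $s=0,\dots,t$, and coupling between layers $l$ and $l\pm1$), so the induction has to be organized carefully and exhaustively. One must also keep the logic acyclic: the simplified formulas of Corollaries~\ref{th:z-forward-ipllr-t-0} and~\ref{th:z-backward-ipllr-t-0} are downstream of this theorem and cannot be invoked inside its proof --- only the raw \autoref{th:z-forward-ipllr-t} and \autoref{th:z-backward-ipllr-t} may be used (cf.\ Remark~\ref{remark:no-circ-logic-vanish}). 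Everything else --- establishing the base cases and extracting the two displayed equalities from the bundle --- is routine assembly.
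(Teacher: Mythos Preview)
Your proposal is correct and follows essentially the same approach as the paper: base case $t=1$ via Lemmas~\ref{th:ipllr-forward-func-Z0-1}--\ref{th:ipllr-backward-func-Z0-1} and Corollary~\ref{th:z-mult-w-0-vanish}, the case $t=2$ via Lemmas~\ref{th:ipllr-forward-func-Z0-2}--\ref{th:ipllr-backward-func-Z0-2}, and the induction step $t\to t+1$ via Lemmas~\ref{th:ipllr-induct-vansih-forward}--\ref{th:ipllr-induct-vansih-backward}, all resting on the vanishing mechanism of Lemma~\ref{th:initial-W-vanish}. Your care to use only the raw Theorems~\ref{th:z-forward-ipllr-t} and~\ref{th:z-backward-ipllr-t} (and not their corollaries) to avoid circularity is exactly right.
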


\begin{proof}
The result for $t=1$ has essentially been proved already early on in Corollary~\ref{th:z-mult-w-0-vanish} (which stems from Lemmas~\ref{th:ipllr-forward-func-Z0-1} and~\ref{th:ipllr-backward-func-Z0-1}). For $t=2$, the result has been proved in Lemmas~\ref{th:ipllr-forward-func-Z0-2} and~\ref{th:ipllr-backward-func-Z0-2}. Then we can prove the result for any $t \geq 2$ by induction using Lemmas~\ref{th:ipllr-induct-vansih-forward} and~\ref{th:ipllr-induct-vansih-backward}.
\end{proof}

\section{Expectations with ReLU}\label{app:exp-relu}
In all this section, we consider $Z \sim \mathcal{N}(0, \sigma^2)$, so that $Z = \sigma U$ where $U \sim \mathcal{N}(0,1)$.

\subsection{First moment}\label{app:exp-relu-first}
For $\phi(z) = \max(0,z)$ and $Z \sim \mathcal{N}(0, \sigma^2)$, we have 
\begin{align*}
    \mathbb{E}[\phi(Z)] &= \mathbb{E}[\phi(\sigma U)]
    = \frac{\sigma}{\sqrt{2 \pi}} \int_{0}^{\infty} u e^{-u^2/2} \mathrm{d}u 
    = \frac{\sigma}{\sqrt{2 \pi}}.
\end{align*}

\subsection{Second moment}\label{app:exp-relu-second}
For $\phi(z) = \max(0,z)$ and $Z \sim \mathcal{N}(0, \sigma^2)$, we have
\begin{align*}
    \mathbb{E}[\phi(Z)^2]
    = {1\over 2} \mathbb{E}[ Z^2] 
    = \frac{\sigma^2}{2}.
\end{align*}

\subsection{First forward pass moments}\label{app:relu-tilde-forward}
We have, for any $l \in [1,L]$, with $\sigma_0 := \sqrt{||\xi_0||^2 +1}$,
\begin{align*}
    &\mathbb{E}[\hatZ^{\tildeh^l_0}] = 0, \ \ \ \ \mathbb{E}[(\hatZ^{\tildeh^l_0})^2] = \frac{\sigma_0^2}{2^{l-1}} \\
    &\mathbb{E}[Z^{\tildex^l_0}] = \frac{\sigma_0}{\sqrt{2^l \pi}}, \ \ \ \   \mathbb{E}[(Z^{\tildex^l_0})^2] = \frac{\sigma_0^2}{2^{l}} 
\end{align*}

\subsection{First derivative moments}\label{app:exp-relu-derivative}
% For $\phi(z) = \max(0,z)$, we have $\phi'(z) = \mathbbm{1}_{z \geq 0}$ almost everywhere, so for $Z \sim \mathcal{N}(0, \sigma^2)$, we have
For $\phi(z) = \max(0,z)$, we have $\phi'(z) = \mathds{1}_{z \geq 0}$ almost everywhere, so for $Z \sim \mathcal{N}(0, \sigma^2)$, we have
\begin{align*}
    \mathbb{E}[\phi'(Z)] &= \mathbb{P}(Z\geq 0)=1/2.
\end{align*}
Note that since $\phi'(z)^p = \phi'(z)$ for any $p >0$, all the moments of $\phi'(Z)$ are equal to the first moment. 

\subsection{First backward pass moments}\label{app:relu-tilde-backward}
We have, for any $l \in [1,L]$, with,
\begin{align*}
    &\mathbb{E}[\hatZ^{d\tildex^l_0}] = 0, \ \ \ \ \mathbb{E}[(\hatZ^{d\tildex^l_0})^2] = \frac{1}{2^{L-l}} \\
    &\mathbb{E}[Z^{d\tildeh^l_0}] = 0, \ \ \ \   \mathbb{E}[(Z^{d\tildeh^l_0})^2] = \frac{1}{2^{L-l+1}}
\end{align*}

\bibliography{biblio}

\end{document}